\definecolor {processblue}{cmyk}{0.96,0,0,0}
\definecolor {graphgreen}{cmyk}{0.86,0,0.63,0.13}
\definecolor {graphdarkgreen}{cmyk}{0.5,0,0.23,0.9}
\DeclareMathAlphabet{\mathpzc}{OT1}{pzc}{m}{it}
\newcommand{\RR}{\mathbb{R}} 
\newcommand{\AAA}{\mathbb{A}}
\newcommand{\KKK}{\mathbb{K}}
\newcommand{\ZZ}{\mathbb{Z}} 
\newcommand{\CC}{\mathbb{C}} 
\newcommand{\PP}{\mathbb{P}} 
\newcommand{\QQ}{\mathbb{Q}}
\newcommand{\FF}{\mathcal{F}}
\newcommand{\II}{\mathcal{I}}
\newcommand{\JJ}{\mathcal{J}}
\newcommand{\HH}{\mathcal{H}}
\newcommand{\DD}{\mathcal{D}}
\newcommand{\NN}{\mathcal{N}}
\newcommand*{\rom}[1]{\uppercase\expandafter{\romannumeral #1\relax}}
\newcommand{\spp}{\mathbb{S}} 
\newcommand{\half}[0]{\frac{1}{2}}
\newcommand{\pihalf}[0]{\frac{\pi}{2}}
\newcommand{\prob}[1]{\ensuremath{\mathbb{P}({#1})}}
\newcommand{\probi}[2]{\ensuremath{\mathbb{P}^{#1}({#2})}}
\newcommand{\probs}[1]{\ensuremath{\mathbb{P}^{#1}}}
\newcommand{\probii}[3]{\ensuremath{\mathbb{P}^{#1}_{#2}({#3})}}
\newcommand{\expectv}{\mathbb{E}}
\newcommand{\hess}{\mathpzc{H}}
\DeclareMathOperator*{\E}{\mathbb{E}}
\DeclareMathOperator*{\cov}{\text{Cov}}
\DeclareMathOperator*{\variance}{\text{Var}}
\DeclareMathOperator*{\atantwo}{\text{atan2}}
\newcommand{\psoset}{\mathds{C}}
\newcommand{\pdfestset}{\mathds{A}}
\newcommand{\logpdfestset}{\mathds{B}}
\newcommand{\RRpos}{\RR_{> 0}}
\newcommand{\RRnonneg}{\RR_{\geq 0}}
\newcommand{\RRneg}{\RR_{< 0}}
\newcommand{\usuff}{{\scriptscriptstyle U}}
\newcommand{\dsuff}{{\scriptscriptstyle D}}
\newcommand{\tsuff}{}
\newcommand{\ttsuff}{{\scriptscriptstyle T}}
\newcommand{\udcapsuff}{{\scriptscriptstyle U \cap D}}
\newcommand{\udcupsuff}{{\scriptscriptstyle U \cup D}}
\newcommand{\udbacksuff}{{\scriptscriptstyle U \backslash D}}
\newcommand{\dubacksuff}{{\scriptscriptstyle D \backslash U}}
\newcommand{\down}[0]{\emph{down}\xspace}
\newcommand{\up}[0]{\emph{up}\xspace}
\newcommand{\bp}[0]{\emph{balance state}\xspace}
\newcommand{\df}[0]{\emph{differential}\xspace}
\newcommand{\dfs}[0]{\emph{differentials}\xspace}
\newcommand{\mf}[0]{\emph{magnitude} function\xspace}
\newcommand{\mfs}[0]{\emph{magnitude} functions\xspace}
\newcommand{\ms}[0]{\emph{magnitudes}\xspace}
\newcommand{\mgn}[0]{\emph{magnitude}\xspace}
\newcommand{\pfs}[0]{\emph{primitive} functions\xspace}
\newcommand{\ps}[0]{\emph{primitives}\xspace}
\newcommand{\gs}[0]{\emph{gradient similarity}\xspace}
\newcommand{\bnd}[0]{\emph{bandwidth}\xspace}
\newcommand{\infh}[0]{\emph{infinite height}\xspace}
\newcommand{\pair}[0]{$\{ M^{\usuff}, M^{\dsuff} \}$\xspace}
\newcommand{\widepair}[0]{$\{ \widetilde{M}^{\usuff}, \widetilde{M}^{\dsuff} \}$\xspace}
\newcommand{\psocomp}[0]{$\{ \probs{\usuff}, \probs{\dsuff}, M^{\usuff}, M^{\dsuff} \}$\xspace}
\newcommand{\psofunc}[0]{\emph{PSO functional}\xspace}
\newcommand{\psodiv}[0]{\emph{PSO divergence}\xspace}
\newcommand{\overfit}[0]{\emph{overfitting}\xspace}
\newcommand{\underfit}[0]{\emph{underfitting}\xspace}
\newcommand{\pconv}[0]{\overset{p}{\to}}
\newcommand{\dconv}[0]{\overset{d}{\to}}
\newcommand{\cmark}{{\color[rgb]{0,0.4,0} \checkmark}\xspace}%
\newcommand{\xmark}{{\color{red} \ding{55}}\xspace}%
\DeclareMathOperator*{\argmin}{arg\,min}
\DeclareMathOperator*{\argmax}{arg\,max}
\DeclareMathOperator*{\arginf}{arg\,inf}
\let\save@mathaccent\mathaccent
\newcommand*\if@single[3]{%
	\setbox0\hbox{${\mathaccent"0362{#1}}^H$}%
	\setbox2\hbox{${\mathaccent"0362{\kern0pt#1}}^H$}%
	\ifdim\ht0=\ht2 #3\else #2\fi
}
\newcommand*\rel@kern[1]{\kern#1\dimexpr\macc@kerna}
\newcommand*\widebar[1]{\@ifnextchar^{{\wide@bar{#1}{0}}}{\wide@bar{#1}{1}}}
\newcommand*\wide@bar[2]{\if@single{#1}{\wide@bar@{#1}{#2}{1}}{\wide@bar@{#1}{#2}{2}}}
\newcommand*\wide@bar@[3]{%
	\begingroup
	\def\mathaccent##1##2{%
		\let\mathaccent\save@mathaccent
		\if#32 \let\macc@nucleus\first@char \fi
		\setbox\z@\hbox{$\macc@style{\macc@nucleus}_{}$}%
		\setbox\tw@\hbox{$\macc@style{\macc@nucleus}{}_{}$}%
		\dimen@\wd\tw@
		\advance\dimen@-\wd\z@
		\divide\dimen@ 3
		\@tempdima\wd\tw@
		\advance\@tempdima-\scriptspace
		\divide\@tempdima 10
		\advance\dimen@-\@tempdima
		\ifdim\dimen@>\z@ \dimen@0pt\fi
		\rel@kern{0.6}\kern-\dimen@
		\if#31
		\overline{\rel@kern{-0.6}\kern\dimen@\macc@nucleus\rel@kern{0.4}\kern\dimen@}%
		\advance\dimen@0.4\dimexpr\macc@kerna
		\let\final@kern#2%
		\ifdim\dimen@<\z@ \let\final@kern1\fi
		\if\final@kern1 \kern-\dimen@\fi
		\else
		\overline{\rel@kern{-0.6}\kern\dimen@#1}%
		\fi
	}%
	\macc@depth\@ne
	\let\math@bgroup\@empty \let\math@egroup\macc@set@skewchar
	\mathsurround\z@ \frozen@everymath{\mathgroup\macc@group\relax}%
	\macc@set@skewchar\relax
	\let\mathaccentV\macc@nested@a
	\if#31
	\macc@nested@a\relax111{#1}%
	\else
	\def\gobble@till@marker##1\endmarker{}%
	\futurelet\first@char\gobble@till@marker#1\endmarker
	\ifcat\noexpand\first@char A\else
	\def\first@char{}%
	\fi
	\macc@nested@a\relax111{\first@char}%
	\fi
	\endgroup
}
\newcommand*\widetilda[1]{\@ifnextchar^{{\wide@tilda{#1}{0}}}{\wide@tilda{#1}{1}}}
\newcommand*\wide@tilda[2]{\if@single{#1}{\wide@tilda@{#1}{#2}{1}}{\wide@tilda@{#1}{#2}{2}}}
\newcommand*\wide@tilda@[3]{%
	\begingroup
	\def\mathaccent##1##2{%
		\let\mathaccent\save@mathaccent
		\if#32 \let\macc@nucleus\first@char \fi
		\setbox\z@\hbox{$\macc@style{\macc@nucleus}_{}$}%
		\setbox\tw@\hbox{$\macc@style{\macc@nucleus}{}_{}$}%
		\dimen@\wd\tw@
		\advance\dimen@-\wd\z@
		\divide\dimen@ 3
		\@tempdima\wd\tw@
		\advance\@tempdima-\scriptspace
		\divide\@tempdima 10
		\advance\dimen@-\@tempdima
		\ifdim\dimen@>\z@ \dimen@0pt\fi
		\rel@kern{0.6}\kern-\dimen@
		\if#31
		\tilde{\rel@kern{-0.6}\kern\dimen@\macc@nucleus\rel@kern{0.4}\kern\dimen@}%
		\advance\dimen@0.4\dimexpr\macc@kerna
		\let\final@kern#2%
		\ifdim\dimen@<\z@ \let\final@kern1\fi
		\if\final@kern1 \kern-\dimen@\fi
		\else
		\tilde{\rel@kern{-0.6}\kern\dimen@#1}%
		\fi
	}%
	\macc@depth\@ne
	\let\math@bgroup\@empty \let\math@egroup\macc@set@skewchar
	\mathsurround\z@ \frozen@everymath{\mathgroup\macc@group\relax}%
	\macc@set@skewchar\relax
	\let\mathaccentV\macc@nested@a
	\if#31
	\macc@nested@a\relax111{#1}%
	\else
	\def\gobble@till@marker##1\endmarker{}%
	\futurelet\first@char\gobble@till@marker#1\endmarker
	\ifcat\noexpand\first@char A\else
	\def\first@char{}%
	\fi
	\macc@nested@a\relax111{\first@char}%
	\fi
	\endgroup
}
\newtheorem{claim}[theorem]{Claim}
\begin{document}

\title{General Probabilistic Surface Optimization and\\ Log Density Estimation}

\author{\name Dmitry Kopitkov \email dimkak@technion.ac.il \\
	\addr Technion Autonomous Systems Program (TASP)\\
	Technion - Israel Institute of Technology\\
	Haifa 32000, Israel
	\AND
	\name Vadim Indelman \email vadim.indelman@technion.ac.il \\
	\addr Department of Aerospace Engineering\\
    Technion - Israel Institute of Technology\\
    Haifa 32000, Israel}

\editor{XXX}

\maketitle

\begin{abstract}

Probabilistic inference, such as density (ratio) estimation, is a fundamental and highly important problem that needs to be solved in many different domains. Recently, a lot of research was done to solve it by producing various objective functions optimized over neural network (NN) models.
Such Deep Learning (DL) based approaches include \emph{unnormalized} and \emph{energy} models, as well as critics of Generative Adversarial Networks, where DL has shown top approximation performance. In this paper we contribute a novel algorithm family, which generalizes all above, and allows us to infer different statistical modalities (e.g.~data likelihood and ratio between densities) from data samples. The proposed \emph{unsupervised} technique, named \emph{Probabilistic Surface Optimization} (PSO), views a model as a flexible surface which can be pushed according to loss-specific virtual stochastic forces, where a dynamical equilibrium is achieved when the pointwise forces on the surface become equal. Concretely, the surface is pushed \up and \down at points sampled from two different distributions. The averaged \up and \down forces become functions of these two distribution densities and of force \emph{magnitudes} defined by the loss of a particular PSO instance. Upon convergence, the force equilibrium imposes an optimized model to be equal to various statistical functions depending on the used \mfs. Furthermore, this dynamical-statistical equilibrium is extremely intuitive and useful, providing many implications and possible usages in probabilistic inference. We connect PSO to numerous existing statistical works which are also PSO instances, and derive new PSO-based inference methods as a demonstration of PSO exceptional usability. 
Likewise, based on the insights coming from the virtual-force perspective we analyze PSO stability and propose new ways to improve it.
Finally, we present new instances of PSO, termed PSO-LDE,  for data log-density estimation and also provide a new NN block-diagonal architecture for increased surface flexibility, which significantly improves estimation accuracy. Both PSO-LDE and the new architecture are combined together as a new density estimation technique. In our experiments we demonstrate this technique to be superior over state-of-the-art baselines in density estimation tasks for multi-modal 20D data.

\begin{keywords}
	Probabilistic Inference, Unsupervised Learning, Unnormalized and Energy Models, Non-parametric Density Estimation, Deep Learning.
\end{keywords}

\end{abstract}

\pagebreak
\tableofcontents
\vfil
\pagebreak

%

\section{Introduction}
\label{sec:Intro}

Probabilistic inference is the wide domain of extremely important statistical problems including density (ratio) estimation, distribution transformation, density sampling and many more. Solutions to these problems are extensively used in domains of robotics, computer image, economics, and other scientific/industrial data mining cases. Particularly, in robotics we require to manually/automatically infer a measurement model between sensor measurements and the hidden state of the robot, which can further be used to estimate robot state during an on-line scenario. 
Considering the above, solutions to probabilistic inference and their applications to real-world problems are highly important for many scientific fields.

The universal approximation theory \citep{Hornik91nn} states that an artificial neural network with fully-connected layers can approximate any continuous function on compact subsets of $\RR^n$, making it an universal approximation tool.
Moreover, in the last decade  methods based on Deep Learning (DL) provided outstanding performance in areas of computer vision and reinforcement learning. 
Furthermore, recently strong  frameworks (e.g.~\citet{Tensorflow_url, Pytorch_url, Caffe_url}) were developed that allow fast and sophisticated training of neural networks (NNs) using GPUs.

With the above motivation, 
in this paper we contribute a novel unified paradigm,
\emph{Probabilistic Surface Optimization} (PSO),
that allows to solve various probabilistic inference problems
using DL, where we exploit the approximation power of NNs in full.
PSO expresses the probabilistic inference as a virtual physical system 
where the surface, represented by a function from the optimized function space (e.g. NN), is pushed by forces that are outcomes of a Gradient Descent (GD) optimization. We show that this surface is pushed during the optimization to the target surface for which the averaged pointwise forces cancel each other. Further, by using different virtual forces we can enforce the surface to converge to different probabilistic functions of data, such as a data density, various density ratios, conditional densities and many other useful statistical modalities.

We show that many existing probabilistic inference approaches, like \emph{unnormalized} models, GAN critics, \emph{energy} models and cross-entropy based methods, already apply such PSO principles implicitly, even though their underlying dynamics were not explored before through the prism of virtual forces. Additionally, many novel and original methods can be forged in a simple way by following the same fundamental rules of the virtual surface and the force balance. 
Moreover, PSO framework permits the proposal and the practical usage of new objective functions that can not be expressed in closed-form, by instead defining their Euler-Lagrange equation. This allows introduction of new estimators that were not considered before.
Furthermore, motivated by usefulness and intuitiveness of the proposed PSO paradigm, we derive sufficient conditions for its optimization stability and further analyze its convergence, relating it to the model kernel also known in DL community as Neural Tangent Kernel (NTK) \citep{Jacot18nips}.

Importantly, we emphasize that PSO is not only a new interpretation that allows for simplified and intuitive understanding of statistical learning. Instead, in this paper we show that optimization dynamics that the inferred model undergoes are indeed matching the picture of a physical surface with particular forces applied on it. Moreover, such match allowed us to understand the optimization stability of PSO instances in more detail and to suggest new ways to improve it.

Further, we apply PSO framework to solve the density estimation task - a fundamental statistical problem essential in many scientific fields and application domains.
We analyze PSO sub-family with the corresponding equilibrium, proposing a novel PSO log-density estimators (PSO-LDE). 
These techniques, as also other PSO-based density estimation approaches presented in this paper, do not impose any explicit constraint over a total integral of the learned model, allowing it to be entirely unnormalized. Yet, 
the implicit PSO force balance produces at the convergence density approximations that are highly accurate and \emph{almost} normalized, with total integral being very close to 1.

Finally, we examine several NN architectures for a better estimation performance, and propose new block-diagonal layers that led us to significantly improved accuracy. PSO-LDE approach combined with new NN architecture allowed us to learn multi-modal densities of 20D continuous data with superior precision compared to other state-of-the-art methods, including Noise Contrastive Estimation (NCE) \citep{Smith05acl,Gutmann10aistats},
which we demonstrate in our experiments.

To summarize, our main contributions in this paper are as follows: 
\begin{enumerate}[(a)]
	\item We develop a \emph{Probabilistic Surface Optimization} (PSO) that enforces any approximator function to converge to a target statistical function which nullifies a point-wise virtual force.
	
	\item We derive sufficient optimality conditions under which the functional implied by PSO is stable during the optimization.

	\item We show that many existing probabilistic and (un-)supervised learning techniques can be seen as instances of PSO.
	
	\item We show how new probabilistic techniques can be derived in a simple way by using PSO principles, and also propose several such new methods.
	
	\item We provide analysis of PSO convergence where we relate its performance towards properties of an model kernel implicitly defined by the optimized function space.

	\item We use PSO to approximate a logarithm of the target density, proposing for this purpose several hyper-parametric PSO subgroups and analyzing their properties.

	\item We present a new NN architecture with block-diagonal layers that allows for lower side-influence (a smaller bandwidth of the corresponding model kernel) between various regions of the input space and that leads to a higher NN flexibility and to more accurate density estimation.

	\item We experiment with different continuous 20D densities, and accurately infer all of them using the proposed PSO instances, thus demonstrating these instances' robustness and top performance. Further, we compare our methods with  state-of-the-art baselines, showing the superiority of former over latter.
		
\end{enumerate}

The paper is structured as follows. In Section \ref{sec:RelW} we describe the related work, in Section \ref{sec:PromOptim} we formulate PSO algorithm family, in Section \ref{sec:PSOFuncObj} we derive sufficient optimality conditions, in Section \ref{sec:PSOInst} we relate PSO framework to other existing methods showing them to be its instances,
and in Section \ref{sec:Bregman_PSO} we outline its relation towards various statistical divergences.
Furthermore, in Section \ref{sec:ConvA} numerous estimation properties are proved, including consistency and asymptotic normality, and the model kernel's impact on the optimization equilibrium is investigated. Application of PSO for (conditional) density estimation is described in Sections \ref{sec:DensEst}-\ref{sec:CondDeepPDFMain}, various additional PSO applications and relations - in Section \ref{sec:PSOApp}, NN design and its optimization influence - in Section \ref{sec:NNArch}, and an illustration of PSO \overfit - in Section \ref{sec:DeepLogPDFOF}. Finally,
experiments appear in Section \ref{sec:Exper}, and discussion of conclusions - in Section \ref{sec:Concl}.

\section{Related work}
\label{sec:RelW}

In this section we consider very different problems all of which involve reasoning about statistical properties and probability density of a given data, which can be also solved by various instances of PSO as is demonstrated in later sections.
We describe studies done to solve these problems, including both DL and not-DL based methods, and relate their key properties to attributes of PSO.

\subsection{Unsupervised Probabilistic Inference}
\label{sec:RelWUnsupr}

Statistical estimation consists of learning various probabilistic modalities from acquired sample realizations. For example, given a dataset we may want to infer the corresponding probability density function (pdf). Similarly, given two datasets we may want to approximate the density ratio between sample distributions.
The usage of NNs for statistical estimation was studied for several decades \citep{Smolensky86rep, Bishop94tr, Bengio00nips, Hinton06, Uria2013nips}. 
Furthermore, there is a huge amount of work that treats statistical learning in a similar way to PSO, based on sample frequencies, the optimization \emph{energies} and their forces. Arguably, the first methods were Boltzman machines (BMs) and Restricted Boltzman machines (RBMs) \citep{Ackley85cs,Osborn90innc,Hinton02nc}. Similarly to PSO, RBMs can learn a distribution over data samples using a physical equilibrium, and were proved to be very useful for various ML tasks such as dimensionality reduction and feature learning. Yet, they were based on a very basic NN architecture, containing only hidden and visible units, arguably because of over-simplified formulation of the original BM. Moreover, the training procedure  of these methods, the contrastive divergence (CD) \citep{Hinton02nc}, applies computationally expensive Monte Carlo (MC) sampling. In Section \ref{sec:SimpL} we describe CD in detail and outline its exact relation to PSO procedure, showing that the latter replaces the expensive MC by sampling auxiliary distribution which is computationally cheap.

In \citep{Ngiam11icml} authors extended RBMs to Deep Energy Models (DEMs) that contained multiple fully-connected layers, where during  training each layer was trained separately via CD. Further, in \citep{Zhai16icml} Deep Structured Energy Based Models were proposed that used fully-connected, convolutional and recurrent NN architectures for an anomaly detection of vector data, image data and time-series data respectively. Moreover, in the latter work authors proposed to train \emph{energy} based models via a score matching method \citep{Hyvarinen05mlr}, which does not require MC sampling. A similar training method was also recently applied in \citep{Saremi18arxiv} for learning an \emph{energy} function of data - an \emph{unnormalized} model that is proportional to the real density function. However, the produced by score matching \emph{energy} function is typically over-smoothed and entirely unnormalized, with its total integral being arbitrarily far from 1 (see Section \ref{sec:ColumnsEstBslns}). In contrast, PSO based density estimators (e.g. PSO-LDE) yield a model that is \emph{almost} normalized, with its integral being very close to 1 (see Section \ref{sec:ColumnsEstME}).

In \citep{Lecun06tutorial} authors examined many statistical loss functions under the perspective of \emph{energy} model learning. Their overview of existing learning rules describes a typical optimization procedure as a physical system of model pushes at various data samples in \up and \down directions, producing the intuition very similar to the one promoted in this work. Although \citep{Lecun06tutorial} and our works were done in an independent manner with the former preceding the latter, both acknowledged that many objective functions have two types of terms corresponding to two force directions, that are responsible to enforce model to output desired energy levels for various neighborhoods of the input space. Yet, unlike \citep{Lecun06tutorial} we take one step further and derive the precise way to control the involved forces, producing a formal framework for the generation of infinitely many learning rules to infer an infinitely large number of target functions.
The proposed PSO approach is conceptually very intuitive, and permits unification of many various methods under a single algorithm umbrella via a formal yet simple mathematical exposition. 
This in its turn allows to address the investigation of different statistical techniques and their properties as one mutual analysis study.

In context of pdf estimation, one of the most relevant works to presented in this paper PSO-LDE approach is NCE \citep{Smith05acl,Gutmann10aistats}, which formulates the inference problem via a binary classification between original data samples and auxiliary noise samples. The derived loss allows for an efficient (conditional) pdf inference and is widely adapted nowadays in the language modeling domain \citep{Mnih12arxiv,Mnih13nips,Labeau18iccl}. Further, the proposed PSO-LDE can be viewed as a generalization of NCE, where the latter is a specific member of the former for a hyper-parameter $\alpha = 1$. Yet importantly, both algorithms were derived based on different mathematical principles, and their formulations do not exactly coincide.

Furthermore, the presented herein PSO family is not the first endeavor for unifying different statistical techniques under a general algorithm umbrella. In \citep{Pihlaja12arxiv} authors proposed a family of \emph{unnormalized} models to infer log-density, which is based on Maximum Likelihood Monte Carlo estimation \citep{Geyer92jrssb}. Their method infers both the \emph{energy} function of the data and the appropriate normalizing constant. Thus, the produced (log-)pdf estimation is \emph{approximately} normalized.  Further, this work was extended in \citep{Gutmann12arxiv} where it was related to the separable Bregman divergence and where various other statistical methods, including NCE,
were shown to be instances of this inference framework.
In Section \ref{sec:Bregman_PSO} we prove Bregman-based estimators to be contained inside PSO estimation family, and thus both of the above frameworks are strict subsets of PSO.

Further, in \citep{Nguyen10tit} and \citep{Nowozin16nips} new techniques were proposed to infer various $f$-divergences between two densities, based on $M$-estimation procedure and Fenchel conjugate \citep{Hiriart12book}. Likewise, the f-GAN framework in \citep{Nowozin16nips} was shown to include many of the already existing GAN methods. In Section \ref{sec:Bregman_PSO} we prove that estimation methods from \citep{Nguyen10tit} and critic objective functions from \citep{Nowozin16nips} are also strict subsets of PSO.

The above listed methods, as also the PSO instances in Section \ref{sec:PSOInst}, are all derived using various math fields, yet they also could be easily derived via PSO \bp as is described in this paper. Further, the simplest way to show that PSO is a generalization and not just another perspective that is identical to previous works is as follows. In most of the above approaches optimization objective functions are required to have an analytically known closed form,
whereas in our framework knowledge of these functions is not even required. Instead, we formulate the learning procedure via \mfs, the derivatives of various loss terms, knowing which is enough to solve the corresponding minimization problem.
Furthermore, the \ms of PSO-LDE sub-family in Eq.~(\ref{eq:PSOLDELossMU})-(\ref{eq:PSOLDELossMD}) do not have a known antiderivative for the general case of any $\alpha$, with the corresponding PSO-LDE loss being unknown. Thus, PSO-LDE (and therefore PSO) cannot be viewed as an instance of any previous statistical framework. Additionally, the intuition and simplicity in viewing the optimization as merely point-wise pushes over some virtual surface are very important for the investigation of PSO stability and for its applicability in numerous different areas.

\subsection{Parametric vs Non-parametric Approaches}
\label{sec:RelWParamAppr}

The most traditional probabilistic problem, which is also one of the main focuses of this paper, is density approximation for an arbitrary data.
Approaches for statistical density estimation may be divided into two different branches - parametric and non-parametric. Parametric methods assume data to come from a probability distribution of a specific family, and infer parameters of that family, for example via minimizing the negative log-probability of data samples. Non-parametric approaches are distribution-free in the sense that they do not take any assumption over the data population a priori. Instead they infer the distribution density totally from data.

The main advantage of the parametric approaches is their statistical efficiency. Given the assumption of a specific distribution family is correct, parametric methods will produce more accurate density estimation for the same number of samples compared to non-parametric techniques. However, in case the assumption is not entirely valid for a given population, the estimation accuracy will be poor, making parametric methods not statistically robust. For example, one of the most expressive distribution families is a Gaussian Mixture Model (GMM) \citep{McLachlan88}. One of its structure parameters is the number of mixtures. Using a high number of mixtures, it can represent multi-modal populations with a high accuracy. Yet, in case the real unknown distribution has even higher number of modes, or sometimes even an infinite number, the performance of a GMM will be low.

To handle the problem of unknown number of mixture components in parametric techniques, Bayesian statistics can be applied to model a prior over parameters of the chosen family. Models such as Dirichlet process mixture (DPM) and specifically Dirichlet process Gaussian mixture model (DPGMM) \citep{Antoniak74aos, Sethuraman82sdtrt, Gorur10jcst} can represent an uncertainty about the learned distribution parameters and as such can be viewed as infinite mixture models. Although these hierarchical models are more statistically robust (expressive), they still require to manually select a base distribution for DPM, limiting their robustness. Likewise, Bayesian inference applied in these techniques is more theoretically intricate and computationally expensive \citep{MacEachern98}.

On the other hand, non-parametric approaches can infer distributions of an (almost) arbitrary form. Methods such as data histogram and kernel density estimation (KDE) \citep{Scott15book, Silverman18} use frequencies of different points within data samples in order to conclude how a population pdf looks like. In general, these methods require more samples and prone to the \emph{curse of  dimensionality}, but also provide a more robust estimation by not taking any prior assumptions. Observe that "non-parametric" terminology \textbf{does not} imply lack of parametrization.
Both histogram and KDE require selection of (hyper) parameters - bin width for histogram and kernel type/bandwidth for KDE.

In many cases a selection of optimal parameters requires the manual parameter search \citep{Silverman18}. 
Although an automatic parameter deduction was proposed for KDE in several studies \citep{Duong05sjs, Heidenreich13asta, OBrien16csda}, it is typically computationally expensive and its performance is not always optimal. Furthermore, one of the major weaknesses of KDE is its time complexity during the query stage. Even the most efficient KDE methods \citep[e.g. fastKDE,][]{OBrien16csda} require an above linear complexity ($O(m \log m)$) in the number of query points $m$. In contrast, PSO yields robust non-parametric algorithms that optimize the NN model, which in its turn can be queried at any input point by a single forward pass. Since this pass is independent of $m$, the query runtime of PSO is linear in $m$.
When the complexity of NN forward pass is lower than $\log m$,
PSO methods become a much faster alternative.
Moreover, existing KDE implementations do not scale well for data with a high dimension, unlike PSO methods.

\subsection{Additional Density Estimation Techniques}
\label{sec:RelWPIother}

A unique work combining DL and non-parametric inference was done by Baird et al.~\citep{Baird05ijcnn}. The authors represent a target pdf via Jacobian determinant of a bijective NN that has an implicit property of non-negativity with the total integral being 1. Additionally, their \emph{pdf learning algorithm} has similarity to our \emph{pdf loss} described in \citep{Kopitkov18arxiv} and which is also shortly presented in Section \ref{sec:DeepPDF}. Although the authors did not connect their approach to virtual physical forces that are pushing a model surface, their algorithm can be seen as a simple instance of the more general DeepPDF method that we contributed in our previous work.

Furthermore, the usage of Jacobian determinant and bijective NNs in \citep{Baird05ijcnn} is just one instance of DL algorithm family based on a nonlinear independent components analysis. Given the transformation $G_{\phi}(\cdot)$ typically implemented as a NN parametrized by $\phi$, methods of this family \citep{Deco95nips, Rippel13arxiv, Dinh14arxiv, Dinh2016arxiv} exploit the integration by substitution theorem that provides a mathematical connection between random $X$'s pdf $\PP[X]$ and random $G_{\phi}(X)$'s pdf $\PP[G_{\phi}(X)]$ through Jacobian determinant of $G_{\phi}$. In case we know $\PP[X]$ of NN's input $X$, we can calculate in closed form the density $\PP[G_{\phi}(X)]$ of NN's output and vice versa, which may be required in different applications. However, for the substitution theorem to work the transformation $G_{\phi}(\cdot)$ should be invertible, requiring to restrict NN architecture of $G_{\phi}(\cdot)$ which significantly limits NN expressiveness. In contrast, the presented PSO-LDE approach does not require any restriction over its NN architecture.

Further, another body of research in DL-based density estimation was explored in \citep{Larochelle11aistats, Uria2013nips, Germain15icml}, where the autoregressive property of density functions was exploited. The described methods NADE, RNADE and MADE decompose the joint distribution of a  multivariate data into a product of simple conditional densities where a specific variable ordering needs to be selected for better performance. Although these approaches provide high statistical robustness, their performance is still limited since every
simple conditional density is approximated by a specific distribution family thus introducing a bias into the estimation. Moreover, the provided solutions are algorithmically complicated. In contrast, in this paper we develop a novel statistically robust and yet conceptually very simple algorithm for density estimation, PSO-LDE.

\subsection{Relation to GANs}
\label{sec:RelWGans}

Recently, Generative Adversarial Networks (GANs) \citep{Goodfellow14nips, Radford15arxiv, Ledig16arxiv} became popular methods to generate new data samples (e.g.~photo-realistic images). 
GAN learns a generative model of data samples, thus implicitly learning also the data distribution. The main idea behind these methods is to have two NNs, a generator and a critic, competing with each other. The goal of the generator NN is to create new samples statistically similar as much as possible to the given dataset of examples; this is done by transformation 
of samples from a predefined prior distribution $z \sim \probs{Z}$ which is typically a multivariate Gaussian. The responsibility of the critic NN is then to decide which of the samples given to it is the real data example and which is the fake. This is typically done by estimating the ratio between real and fake densities. The latter is performed by minimizing a critic loss, where most popular critic losses \citep{Mohamed16arxiv,Zhao16arxiv, Mao17iccv, Mroueh17nips, Gulrajani17nips, Arjovsky17arxiv} can be shown to be instances of PSO (see Section \ref{sec:PSOInst}). Further, both critic and generator NNs are trained in adversarial manner, forcing generator eventually to create very realistic data samples.

Another extension of GAN is Conditional GAN methods (cGANs). These methods use additional labels provided for each example in the training dataset (e.g. ground-truth digit of image from MNIST dataset), to generate new data samples conditioned on these labels. As an outcome, in cGAN methods we can control to some degree the distribution of the generated data, for example by conditioning the generation process on a specific data label (e.g. generate an image of digit "5"). Similarly, we can use such a conditional generative procedure in robotics where we would like to generate future measurements conditioned on old observations/current state belief. Moreover, cGAN critics are also members of PSO framework as is demonstrated in Section \ref{sec:CondDeepPDF}.

Further, it is a known fact that optimizing GANs is very unstable and fragile, though during the years different studies analyzed various instability issues and proposed techniques to handle them \citep{Arjovsky17_2arxiv}. In \citep{Radford15arxiv}, the authors proposed the DCGAN approach that combines several stabilization techniques such as the batch normalization and Relu non-linearity usage for a better GAN convergence. Further improvement was done in \citep{Salimans16nips} by using a parameter historical average and statistical feature matching. Additionally, in \citep{Arjovsky17_2arxiv} it was demonstrated that the main reason for instability in training GANs is the density support difference between the original and generated data. While this insight was supported by very intricate mathematical proofs, we came to the same conclusion in Section \ref{sec:ConvA} by simply applying equilibrium concepts of PSO. As we show, if there are areas where only one of the densities is positive, the critic's surface is pushed by virtual forces to infinity, causing the optimization instability (see also Figure \ref{fig:SprtMsmtchFig}).

Moreover, in our analysis we detected another significant cause for estimation inaccuracy - the strong implicit bias produced by the model kernel. In our experiments in Section \ref{sec:Exper} the bandwidth of this kernel is shown to be one of the biggest factors for a high approximation error in PSO. Moreover, in this paper we show that there is a strong analogy between the model kernel and the kernel applied in kernel density estimation (KDE) algorithms \citep{Scott15book, Silverman18}. Considering KDE, low/high values of the kernel \emph{bandwidth} can lead to both \underfit and \overfit, depending on the number of training samples. We show the same to be correct also for PSO. See more details in Sections \ref{sec:DeepLogPDFOF} and \ref{sec:Exper}.

\subsection{Classification Domain}
\label{sec:RelWClass}

Considering supervised learning and the image classification domain, convolutional neural networks (CNNs) produce discrete class conditional probabilities \citep{Krizhevsky12nips} for each image. The typical optimization loss used by classification tasks is a categorical cross-entropy of data and label pair, which can also be viewed as an instance of our discovered PSO family, as shown in Section \ref{sec:CrosEntr}. In particular, the classification cross-entropy loss can be seen as a variant of the PSO optimization, pushing in parallel multiple virtual surfaces connected by a softmax transformation, that concurrently estimates multiple Bernoulli distributions. These distributions, in their turn, represent one categorical distribution $\prob{C|I}$ that models probability of each object class $C$ given a specific image $I$.

Beyond cross-entropy loss, many possible objective functions for Bayes optimal decision rule inference exist \citep{Shen05_dissrt,Masnadi09nips,Reid10jmlr}. 
These objectives have various forms and a different level of statistical robustness, yet all of them enforce the optimized model to approximate $\prob{C|I}$. PSO framework promotes a similar relationship among its instances, by allowing the construction of infinitely many estimators with the same equilibrium yet with some being more robust to outliers than the others. Further, the recently proposed framework \citep{Blondel20jmlr} of classification losses extensively relies on notions of Fenchel duality, which is also employed in this paper to derive the sufficient conditions over PSO \ms.

\section{Probabilistic Surface Optimization}
\label{sec:PromOptim}

\begin{figure}
	\centering
	
	\begin{tabular}{c}
		
		\subfloat{\includegraphics[width=0.95\textwidth]{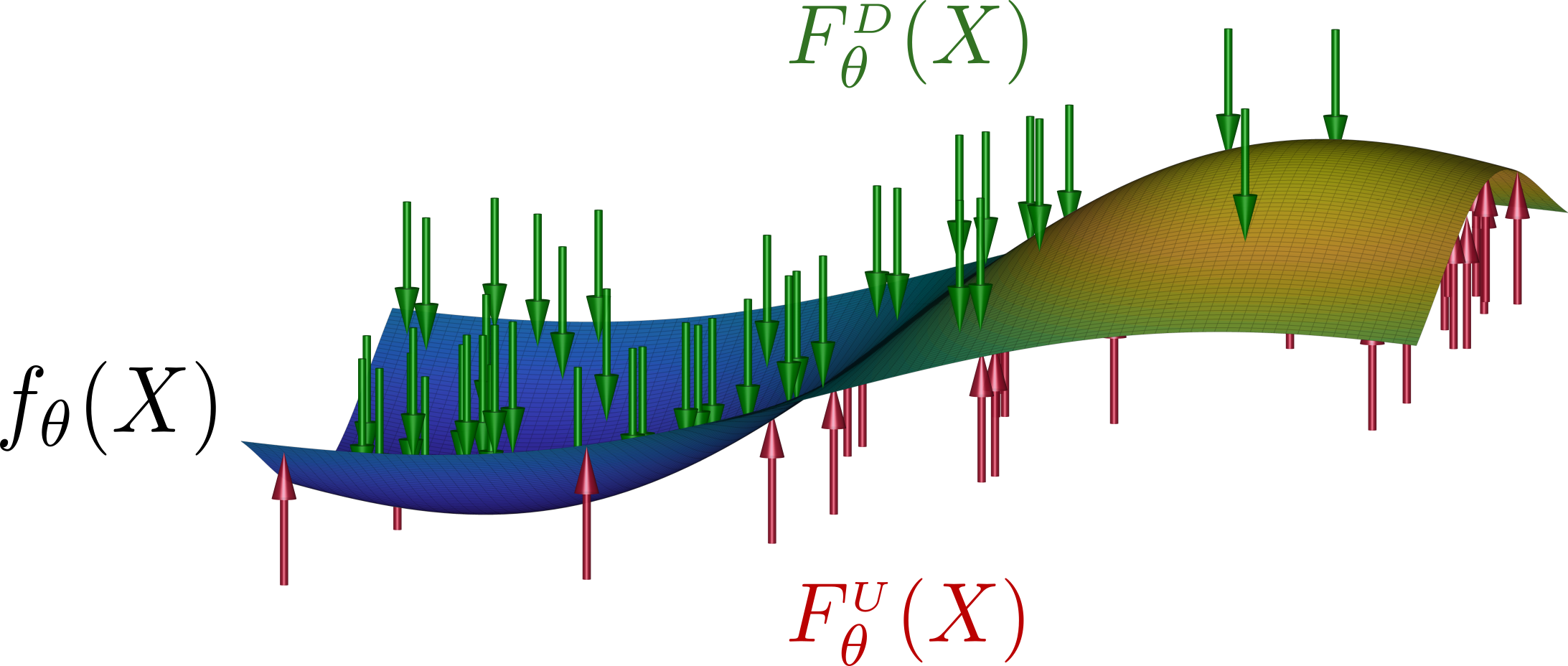}}
		
	\end{tabular}
	
	\protect
	\caption[Illustration of PSO principles.]{Illustration of PSO principles. Model $f_{\theta}(X): \RR^{n} \rightarrow \RR$ (in this paper NN parametrized by $\theta$) represents a virtual surface that is pushed in opposite directions - \up at points $X^{\usuff}$ sampled from $\probi{\usuff}{X}$ and \down at points $X^{\dsuff}$ sampled from $\probi{\dsuff}{X}$. Magnitude of each push is amplified by analytical function - $M^{\usuff}\left[X,f_{\theta}(X)\right]$ when pushing at $X^{\usuff}$ and $M^{\dsuff}\left[X,f_{\theta}(X)\right]$ when pushing at $X^{\dsuff}$, where both functions may have an almost arbitrary form, with only minor restrictions. During optimization via loss gradient in Eq.~(\ref{eq:GeneralPSOLossFrml}) the \up and \down forces $F_{\theta}^{\usuff}(X) = 
		\probi{\usuff}{X} \cdot 
		M^{\usuff}\left[X,f_{\theta}(X)\right]$ and $F_{\theta}^{\dsuff}(X) = 
		\probi{\dsuff}{X} \cdot 
		M^{\dsuff}\left[X,f_{\theta}(X)\right]$, containing both frequency and analytical components, adapt to each other till point-wise \bp $F_{\theta}^{\usuff}(X) = F_{\theta}^{\dsuff}(X)$ is achieved. Such convergence causes final $f_{\theta}(X)$ to be a particular function of $\probi{\usuff}{X}$ and $\probi{\dsuff}{X}$, and can be used for inferring numerous statistical functions of arbitrary data (see Section \ref{sec:PSOInst}).
	}
	\label{fig:SurfForces}
\end{figure}

\begin{table}[tb]
		\small
	\centering
	\begin{tabular}{ll}
		\toprule
		\textbf{Notation}     & \textbf{Description}      \\
		\midrule
		$\RRpos$, $\RRnonneg$ and $\RRneg$
		&
		sets of real numbers; positive  $\{ x \in \RR | x > 0 \}$, non-negative
		\\
		&
		$\{ x \in \RR | x \geq 0 \}$ and negative $\{ x \in \RR | x < 0 \}$ respectively
		\\
		$\FF$
		&
		function space over which PSO is inferred
		\\
		$f_{\theta}(X): \RR^{n} \rightarrow \RR$
		& 
		model $f_{\theta} \in \FF$, parametrized by $\theta$ (e.g. a neural network),  \\
		& can be viewed as a surface with support in $\RR^{n}$ whose  \\
		&
		height is the output of $f_{\theta}(X)$
		\\
		$\theta \in \RR^{\left|\theta\right|}$
		& model parameters (e.g. neural network weights vector)
		\\
		$X \in \RR^{n}$
		&
		input space of $f_{\theta}(X)$, can be viewed as support of 
		\\
		&
		model surface in space $\RR^{n + 1}$
		\\
		$X^{\usuff} \sim \probs{\usuff}$
		&
		$n$-dimensional random variable with pdf $\probs{\usuff}$, samples of which
		\\
		& are the locations where we push the model surface \up
		\\
		$X^{\dsuff} \sim \probs{\dsuff}$
		&
		$n$-dimensional random variable with pdf $\probs{\dsuff}$, samples of which
		\\
		&
		 are the locations where we push the model surface \down
		\\
		$\spp^{\usuff} \subset \RR^n$
		&
		support of $\probs{\usuff}$
		\\
		$\spp^{\dsuff} \subset \RR^n$
		&
		support of $\probs{\dsuff}$
		\\
		$\spp^{\udcupsuff} \subset \RR^n$
		&
		support union of $\probs{\usuff}$ and $\probs{\dsuff}$
		\\
		$\spp^{\udcapsuff} \subset \RR^n$
		&
		support intersection of $\probs{\usuff}$ and $\probs{\dsuff}$
		\\
		$\spp^{\udbacksuff} \subset \RR^n$ 
		&
		support of $\probs{\usuff}$ where $\probs{\dsuff}(X) = 0$
		\\
		$\spp^{\dubacksuff} \subset \RR^n$
		&
		support of $\probs{\dsuff}$ where $\probs{\usuff}(X) = 0$
		\\
		$M^{\usuff}\left[X,f_{\theta}(X)\right]: \RR^{n}\times\RR \rightarrow \RR$
		&
		\emph{force-magnitude} function that amplifies an \up push force\\
		&
		 which we apply at $X^{\usuff}$
		\\
		$M^{\dsuff}\left[X,f_{\theta}(X)\right]: \RR^{n}\times\RR \rightarrow \RR$
		&
		\emph{force-magnitude} function that amplifies a \down push force 
		\\
		&
		which we apply at $X^{\dsuff}$
		\\
		$R\left[X,f_{\theta}(X)\right]: \RR^{n}\times\RR \rightarrow \RR$
		&
		ratio function $\frac{M^{\dsuff}}{M^{\usuff}}$
		\\
		$T\left[ X, \frac{\probi{\usuff}{X}}{\probi{\dsuff}{X}} \right]: \RR^n \times \RR \rightarrow \RR$
		&
		convergence function satisfying $T \equiv R^{-1}$, describes
		\\
		&
		the modality that PSO optima $f^*(X)$ approximates
		\\
		$\KKK = (s_{min}, s_{max}) \subseteq \RR$
		&
		convergence interval defined as the range of $T\left[ X, z \right]$ w.r.t.
		\\
		&
		$z \in \RRpos$, represents a set of values $f^*$ can have, $f^*(X) \in \KKK$
		\\
		$\widetilde{M}^{\usuff}$ and $\widetilde{M}^{\dsuff}$
		&
		antiderivatives of $M^{\usuff}$ and $M^{\dsuff}$ (PSO \ps)
		\\
		$F_{\theta}^{\usuff}(X)$ and $F_{\theta}^{\dsuff}(X)$
		&
		point-wise \up and \down forces, that are applied (on average)
		\\
		&
		at any point $X \in \RR^{n}$
		\\
		$N^{\usuff}$ and $N^{\dsuff}$
		&
		batch sizes of samples from $\probs{\usuff}$ and from $\probs{\dsuff}$, that are used in\\
		&
		 a single optimization iteration
		\\
		$L_{PSO}: \FF \rightarrow \RR$
		&
		population \psofunc
		\\
		$\hat{L}_{PSO}^{N^{\usuff},N^{\dsuff}}: \FF \rightarrow \RR$
		&
		empirical \psofunc, approximates $L_{PSO}$ via training
		\\ 
		& 
		points $\{ X^{\usuff}_{i} \}_{i = 1}^{N^{\usuff}}$ and $\{ X^{\dsuff}_{i} \}_{i = 1}^{N^{\dsuff}}$
		\\
		$g_{\theta}(X, X')$
		&
		model kernel that is responsible for generalization and
		\\ &
		 interpolation during the GD optimization
		\\
		$r_{\theta}(X, X')$
		&
		relative model kernel, a scaled version 
		\\ &
		$r_{\theta}(X, X') = g_{\theta}(X, X') / g_{\theta}(X, X)$ of $g_{\theta}(X, X')$ whose properties
		\\ &
		can be used to analyze the bias-variance trade-off of PSO
		\\
		\bottomrule
	\end{tabular}
	
	\caption{Paper Main Notations}
	\label{tbl:Nottns}

\end{table}

%
%

In this section we formulate the definition of \emph{Probabilistic Surface Optimization} (PSO) algorithm framework. While in previous work \citep{Kopitkov18arxiv} we already explored a particular instance of PSO specifically for the problem of density estimation, in Section \ref{sec:PSOInst} we will see that PSO is actually a very general family of probabilistic inference algorithms that can solve various statistical tasks and that include a great number of existing methods.

\subsection{Formulation}
\label{sec:Formm}

Consider an input space $X \in \RR^n$ and two densities $\probs{\usuff}$ and $\probs{\dsuff}$ defined on it, with appropriate pdfs $\probi{\usuff}{X}$ and $\probi{\dsuff}{X}$ and with supports $\spp^{\usuff} \subset \RR^n$ and $\spp^{\dsuff} \subset \RR^n$; 
$U$ and $D$ denote the \up and \down directions of forces under a physical perspective of the optimization (see below). 
Denote by $\spp^{\udcapsuff}$, $\spp^{\udbacksuff}$ and $\spp^{\dubacksuff}$ sets $\{X: X \in \spp^{\usuff} \vee X \in \spp^{\dsuff} \}$, $\{X: X \in \spp^{\usuff} \vee X \notin \spp^{\dsuff} \}$ and $\{X: X \notin \spp^{\usuff} \vee X \in \spp^{\dsuff} \}$ respectively.
Further, denote a model $f_{\theta}(X): \RR^n \rightarrow \RR$ parametrized by the vector $\theta$ (e.g. NN or a function in Reproducing Kernel Hilbert Space, RKHS). Likewise, define two arbitrary functions $M^{\usuff}(X, s): \RR^n \times \RR \rightarrow \RR$ and $M^{\dsuff}(X, s): \RR^n \times \RR \rightarrow \RR$, which we name \mfs; both functions must be continuous \emph{almost everywhere} w.r.t. argument $s$ (see also Table \ref{tbl:Nottns} for list of main notations). We propose a novel PSO framework for  probabilistic inference, that performs a gradient-based iterative optimization $\theta_{t+1} = \theta_{t} - \delta \cdot d\theta$ with a learning rate $\delta$ where:
\begin{equation}
d\theta
=
-
\frac{1}{N^{\usuff}}
\sum_{i = 1}^{N^{\usuff}}
M^{\usuff}
\left[
X^{\usuff}_{i},
f_{\theta}(X^{\usuff}_{i})
\right]
\cdot
\nabla_{\theta} f_{\theta}(X^{\usuff}_{i})
+
\frac{1}{N^{\dsuff}}
\sum_{i = 1}^{N^{\dsuff}}
M^{\dsuff}
\left[
X^{\dsuff}_{i},
f_{\theta}(X^{\dsuff}_{i})
\right]
\cdot
\nabla_{\theta} f_{\theta}(X^{\dsuff}_{i})
.
\label{eq:GeneralPSOLossFrml}
\end{equation}
$\{ X^{\usuff}_{i} \}_{i = 1}^{N^{\usuff}}$ and $\{ X^{\dsuff}_{i} \}_{i = 1}^{N^{\dsuff}}$ are sample batches from $\probs{\usuff}$ and $\probs{\dsuff}$ respectively.
Each PSO instance is defined by a particular choice of \psocomp that produces a different convergence of $f_{\theta}(X)$ by approximately satisfying PSO \bp (within a mutual support $\spp^{\udcapsuff}$):
\begin{equation}
\frac{\probi{\usuff}{X}}{\probi{\dsuff}{X}}
=
\frac{M^{\dsuff}\left[X, f_{\theta^*}(X)\right]}{M^{\usuff}\left[X, f_{\theta^*}(X)\right]}
.
\label{eq:BalPoint}
\end{equation}
Such optimization, outlined in Algorithm \ref{alg:PSOAlgo}, will allow us to infer various statistical modalities from available data, making PSO a very useful and general algorithm family.

\begin{algorithm}[t]
	\caption{PSO estimation algorithm. Sample batches can be either identical or different for all iterations, which corresponds to GD and stochastic GD respectively.
	} 
	\label{alg:PSOAlgo} 
	\SetKwInput{Initialize}{Initialize}
	\SetKwBlock{AlgoBody}{begin:}{end}
	\SetKwInput{inputs}{Inputs}{}
	\SetKwInput{outputs}{Outputs}{}
	\inputs{
		
		$\probs{\usuff}$ and $\probs{\dsuff}$ : \up and \down densities
		
		$M^{\usuff}$ and $M^{\dsuff}$ : \mfs
		
		$\theta$ : initial parameters of model  $f_{\theta} \in \FF$
		
		$\delta$ : learning rate
	}
	\outputs{
		$f_{\theta^*}$ : PSO solution that satisfies \bp in Eq.~(\ref{eq:BalPoint}) }
	\BlankLine
	
	\AlgoBody{
		
		\While{Not converged}{
			Obtain samples $\{ X^{\usuff}_{i} \}_{i = 1}^{N^{\usuff}}$ from $\probs{\usuff}$
			
			Obtain samples $\{ X^{\dsuff}_{i} \}_{i = 1}^{N^{\dsuff}}$ from $\probs{\dsuff}$
			
			Calculate $d\theta$ via Eq.~(\ref{eq:GeneralPSOLossFrml})
			
			$\theta = \theta - \delta \cdot d\theta$
		}
		
		$\theta^* = \theta$
		
	}

	\BlankLine
\end{algorithm}

\subsection{Derivation}
\label{sec:Drv}

Consider \psofunc over a function $f: \RR^n \rightarrow \RR$ as:
\begin{equation}
L_{PSO}(f)
=
-
\E_{X \sim \probs{\usuff}}
\widetilde{M}^{\usuff}
\left[
X,
f(X)
\right]
+
\E_{X \sim \probs{\dsuff}}
\widetilde{M}^{\dsuff}
\left[
X,
f(X)
\right]
\label{eq:GeneralPSOLossFrmlPr_Limit_f}
\end{equation}
where we define $\widetilde{M}^{\usuff}
\left[
X,
s
\right]
\triangleq
\int_{s_0}^{s}
M^{\usuff}(X, t)
dt$ and 
$\widetilde{M}^{\dsuff}
\left[
X,
s
\right]
\triangleq
\int_{s_0}^{s}
M^{\dsuff}(X, t)
dt$ to be antiderivatives of $M^{\usuff}(\cdot)$ and $M^{\dsuff}(\cdot)$ respectively; these functions, referred below as \pfs of PSO, are not necessarily known analytically. The above integral can be separated into several terms related to $\spp^{\udcapsuff}$, $\spp^{\udbacksuff}$ and $\spp^{\dubacksuff}$. A minima $f^*$ of $L_{PSO}(f)$ is described below, characterizing $f^*$ within each of these areas.

\begin{theorem}[Variational Characterization]
\label{thrm:PSO} 
Consider densities $\probs{\usuff}$ and $\probs{\dsuff}$ and magnitudes $M^{\usuff}$ and $M^{\dsuff}$.
Define an arbitrary function $h: \spp^{\udbacksuff} \rightarrow \RR$.
Then for $f^*$ to be minima of $L_{PSO}$, it must fulfill the following properties:

	\begin{enumerate}
		\item \underline{Mutual support}: under "sufficient" conditions over \pair the $f^*$ must satisfy PSO balance state $\forall X \in \spp^{\udcapsuff}$: $\probi{\usuff}{X} \cdot M^{\usuff}\left[X, f^*(X)\right] = \probi{\dsuff}{X} \cdot M^{\dsuff}\left[X, f^*(X)\right]$.

		\item \underline{Disjoint support}: depending on properties of a function $M^{\usuff}$, it is necessary to satisfy $\forall X \in \spp^{\udbacksuff}$:
	\begin{enumerate}
		\item If $\forall s \in \RR: M^{\usuff}(X, s) > 0$, then $f^*(X) = \infty$.
		\item If $\forall s \in \RR: M^{\usuff}(X, s) < 0$, then $f^*(X) = - \infty$.
		\item If $\forall s \in \RR: M^{\usuff}(X, s) \equiv 0$, then $f^*(X)$ can be arbitrary.
		\item If $\forall s \in \RR:$ 
		\begin{equation}
		M^{\usuff}(X, s) 
		\rightarrow
		\begin{cases}
		= 0,& s = h(X)\\
		> 0,              & s < h(X)\\
		< 0,              & s > h(X)
		\end{cases}
		\end{equation}
		then $f^*(X) = h(X)$.
		\item Otherwise, additional analysis is required.
	\end{enumerate}
	\end{enumerate}
\end{theorem}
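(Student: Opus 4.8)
The plan is to analyze the population functional $L_{PSO}(f)$ pointwise, exploiting the fact that the two expectations can be written as integrals against $\probi{\usuff}{X}$ and $\probi{\dsuff}{X}$, and that the value $f(X)$ at distinct points $X$ can be varied independently. Concretely, I would first split the domain of integration into the three disjoint regions $\spp^{\udcapsuff}$, $\spp^{\udbacksuff}$ and $\spp^{\dubacksuff}$ (plus the complement of $\spp^{\udcupsuff}$, on which the integrand vanishes and $f^*$ is unconstrained). On each region the functional decomposes as an integral of a pointwise objective $\phi_X(s) = -\probi{\usuff}{X}\,\widetilde{M}^{\usuff}[X,s] + \probi{\dsuff}{X}\,\widetilde{M}^{\dsuff}[X,s]$, so minimizing $L_{PSO}$ reduces to minimizing $\phi_X(s)$ over $s\in\RR$ for (almost) every fixed $X$. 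This is the standard calculus-of-variations reduction; I would note that it is legitimate here because there is no coupling between values of $f$ at different points (no derivative of $f$ appears), so a minimizer of the integral is obtained by minimizing the integrand pointwise.

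Next I would compute the stationarity condition. Since $\widetilde{M}^{\usuff}$ and $\widetilde{M}^{\dsuff}$ are antiderivatives of $M^{\usuff}$ and $M^{\dsuff}$ in the second argument, we have $\frac{d}{ds}\phi_X(s) = -\probi{\usuff}{X}\,M^{\usuff}[X,s] + \probi{\dsuff}{X}\,M^{\dsuff}[X,s]$. Setting this to zero on $\spp^{\udcapsuff}$ immediately yields the balance state $\probi{\usuff}{X}\,M^{\usuff}[X,f^*(X)] = \probi{\dsuff}{X}\,M^{\dsuff}[X,f^*(X)]$, which is claim~1. The phrase "sufficient conditions over \pair" I would interpret as exactly the conditions guaranteeing that this critical point is actually a (global) minimizer of $\phi_X$ — e.g.~$\phi_X$ convex in $s$, or $M^{\dsuff}/M^{\usuff}$ monotone so that $\phi_X'$ changes sign exactly once from negative to positive — and I would defer the precise statement to the later Section on optimality conditions (Section~\ref{sec:PSOFuncObj}), merely citing that such conditions make the pointwise critical equation both necessary and sufficient.

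For claim~2 I would treat the region $\spp^{\udbacksuff}$, where $\probi{\dsuff}{X} = 0$, so $\phi_X(s) = -\probi{\usuff}{X}\,\widetilde{M}^{\usuff}[X,s]$ with $\probi{\usuff}{X} > 0$; hence minimizing $\phi_X$ is the same as maximizing $\widetilde{M}^{\usuff}[X,\cdot]$ over $\RR$. Now I would case-split on the sign behavior of $M^{\usuff}(X,\cdot) = \partial_s \widetilde{M}^{\usuff}(X,\cdot)$: (a) if $M^{\usuff}(X,s) > 0$ for all $s$, then $\widetilde{M}^{\usuff}[X,\cdot]$ is strictly increasing and its supremum is approached only as $s\to\infty$, forcing $f^*(X) = \infty$; (b) symmetrically, $M^{\usuff} < 0$ everywhere gives a strictly decreasing antiderivative and $f^*(X) = -\infty$; (c) if $M^{\usuff}(X,\cdot)\equiv 0$ then $\widetilde{M}^{\usuff}[X,\cdot]$ is constant and every $s$ is a maximizer, so $f^*(X)$ is arbitrary; (d) under the sign pattern specified — $M^{\usuff}(X,s)$ positive for $s<h(X)$, zero at $s=h(X)$, negative for $s>h(X)$ — the antiderivative increases up to $h(X)$ and decreases afterwards, so its unique maximizer is $s=h(X)$, giving $f^*(X) = h(X)$; (e) any remaining sign configuration is left as "additional analysis required". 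An analogous argument could be run on $\spp^{\dubacksuff}$ with the roles of $U$ and $D$ swapped, which is why the theorem only spells out one side.

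The main obstacle — and the point I would be most careful about — is the justification that pointwise minimization of the integrand yields a minimizer of the functional, together with the handling of the non-interior/infinite optima. The subtleties are: (i) measurability of the pointwise-chosen $f^*$ (in cases (c) and the general "arbitrary" situations one must argue a measurable selection exists); (ii) the fact that in cases (a)/(b) no finite minimizer exists, so $f^*$ is a minimizer only in an extended-real or "infimum not attained" sense — the statement implicitly allows $f^*(X) = \pm\infty$, and one should be clear that this means the infimum of $L_{PSO}$ over admissible $f$ is not attained by any real-valued function and the limiting behavior pushes the surface to infinity; and (iii) the "sufficient conditions" in claim~1, which genuinely require the separate analysis of Section~\ref{sec:PSOFuncObj} (convexity/monotonicity of the magnitudes) and cannot be dispensed with, since without them the balance state is only a necessary first-order condition. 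I would therefore structure the write-up so that the pointwise reduction and the case analysis of $\widetilde{M}^{\usuff}$ are the core, while flagging the stability/optimality hypotheses as an explicit forward reference rather than re-deriving them here.
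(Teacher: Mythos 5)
Your proposal is correct and follows essentially the same route as the paper: the pointwise (Euler--Lagrange) reduction of the decomposed functional gives the balance state on $\spp^{\udcapsuff}$ as the first-order condition with the "sufficient" conditions deferred to the Legendre--Fenchel analysis of Section \ref{sec:PSOFuncObj}, and your sign-based case analysis of $\widetilde{M}^{\usuff}$ on $\spp^{\udbacksuff}$ is exactly the paper's proof of Theorem \ref{thrm:PSO_UD_conv}. Your explicit flagging of measurability and of the non-attained infima in cases (a)--(b) matches the caveats the paper itself makes.
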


The theorem's proof, showing PSO \bp to be Euler-Lagrange equation of $L_{PSO}(f)$, is presented in Section \ref{sec:PSOFuncObj}. Sufficient conditions over \pair are likewise derived there. Part 2 helps to understand dynamics in areas outside of the mutual support, and its analogue for $\spp^{\dubacksuff}$ is stated in Section \ref{sec:FuncDisjointSupp_du}.
Yet, below we will mostly rely on part 1, considering the optimization in area $\spp^{\udcapsuff}$.
Following from the above, finding a minima of $L_{PSO}$ will produce a function $f$ that satisfies Eq.~(\ref{eq:BalPoint}).

To infer the above $f^*$, we consider a function space $\FF$, whose each element $f_{\theta}$ can be parametrized by $\theta$, and solve the problem $\min_{f_{\theta} \in \FF} L_{PSO}(f_{\theta})$. Assuming that $\FF$ contains $f^*$, it will be obtained as a minima of the above minimization. Further, in practice $L_{PSO}$ is optimized via gradient-based optimization where gradient w.r.t. $\theta$ is:
\begin{equation}
\nabla_{\theta}
L_{PSO}(f_{\theta})
=
-
\E_{X \sim \probs{\usuff}}
M^{\usuff}
\left[
X,
f_{\theta}(X)
\right]
\cdot
\nabla_{\theta} f_{\theta}(X)
+
\E_{X \sim \probs{\dsuff}}
M^{\dsuff}
\left[
X,
f_{\theta}(X)
\right]
\cdot
\nabla_{\theta} f_{\theta}(X)
\label{eq:GeneralPSOLossFrml_Limit}
\end{equation}
with Eq.~(\ref{eq:GeneralPSOLossFrml}) being its sampled approximation. Considering a hypothesis class represented by NN and the universal approximation theory \citep{Hornik91nn}, we assume that $\FF$ is rich enough to learn the optimal $f^*$ with high accuracy. 
Furthermore, in our experiments we  show that in practice NN-based PSO estimators approximate the PSO \bp in a very accurate manner.

\begin{remark} 
	PSO can be generalized into a functional gradient flow via the corresponding functional derivative of $L_{PSO}(f)$ in Eq.~(\ref{eq:GeneralPSOLossFrmlPr_Limit_f}). Yet, in this paper we will focus on GD formulation w.r.t. $\theta$ parametrization, outlined in Algorithm \ref{alg:PSOAlgo}, leaving more theoretically sophisticated form for future work.
	Algorithm \ref{alg:PSOAlgo} is easy to implement in practice, if for example $f_{\theta}$ is represented as NN.
	Furthermore, in case $\FF$ is RKHS, this algorithm can be performed by only evaluating RKHS's kernel at training points, by applying the kernel trick. The corresponding optimization algorithm is known as the kernel gradient descent. Further, in this paper we consider loss functions without an additional regularization term such as RKHS norm or weight decay, since a typical GD optimization is known to implicitly produce a regularization effect \citep{Ma19fcm}. Analysis of PSO combined with the explicit regularization term is likewise left for future work.
\end{remark}

\subsection{PSO Balance State}
\label{sec:PSOBal}

Given that $\probs{\usuff}$ and $\probs{\dsuff}$ have the same support, PSO will converge to PSO \bp in Eq.~(\ref{eq:BalPoint}). 
By ignoring possible singularities (due to an assumed identical support) we can see that the converged surface $f_{\theta^*}$ will be such that the ratio of frequency components will be opposite-proportional to the ratio of \emph{magnitude} components. 
To derive a value of the converged $f_{\theta^*}$ for a specific PSO instance, \pair of that PSO instance, which typically involve $f_{\theta}$ inside them,
must be substituted into Eq.~(\ref{eq:BalPoint}) and then it needs to be solved for $f_{\theta}$.
This is equivalent to finding inverse $T(X, z)$ of the ratio $R(X, s) \triangleq \frac{M^{\dsuff}(X, s)}{M^{\usuff}(X, s)}$ w.r.t. the second argument, $T \equiv R^{-1}$, with the convergence described as $f_{\theta^*}(X) = T\left[X, \frac{\probi{\usuff}{X}}{\probi{\dsuff}{X}}\right]$.
Such \bp can be used to mechanically recover many existing methods and to easily derive new ones for inference of numerous statistical functions of data; in Section \ref{sec:PSOInst} we provide full detail on this point. 
Furthermore, the above general formulation of PSO is surprisingly simple, considering that
it provides a strong intuition about its optimization dynamics as a physical system, as explained below.

\subsection{Virtual Surface Perspective}
\label{sec:Inttt}

The main advantage of PSO is in its conceptual simplicity, revealed when viewed via a physical perspective. Specifically, $f_{\theta}(X)$ can be considered as a virtual surface in $\RR^{n + 1}$ space, with its support being $\RR^{n}$, see Figure \ref{fig:SurfForces}. Further, according to the update rule of a gradient-descent (GD) optimization and Eq.~(\ref{eq:GeneralPSOLossFrml}), any particular training point $X$ updates $\theta$ during a single GD update by $\nabla_{\theta} f_{\theta}(X)$ magnified by the output of a \mf ($M^{\usuff}$ or $M^{\dsuff}$). Furthermore, considering the update of a simple form $\theta_{t + 1} = \theta_{t} + \nabla_{\theta} f_{\theta_{t}}(X)$, it is easy to show that the height of the surface at any other point $X'$ changes according to a first-order Taylor expansion as:
\begin{equation}
f_{\theta_{t + 1}}(X') - f_{\theta_{t}}(X')
\approx
\nabla_{\theta} f_{\theta_{t}}(X') ^T \cdot \nabla_{\theta} f_{\theta_{t}}(X)
.
\label{eq:DfffHeight}
\end{equation}
Hence, by pushing (optimizing) a specific training point $X$, the surface at other points changes according to the elasticity properties of the model expressed via a \gs kernel $g_{\theta}(X, X') \triangleq \nabla_{\theta} 
f_{\theta}(X)^T \cdot \nabla_{\theta} 
f_{\theta}(X')$. It helps also to think that during the above update we push at $X$ with a virtual rod, appeared inside Figure \ref{fig:SurfForces} in form of green and red arrows, whose head shape is described by $g_{\theta}(X, X')$.

When optimizing over RKHS, the above expression turns to be identity and $g_{\theta}(X, X')$ collapses into the reproducing kernel\footnote{
In RKHS defined via a feature map $\phi(X)$ and a reproducing kernel $k(X,X') = \phi(X)^T \cdot \phi(X')$, every function has a form $f_{\theta}(X) = \phi(X)^T \cdot \theta$. Since $\nabla_{\theta} f_{\theta}(X) = \phi(X)$, we obtain $g_{\theta}(X, X') \equiv k(X,X')$.}. 
For NNs, this model kernel is known as Neural Tangent Kernel (NTK) \citep{Jacot18nips}.
As was empirically observed, NTK has a strong local behavior with its outputs mostly being large only when points $X$ and $X'$ are close by. More insights about NTK can be found in \citep{Jacot18nips,Dou19arxiv,Kopitkov19arxiv_spectrum}. 
Further assuming for simplicity that $g_{\theta}$ has zero-bandwidth $\forall X \neq X': g_{\theta}(X, X') \equiv 0$ and that \pair are non-negative functions, it follows then that each $X^{\usuff}_{i}$ in Eq.~(\ref{eq:GeneralPSOLossFrml}) pushes the surface at this point \up by $g_{\theta}(X^{\usuff}_{i}, X^{\usuff}_{i})$ magnified by $M^{\usuff}
\left[
X^{\usuff}_{i},
f_{\theta}(X^{\usuff}_{i})
\right]$, whereas each $X^{\dsuff}_{i}$ is pushing it \down in a similar manner.

Considering a macro picture of such optimization, $f_{\theta}(X)$ is pushed \up at samples from $\probs{\usuff}$ and \down at samples from $\probs{\dsuff}$, with the \up and \down averaged point-wise forces being $F_{\theta}^{\usuff}(X) \triangleq \probi{\usuff}{X} \cdot 
M^{\usuff}
\left[
X,
f_{\theta}(X)
\right]$ and $F_{\theta}^{\dsuff}(X) \triangleq \probi{\dsuff}{X} \cdot 
M^{\dsuff}
\left[
X,
f_{\theta}(X)
\right]$ ($g_{\theta}(X, X)$ term is ignored since it is canceled out).
Intuitively, such a physical process converges to a stable state when point-wise forces become equal, $F_{\theta}^{\usuff}(X) = F_{\theta}^{\dsuff}(X)$. 
This is supported mathematically by the part 1 of Theorem \ref{thrm:PSO}, with such equilibrium being named as PSO \bp. Yet, it is important to note that this is only the variational equilibrium, which is obtained when training datasets are infinitely large and when the bandwidth of kernel $g_{\theta}$ is infinitely small. In practice the outcome of GD optimization \textbf{strongly} depends on the actual amount of available sample points and on various properties of $g_{\theta}$, with the model kernel serving as a metric over the function space $\FF$. Thus, the actual equilibrium somewhat deviates from PSO \bp, which we investigate in Section \ref{sec:ConvA}.

Additionally, the actual force direction at samples $X^{\usuff}_{i}$ and $X^{\dsuff}_{i}$ depends on signs of $M^{\usuff}
\left[
X^{\usuff}_{i},
f_{\theta}(X^{\usuff}_{i})
\right]$ and $M^{\dsuff}
\left[
X^{\usuff}_{i},
f_{\theta}(X^{\dsuff}_{i})
\right]$, and hence may be different for each instance of PSO. Nonetheless, in most cases the considered \mfs are non-negative, and thus support the above exposition exactly. Moreover, for negative \ms the above picture of physical forces will still stay correct, after swapping between \up and \down terms.

The physical equilibrium can also explain dynamics outside of the mutual support $\spp^{\udcapsuff}$. Considering the area $\spp^{\udbacksuff} \subset \RR^n$ where only samples $X^{\usuff}_{i}$ are located, when $M^{\usuff}$ has positive outputs, the model surface is pushed indefinitely \up, since there is no opposite force required for the equilibrium. Likewise, when $M^{\usuff}$'s outputs are negative - it is pushed indefinitely \down. Further, when \mf is zero, there are no pushes at all and hence the surface can be anything. Finally, if the output of $M^{\usuff}\left[
X,
f_{\theta}(X)
\right]
$ is changing signs depending on whether $f_{\theta}(X)$ is higher than $h(X)$, then $f_{\theta}(X)$ must be pushed towards $h(X)$ to balance the forces. Such intuition is supported mathematically by part 2 of Theorem \ref{thrm:PSO}.
Observe that convergence at infinity actually implies that PSO will not converge to the steady state for any number of GD iterations. Yet, it can be easily handled by for example limiting range of functions within the considered $\FF$ to some set $[a, b] \subset \RR$.

\begin{remark}
	In this paper we discuss PSO and its applications in context of only continuous multi-dimensional data, while in theory the same principles can work also for  discrete data. The sampled points $X^{\usuff}_{i}$ and $X^{\dsuff}_{i}$ will be located only at discrete locations of the surface $f_{\theta}(X)$ since the points are in $\ZZ^n \subset \RR^n$. Yet, the balance at each such point will still be governed by the same up and down forces. Thus, we can apply similar PSO methods to also infer statistical properties of discrete data.
\end{remark}

\section{PSO Functional}
\label{sec:PSOFuncObj}

Here we provide a detailed analysis of \psofunc $L_{PSO}(f)$, proving Theorem \ref{thrm:PSO} and deriving sufficient optimality conditions to ensure its optimization stability for any considered pair of $M^{\usuff}$ and $M^{\dsuff}$.
Examine $L_{PSO}$'s decomposition:
\begin{equation}
L_{PSO}(f)
=
L^{\udcapsuff}(f) +
L^{\udbacksuff}(f) +
L^{\dubacksuff}(f)
,
\label{eq:GeneralPSOLossFrmlPr_decomp}
\end{equation}
\begin{equation}
\label{eq:PSO_loss_f_separateee}
\begin{split}
L^{\udcapsuff}(f) 
& \triangleq
\int_{\spp^{\udcapsuff}}
-
\probi{\usuff}{X}
\cdot
\widetilde{M}^{\usuff}
\left[
X,
f(X)
\right]
+
\probi{\dsuff}{X}
\cdot
\widetilde{M}^{\dsuff}
\left[
X,
f(X)
\right]
dX
\\
L^{\udbacksuff}(f)
& \triangleq
-
\int_{\spp^{\udbacksuff}}
\probi{\usuff}{X}
\cdot
\widetilde{M}^{\usuff}
\left[
X,
f(X)
\right]
dX
\\
L^{\dubacksuff}(f)
& \triangleq
\int_{\spp^{\dubacksuff}}
\probi{\dsuff}{X}
\cdot
\widetilde{M}^{\dsuff}
\left[
X,
f(X)
\right]
dX
.
\end{split}
\end{equation}
In Section \ref{sec:FuncMutSupp} we analyze $L^{\udcapsuff}$, specifically addressing non-differentiable functionals (Section \ref{sec:FuncMutSupp_non_diff}), differentiable functionals (Section \ref{sec:FuncMutSupp_diff}) and deriving extra conditions for unlimited range of $\FF$ (Section \ref{sec:FuncMutSupp_unlimit_range}). Further, in Section \ref{sec:FuncDisjointSupp} we analyze $L^{\udbacksuff}$ and $L^{\dubacksuff}$, proving part 2 of Theorem \ref{thrm:PSO}.

\subsection{Mutual Support Optima}
\label{sec:FuncMutSupp}

Consider the loss term $L^{\udcapsuff}(f)$ corresponding to the area $\spp^{\udcapsuff}$,
where $\probi{\usuff}{X} > 0$ and $\probi{\dsuff}{X} > 0$. The Euler-Lagrange equation of this loss is $- \probi{\usuff}{X} \cdot M^{\usuff}\left[X, f(X)\right] + \probi{\dsuff}{X} \cdot M^{\dsuff}\left[X, f(X)\right] = 0$, thus yielding the conclusion that $L^{\udcapsuff}$'s minimization must lead to the convergence in Eq.~(\ref{eq:BalPoint}). Yet, calculus of variations does not easily produce the sufficient conditions that \pair must satisfy for such steady state. Instead, below we will apply notions of Legendre-Fenchel (LF) transform from the convex optimization theory.

\subsubsection{PSO Non-Differentiable Case}
\label{sec:FuncMutSupp_non_diff}

The core concepts required for the below derivation are properties of convex functions and their derivatives, and an inversion relation between (sub-)derivatives of two convex functions that are also convex-conjugate of each other. Each convex function $\widetilde{\varphi}(z)$ on an interval $(a,b)$ of real-line can be represented by its derivative $\varphi(z)$, with latter being increasing on $(a,b)$ with finitely many discontinuities (jumps). Each non-differentiable point $z_0$ of $\widetilde{\varphi}$ is expressed within $\varphi$ by a jump at $z_0$, and at each point where $\widetilde{\varphi}$ is not strictly convex the $\varphi$ is locally constant. Left-hand and right-hand derivatives $\widetilde{\varphi}D_{-}(z)$ and $\widetilde{\varphi}D_{+}(z)$ of $\widetilde{\varphi}$ 
\begin{equation}
\widetilde{\varphi}D_{-}(z_0)
=
\lim_{z \rightarrow z_0^{-}}
\frac{\widetilde{\varphi}(z) - \widetilde{\varphi}(z_0)}{z - z_0}
,
\quad
\widetilde{\varphi}D_{+}(z_0)
=
\lim_{z \rightarrow z_0^{+}}
\frac{\widetilde{\varphi}(z) - \widetilde{\varphi}(z_0)}{z - z_0}
\label{eq:sub_der_def}
\end{equation}
can be constructed from $\varphi$ by treating its finitely many discontinuities as left-continuities and right-continuities respectively. Further, $\widetilde{\varphi}$'s subderivative at $z$ is defined as $\partial \widetilde{\varphi}(z) = [\widetilde{\varphi}D_{-}(z), \widetilde{\varphi}D_{+}(z)]$. Note that $\widetilde{\varphi}$ can be recovered from any one-sided derivative by integration \citep[see Appendix C]{Pollard02book}, up to an additive constant which will not matter for our goals. Hence, each one of $\widetilde{\varphi}$, $\varphi$, $\widetilde{\varphi}D_{-}$, $\widetilde{\varphi}D_{+}$ and $\partial \widetilde{\varphi}$ is just a different representation of the same information.

The LF transform $\widetilde{\psi}$ of $\widetilde{\varphi}$ (also known as convex-conjugate of $\widetilde{\varphi}$) is a convex function defined as $\widetilde{\psi}(s) \triangleq \sup_{z \in \RR} \{ sz - \widetilde{\varphi}(z) \}$. Subderivatives $\partial \widetilde{\varphi}$ and $\partial \widetilde{\psi}$ have the following useful inverse relation \citep[see Proposition 11.3]{Rockafellar09book}: $\partial \widetilde{\psi}(s) = \{ z: s \in \partial \widetilde{\varphi}(z) \}$. Moreover, in case $\widetilde{\varphi}$ and $\widetilde{\psi}$ are strictly convex and differentiable, their derivatives $\varphi$ and $\psi$ are strictly increasing and continuous, and actually are inverse functions between $(a,b)$ and $(c,d)$, with $c \triangleq \inf_{z \in (a,b)} \varphi(z)$ and $d \triangleq \sup_{z \in (a,b)} \varphi(z)$. Further, since $\partial \widetilde{\psi}$ can be recovered from $\partial \widetilde{\varphi}$, it contains the same information and is just one additional representation form. See illustration in Figure \ref{fig:ConvexFuncsDemo} and refer to \citep{Zia09ajp} for a more intuitive exposition.

\begin{figure}
	\centering
	
	\begin{tabular}{cccc}
		
		\subfloat[\label{fig:ConvexFuncsDemo-a}]{\includegraphics[width=0.22\textwidth]{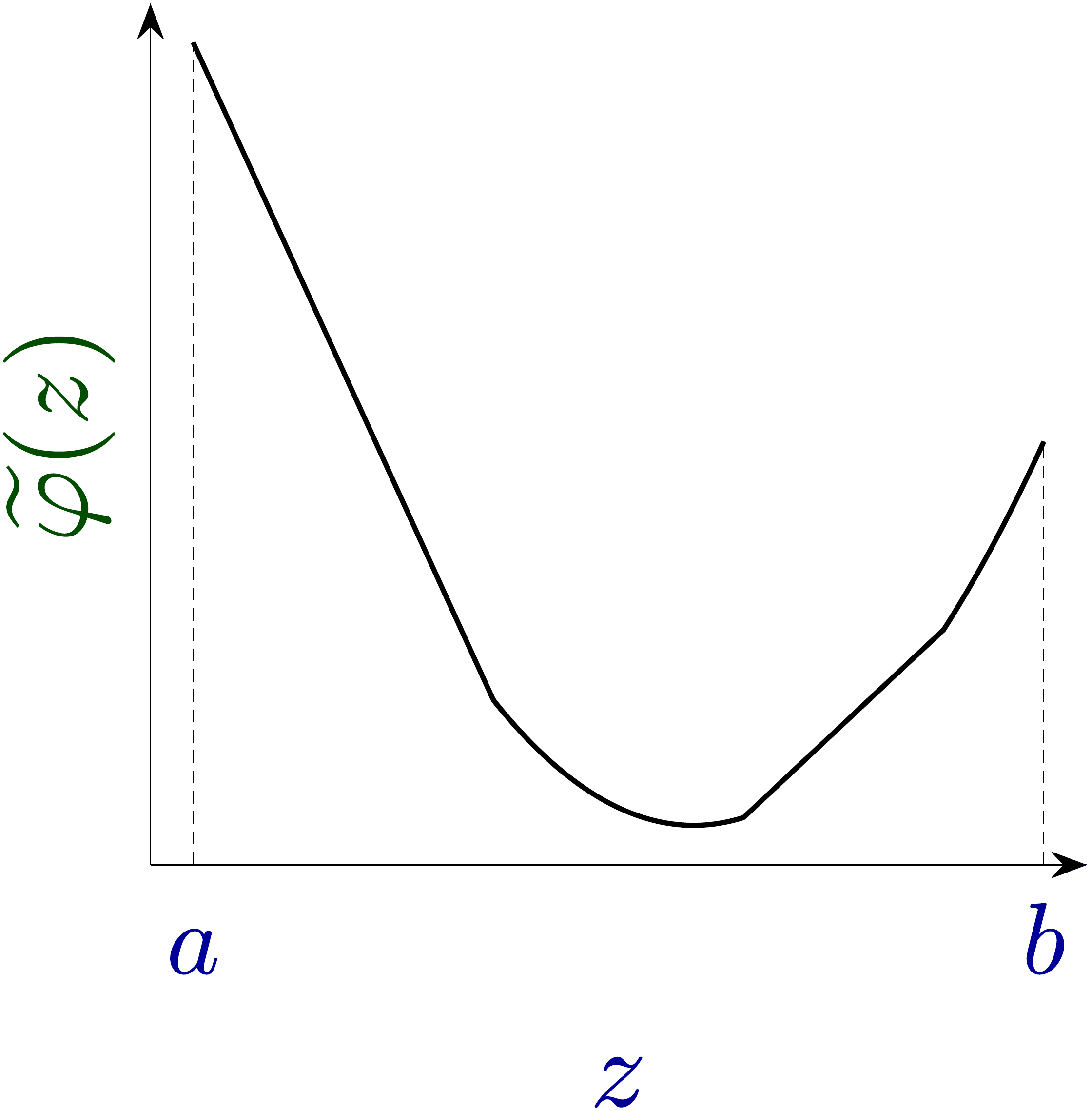}}
		&
		
		\subfloat[\label{fig:ConvexFuncsDemo-b}]{\includegraphics[width=0.22\textwidth]{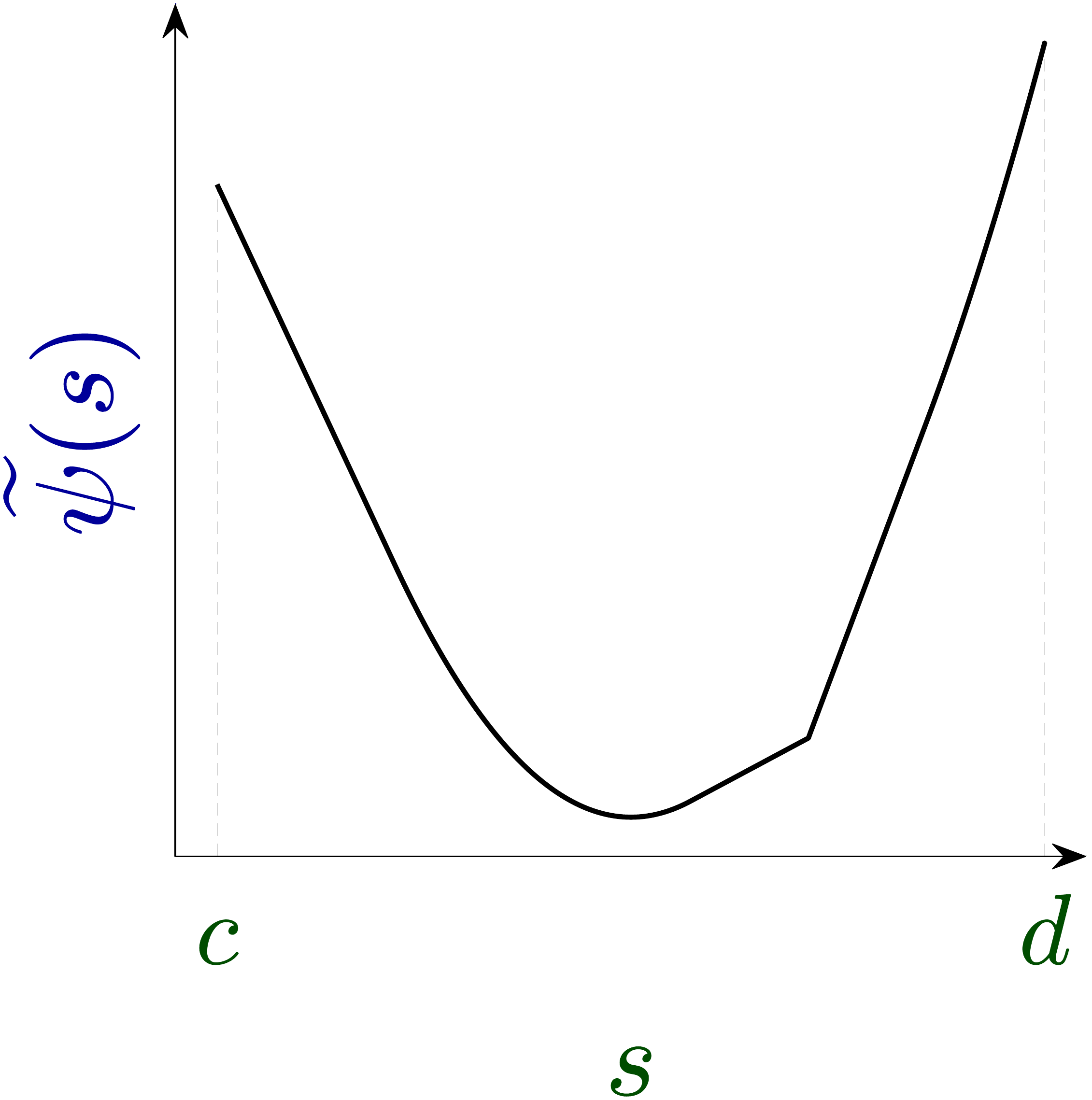}}
		&
		\subfloat[\label{fig:ConvexFuncsDemo-c}]{\includegraphics[width=0.22\textwidth]{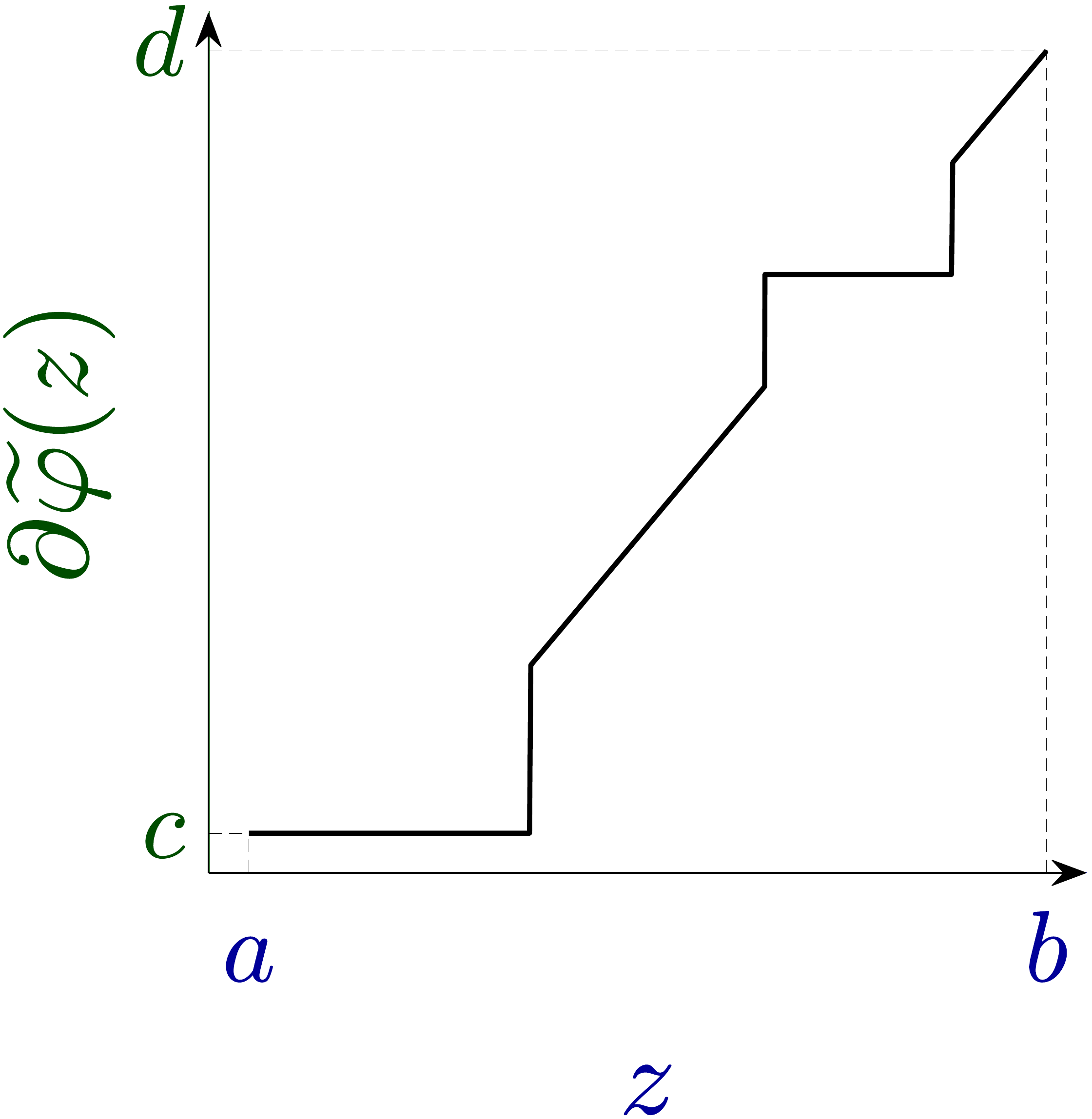}}
		&
		
		\subfloat[\label{fig:ConvexFuncsDemo-d}]{\includegraphics[width=0.22\textwidth]{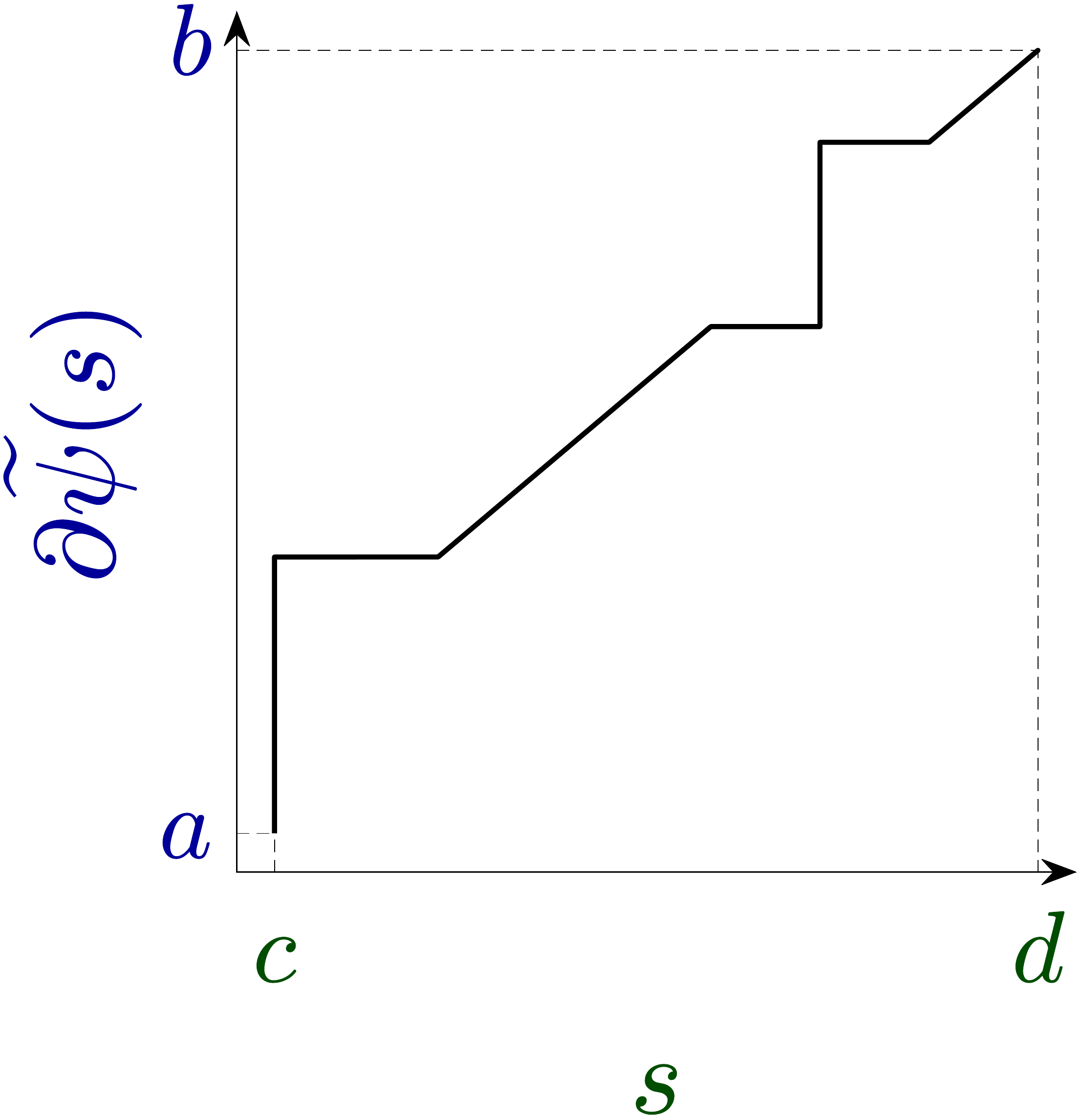}}
		
	\end{tabular}
	
	\protect
	\caption[Example of a convex function, its convex-conjugate and their subderivatives.]{(a) Example of a convex function $\widetilde{\varphi}(z)$ defined on an interval $(a, b)$, (b) its convex-conjugate $\widetilde{\psi}(s)$, (c) subderivative $\partial \widetilde{\varphi}(z)$ and (d) subderivative $\partial \widetilde{\psi}(s)$. The $\partial \widetilde{\varphi}(z)$ has jumps at points in $(a, b)$ where $\widetilde{\varphi}(z)$ is non-differentiable, and is constant at points where $\widetilde{\varphi}(z)$ is not strictly convex. Further, $\partial \widetilde{\varphi}(z)$ and $\partial \widetilde{\psi}(s)$ are inverse mappings between $(a, b)$ and $(c, d)$, with constant regions in $\partial \widetilde{\varphi}(z)$ corresponding to jumps in $\partial \widetilde{\psi}(s)$, and vice versa. Additionally, any of the above four functions can be recovered from any other, thus each of them is a different representation of the same information. Observe that a convex function $\widetilde{\psi}(s)$ can be recovered from its subderivative $\partial \widetilde{\psi}(s)$ only up to an additive constant. Yet, this constant will not affect optima $s^*$ of the considered below optimization $\min_{s \in (c,d)} \widetilde{\psi}(s) - z \cdot s$ and hence can be ignored. Furthermore, codomains of $\widetilde{\varphi}(z)$ and of $\widetilde{\psi}(s)$ do not play any role in our derivation.
	}
	\label{fig:ConvexFuncsDemo}
\end{figure}

The above inverse relation can be used in optimization by applying the Fenchel-Young inequality: for any $z \in (a,b)$ and $s \in (c,d)$ we have $\widetilde{\varphi}(z) + \widetilde{\psi}(s) - z \cdot s \geq 0$, with equality obtained iff $s \in \partial \widetilde{\varphi}(z)$. Thus, for any given $z \in (a,b)$ the optima $s^* = \argmin_{s \in (c,d)} \widetilde{\psi}(s) - z \cdot s$ must be within $\partial \widetilde{\varphi}(z)$. It is helpful to identify a role of each term within the above optimization problem. The $\widetilde{\psi}$ serves as part of the cost, $\partial \widetilde{\varphi}$ defines the optima $s^*$, $\partial \widetilde{\psi}$ is required to solve this optimization in practice (i.e. via subgradient descent), and $\widetilde{\psi}$ is not actually used. Thus, in order to define properties we want $s^*$ to have and to perform the actual optimization, we only need to know $\partial \widetilde{\varphi}$ and $\partial \widetilde{\psi}$, with the latter being easily recovered from the former. For this reason, given an increasing function $\varphi(z)$ with domain $(a,b)$ and codomain (c,d), in practice it is sufficient to know its inverse $\psi(s)$ (or the corresponding subderivative $\partial \widetilde{\psi}$) to solve the optimization and to obtain the optima $s^* \in (c,d)$ s.t. $s^* = \varphi(z)$ (or $s^* \in \partial \widetilde{\varphi}(z)$). Convex functions $\widetilde{\varphi}$ and $\widetilde{\psi}$ can be used symbolically for a math proof, yet their actual form is not required, which was also noted in \citep{Reid10jmlr}. This idea may seem pointless since to find $s^*$ we can compute $\varphi(z)$ in the first place, yet it will help us in construction of a general optimization framework for probabilistic inference.

In following statements we define several functions with two arguments, where the first argument $X$ can be considered as a "spectator" and where all declared functional properties are w.r.t. the second argument for any value of the first one.
Define the required estimation convergence by a transformation $T(X, z): \RR^n \times \RRpos \rightarrow \RR$. Given that $T$ is increasing and right-continuous (w.r.t. $z \in \RRpos$ at any $X \in \RR^n$), below we will derive a new objective functional whose minima is $f^*(X) = T\left[X, \frac{\probi{\usuff}{X}}{\probi{\dsuff}{X}}\right]$ and which will have a form of $L^{\udcapsuff}$. Furthermore, this derivation will yield the sufficient conditions over PSO \ms.

Consider any fixed value of $X \in \RR^n$. Denote by $\KKK = (s_{min}, s_{max})$ the effective convergence interval, with $s_{min} = \inf_{z \in \RRpos} T(X, z)$ and $s_{max} = \sup_{z \in \RRpos} T(X, z)$; at the convergence we will have $f^*(X) \in \KKK$. Further, below we will assume that the effective interval $\KKK$ is identical for any $X$, which is satisfied by all convergence transformations $T$ considered in our work. It can be viewed as an assumption that knowing value of $X$ without knowing value of $\frac{\probi{\usuff}{X}}{\probi{\dsuff}{X}}$ does not yield any information about the convergence $T\left[X, \frac{\probi{\usuff}{X}}{\probi{\dsuff}{X}}\right]$.

Due to its properties, $T(X, z): \RR^n \times \RRpos \rightarrow \KKK$ can be acknowledged as a right-hand derivative of some convex function $\widetilde{T}(X, z)$ (w.r.t. $z$). Denote by $\widetilde{T}D_{-}(X, z)$, $\widetilde{T}D_{+}(X, z) \equiv T(X, z)$ and $\partial \widetilde{T}(X, z)$ left-hand derivative, right-hand derivative and subderivative of $\widetilde{T}$.

Next, define a mapping $G
(
X,
s
)$ to be a strictly increasing and right-continuous function on $s \in \KKK$, with $\bar{\KKK} = \{ G
\left[
X,
s
\right] : s \in \KKK \}$ being 
an image of $\KKK$ under $G$. 
Set $\bar{\KKK}$ may depend on value of $X$, although it will not affect below conclusions.
We denote the left-inverse of $G(X, s): \RR^n \times \KKK \rightarrow \bar{\KKK}$ as $G^{-1}(X, t): \RR^n \times \bar{\KKK} \rightarrow \KKK$ s.t. $\forall s \in \KKK: G^{-1}(X, G(X, s)) = s$.

Define a mapping
$\varPhi(X, z) \triangleq G(X, T(X, z)): \RR^n \times \RRpos \rightarrow \bar{\KKK}$ and note that it is increasing (composition of two increasing functions is increasing) and right-continuous (right-continuity of $G$ preserves all limits required for right-continuity of $\varPhi$). Similarly to $T$, define $\widetilde{\varPhi}(X, z)$, $\widetilde{\varPhi}D_{-}(X, z) \equiv G(X, \widetilde{T}D_{-}(X, z))$, $\widetilde{\varPhi}D_{+}(X, z) \equiv G(X, \widetilde{T}D_{+}(X, z))$ and $\partial \widetilde{\varPhi}(X, z)$ to be the corresponding convex function, its left-hand derivative, right-hand derivative and subderivative respectively.

Denote by $\widetilde{\Psi}(X, t)$ the LF transform of $\widetilde{\varPhi}(X, z)$ w.r.t. $z$. Its subderivative at any $t \in \bar{\KKK}$ is $\partial \widetilde{\Psi}(X, t) = \{ z: G^{-1}(t) \in \partial \widetilde{T}(X, z) \} \subset \RRpos$.
According to the Fenchel-Young inequality, for any given $z \in \RRpos$ the optima $t^*$ of $\min_{t \in \bar{\KKK}} \widetilde{\Psi}(X, t)  - t \cdot z$ must be within $\partial \widetilde{\varPhi}(X, z)$. Further, this optimization can be rewritten as $\min_{G(X, s): s \in \KKK} \widetilde{\Psi}(X, G(X, s))  - G(X, s) \cdot z$ with its solution satisfying $s^*: G(X, s^*) \in \partial \widetilde{\varPhi}(X, z)$, or:
\begin{equation}
s^* = \argmin_{s \in \KKK} 
- z \cdot G(X, s)
+
\widetilde{\Psi}(X, G(X, s))  
,
\label{eq:fenchel_1}
\end{equation}
\begin{equation}
G(X, s^*) \in [\widetilde{\varPhi}D_{-}(X, z), \widetilde{\varPhi}D_{+}(X, z)]
\Rightarrow
s^* \in [\widetilde{T}D_{-}(X, z), \widetilde{T}D_{+}(X, z)]
\Rightarrow
s^* \in \partial \widetilde{T}(X, z)
\subset
\KKK
,
\label{eq:fenchel_2}
\end{equation}
where we applied the left-inverse $G^{-1}$. The above statements are true for any considered $X \in \RR^n$, with the convergence interval $\KKK$ being independent of $X$'s value.
Additionally, while the above problem is not necessarily convex in $s$ (actually it is easily 
proved to be quasiconvex), it still has a well-defined minima $s^*$. Further, various methods can be applied if needed to solve this nonconvex nonsmooth optimization using $\partial \widetilde{\Psi}$ and various notions of $G$'s subdifferential \citep{Bagirov13jota}.

Substituting $z \equiv \frac{\probi{\usuff}{X}}{\probi{\dsuff}{X}}$, $G(X, s) \equiv \widetilde{M}^{\usuff}(X, s)$ and $\widetilde{\Psi}(X, G(X, s)) \equiv \widetilde{M}^{\dsuff}(X, s)$ we get:
\begin{multline}
s^* = \argmin_{s \in \KKK} 
- \frac{\probi{\usuff}{X}}{\probi{\dsuff}{X}} \cdot \widetilde{M}^{\usuff}(X, s)
+
\widetilde{M}^{\dsuff}(X, s)
=\\
=
\arginf_{s \in \KKK} 
- 
\probi{\usuff}{X}
\cdot \widetilde{M}^{\usuff}(X, s)
+
\probi{\dsuff}{X}
\cdot
\widetilde{M}^{\dsuff}(X, s)
,
\label{eq:fenchel_3}
\end{multline}
where we replaced minimum with infimum for the latter use.
Optima $s^*$ is equal to $T\left[X, \frac{\probi{\usuff}{X}}{\probi{\dsuff}{X}}\right]$ if $T$ is continuous at $z = \frac{\probi{\usuff}{X}}{\probi{\dsuff}{X}}$, or must be within $[\widetilde{T}D_{-}(X, \frac{\probi{\usuff}{X}}{\probi{\dsuff}{X}}), \widetilde{T}D_{+}(X, \frac{\probi{\usuff}{X}}{\probi{\dsuff}{X}})]$ otherwise.

Next, denote $\FF$ to be a function space with measurable functions $f: \spp^{\udcapsuff} \rightarrow \KKK$ w.r.t. a base measure $dX$. Then, the optimization problem $\inf_{f \in \FF} L^{\udcapsuff}(f)$ solves the problem in Eq.~(\ref{eq:fenchel_3}) for $f(X)$ at each $X \in \spp^{\udcapsuff}$: 
\begin{equation}
\inf_{f \in \FF}
L^{\udcapsuff}(f) 
=
\int_{\spp^{\udcapsuff}}
\inf_{f(X)}
\left[
-
\probi{\usuff}{X}
\cdot
\widetilde{M}^{\usuff}
\left[
X,
f(X)
\right]
+
\probi{\dsuff}{X}
\cdot
\widetilde{M}^{\dsuff}
\left[
X,
f(X)
\right]
\right]
dX
,
\label{eq:fenchel_4}
\end{equation}
where we can move infimum into the integral since $f$ is measurable, the argument used also in works \citep{Nguyen10tit, Nowozin16nips}. Thus, the solution $f^* = \arginf_{f \in \FF} L^{\udcapsuff}(f)$ must satisfy $\forall X \in \spp^{\udcapsuff}: f^*(X) \in \partial \widetilde{T}\left[X, \frac{\probi{\usuff}{X}}{\probi{\dsuff}{X}}\right]$, given that $f^* \in \FF$.

Finally, we summarize all conditions that are sufficient for the above conclusion: function $T(X, z)$ is increasing and right-continuous w.r.t. $z \in \RRpos$, the convergence interval $\KKK$ is $X$-invariant, $G(X, s)$ (also aliased as $\widetilde{M}^{\usuff}
(
X,
s
)$) is right-continuous and strictly increasing w.r.t. $s \in \KKK$, and $\FF$'s range is $\KKK$. Given $T$, $\KKK$, $G$ and $\FF$ have these properties, the entire above derivation follows.

\subsubsection{PSO Differentiable Case}
\label{sec:FuncMutSupp_diff}

More "nice" results can be obtained if we assume additionally $T(X, z): \RR^n \times \RRpos \rightarrow \KKK$ to be strictly increasing and continuous w.r.t. $z \in \RRpos$, and $G(X, s): \RR^n \times \KKK \rightarrow \bar{\KKK}$ to be differentiable at $s \in \KKK$. This is the main setting on which our work is focused.

In such case $T$ is invertible. Denote its inverse as $R(X, s): \RR^n \times \KKK \rightarrow \RRpos$. The $R$ is strictly increasing and continuous w.r.t. $s \in \KKK$, and satisfies $\forall z \in \RRpos: R\left[X, T\left[X, z\right]\right] = z$ and $\forall s \in \KKK: T\left[X, R\left[X, s\right]\right] = s$.

Further, the derivative of $\widetilde{\varPhi}(X, z)$ is $\varPhi(X, z) = G(X, T(X, z))$. The $\varPhi$ is strictly increasing and continuous w.r.t. $z \in \RRpos$, and thus $\widetilde{\varPhi}$ is strictly convex and differentiable on $\RRpos$. 

By LF transform's rules the derivative $\Psi(X, z)$ of $\widetilde{\Psi}$ is an inverse of $\widetilde{\varPhi}$'s derivative $\varPhi(X, z)$, and thus it can be expressed as $\Psi(X, z) = R(X, G^{-1}(X, z))$. This leads to $\frac{\partial \widetilde{\Psi}(X, G(X, s))}{\partial s} = R(X, s) \cdot G'(X, s)$ where $G'(X, s) \triangleq \frac{\partial G(X, s)}{\partial s}$ is the derivative of $G$.

From above we can conclude that $\widetilde{M}^{\usuff}(X, s) \equiv G(X, s)$ and $\widetilde{M}^{\dsuff}(X, s) \equiv \widetilde{\Psi}(X, G(X, s))$ are both differentiable at $s \in \KKK$, with derivatives $M^{\usuff}(X, s) = G'(X, s)$ and $M^{\dsuff}(X, s) = R(X, s) \cdot G'(X, s)$ satisfying $\frac{M^{\dsuff}(X, s)}{M^{\usuff}(X, s)} = R(X, s)$. Functions $M^{\usuff}$ and $M^{\dsuff}$ can be considered as \ms of physical forces, as explained in Section \ref{sec:Inttt}. Also, $M^{\usuff}(X, s) > 0$ for any $s \in \KKK$ due to properties of $G$. Observe that we likewise have $R(X, s) > 0$ for any $s \in \KKK$ since $R(X, s) \in \RRpos$. This leads to $M^{\dsuff}(X, s) > 0$ at any $s \in \KKK$, which implies $\widetilde{M}^{\dsuff}(X, s)$ to be strictly increasing at $s \in \KKK$ (similarly to $\widetilde{M}^{\usuff}$). Moreover, additionally taken assumptions will enforce the solution $f^* = \arginf_{f \in \FF} L^{\udcapsuff}(f)$ to satisfy $\forall X \in \spp^{\udcapsuff}: f^*(X) = T\left[X, \frac{\probi{\usuff}{X}}{\probi{\dsuff}{X}}\right]$.

Above we derived a new objective function $L^{\udcapsuff}(f)$. Given that terms $\{ T, \KKK, G, \FF \}$ satisfy the declared above conditions, minima $f^*$ of $L^{\udcapsuff}$ will be $T\left[X, \frac{\probi{\usuff}{X}}{\probi{\dsuff}{X}}\right]$. Properties of $\{ \widetilde{M}^{\usuff}, \widetilde{M}^{\dsuff}, M^{\usuff}, M^{\dsuff}\}$ follow from aforementioned conditions. This is summarized by below theorem, where instead of $G$ we enforce the corresponding requirements over \pair. See also Figure \ref{fig:TR_examples} for an illustrative example.

\begin{figure}
	\centering
	
	\begin{tabular}{c}
		
		\subfloat{ \includegraphics[width=0.7\textwidth]{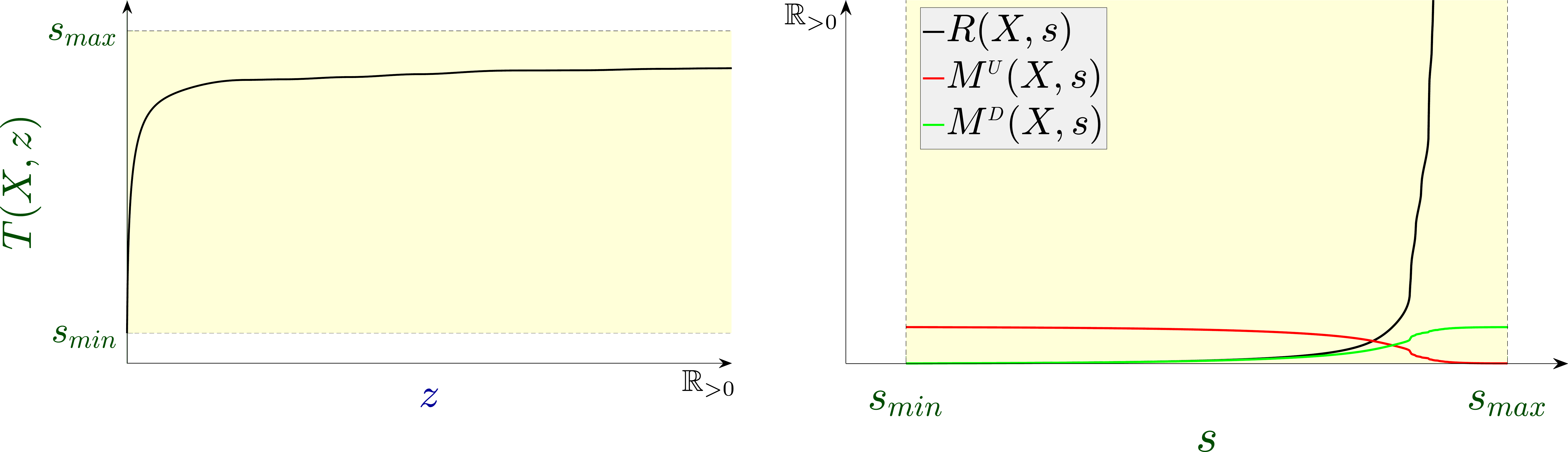}}

	\end{tabular}
	
	\protect
	\caption[Summary of requirements over PSO mappings.]{Summary of requirements over mappings $T(X, z)$, $R(X, s)$, $M^{\usuff}(X, s)$ and $M^{\dsuff}(X, s)$. For any $X \in \RR^{n}$, $T$ must be strictly increasing and continuous on $z \in \RRpos$, with its image $\KKK = (s_{min}, s_{max})$ marked by yellow. The inverse $R$ of $T$ is likewise strictly increasing and continuous function from $\KKK$ to $\RRpos$. Further, $M^{\usuff}$ and $M^{\dsuff}$ are any two functions that are positive and continuous on $\KKK$ with $\frac{M^{\dsuff}}{M^{\usuff}} = R$.
	}
	\label{fig:TR_examples}
\end{figure}

\begin{theorem}[Convergence-Focused]
\label{thrm:PSO_func_conv} 
Consider mappings $T(X, z)$, $M^{\usuff}(X, s)$ and $M^{\dsuff}(X, s)$. Assume:
	\begin{enumerate}
		\item $T(X, z)$ is strictly increasing and continuous w.r.t. $z \in \RRpos$, with its inverse denoted by $R(X, s)$.
		\item The convergence interval $\KKK \triangleq \{ T(X, z) | z \in \RRpos \}$ (image of $\RRpos$ under $T$) is $X$-invariant.
		\item $M^{\usuff}(X, s)$ and $M^{\dsuff}(X, s)$ are continuous and positive at $s \in \KKK$, satisfying $\frac{M^{\dsuff}(X, s)}{M^{\usuff}(X, s)} = R(X, s)$.
		\item Range of $\FF$ is $\KKK$.
	\end{enumerate}
Denote $\widetilde{M}^{\usuff}(X, s)$ and $\widetilde{M}^{\dsuff}(X, s)$ to be antiderivatives of $M^{\usuff}(X, s)$ and $M^{\dsuff}(X, s)$ at $s \in \KKK$, and construct the corresponding functional $L^{\udcapsuff}$.
Then, the minima $f^* = \arginf_{f \in \FF} L^{\udcapsuff}(f)$ will satisfy $\forall X \in \spp^{\udcapsuff}: f^*(X) = T\left[X, \frac{\probi{\usuff}{X}}{\probi{\dsuff}{X}}\right]$.
\end{theorem}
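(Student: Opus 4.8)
The plan is to reduce the functional minimization $\inf_{f\in\FF}L^{\udcapsuff}(f)$ to a family of independent scalar problems, one per $X$, and then dispatch each scalar problem by elementary calculus, invoking the Legendre--Fenchel machinery of Section~\ref{sec:FuncMutSupp_non_diff} only for bookkeeping. First I would record that $G(X,s)\triangleq\widetilde{M}^{\usuff}(X,s)$ meets every hypothesis of the non-differentiable derivation: by Assumption~3, $G'=M^{\usuff}>0$ on $\KKK$, so $\widetilde{M}^{\usuff}$ is $C^1$ and strictly increasing, and $M^{\dsuff}=R\cdot M^{\usuff}>0$ makes $\widetilde{M}^{\dsuff}$ strictly increasing too. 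Since $\FF$ consists of measurable functions with range $\KKK$ and the integrand of $L^{\udcapsuff}$ is measurable in $X$, the infimum can be pushed inside the integral as in Eq.~(\ref{eq:fenchel_4}) (the interchange used in \citep{Nguyen10tit, Nowozin16nips}), so it suffices to minimize, for each fixed $X\in\spp^{\udcapsuff}$,
\[
\ell_X(s)\triangleq -\probi{\usuff}{X}\cdot\widetilde{M}^{\usuff}[X,s]+\probi{\dsuff}{X}\cdot\widetilde{M}^{\dsuff}[X,s]
\]
over $s\in\KKK$, and to show its minimizer is $T[X,\probi{\usuff}{X}/\probi{\dsuff}{X}]$.

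Next I would differentiate. The map $\ell_X$ is $C^1$ on $\KKK$ with
\[
\ell_X'(s)=-\probi{\usuff}{X}M^{\usuff}(X,s)+\probi{\dsuff}{X}M^{\dsuff}(X,s)=\probi{\dsuff}{X}\,M^{\usuff}(X,s)\Bigl(R(X,s)-\tfrac{\probi{\usuff}{X}}{\probi{\dsuff}{X}}\Bigr).
\]
By Assumptions~1--2, $T(X,\cdot)\colon\RRpos\to\KKK$ is a continuous strictly increasing bijection, hence so is its inverse $R(X,\cdot)\colon\KKK\to\RRpos$; since both densities are positive on $\spp^{\udcapsuff}$, the ratio $\probi{\usuff}{X}/\probi{\dsuff}{X}$ lies in $\RRpos$, so there is a unique $s^*\in\KKK$ with $R(X,s^*)=\probi{\usuff}{X}/\probi{\dsuff}{X}$, namely $s^*=T[X,\probi{\usuff}{X}/\probi{\dsuff}{X}]$. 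Because $M^{\usuff}>0$ and $R(X,\cdot)$ is strictly increasing, $\ell_X'$ is strictly negative for $s<s^*$ and strictly positive for $s>s^*$, so $\ell_X$ is strictly quasiconvex with $s^*$ its unique global minimizer on $\KKK$. Assembling the pointwise minimizers into a single function — legitimate because by Assumption~4 and the hypotheses on $T$ that function lies in $\FF$ — gives $f^*(X)=T[X,\probi{\usuff}{X}/\probi{\dsuff}{X}]$ for all $X\in\spp^{\udcapsuff}$, i.e. the PSO balance state Eq.~(\ref{eq:BalPoint}) solved for $f$.

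Equivalently, and this is what the preceding text sets up, one can skip the explicit differentiation: set $z\equiv\probi{\usuff}{X}/\probi{\dsuff}{X}$, $G\equiv\widetilde{M}^{\usuff}$, and check that $\widetilde{M}^{\dsuff}(X,\cdot)$ coincides, up to an additive constant, with $\widetilde{\Psi}(X,G(X,\cdot))$ — indeed $\frac{d}{ds}\widetilde{\Psi}(X,G(X,s))=\Psi(X,G(X,s))\,G'(X,s)=R(X,s)\,M^{\usuff}(X,s)=M^{\dsuff}(X,s)$, using $\Psi=\varPhi^{-1}$ from LF duality and $\varPhi=G\circ T$ — so Eqs.~(\ref{eq:fenchel_1})--(\ref{eq:fenchel_3}) apply verbatim, and strict monotonicity and continuity of $T$ collapse the inclusion $s^*\in\partial\widetilde{T}(X,z)$ to the equality $s^*=T(X,z)$.

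The conceptual content is light; the places a fully rigorous write-up must be careful — the "main obstacle" — are the measure-theoretic points rather than the optimization. Specifically: (i) justifying the $\inf$/$\int$ interchange, which needs $X\mapsto T[X,\probi{\usuff}{X}/\probi{\dsuff}{X}]$ to be a measurable element of $\FF$ (continuity of $T$ in $z$ plus measurability of the density ratio) and $\FF$ rich enough to realize it; (ii) confirming the antiderivatives $\widetilde{M}^{\usuff},\widetilde{M}^{\dsuff}$ are well defined and that the arbitrary integration constant contributes only a finite constant $\mathrm{const}\cdot\int_{\spp^{\udcapsuff}}\bigl(\probi{\dsuff}{X}-\probi{\usuff}{X}\bigr)dX$ to $L^{\udcapsuff}$, hence does not move the argmin; and (iii) handling that $\KKK$ may be a proper open interval, so one must verify the unique stationary point $s^*$ really is the minimizer and the infimum does not escape to $\partial\KKK$ — which is exactly what the sign analysis of $\ell_X'$ provides.
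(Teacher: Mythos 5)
Your proposal is correct, and its primary argument takes a genuinely more elementary route than the paper's. The paper establishes this theorem through the Legendre--Fenchel scaffolding of Sections \ref{sec:FuncMutSupp_non_diff}--\ref{sec:FuncMutSupp_diff}: it realizes $T$ as the derivative of a convex function $\widetilde{T}$, builds $\varPhi = G\circ T$ and its conjugate $\widetilde{\Psi}$, invokes the Fenchel--Young inequality to locate the pointwise optimum in $\partial\widetilde{\varPhi}$, and only then specializes to the differentiable case where the subdifferential inclusion collapses to $s^*=T(X,z)$. You instead reduce to the same pointwise problem (via the identical $\inf$/$\int$ interchange) and dispatch it by direct sign analysis of $\ell_X'(s)=\probi{\dsuff}{X}\,M^{\usuff}(X,s)\bigl(R(X,s)-z\bigr)$, using positivity of $M^{\usuff}$ and strict monotonicity plus surjectivity of $R(X,\cdot):\KKK\to\RRpos$ to get a unique sign change at $s^*=T[X,z]$. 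This is cleaner for the differentiable hypotheses of Theorem \ref{thrm:PSO_func_conv}, makes uniqueness and the non-escape of the infimum to $\partial\KKK$ transparent, and avoids the conjugacy machinery entirely; what the paper's LF route buys in exchange is uniformity with the non-differentiable case (where $T$ has jumps and the optimum is only pinned to a subdifferential interval) and the divergence interpretations of Section \ref{sec:Bregman_PSO}. Your secondary "equivalently" paragraph correctly identifies the dictionary between the two derivations, and your list of residual measure-theoretic cautions (measurability of the assembled minimizer, the harmless additive constant in the antiderivatives, and the open-interval issue) matches the gaps the paper itself leaves implicit.
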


Continuity of \pair in condition 3 is sufficient for existence of antiderivatives \widepair. It is a little too strong criteria for integrability, yet it is more convenient to verify in practice. This leads to differentiability of $\widetilde{M}^{\usuff}$, which in turn implies continuity and differentiability of $G$; positiveness of $M^{\usuff}$ in condition 3 implies $G$ to be strictly increasing on $\KKK$. Conditions 2 and 4 restate assumptions over $\KKK$ and $\FF$. Therefore, the sufficient conditions over $\{ T, \KKK, G, \FF \}$ follow from the above list, which leads to the required $f^*$.

Given any required convergence $T$, the above theorem can be applied to propose valid \ms \pair. This basically comes to requiring \mfs to be continuous and positive on $\KKK$, with their ratio being inverse of $T$. Once such pair of functions is obtained, the loss $L^{\udcapsuff}$ with corresponding optima, and more importantly its gradient, can be easily constructed. Observe that knowledge of \widepair is not necessary neither for condition verification nor for the optimization of $L^{\udcapsuff}$.

Further, given any $L^{\udcapsuff}$ with corresponding \pair, its convergence and sufficient conditions can be verified via Theorem \ref{thrm:PSO_func_magn}.

\begin{theorem}[Magnitudes-Focused]
\label{thrm:PSO_func_magn} 
Consider a functional $L^{\udcapsuff}$ with $\widetilde{M}^{\usuff}(X, s)$ and $\widetilde{M}^{\dsuff}(X, s)$ whose derivatives are $M^{\usuff}(X, s)$ and $M^{\dsuff}(X, s)$. Denote $R(X, s)$ to be the ratio $\frac{M^{\dsuff}(X, s)}{M^{\usuff}(X, s)}$, and define a convergence interval as $\KKK \triangleq (s_{min}, s_{max})$. Assume:
	\begin{enumerate}
		\item $R(X, s): \RR^n \times \KKK \rightarrow \RRpos$ is continuous, strictly increasing and bijective w.r.t. domain $s \in \KKK$ and codomain $\RRpos$, for any $X \in \RR^n$.
		\item $M^{\usuff}(X, s)$ and $M^{\dsuff}(X, s)$ are continuous and positive at $s \in \KKK$.
		\item Range of $\FF$ is $\KKK$.
	\end{enumerate}
	Then, the minima $f^* = \arginf_{f \in \FF} L^{\udcapsuff}(f)$ will satisfy $\forall X \in \spp^{\udcapsuff}: f^*(X) = T\left[X, \frac{\probi{\usuff}{X}}{\probi{\dsuff}{X}}\right]$, where $T \triangleq R^{-1}$.
\end{theorem}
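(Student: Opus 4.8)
The plan is to deduce Theorem~\ref{thrm:PSO_func_magn} from the already-established Theorem~\ref{thrm:PSO_func_conv} by running the reduction in the opposite direction: rather than starting from a prescribed convergence $T$ and manufacturing admissible magnitudes, here we are handed the magnitudes, we build the corresponding $T$ out of $R$, and then verify that this $T$ satisfies every hypothesis of the convergence-focused statement.

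First I would set $T(X, z) \triangleq R^{-1}(X, z)$, the inverse of $R$ in its second argument. By assumption~1, for each fixed $X$ the map $R(X, \cdot): \KKK \to \RRpos$ is continuous, strictly increasing and bijective; the standard fact that a monotone continuous bijection between intervals has a monotone continuous bijective inverse then gives that $T(X, \cdot): \RRpos \to \KKK$ is continuous, strictly increasing and bijective, so condition~1 of Theorem~\ref{thrm:PSO_func_conv} holds, with $R$ playing the role of the inverse of $T$. Moreover the image of $\RRpos$ under $T(X, \cdot)$ is exactly $\KKK = (s_{min}, s_{max})$, and since $\KKK$ is a single fixed interval (this is built into the bijectivity hypothesis of assumption~1, which is stated for the same $\KKK$ and all $X \in \RR^n$), the convergence interval is $X$-invariant, giving condition~2. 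Condition~3 of Theorem~\ref{thrm:PSO_func_conv} --- continuity and positivity of $M^{\usuff}, M^{\dsuff}$ on $\KKK$ together with $\frac{M^{\dsuff}(X,s)}{M^{\usuff}(X,s)} = R(X,s) = T^{-1}$ --- is precisely assumption~2 plus the definition of $R$, and condition~4 (range of $\FF$ equal to $\KKK$) is assumption~3 verbatim.

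Next I would note that continuity of $M^{\usuff}$ and $M^{\dsuff}$ on $\KKK$ guarantees that antiderivatives $\widetilde{M}^{\usuff}, \widetilde{M}^{\dsuff}$ exist on $\KKK$ (unique up to an additive constant, which is immaterial since it only shifts $L^{\udcapsuff}$ by a constant and does not move the minimizer), so the functional $L^{\udcapsuff}$ in Eq.~(\ref{eq:PSO_loss_f_separateee}) is well defined. With all four hypotheses of Theorem~\ref{thrm:PSO_func_conv} in force, I would simply invoke it to conclude that $f^* = \arginf_{f \in \FF} L^{\udcapsuff}(f)$ satisfies $\forall X \in \spp^{\udcapsuff}: f^*(X) = T\left[X, \frac{\probi{\usuff}{X}}{\probi{\dsuff}{X}}\right]$ with $T = R^{-1}$, which is exactly the assertion of the theorem.

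The only genuinely delicate point --- and the one I expect to be the main obstacle --- is making the two parametrizations align exactly. Theorem~\ref{thrm:PSO_func_conv} is phrased via the auxiliary map $G$ (aliased as $\widetilde{M}^{\usuff}$) and the derived $\widetilde{\Psi}(X, G(X, \cdot))$ (aliased as $\widetilde{M}^{\dsuff}$), whereas here $M^{\usuff}, M^{\dsuff}$ are given directly. One must check that the antiderivatives of the given magnitudes coincide (up to additive constants) with the $G$, $\widetilde{\Psi}(X, G(X, \cdot))$ produced inside the proof of Theorem~\ref{thrm:PSO_func_conv} when that proof is run with $T = R^{-1}$ and $G \equiv \widetilde{M}^{\usuff}$; concretely this reduces to verifying $\frac{\partial}{\partial s}\widetilde{\Psi}(X, G(X,s)) = R(X,s)\cdot G'(X,s) = M^{\dsuff}(X,s)$, which follows from the Legendre--Fenchel inversion identity $\Psi(X, \cdot) = R(X, G^{-1}(X, \cdot))$ established in the differentiable case of Section~\ref{sec:FuncMutSupp_diff}. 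Everything else is bookkeeping.
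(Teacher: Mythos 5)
Your proposal is correct and follows essentially the same route as the paper: the paper likewise observes that the condition sets of Theorem~\ref{thrm:PSO_func_conv} and Theorem~\ref{thrm:PSO_func_magn} are identical, sets $T \triangleq R^{-1}$, and lets the Legendre--Fenchel derivation of Sections~\ref{sec:FuncMutSupp_non_diff}--\ref{sec:FuncMutSupp_diff} (with $G \equiv \widetilde{M}^{\usuff}$ and $M^{\dsuff} = R \cdot G'$) do the work. Your explicit check that the given antiderivatives agree with the constructed $G$ and $\widetilde{\Psi}(X, G(X,\cdot))$ up to an additive constant is exactly the bookkeeping the paper leaves implicit.
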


Condition sets in Theorem \ref{thrm:PSO_func_conv} and Theorem \ref{thrm:PSO_func_magn} are identical. Condition 1 of Theorem \ref{thrm:PSO_func_magn} is required for $T(X, z)$ to be strictly increasing, continuous and well-defined for each $z \in \RRpos$.
$\KKK$ can be any interval as long as conditions of Theorem \ref{thrm:PSO_func_magn} are satisfied, yet typically it is a preimage $\{ s \in \RR | R(X, s) \in \RRpos \}$ of $\RRpos$ under $R$. See examples in Section \ref{sec:PSOInstDeriveConv}. Likewise, observe again that knowledge of PSO \ps \widepair is not required.

Below we will use Theorem \ref{thrm:PSO_func_conv} to derive valid \pair for any considered $T$, and Theorem \ref{thrm:PSO_func_magn} to derive $T$ for any considered \pair.
Further, part 1 of Theorem \ref{thrm:PSO} follows trivially from the above statements.
Moreover, due to symmetry between \up and \down terms we can also have $T$ and $R$ to be strictly decreasing functions given $M^{\usuff}(X, s)$ and $M^{\dsuff}(X, s)$ are negative at $s \in \KKK$. Furthermore, many objective functions satisfy the above theorems and thus can be recovered via PSO framework. Estimation methods for which the sufficient conditions do not hold include a hinge loss from the binary classification domain as also other threshold losses \citep{Nguyen09aos}. Yet, these losses can be shown to be included within PSO non-differentiable case in Section \ref{sec:FuncMutSupp_non_diff}, whose analysis we leave for a future work.

\subsubsection{Unlimited Range Conditions}
\label{sec:FuncMutSupp_unlimit_range}

Criteria 4 of Theorem \ref{thrm:PSO_func_conv} and 3 of Theorem \ref{thrm:PSO_func_magn} can be replaced by additional conditions over \pair. These derived below conditions are 
very often satisfied, which allows us to not restrict $\FF$'s range in practice.

Recalling that $\KKK$ is an open interval $(s_{min}, s_{max})$, consider following sets $\KKK^{-}  = \{ s | s \leq s_{min} \}$ and $\KKK^{+}  = \{ s | s \geq s_{max} \}$. Observe that if $s_{min} = - \infty$ then $\KKK^{-}$ is an empty set, and if $s_{max} = \infty$ - $\KKK^{+}$ is empty. Further, $\KKK$, $\KKK^{-}$ and $\KKK^{+}$ are disjoint sets.

To reduce limitation over $\FF$'s range, it is enough to demand the inner optimization problem in Eq.~(\ref{eq:fenchel_3}) to be strictly decreasing on $\KKK^{-}$ and strictly increasing on $\KKK^{+}$. To this purpose, first we require \pair to be well-defined on the entire real line $s \in \RR$.
This can be achieved by restricting $\widetilde{M}^{\usuff}(X, s)$ and $\widetilde{M}^{\dsuff}(X, s)$ to be differentiable on $s \in \RR$ - it is sufficient for $M^{\usuff}(X, s)$ and $M^{\dsuff}(X, s)$ to be well-defined on $\RR$. Alternatively, we may and will require $M^{\usuff}(X, s)$ and $M^{\dsuff}(X, s)$ to be continuous at any $s \in \RR$. This slightly stronger condition will ensure that \pair are well-defined and that the antiderivatives \widepair exist on $\RR$. Moreover, such condition is imposed over \pair, allowing to neglect properties of \widepair.

Further, in case $\KKK^{+}$ is not empty, we require 
$- z \cdot \widetilde{M}^{\usuff}(X, s)
+
\widetilde{M}^{\dsuff}(X, s)$ to be strictly increasing for any $s \in \KKK^{+}$ and any $z \in \RRpos$. Given that $\widetilde{M}^{\usuff}(X, s)$ and $\widetilde{M}^{\dsuff}(X, s)$ are differentiable at $s \in \KKK \cup \KKK^{+}$ (which also implies their continuity at $s_{max}$), this requirement holds iff $\forall s \in \KKK^{+}, z \in \RRpos: M^{\dsuff}(X, s) > z \cdot M^{\usuff}(X, s)$. Verifying all possible cases,
the above criteria is satisfied iff: $\Big[ \forall s \in \KKK^{+}: [ M^{\usuff}(X, s) = 0 \vee M^{\dsuff}(X, s) > 0 ] \wedge [ M^{\usuff}(X, s) < 0 \vee M^{\dsuff}(X, s) \geq 0 ] \Big]$. This can be compactly written as $\forall s \in \KKK^{+}: M^{\usuff}(X, s) < M^{\dsuff}(X, s) \vee M^{\usuff}(X, s) \cdot M^{\dsuff}(X, s) \leq 0$, where the second condition implies that \ms $M^{\usuff}$ and $M^{\dsuff}$ can not have the same sign within $\KKK^{+}$.

Similar derivation for $\KKK^{-}$ will lead to demand the inner problem to be strictly decreasing for any $s \in \KKK^{-}$ and any $z \in \RRpos$. In turn, this leads to criteria $\forall s \in \KKK^{-}: M^{\usuff}(X, s) > M^{\dsuff}(X, s) \vee M^{\usuff}(X, s) \cdot M^{\dsuff}(X, s) \leq 0$. Below we summarize conditions under which no restrictions are required over $\FF$'s range.

\begin{theorem}[Unconstrained Function Range]
\label{thrm:PSO_func_magn_unconstr}
Consider the convergence interval $\KKK = (s_{min}, s_{max})$.
Assume:
	\begin{enumerate}
		\item $M^{\usuff}(X, s)$ and $M^{\dsuff}(X, s)$ are continuous on the entire real line $s \in \RR$, and do not have the same sign outside of $\KKK$.
		\item For any $s \leq s_{min}$: $M^{\usuff}(X, s) > M^{\dsuff}(X, s)$.
		\item For any $s \geq s_{max}$: $M^{\usuff}(X, s) < M^{\dsuff}(X, s)$.
	\end{enumerate}
Then, the condition 4 of Theorem \ref{thrm:PSO_func_conv} and the condition 3 of Theorem \ref{thrm:PSO_func_magn} can be removed.
\end{theorem}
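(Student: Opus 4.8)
The plan is to formalize the computation sketched in the paragraphs just preceding the statement: the three added hypotheses force the pointwise inner objective of Eq.~(\ref{eq:fenchel_3}) to be strictly monotone just outside $\KKK$, so enlarging the admissible range of $f$ from $\KKK$ to all of $\RR$ cannot move the minimizer. Fix $X \in \spp^{\udcapsuff}$, write $z \triangleq \frac{\probi{\usuff}{X}}{\probi{\dsuff}{X}} \in \RRpos$, and set $\Lambda(s) \triangleq - z \cdot \widetilde{M}^{\usuff}(X, s) + \widetilde{M}^{\dsuff}(X, s)$; up to the positive factor $\probi{\dsuff}{X}$ this is the integrand of $L^{\udcapsuff}$ at $X$, so by the measurability / interchange argument of Eq.~(\ref{eq:fenchel_4}) it is enough to prove that $s^* \triangleq T[X, z]$ minimizes $\Lambda$ over all of $\RR$, not merely over $\KKK$. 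Hypothesis~1 (continuity of $M^{\usuff}, M^{\dsuff}$ on $\RR$) guarantees that $\widetilde{M}^{\usuff}, \widetilde{M}^{\dsuff}$, hence $\Lambda$, extend to $C^{1}$ functions on $\RR$, with $\Lambda'(s) = - z \cdot M^{\usuff}(X, s) + M^{\dsuff}(X, s)$.

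Next I would determine the sign of $\Lambda'$ outside $\KKK$ by a short case split on the sign of $M^{\usuff}(X, s)$. On $\KKK^{+} = \{ s \ge s_{max} \}$, hypothesis~1 (no common sign outside $\KKK$) reads $M^{\usuff}(X,s) \cdot M^{\dsuff}(X,s) \le 0$ and hypothesis~3 reads $M^{\usuff}(X,s) < M^{\dsuff}(X,s)$. If $M^{\usuff}(X,s) > 0$ then $M^{\dsuff}(X,s) \le 0 < M^{\usuff}(X,s)$, contradicting hypothesis~3, so that case is vacuous; if $M^{\usuff}(X,s) = 0$ then $\Lambda'(s) = M^{\dsuff}(X,s) > 0$ by hypothesis~3; if $M^{\usuff}(X,s) < 0$ then $M^{\dsuff}(X,s) \ge 0$ by hypothesis~1, hence $\Lambda'(s) = - z \cdot M^{\usuff}(X,s) + M^{\dsuff}(X,s) > 0$ since $z > 0$. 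Thus $\Lambda$ is strictly increasing on $\KKK^{+}$, and the symmetric argument using hypotheses~1 and 2 shows $\Lambda$ is strictly decreasing on $\KKK^{-} = \{ s \le s_{min} \}$. (If $s_{min} = -\infty$ then $\KKK^{-} = \emptyset$ and there is nothing to check on that side, and likewise if $s_{max} = \infty$; if both, then $\KKK = \RR$ and the conclusion is vacuous.)

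To finish I would glue the inside and outside behaviour. Inside $\KKK$ the remaining (unchanged) hypotheses of Theorem~\ref{thrm:PSO_func_magn} give $\Lambda'(s) = M^{\usuff}(X,s)\bigl( R(X,s) - z \bigr)$ with $M^{\usuff}(X,\cdot) > 0$ and $R(X,\cdot)$ strictly increasing through the value $z$ at $s^{*} = R^{-1}(X,z) = T[X,z] \in \KKK$, so $\Lambda$ is strictly decreasing on $(s_{min}, s^{*}]$ and strictly increasing on $[s^{*}, s_{max})$. Combining this with the external strict monotonicity and the continuity of $\Lambda$ at the (possibly finite) endpoints $s_{min}, s_{max}$ yields $\Lambda(s) \ge \Lambda(s^{*})$ for every $s \in \RR$, with the infimum attained only at $s^{*} \in \KKK$. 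Integrating the pointwise inequality $\probi{\dsuff}{X}\,\Lambda\bigl( X, f(X) \bigr) \ge \probi{\dsuff}{X}\,\Lambda\bigl( X, T[X,z(X)] \bigr)$ over $\spp^{\udcapsuff}$ gives $L^{\udcapsuff}(f) \ge L^{\udcapsuff}(f^{*})$ for every measurable $f : \spp^{\udcapsuff} \to \RR$, where $f^{*}(X) = T\bigl[ X, \tfrac{\probi{\usuff}{X}}{\probi{\dsuff}{X}} \bigr]$; since $f^{*}$ takes values in $\KKK$, it already lies in $\FF$, so it is a global minimizer of $L^{\udcapsuff}$ over the unrestricted $\FF$, which is precisely the removal of condition~4 of Theorem~\ref{thrm:PSO_func_conv} and condition~3 of Theorem~\ref{thrm:PSO_func_magn}.

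I expect the only delicate point to be the bookkeeping at the endpoints $s_{min}$ and $s_{max}$ — reconciling the strict monotonicity coming from inside $\KKK$ with that coming from outside through the continuity of $\Lambda$, and disposing of the degenerate infinite-endpoint cases — rather than any conceptual difficulty; the sign case analysis for $\Lambda'$ is routine once "no common sign outside $\KKK$" is read as $M^{\usuff} M^{\dsuff} \le 0$ there.
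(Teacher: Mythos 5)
Your proposal is correct and follows essentially the same route as the paper: the paper likewise reduces the claim to showing that the pointwise inner objective $-z\,\widetilde{M}^{\usuff}+\widetilde{M}^{\dsuff}$ is strictly increasing on $\KKK^{+}$ and strictly decreasing on $\KKK^{-}$ for every $z\in\RRpos$, and verifies by the same sign case-analysis that this is exactly what the three hypotheses deliver. Your treatment is somewhat more explicit about gluing the interior and exterior monotonicity at the (possibly infinite) endpoints, but the underlying argument is the same.
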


Intuitively, conditions for $\KKK^{+}$ (and similarly for $\KKK^{-}$) can be interpreted as requiring \up force $F_{\theta}^{\usuff}(X)$ to be weaker than \down force $F_{\theta}^{\dsuff}(X)$ for any ratio $\frac{\probi{\usuff}{X}}{\probi{\dsuff}{X}} > 0$, once the surface height $f_{\theta}(X)$ got too high and exceeded the convergence interval $\KKK$. In Section \ref{sec:PSOInst} we will see that almost all PSO estimators satisfy Theorem \ref{thrm:PSO_func_magn_unconstr}.

\subsection{Disjoint Support Optima}
\label{sec:FuncDisjointSupp}

\subsubsection{Area $\spp^{\udbacksuff}$}
\label{sec:FuncDisjointSupp_ud}

Consider the loss term $L^{\udbacksuff}(f)$ corresponding to the area $\spp^{\udbacksuff}$,
where $\probi{\usuff}{X} > 0$ and $\probi{\dsuff}{X} = 0$.
We are going to prove the below theorem (identical to part 2 of Theorem \ref{thrm:PSO}).
The motivation behind this theorem is that in many PSO instances $M^{\usuff}$ satisfies one of its conditions. In such case the theorem can be applied to understand the PSO convergence behavior in the region $\spp^{\udbacksuff}$. 
Moreover, this theorem further supports the PSO framework's perspective, where virtual forces are pushing the model surface towards the physical equilibrium.

\begin{theorem}
\label{thrm:PSO_UD_conv} 
Define an arbitrary space $\FF$ of functions from $\spp^{\udbacksuff}$ to $\RR$, with $h \in \FF$ being its element.
Then, depending on properties of a function $M^{\usuff}$, $f^* = \arginf_{f \in \FF} L^{\udbacksuff}(f)$ must satisfy:
	\begin{enumerate}
		\item If $\forall s \in \RR: M^{\usuff}(X, s) > 0$, then $f^*(X) = \infty$.
		\item If $\forall s \in \RR: M^{\usuff}(X, s) < 0$, then $f^*(X) = - \infty$.
		\item If $\forall s \in \RR: M^{\usuff}(X, s) \equiv 0$, then $f^*(X)$ can be arbitrary.
		\item If $\forall s \in \RR:$ 
		\begin{equation}
		M^{\usuff}(X, s) 
		\rightarrow
		\begin{cases}
		= 0,& s = h(X)\\
		> 0,              & s < h(X)\\
		< 0,              & s > h(X)
		\end{cases}
		\end{equation}
		then $f^*(X) = h(X)$.
		\item Otherwise, additional analysis is required.
	\end{enumerate}
\end{theorem}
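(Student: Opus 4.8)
The plan is to exploit that $L^{\udbacksuff}(f)$ is a \emph{separable} functional: its integrand depends on $f$ only through the pointwise value $f(X)$, so the functional minimization collapses to an independent scalar problem at each $X \in \spp^{\udbacksuff}$, after which the minimizer can be read off from the monotonicity of the antiderivative $\widetilde{M}^{\usuff}$. First I would move the infimum inside the integral, exactly as in the derivation of Eq.~(\ref{eq:fenchel_4}): since $\FF$ consists of measurable functions and the integrand is pointwise in $f(X)$,
\begin{equation}
\inf_{f \in \FF} L^{\udbacksuff}(f)
=
\int_{\spp^{\udbacksuff}}
\inf_{f(X)}
\left[
-
\probi{\usuff}{X}
\cdot
\widetilde{M}^{\usuff}\left[X, f(X)\right]
\right]
dX
.
\end{equation}
Because $\probi{\usuff}{X} > 0$ on $\spp^{\udbacksuff}$, the inner problem at a fixed $X$ is equivalent to $\sup_{s \in \RR} \widetilde{M}^{\usuff}(X, s)$, i.e.\ to maximizing the antiderivative of $M^{\usuff}(X, \cdot)$ over the whole real line.

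Next I would carry out the case analysis, using that $M^{\usuff}(X, \cdot) = \partial_s \widetilde{M}^{\usuff}(X, \cdot)$ governs the monotonicity of $\widetilde{M}^{\usuff}(X, \cdot)$. If $M^{\usuff}(X, s) > 0$ for all $s$, then $\widetilde{M}^{\usuff}(X, \cdot)$ is strictly increasing, its supremum is approached only as $s \to +\infty$ and is not attained, so any minimizing sequence has $f(X) \to \infty$; this is the meaning of $f^*(X) = \infty$. The case $M^{\usuff}(X, s) < 0$ is symmetric and yields $f^*(X) = -\infty$. If $M^{\usuff}(X, s) \equiv 0$, then $\widetilde{M}^{\usuff}(X, \cdot)$ is constant, every $s$ is optimal, and $f^*(X)$ is arbitrary. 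In the fourth case the sign pattern forces $\widetilde{M}^{\usuff}(X, \cdot)$ to be strictly increasing on $(-\infty, h(X))$ and strictly decreasing on $(h(X), \infty)$, so it has a unique global maximum at $s = h(X)$; since $h \in \FF$ by hypothesis, the infimum is attained by $f^* \equiv h$, i.e.\ $f^*(X) = h(X)$. Any other sign pattern (e.g.\ several sign changes, or a sign change in the "wrong" direction producing an interior minimum) yields several local maxima whose comparison depends on the actual values of $\widetilde{M}^{\usuff}$, hence falls under "additional analysis required".

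The one genuine subtlety — and the step I would be most careful about — is attainment: in cases 1 and 2 the infimum of $L^{\udbacksuff}$ over real-valued $\FF$ is \emph{not} achieved, so the statement $f^*(X) = \pm\infty$ must be interpreted as the divergence of every minimizing sequence (and, under the GD dynamics of Section~\ref{sec:Inttt}, as the surface over $\spp^{\udbacksuff}$ being pushed without bound because no opposing \down force acts there). This also clarifies the role of the hypothesis "$h \in \FF$": richness of $\FF$ is needed only to the extent that $h$ itself lies in it, which matters solely in case 4. The measurable-selection step requires no new argument beyond the one already invoked for Eq.~(\ref{eq:fenchel_4}), so this theorem is essentially a pointwise corollary of that reduction together with a one-variable monotonicity discussion.
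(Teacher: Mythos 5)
Your proposal is correct and follows essentially the same route as the paper's own proof: reduce $\inf_{f\in\FF}L^{\udbacksuff}(f)$ to the pointwise scalar problem $\arginf_{s\in\RR}-z\cdot\widetilde{M}^{\usuff}(X,s)$ with $z\in\RRpos$ and read off the optimizer from the monotonicity of $\widetilde{M}^{\usuff}(X,\cdot)$ dictated by the sign of its derivative $M^{\usuff}(X,\cdot)$. Your explicit remark that in cases 1 and 2 the infimum is not attained and "$f^*(X)=\pm\infty$" must be read as divergence of minimizing sequences is a point the paper leaves implicit, but it does not change the argument.
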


\begin{proof}
The inner problem solved by $\inf_{f \in \FF} L^{\udbacksuff}(f)$ for each $X \in \spp^{\udbacksuff}$ is:
\begin{equation}
s^* = \arginf_{s \in \RR} 
-
z
\cdot
\widetilde{M}^{\usuff}
(
X,
s
)
,
\label{eq:disjoint_1}
\end{equation}
where $z \in \RRpos$. Given $\widetilde{M}^{\usuff}
(
X,
s
)
$ is differentiable, a derivative of the inner cost is $-
z
\cdot
M^{\usuff}
(
X,
s
)
$. If the inner cost is a strictly decreasing function of $s$, $\forall s \in \RR: M^{\usuff}(X, s) > 0$, then the infimum is $\inf_{s \in \RR} 
-
z
\cdot
\widetilde{M}^{\usuff}
(
X,
s
) = - \infty$ and $s^* = \infty$ - the inner cost will be lower for the bigger value of $s$. This leads to the entry 1 of the theorem. Similarly, if the cost is a strictly increasing function, $\forall s \in \RR: M^{\usuff}(X, s) < 0$, then the infimum is achieved at $s^* = - \infty$, yielding the entry 2.

If $\forall s \in \RR: M^{\usuff}(X, s) \equiv 0$, then the inner cost is constant. In such case the infimum is obtained at any $s \in \RR$, hence the corresponding $f^*(X)$ can be arbitrary (the entry 3).

Further, denote $s' \equiv h(X)$. Conditions of the entry 4 imply that the inner cost in Eq.~(\ref{eq:disjoint_1}) is strictly decreasing at $s < s'$ and strictly increasing at $s > s'$. Since it is also continuous (consequence of being differentiable), its infimum must be at $s^* = s'$. Thus, we have the entry 4: $f^*(X) = s' = h(X)$.

Otherwise, if $M^{\usuff}(X, s)$ does not satisfy any of the theorem's properties $(1)-(4)$, a further analysis of this particular \mf in the context of $L^{\udbacksuff}$ needs to be done.

\end{proof}

\subsubsection{Area $\spp^{\dubacksuff}$}
\label{sec:FuncDisjointSupp_du}

Consider the loss term $L^{\dubacksuff}(f)$ corresponding to the area $\spp^{\dubacksuff}$,
where $\probi{\usuff}{X} = 0$ and $\probi{\dsuff}{X} > 0$. The below theorem explains PSO convergence in this area.

\begin{theorem}
	\label{thrm:PSO_DU_conv} Define an arbitrary space $\FF$ of functions from $\spp^{\dubacksuff}$ to $\RR$, with $h \in \FF$ being its element.
	Then, depending on properties of a function $M^{\dsuff}$, $f^* = \arginf_{f \in \FF} L^{\dubacksuff}(f)$ must satisfy:
	\begin{enumerate}
		\item If $\forall s \in \RR: M^{\dsuff}(X, s) > 0$, then $f^*(X) = - \infty$.
		\item If $\forall s \in \RR: M^{\dsuff}(X, s) < 0$, then $f^*(X) = \infty$.
		\item If $\forall s \in \RR: M^{\dsuff}(X, s) \equiv 0$, then $f^*(X)$ can be arbitrary.
		\item If $\forall s \in \RR:$ 
		\begin{equation}
		M^{\dsuff}(X, s) 
		\rightarrow
		\begin{cases}
		= 0,& s = h(X)\\
		> 0,              & s > h(X)\\
		< 0,              & s < h(X)
		\end{cases}
		\end{equation}
		then $f^*(X) = h(X)$.
		\item Otherwise, additional analysis is required.
	\end{enumerate}
\end{theorem}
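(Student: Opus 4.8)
The plan is to mirror the proof of Theorem \ref{thrm:PSO_UD_conv} almost verbatim, the only substantive change being a single sign. First I would observe that on $\spp^{\dubacksuff}$ the integrand of $L^{\dubacksuff}(f)$ depends on $f$ only through $f(X)$ (not through its derivatives), so by the same measurability argument used for Eq.~(\ref{eq:fenchel_4}) the infimum can be moved inside the integral, reducing the task to the pointwise optimization, for each fixed $X \in \spp^{\dubacksuff}$,
\begin{equation*}
s^* = \arginf_{s \in \RR} z \cdot \widetilde{M}^{\dsuff}(X, s),
\qquad z \triangleq \probi{\dsuff}{X} \in \RRpos .
\end{equation*}
The crucial difference from the $\spp^{\udbacksuff}$ analysis is that the coefficient multiplying $\widetilde{M}^{\dsuff}$ is $+z$ rather than $-z$, so the monotonicity of the inner cost is governed directly by the sign of its derivative $z \cdot M^{\dsuff}(X, s)$, without the extra negation present in Eq.~(\ref{eq:disjoint_1}).

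Then I would run through the five cases. If $M^{\dsuff}(X, \cdot) > 0$ everywhere, the inner cost is strictly increasing in $s$, so its infimum over $\RR$ is approached as $s \to -\infty$, giving $f^*(X) = -\infty$ (entry 1); symmetrically $M^{\dsuff}(X, \cdot) < 0$ everywhere gives a strictly decreasing cost and $f^*(X) = \infty$ (entry 2) --- entries 1 and 2 are swapped relative to Theorem \ref{thrm:PSO_UD_conv} precisely because of the sign flip. If $M^{\dsuff}(X, \cdot) \equiv 0$ the cost is constant and $f^*(X)$ is arbitrary (entry 3). Under the unimodal sign pattern of entry 4 the inner cost is strictly decreasing for $s < h(X)$ and strictly increasing for $s > h(X)$, and since $\widetilde{M}^{\dsuff}$ is differentiable (hence continuous) its unique minimum over $\RR$ is at $s^* = h(X)$. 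Entry 5 is, as before, just the statement that nothing general can be concluded.

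A cleaner alternative I would consider is to deduce the whole theorem as a corollary of Theorem \ref{thrm:PSO_UD_conv} by the substitution $\bar{M}(X, s) \triangleq -M^{\dsuff}(X, s)$, whose antiderivative is $-\widetilde{M}^{\dsuff}$: this rewrites $L^{\dubacksuff}$ in the exact form of $L^{\udbacksuff}$ with $\probs{\usuff}$ replaced by $\probs{\dsuff}$ and $M^{\usuff}$ replaced by $\bar{M}$. Applying Theorem \ref{thrm:PSO_UD_conv} to $\bar{M}$ and translating each of its four sign hypotheses on $\bar{M}$ back into the corresponding hypothesis on $M^{\dsuff}$ reproduces entries 1--4, with the roles of $\pm\infty$ (and the two monotonicity directions in entry 4) exchanged exactly as claimed.

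I do not expect a genuine obstacle: the argument is routine once the bookkeeping of signs is done carefully. The one thing requiring attention is keeping the orientation consistent --- ensuring entry 1 yields $-\infty$ and entry 2 yields $+\infty$, and that the monotonicity directions of entry 4 are the mirror image of those in Theorem \ref{thrm:PSO_UD_conv} --- together with the standing (implicit) assumption, inherited from the earlier proof, that $\widetilde{M}^{\dsuff}$ is differentiable in $s$, which is what makes the ``derivative of the inner cost'' step legitimate.
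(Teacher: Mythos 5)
Your proof is correct and is exactly the symmetric mirroring of the proof of Theorem \ref{thrm:PSO_UD_conv} that the paper itself invokes when it omits this proof: the inner problem becomes $\arginf_{s} z \cdot \widetilde{M}^{\dsuff}(X,s)$ with the sign of the coefficient flipped, which swaps the $\pm\infty$ cases and reverses the monotonicity pattern in entry 4 precisely as you describe. Both your direct case analysis and the reduction via $\bar{M} = -M^{\dsuff}$ are sound.
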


The proof of Theorem \ref{thrm:PSO_DU_conv} is symmetric to the proof of Theorem \ref{thrm:PSO_UD_conv} and hence omitted.

\section{Instances of PSO}
\label{sec:PSOInst}


\begin{table}
	\centering
	\begin{tabular}{lllll}
		\toprule
		\textbf{Method}     & \textbf{
			\emph{\underline{F}inal} $f_{\theta}(X)$ and $\KKK$ / \emph{\underline{R}eferences} /
			\emph{\underline{L}oss} / \emph{$M^{\usuff}(\cdot)$ and $M^{\dsuff}(\cdot)$}
		}      \\
		
		\midrule
		DeepPDF 
		& \underline{F}: $\probi{\usuff}{X}$, $\KKK = \RRpos$
		\\
		& \underline{R}: \citet{Baird05ijcnn,Kopitkov18arxiv}
		\\
		&  
		\underline{L}: 
		$-
		\E_{X \sim \probs{\usuff}}
		f_{\theta}(X) \cdot \probi{\dsuff}{X}		+
		\E_{X \sim \probs{\dsuff}}
		\half
		\Big[
		f_{\theta}(X)
		\Big]^{2}		$
		\\[5pt]
		&
		$M^{\usuff}$, $M^{\dsuff}$: $\probi{\dsuff}{X}$, $f_{\theta}(X)$  
		\\
		\midrule
		PSO-LDE
		& \underline{F}: $\log \probi{\usuff}{X}$, $\KKK = \RR$
		\\
		(Log Density
		& \underline{R}: Introduced and thoroughly analyzed in this paper,
		\\
		Estimators)
		&  \quad \quad see Section \ref{sec:DeepLogPDF}
		\\
		&  
		\underline{L}: unknown  \\[5pt]
		&
		$M^{\usuff}$, $M^{\dsuff}$: $
		\frac{\probi{\dsuff}{X}}
		{\left[
			\left[\exp f_{\theta}(X)\right]^{\alpha} + 
			\left[\probi{\dsuff}{X}\right]^{\alpha}
			\right]^{\frac{1}{\alpha}}}
		$,
		$
		\frac{\exp f_{\theta}(X)}
		{\left[
			\left[\exp f_{\theta}(X)\right]^{\alpha} + 
			\left[\probi{\dsuff}{X}\right]^{\alpha}
			\right]^{\frac{1}{\alpha}}}
		$
		\\[15pt]
		& where $\alpha$ is a hyper-parameter
		\\
		\midrule
		PSO-MAX
		& \underline{F}: $\log \probi{\usuff}{X}$, $\KKK = \RR$
		\\
		
		& \underline{R}: This paper, see Section \ref{sec:ColumnsEstME}
		\\
		&  
		\underline{L}: unknown  \\
		&
		$M^{\usuff}$, $M^{\dsuff}$: $\exp
		\left[
		-
		\max
		\left[
		f_{\theta}(X)
		-
		\log \probi{\dsuff}{X}
		, 0
		\right]
		\right]
		$,
		\\[5pt]
		&
		$
		\quad
		\quad
		\quad
		\quad
		\quad
		\exp
		\left[
		\min
		\left[
		f_{\theta}(X)
		-
		\log \probi{\dsuff}{X}
		, 0
		\right]
		\right]
		$  
		\\
		
		\midrule
		NCE
		& \underline{F}: $\log \probi{\usuff}{X}$, $\KKK = \RR$
		\\
		(Noise
		& \underline{R}: \citet{Smith05acl,Gutmann10aistats};
		\\
		Contrastive
		&
		\quad
		\quad
		\citet{Pihlaja12arxiv,Mnih12arxiv};
		\\
		Estimation)
		&
		\quad
		\quad
		\citet{Mnih13nips}
		\\
		&
		\quad
		\quad
		\citet{Gutmann12jmlr}
		\\[5pt]
		& 		
		\underline{L}: 
		$
		\E_{X \sim \probs{\usuff}}
		\log \frac{\exp[ f_{\theta}(X)] + \probi{\dsuff}{X}}{\exp[ f_{\theta}(X)]}
		+
		\E_{X \sim \probs{\dsuff}}
		\log \frac{\exp[ f_{\theta}(X)] + \probi{\dsuff}{X}}{\probi{\dsuff}{X}}	$
		\\[10pt]
		&
		$M^{\usuff}$, $M^{\dsuff}$: $
		\frac{\probi{\dsuff}{X}}{\exp[ f_{\theta}(X)] + \probi{\dsuff}{X}}
		$,
		$
		\frac{\exp[ f_{\theta}(X)]}{\exp[ f_{\theta}(X)] + \probi{\dsuff}{X}}
		$   \\
		
		\midrule
		IS
		& \underline{F}: $\log \probi{\usuff}{X}$, $\KKK = \RR$
		\\
		(Importance
		& \underline{R}: \citet{Pihlaja12arxiv}
		\\
		Sampling)
		& 		
		\underline{L}: 
		$-
		\E_{X \sim \probs{\usuff}}
		f_{\theta}(X)		+
		\E_{X \sim \probs{\dsuff}}
		\frac{\exp [f_{\theta}(X)]}{\probi{\dsuff}{X}}$
		\\
		&
		$M^{\usuff}$, $M^{\dsuff}$: $1$, $
		\frac{\exp [f_{\theta}(X)]}{\probi{\dsuff}{X}}
		$   \\
		
		\bottomrule
	\end{tabular}
	
	\caption{PSO Instances For Density Estimation, Part 1}
	\label{tbl:PSOInstances1}
	
\end{table}

\begin{table}
	\centering
	\begin{tabular}{ll}
		\toprule
		\textbf{Method}     & \textbf{
			\emph{\underline{F}inal} $f_{\theta}(X)$ and $\KKK$ / \emph{\underline{R}eferences} /
			\emph{\underline{L}oss} / \emph{$M^{\usuff}(\cdot)$ and $M^{\dsuff}(\cdot)$}
		}      \\

		\midrule
		Polynomial 
		& \underline{F}: $\log \probi{\usuff}{X}$, $\KKK = \RR$
		\\
		& \underline{R}: \cite{Pihlaja12arxiv}
		\\
		
		& 		
		\underline{L}: 
		$-
		\E_{X \sim \probs{\usuff}}
		\frac{\exp[ f_{\theta}(X)]}{\probi{\dsuff}{X}}		+
		\E_{X \sim \probs{\dsuff}}
		\half
		\frac{\exp[ 2 \cdot f_{\theta}(X)]}{[\probi{\dsuff}{X}]^2}$
		\\[5pt]
		&
		$M^{\usuff}$, $M^{\dsuff}$: $
		\frac{\exp[ f_{\theta}(X)]}{\probi{\dsuff}{X}}
		$,
		$
		\frac{\exp[ 2 \cdot f_{\theta}(X)]}{[\probi{\dsuff}{X}]^2}
		$   \\
		\midrule
		Inverse
		& \underline{F}: $\log \probi{\usuff}{X}$, $\KKK = \RR$
		\\
		Polynomial 
		& \underline{R}: \citet{Pihlaja12arxiv}
		\\
		& 		
		\underline{L}: 
		$
		\E_{X \sim \probs{\usuff}}
		\half
		\frac{[\probi{\dsuff}{X}]^2}{\exp[2 \cdot f_{\theta}(X)]}
		-
		\E_{X \sim \probs{\dsuff}}
		\frac{\probi{\dsuff}{X}}{\exp[ f_{\theta}(X)]}$
		\\[5pt]
		&
		$M^{\usuff}$, $M^{\dsuff}$: $
		\frac{[\probi{\dsuff}{X}]^2}{\exp[2 \cdot f_{\theta}(X)]}
		$,
		$
		\frac{\probi{\dsuff}{X}}{\exp[f_{\theta}(X)]}
		$   \\
		\midrule
		
		Inverse
		& \underline{F}: $\log \probi{\usuff}{X}$, $\KKK = \RR$
		\\
		Importance
		& \underline{R}: \citet{Pihlaja12arxiv}
		\\
		Sampling
		& 		
		\underline{L}: 
		$
		\E_{X \sim \probs{\usuff}}
		\frac{\probi{\dsuff}{X}}{\exp[f_{\theta}(X)]}		+
		\E_{X \sim \probs{\dsuff}}
		f_{\theta}(X)$
		\\[5pt]
		&
		$M^{\usuff}$, $M^{\dsuff}$: $
		\frac{\probi{\dsuff}{X^{\usuff}}}{\exp[f_{\theta}(X^{\usuff})]}
		$,
		$
		1$   \\
		
		\midrule
		Root  
		& \underline{F}: $\sqrt[\leftroot{-2}\uproot{2}d]{\probi{\usuff}{X}}$, $\KKK = \RRpos$
		\\
		Density & \underline{R}: This paper
		\\
		Estimation &
		\underline{L}: 
		$
		-
		\E_{X \sim \probs{\usuff}}
		f_{\theta}(X) \cdot \probi{\dsuff}{X}		+
		\E_{X \sim \probs{\dsuff}}
		\frac{1}{d + 1}
		\cdot
		\abs{f_{\theta}(X)}^{d + 1}
		$
		\\[5pt]
		&
		$M^{\usuff}$, $M^{\dsuff}$: $\probi{\dsuff}{X}$
		,
		$
		\abs{f_{\theta}(X)}^{d}
		\cdot
		sign[f_{\theta}(X)]
		$ \\
		
		\midrule
		PDF
		& \underline{F}: $\log
		\big[
		[\probs{\usuff} * \probs{\upsilon}](X)
		\big]$, $\KKK = \RR$
		\\
		Convolution
		& \underline{R}: This paper
		\\
		Estimation
		& 		
		\underline{L}: 
		$-
		\E_{X \sim \bar{\PP}^{\usuff}}
		f_{\theta}(X)		+
		\E_{X \sim \probs{\dsuff}}
		\frac{\exp [f_{\theta}(X)]}{\probi{\dsuff}{X}}$
		\\[5pt]
		&
		$M^{\usuff}$, $M^{\dsuff}$: $1$,
		$
		\frac{\exp [f_{\theta}(X)]}{\probi{\dsuff}{X}}
		$   \\
		&
		where $*$ is a convolution operator and
		\\
		&
		$\bar{\PP}^{\usuff}(X) = \probi{\usuff}{X} * \probi{\upsilon}{X}$ serves as \up density, 
		\\ &
		whose sample $X \sim \bar{\PP}^{\usuff}$ 
		can be obtained via $X = X^{\usuff} + \upsilon$  
		\\ &
		with $X^{\usuff} \sim \probi{\usuff}{X}$ and
		$\upsilon \sim \probi{\upsilon}{X}$, see Section \ref{sec:DeepLogPDFOF}
		\\
		
		\bottomrule
	\end{tabular}
	
	\caption{PSO Instances For Density Estimation, Part 2}
	\label{tbl:PSOInstances2}
	
\end{table}

\begin{table}
	\centering
	\begin{tabular}{lllll}
		\toprule
		
		\textbf{Method}     & \textbf{
			\emph{\underline{F}inal} $f_{\theta}(X)$ and $\KKK$ / \emph{\underline{R}eferences} /
			\emph{\underline{L}oss} / \emph{$M^{\usuff}(\cdot)$ and $M^{\dsuff}(\cdot)$}
		}      \\
		
		\midrule
		"Unit" Loss 
		& \underline{F}: Kantorovich potential \citep{Villani08book}, 
		\\ &
		\quad \quad only if the smoothness of $f_{\theta}(X)$ is heavily restricted
		\\
		& \underline{R}: see Section \ref{sec:SimpL}
		\\
		&  
		\underline{L}: 
		$-
		\E_{X \sim \probs{\usuff}}
		f_{\theta}(X)		+
		\E_{X \sim \probs{\dsuff}}
		f_{\theta}(X)$
		\\ &
		$M^{\usuff}$, $M^{\dsuff}$: $1$, $1$
		\\
		\midrule
		EBGAN
		& \underline{F}: $f_{\theta}(X) = m$ at $\{X : \probi{\usuff}{X} < \probi{\dsuff}{X} \}$, and $f_{\theta}(X) = 0$ otherwise
		\\
		Critic & \underline{R}: \citet{Zhao16arxiv}, see Section \ref{sec:PSOStability}
		\\
		&  
		\underline{L}: 
		$
		\E_{X \sim \probs{\usuff}}
		f_{\theta}(X)		+
		\E_{X \sim \probs{\dsuff}}
		\max [
		m - f_{\theta}(X), 0]
		$
		
		\\
		&  
		$M^{\usuff}$, $M^{\dsuff}$: $-1$,
		$
		-
		cut\_at
		\left[
		X, f_{\theta}(X), m
		\right]
		$  
		\\
		&
		where the considered model $f_{\theta}(X)$ is constrained to have non-negative outputs
		\\
		\midrule
		uLSIF   
		& \underline{F}: $\frac{\probi{\usuff}{X}}{\probi{\dsuff}{X}}$, $\KKK = \RRpos$
		\\
		& \underline{R}: \citet{Kanamori09mlr,Yamada11nips,Sugiyama12book};
		\\
		&
		\quad
		\quad
		\citet{Nam15toias,Uehara16}
		\\
		& 		
		\underline{L}: 
		$-
		\E_{X \sim \probs{\usuff}}
		f_{\theta}(X)		+
		\E_{X \sim \probs{\dsuff}}
		\half
		\big[
		f_{\theta}(X)
		\big]^{2}$
		\\
		&
		$M^{\usuff}$, $M^{\dsuff}$: $1$,
		$
		f_{\theta}(X)
		$   \\
		\midrule
		KLIEP    
		& \underline{F}: $\frac{\probi{\usuff}{X}}{\probi{\dsuff}{X}}$, $\KKK = \RRpos$
		\\
		& \underline{R}: \citet{Sugiyama08aism,Sugiyama12aism,Uehara16}
		\\
		& 		
		\underline{L}: 
		$-
		\E_{X \sim \probs{\usuff}}
		\log f_{\theta}(X)		+
		\E_{X \sim \probs{\dsuff}}
		[f_{\theta}(X) - 1]$
		\\
		&
		$M^{\usuff}$, $M^{\dsuff}$: $
		\frac{1}{f_{\theta}(X)}
		$,
		$1$   \\
		
		\midrule
		Classical     
		& \underline{F}: $\frac{\probi{\usuff}{X}}{\probi{\usuff}{X} + \probi{\dsuff}{X}}$, $\KKK = (0, 1)$
		\\
		GAN Critic & \underline{R}: \citet{Goodfellow14nips}
		\\
		&
		\underline{L}: 
		$-
		\E_{X \sim \probs{\usuff}}
		\log f_{\theta}(X)		-
		\E_{X \sim \probs{\dsuff}}
		\log
		\Big[
		1 - f_{\theta}(X)
		\Big]$
		\\
		&
		$M^{\usuff}$, $M^{\dsuff}$: $
		\frac{1}{f_{\theta}(X)}
		$,
		$
		\frac{1}{1 - f_{\theta}(X)}
		$   \\
		&
		* for $f_{\theta}(X) = sigmoid(h_{\theta}(X))$
		this loss is identical to Logistic Loss in Table \ref{tbl:PSOInstances4}
		\\
		
		\midrule
		NDMR
		& \underline{F}: $\frac{\probi{\usuff}{X}}{\probi{\usuff}{X} + \probi{\dsuff}{X}}$, $\KKK = (0, 1)$
		\\
		(Noise-Data & \underline{R}: This paper
		\\
		Mixture &
		\underline{L}: 
		$-
		\E_{X \sim \probs{\usuff}}
		f_{\theta}(X)		+
		\E_{X \sim \probs{M}}
		f_{\theta}(X)^2$
		\\
		Ratio)
		&
		$M^{\usuff}$, $M^{\dsuff}$: $1$, 
		$
		2
		f_{\theta}(X)
		$  
		\\ &
		where $\probi{M}{X} = \half \probi{\usuff}{X} + \half \probi{\dsuff}{X}$ serves as \down density, 
		\\ & instead of density $\probi{\dsuff}{X}$
		\\
		\midrule
		NDMLR     
		& \underline{F}: $\log \frac{\probi{\usuff}{X}}{\probi{\usuff}{X} + \probi{\dsuff}{X}}$, $\KKK = \RRneg$
		\\
		(Noise-Data & \underline{R}: This paper
		\\
		Mixture &
		\underline{L}: 
		$-
		\E_{X \sim \probs{\usuff}}
		f_{\theta}(X)		+
		\E_{X \sim \probs{M}}
		2 \exp [ f_{\theta}(X) ]$
		\\
		Log-Ratio)
		&
		$M^{\usuff}$, $M^{\dsuff}$: $1$,
		$
		2
		\exp [ f_{\theta}(X) ]
		$   
		\\ &
		where $\probi{M}{X} = \half \probi{\usuff}{X} + \half \probi{\dsuff}{X}$ serves as \down density, 
		\\ & instead of density $\probi{\dsuff}{X}$
		\\
		
		\bottomrule
	\end{tabular}
	
	\caption{PSO Instances For Density Ratio Estimation, Part 1}
	\label{tbl:PSOInstances3}
	
\end{table}

\begin{table}
	
	\centering
	\begin{tabular}{lllll}
		\toprule
		
		\textbf{Method}     & \textbf{
			\emph{\underline{F}inal} $f_{\theta}(X)$ and $\KKK$ / \emph{\underline{R}eferences} /
			\emph{\underline{L}oss} / \emph{$M^{\usuff}(\cdot)$ and $M^{\dsuff}(\cdot)$}
		}      \\

		\midrule
		Classical     
		& \underline{F}: $\log \frac{\probi{\usuff}{X}}{\probi{\usuff}{X} + \probi{\dsuff}{X}}$, $\KKK = \RRneg$
		\\
		GAN Critic & \underline{R}: This paper
		\\
		on log-scale
		&
		\underline{L}: 
		$\E_{X \sim \probs{\usuff}}
		\frac{1}{\exp [f_{\theta}(X)]}		-
		\E_{X \sim \probs{\dsuff}}
		\log
		\frac{\exp [f_{\theta}(X)]}{1 - \exp [f_{\theta}(X)]}
		$
		\\
		&
		$M^{\usuff}$, $M^{\dsuff}$: $
		\frac{1}{\exp [f_{\theta}(X)]}
		$,
		$
		\frac{1}{1 - \exp [f_{\theta}(X)]}
		$   \\
		\midrule
		Power    
		& \underline{F}: $\frac{\probi{\usuff}{X}}{\probi{\dsuff}{X}}$, $\KKK = \RRpos$
		\\
		Divergence & \underline{R}: \citet{Sugiyama12aism,Menon16icml}
		\\
		
		Ratio & 		
		\underline{L}: 
		$- \E_{X \sim \probs{\usuff}}
		\frac{f_{\theta}(X)^{\alpha}}{\alpha}
		+
		\E_{X \sim \probs{\dsuff}}
		\frac{f_{\theta}(X)^{\alpha + 1}}{\alpha + 1}
		$
		\\
		Estimation &
		$M^{\usuff}$, $M^{\dsuff}$: $
		f_{\theta}(X)^{\alpha - 1}
		$,
		$
		f_{\theta}(X)^{\alpha}
		$   \\
		\midrule
		Reversed   
		& \underline{F}: $\frac{\probi{\usuff}{X}}{\probi{\dsuff}{X}}$, $\KKK = \RRpos$
		\\
		KL & \underline{R}: \citet{Uehara16}
		\\
		& 		
		\underline{L}: 
		$\E_{X \sim \probs{\usuff}}
		\frac{1}{f_{\theta}(X)}
		+
		\E_{X \sim \probs{\dsuff}}
		\log
		f_{\theta}(X)
		$
		\\
		&
		$M^{\usuff}$, $M^{\dsuff}$: $
		\frac{1}{\left[
			f_{\theta}(X)
			\right]^2}
		$,
		$
		\frac{1}{f_{\theta}(X)}
		$   \\
		\midrule
		Balanced   
		& \underline{F}: $\frac{\probi{\usuff}{X}}{\probi{\dsuff}{X}}$, $\KKK = \RRpos$
		\\
		Density & \underline{R}: This paper
		\\
		Ratio
		& 		
		\underline{L}: 
		$-
		\E_{X \sim \probs{\usuff}}
		\log \left[ f_{\theta}(X) + 1\right]
		+
		\E_{X \sim \probs{\dsuff}}
		f_{\theta}(X)
		-
		\log
		\left[ f_{\theta}(X) + 1\right]
		$
		\\
		&
		$M^{\usuff}$, $M^{\dsuff}$: $
		\frac{1}{f_{\theta}(X) + 1}
		$,
		$
		\frac{f_{\theta}(X)}{f_{\theta}(X) + 1}
		$   \\
		\midrule
		Log-density    		
		& \underline{F}: $\log \frac{\probi{\usuff}{X}}{\probi{\dsuff}{X}}$, $\KKK = \RR$
		\\
		Ratio & \underline{R}: This paper
		\\
		& 		
		\underline{L}: 
		$-
		\E_{X \sim \probs{\usuff}}
		f_{\theta}(X)
		+
		\E_{X \sim \probs{\dsuff}}
		\exp [f_{\theta}(X)]
		$
		\\
		&
		$M^{\usuff}$, $M^{\dsuff}$: $1$,
		$
		\exp [f_{\theta}(X)]
		$   \\
		\midrule
		Square
		& \underline{F}: $\frac{\probi{\usuff}{X} - \probi{\dsuff}{X}}{\probi{\usuff}{X} + \probi{\dsuff}{X}}$, $\KKK = (-1, 1)$
		\\
		Loss & \underline{R}: \citet{Menon16icml}
		\\
		&
		\underline{L}: 
		$
		\E_{X \sim \probs{\usuff}}
		\half [1 - f_{\theta}(X)]^2
		+
		\E_{X \sim \probs{\dsuff}}
		\half [1 + f_{\theta}(X)]^2
		$
		\\
		&
		$M^{\usuff}$, $M^{\dsuff}$: $
		1 - f_{\theta}(X)
		$,
		$
		1 + f_{\theta}(X)
		$ \\
		\midrule
		Logistic
		& \underline{F}: $\log \frac{\probi{\usuff}{X}}{\probi{\dsuff}{X}}$, $\KKK = \RR$
		\\
		Loss & \underline{R}: \citet{Menon16icml}
		\\
		&
		\underline{L}: 
		$
		\E_{X \sim \probs{\usuff}}
		\log \big[
		1 + \exp [- f_{\theta}(X)]
		\big]
		+
		\E_{X \sim \probs{\dsuff}}
		\log \big[
		1 + \exp [f_{\theta}(X)]
		\big]
		$\\
		&
		$M^{\usuff}$, $M^{\dsuff}$: $
		\frac{1}{\exp [f_{\theta}(X)] + 1}
		$,
		$
		\frac{1}{\exp [- f_{\theta}(X)] + 1}
		$ \\

		\bottomrule
	\end{tabular}
	
	\caption{PSO Instances For Density Ratio Estimation, Part 2}
	\label{tbl:PSOInstances4}
	
\end{table}

\begin{table}
	\centering
	\begin{tabular}{lllll}
		\toprule
		
		\textbf{Method}     & \textbf{
			\emph{\underline{F}inal} $f_{\theta}(X)$ and $\KKK$ / \emph{\underline{R}eferences} /
			\emph{\underline{L}oss} / \emph{$M^{\usuff}(\cdot)$ and $M^{\dsuff}(\cdot)$}
		}      \\
		
		\midrule
		Exponential
		& \underline{F}: $\half \log \frac{\probi{\usuff}{X}}{\probi{\dsuff}{X}}$, $\KKK = \RR$
		\\
		Loss & \underline{R}: \citet{Menon16icml}
		\\
		&
		\underline{L}: 
		$
		\E_{X \sim \probs{\usuff}}
		\exp [- f_{\theta}(X)]		+
		\E_{X \sim \probs{\dsuff}}
		\exp [f_{\theta}(X)]$
		\\
		&
		$M^{\usuff}$, $M^{\dsuff}$: $
		\exp [- f_{\theta}(X)]
		$,
		$
		\exp [f_{\theta}(X)]
		$ \\
		
		\midrule
		LSGAN  
		& \underline{F}: $\frac{b \cdot \probi{\usuff}{X} + a \cdot \probi{\dsuff}{X}}{\probi{\usuff}{X} + \probi{\dsuff}{X}}$, $\KKK = (\min \left[ a, b \right], \max \left[ a, b \right])$
		\\
		Critic & \underline{R}: \citet{Mao17iccv}
		\\
		&
		\underline{L}: 
		$
		\E_{X \sim \probs{\usuff}}
		\half
		[f_{\theta}(X) - b]^2
		+
		\E_{X \sim \probs{\dsuff}}
		\half
		[f_{\theta}(X) - a]^2
		$
		\\[4pt]
		&
		$M^{\usuff}$, $M^{\dsuff}$: $
		b - f_{\theta}(X)
		$,
		$
		f_{\theta}(X) - a
		$ \\
		
		\midrule
		Kullback-Leibler
		& \underline{F}: $1 + \log \frac{\probi{\usuff}{X}}{\probi{\dsuff}{X}}$, $\KKK = \RR$
		\\
		Divergence & \underline{R}: \citet{Nowozin16nips}
		\\
		&
		\underline{L}: 
		$-
		\E_{X \sim \probs{\usuff}}
		f_{\theta}(X)
		+
		\E_{X \sim \probs{\dsuff}}
		\exp [f_{\theta}(X) - 1]
		$
		\\
		&
		$M^{\usuff}$, $M^{\dsuff}$: $1$,
		$
		\exp [f_{\theta}(X) - 1]
		$ \\
		\midrule
		Reverse KL
		& \underline{F}: $- \frac{\probi{\dsuff}{X}}{\probi{\usuff}{X}}$, $\KKK = \RRneg$
		\\
		Divergence & \underline{R}: \citet{Nowozin16nips}
		\\
		&
		\underline{L}: 
		$-
		\E_{X \sim \probs{\usuff}}
		f_{\theta}(X)
		+
		\E_{X \sim \probs{\dsuff}}
		[-1 - \log [- f_{\theta}(X)]]
		$
		\\
		&
		$M^{\usuff}$, $M^{\dsuff}$: $1$,
		$
		\frac{1}{- f_{\theta}(X)}
		$ \\
		\midrule
		Lipschitz
		& \underline{F}: $\half \cdot \frac{\probi{\usuff}{X} - \probi{\dsuff}{X}}{\sqrt{\probi{\usuff}{X} \cdot \probi{\dsuff}{X}}}$, $\KKK = \RR$
		\\
		Continuity & \underline{R}: \citet{Zhou18arxiv}
		\\
		Objective
		&
		\underline{L}: 
		$-
		\E_{X \sim \probs{\usuff}}
		\left[
		f_{\theta}(X) -
		\sqrt{f_{\theta}(X)^2 + 1}
		\right]
		+
		\E_{X \sim \probs{\dsuff}}
		\left[
		f_{\theta}(X)
		+
		\sqrt{f_{\theta}(X)^2 + 1}
		\right]
		$
		\\[5pt]
		&
		$M^{\usuff}$, $M^{\dsuff}$: $
		1 - 
		\frac{f_{\theta}(X)}{\sqrt{f_{\theta}(X)^2 + 1}}
		$,
		$
		1 +
		\frac{f_{\theta}(X)}{\sqrt{f_{\theta}(X)^2 + 1}}
		$ \\
		
		\midrule
		LDAR
		& \underline{F}: $\arctan \log \frac{\probi{\usuff}{X}}{\probi{\dsuff}{X}}$, $\KKK = (-\frac{\pi}{2}, \frac{\pi}{2})$
		\\
		(Log-density & \underline{R}: This paper
		\\
		Atan-Ratio)
		&
		\underline{L}: 
		unknown
		\\
		&
		$M^{\usuff}$, $M^{\dsuff}$: $
		\frac{1}
		{\exp \left[ \tan f_{\theta}(X) \right]
			+
			1}
		$,
		$
		\frac{1}
		{\exp \left[ - \tan f_{\theta}(X) \right]
			+
			1}
		$ 
		\\
		\midrule
		LDTR
		& \underline{F}: $\tanh \log \frac{\probi{\usuff}{X}}{\probi{\dsuff}{X}}$, $\KKK = (-1, 1)$
		\\
		(Log-density & \underline{R}: This paper
		\\
		Tanh-Ratio)
		&
		\underline{L}: 
		$
		\E_{X \sim \probs{\usuff}}
		\frac{2}{3}
		\cdot
		\left[
		1 - f_{\theta}(X)
		\right]^{\frac{3}{2}}
		+
		\E_{X \sim \probs{\dsuff}}
		\frac{2}{3}
		\cdot
		\left[
		1 + f_{\theta}(X)
		\right]^{\frac{3}{2}}
		$
		\\[3pt]
		&
		$M^{\usuff}$, $M^{\dsuff}$: $
		\sqrt{1 - f_{\theta}(X)}
		$,
		$
		\sqrt{1 + f_{\theta}(X)}
		$ 
		\\
		\bottomrule
	\end{tabular}
	
	\caption{PSO Instances For Density Ratio Estimation, Part 3}
	\label{tbl:PSOInstances5}
	
\end{table}

Many statistical methods exist whose loss and gradient have PSO forms depicted in Eq.~(\ref{eq:GeneralPSOLossFrmlPr_Limit_f}) and Eq.~(\ref{eq:GeneralPSOLossFrml_Limit}) for some choice of densities $\probs{\usuff}$ and $\probs{\dsuff}$, and of functions $\widetilde{M}^{\usuff}$, $\widetilde{M}^{\dsuff}$, $M^{\usuff}$ and $M^{\dsuff}$,
and therefore being instances of the PSO algorithm family. 
Typically, these methods defined via their loss which involves the pair of \ps $\{ \widetilde{M}^{\usuff}[X,s], \widetilde{M}^{\dsuff}[X,s] \}$. Yet, in practice it is enough to know their derivatives $\{ M^{\usuff}[X,s] = \frac{\partial \widetilde{M}^{\usuff}(X, s)}{\partial s}, M^{\dsuff}[X,s] = \frac{\partial \widetilde{M}^{\dsuff}(X, s)}{\partial s} \}$ for the gradient-based optimization (see Algorithm \ref{alg:PSOAlgo}). Therefore, PSO formulation focuses directly on $\{ M^{\usuff}, M^{\dsuff} \}$, with each PSO instance being defined by a particular choice of this pair.

Moreover, most of the existing PSO instances and subgroups actually require $\widetilde{M}^{\usuff}$ and $\widetilde{M}^{\dsuff}$ to be analytically known, while PSO composition  eliminates such demand. In fact, many pairs $\{ M^{\usuff}, M^{\dsuff} \}$ explored in this paper do not have closed-form known antiderivatives $\{ \widetilde{M}^{\usuff}, \widetilde{M}^{\dsuff} \}$.
Thus, PSO enriches the arsenal of available probabilistic methods.

In Tables \ref{tbl:PSOInstances1}-\ref{tbl:PSOInstances5} we show multiple PSO instances. We categorize all losses into two main categories - density estimation losses in Tables \ref{tbl:PSOInstances1}-\ref{tbl:PSOInstances2} and ratio density estimation losses in Tables \ref{tbl:PSOInstances3}-\ref{tbl:PSOInstances5}. In the former class of losses we are interested to infer density $\probs{\usuff}$ from its available data samples, while $\probs{\dsuff}$ represents some auxiliary distribution with analytically known pdf function $\probi{\dsuff}{X}$ whose samples are used to create the opposite force $F_{\theta}^{\dsuff}(X)$; this force will balance the force $F_{\theta}^{\usuff}(X)$ from $\probs{\usuff}$'s samples. Further, in the latter class we concerned to learn a density ratio, or some function of it, between two unknown densities $\probs{\usuff}$ and $\probs{\dsuff}$ by using the available samples from both distributions.

In the tables we present the PSO loss of each method, if analytically known, and the corresponding pair $\{ M^{\usuff}, M^{\dsuff} \}$.
We also 
indicate to what the surface $f(X)$ will converge assuming that PSO \bp in Eq.~(\ref{eq:BalPoint}) was obtained. Derivation of this convergence appears below. Importantly, it describes the optimal PSO solution only within the area $\spp^{\udcapsuff} \subset \RR^{n}$. For $X$ in $\spp^{\udbacksuff}$ or $\spp^{\dubacksuff}$, the convergence can be explained via theorems \ref{thrm:PSO_UD_conv} and \ref{thrm:PSO_DU_conv} respectively. Yet, in most of the paper we will limit our discussion to the convergence within the mutual support, implicitly assuming $\spp^{\usuff} \equiv \spp^{\dsuff}$.

\subsection{Deriving Convergence of PSO Instance}
\label{sec:PSOInstDeriveConv}

Given a PSO instance with a particular \psocomp, the convergence within $\spp^{\udcapsuff}$ can be derived by solving PSO \bp $\probi{\usuff}{X} \cdot M^{\usuff}\left[X, f^*(X)\right] = \probi{\dsuff}{X} \cdot M^{\dsuff}\left[X, f^*(X)\right]$.

\paragraph{Example 1:}
From Table \ref{tbl:PSOInstances1} we can see that IS method has $M^{\usuff}
\left[
X,
f(X)
\right] = 1$ and $M^{\dsuff}
\left[
X,
f(X)
\right] = \frac{\exp [f(X)]}{\probi{\dsuff}{X}}$. Given that samples within the loss have densities $X^{\usuff} \sim \probi{\usuff}{X}$ and $X^{\dsuff} \sim \probi{\dsuff}{X}$, we can substitute the sample densities and the \emph{magnitude} functions \pair into Eq.~(\ref{eq:BalPoint}) to get:
\begin{equation}
\frac{\probi{\usuff}{X}}{\probi{\dsuff}{X}}
=
\frac{\exp [f^*(X)] / \probi{\dsuff}{X}}{1}
\quad
\Rightarrow
\quad
f^*(X)
=
\log \probi{\usuff}{X}
,
\label{eq:BalPointExample}
\end{equation}
where we use an equality between density ratio of the samples and ratio of \emph{magnitude} functions to derive the final $f^*(X)$.
Thus, in case of IS approach, the surface will converge to the log-density $\log \probs{\usuff}(X)$.
\\

More formally, we can derive PSO convergence according to definitions of Theorem \ref{thrm:PSO_func_magn}, using \mgn ratio $R$ and its inverse $T$. The theorem allows us additionally to verify sufficient conditions required by PSO framework. Furthermore, we can decide whether the restriction of $\FF$'s range is necessarily by testing criteria of Theorem \ref{thrm:PSO_func_magn_unconstr}.

\paragraph{Example 2:}
Consider the "Polynomial" method in Table \ref{tbl:PSOInstances2}, with $M^{\usuff}
\left[
X,
f(X)
\right] = \frac{\exp[ f(X)]}{\probi{\dsuff}{X}}$ and $M^{\dsuff}
\left[
X,
f(X)
\right] = \frac{\exp[ 2 f(X)]}{[\probi{\dsuff}{X}]^2}$. Then, we have $\frac{M^{\dsuff}
\left[
X, f(X)
\right]}{M^{\usuff}
\left[
X, f(X)
\right]} = \frac{\exp[ f(X)]}{\probi{\dsuff}{X}}$ and hence $R(X, s) = \frac{\exp s}{\probi{\dsuff}{X}}$.
Further, consider the convergence interval $\KKK$ to be entire $\RR$. Both conditions 1 and 2 of Theorem \ref{thrm:PSO_func_magn} are satisfied - $R$ is continuous, strictly increasing and bijective w.r.t. domain $\RR$ and codomain $\RRpos$, and both \ms are continuous and positive on the entire $\RR$. Additionally, $\FF$'s range need not to be restricted since $\KKK \equiv \RR$.
Further, $R$ has a simple form and its invert is merely $T(X, z) = \log \probi{\dsuff}{X} + \log z$. The above $T$ and $R$ are inverse of each other w.r.t. the second argument, which can be easily verified. Next, we can calculate PSO convergence as $f^*(X) = T\left[X, \frac{\probi{\usuff}{X}}{\probi{\dsuff}{X}}\right] = \log \probi{\dsuff}{X} + \log \frac{\probi{\usuff}{X}}{\probi{\dsuff}{X}} = \log \probi{\usuff}{X}$. Hence, "Polynomial" method converges to $\log \probi{\usuff}{X}$.

\paragraph{Example 3:}
\label{exmp:atan_ratio} 
Consider the LDAR method in Table \ref{tbl:PSOInstances5}, with $M^{\usuff}
\left[
X,
f(X)
\right] = 		\frac{1}
{\exp \left[ \tan f(X) \right]
	+
	1}$ and $M^{\dsuff}
\left[
X,
f(X)
\right] = \frac{1}
{\exp \left[ - \tan f(X) \right]
	+
	1}$. Then, $\frac{M^{\dsuff}(X, f(X))}{M^{\usuff}(X, f(X))} = \frac{\exp \left[ \tan f(X) \right]
	+
	1}
{\exp \left[ - \tan f(X) \right]
	+
	1}$ and hence $R(X, s) = \frac{\exp \left[ \tan s \right]
	+
	1}
{\exp \left[ - \tan s \right]
	+
	1}$.
Function $R(X, s)$ is not bijective w.r.t. $s \in \RR$ - it has multiple positive increasing copies on each interval $( \pi k - \frac{\pi}{2}, \pi k + \frac{\pi}{2})$. Hence, it does not satisfy the necessary conditions. Yet, if we restrict it to a domain $( \pi k - \frac{\pi}{2}, \pi k + \frac{\pi}{2})$ for any $k \in \ZZ$, then Theorem \ref{thrm:PSO_func_magn} will hold. Particularly, if we choose $\KKK = ( - \frac{\pi}{2}, \frac{\pi}{2})$ then all theorem's conditions are satisfied. Moreover, Theorem \ref{thrm:PSO_func_magn_unconstr} is not applicable here since \ms are not defined at points $s \in \{ \frac{\pi}{2} k \}$. Therefore, we are required to limit range of $\FF$ to be $\KKK$.
The inverse of $R$ for the considered $\KKK$ is $T(X, z) = \arctan \log z$. Hence, LDAR converges to $\arctan \log \frac{\probi{\usuff}{X}}{\probi{\dsuff}{X}}$.

\subsection{Deriving New PSO Instance}
\label{sec:PSOInstDeriveNew}

In order to apply PSO for learning any function of $X$ and  $\frac{\probi{\usuff}{X}}{\probi{\dsuff}{X}}$, the appropriate PSO instance can be derived via Theorem \ref{thrm:PSO_func_conv}. Denote the required PSO convergence by a transformation $T(X, z): \RR^n \times \RR \rightarrow \RR$ s.t. $f^*(X) = T\left[X, \frac{\probi{\usuff}{X}}{\probi{\dsuff}{X}}\right]$ is the function we want to learn. Then according to the theorem, any pair \pair whose ratio $R \equiv \frac{M^{\dsuff}}{M^{\usuff}}$ satisfies $R \equiv T^{-1}$ (i.e. inverses between $\KKK$ and $\RRpos$), will produce the required convergence, given that theorem's conditions 
hold. 
Further, if conditions of Theorem \ref{thrm:PSO_func_magn_unconstr} likewise hold, then no range restriction over $f$ is needed.

Therefore, to learn any function $T\left[X, \frac{\probi{\usuff}{X}}{\probi{\dsuff}{X}}\right]$, first we obtain $R(X, s)$ by finding an inverse of $T(X, z)$ w.r.t. $z$.
Any valid pair of \emph{magnitudes} satisfying $\frac{M^{\dsuff}(X, s)}{M^{\usuff}(X, s)} = R\left[X, s\right]$ will produce the desired convergence.
For example, 
we can use a straightforward choice $M^{\dsuff}\left[
X,
s
\right] = R
\left[
X,
s
\right]$
and
$M^{\usuff}\left[
X,
s
\right] = 1$ in order to converge to the aimed target. Such choice corresponds to minimizing $f$-divergence \citep{Nguyen10tit,Nowozin16nips}, see Section \ref{sec:Bregman_PSO} for details.
Yet, typically these \mfs will be sub-optimal if for example $M^{\dsuff}$ is an unbounded function. When this is the case, we can derive a new pair of bounded \mfs by multiplying the old pair by the same factor $q\left[X, s\right]$ (see also Section \ref{sec:BoundUnboundMFs}).

\paragraph{Example 4:}
Consider a scenario where we would like to infer $f^*(X) = \frac{\probi{\usuff}{X} - \probi{\dsuff}{X}}{\probi{\usuff}{X} + \probi{\dsuff}{X}}$, similarly to "Square Loss" method in
Table \ref{tbl:PSOInstances4}. Treating only points $X \in \spp^{\dsuff}$, we can rewrite our objective as $f^*(X) = \frac{\frac{\probi{\usuff}{X}}{\probi{\dsuff}{X}} - 1}{\frac{\probi{\usuff}{X}}{\probi{\dsuff}{X}} + 1}$ and hence the required PSO convergence is given by $T(X, z) = \frac{z - 1}{z + 1}$. Further, its inverse function is given by $R(X, s) = \frac{1 + s}{1 - s}$. 
Therefore, \mfs must satisfy $\frac{M^{\dsuff}(X, s)}{M^{\usuff}(X, s)} = \frac{1 + s}{1 - s}$. One choice for such \ms is $M^{\usuff}
\left[
X,
s
\right] = 1 - s$ and $M^{\dsuff}
\left[
X,
s
\right] = 1 + s$, just like in the "Square Loss" method \citep{Menon16icml}.
Note that the convergence interval of this PSO instance is $\KKK = (-1, 1)$ which is $X$-invariant. Further, $T$ is strictly increasing and continuous at $z \in \RRpos$ and \pair are continuous and positive at $s \in \KKK$, hence satisfying the conditions of Theorem \ref{thrm:PSO_func_conv}. Moreover, \pair are actually continuous on entire $s \in \RR$, with $\forall s \leq -1$: $M^{\usuff}(X, s) > M^{\dsuff}(X, s)$ and $\forall s \geq 1$: $M^{\usuff}(X, s) < M^{\dsuff}(X, s)$. Since $M^{\usuff}$ and $M^{\dsuff}$ do not have the same sign outside of $\KKK$, conditions of Theorem \ref{thrm:PSO_func_magn_unconstr} are likewise satisfied and the $\FF$'s range can be the entire $\RR$.
Furthermore, other variants with the same PSO \bp can be easily constructed. For instance, we can use $M^{\usuff}
\left[
X,
f(X)
\right] = \frac{1 - f(X)}{D(X,
	f(X))}$ and $M^{\dsuff}
\left[
X,
f(X)
\right] = \frac{1 + f(X)}{D(X,
	f(X))}$ with $D(X,
f(X)) \triangleq |1 - f(X)| + |1 + f(X)|$ instead. Such normalization by function $D(\cdot)$ does not change the convergence, yet it produces bounded \mfs that are typically more stable during the optimization. All the required conditions are satisfied also by these normalized \ms. Additionally, recall that we considered only points within support of $\probi{\dsuff}{X}$. Outside of this support, any $X \in \spp^{\udbacksuff}$ will push the model surface $f(X)$ according to the rules implied by $M^{\usuff}$; note also that $M^{\usuff}$ changes signs at $f(X) = 1$, with force always directed toward the height $h(X) = 1$. Therefore, at points $\{ X \in \spp^{\udbacksuff} \}$ the convergence will be $f^*(X) = 1$, which is also a corollary of the condition 4 in Theorem \ref{thrm:PSO_UD_conv}. Finally at points outside of both supports there is no optimization performed, and hence theoretically nothing moves there - no constraints are applied on the surface $f$ in these areas. Yet, in practice $f(X)$ at $X \notin \spp^{\udcupsuff}$ will be affected by pushes at the training points, according to the elasticity properties of the model expressed via kernel $g_{\theta}(X, X')$ (see Section \ref{sec:ExprrKernelBnd} for details).
\\

\begin{table}[tb]
	\small
	\centering
	\begin{tabular}{lllll}
		\toprule
		\textbf{Description} & \textbf{Target Function}     & \textbf{$T(X, z)$} & \textbf{$R(X, s)$}   & $\KKK$   \\
		\midrule
		Density-Ratio Estimation
		&
		$\frac{\probi{\usuff}{X}}{\probi{\dsuff}{X}}$
		&
		$z$
		&
		$s$
		&
		$\RRpos$
		\\
		Log-Density-Ratio Estimation
		&
		$\log \frac{\probi{\usuff}{X}}{\probi{\dsuff}{X}}$
		&
		$\log z$
		&
		$\exp s$
		&
		$\RR$
		\\
		Density Estimation
		&
		$\probi{\usuff}{X}$
		&
		$\probi{\dsuff}{X} \cdot z$
		&
		$\frac{s}{\probi{\dsuff}{X}}$
		&
		$\RRpos$
		\\
		Log-Density Estimation
		&
		$\log \probi{\usuff}{X}$
		&
		$\log \probi{\dsuff}{X} + \log z$
		&
		$\frac{\exp s}{\probi{\dsuff}{X}}$
		&
		$\RR$
		\\
		\bottomrule
	\end{tabular}
	
	\caption[Common target functions, their corresponding $T$ and $R$ mappings, and the convergence interval $\KKK$.]{Common target functions, their corresponding $T$ and $R$ mappings, and the convergence interval $\KKK$. Note that for density estimation methods (2 last cases) the auxiliary pdf $\probi{\dsuff}{X}$ is known analytically.}
	\label{tbl:ClassicTransforms}
	
\end{table}

In Table \ref{tbl:ClassicTransforms} we present transformations $T$ and $R$ for inference of several common target functions. As shown, if $\probi{\dsuff}{X}$ is analytically known, Theorem \ref{thrm:PSO_func_conv} can be used to also infer any function of density $\probi{\usuff}{X}$, by multiplying $z$ argument by $\probi{\dsuff}{X}$ inside $T$. Thus, we can apply the theorem to derive a new PSO instances for pdf estimation, and to mechanically recover many already existing such techniques. In Section \ref{sec:DeepLogPDF} we will investigate new methods provided by the theorem for the estimation of log-density $\log \probi{\usuff}{X}$.

\begin{remark}
The inverse relation $R \equiv T^{-1}$ and properties of $R$ and $T$ described in theorems \ref{thrm:PSO_func_conv} and \ref{thrm:PSO_func_magn} imply that antiderivatives $\widetilde{R}
\left[
X,
s
\right]
\triangleq
\int_{s_0}^{s}
R(X, t)
dt$ and
$\widetilde{T}
\left[
X,
z
\right]
\triangleq
\int_{z_0}^{z}
R(X, t)
dt$ are Legendre-Fenchel transforms of each other. Such a connection reminds the relation between Langrangian and Hamiltonian mechanics, and opens a bridge between control and learning theories. A detailed exploration of this connection is an additional interesting direction for future research. 
\end{remark}

Further, density of \up and \down sample points within PSO loss can be changed from the described above choice $\probs{\usuff}$ and $\probs{\dsuff}$, to infer other target functions. For example, in NDMR method from Table \ref{tbl:PSOInstances3} instead of $\probi{\dsuff}{X}$ we sample $\half \probi{\usuff}{X} + \half \probi{\dsuff}{X}$ to construct training dataset of \down points (denoted by $\{X^{\dsuff}_{i}\}$ in Eq.~(\ref{eq:GeneralPSOLossFrml})). That is, the updated \down density is mixture of two original densities with equal weights. Then, by substituting sample densities and appropriate \emph{magnitude} functions $\{ M^{\usuff}
\left[
X,
f(X)
\right] = 1, M^{\dsuff}
\left[
X,
f(X)
\right] = 2
f(X) \}$ into the \bp equilibrium in Eq.~(\ref{eq:BalPoint}) we will get:
\begin{equation}
\frac{\probi{\usuff}{X}}{\half \probi{\usuff}{X} + \half \probi{\dsuff}{X}}
=
\frac{2
	f(X)}{1}
\quad
\Rightarrow
\quad
f(X)
=
\frac{\probi{\usuff}{X}}{\probi{\usuff}{X} + \probi{\dsuff}{X}}
.
\label{eq:BalPointExampleSD}
\end{equation}
The NDMR infers the same target function as the Classical GAN Critic loss from Table \ref{tbl:PSOInstances3}, and can be used as its alternative.
Therefore, an additional degree of freedom is acquired by considering different sampling strategies in PSO framework.
Similar ideas will allow us to also infer conditional density functions, as shown in Section \ref{sec:CondDeepPDFMain}.

\subsection{PSO Feasibility Verification and Polar Parametrization}
\label{sec:ConVerComp}

Sometimes it may be cumbersome to test if given \pair satisfy all required conditions over sets $\KKK^{-}$, $\KKK$ and $\KKK^{+}$. Below we propose representing \ms within a complex plane, and use the corresponding polar parametrization. The produced representation yields a graphical visualization of PSO instance which permits for easier feasibility analysis.

For this purpose, define PSO complex-valued function as $c \left[ X, s \right] \triangleq M^{\usuff}
\left[
X,
s
\right] + M^{\dsuff}
\left[
X,
s
\right] \cdot i$ whose real part is \up \mgn, and imaginary part - \down \mgn. Further, denote by $c_{\angle}$ and $c_{r}$ the angle and the radius of $c$ defined as $c_{\angle} \left[ X, s \right] = \atantwo(M^{\dsuff}
\left[
X,
s
\right], M^{\usuff}
\left[
X,
s
\right])$ and $c_{r} \left[ X, s \right] = \sqrt{M^{\usuff}
	\left[
	X,
	s
	\right]^2 + M^{\dsuff}
	\left[
	X,
	s
	\right]^2}$.
Conditions from theorems \ref{thrm:PSO_func_magn} and \ref{thrm:PSO_func_magn_unconstr} can be translated into conditions over $c \left[ X, s \right]$ as following.

\begin{lemma}[Complex Plane Feasibility]
	\label{lmm:FsbltTest} 
	Consider PSO instance that is described by a complex-valued function $c \left[ X, s \right]: \RR^n \times \RR \rightarrow \CC$ and some convergence interval $\KKK \triangleq (s_{min}, s_{max})$. Assume:
	\begin{enumerate}
		\item $c \left[ X, s \right]$ is continuous at any $s \in \KKK$, with $c_{r} \left[ X, s \right] > 0$ and $0 < c_{\angle} \left[ X, s \right] < \pihalf$.
		\item $c_{\angle} \left[ X, s \right]$ is strictly increasing and bijective w.r.t. domain $s \in \KKK$ and codomain $(0, \pihalf)$.
	\end{enumerate}
	Then, given that the range of $\FF$ is $\KKK$, the minima $f^* = \arginf_{f \in \FF} L_{PSO}(f)$ will satisfy $\forall X \in \spp^{\udcapsuff}: f^*(X) = T\left[X, \frac{\probi{\usuff}{X}}{\probi{\dsuff}{X}}\right]$, where $T \left[X, z\right] \triangleq c_{\angle}^{-1} \left[ X, \atan(z) \right]$.
	Further, $\FF$'s range can be entire $\RR$ if the following conditions hold:
	\begin{enumerate}
		\item $c \left[ X, s \right]$ is continuous on $s \in \RR$, with $c_{r} \left[ X, s \right] > 0$.
		\item $\forall s \in \KKK^{-}: \frac{3}{2} \pi \leq c_{\angle} \left[ X, s \right] \leq 2 \pi$.
		\item $\forall s \in \KKK^{+}: \pihalf \leq c_{\angle} \left[ X, s \right] \leq \pi$.
	\end{enumerate}
\end{lemma}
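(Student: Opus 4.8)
The plan is to recognize Lemma \ref{lmm:FsbltTest} as nothing more than Theorem \ref{thrm:PSO_func_magn} (together with Theorem \ref{thrm:PSO_func_magn_unconstr} for the unrestricted‑range part) rewritten in polar coordinates, so that the whole proof is a dictionary between the Cartesian data $(M^{\usuff}, M^{\dsuff})$ and the polar data $(c_{r}, c_{\angle})$ of $c[X,s] = M^{\usuff}[X,s] + M^{\dsuff}[X,s]\,i$. The single identity I would establish up front is: wherever $M^{\usuff}[X,s] > 0$ and $M^{\dsuff}[X,s] > 0$, the angle $c_{\angle}[X,s] = \atantwo(M^{\dsuff}[X,s], M^{\usuff}[X,s])$ lies in $(0,\pihalf)$ and $\tan c_{\angle}[X,s] = M^{\dsuff}[X,s]/M^{\usuff}[X,s] = R(X,s)$; conversely $c_{r}[X,s] > 0$ together with $c_{\angle}[X,s] \in (0,\pihalf)$ forces $M^{\usuff}[X,s], M^{\dsuff}[X,s] > 0$. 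More generally, membership of $c_{\angle}[X,s]$ in a prescribed closed quadrant (read modulo $2\pi$) is equivalent to a fixed sign pattern for $(M^{\usuff}, M^{\dsuff})$, and that is all the second part will need.

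For the restricted‑range claim I would feed the lemma's hypotheses into Theorem \ref{thrm:PSO_func_magn}. Hypothesis 1 (continuity of $c$ and $0 < c_{\angle} < \pihalf$ on $\KKK$) yields that $M^{\usuff}$ and $M^{\dsuff}$, as real and imaginary parts of a continuous $c$, are continuous on $\KKK$ and are strictly positive there, which is exactly condition 2 of Theorem \ref{thrm:PSO_func_magn}. Writing $R = \tan \circ c_{\angle}$ and using that $\tan\colon (0,\pihalf) \to \RRpos$ is a continuous strictly increasing bijection, hypothesis 2 ($c_{\angle}$ a strictly increasing bijection $\KKK \to (0,\pihalf)$, and continuous since $c$ is continuous and $c_{r} > 0$ keeps us off the origin) upgrades $R(X,\cdot)$ to a continuous strictly increasing bijection $\KKK \to \RRpos$ for every $X$, i.e.\ condition 1 of Theorem \ref{thrm:PSO_func_magn}. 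Condition 3 (range of $\FF$ equal to $\KKK$) is assumed. Theorem \ref{thrm:PSO_func_magn} then gives $f^*(X) = T[X, \probi{\usuff}{X}/\probi{\dsuff}{X}]$ with $T = R^{-1}$, and since $R = \tan \circ c_{\angle}$ we get $T = c_{\angle}^{-1} \circ \arctan$, that is $T[X,z] = c_{\angle}^{-1}[X, \atan(z)]$, the asserted formula.

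For the unconstrained‑range claim I would check that additional hypotheses 1--3 imply the three hypotheses of Theorem \ref{thrm:PSO_func_magn_unconstr}. Additional hypothesis 1 gives continuity of $M^{\usuff}$ and $M^{\dsuff}$ on all of $\RR$, while $c_{r} > 0$ guarantees the two never vanish simultaneously (so $c_{\angle}$ is everywhere defined). On $\KKK^{+}$, $c_{\angle}[X,s] \in [\pihalf, \pi]$ translates to $M^{\usuff}[X,s] \le 0 \le M^{\dsuff}[X,s]$, and on $\KKK^{-}$, $c_{\angle}[X,s] \in [\tfrac{3}{2}\pi, 2\pi]$ translates to $M^{\usuff}[X,s] \ge 0 \ge M^{\dsuff}[X,s]$. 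In either region $M^{\usuff}\cdot M^{\dsuff} \le 0$, so on $\RR \setminus \KKK = \KKK^{-}\cup\KKK^{+}$ the two magnitudes never share a sign (condition 1 of Theorem \ref{thrm:PSO_func_magn_unconstr}); and since they cannot both vanish, $M^{\usuff} \le 0 \le M^{\dsuff}$ on $\KKK^{+}$ sharpens to $M^{\usuff} < M^{\dsuff}$ (condition 3) and $M^{\usuff} \ge 0 \ge M^{\dsuff}$ on $\KKK^{-}$ sharpens to $M^{\usuff} > M^{\dsuff}$ (condition 2). Theorem \ref{thrm:PSO_func_magn_unconstr} then removes condition 3 of Theorem \ref{thrm:PSO_func_magn}, so the same convergence holds with $\FF$'s range allowed to be all of $\RR$, completing the plan.

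The only genuinely delicate part is the bookkeeping at the edges. One must pin down the branch of $\atantwo$ (e.g.\ range $[0,2\pi)$, with the positive real axis identified with the angle $2\pi$) so that the closed‑quadrant conditions on $\KKK^{+}$ and $\KKK^{-}$ say what they are meant to say, noting that continuity and monotonicity of $c_{\angle}$ are actually only invoked on $\KKK$ — on $\KKK^{\pm}$ the lemma uses $c_{\angle}$ purely as a carrier of sign information. One also has to reconcile the open interval $\KKK = (s_{min}, s_{max})$ with the closed sets $\KKK^{-}, \KKK^{+}$ at the junction points $s_{min}$ and $s_{max}$, where $R \to 0$ or $R \to \infty$ and hence $M^{\dsuff}$ or $M^{\usuff}$ is forced to vanish; the $c_{r} > 0$ hypothesis is exactly what prevents the other magnitude from vanishing there and keeps all the strict inequalities strict. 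None of this is deep — the two already‑established theorems do the real work — but it is where care is needed.
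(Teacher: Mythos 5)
Your proposal is correct and follows essentially the same route as the paper: translate the polar hypotheses on $c_{r}$ and $c_{\angle}$ into the Cartesian conditions on $M^{\usuff}$ and $M^{\dsuff}$ required by Theorem \ref{thrm:PSO_func_magn} (via $R = \tan \circ\, c_{\angle}$, hence $T = c_{\angle}^{-1}\circ\arctan$) and by Theorem \ref{thrm:PSO_func_magn_unconstr} (via the quadrant--sign dictionary on $\KKK^{\pm}$), then invoke those theorems. Your explicit sharpening of $M^{\usuff}\leq 0\leq M^{\dsuff}$ to $M^{\usuff}<M^{\dsuff}$ using $c_{r}>0$ is a detail the paper glosses over, but it is the same argument.
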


\begin{figure}
	\centering
	
	\begin{tabular}{ccc}
		
		\subfloat[\label{fig:Magns_signs-a}]{ \includegraphics[width=0.65\textwidth]{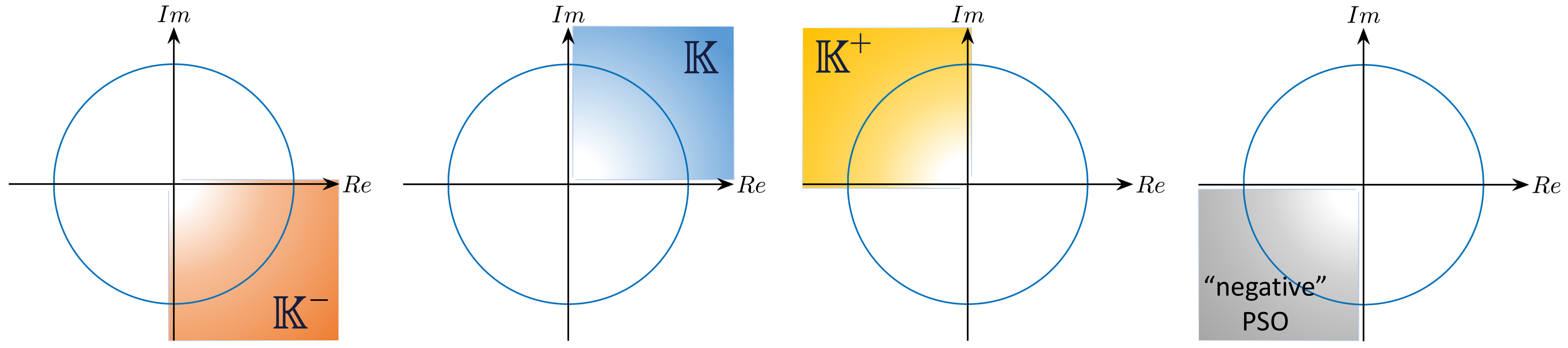}}
		
		\subfloat[\label{fig:Magns_signs-b}]{ \includegraphics[width=0.145\textwidth]{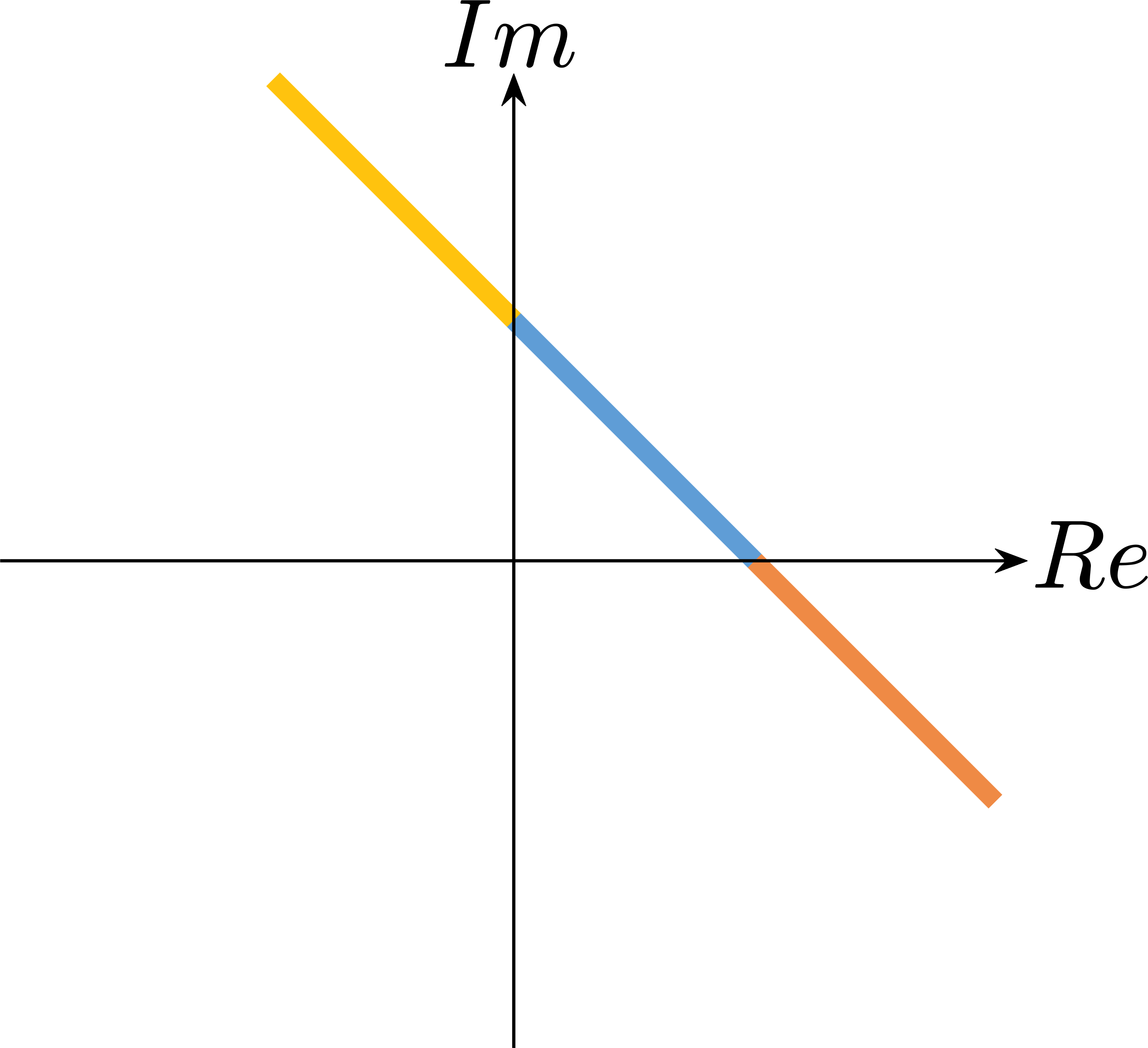}}
		
		\subfloat[\label{fig:Magns_signs-c}]{ \includegraphics[width=0.145\textwidth]{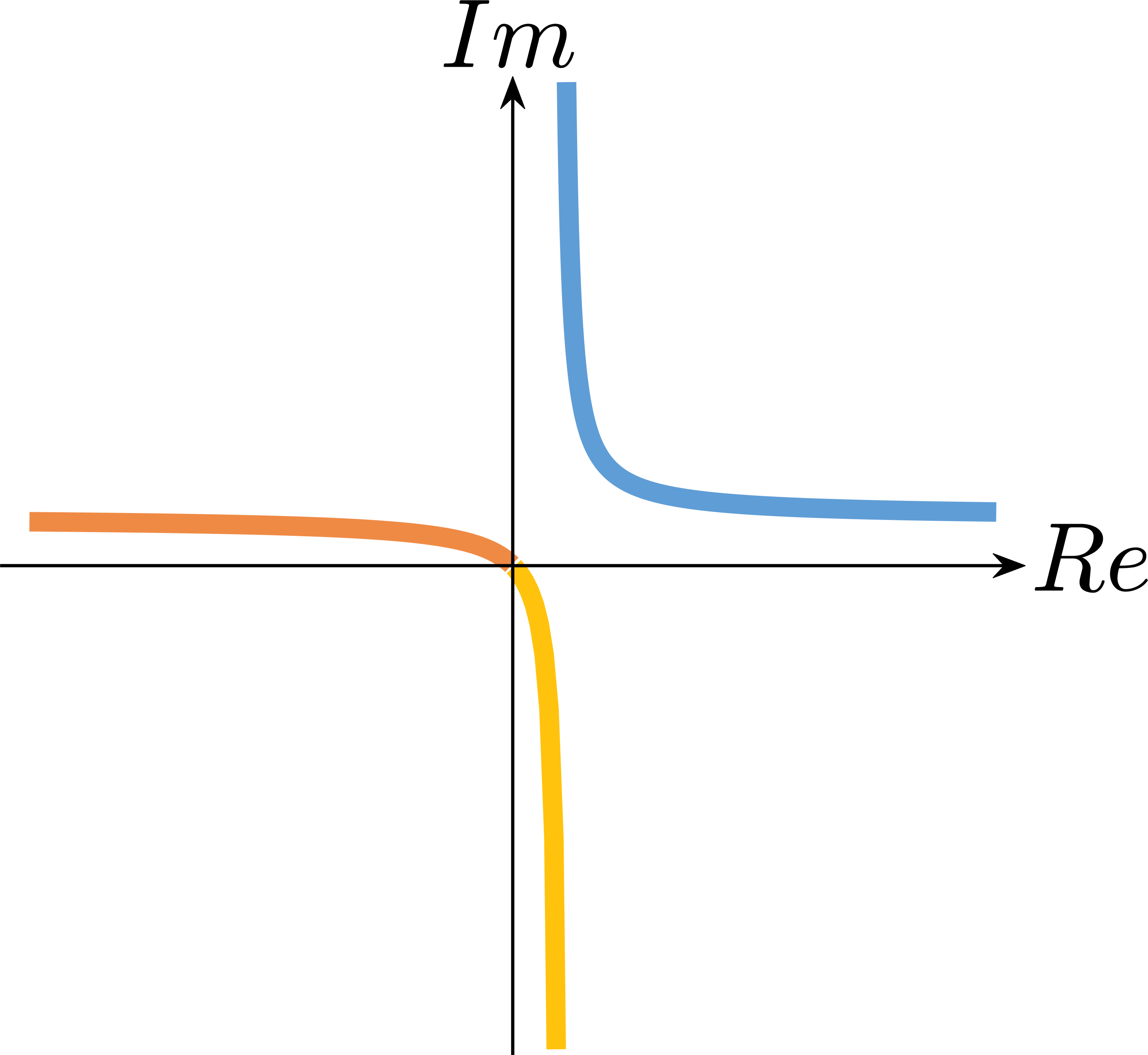}}
	\end{tabular}
	
	\protect
	\caption[Correspondence between different quadrants of a complex plane and different parts of PSO curve.]{(a) Correspondence between different quadrants of a complex plane and different parts of $c \left[ X, s \right]$. The quadrant \rom{1} is where $c$ must be for any $s \in \KKK$. Additionally, if $c$ is continuous w.r.t. $s \in \RR$ and if it is located in the quadrant \rom{4} at $s \in \KKK^{-}$ and in the quadrant \rom{2} at $s \in \KKK^{+}$,
		then we are allowed to reduce restrictions over the function range. Further, the quadrant \rom{3} is associated with "negative" PSO whose \ms have negative outputs and that can learn any decreasing target function $T\left[X, \frac{\probi{\usuff}{X}}{\probi{\dsuff}{X}}\right]$. (b)-(c) $c \left[ X, s \right]$ for two PSO instances is depicted, (b) "Square Loss" from Table \ref{tbl:PSOInstances4} and (c) "Classical GAN Critic Loss" from Table \ref{tbl:PSOInstances3}. Colors red, blue and yellow represent parts of $c$ at $s \in \KKK^{-}$, $s \in \KKK$ and $s \in \KKK^{+}$ respectively. As seen, in case of (b) the requirements are satisfied and hence there is no need to restrict the range of functions within $\FF$. In contrast, in (c) the curve $c \left[ X, s \right]$ parametrized by $s$ is not even continuous, hence $\FF$'s range must be $\KKK$.
	}
	\label{fig:Magns_signs}
\end{figure}

\begin{proof}
The necessary positivity of \ms over $s \in \KKK$ from Theorem \ref{thrm:PSO_func_magn} implies $\forall s \in \KKK: \left[ 0 < c_{\angle} \left[ X, s \right] < \pihalf \right] \vee \left[ c_{r} \left[ X, s \right] > 0 \right]$. Further, conditions of Theorem \ref{thrm:PSO_func_magn_unconstr} are equivalent to require $\forall s \in \KKK^{-}: \left[ \frac{3}{2} \pi \leq c_{\angle} \left[ X, s \right] \leq 2 \pi \right] \vee \left[ c_{r} \left[ X, s \right] > 0 \right]$ and $\forall s \in \KKK^{+}: \left[ \pihalf \leq c_{\angle} \left[ X, s \right] \leq \pi \right] \vee \left[ c_{r} \left[ X, s \right] > 0 \right]$. Likewise, observe that the range of angles $(\pi, \frac{3}{2} \pi)$ is allocated by "negative" PSO family mentioned in Section \ref{sec:FuncMutSupp_diff}, which can be formulated by switching between \up and \down terms of PSO family and which allows to learn any decreasing function $T\left[X, \frac{\probi{\usuff}{X}}{\probi{\dsuff}{X}}\right]$. See also the schematic illustration in Figure \ref{fig:Magns_signs-a}.

Further, continuity of $c \left[ X, s \right]$ (over $\KKK$ or over entire $\RR$) is equivalent to continuity of \ms enforced by theorems \ref{thrm:PSO_func_magn} and \ref{thrm:PSO_func_magn_unconstr}. Likewise, it leads to continuity of $c_{\angle} \left[ X, s \right]$.

Moreover, given that $c \left[ X, s \right]$ at $s \in \KKK$ is located within the quadrant \rom{1} of a complex plane, its angle can be rewritten as $c_{\angle} \left[ X, s \right] =  \atan \left( \frac{M^{\dsuff}
	\left[
	X, s
	\right]}{M^{\usuff}
	\left[
	X, s
	\right]}\right) = \atan \left( R \left[
X, s
\right] \right)$, with $R \left[
X, s
\right] = \tan c_{\angle} \left[ X, s \right]$ and $T \left[X, z\right] = R^{-1} \left[X, z\right] = c_{\angle}^{-1} \left[ X, \atan(z) \right]$ where $c_{\angle}^{-1}$ is an inverse of $c_{\angle}$ w.r.t. second argument. Hence, continuity of $c_{\angle}$ (implied by continuity of $c$) yields continuity of $R$ at $s \in \KKK$, which is required by Theorem \ref{thrm:PSO_func_magn}. 

Finally, strictly increasing property of $c_{\angle}$ is equivalent to the same property of $R$ since they are related via strictly increasing $\tan(\cdot)$ and $\atan(\cdot)$. Similarly, bijectivity is also preserved, with codomain changed from $\RRpos$ to $(0, \pihalf)$ due to limited range of $\atan(\cdot)$.

\end{proof}

The above lemma summarizes conditions required by PSO framework. As noted in Section \ref{sec:FuncMutSupp_diff}, this condition set is overly restrictive and some of its parts may be relaxed. Particularly, we speculate that continuity may be replaced by continuity \emph{almost everywhere}, and "increasing" (without "strictly") may be sufficient. We shall address such condition relaxation in future work.

To verify feasibility of any PSO instance, $c \left[ X, s \right]$ can be drawn as a curve within a convex plane where conditions of the above lemma can be checked. In Figures \ref{fig:Magns_signs-b} and \ref{fig:Magns_signs-c} we show example of this curve for "Square Loss" from Table \ref{tbl:PSOInstances4} and "Classical GAN Critic Loss" from Table \ref{tbl:PSOInstances3} respectively. From the first diagram it is visible that the curve satisfies lemma's conditions, which allows us to not restrict $\FF$'s range when optimizing via "Square Loss". In the second diagram conditions do not hold, leading to the conclusion that "Classical GAN Critic Loss" may be optimized only over $\FF$ whose range is exactly $\KKK$.

\subsection{PSO Subsets}
\label{sec:PSOInstAllSets}

\begin{figure}[tb]
	
	\centering

		\subfloat{\includegraphics[width=0.95\textwidth]{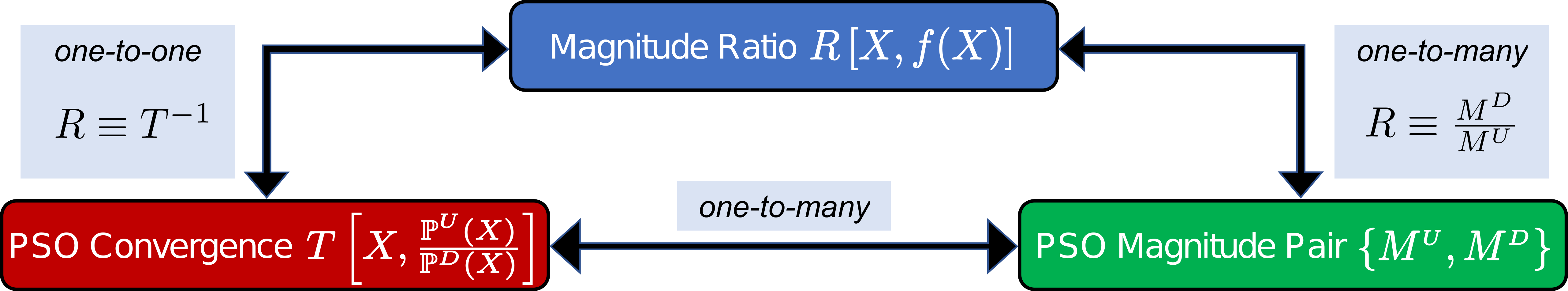}}

	\protect
	
	\caption[Schematic relationship between PSO mappings.]{Schematic relationship between PSO mappings of Theorem \ref{thrm:PSO_func_conv} and Theorem \ref{thrm:PSO_func_magn}. $T$ and $R$ are inverse functions, with one-to-one relation between them. $R$ is ratio of \pair, with infinitely many choices of the latter producing the same ratio. For this reason, many pairs of \mfs will yield the same PSO convergence $T$.
	}
	\label{grh:PSOTransformations}
\end{figure}

Given any two densities $\probs{\usuff}$ and $\probs{\dsuff}$, all PSO instances can be represented as a set of all feasible \mgn pairs $\psoset \triangleq \{ [M^{\usuff}, M^{\dsuff}] : feasible(M^{\usuff}, M^{\dsuff}) \}$, where $feasible(\cdot)$ is a logical indicator that specifies whether the arguments satisfy conditions of Theorem \ref{thrm:PSO_func_magn} (for any set $\KKK$) or not.
Below we systematically modulate the set $\psoset$ into subgroups, providing an useful terminology for the later analysis.

The relation of PSO mappings is presented in Figure \ref{grh:PSOTransformations}.
As observed, many different PSO instances have the same approximated target function. We will use this property to divide the set $\psoset$ of all feasible PSO instances into disjoint subsets, according to the estimation convergence. Considering any target function with the corresponding mapping $T$, we denote by $\psoset[T] \triangleq \{ [M^{\usuff}, M^{\dsuff}] : [M^{\usuff}, M^{\dsuff}] \in \psoset \vee \frac{M^{\dsuff}}{M^{\usuff}} = T^{-1}\}$ all feasible PSO instances that converge to $T$. The subset $\psoset[T]$ is referred below as $T$'s PSO \emph{consistent} \mgn set - PSO-CM set of $T$ for shortness. Further, in this paper we focus on two specific PSO subsets with $T(X, z) = \probi{\dsuff}{X} \cdot z$ and $T(X, z) = \log \probi{\dsuff}{X} + \log z$ that infer $\probi{\usuff}{X}$ and $\log \probi{\usuff}{X}$, respectively. For compactness, we denote the former as $\pdfestset$ and the latter as $\logpdfestset$.
Likewise, below we will term two pairs of \mfs as PSO \emph{consistent} if they belong to the same subset $\psoset[T]$ (i.e. if their \mgn ratio is the same).

Given a specific PSO task at hand, represented by the required convergence $T$, it is necessary to choose the most optimal member of $\psoset[T]$ for the sequential optimization process.
In Sections \ref{sec:ConvA} and \ref{sec:DensEst} we briefly discuss how to choose the most optimal PSO instance from PSO-CM set of any given $T$, based on the properties of \mfs.

\subsection{PSO Methods Summary}
\label{sec:PSOInstSumm}

The entire exposition of this section was based on a relation in Eq.~(\ref{eq:BalPoint}) that sums up the main principle of PSO - \up and \down point-wise forces must be equal at the optimization equilibrium.
In Tables \ref{tbl:PSOInstances1}-\ref{tbl:PSOInstances5} we refer to relevant works in case the specific PSO losses were already discovered in previous scientific studies. The previously discovered ones were all based on various sophisticated mathematical laws and theories, yet they all could be also derived in a simple unified way using PSO concept and Theorem \ref{thrm:PSO_func_conv}. Additionally, besides the previously discovered methods, in Tables \ref{tbl:PSOInstances1}-\ref{tbl:PSOInstances5} we introduce several new losses for inference of different stochastic modalities of the data, as the demonstration of usage and usefulness of the general PSO formulation. In Section \ref{sec:Bregman_PSO} we reveal that PSO framework has a tight relation with Bregman and "$f$" divergencies.
Furthermore, in Section \ref{sec:CrosEntr} we prove that the cross-entropy losses are also instances of PSO. Likewise, in Section \ref{sec:MLE_PSO} we derive Maximum Likelihood Estimation (MLE) from \psofunc.

\section{PSO, Bregman and "$f$" Divergencies}
\label{sec:Bregman_PSO}

Below we define \psodiv and show its connection to Bregman divergence \citep{Bregman67cmmp,Gneiting07jasa} and $f$-divergence \citep{Ali66rsss}, that are associated with many existing statistical methods.

\subsection{PSO Divergence}
\label{sec:PSO_div}

Minimization of $L_{PSO}(f)$ corresponds also to minimization of:
\begin{equation}
D_{PSO}(f^*, f)
\triangleq
L_{PSO}(f)
-
L_{PSO}(f^*)
=
-
\E_{X \sim \probs{\usuff}}
\int_{f^*(X)}^{f(X)}
M^{\usuff}(X, t)
dt
+
\E_{X \sim \probs{\dsuff}}
\int_{f^*(X)}^{f(X)}
M^{\dsuff}(X, t)
dt
\label{eq:PSODiv}
\end{equation}
where $f^*(X) = T\left[X, \frac{\probi{\usuff}{X}}{\probi{\dsuff}{X}}\right]$ is the optimal solution characterized by Theorem \ref{thrm:PSO_func_magn}. 
Since $f^*$ is unique minima (given that theorem's "sufficient" conditions hold) and since $L_{PSO}(f)
\geq
L_{PSO}(f^*)
$, we have the following properties: $D_{PSO}(f^*, f) \geq 0$ and $D_{PSO}(f^*, f) = 0 \iff f^* = f$. Thus, $D_{PSO}$ can be used as a "distance" between $f$ and $f^*$ and we name it \psodiv. Yet, note that $D_{PSO}$ does not measure a distance between any two functions; instead it evaluates distance between any $f$ and the optimal solution of the specific PSO instance, derived from the corresponding \psocomp via PSO \bp.

\subsection{Bregman Divergence}

Define $\FF$ to be a convex set of non-negative functions from $\RR^{n}$ to $\RRnonneg$. Bregman divergence for every $q, p \in \FF$ is defined as $D_{\psi}(q, p)
=
\psi(q)
- \psi(p)
- 
\int
\nabla_{q}
\psi(q(X))
\cdot 
\left[
q(X) - p(X)
\right]
dX
$,
where $\psi(\mu)$ is a continuously-differentiable, strictly convex functional over $\mu \in \FF$, and $\nabla_{q}$ is the differentiation w.r.t. $q$. When $\psi(\mu)$ has a form $\psi(\mu) = \int \varphi(\mu(X)) dX$ with a strictly convex function $\varphi: \RR \rightarrow \RR$, the divergence is referred to as U-divergence \citep{Eguchi09book} or sometimes as separable Bregman divergence \citep{Grunwald04aos}. In such case $D_{\psi}(q, p)$ has the form:
\begin{equation}
D_{\varphi}(q, p)
=
\int
\varphi
\left[
q(X)
\right]
- 
\varphi
\left[
p(X)
\right]
- 
\varphi'
\left[
p(X)
\right]
\cdot 
\left[
q(X) - p(X)
\right]
dX
.
\label{eq:BregSepDiv}
\end{equation}
The above modality is non-negative for all $q, p \in \FF$ and is zero if and only if $q = p$, hence it specifies a "distance" between $q$ and $p$. For this reason, $D_{\varphi}(q, p)$ is widely used in the optimization $\min_{p} D_{\varphi}(q, p)$ where $q$ usually represents an unknown density of available i.i.d. samples and $p$ is a model which is optimized to approximate $q$. For example, when $\varphi(s) = s \cdot \log s$, we obtain the generalized KL divergence $D_{\varphi}(q, p)
=
\int
q(X)
\cdot
\left[
\log q(X)
- 1
\right]
- 
q(X) \cdot \log p(X)
+
p(X)
dX
$ \citep{Uehara18arxiv}. It can be further reduced to MLE objective $-
\E_{X \sim q(X)}
\log
p(X)$ in case $p$ is normalized, as is shown in Section \ref{sec:MLE_PSO}.

Further, the term $\psi(\mu) \triangleq \int
\varphi
\left[
\mu(X)
\right]
dX
$ is called an entropy of $\mu$, and 
$
d_{\varphi}(q, p)
\triangleq
-
\psi(p)
-
\int
\varphi'
\left[
p(X)
\right]
\cdot 
\left[
q(X) - p(X)
\right]
dX
$ is referred to as a cross-entropy between $q$ and $p$. Since $D_{\varphi}(q, p) = d_{\varphi}(q, p) - d_{\varphi}(q, q)$, usually the actual optimization being solved is $\min_{p} d_{\varphi}(q, p)$ where the cross-entropy objective is approximated via Monte-Carlo integration.

\begin{claim}
\label{thrm:PSO_Bregman_rel} 
Consider a strictly convex and twice differentiable function $\varphi$ and two densities $\probs{\usuff}$ and $\probs{\dsuff}$ that satisfy $\spp^{\usuff} \subseteq \spp^{\dsuff}$. Likewise, define magnitudes $\{ M^{\usuff}
	\left[
	X,
	s
	\right]
	=
	\varphi''(s)$,
	$
	M^{\dsuff}
	\left[
	X,
	s
	\right]
	=
	\frac{s \cdot \varphi''(s)}{\probi{\dsuff}{X}}
	\}
	$ and a space of non-negative functions $\FF$. Then PSO functional and PSO divergence are equal to U-cross-entropy (up to an additive constant) and U-divergence defined by $\varphi$, respectively:
	\begin{enumerate}
		\item $\forall f \in \FF: L_{PSO}(f) = d_{\varphi}(\probs{\usuff}, f)$
		
		\item $\forall f \in \FF: D_{PSO}(\probs{\usuff}, f) = D_{\varphi}(\probs{\usuff}, f)$
		
	\end{enumerate}
\end{claim}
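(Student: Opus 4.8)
The plan is to compute the antiderivatives \widepair explicitly from the prescribed \pair, substitute them into the definition of $L_{PSO}$ in Eq.~(\ref{eq:GeneralPSOLossFrmlPr_Limit_f}), and rearrange the resulting integral until it coincides with the U-cross-entropy $d_{\varphi}(\probs{\usuff}, f)$; part~2 will then follow from part~1 together with the identification of the PSO optimum $f^*$. Since $\varphi$ is twice differentiable, $M^{\usuff}[X,s]=\varphi''(s)$ integrates to $\widetilde{M}^{\usuff}[X,s]=\varphi'(s)$ (up to a constant), and an integration by parts gives
\[
\widetilde{M}^{\dsuff}[X,s]
=
\frac{1}{\probi{\dsuff}{X}}\int_{s_0}^{s} t\,\varphi''(t)\,dt
=
\frac{1}{\probi{\dsuff}{X}}\Bigl[\, s\,\varphi'(s) - \varphi(s)\,\Bigr] + c(X),
\]
where $c(X)$ absorbs the $s_0$-dependent boundary term. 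Both are legitimate antiderivatives.

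Substituting into Eq.~(\ref{eq:GeneralPSOLossFrmlPr_Limit_f}) and writing the expectations as integrals over the respective supports, the factor $\probi{\dsuff}{X}$ cancels in the \down term (the hypothesis $\spp^{\usuff}\subseteq\spp^{\dsuff}$ guarantees $\probi{\dsuff}{X}>0$ wherever we divided by it), and, using $\probi{\usuff}{X}=0$ on $\spp^{\dsuff}\setminus\spp^{\usuff}$, both terms merge into a single integral over $\spp^{\dsuff}$:
\[
L_{PSO}(f)
=
\int_{\spp^{\dsuff}}\Bigl[\, -\varphi(f(X)) - \varphi'(f(X))\bigl(\probi{\usuff}{X}-f(X)\bigr)\,\Bigr]\,dX + \text{const},
\]
with the additive constant independent of $f$ (it gathers only the $s_0$-dependent boundary terms, and can be absorbed by an affine normalization of $\varphi$, which leaves \pair unchanged since they depend only on $\varphi''$). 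This expression is exactly $-\psi(f)-\int\varphi'(f(X))[\probi{\usuff}{X}-f(X)]\,dX = d_{\varphi}(\probs{\usuff},f)$ under the natural convention that $\psi$ and $d_{\varphi}$ are evaluated on $\spp^{\dsuff}$, which proves part~1. For part~2, Theorem~\ref{thrm:PSO_func_magn} applied with $R(X,s)=M^{\dsuff}/M^{\usuff}=s/\probi{\dsuff}{X}$ gives $T(X,z)=\probi{\dsuff}{X}\cdot z$, hence $f^*(X)=T\bigl[X,\probi{\usuff}{X}/\probi{\dsuff}{X}\bigr]=\probi{\usuff}{X}$; the theorem applies on $\KKK=\RRpos$ because strict convexity together with twice differentiability forces $\varphi''>0$, so $M^{\usuff},M^{\dsuff}>0$ on $\RRpos$ and $R$ is a continuous, strictly increasing bijection of $\RRpos$ onto $\RRpos$, which also makes $f^*$ the unique minimizer and thus $D_{PSO}$ well defined. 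Then $D_{PSO}(\probs{\usuff},f)=L_{PSO}(f)-L_{PSO}(\probs{\usuff})$, and applying part~1 to each term (the constants cancel) yields $d_{\varphi}(\probs{\usuff},f)-d_{\varphi}(\probs{\usuff},\probs{\usuff})=D_{\varphi}(\probs{\usuff},f)$ by the standard identity $D_{\varphi}(q,p)=d_{\varphi}(q,p)-d_{\varphi}(q,q)$.

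The routine parts are the integration by parts and the algebra; the step needing genuine care is the bookkeeping of integration domains and additive constants — verifying that every discrepancy between the PSO side (a difference of expectations over $\spp^{\usuff}$ and $\spp^{\dsuff}$) and the Bregman side (where $\psi(\mu)=\int\varphi(\mu(X))\,dX$ implicitly lives on one fixed domain) is genuinely a constant that does not depend on $f$. This is exactly where the support inclusion $\spp^{\usuff}\subseteq\spp^{\dsuff}$ enters, and the strict-convexity hypothesis on $\varphi$ is what licenses invoking Theorem~\ref{thrm:PSO_func_magn} in part~2; without it $R$ need not map onto $\RRpos$ and the optimum $f^*$ need not be unique.
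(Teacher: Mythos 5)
Your proposal is correct and follows essentially the same route as the paper's proof: compute the primitives $\widetilde{M}^{\usuff}[X,s]=\varphi'(s)$ and $\widetilde{M}^{\dsuff}[X,s]=\left[s\,\varphi'(s)-\varphi(s)\right]/\probi{\dsuff}{X}$ up to additive constants, substitute into $L_{PSO}$ so that the $\probi{\dsuff}{X}$ factors cancel and the two expectations merge into $d_{\varphi}(\probs{\usuff},f)$ plus a constant, then identify $f^*=\probi{\usuff}{X}$ from the magnitude ratio and subtract to obtain part~2. The one minor over-claim is that strict convexity plus twice differentiability forces $\varphi''>0$ (false at isolated points, e.g.\ $\varphi(s)=s^4$ at $s=0$, so the positivity hypothesis of Theorem~\ref{thrm:PSO_func_magn} can fail there); the paper handles this edge case by deferring to the more general conditions of Section~\ref{sec:FuncMutSupp_non_diff}, but this does not change the substance or validity of your argument.
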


\begin{proof}

	First, \ps corresponding to the above \mfs are
	$\{ \widetilde{M}^{\usuff}
	\left[
	X,
	s
	\right]
	=
	\varphi'(s) + c_{\usuff},
	\widetilde{M}^{\dsuff}
	\left[
	X,
	s
	\right]
	=
	\frac{\left[
		s \cdot \varphi'(s)
		-
		\varphi(s)
		\right]
	}{\probi{\dsuff}{X}}
	+ c_{\dsuff}
	\}
	$, where $c_{\usuff}$ and $c_{\dsuff}$ are additive constants. Denote
$c_{\ttsuff} \triangleq c_{\dsuff} - c_{\usuff}$. Introducing above expressions into $L_{PSO}$ defined in Eq.~(\ref{eq:GeneralPSOLossFrmlPr_Limit_f}) will lead to:
	\begin{equation}
	\forall
	f \in \FF:
	\quad
	L_{PSO}(f)
	=
	\int
	- 
	\varphi
	\left[
	f(X)
	\right]
	+ 
	\varphi'
	\left[
	f(X)
	\right]
	\cdot 
	\left[
	f(X) - \probi{\usuff}{X}
	\right]
	dX
	+
	c_{\ttsuff}
	=
	d_{\varphi}(\probs{\usuff}, f)
	+
	c_{\ttsuff}
	.
	\label{eq:Breg_PSO_Func}
	\end{equation}
	Further, according to Table \ref{tbl:ClassicTransforms} the minimizer $f^*$ of $L_{PSO}(f)$ for a given \pair is $\probi{\usuff}{X}$. Therefore, Eq.~(\ref{eq:PSODiv}) leads to $D_{PSO}(f^*, f)
	\triangleq
	L_{PSO}(f)
	-
	L_{PSO}(f^*) = d_{\varphi}(\probs{\usuff}, f) - d_{\varphi}(\probs{\usuff}, f^*) = d_{\varphi}(\probs{\usuff}, f) - d_{\varphi}(\probs{\usuff}, \probs{\usuff}) = D_{\varphi}(\probs{\usuff}, f)$.
	
	Function $\varphi$ has to be twice differentiable in order to solve $\min_{p} d_{\varphi}(q, p)$ via gradient-based optimization - derivative of $d_{\varphi}(q, p)$ w.r.t. $p$ involves $\varphi''$. Hence, this property is typically satisfied by all methods that minimize Bregman divergence.
	Furthermore, if $\varphi''(s)$ is a continuous function then the specified \ms are feasible w.r.t. Theorem \ref{thrm:PSO_func_magn}. Otherwise, we can verify a more general set of conditions at the end of Section \ref{sec:FuncMutSupp_non_diff} which is satisfied for the claim's setting.

\end{proof}

From the above we can conclude that definitions of Bregman and PSO divergencies coincide when the former is limited to U-divergence and the latter is limited to \ms that satisfy $\frac{M^{\dsuff}\left[
	X,
	s
	\right]}{M^{\usuff}\left[
	X,
	s
	\right]} = \frac{s}{\probi{\dsuff}{X}}$. Such PSO subset has \bp at $f^*(X) = \probi{\usuff}{X}$ and is denoted in Section \ref{sec:PSOInstAllSets} as $\pdfestset$. It contains all PSO instances for the density estimation, not including log-density estimation methods in subset $\logpdfestset$. Further, since $L_{PSO}(f) = d_{\varphi}(\probs{\usuff}, f)$, a family of algorithms that minimize U-divergence and PSO subset $\pdfestset$ of density estimators are equivalent.
However, in general \psodiv is different from Bregman as it can also be used to measure a "distance" between any PSO solution $f^*$, including even negative functions such as log-density $\log \probi{\usuff}{X}$ and log-density-ratio $\log \frac{\probi{\usuff}{X}}{\probi{\dsuff}{X}}$ (see Section \ref{sec:PSOInst}),  and any $f \in \FF$ without the non-negative constraint over $\FF$.

\subsection{$f$-Divergence}
\label{sec:fDivv}

Define $q$ and $p$ to be probability densities so that $q$ is absolutely continuous w.r.t. $p$.
Then, $f$-divergence between $q$ and $p$ is defined as $D_{\phi}(q, p)
=
\int
p(X)
\cdot
\phi(\frac{q(X)}{p(X)})
dX
$,
where $\phi: \RR \rightarrow \RR$ is a convex and lower semicontinuous function s.t. $\phi(1) = 0$. This divergence family contains many important special cases, such as KL divergence (for $\phi(z) = z \cdot \log z$) and Jensen-Shannon divergence (for $\phi(z) = - (z + 1) \cdot \log \frac{1 + z}{2} + z \cdot \log z$).

In \citep{Nguyen10tit} authors proved that it is lower-bounded as:
\begin{equation}
D_{\phi}(q, p)
\geq
\sup_{f \in \FF}
J_{\phi}(f, q, p)
,
\quad
J_{\phi}(f, q, p)
\triangleq
\E_{X \sim q(X)}
f(X)
-
\E_{X \sim p(X)}
\phi^{c}
\left[
f(X)
\right]
,
\label{eq:FDiverg}
\end{equation}
where $\phi^{c}$ is the convex-conjugate function of $\phi$: $\phi^{c}(s) \triangleq \sup_{z \in \RR} \{ z \cdot s - \phi(z) \}$. The above expression becomes an equality when $\phi'(\frac{q(X)}{p(X)})$ belongs to a considered function space $\FF$. In such case, the supremum is obtained via $f(X) = \phi'(\frac{q(X)}{p(X)})$. Therefore, the above lower bound is widely used in the optimization $\max_{f} J_{\phi}(f, q, p)$ to approximate $D_{\phi}(q, p)$ \citep{Nguyen10tit}, and to learn any function of $\frac{q(X)}{p(X)}$ \citep{Nowozin16nips}.

\begin{claim}
\label{thrm:PSO_FDiv_rel} 
Consider a strictly convex and differentiable function $\phi$ s.t. $\phi(1) = 0$, its convex conjugate $\phi^{c}$, and two densities $\probs{\usuff}$ and $\probs{\dsuff}$ that satisfy $\spp^{\usuff} \subseteq \spp^{\dsuff}$. Likewise, define PSO primitives $\{ \widetilde{M}^{\usuff}
	\left[
	X,
	s
	\right]
	=
	s,
	\widetilde{M}^{\dsuff}
	\left[
	X,
	s
	\right]
	=
	\phi^{c}(s)
	\}
	$ and assume that $\phi'(\frac{\probi{\usuff}{X}}{\probi{\dsuff}{X}})$ is contained in a function space $\FF$. Then PSO functional and $f$-divergence have the following connection:
	\begin{enumerate}
		\item $\forall f \in \FF: L_{PSO}(f) \equiv -J_{\phi}(f,\probs{\usuff}, \probs{\dsuff})$ and\\ $f^* = \argmin_{f \in \FF} L_{PSO}(f) = \argmax_{f \in \FF} J_{\phi}(f,\probs{\usuff}, \probs{\dsuff}) = \phi'(\frac{\probi{\usuff}{X}}{\probi{\dsuff}{X}})$
		
		\item $\forall f \in \FF: D_{\phi}(\probs{\usuff}, \probs{\dsuff}) = - L_{PSO}(f^*) \geq - L_{PSO}(f)$
		
	\end{enumerate}
\end{claim}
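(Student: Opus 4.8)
The plan is to prove both parts essentially by substitution, reading $L_{PSO}$ and its minimizer off the definitions already in hand and then invoking the variational lower bound of \citet{Nguyen10tit} restated in Eq.~(\ref{eq:FDiverg}). For part~1, plugging the prescribed primitives $\widetilde{M}^{\usuff}[X,s]=s$ and $\widetilde{M}^{\dsuff}[X,s]=\phi^{c}(s)$ into the \psofunc of Eq.~(\ref{eq:GeneralPSOLossFrmlPr_Limit_f}) gives immediately
\begin{equation}
L_{PSO}(f)= -\E_{X\sim\probs{\usuff}} f(X) + \E_{X\sim\probs{\dsuff}} \phi^{c}\!\left[f(X)\right] = -J_{\phi}(f,\probs{\usuff},\probs{\dsuff}),
\end{equation}
so $\argmin_{f\in\FF} L_{PSO}(f)=\argmax_{f\in\FF} J_{\phi}(f,\probs{\usuff},\probs{\dsuff})$. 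To identify this optimizer as $\phi'(\tfrac{\probi{\usuff}{X}}{\probi{\dsuff}{X}})$ I would use the PSO machinery: the \mfs for these primitives are $M^{\usuff}[X,s]=1$ and $M^{\dsuff}[X,s]=(\phi^{c})'(s)$, so $R(X,s)=(\phi^{c})'(s)$. Since $\phi$ is strictly convex and differentiable, $\phi'$ is continuous and strictly increasing and $(\phi^{c})'=(\phi')^{-1}$ on the range $\KKK$ of $\phi'$ over $\RRpos$; on that interval $R$ maps into $\RRpos$, hence $M^{\usuff},M^{\dsuff}>0$ there and the hypotheses of Theorem~\ref{thrm:PSO_func_magn} hold (if $\phi$ fails to be twice differentiable one falls back on the more general sufficient conditions at the end of Section~\ref{sec:FuncMutSupp_non_diff}). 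Theorem~\ref{thrm:PSO_func_magn} then yields $f^*(X)=T[X,\tfrac{\probi{\usuff}{X}}{\probi{\dsuff}{X}}]$ with $T=R^{-1}=\phi'$, i.e. $f^*(X)=\phi'(\tfrac{\probi{\usuff}{X}}{\probi{\dsuff}{X}})$, which lies in $\FF$ by assumption; the same value is alternatively the equality point of Eq.~(\ref{eq:FDiverg}).

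For part~2 I would chain the inequality of Eq.~(\ref{eq:FDiverg}) with part~1: for every $f\in\FF$ we have $D_{\phi}(\probs{\usuff},\probs{\dsuff})\ge J_{\phi}(f,\probs{\usuff},\probs{\dsuff})=-L_{PSO}(f)$, and since $\phi'(\tfrac{\probi{\usuff}{X}}{\probi{\dsuff}{X}})\in\FF$ the bound is attained at $f^*$, giving $D_{\phi}(\probs{\usuff},\probs{\dsuff})=-L_{PSO}(f^*)\ge -L_{PSO}(f)$. As an independent check I would verify the equality directly from Fenchel--Young, $\phi^{c}(\phi'(z))=z\,\phi'(z)-\phi(z)$: substituting $z=\tfrac{\probi{\usuff}{X}}{\probi{\dsuff}{X}}$ into $-L_{PSO}(f^*)$ and using $\E_{X\sim\probs{\usuff}}[\,\cdot\,]=\E_{X\sim\probs{\dsuff}}[\tfrac{\probi{\usuff}{X}}{\probi{\dsuff}{X}}\cdot\,]$ makes the two $\probs{\usuff}$-integrals cancel and leaves $\E_{X\sim\probs{\dsuff}}\phi(\tfrac{\probi{\usuff}{X}}{\probi{\dsuff}{X}})=D_{\phi}(\probs{\usuff},\probs{\dsuff})$.

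I expect the main fiddly point to be the bookkeeping forced by $\spp^{\usuff}\subseteq\spp^{\dsuff}$ rather than the conjugate algebra. Under this assumption $\spp^{\udbacksuff}=\emptyset$ so $L^{\udbacksuff}\equiv 0$, but on $\spp^{\dubacksuff}$ one must argue via Theorem~\ref{thrm:PSO_DU_conv} that the inner problem $\min_{s}\phi^{c}(s)$ is solved at $s=\phi'(0)$, contributing $-\phi(0)\int_{\spp^{\dubacksuff}}\probi{\dsuff}{X}\,dX$ to $L_{PSO}(f^*)$, which must be matched against the $f$-divergence contribution $\int_{\spp^{\dubacksuff}}\probi{\dsuff}{X}\,\phi(0)\,dX$; this requires $\phi$ to behave well at the boundary (e.g. $\phi(0)$ finite, or a limiting argument otherwise) and that the convergence interval $\KKK$ be $X$-invariant, which here is automatic since $T=\phi'$ carries no $X$-dependence. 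Everything else is substitution and the standard convex-conjugate identities.
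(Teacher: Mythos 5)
Your proof is correct and follows essentially the same route as the paper's: substitute the primitives to obtain $L_{PSO}(f)=-J_{\phi}(f,\probs{\usuff},\probs{\dsuff})$, read off $R(X,s)=\phi^{c}{'}(s)$ and invert it via the Legendre--Fenchel relation to get $T=\phi'$ through Theorem~\ref{thrm:PSO_func_magn}, then use the assumption $f^*\in\FF$ to turn the variational lower bound of Eq.~(\ref{eq:FDiverg}) into an equality. Your additional direct Fenchel--Young verification of $-L_{PSO}(f^*)=D_{\phi}(\probs{\usuff},\probs{\dsuff})$ and your explicit bookkeeping on $\spp^{\dubacksuff}$ (where the inner problem $\min_{s}\phi^{c}(s)$ is attained at $s=\phi'(0)$ with value $-\phi(0)$, matching the $f$-divergence contribution there) are both correct and address a point the paper's own proof leaves implicit.
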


\begin{proof}

First, introducing the given $\{ \widetilde{M}^{\usuff}
	,
	\widetilde{M}^{\dsuff}
	\}
	$ into Eq.~(\ref{eq:GeneralPSOLossFrmlPr_Limit_f}) leads to $L_{PSO}(f) \equiv -J_{\phi}(f,\probs{\usuff}, \probs{\dsuff})$. Further, since $\{ M^{\usuff}
	\left[
	X,
	s
	\right]
	=
	1,
	M^{\dsuff}
	\left[
	X,
	s
	\right]
	=
	\phi^{c}{'}(s)
	\}
	$ we have $\frac{M^{\dsuff}
		\left[
		X,
		s
		\right]
	}{M^{\usuff}
	\left[
	X,
	s
	\right]
} = \phi^{c}{'}(s)$. Denote $R	\left[
X,
s
\right]
=
\frac{M^{\dsuff}
	\left[
	X,
	s
	\right]
}{M^{\usuff}
\left[
X,
s
\right]
}
= \phi^{c}{'}(s)$ and observe that its inverse is $T
\left[
X,
z
\right]
= \phi'(z)$ due properties of LF transform. Applying Theorem \ref{thrm:PSO_func_magn}, we conclude $f^{*}(X) = \phi'(\frac{\probi{\usuff}{X}}{\probi{\dsuff}{X}})$.
Finally, since we assumed $f^{*} \in \FF$, we also have $D_{\phi}(\probs{\usuff}, \probs{\dsuff})
=
J_{\phi}(f^*, \probs{\usuff}, \probs{\dsuff}) = -L_{PSO}(f^*)$.

The required by claim strict convexity and differentiability of function $\phi$ are necessary in order to recover the estimation convergence, as was also noted in Section 5 of \citep{Nguyen10tit}. Likewise, these properties ensure that derivatives $\phi{'}$ and $\phi^{c}{'}$ are well-defined. Further, above \ms satisfy the requirements of PSO framework since the PSO derivation in Section \ref{sec:FuncMutSupp_non_diff} agrees with derivation in \citep{Nguyen10tit} when the mapping $G$ considered in our proof is equal to $G(X, s) = s$.

Finally, observe that given the above pair of \pair, the primitives $\widetilde{M}^{\usuff}$ and $\widetilde{M}^{\dsuff}$ can be recovered up to an additive constant. This constant does not affect the outcome of PSO inference since the optima $f^*$ of \psofunc stays unchanged. Yet, it changes the output of $L_{PSO}(f)$ and thus for arbitrary antiderivatives \widepair the identity $D_{\phi}(\probs{\usuff}, \probs{\dsuff}) = - L_{PSO}(f^*)$ will not be satisfied. To recover the correct additive constant, we can use information produced by $\phi(1) = 0$. Specifically, for the considered above setting the "proper" antiderivative $\widetilde{M}^{\dsuff}$ of $M^{\dsuff}$ is the one that satisfies $\widetilde{M}^{\dsuff} \left[ X, s' \right] = s'$ where $s' \triangleq \phi'(1)$.

\end{proof}

\begin{figure}
	\centering
	
	\begin{tabular}{cccc}
		
		\subfloat{\includegraphics[width=0.3\textwidth]{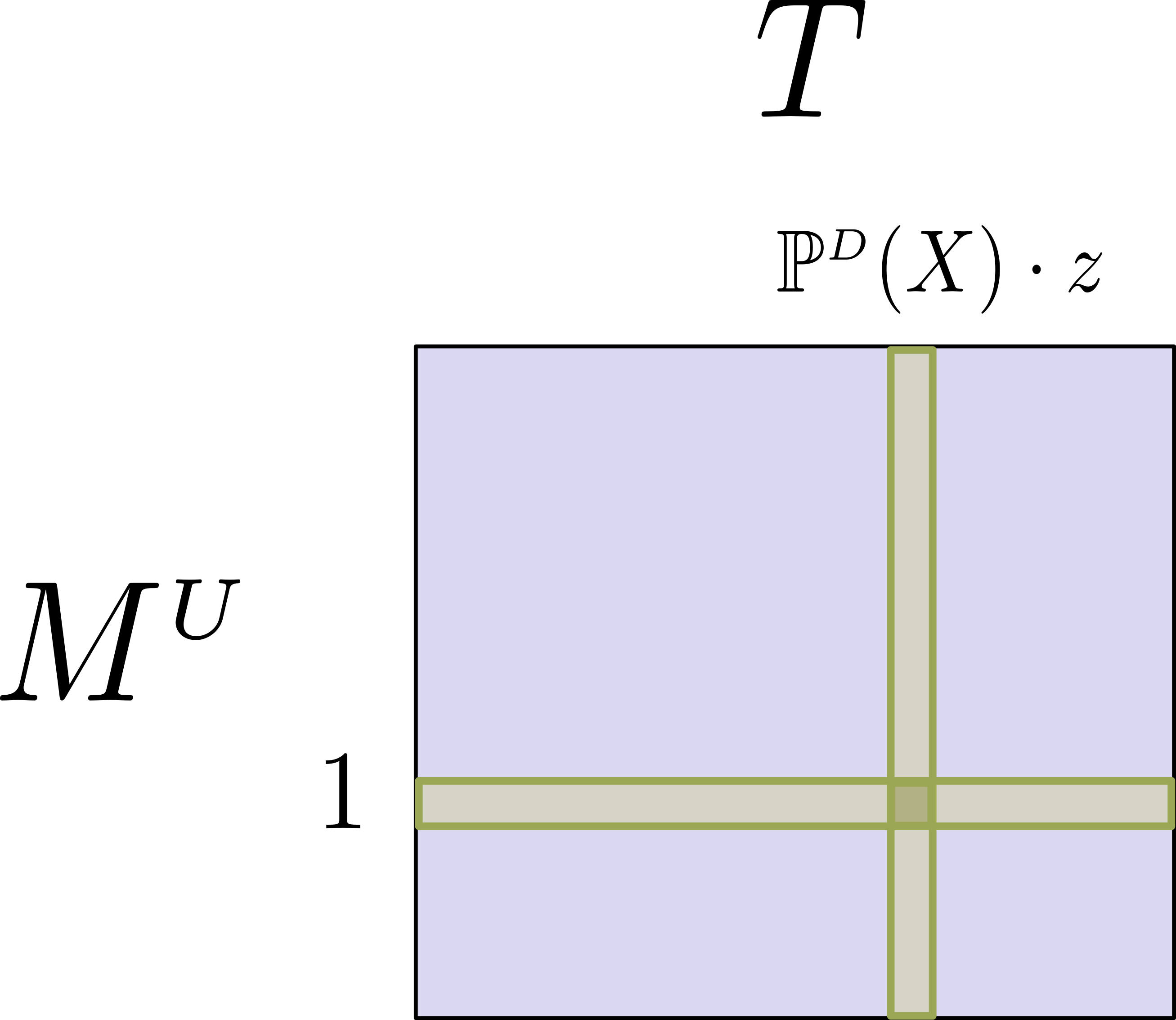}}
		
	\end{tabular}
	
	\protect
	\caption[Schematic relation between PSO and its two subgroups related in Claim \ref{thrm:PSO_Bregman_rel} and Claim \ref{thrm:PSO_FDiv_rel}.]{Schematic relation between PSO and its two subgroups related in Claim \ref{thrm:PSO_Bregman_rel} and Claim \ref{thrm:PSO_FDiv_rel}. All PSO instances within $\psoset$ can be indexed by the convergence $T$, where each $\psoset[T]$ can be further indexed by function $M^{\usuff}$ (given $T$ and $M^{\usuff}$, $M^{\dsuff}$ is derived as $M^{\dsuff} \left[ X, s \right] = T^{-1} \left[ X, s \right] \cdot M^{\usuff} \left[ X, s \right]$). Thus, all elements of $\psoset$ can be viewed as a table indexed by $T$ and $M^{\usuff}$. The column corresponding to $T \left[ X, z \right] = \probi{\dsuff}{X} \cdot z$ encapsulates PSO methods referred by Claim \ref{thrm:PSO_Bregman_rel} that minimize a separable Bregman divergence, and the row corresponding to $M^{\usuff} \left[ X, s \right] = 1$ - methods from Claim \ref{thrm:PSO_FDiv_rel} that approximate $f$-divergence. The column-row intersection is obtained at $M^{\usuff} \left[ X, s \right] = 1$, $M^{\dsuff} \left[ X, s \right] = \frac{s}{\probi{\dsuff}{X}}$ and $T \left[ X, z \right] = \probi{\dsuff}{X} \cdot z$.
	}
	\label{fig:DivRelations}
\end{figure}

Hence, in light of the above connection PSO framework can be seen as an estimation of a negative $f$-divergence between $\probs{\usuff}$ and $\probs{\dsuff}$, $-D_{\phi}(\probs{\usuff}, \probs{\dsuff})$.
Likewise, PSO is identical to an estimation framework of \citep{Nowozin16nips} when the former is limited to have \ms $\{ M^{\usuff}
\left[
X,
s
\right]
=
1,
M^{\dsuff}
\left[
X,
s
\right]
=
\phi^{c}{'}(s)
\}
$. Considering the terminology of Section \ref{sec:PSOInstAllSets}, such set of \mfs can be obtained by retrieving from each PSO subset $\psoset[T]$ an particular \mgn pair with $M^{\usuff}
\equiv 1$
and
$
M^{\dsuff}
\equiv
T^{-1}
$, with all collected pairs being equivalent to the estimation family of \citep{Nowozin16nips}, as shown in Figure \ref{fig:DivRelations}.
Due to such limitation this algorithm family is a strict subgroup of PSO.

Furthermore, the formulation of this framework in Eq.~(\ref{eq:FDiverg}) can be generalized as:
\begin{equation}
\min_{f \in \FF}
-
\E_{X \sim \probs{\usuff}}
G
\left[
X,
f(X)
\right]
+
\E_{X \sim \probs{\dsuff}}
\phi^{c}
\left[
X,
G
\left[
X,
f(X)
\right]
\right]
,
\label{eq:FDivergGeneral}
\end{equation}
where $f$ is replaced by
a transformation $G(X, s): \RR^n \times \RR \rightarrow \RR$ over the inner model $f$, and where $\phi^{c}$ is the convex-conjugate of $\phi ( X, z ): \RR^n \times \RR \rightarrow \RR$ that was extended to accept two arguments. Consequently, $G$ can be selected to obtain any required convergence $f^* = G^{-1} \odot \phi' \odot \frac{\probs{\usuff}}{\probs{\dsuff}}$
, where $\odot$ defines the composition via a second argument $g \odot h \Longleftrightarrow g(X, h(\cdot))$. 
Such extension parametrized by a pair $\{ \phi^c, G \}$ becomes identical to PSO parametrized by a pair \widepair, as is evident from our derivation in Section \ref{sec:FuncMutSupp_non_diff}. Particularly, the parametrization \widepair can be recovered from $\{ \phi^c, G \}$ via $\widetilde{M}^{\usuff} \left[ X, s \right] = G \left[ X, s \right]$ and $\widetilde{M}^{\dsuff} \left[ X, s \right] = \phi^c \left[ X, G \left[ X, s \right] \right]$.

In the backward direction, $\{ \phi^c, G \}$ can be recovered from \widepair as following. Denote the corresponding derivatives by \pair, and their ratio by $R \left[ X, s \right]$. Further, denote $s^{\bullet} \left[ X \right] \triangleq T \left[ X, 1 \right]$ where $T$ is an inverse of $R$ w.r.t. second argument.
Then, $\{ \phi^c, G \}$ can be obtained via $G \left[ X, s \right] = \widetilde{M}^{\usuff} \left[ X, s \right]$ and $\phi^c \left[ X, s \right] = \widetilde{M}^{\dsuff} \left[ X, G^{-1} \left[ X, s \right] \right] + \alpha \left[ X \right]$, with $\alpha \left[ X \right] \triangleq \widetilde{M}^{\usuff} \left[ X, s^{\bullet} \left[ X \right] \right] - \widetilde{M}^{\dsuff} \left[ X, s^{\bullet} \left[ X \right] \right]$. Note that the additive $X$-dependent component $\alpha \left[ X \right]$ ensures $\phi^c \left[ X, s' \right] = s'$ at $s' \triangleq \phi' \left[ X, 1 \right]$, which produces the condition $\phi \left[ X, 1 \right] = 0$ required by the definition of $f$-divergence. Thus, for the above "properly" normalized $\phi^c$  the expression in Eq.~(\ref{eq:FDivergGeneral}) is equal to $- D_{\phi}(\probs{\usuff}, \probs{\dsuff})$, where we extend $f$-divergence to have a form\footnote{
This extended modality can be shown to satisfy the non-negativity $D_{\phi}(q, p) \geq 0$ and the identification $D_{\phi}(q, p) = 0 \Leftrightarrow q \equiv p$, required by the statistical divergence definition. We leave the proof of that for future work since it is not the main focus of this paper.} $D_{\phi}(q, p)
=
\int
p(X)
\cdot
\phi \left[ X, \frac{q(X)}{p(X)} \right]
dX
$.

Moreover, the relation between parametrizations $\{ \phi^c, G \}$ and \widepair allows us to compute $D_{\phi}(\probs{\usuff}, \probs{\dsuff})$ from PSO loss $L_{PSO}(f)$ even in case of arbitrary antiderivatives \widepair that were not necessarily "properly" normalized. Specifically, $D_{\phi}(\probs{\usuff}, \probs{\dsuff})$ can be recovered from $L_{PSO}(f^*) \equiv \min_{f \in \FF} L_{PSO}(f)$ via:
\begin{equation}
D_{\phi}(\probs{\usuff}, \probs{\dsuff}) = - L_{PSO}(f^*) - \beta
,
\quad
\beta = \E_{X \sim \probs{\dsuff}}
\alpha \left[ X \right]
.
\label{eq:FDivergFromPSOLoss}
\end{equation}
Furthermore, if \widepair depend only on their second argument, which is often the case, then $T \left[ X, z \right]$ is also only a function of $z$ and the additive constant $\beta = \alpha = \widetilde{M}^{\usuff} \left[ T \left( 1 \right) \right] - \widetilde{M}^{\dsuff} \left[ T \left( 1 \right) \right]$ does not require the expected value computation.

The extension in Eq.~(\ref{eq:FDivergGeneral}) requires a remark about novelty and usefulness of PSO compared to works in \citep{Nguyen10tit,Nowozin16nips}. First, note that the generalization in Eq.~(\ref{eq:FDivergGeneral}) was not proposed by any study, to the best of our knowledge, although the original paper \citep{Nowozin16nips} was considering a transformation $G
\left[
f(X)
\right]$ to control the domain of particular $\phi^{c}$. Likewise, we note that the implied by $f$-divergence framework restriction $M^{\usuff}
\left[
X,
s
\right]
=
1$ over \up \mgn inevitably causes $M^{\dsuff}
\left[
X,
s
\right]$ to be an unbounded function. Such unboundedness property causes instability during optimization as discussed and empirically shown in sections \ref{sec:BoundUnboundMFs} and \ref{sec:Exper} respectively. In contrast, PSO formulation allows to easily construct bounded \ms for any desired target function by removing the aforementioned restriction (see Section \ref{sec:BoundUnboundMFs} for details).

Further, there are three main points of difference between two estimation procedures. 
PSO is defined via \pair, permitting $\{ \widetilde{M}^{\usuff}
,
\widetilde{M}^{\dsuff}
\}
$ to not have an analytical form which in turn increases the number of feasible estimators.
This allows us to propose novel techniques not considered before, such as PSO-LDE in Section \ref{sec:DeepLogPDF} which is shown in Section \ref{sec:Exper} to be superior over other state-of-the-art baselines.
Further, using $G
\left[
X,
f(X)
\right]$ instead of $G
\left[
f(X)
\right]$ is important to allow more freedom in selecting the required convergence $f^*$. Particularly, employing PSO to learn the density $\probi{\usuff}{X}$ is possible only under this two-argument setting.
Finally, PSO allows for a better intuition which is not available for the $f$-divergence formulation in \citep{Nowozin16nips}, and can further be used to analyze convergence outside of the mutual support $\spp^{\udcapsuff}$.

\subsection{Divergence Relation Summary}
\label{sec:DivSumm}

Due to above links PSO loss can be rewritten as $L_{PSO}(f) = D_{PSO}(f^*, f)
-
D_{\phi}(\probs{\usuff}, \probs{\dsuff}) - \beta$, where \psodiv $D_{PSO}(f^*, f)$ can be considered as an extension of separable Bregman divergence and where $D_{\phi}(\probs{\usuff}, \probs{\dsuff})$ is $f$-divergence defined via some $\phi$. Further, minimization of $L_{PSO}(f)$ is same as the minimization of \psodiv between $f^*$ and $f$. Finally, at the convergence $f = f^*$, $D_{PSO}(f^*, f^*)$ becomes zero and $L_{PSO}(f^*)$ is equal to $-
D_{\phi}(\probs{\usuff}, \probs{\dsuff}) - \beta$.

\section{Properties of PSO Estimators}
\label{sec:ConvA}

In above sections we saw that PSO principles are omnipresent within many statistical techniques. In this section we will investigate how these principles work in practice, by extending our analysis beyond the described above variational equilibrium. Particularly, we will study the consistency and the convergence of PSO, as also the actual equilibrium obtained via GD optimization. Furthermore, we will describe the model kernel impact on a learning task and emphasize the extreme similarity between PSO actual dynamics and the physical illustration in Figure \ref{fig:SurfForces}. Likewise, we will propose several techniques to improve the practical stability of PSO algorithms.

\subsection{Consistency and Asymptotic Normality}
\label{sec:Consist}

PSO solution $f_{\theta^*}$ is obtained by solving $\min_{f_{\theta} \in \FF} L_{PSO}(f_{\theta})$, and hence PSO belongs to the family of extremum estimators \citep{Amemiya85ae}. Members of this family are known to be consistent and asymptotically normal estimators under appropriate regularity conditions. Below we restate the corresponding theorems.

Herein, we will reduce the scope to PSO instances whose \up and \down densities have an identical support. This is required to avoid the special PSO cases for which $f_{\theta^*}$ has convergence at infinity outside of the mutual support (see theorems \ref{thrm:PSO_UD_conv} and \ref{thrm:PSO_DU_conv}). Although it is possible to evade such convergence by considering $\FF$ whose functions' output is lower and upper bounded (see Section \ref{sec:PSOStability}), we sidestep this by assuming $\spp^{\usuff} \equiv \spp^{\dsuff}$, for simplicity.

The empirical PSO loss over $N^{\usuff}$ samples from $\probs{\usuff}$ and $N^{\dsuff}$ samples from $\probs{\dsuff}$ has a form:
\begin{equation}
\hat{L}_{PSO}^{N^{\usuff},N^{\dsuff}}(f_{\theta})
=
-
\frac{1}{N^{\usuff}}
\sum_{i = 1}^{N^{\usuff}}
\widetilde{M}^{\usuff}
\left[
X^{\usuff}_{i},
f_{\theta}(X^{\usuff}_{i})
\right]
+
\frac{1}{N^{\dsuff}}
\sum_{i = 1}^{N^{\dsuff}}
\widetilde{M}^{\dsuff}
\left[
X^{\dsuff}_{i},
f_{\theta}(X^{\dsuff}_{i})
\right]
,
\label{eq:GeneralPSOLossEmpir}
\end{equation}
with its gradient $\nabla_{\theta}
\hat{L}_{PSO}^{N^{\usuff},N^{\dsuff}}(f_{\theta})
$ being defined in Eq.~(\ref{eq:GeneralPSOLossFrml}).

\begin{theorem}[Consistency]
\label{thrm:PSO_consist} 
Assume:
\begin{enumerate}
\item Parameter space $\Theta$ is a compact set.
\item $L_{PSO}(f_{\theta})$ (defined in Eq.~(\ref{eq:GeneralPSOLossFrmlPr_Limit_f})) is continuous on $\theta \in \Theta$.
\item $\exists! \theta^* \in \Theta, \forall X \in \spp^{\udcapsuff}: \probi{\usuff}{X} \cdot M^{\usuff}\left[X, f_{\theta^*}(X)\right] = \probi{\dsuff}{X} \cdot M^{\dsuff}\left[X, f_{\theta^*}(X)\right]$.
\item $\sup_{\theta \in \Theta} \left| \hat{L}_{PSO}^{N^{\usuff},N^{\dsuff}}(f_{\theta}) -  L_{PSO}(f_{\theta})\right| \pconv 0$ along with $\min(N^{\usuff},N^{\dsuff}) \rightarrow \infty$.
\end{enumerate}
Define $\hat{\theta}_{N^{\usuff},N^{\dsuff}} = \argmin_{\theta \in \Theta} \hat{L}_{PSO}^{N^{\usuff},N^{\dsuff}}(f_{\theta})$. 
Then $\hat{\theta}_{N^{\usuff},N^{\dsuff}}$ converges in probability to $\theta^*$ along with $\min(N^{\usuff},N^{\dsuff}) \rightarrow \infty$, $\hat{\theta}_{N^{\usuff},N^{\dsuff}} \pconv \theta^*$.
\end{theorem}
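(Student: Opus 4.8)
The plan is to recognize Theorem~\ref{thrm:PSO_consist} as an instance of the classical consistency result for extremum (M-)estimators (see, e.g., \citet{Amemiya85ae}), and to check that the four hypotheses supply precisely the ingredients that argument requires: a compact parameter set, a continuous limiting objective, a uniform law of large numbers for the empirical objective, and a well-separated unique minimizer. The only piece of real work is to translate Assumption~3, which is phrased as uniqueness of the solution of the PSO balance state, into the usual identification statement $L_{PSO}(f_{\theta}) > L_{PSO}(f_{\theta^*})$ for all $\theta \neq \theta^*$; everything after that is the textbook sandwich argument.

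First I would pin down that $\theta^*$ is the unique global minimizer of $L_{PSO}$ over $\Theta$. A minimizer exists because $\Theta$ is compact (Assumption~1) and $\theta \mapsto L_{PSO}(f_{\theta})$ is continuous (Assumption~2). Since $\spp^{\usuff} \equiv \spp^{\dsuff}$ by the standing assumption of this section, there are no disjoint-support contributions, so the variational characterization of Theorem~\ref{thrm:PSO} (equivalently Theorem~\ref{thrm:PSO_func_magn}, under its sufficient conditions over \pair) applies verbatim: every minimizer of $L_{PSO}$ must satisfy the balance state $\probi{\usuff}{X} \cdot M^{\usuff}[X, f_{\theta}(X)] = \probi{\dsuff}{X} \cdot M^{\dsuff}[X, f_{\theta}(X)]$ on $\spp^{\udcapsuff}$. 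Assumption~3 says $\theta^*$ is the only parameter for which this holds, hence $\theta^*$ is the unique minimizer. Consequently, for each $\varepsilon > 0$ the quantity $\eta(\varepsilon) \triangleq \big[\inf_{\theta \in \Theta,\, \norm{\theta - \theta^*} \geq \varepsilon} L_{PSO}(f_{\theta})\big] - L_{PSO}(f_{\theta^*})$ is well defined — the infimum is attained on the compact set $\{\theta \in \Theta : \norm{\theta-\theta^*} \geq \varepsilon\}$ by continuity — and strictly positive, because the attaining point differs from $\theta^*$.

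Next I would run the standard argument. Fix $\varepsilon > 0$, set $\eta = \eta(\varepsilon) > 0$, and consider the event $A_N \triangleq \{ \sup_{\theta \in \Theta} | \hat{L}_{PSO}^{N^{\usuff},N^{\dsuff}}(f_{\theta}) - L_{PSO}(f_{\theta}) | < \eta/2 \}$. On $A_N$, for any $\theta$ with $\norm{\theta - \theta^*} \geq \varepsilon$,
\[
\hat{L}_{PSO}^{N^{\usuff},N^{\dsuff}}(f_{\theta}) > L_{PSO}(f_{\theta}) - \tfrac{\eta}{2} \geq L_{PSO}(f_{\theta^*}) + \tfrac{\eta}{2} > \hat{L}_{PSO}^{N^{\usuff},N^{\dsuff}}(f_{\theta^*}),
\]
so the empirical minimizer $\hat{\theta}_{N^{\usuff},N^{\dsuff}}$ cannot lie outside the $\varepsilon$-ball around $\theta^*$; i.e.\ $A_N \subseteq \{\norm{\hat{\theta}_{N^{\usuff},N^{\dsuff}} - \theta^*} < \varepsilon\}$. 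Assumption~4 gives $\PP(A_N) \to 1$ as $\min(N^{\usuff},N^{\dsuff}) \to \infty$, so $\PP(\norm{\hat{\theta}_{N^{\usuff},N^{\dsuff}} - \theta^*} < \varepsilon) \to 1$, which is exactly $\hat{\theta}_{N^{\usuff},N^{\dsuff}} \pconv \theta^*$.

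I do not anticipate a genuine obstacle: the result is essentially a repackaging of a well-known theorem. The one spot that merits a careful sentence is the identification step — making sure that ``unique solution of the balance state'' genuinely yields ``unique global minimizer of $L_{PSO}$'', which is where the earlier variational analysis and the standing assumption $\spp^{\usuff} \equiv \spp^{\dsuff}$ (excluding the infinite-convergence cases of theorems~\ref{thrm:PSO_UD_conv}--\ref{thrm:PSO_DU_conv}) do the work. A reader uncomfortable with ``every minimizer satisfies the balance state'' can instead read Assumption~3 as directly postulating the identification / well-separated-minimum condition and invoke \citet{Amemiya85ae} without modification.
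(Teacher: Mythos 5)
Your proposal is correct and follows the same route as the paper, which simply invokes Theorem~2.1 of \citet{Newey94book} for extremum estimators after noting that Assumption~3, via the variational characterization of Theorem~\ref{thrm:PSO}, identifies $\theta^*$ as the unique minimizer of $L_{PSO}$. You have merely written out the standard well-separated-minimum sandwich argument that the cited theorem encapsulates, including the same identification step the paper states in one sentence.
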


The above assumptions are typically taken by many studies to claim the estimation consistency.
Assumption 3 ensures that there is only single vector $\theta^*$ for which PSO \bp is satisfied. Hence it ensures that $L_{PSO}(f_{\theta})$ is uniquely minimized at $\theta^*$, according to Theorem \ref{thrm:PSO}. 
Assumption 4 is the uniform convergence of empirical loss towards its population form along with $\min(N^{\usuff},N^{\dsuff}) \rightarrow \infty$.
The above consistency statement and its proof appear as theorem 2.1 in \citep{Newey94book}.

Theorem \ref{thrm:PSO_consist} ensures the estimation consistency of PSO under technical conditions over space $\Theta$. However, some of these conditions (e.g. compactness of $\Theta$) are not satisfied by NNs. Another way, taken by \citep{Nguyen10tit,Kanamori12ml}, is to use a complexity metric defined directly over the space $\FF$, such as the bracketing entropy. We shall leave this more sophisticated consistency derivation for future work.

For asymptotic normality we will apply the following statements where we use notations $\II_{\theta}(X, X') \triangleq \nabla_{\theta} 
f_{\theta}(X) \cdot \nabla_{\theta} 
f_{\theta}(X')^T$, $N \triangleq N^{\usuff} + N^{\dsuff}$, $\tau \triangleq \frac{N^{\usuff}}{N^{\dsuff}}$, $M^{\usuff}{'}(X, s) \triangleq \frac{\partial M^{\usuff}(X, s)}{\partial s}$ and
$M^{\dsuff}{'}(X, s) \triangleq \frac{\partial M^{\dsuff}(X, s)}{\partial s}$.
\begin{lemma}[Hessian]
	\label{lmm:PSO_second_der} 
Assume $\exists \theta^* \in \Theta$ s.t. $\forall X \in \spp^{\udcapsuff}: \probi{\usuff}{X} \cdot M^{\usuff}\left[X, f_{\theta^*}(X)\right] = \probi{\dsuff}{X} \cdot M^{\dsuff}\left[X, f_{\theta^*}(X)\right]$. Denote PSO convergence as $f_{\theta^*}(X) = T\left[X, \frac{\probi{\usuff}{X}}{\probi{\dsuff}{X}}\right] \equiv f^{*}(X)$, according to Theorem \ref{thrm:PSO_func_magn}.
Then the Hessian $\hess$ of population PSO loss at $\theta^*$ has a form:
\begin{equation}
\hess \equiv 
\nabla_{\theta\theta}
L_{PSO}(f_{\theta^*})
=
-
\E_{X \sim \probs{\usuff}}
M^{\usuff}{'}
\left[
X,
f^{*}(X)
\right]
\cdot
\II_{\theta^{*}}(X, X)
+
\E_{X \sim \probs{\dsuff}}
M^{\dsuff}{'}
\left[
X,
f^{*}(X)
\right]
\cdot
\II_{\theta^{*}}(X, X)
.
\label{eq:GeneralPSOLossHessian}
\end{equation}
\end{lemma}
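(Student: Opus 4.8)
The plan is to compute $\nabla_{\theta\theta} L_{PSO}(f_\theta)$ directly by differentiating the gradient expression in Eq.~(\ref{eq:GeneralPSOLossFrml_Limit}) a second time with respect to $\theta$, and then to evaluate the result at the equilibrium point $\theta^*$, where the zeroth-order terms cancel because of the balance state. Concretely, I would start from
\begin{equation*}
\nabla_{\theta} L_{PSO}(f_{\theta})
=
-\E_{X \sim \probs{\usuff}}
M^{\usuff}\left[X, f_{\theta}(X)\right]\cdot\nabla_{\theta} f_{\theta}(X)
+\E_{X \sim \probs{\dsuff}}
M^{\dsuff}\left[X, f_{\theta}(X)\right]\cdot\nabla_{\theta} f_{\theta}(X),
\end{equation*}
and apply the product rule. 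Each summand $M^{\usuff}\left[X, f_{\theta}(X)\right]\cdot\nabla_{\theta} f_{\theta}(X)$ differentiated w.r.t. $\theta$ produces two pieces: one from differentiating the scalar $M^{\usuff}[X,f_\theta(X)]$ (which, by the chain rule and $\frac{\partial}{\partial s}M^{\usuff}(X,s)=M^{\usuff}{'}(X,s)$, yields $M^{\usuff}{'}\left[X,f_\theta(X)\right]\cdot\nabla_\theta f_\theta(X)\nabla_\theta f_\theta(X)^T = M^{\usuff}{'}\left[X,f_\theta(X)\right]\cdot\II_\theta(X,X)$), and one from differentiating $\nabla_\theta f_\theta(X)$ itself (which yields $M^{\usuff}\left[X,f_\theta(X)\right]\cdot\nabla_{\theta\theta} f_\theta(X)$). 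The same for the \down term with the opposite sign.

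The key step is then to observe that at $\theta = \theta^*$ the second-type terms vanish in aggregate. Indeed, the sum of the Hessian-of-$f$ contributions is
\begin{equation*}
-\E_{X \sim \probs{\usuff}} M^{\usuff}\left[X, f_{\theta^*}(X)\right]\cdot\nabla_{\theta\theta} f_{\theta^*}(X)
+\E_{X \sim \probs{\dsuff}} M^{\dsuff}\left[X, f_{\theta^*}(X)\right]\cdot\nabla_{\theta\theta} f_{\theta^*}(X).
\end{equation*}
Writing both expectations as integrals against $dX$ over $\spp^{\udcapsuff}$ (using $\spp^{\usuff}\equiv\spp^{\dsuff}$, as assumed throughout Section~\ref{sec:ConvA}), the integrand becomes $\bigl[-\probi{\usuff}{X}\cdot M^{\usuff}\left[X,f_{\theta^*}(X)\right] + \probi{\dsuff}{X}\cdot M^{\dsuff}\left[X,f_{\theta^*}(X)\right]\bigr]\cdot\nabla_{\theta\theta} f_{\theta^*}(X)$, and the bracket is identically zero by the PSO balance state hypothesis $\probi{\usuff}{X}\cdot M^{\usuff}\left[X, f_{\theta^*}(X)\right] = \probi{\dsuff}{X}\cdot M^{\dsuff}\left[X, f_{\theta^*}(X)\right]$. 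Hence only the first-type terms survive, and substituting $f_{\theta^*}(X) = f^*(X) = T\left[X,\frac{\probi{\usuff}{X}}{\probi{\dsuff}{X}}\right]$ (Theorem~\ref{thrm:PSO_func_magn}) and $\II_{\theta^*}(X,X) = \nabla_\theta f_{\theta^*}(X)\nabla_\theta f_{\theta^*}(X)^T$ gives exactly Eq.~(\ref{eq:GeneralPSOLossHessian}).

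The only real subtlety — and the step I would be most careful about — is justifying the interchange of differentiation and expectation (i.e. differentiating under the integral/expectation sign twice), which requires mild regularity: $M^{\usuff}, M^{\dsuff}$ differentiable in $s$ with $M^{\usuff}{'}, M^{\dsuff}{'}$ suitably integrable against $\probs{\usuff},\probs{\dsuff}$, and $f_\theta$ twice differentiable in $\theta$ with dominated derivatives. These are implicit standing assumptions of the asymptotic-normality analysis (they are exactly the conditions one needs for an extremum estimator's asymptotic expansion), so I would state them as such and carry out the formal computation above. Everything else is a routine application of the product and chain rules.
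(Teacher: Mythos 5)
Your proposal is correct and follows essentially the same route as the paper's proof: differentiate the gradient of $L_{PSO}$ once more via the product rule, then observe that the terms involving $\nabla_{\theta\theta} f_{\theta^*}(X)$ cancel in aggregate because the integrand carries the factor $-\probi{\usuff}{X}\cdot M^{\usuff}\left[X,f^*(X)\right]+\probi{\dsuff}{X}\cdot M^{\dsuff}\left[X,f^*(X)\right]$, which vanishes by the balance state. Your additional remark on justifying differentiation under the expectation is a sensible point the paper leaves implicit.
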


\begin{lemma}[Gradient Variance]
	\label{lmm:PSO_infor} 
	Under the same setting,
	the variance of\\ $\nabla_{\theta}
	\hat{L}_{PSO}^{N^{\usuff},N^{\dsuff}}(f_{\theta})$ at $\theta^*$ has a form $	\variance
	\left[
	\nabla_{\theta}
	\hat{L}_{PSO}^{N^{\usuff},N^{\dsuff}}(f_{\theta^*})
	\right]
	= \frac{1}{N} \JJ
	$ where:
	\begin{multline}
\JJ
=
\frac{\tau + 1}{\tau}
\E_{X \sim \probs{\usuff}}
\left[
M^{\usuff}
\left[
X,
f^*(X)
\right]^2
\cdot
\II_{\theta^*}(X, X)
\right]
+\\
+
(\tau + 1)
\E_{X \sim \probs{\dsuff}}
\left[
M^{\dsuff}
\left[
X,
f^*(X)
\right]^2
\cdot
\II_{\theta^*}(X, X)
\right]
-\\
-
\frac{(\tau + 1)^2}{\tau}
\E_{\substack{X \sim \probs{\usuff}\\ X' \sim \probs{\dsuff}}}
M^{\usuff}
\left[
X,
f^{*}(X)
\right]
\cdot
M^{\dsuff}
\left[
X',
f^{*}(X')
\right]
\cdot
\II_{\theta^{*}}(X, X')
.
	\label{eq:GeneralPSOLossGradientOuterPr}
	\end{multline}
\end{lemma}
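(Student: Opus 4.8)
The plan is to treat the empirical gradient at $\theta^*$ as a difference of two independent sample means and compute its covariance matrix term by term, using PSO \bp only to simplify the two mean vectors.

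First I would recall from Eq.~(\ref{eq:GeneralPSOLossFrml}) that, writing $a(X)\triangleq M^{\usuff}[X,f^*(X)]\cdot\nabla_{\theta}f_{\theta^*}(X)$ and $b(X)\triangleq M^{\dsuff}[X,f^*(X)]\cdot\nabla_{\theta}f_{\theta^*}(X)$,
\begin{equation}
\nabla_{\theta}\hat{L}_{PSO}^{N^{\usuff},N^{\dsuff}}(f_{\theta^*})
=
-\frac{1}{N^{\usuff}}\sum_{i=1}^{N^{\usuff}} a(X^{\usuff}_i)
+\frac{1}{N^{\dsuff}}\sum_{i=1}^{N^{\dsuff}} b(X^{\dsuff}_i),
\end{equation}
where $\{X^{\usuff}_i\}$ and $\{X^{\dsuff}_i\}$ are two independent i.i.d.\ samples from $\probs{\usuff}$ and $\probs{\dsuff}$, and where at $\theta^*$ we may substitute $f_{\theta^*}(X)=f^*(X)$ since supports coincide. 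Independence of the two batches kills the cross-covariance between the two sums, and i.i.d.-ness inside each batch gives $\variance[\nabla_{\theta}\hat{L}_{PSO}^{N^{\usuff},N^{\dsuff}}(f_{\theta^*})]=\tfrac{1}{N^{\usuff}}\cov_{X\sim\probs{\usuff}}[a(X)]+\tfrac{1}{N^{\dsuff}}\cov_{X\sim\probs{\dsuff}}[b(X)]$.

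Next I would expand each covariance as $\cov[a]=\E[aa^{T}]-(\E a)(\E a)^{T}$. The second moments directly produce the first two terms of $\JJ$: $\E_{X\sim\probs{\usuff}}[a(X)a(X)^{T}]=\E_{X\sim\probs{\usuff}}[M^{\usuff}[X,f^*(X)]^{2}\,\II_{\theta^*}(X,X)]$, and analogously for the $\probs{\dsuff}$ term. For the mean vectors, the key step is PSO \bp: $\probi{\usuff}{X}\cdot M^{\usuff}[X,f^*(X)]=\probi{\dsuff}{X}\cdot M^{\dsuff}[X,f^*(X)]$ on $\spp^{\udcapsuff}$, which together with $\spp^{\usuff}\equiv\spp^{\dsuff}$ gives $\E_{X\sim\probs{\usuff}}[a(X)]=\int\probi{\usuff}{X}M^{\usuff}\nabla_{\theta}f_{\theta^*}\,dX=\int\probi{\dsuff}{X}M^{\dsuff}\nabla_{\theta}f_{\theta^*}\,dX=\E_{X\sim\probs{\dsuff}}[b(X)]$; call this common vector $\mu$ (it is just the first-order condition $\nabla_{\theta}L_{PSO}(f_{\theta^*})=0$). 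The two $-\mu\mu^{T}$ contributions then combine, and by independence of $X$ and $X'$, $\mu\mu^{T}=\E_{X\sim\probs{\usuff},\,X'\sim\probs{\dsuff}}[M^{\usuff}[X,f^*(X)]\,M^{\dsuff}[X',f^*(X')]\,\II_{\theta^*}(X,X')]$, which is exactly the cross term in $\JJ$.

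Finally I would rewrite the batch sizes via $N=N^{\usuff}+N^{\dsuff}$ and $\tau=N^{\usuff}/N^{\dsuff}$, so that $\tfrac{1}{N^{\usuff}}=\tfrac{\tau+1}{\tau N}$, $\tfrac{1}{N^{\dsuff}}=\tfrac{\tau+1}{N}$ and $\tfrac{1}{N^{\usuff}}+\tfrac{1}{N^{\dsuff}}=\tfrac{(\tau+1)^2}{\tau N}$; collecting coefficients yields $\variance[\nabla_{\theta}\hat{L}_{PSO}^{N^{\usuff},N^{\dsuff}}(f_{\theta^*})]=\tfrac{1}{N}\JJ$ with $\JJ$ as stated. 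The computation is elementary, so there is no genuine obstacle; the only points needing care are (i) recording that the two sample batches are drawn independently so that the cross-covariance between the two sums vanishes, and (ii) invoking PSO \bp together with $\spp^{\usuff}\equiv\spp^{\dsuff}$ to identify the two mean vectors and to turn the $-\mu\mu^{T}$ terms into a single expectation over the product measure $\probs{\usuff}\times\probs{\dsuff}$.
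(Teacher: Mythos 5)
Your proof is correct and follows essentially the same route as the paper's: both compute the variance as $\tfrac{1}{N^{\usuff}}\cov_{\probs{\usuff}}[a]+\tfrac{1}{N^{\dsuff}}\cov_{\probs{\dsuff}}[b]$ (the paper arrives there by expanding the double sums explicitly, you by invoking independence of the two batches directly), and both then use PSO balance state to identify $\bar{\mu}_{\theta^*}^{\usuff}=\bar{\mu}_{\theta^*}^{\dsuff}$ and rewrite the combined $-\mu\mu^{T}$ term as the expectation over the product measure with coefficient $\tfrac{(\tau+1)^2}{\tau}$. No gaps.
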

$\hess$ and $\JJ$ have several additional forms that appear in Appendix \ref{sec:PSO_derivatives_proof} along with the lemmas' proofs.

\begin{theorem}[Asymptotic Normality]
	\label{thrm:PSO_asymp_norm} 
Given assumptions and definitions of Theorem \ref{thrm:PSO_consist}, assume additionally:
	\begin{enumerate}
		\item $\theta^* \in \text{int}(\Theta)$.
		\item $\hat{L}_{PSO}^{N^{\usuff},N^{\dsuff}}(f_{\theta})$ is twice continuously differentiable in a neighborhood of $\theta^*$.
		\item $\nabla_{\theta\theta}
		L_{PSO}(f_{\theta})$ is continuous in $\theta$.
		\item Both $\JJ$ and $\hess$ are non-singular matrices.
		\item Each entry of $\nabla_{\theta\theta}
		L_{PSO}(f_{\theta})$ and of $\JJ$ is uniformly bounded by an integrable function.
		\item $\tau > 0$ is a finite and fixed scalar.
	\end{enumerate}
Then $\sqrt{N}
\cdot
\left[
\hat{\theta}_{N^{\usuff},N^{\dsuff}}
-
\theta^*
\right]
$ converges in distribution to $\NN(0, 
\hess^{-1} \JJ \hess^{-1})$
along with $N \rightarrow \infty$.

\end{theorem}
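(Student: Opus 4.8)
The plan is to run the classical M-estimator argument for asymptotic normality (essentially Theorem 3.1 of \citep{Newey94book}), adapted to the two-sample structure of PSO, so that the main work is checking the hypotheses and reading off the limiting covariance. First I would use Theorem \ref{thrm:PSO_consist} to get $\hat{\theta}_{N^{\usuff},N^{\dsuff}} \pconv \theta^*$; since $\theta^* \in \text{int}(\Theta)$, with probability tending to one $\hat{\theta}_{N^{\usuff},N^{\dsuff}}$ is interior and hence a stationary point of the empirical loss, $\nabla_{\theta} \hat{L}_{PSO}^{N^{\usuff},N^{\dsuff}}(f_{\hat{\theta}}) = 0$. Using assumption 2 I would then expand each coordinate of this gradient around $\theta^*$ by the mean value theorem, obtaining $0 = \nabla_{\theta} \hat{L}_{PSO}^{N^{\usuff},N^{\dsuff}}(f_{\theta^*}) + \bar{\hess}_N \cdot (\hat{\theta} - \theta^*)$, where $\bar{\hess}_N$ has $k$-th row equal to the $k$-th row of $\nabla_{\theta\theta} \hat{L}_{PSO}^{N^{\usuff},N^{\dsuff}}$ evaluated at an intermediate point $\bar{\theta}_k$ on the segment joining $\hat{\theta}$ and $\theta^*$. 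Once $\bar{\hess}_N$ is invertible (which holds eventually) this rearranges to $\sqrt{N}(\hat{\theta} - \theta^*) = -\bar{\hess}_N^{-1} \cdot \sqrt{N}\, \nabla_{\theta} \hat{L}_{PSO}^{N^{\usuff},N^{\dsuff}}(f_{\theta^*})$, so it remains to establish a CLT for the score and convergence of $\bar{\hess}_N$.

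For the score, note that $\nabla_{\theta} \hat{L}_{PSO}^{N^{\usuff},N^{\dsuff}}(f_{\theta^*})$ is a difference of two \emph{independent} i.i.d.\ sample means (one over $\probs{\usuff}$-samples, one over $\probs{\dsuff}$-samples), each of size growing linearly with $N$ since $\tau$ is fixed (assumption 6). Its expectation is $\nabla_{\theta} L_{PSO}(f_{\theta^*})$, which vanishes: writing both expectations as integrals against $dX$ and invoking the PSO balance state of Theorem \ref{thrm:PSO} at $\theta^*$, $\probi{\usuff}{X}\, M^{\usuff}[X,f_{\theta^*}(X)] = \probi{\dsuff}{X}\, M^{\dsuff}[X,f_{\theta^*}(X)]$, the two integrands cancel pointwise. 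By independence of the two samples, $\variance\left[\nabla_{\theta} \hat{L}_{PSO}^{N^{\usuff},N^{\dsuff}}(f_{\theta^*})\right] = \frac{1}{N}\JJ$ with $\JJ$ exactly as in Lemma \ref{lmm:PSO_infor}; the cross term $-\frac{(\tau+1)^2}{\tau}\, \E_{X\sim\probs{\usuff},\,X'\sim\probs{\dsuff}}[\,\cdots]$ arises because each of the two sample means has a common nonzero mean $\mu \triangleq \E_{X\sim\probs{\usuff}}[M^{\usuff}\nabla_{\theta} f_{\theta^*}] = \E_{X\sim\probs{\dsuff}}[M^{\dsuff}\nabla_{\theta} f_{\theta^*}]$ (equal by the balance state), so the centering contributes $-(1/N^{\usuff} + 1/N^{\dsuff})\,\mu\mu^{T}$. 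Assumption 5 dominates the relevant second moments, so the Lindeberg condition holds for each of the two triangular arrays, and the Cram\'er--Wold device gives $\sqrt{N}\,\nabla_{\theta} \hat{L}_{PSO}^{N^{\usuff},N^{\dsuff}}(f_{\theta^*}) \dconv \NN(0,\JJ)$.

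For the Hessian, assumptions 3 and 5 yield a uniform law of large numbers on a compact neighborhood $\mathcal{B}$ of $\theta^*$, $\sup_{\theta\in\mathcal{B}} \| \nabla_{\theta\theta} \hat{L}_{PSO}^{N^{\usuff},N^{\dsuff}}(f_{\theta}) - \nabla_{\theta\theta} L_{PSO}(f_{\theta}) \| \pconv 0$; combined with continuity of $\nabla_{\theta\theta} L_{PSO}$ (assumption 3) and $\bar{\theta}_k \pconv \theta^*$ (each $\bar{\theta}_k$ is squeezed between $\hat{\theta}$ and $\theta^*$), every row of $\bar{\hess}_N$ converges in probability to the corresponding row of $\hess = \nabla_{\theta\theta} L_{PSO}(f_{\theta^*})$ of Lemma \ref{lmm:PSO_second_der}. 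Since $\hess$ is nonsingular (assumption 4), $\bar{\hess}_N$ is eventually invertible with $\bar{\hess}_N^{-1} \pconv \hess^{-1}$ by continuity of matrix inversion. Slutsky's theorem then gives $\sqrt{N}(\hat{\theta}_{N^{\usuff},N^{\dsuff}} - \theta^*) = -\bar{\hess}_N^{-1}\,\sqrt{N}\,\nabla_{\theta} \hat{L}_{PSO}^{N^{\usuff},N^{\dsuff}}(f_{\theta^*}) \dconv \NN\!\left(0,\ \hess^{-1}\JJ\hess^{-1}\right)$, using symmetry of $\hess$.

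The two places needing genuine care are: (i) the vanishing of the population score at $\theta^*$, which is precisely the variational identity of Theorem \ref{thrm:PSO} and must be argued with attention to interchanging differentiation and integration; and (ii) the uniform LLN for the Hessian together with the random intermediate points $\bar{\theta}_k$, which is the usual technical bottleneck in M-estimation and is exactly what the integrable-domination assumption 5 is there to handle. The restriction to $\spp^{\usuff}\equiv\spp^{\dsuff}$ (so that Theorems \ref{thrm:PSO_UD_conv}--\ref{thrm:PSO_DU_conv} do not force $f_{\theta^*}$ to be infinite somewhere) is what keeps $\theta^*$, $\hess$, and $\JJ$ well defined; everything else is routine bookkeeping with the two-sample CLT.
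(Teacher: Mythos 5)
Your proposal is correct and follows essentially the same route as the paper's proof: first-order conditions at the (eventually interior) minimizer, a mean-value expansion of the score around $\theta^*$, the two-sample CLT of Lemma \ref{lmm:PSO_grad_conv} giving $\sqrt{N}\,\nabla_{\theta}\hat{L}_{PSO}^{N^{\usuff},N^{\dsuff}}(f_{\theta^*})\dconv\NN(0,\JJ)$, a uniform LLN plus continuity for the Hessian at the intermediate point, and Slutsky. Your coordinate-wise statement of the mean-value theorem (with a possibly different intermediate point per row) is in fact slightly more careful than the paper's vector-valued version, and your accounting of the cross term in $\JJ$ via the common mean $\mu=\bar{\mu}_{\theta^*}^{\usuff}=\bar{\mu}_{\theta^*}^{\dsuff}$ matches the computation in the paper's Lemma \ref{lmm:PSO_infor}.
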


Proof of the above theorem is in Appendix \ref{sec:AsympNormalProof}. Thus, we can see that for large sample sizes the parametric estimation error $\sqrt{N^{\usuff} + N^{\dsuff}}
\cdot
\left[
\hat{\theta}_{N^{\usuff},N^{\dsuff}}
-
\theta^*
\right]
$ is normal with zero mean and covariance matrix $\Sigma = \hess^{-1} \JJ \hess^{-1}$. Further, it is possible to simplify an expression for $\Sigma$ when considering \pair of a specific PSO instance. Observe also that none of the above theorems and lemmas require the analytical knowledge of $\{ \widetilde{M}^{\usuff}
,
\widetilde{M}^{\dsuff}
\}
$.

\subsection{Bounded vs Unbounded Magnitude Functions}
\label{sec:BoundUnboundMFs}

Any considered functions \pair will produce PSO \bp at the convergence, given that they satisfy conditions of Theorem \ref{thrm:PSO_func_magn}. Further, as was shown in Section \ref{sec:PSOInst}, there is infinite number of possible \ms within $\psoset[T]$ that will lead to the same convergence $T$. Thus, we need analytical tools to establish the superiority of one \mf pair over another. While this is an advanced estimation topic from robust statistics and is beyond this paper's scope, herein we describe one desired property for these functions to have - boundedness.

In Tables \ref{tbl:PSOInstances1}-\ref{tbl:PSOInstances5} we can see many choices of \pair, with both bounded (e.g. NCE in Table \ref{tbl:PSOInstances1}) and unbounded (e.g. IS in Table \ref{tbl:PSOInstances1}) outputs.
Further, note that IS method has a \emph{magnitude} function $M^{\dsuff}\left[X,f_{\theta}(X)\right] = \frac{\exp (f_{\theta}(X))}{\probi{\dsuff}{X}}$ with outputs that can be extremely high or low, depending on the difference $[f_{\theta}(X) - \log \probi{\dsuff}{X}]$. From Eq.~(\ref{eq:GeneralPSOLossFrml}) we can likewise see that the gradient contribution of the \down term in PSO loss is $M^{\dsuff}\left[X,f_{\theta}(X)\right] \cdot \nabla_{\theta} f_{\theta}(X)$. Thus, in case $f_{\theta}(X) \gg \log \probi{\dsuff}{X}$,  high values from $M^{\dsuff}(\cdot)$ will produce gradients with large norm. Such large norm causes instability during the optimization, known as exploding gradients problem in DL community. Intuitively, when we make a large step inside $\theta$-space, the consequences can be unpredictable, especially for highly non-linear models such as modern NNs.

Therefore, in practice the loss with bounded gradient is preferred. Such conclusion was also empirically supported in context of unnormalized density estimation \citep{Pihlaja12arxiv}. Further, while it is possible to solve this issue by for example gradient clipping and by decreasing the learning rate \citep{Pascanu12corr}, such solutions also slow down the entire learning process.
PSO framework allows to achieve the desired gradient boundedness by bounding \mfs' outputs via the following lemma.

\begin{lemma}[PSO Consistent Modification]
	\label{lmm:PSO_consist_modific} 
	Consider any PSO instance $[M^{\usuff}, M^{\dsuff}] \in \psoset$ with the corresponding convergence interval $\KKK$ on which criteria of Theorem \ref{thrm:PSO_func_magn} hold. Define $D(X, s): \RR^n \times \KKK \rightarrow \RR$ to be a continuous and positive function on $s \in \KKK$ for any $X \in \spp^{\udcapsuff}$. Then, $[M^{\usuff}, M^{\dsuff}]$ and $[\frac{M^{\usuff}}{D}, \frac{M^{\dsuff}}{D} ]$ are PSO consistent (i.e. have identical convergence).
\end{lemma}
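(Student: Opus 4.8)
The plan is to verify that the modified magnitudes $[\frac{M^{\usuff}}{D}, \frac{M^{\dsuff}}{D}]$ still satisfy the hypotheses of Theorem~\ref{thrm:PSO_func_magn}, so that the convergence $T$ is unchanged. The crucial observation is that the convergence of a PSO instance depends on the pair $[M^{\usuff}, M^{\dsuff}]$ only through their ratio $R \left[ X, s \right] = \frac{M^{\dsuff}(X, s)}{M^{\usuff}(X, s)}$, whose inverse $T = R^{-1}$ describes the equilibrium $f^*(X) = T\left[X, \frac{\probi{\usuff}{X}}{\probi{\dsuff}{X}}\right]$. Dividing both magnitudes by the same factor $D(X, s)$ leaves this ratio untouched: $\frac{M^{\dsuff}(X, s) / D(X, s)}{M^{\usuff}(X, s) / D(X, s)} = \frac{M^{\dsuff}(X, s)}{M^{\usuff}(X, s)} = R(X, s)$.

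The steps I would carry out, in order, are the following. First, note that since $[M^{\usuff}, M^{\dsuff}] \in \psoset$, by definition there is some convergence interval $\KKK$ on which all three conditions of Theorem~\ref{thrm:PSO_func_magn} hold: $R$ is continuous, strictly increasing and bijective from $\KKK$ onto $\RRpos$; $M^{\usuff}$ and $M^{\dsuff}$ are continuous and positive on $\KKK$; and the range of $\FF$ is $\KKK$. Second, define the modified pair $\bar{M}^{\usuff} \triangleq \frac{M^{\usuff}}{D}$ and $\bar{M}^{\dsuff} \triangleq \frac{M^{\dsuff}}{D}$, and observe that since $D$ is continuous and positive on $\KKK$ (for each $X \in \spp^{\udcapsuff}$), the quotients $\bar{M}^{\usuff}$ and $\bar{M}^{\dsuff}$ are continuous (quotient of continuous functions with nonvanishing denominator) and positive (quotient of positive functions) on $\KKK$ — so condition~2 of Theorem~\ref{thrm:PSO_func_magn} is preserved. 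Third, compute the modified ratio $\bar{R}(X, s) \triangleq \frac{\bar{M}^{\dsuff}(X, s)}{\bar{M}^{\usuff}(X, s)} = \frac{M^{\dsuff}(X, s)/D(X, s)}{M^{\usuff}(X, s)/D(X, s)} = R(X, s)$, so $\bar{R} \equiv R$ is still continuous, strictly increasing and bijective from $\KKK$ onto $\RRpos$ — condition~1 is preserved. Condition~3 (range of $\FF$ is $\KKK$) is about $\FF$ and $\KKK$, neither of which was altered. Fourth, conclude by invoking Theorem~\ref{thrm:PSO_func_magn} applied to $[\bar{M}^{\usuff}, \bar{M}^{\dsuff}]$: the minimizer $\bar{f}^* = \arginf_{f \in \FF} \bar{L}^{\udcapsuff}(f)$ satisfies $\forall X \in \spp^{\udcapsuff}: \bar{f}^*(X) = \bar{T}\left[X, \frac{\probi{\usuff}{X}}{\probi{\dsuff}{X}}\right]$ with $\bar{T} = \bar{R}^{-1} = R^{-1} = T$. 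Hence $[M^{\usuff}, M^{\dsuff}]$ and $[\frac{M^{\usuff}}{D}, \frac{M^{\dsuff}}{D}]$ have the same convergence $T$, i.e. they lie in the same subset $\psoset[T]$ and are PSO consistent by the terminology of Section~\ref{sec:PSOInstAllSets}.

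There is no substantial obstacle here — the statement is essentially immediate once one recognizes that the equilibrium depends on the magnitudes only through their ratio, and that the ratio is invariant under common rescaling by a function of $(X, s)$. The only point requiring a sentence of care is that $D$ must be continuous \emph{and} positive precisely on $\KKK$ (not merely measurable or nonzero), because these are exactly the properties needed to keep $\bar{M}^{\usuff}, \bar{M}^{\dsuff}$ in the admissible class of Theorem~\ref{thrm:PSO_func_magn}; this is why those hypotheses on $D$ are stated in the lemma. If one also wanted the unconstrained-range conclusion of Theorem~\ref{thrm:PSO_func_magn_unconstr} to carry over, one would additionally need $D$ extended continuously and positively to all of $\RR$ with the sign/ordering conditions on $\KKK^{-}$ and $\KKK^{+}$ still holding after division, but the lemma as stated only claims identical convergence on the mutual support, for which the above argument suffices.
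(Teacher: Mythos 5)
Your proof is correct and follows essentially the same route as the paper's: the paper likewise argues that division by a continuous positive $D$ preserves both the ratio $R$ (hence the convergence $T$ and membership in $\psoset[T]$) and the continuity/positivity required for feasibility under Theorem~\ref{thrm:PSO_func_magn}. Your write-up is simply a more explicit, condition-by-condition version of the paper's one-line argument, and your closing remark about Theorem~\ref{thrm:PSO_func_magn_unconstr} is a reasonable extra caveat that the paper does not state.
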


The proof is trivial, by noting that the ratio of both pairs is preserved which places them into the same PSO subset $\psoset[T]$. The feasibility of the second pair is a result of the first pair's feasibility and of the fact that a devision by $D$ does not change the sign and continuity of \mfs.

To produce bounded \ms, consider any pair of functions $\{ M^{\usuff}\left[X, s \right], M^{\dsuff}\left[X, s \right] \}$ with some desired PSO convergence $T$. In case these are unbounded functions, a new pair of bounded functions can be constructed as 
\begin{equation}
M^{\usuff}_{bounded}\left[X, s \right]
=
\frac{M^{\usuff}\left[X, s \right]}{|M^{\usuff}\left[X, s \right]| + |M^{\dsuff} \left[X, s \right]|}
,
\quad
M^{\dsuff}_{bounded}\left[X, s \right]
=
\frac{M^{\dsuff}\left[X, s \right]}{|M^{\usuff}\left[X, s \right]| + |M^{\dsuff} \left[X, s \right]|}
.
\label{eq:BoundedFuncTransform}
\end{equation}
It is clear from their structure that the new functions' outputs are in $[-1, 1]$. Furthermore, their PSO convergence will be identical to the one of the original pair, due to Lemma \ref{lmm:PSO_consist_modific}. Similarly, we can replace the absolute value also by other norms.

\emph{Magnitudes} of many popular estimation methods (e.g. NCE, logistic loss, cross-entropy) are already bounded, making them more stable compared to unbounded variants. This may explain their wide adaptation in Machine Learning community.
In Section \ref{sec:DeepLogPDF} we will use the above transformation to develop a new family of robust log-pdf estimators.

\subsection{Statistics of Surface Change}
\label{sec:DiffStats}

Herein we will analyze statistical properties of $f_{\theta}$'s evolution during the optimization.
To this end, define the \df $df_{\theta_t}(X) \triangleq f_{\theta_{t + 1}}(X) - f_{\theta_{t}}(X)$ as a change of $f_{\theta}(X)$ after $t$-th GD iteration. Its first-order Taylor approximation is:
\begin{multline}
df_{\theta}(X)
\approx
- \delta \cdot 
\nabla_{\theta} f_{\theta}(X)^T
\cdot
\nabla_{\theta}
\hat{L}_{PSO}^{N^{\usuff},N^{\dsuff}}(f_{\theta})
=\\
=
\delta \cdot 
\Bigg[
\frac{1}{N^{\usuff}}
\sum_{i = 1}^{N^{\usuff}}
M^{\usuff}\left[X^{\usuff}_{i},f_{\theta}(X^{\usuff}_{i})\right]
\cdot
g_{\theta}(X, X^{\usuff}_{i})
-
\frac{1}{N^{\dsuff}}
\sum_{i = 1}^{N^{\dsuff}}
M^{\dsuff}\left[X^{\dsuff}_{i},f_{\theta}(X^{\dsuff}_{i})\right]
\cdot
g_{\theta}(X, X^{\dsuff}_{i})
\Bigg]
,
\label{eq:PSODffrntl}
\end{multline}
where $\delta$ is the learning rate, value of $\theta$ is the one before GD iteration, and $\nabla_{\theta}
\hat{L}_{PSO}^{N^{\usuff},N^{\dsuff}}(f_{\theta})$ is the loss gradient. When the considered model is NN, the above first-order dynamics are typically a very good approximation of the real $df_{\theta}(X)$ \citep{Jacot18nips,Kopitkov19arxiv_spectrum}. Further, when $f_{\theta}$ belongs to RKHS, the above approximation becomes an identity. Therefore, below we will treat Eq.~(\ref{eq:PSODffrntl}) as an equality, neglecting the fact that this is only the approximation.

\begin{theorem}[Differential Statistics]
	\label{thrm:PSO_diff_stats_props} 
Denote $F_{\theta}^{\tsuff}(X)
\triangleq
\probi{\usuff}{X} \cdot 
M^{\usuff}\left[X,f_{\theta}(X)\right]
-
\probi{\dsuff}{X} \cdot 
M^{\dsuff}\left[X,f_{\theta}(X)\right]
$ to be a difference of two point-wise forces $F_{\theta}^{\usuff}$ and $F_{\theta}^{\dsuff}$ defined in Section \ref{sec:PromOptim}, $F_{\theta}^{\tsuff}(X) = F_{\theta}^{\usuff}(X) - F_{\theta}^{\dsuff}(X)$. Additionally, define $G_{\theta}$ to be the integral operator $[G_{\theta} u](\cdot) = \int
g_{\theta}(\cdot, X)
u(X)
dX
$.
Then, considering training points as random i.i.d. realizations from the corresponding densities $\probs{\usuff}$ and $\probs{\dsuff}$, the expected value and the covariance of $df_{\theta}(X)$ at any fixed $\theta$ are:
\begin{multline}
\expectv
\left[
df_{\theta}(X)
\right]
=
\delta \cdot 
\bigg[
\E_{X' \sim \probs{\usuff}}
\Big[
M^{\usuff}\left[X',f_{\theta}(X')\right]
\cdot
g_{\theta}(X, X')
\Big]
-\\
-
\E_{X' \sim \probs{\dsuff}}
\Big[
M^{\dsuff}\left[X',f_{\theta}(X')\right]
\cdot
g_{\theta}(X, X')
\Big]
\bigg]
=
\delta \cdot
\int
g_{\theta}(X', X)
\cdot
F_{\theta}^{\tsuff}(X')
dX'
=
\delta \cdot
[G_{\theta} F_{\theta}^{\tsuff}](X)
,
\label{eq:PointChangePSOLossExpV}
\end{multline}
\begin{equation}
\cov
\left[
df_{\theta}(X),
df_{\theta}(X')
\right]
=
\delta^2 \cdot 
\nabla_{\theta} 
f_{\theta}(X)^T
\cdot
\variance
\left[
\nabla_{\theta}
\hat{L}_{PSO}^{N^{\usuff},N^{\dsuff}}(f_{\theta})
\right]
\cdot
\nabla_{\theta} 
f_{\theta}(X')
,
\label{eq:PointChangePSOLossCovar}
\end{equation}
with $\variance
\left[
\nabla_{\theta}
\hat{L}_{PSO}^{N^{\usuff},N^{\dsuff}}(f_{\theta})
\right]
$ being proportional to $\frac{1}{N^{\usuff} + N^{\dsuff}}$.

\end{theorem}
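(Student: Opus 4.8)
The plan is to compute both moments directly from the first-order expression for $df_\theta(X)$ in Eq.~(\ref{eq:PSODffrntl}), treating that expression as an exact equality as the paragraph preceding the theorem instructs. Fix $\theta$ and regard the batch points $\{X^{\usuff}_i\}_{i=1}^{N^{\usuff}}$ and $\{X^{\dsuff}_i\}_{i=1}^{N^{\dsuff}}$ as independent samples, the $U$-points i.i.d.\ from $\probs{\usuff}$ and the $D$-points i.i.d.\ from $\probs{\dsuff}$, with the two families mutually independent. Then $df_\theta(X)$ is $\delta$ times a difference of two sample averages, so everything reduces to linearity of expectation and the bilinearity of covariance.

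For the mean, I would take $\expectv$ inside the two finite sums in Eq.~(\ref{eq:PSODffrntl}). Each summand in the $U$-sum has the same law, so its expectation is $\E_{X'\sim\probs{\usuff}}\big[M^{\usuff}[X',f_\theta(X')]\,g_\theta(X,X')\big]$, independent of $i$ and of $N^{\usuff}$; likewise for the $D$-sum. This gives the first line of Eq.~(\ref{eq:PointChangePSOLossExpV}). Then I would rewrite each expectation as an integral against the density, e.g.\ $\E_{X'\sim\probs{\usuff}}[\,\cdot\,] = \int \probi{\usuff}{X'}\,M^{\usuff}[X',f_\theta(X')]\,g_\theta(X,X')\,dX'$, combine the two integrals (using $g_\theta(X,X')=g_\theta(X',X)$ by symmetry of the model kernel), and recognize the integrand as $g_\theta(X',X)\cdot F_\theta^{\tsuff}(X')$ with $F_\theta^{\tsuff}(X') = \probi{\usuff}{X'}M^{\usuff}[X',f_\theta(X')] - \probi{\dsuff}{X'}M^{\dsuff}[X',f_\theta(X')]$ as defined in the theorem statement. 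Identifying this with the integral operator $G_\theta$ gives $\delta\cdot[G_\theta F_\theta^{\tsuff}](X)$, completing the mean computation.

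For the covariance, the cleanest route is to note that $df_\theta(X) = -\delta\,\nabla_\theta f_\theta(X)^T \cdot \nabla_\theta \hat L_{PSO}^{N^{\usuff},N^{\dsuff}}(f_\theta)$ exactly (the first equality in Eq.~(\ref{eq:PSODffrntl})), and that $\nabla_\theta f_\theta(X)$ and $\nabla_\theta f_\theta(X')$ are deterministic given $\theta$ while the only randomness sits in the loss gradient. Hence $\cov[df_\theta(X),df_\theta(X')] = \delta^2\,\nabla_\theta f_\theta(X)^T\,\cov\big[\nabla_\theta\hat L_{PSO}^{N^{\usuff},N^{\dsuff}}(f_\theta),\nabla_\theta\hat L_{PSO}^{N^{\usuff},N^{\dsuff}}(f_\theta)\big]\,\nabla_\theta f_\theta(X') = \delta^2\,\nabla_\theta f_\theta(X)^T\,\variance[\nabla_\theta\hat L_{PSO}^{N^{\usuff},N^{\dsuff}}(f_\theta)]\,\nabla_\theta f_\theta(X')$, which is exactly Eq.~(\ref{eq:PointChangePSOLossCovar}). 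The proportionality of $\variance[\nabla_\theta\hat L_{PSO}^{N^{\usuff},N^{\dsuff}}(f_\theta)]$ to $1/(N^{\usuff}+N^{\dsuff})$ I would obtain by the same batch-average argument used in Lemma~\ref{lmm:PSO_infor}: the gradient is a sum of a $\frac{1}{N^{\usuff}}$-weighted i.i.d.\ average and an independent $\frac{1}{N^{\dsuff}}$-weighted i.i.d.\ average, so its variance is $\frac{1}{N^{\usuff}}A + \frac{1}{N^{\dsuff}}B$ for fixed matrices $A,B$ (the per-sample gradient covariances), and with $\tau = N^{\usuff}/N^{\dsuff}$ fixed this is $\Theta(1/N)$ with $N=N^{\usuff}+N^{\dsuff}$; in fact this is precisely the $\frac1N\JJ$ form already established in Lemma~\ref{lmm:PSO_infor}, which can simply be cited.

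There is no serious obstacle here — the statement is essentially a bookkeeping corollary of the first-order Taylor dynamics plus i.i.d.\ sampling. The only point requiring a little care is the symmetry step $g_\theta(X,X') = g_\theta(X',X)$ needed to pass from the two separate expectations to a single integral operator $[G_\theta F_\theta^{\tsuff}](X)$; this follows immediately from the definition $g_\theta(X,X') \triangleq \nabla_\theta f_\theta(X)^T\cdot\nabla_\theta f_\theta(X')$. A secondary subtlety worth a sentence is that Eq.~(\ref{eq:PSODffrntl}) is only a first-order approximation for general NN models, but, as the text preceding the theorem explicitly states, we treat it as an equality (it is exact for RKHS and an excellent approximation for NTK-regime NNs), so the proof is carried out under that convention.
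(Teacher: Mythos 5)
Your proposal is correct and follows essentially the same route as the paper's own proof: the mean is the straightforward i.i.d.\ expectation of Eq.~(\ref{eq:PSODffrntl}) rewritten as an integral against $F_{\theta}^{\tsuff}$, and the covariance is obtained by factoring the deterministic vectors $\nabla_{\theta} f_{\theta}(X)$, $\nabla_{\theta} f_{\theta}(X')$ out of the loss-gradient randomness, with the $\tfrac{1}{N^{\usuff}+N^{\dsuff}}$ scaling imported from the variance computation of Lemma~\ref{lmm:PSO_infor} (Eq.~(\ref{eq:Grad_empir_var2}), which holds at any fixed $\theta$, not only at $\theta^*$). No gaps.
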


Proof of the above theorem together with the explicit form of $\variance
\left[
\nabla_{\theta}
\hat{L}_{PSO}^{N^{\usuff},N^{\dsuff}}(f_{\theta})
\right]$ appears in Appendix \ref{sec:PSO_diff_stats_props_proof}.
The theorem provides insights about stochastic dynamics of the surface $f_{\theta}$ caused by point-wise forces $F_{\theta}^{\usuff}(X)$ and $F_{\theta}^{\dsuff}(X)$.
Eq.~(\ref{eq:PointChangePSOLossExpV}) is an \emph{integral transform} of the total force $F_{\theta}^{\tsuff}(X)$, which can also be seen as a point-wise error. That is, on the average $df_{\theta}(X)$ changes proportionally to the convolution of $\big[
F_{\theta}^{\usuff}(X')
-
F_{\theta}^{\dsuff}(X')
\big]$ w.r.t. the kernel $g_{\theta}(X', X)$. When $F_{\theta}^{\usuff}$ is larger than $F_{\theta}^{\dsuff}$, after both being convolved via $g_{\theta}$, then on the average $f_{\theta}(X)$ is pushed \up, and vice versa. When convolutions of $F_{\theta}^{\usuff}$ and $F_{\theta}^{\dsuff}$ are equal around the point $X$, $f_{\theta}(X)$ stays constant, again on the average. 
Further, the variance of $df_{\theta}(X)$ depends on the alignment between $\nabla_{\theta} 
f_{\theta}(X)$ and the eigenvectors of $\variance
\left[
\nabla_{\theta}
\hat{L}_{PSO}^{N^{\usuff},N^{\dsuff}}(f_{\theta})
\right]$ that correspond to the largest eigenvalues (the directions in $\theta$-space to which PSO loss mostly propagates), and in addition can be reduced by increasing the size of training datasets.

Neglecting higher moments of random variable $df_{\theta}(X)$, this \df can be expressed as the sum $df_{\theta}(X)
=
\delta \cdot
[G_{\theta} F_{\theta}^{\tsuff}](X)
+
\omega_{\theta}(X)
$
where $\omega_{\theta}$ is a zero-mean indexed-by-$X$ stochastic process with the covariance function defined in Eq.~(\ref{eq:PointChangePSOLossCovar}). Such evolution implies that
$f_{\theta}(X)$ will change on average towards the height where the \emph{convoluted} PSO equilibrium $[G_{\theta} F_{\theta}^{\tsuff}](\cdot) = 0 \Leftrightarrow G_{\theta} F_{\theta}^{\usuff} = G_{\theta} F_{\theta}^{\dsuff}$ is satisfied:
\begin{equation}
\int
g_{\theta}(X, X')
\cdot
F_{\theta}^{\usuff}(X')
dX'
=
\int
g_{\theta}(X, X')
\cdot
F_{\theta}^{\dsuff}(X')
dX'
,
\label{eq:ConvPSOBalStatePopul}
\end{equation}
while $\omega_{\theta}$'s variance will cause it to vibrate around such target height.
Likewise, informally $\delta$ in Eq.~(\ref{eq:PointChangePSOLossCovar}) has a role of configuration parameter that controls the diapason around the target height where the current estimation $f_{\theta}(X)$ is vibrating.
Further, sequential tuning/decaying of the learning rate $\delta$ will decrease this vibration amplitude (distance between the function that satisfies Eq.~(\ref{eq:ConvPSOBalStatePopul}) and the current model).

Furthermore, considering a large training dataset regime where $\omega_{\theta}$'s effect is insignificant, and replacing $\theta$ by the iteration time $t$, dynamics of the model can be written as $df_{t}(X) = f_{t + 1}(X) - f_{t}(X) = \delta \cdot
[G_{t} F_{t}^{\tsuff}](X)
$, or as $f_{t + 1} = f_{t} + \delta \cdot
G_{t} F_{t}^{\tsuff}$. Here, $F_{t}^{\tsuff}$ represents a negative functional derivative - the steepest descent direction of loss $L_{PSO}$ in the function space. Further, $G_{t}$ is GD operator that stretches and shrinks $F_{t}^{\tsuff}$ according to the alignment of the latter with eigenfunctions of the model kernel. Hence, it serves as a metric over the function space, defining what directions are "fast" to move in and in which directions it is "slow".

\subsection{Convoluted PSO Balance State}
\label{sec:ConvPSOBalState}

Above we observed that in fact GD optimization will lead to the \emph{convoluted} PSO equilibrium. This can also be derived from the first-order-conditions argument as follows. Assume that at the convergence of PSO algorithm $\nabla_{\theta}
\hat{L}_{PSO}^{N^{\usuff},N^{\dsuff}}(f_{\theta}) = 0$ is satisfied, with $\theta$ containing final parameter values. Multiplying it by $\nabla_{\theta} f_{\theta}(X)^T$ will lead to:
\begin{equation}
\frac{1}{N^{\usuff}}
\sum_{i = 1}^{N^{\usuff}}
M^{\usuff}\left[X^{\usuff}_{i},f_{\theta}(X^{\usuff}_{i})\right]
\cdot
g_{\theta}(X, X^{\usuff}_{i})
=
\frac{1}{N^{\dsuff}}
\sum_{i = 1}^{N^{\dsuff}}
M^{\dsuff}\left[X^{\dsuff}_{i},f_{\theta}(X^{\dsuff}_{i})\right]
\cdot
g_{\theta}(X, X^{\dsuff}_{i})
.
\label{eq:ConvPSOBalStateEmpir}
\end{equation}
According to the weak law of large numbers the above equality converges in probability (under appropriate regularity conditions) to Eq.~(\ref{eq:ConvPSOBalStatePopul}) as batch sizes $N^{\usuff}$ and $N^{\dsuff}$ increase.

Further, considering the asymptotic equilibrium $[G_{\theta} F_{\theta}^{\tsuff}](\cdot) = 0$, if the $\theta$-dependent operator $G_{\theta}$ is injective then we also have $F_{\theta}^{\tsuff}(\cdot) = 0$ which leads to $F_{\theta}^{\usuff} \equiv F_{\theta}^{\dsuff}$ and to the variational PSO \bp in Eq.~(\ref{eq:BalPoint}). Yet, in general case during the optimization $F_{\theta}^{\tsuff}$ will project into the null-space of $G_{\theta}$, with $F_{\theta}^{\tsuff} \not\equiv 0$. Moreover, even for injective $G_{\theta}$ it may take prohibitively many GD iterations in order to obtain the \emph{convoluted} PSO equilibrium in Eq.~(\ref{eq:ConvPSOBalStatePopul}), depending on the conditional number of $G_{\theta}$. Hence, at the end of a typical optimization $F_{\theta}^{\tsuff}$
is expected to be outside of (orthogonal to) $G_{\theta}$'s high-spectrum space (i.e. a space spanned by $G_{\theta}$'s eigenfunctions related to its highest eigenvalues), and to be within $G_{\theta}$'s null-space or its low-spectrum space (associated with the lowest eigenvalues).

Since the low-spectrum is affiliated with high-frequency functions \citep{Ronen19nips}, $F_{\theta}^{\tsuff} \not\equiv 0$ will resemble some sort of a noise function. That is, during GD optimization the Fourier transform $\hat{F}_{\theta}^{\tsuff}(\xi)$ of $F_{\theta}^{\tsuff}(X)$ is losing its energy around the origin, yet it almost does not change at frequencies $\xi$ whose norm is large. See \citep{Kopitkov19arxiv_spectrum} for the empirical evidence of the above conclusions and for additional analysis of $G_{\theta}$'s role in a least-squares optimization.

The above described behavior implicitly introduces a bias into PSO solution. Namely, when a function space $\FF$ with particular $g_{\theta}$  and $G_{\theta}$ is chosen, this decision will affect our learning task exactly via the above relation to $G_{\theta}$'s null-space. The deeper analysis is required to answer the following questions: How closely are two equilibriums in Eq.~(\ref{eq:ConvPSOBalStatePopul}) and Eq.~(\ref{eq:ConvPSOBalStateEmpir})? What is the impact of batch sizes $N^{\usuff}$ and $N^{\dsuff}$, and what can we say about PSO solution when these batches are finite/small?
How $g_{\theta}$'s properties, specifically its eigenvalues and eigenfunctions, will effect the PSO solution? What is the rate of converge towards $G_{\theta}$'s null-space? And how these aspects behave in a setting of stochastic mini-batch optimization? These advanced questions share their key concepts with the topics of RKHS estimation and Deep Learning theory, and deserve their own avenue. Therefore, we leave most of them out of this paper's scope and shall address them as part of future research. Further, below we analyze a specific property of $g_{\theta}$ - its bandwidth.

\subsection{Model Expressiveness and Smoothness vs Kernel Bandwidth}
\label{sec:ExprrKernelBnd}

The model kernel $g_{\theta}(X, X')$ expresses the impact over $f_{\theta}(X)$ when we optimize at a data point $X'$.
Intuitively, under the physical perspective (see Section \ref{sec:PromOptim})
PSO algorithm can be viewed as pushing (\up and \down) at the training points with some wand whose end's shape is described by the above kernel.
Here we will show that the flexibility of the surface $f_{\theta}$ strongly depends on $g_{\theta}(X, X')$'s bandwidth (i.e. on flatness of the pushing wand's end).

For this purpose we will define a notion of the model relative kernel:
\begin{equation}
r_{\theta}(X, X')
\triangleq
\frac{g_{\theta}(X, X')}{g_{\theta}(X, X)}
,
\label{eq:RelKernel}
\end{equation}
which can be interpreted as a \emph{relative} side-influence over $f_{\theta}(X)$ from $X'$, scaled w.r.t. the self-influence $g_{\theta}(X, X)$ of $X$. Further, assume that the model relative kernel is bounded as:
\begin{equation}
0
<
\exp
\left[
- \frac{d(X, X')}{h_{min}}
\right]
\leq
r_{\theta}(X, X')
\leq
\exp
\left[
- \frac{d(X, X')}{h_{max}}
\right]
\leq
1
,
\label{eq:RelKernelBounds}
\end{equation}
where $d(X, X')$ is any function that satisfies the triangle inequality $d(X, X') \leq d(X, X'') + d(X', X'')$ (e.g. metric or pseudometric over $\RR^n$), and where $h_{min}$ and $h_{max}$ can be considered as lower and upper bounds on $r_{\theta}(X, X')$'s bandwidth. Below, we will explore how $h_{min}$ and $h_{max}$ effect the smoothness of $f_{\theta}$. Note, that in case $g_{\theta}(X, X)$ is identical for any $X$, the below analysis can be performed w.r.t. properties of $g_{\theta}$ instead of $r_{\theta}$. However, in case $f_{\theta}$ is NN, the NTK $g_{\theta}$ can not be bounded as in Eq.~(\ref{eq:RelKernelBounds}), yet its scaled version $r_{\theta}$ clearly manifests such bounded-bandwidth properties. That is, $r_{\theta}(X, X')$ of NN typically decreases when the distance between $X$ and $X'$ increases (see Section \ref{sec:NNArch}), exhibiting some implicit bandwidth induced by a NN architecture. Moreover, while the magnitude of $g_{\theta}$ can be quiet different for various NN models and architectures, the normalized $r_{\theta}$ is on the same scale, allowing to compare the smoothness properties of particular models. Likewise, Eq.~(\ref{eq:RelKernelBounds}) is satisfied by many popular kernels used for RKHS construction, such as Gaussian and Laplacian kernels, making the below analysis relevant also for kernel models.

\begin{theorem}[Spike Convergence]
	\label{thrm:PSO_peak_convergence} 
	Assume:
	\begin{enumerate}
		\item PSO algorithm converged, with $\nabla_{\theta}
		\hat{L}_{PSO}^{N^{\usuff},N^{\dsuff}}(f_{\theta}) = 0$.
		\item \pair are non-negative functions.
		\item $M^{\usuff}\left[ X', s\right]$ is continuous and strictly decreasing w.r.t. $s$, at $\forall X' \in \spp^{\usuff}$.
	\end{enumerate}
	Denote by $(M^{\usuff})^{-1}
	\left[
	X',
	z
	\right]$ the inverse function of $M^{\usuff}$ w.r.t. second argument. Further, consider any training sample from $\probs{\usuff}$ and denote it by $X$. 
	Then the following is satisfied:
	\begin{enumerate}
		\item $f_{\theta}(X)
		\geq
		(M^{\usuff})^{-1}
		\left[
		X,
		\alpha
		\right]$ where $\alpha = \frac{N^{\usuff}}{N^{\dsuff}}
		\sum_{i = 1}^{N^{\dsuff}}
		M^{\dsuff}
		\left[
		X^{\dsuff}_{i},
		f_{\theta}(X^{\dsuff}_{i})
		\right]
		\cdot
		\exp
		\left[
		- \frac{d(X, X^{\dsuff}_{i})}{h_{max}}
		\right]$.
		
		\item $(M^{\usuff})^{-1}
		\left[
		X,
		z
		\right]$ is strictly decreasing w.r.t. $z$, with $(M^{\usuff})^{-1}
		\left[
		X,
		\alpha
		\right]
		\rightarrow 
		\infty
		$ when $\alpha \rightarrow 0$.
	\end{enumerate}
	
\end{theorem}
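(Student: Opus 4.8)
The plan is to start from the first-order optimality condition $\nabla_\theta \hat L_{PSO}^{N^{\usuff},N^{\dsuff}}(f_\theta) = 0$ and ``probe'' it at the chosen training point $X \in \{X^{\usuff}_i\}$ by taking the inner product with $\nabla_\theta f_\theta(X)$, exactly as in Eq.~(\ref{eq:ConvPSOBalStateEmpir}). This gives the pointwise balance
\[
\frac{1}{N^{\usuff}}
\sum_{i = 1}^{N^{\usuff}}
M^{\usuff}\left[X^{\usuff}_{i},f_{\theta}(X^{\usuff}_{i})\right]
\cdot
g_{\theta}(X, X^{\usuff}_{i})
=
\frac{1}{N^{\dsuff}}
\sum_{i = 1}^{N^{\dsuff}}
M^{\dsuff}\left[X^{\dsuff}_{i},f_{\theta}(X^{\dsuff}_{i})\right]
\cdot
g_{\theta}(X, X^{\dsuff}_{i})
.
\]
First I would divide both sides by $g_\theta(X,X) > 0$ to pass to the relative kernel $r_\theta$. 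On the left, I isolate the $i$ such that $X^{\usuff}_i = X$, whose term is $\frac{1}{N^{\usuff}} M^{\usuff}[X, f_\theta(X)]$ (since $r_\theta(X,X) = 1$); all remaining left-hand terms are non-negative by assumption~2 and $r_\theta \geq 0$, so dropping them only decreases the left side. Hence
\[
\frac{1}{N^{\usuff}} M^{\usuff}[X, f_\theta(X)]
\;\leq\;
\frac{1}{N^{\dsuff}}
\sum_{i=1}^{N^{\dsuff}}
M^{\dsuff}\left[X^{\dsuff}_{i}, f_\theta(X^{\dsuff}_i)\right]
\cdot r_\theta(X, X^{\dsuff}_i)
.
\]
Multiplying by $N^{\usuff}$ and then bounding $r_\theta(X, X^{\dsuff}_i) \leq \exp[-d(X,X^{\dsuff}_i)/h_{max}]$ via the upper bound in Eq.~(\ref{eq:RelKernelBounds}) (legitimate because $M^{\dsuff} \geq 0$) yields $M^{\usuff}[X, f_\theta(X)] \leq \alpha$ with $\alpha$ as defined in the statement.

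Next I would invoke assumption~3: $M^{\usuff}[X,\cdot]$ is continuous and strictly decreasing, so it is a bijection onto its range and its inverse $(M^{\usuff})^{-1}[X,\cdot]$ exists and is itself strictly decreasing. Applying this (order-reversing) inverse to $M^{\usuff}[X, f_\theta(X)] \leq \alpha$ flips the inequality to $f_\theta(X) \geq (M^{\usuff})^{-1}[X, \alpha]$, which is claim~1. For claim~2, strict monotonicity of $(M^{\usuff})^{-1}[X,\cdot]$ is immediate from strict monotonicity of $M^{\usuff}[X,\cdot]$. The limiting statement $(M^{\usuff})^{-1}[X,\alpha] \to \infty$ as $\alpha \to 0$ follows because $M^{\usuff}[X,s]$ is strictly decreasing in $s$ and non-negative: as $s \to \sup$ of the relevant domain its value decreases toward $\inf_s M^{\usuff}[X,s]$, and for the inverse to be defined at arbitrarily small positive arguments this infimum must be $0$, attained only in the limit $s \to \infty$; I would phrase this as: since $M^{\usuff}[X,\cdot]$ is strictly decreasing with values approaching $0$, its inverse evaluated at a sequence $\alpha_k \downarrow 0$ must diverge to $+\infty$.

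The main obstacle is the limiting claim in part~2 and, relatedly, making the domain bookkeeping of $(M^{\usuff})^{-1}$ precise. Strictly speaking $(M^{\usuff})^{-1}[X,\alpha]$ is only guaranteed to exist when $\alpha$ lies in the range of $M^{\usuff}[X,\cdot]$; one needs either to observe that $\alpha$ is automatically in this range (it equals $M^{\usuff}[X,f_\theta(X)]$ up to the inequality, but for the $\alpha \to 0$ statement one is considering $\alpha$ as a free variable), or to restrict attention to the regime where $\alpha$ is small and positive, which is exactly the interesting case since it forces $f_\theta(X)$ large. I would handle this by noting that for the theorem's conclusion to be non-vacuous we implicitly assume $M^{\usuff}[X,\cdot]$ maps onto an interval containing small positive values near its supremum argument; under the standing assumptions on $\FF$'s range ($\KKK = \RR$ for the log-density estimators this theorem targets) this is the natural setting. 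Beyond that point, everything is routine: the argument is a short chain of sign bounds followed by one application of an order-reversing monotone map.
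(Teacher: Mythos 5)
Your proposal is correct and follows essentially the same route as the paper's proof in Appendix \ref{sec:PSO_peak_convergence_proof}: probe the first-order condition with $\nabla_{\theta} f_{\theta}(X)^T$, drop the non-negative remaining \up terms, bound the relative kernel by the upper envelope in Eq.~(\ref{eq:RelKernelBounds}), and apply the order-reversing inverse of $M^{\usuff}$. The domain issue you flag for the $\alpha \rightarrow 0$ limit is handled in the paper exactly as you suggest, by assuming the shared range of $(M^{\usuff})^{-1}$ and $f_{\theta}$ is $\RR$ or $\RRpos$ extended to contain $\infty$.
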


Proof of the above theorem is in Appendix \ref{sec:PSO_peak_convergence_proof}. From it we can see that for smaller $\alpha$ the surface at any \up training point $X$ converges to some very high height, where $f_{\theta}(X)$ can be arbitrary big. This can happen when $X$ is faraway from all \down samples $\{ X^{\dsuff}_{i} \}_{i = 1}^{N^{\dsuff}}$ (i.e. causing $d(X, X^{\dsuff}_{i})$ to have a large output), or when $h_{max}$ is very small (i.e. $r_{\theta}$ has a very narrow bandwidth). In these cases we will have \up spikes at locations $\{ X^{\usuff}_{i} \}_{i = 1}^{N^{\usuff}}$. Similarly, when $h_{max}$ is very small, there will be \down spikes at locations $\{ X^{\dsuff}_{i} \}_{i = 1}^{N^{\dsuff}}$ - the corresponding theorem is symmetric to Theorem \ref{thrm:PSO_peak_convergence} and thus is omitted. Hence, the above theorem states that a very narrow bandwidth of $r_{\theta}$ will cause at the convergence \up and \down spikes within the surface $f_{\theta}$, which can be interpreted as an overfitting behavior of PSO algorithm (see Section \ref{sec:DeepLogPDFOF} for the empirical demonstration). Note that the theorem's assumptions are not very restrictive, with many PSO instances satisfying them such as PSO-LDE, NCE and logistic loss (see Section \ref{sec:PSOInst}). Providing a more general theorem with less assumptions (specifically reducing the assumption 3) we shall leave for future work.

\begin{figure}[tb]
	\centering
	
	\begin{tabular}{cc}

		\subfloat[\label{fig:FlexBnd-a}]{\includegraphics[width=0.4\textwidth]{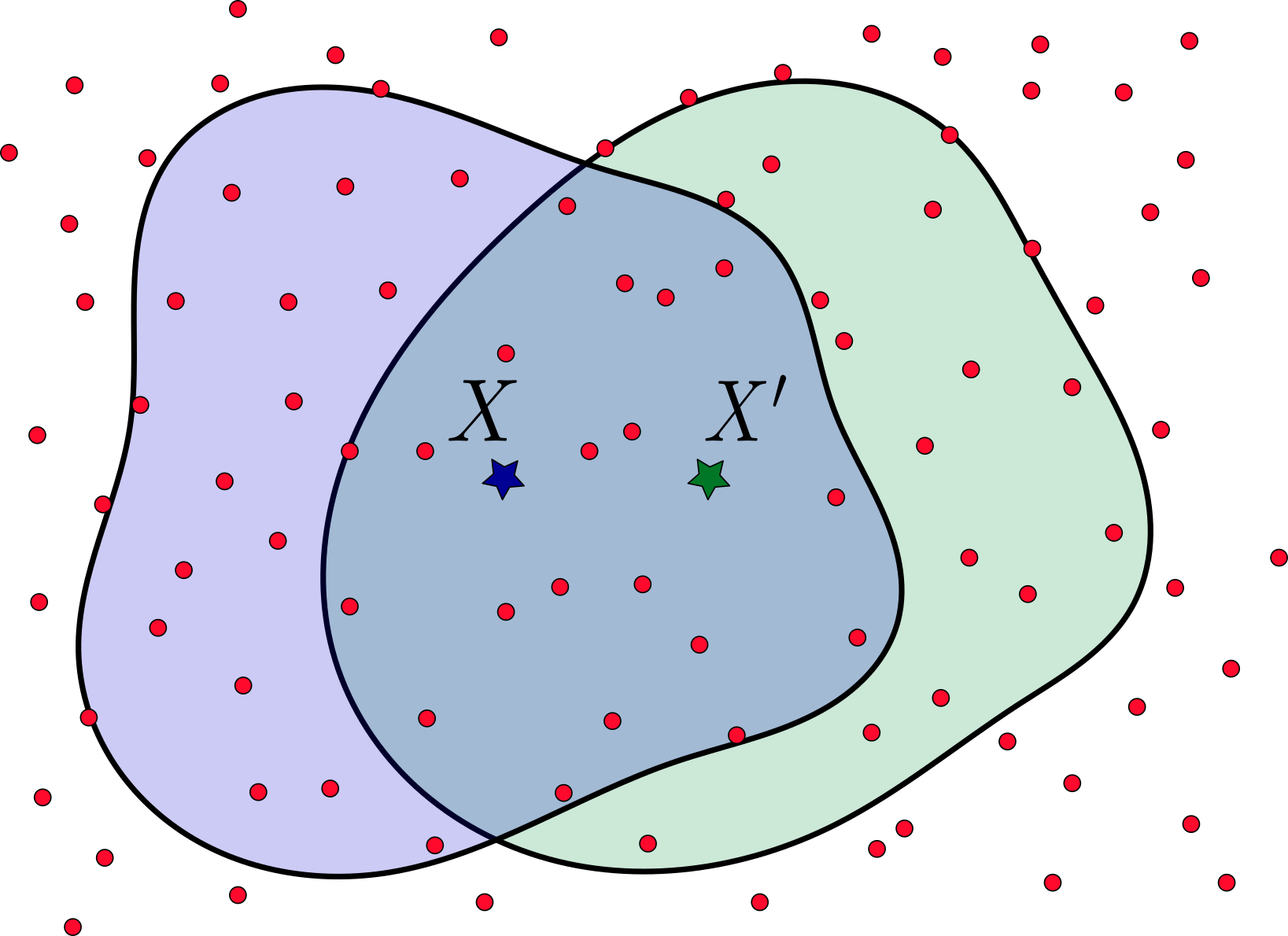}}
		&
		
		\subfloat[\label{fig:FlexBnd-b}]{\includegraphics[width=0.4\textwidth]{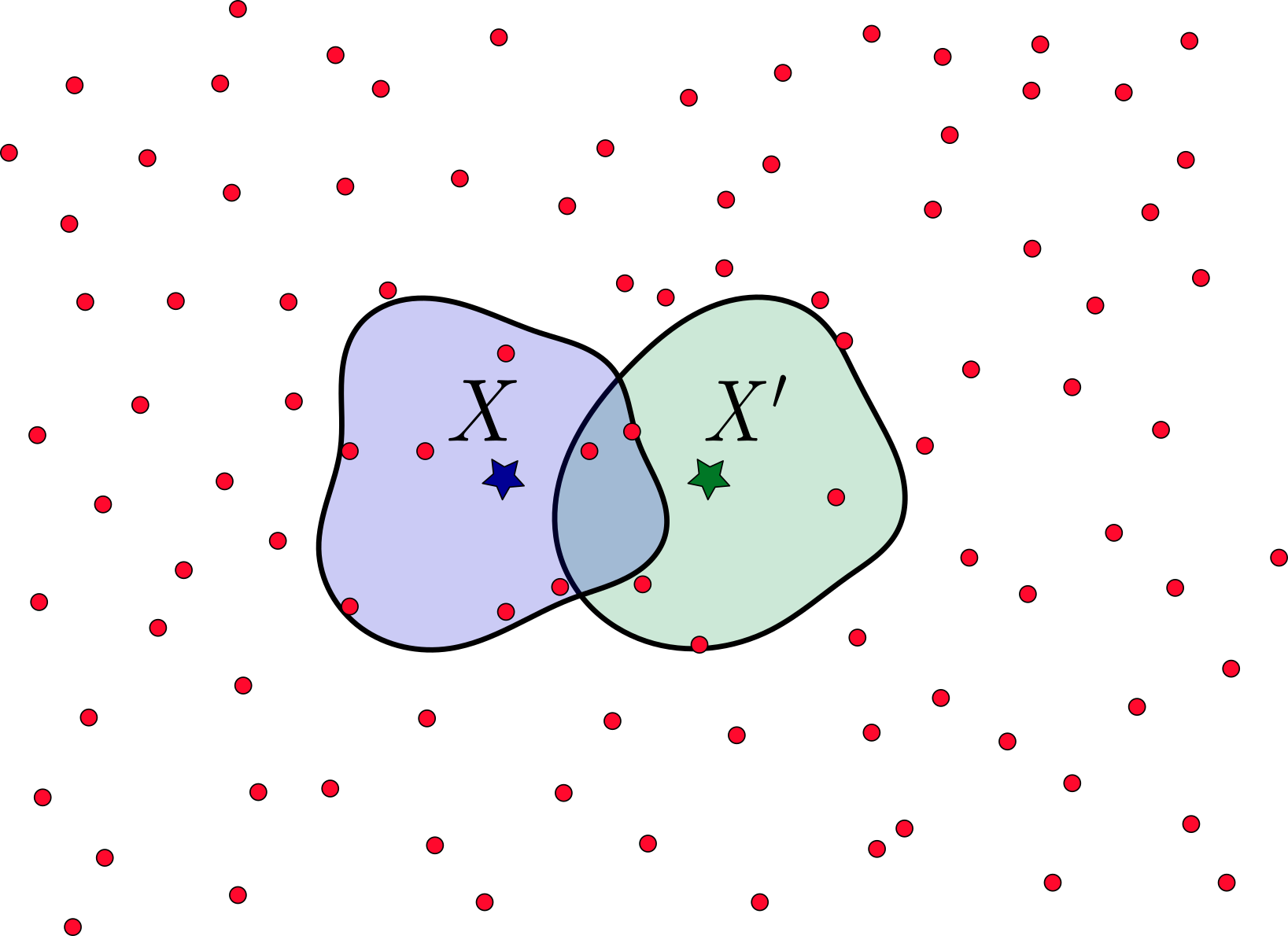}}
	\end{tabular}
	
	\protect
	\caption[Model flexibility vs \emph{influence} decay rate (bandwidth) of the model kernel.]{Model flexibility vs \emph{influence} decay rate (bandwidth) of \gs $g_{\theta}$. Assume for simplicity $\forall X: g_{\theta}(X, X) \equiv \gamma$ for some constant $\gamma$. Considering Eq.~(\ref{eq:PSODffrntl}), the \df at $X$ is a weighted average of terms belonging to the training points around $X$, where $g_{\theta}(X, \cdot)$ serves as a weighting coefficient for each term. Further, consider \gs to have a local-support. Its \emph{influence} decay can be seen to express an \emph{influence} area around each point $X$ outside of which the \gs $g_{\theta}(X, \cdot)$ becomes very small and negligible, on average. Above we illustrate two possible scenarios where the \emph{influence} area is (a) large and (b) small. Red points represent training points, from both $\probs{\usuff}$ and $\probs{\dsuff}$. Further, blue and green regions around points $X$ and $X' = X + \Delta$ express the neighborhoods around the points where \gs has large values. Note that in context of NNs these regions in general are not centered at $X$ (or $X'$) and are not symmetric, yet exhibit a particular \emph{influence} decay rate (see Section \ref{sec:NNArch} and Appendix \ref{sec:UncorRes} for the empirical evaluation of $g_{\theta}$). The training points in each such region around some point $X$ can be considered as \emph{support} training points of $X$ that will influence its surface height. As observed from plots, when the \emph{influence} decay rate is low (i.e. large \emph{influence} area, see plot (a)) the \dfs of $X$ and $X'$ will be very similar since most of the \emph{support} training points stay the same for both $X$ and $X'$. In contrast, when the \emph{influence} decay rate is high (see plot (b)), the \dfs at $X$ and $X'$ will be very different since most of the \emph{support} training points for both $X$ and $X'$ are not the same. Hence, the \df as a function of $X$ changes significantly for a step $\Delta$ within the input space when the decay rate is high, and vice versa. Furthermore, when the \df $df_{\theta}(X)$ changes only slightly for different points, the overall update of the surface height at each point is similar/identical to other points. Such surface is pushed \up/\down as one physical rigid body, making $f_{\theta}(X)$ "inelastic." Moreover, when the \emph{influence} decay rate is significantly high, the point $X$ may have only a single \emph{support} (\up or \down) training point and $f_{\theta}(X)$ will be pushed only in a single direction (\up or \down) yielding the spike near $X$. Finally, in case the \emph{influence} area of $X$ will not contain any training point, $f_{\theta}(X)$ will stay constant along the entire optimization.
	}
	\label{fig:FlexBnd}
\end{figure}

\begin{theorem}[Change Difference]
	\label{thrm:PSO_diff_diff} 
Consider the differential $df_{\theta}$ defined in Eq.~(\ref{eq:PSODffrntl}). For any two points $X_1$ and $X_2$, their change difference is bounded as:
\begin{multline}
|df_{\theta}(X_1) - df_{\theta}(X_2)|
\leq
\delta \cdot 
g_{\theta}(X_1, X_1)
\cdot
\Bigg[
\frac{1}{N^{\usuff}}
\sum_{i = 1}^{N^{\usuff}}
\left|
M^{\usuff}\left[X^{\usuff}_{i},f_{\theta}(X^{\usuff}_{i})\right]
\right|
\cdot
\nu_{\theta}
\left[
X_1, X_2, X^{\usuff}_{i}
\right]
+\\
+
\frac{1}{N^{\dsuff}}
\sum_{i = 1}^{N^{\dsuff}}
\left|
M^{\dsuff}\left[X^{\dsuff}_{i},f_{\theta}(X^{\dsuff}_{i})\right]
\right|
\cdot
\nu_{\theta}
\left[
X_1, X_2, X^{\dsuff}_{i}
\right]
\Bigg]
,
\label{eq:PSODffrntlChangeBound}
\end{multline}
where
\begin{equation}
\nu_{\theta}
\left[
X_1, X_2, X
\right]
\triangleq
\epsilon\left[ X_1, X_2, X \right]
+
\frac{\left|
	g_{\theta}(X_1, X_1) - g_{\theta}(X_2, X_2)
	\right|
}{g_{\theta}(X_1, X_1)}
,
\label{eq:PSODffrntlChangeBound_nu_def}
\end{equation}
\begin{equation}
\epsilon\left[ X_1, X_2, X \right]
\triangleq
1 -
\exp \left[
-
\frac{1}{h_{min}}
d(X_1, X_2)
\right]
\cdot
\exp \left[
-
\frac{1}{h_{min}}
\max
\left[
d(X_1, X),
d(X_2, X)
\right]
\right]
.
\label{eq:PSODffrntlChangeBound_eps_def}
\end{equation}

\end{theorem}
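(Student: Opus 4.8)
The plan is to peel the bound apart in three stages: first reduce everything to controlling a single kernel difference $\left|g_{\theta}(X_1, X) - g_{\theta}(X_2, X)\right|$, then split that difference into a ``relative-kernel'' part and a ``self-kernel'' part, and finally bound the relative-kernel difference using the two-sided envelope in Eq.~(\ref{eq:RelKernelBounds}) together with the triangle inequality for $d$.

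First I would subtract the two instances of Eq.~(\ref{eq:PSODffrntl}). Since the magnitude factors $M^{\usuff}\left[X^{\usuff}_i, f_{\theta}(X^{\usuff}_i)\right]$ and $M^{\dsuff}\left[X^{\dsuff}_i, f_{\theta}(X^{\dsuff}_i)\right]$ do not depend on the evaluation point, $df_{\theta}(X_1) - df_{\theta}(X_2)$ is exactly $\delta$ times a sum over training points of $M^{\usuff}\left[X^{\usuff}_i, f_{\theta}(X^{\usuff}_i)\right] \cdot \left(g_{\theta}(X_1, X^{\usuff}_i) - g_{\theta}(X_2, X^{\usuff}_i)\right)$ minus the analogous $\dsuff$ terms. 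Applying the triangle inequality to the sum, then bounding each $\left|g_{\theta}(X_1, X) - g_{\theta}(X_2, X)\right|$ by $g_{\theta}(X_1, X_1) \cdot \nu_{\theta}\left[X_1, X_2, X\right]$ and pulling the common factor $g_{\theta}(X_1, X_1)$ out of both sums reproduces exactly the right-hand side of Eq.~(\ref{eq:PSODffrntlChangeBound}). So everything reduces to the pointwise estimate
\[
\left|g_{\theta}(X_1, X) - g_{\theta}(X_2, X)\right|
\leq
g_{\theta}(X_1, X_1) \cdot \nu_{\theta}\left[X_1, X_2, X\right].
\]

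For this estimate, write $g_{\theta}(X_j, X) = g_{\theta}(X_j, X_j) \cdot r_{\theta}(X_j, X)$ from the definition Eq.~(\ref{eq:RelKernel}), set $g_j \triangleq g_{\theta}(X_j, X_j)$ and $r_j \triangleq r_{\theta}(X_j, X)$, and use the algebraic identity $g_1 r_1 - g_2 r_2 = g_1 (r_1 - r_2) + (g_1 - g_2) r_2$. Since $r_2 \leq 1$ by Eq.~(\ref{eq:RelKernelBounds}), this gives $\left|g_1 r_1 - g_2 r_2\right| \leq g_1 \left|r_1 - r_2\right| + \left|g_1 - g_2\right|$, which equals $g_1$ times $\left|r_1 - r_2\right| + \left|g_1 - g_2\right|/g_1$; the second summand already matches the self-kernel term of $\nu_{\theta}$ in Eq.~(\ref{eq:PSODffrntlChangeBound_nu_def}) (note the asymmetric choice of $g_1$ comes from factoring the identity around $g_{\theta}(X_1,X_1)$). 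It therefore remains to prove $\left|r_{\theta}(X_1, X) - r_{\theta}(X_2, X)\right| \leq \epsilon\left[X_1, X_2, X\right]$, with $\epsilon$ as in Eq.~(\ref{eq:PSODffrntlChangeBound_eps_def}).

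This last inequality is the only non-mechanical step and I expect it to be the crux. Assume without loss of generality $r_{\theta}(X_1, X) \geq r_{\theta}(X_2, X)$; the reverse case follows by swapping $X_1 \leftrightarrow X_2$, since $\epsilon$ is symmetric in its first two arguments. From the upper envelope of Eq.~(\ref{eq:RelKernelBounds}), $r_{\theta}(X_1, X) \leq 1$; from the lower envelope, $r_{\theta}(X_2, X) \geq \exp\left[- d(X_2, X)/h_{min}\right]$, and the triangle inequality $d(X_2, X) \leq d(X_1, X_2) + d(X_1, X) \leq d(X_1, X_2) + \max\left[d(X_1, X), d(X_2, X)\right]$ together with monotonicity of $\exp(-\cdot)$ upgrades this to $r_{\theta}(X_2, X) \geq \exp\left[-\left(d(X_1, X_2) + \max\left[d(X_1, X), d(X_2, X)\right]\right)/h_{min}\right]$. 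Subtracting yields $r_{\theta}(X_1, X) - r_{\theta}(X_2, X) \leq \epsilon\left[X_1, X_2, X\right]$. The subtlety here is that the $\max$ (rather than $d(X_1,X)$) is precisely what is needed so that the bound survives the symmetrization over $X_1 \leftrightarrow X_2$; this is why the theorem is stated with $\max\left[d(X_1, X), d(X_2, X)\right]$. Chaining the three stages back together completes the proof.
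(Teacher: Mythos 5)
Your proof is correct and rests on the same two ingredients as the paper's: the bound $\left|r_{\theta}(X_1,X)-r_{\theta}(X_2,X)\right|\leq\epsilon\left[X_1,X_2,X\right]$ obtained from the two-sided envelope in Eq.~(\ref{eq:RelKernelBounds}) plus the triangle inequality for $d$ (this is exactly the paper's Lemma~\ref{lmm:Rel_Kernel_Diff}, case split and all), and the algebraic identity $g_1 r_1 - g_2 r_2 = g_1(r_1-r_2) + (g_1-g_2)r_2$. The difference is where you apply that identity. The paper applies it at the level of the aggregated differentials, bounding $\left|\frac{df_{\theta}(X_1)}{g_{\theta}(X_1,X_1)}-\frac{df_{\theta}(X_2)}{g_{\theta}(X_2,X_2)}\right|$ in one lemma, $\left|\frac{df_{\theta}(X)}{g_{\theta}(X,X)}\right|$ in a second, and then recombining via a reverse-triangle-inequality case analysis. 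You apply it pointwise to each kernel evaluation, proving $\left|g_{\theta}(X_1,X)-g_{\theta}(X_2,X)\right|\leq g_{\theta}(X_1,X_1)\cdot\nu_{\theta}\left[X_1,X_2,X\right]$ directly (using $r_{\theta}(X_2,X)\leq 1$ in place of the paper's auxiliary bound on the normalized differential) and then summing. This buys you a shorter argument: the paper's Lemma~\ref{lmm:PSO_Diff_rel_magn} and the case split around Eq.~(\ref{eq:diff_diff_proofff8})--(\ref{eq:diff_diff_proofff9}) become unnecessary, and the final constant is exactly the same. Your observation that the $\max$ in $\epsilon$ is what makes the bound survive the $X_1\leftrightarrow X_2$ symmetrization is also the right reading of why the lemma is stated that way.
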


Proof of the above theorem is in Appendix \ref{sec:PSO_diff_diff_Proof}. In the above relation we can see that $\epsilon$ is smaller for a smaller distance $d(X_1, X_2)$. Likewise, $\epsilon \rightarrow 0$ along with $h_{min} \rightarrow \infty$. Neglecting the second term in $\nu_{\theta}$'s definition ($g_{\theta}(X_1, X_1)$ and $g_{\theta}(X_2, X_2)$ are typically very similar for any two close-by points $X_1$ and $X_2$), $\nu_{\theta}$ has analogous trends. Therefore, the upper bound in Eq.~(\ref{eq:PSODffrntlChangeBound}) is smaller when two points are nearby or when $h_{min}$ is large. From this we can conclude that $f_{\theta}(X_1)$ and $f_{\theta}(X_2)$ are evolving in a similar manner for the above specified setting. Particularly, for $h_{min} \rightarrow \infty$ (and if the second term of $\nu_{\theta}$ is relatively small) the entire surface $f_{\theta}$ will change almost identically at each point, intuitively resembling a rigid geometric body that moves \up and \down without changing its internal shape.

To conclude, when we decide which function space $\FF$ to use for PSO optimization, this decision is equivalent to choosing the model kernel $g_{\theta}$ with the most desired properties. 
The effect of $g_{\theta}$'s bandwidth on the optimization is described by the above theorems, whose intuitive summary is given in Figure \ref{fig:FlexBnd}. In particular, when a bandwidth of the (relative) kernel is too narrow - there will be spikes at the training points, and when this bandwidth is too wide - the converged surface will be overly smoothed. These two extreme scenarios are also known as \overfit and \underfit, and the kernel bandwidth can be considered as a flexibility parameter of the surface $f_{\theta}$. 
Furthermore, the above exposition agrees with existing works for RKHS models \citep{Principe10information} where the kernel bandwidth is known to affect the estimation bias-variance trade-off.

\subsection{Infinite Height Problem and its Solutions}
\label{sec:PSOStability}

In this section we study main stability issues encountered due to a mismatch between supports of $\probs{\usuff}$ and $\probs{\dsuff}$. From part 2 of Theorem \ref{thrm:PSO} we see that there are settings under which $f_{\theta}(X)$ will go to infinity at $X$ outside of mutual support $\spp^{\udcapsuff}$. In Figure \ref{fig:SprtMsmtchFig} a simple experiment is shown that supports this conclusion empirically, where $f_{\theta}(X)$ at $X \in \spp^{\udbacksuff}$ is increasing during the entire learning process. Observe that while GD optimization obtains the \emph{convoluted} PSO equilibrium in Eq.~(\ref{eq:ConvPSOBalStatePopul}) instead of the variational PSO \bp in Eq.~(\ref{eq:BalPoint}), the conclusions of Theorem \ref{thrm:PSO} still remain valid in practice.

Similarly, the above described \infh problem can happen at $X \in \spp^{\udcapsuff}$ where one of the ratios $\frac{\probi{\usuff}{X}}{\probi{\dsuff}{X}}$ and $\frac{\probi{\dsuff}{X}}{\probi{\usuff}{X}}$ is too small, yet is not entirely zero (i.e. $| \log \probi{\usuff}{X} - $ $\log \probi{\dsuff}{X} |$ is large). Such \emph{relative} support mismatch can cause instability as following. During the sampling process, at areas where $\probi{\usuff}{X}$ and $\probi{\dsuff}{X}$ are very different, we can obtain many samples from one density yet almost no samples from the other. Taking Theorem \ref{thrm:PSO_peak_convergence} into the account, any training point $X$ that is isolated from samples of the opposite force (i.e. when $d(X, \cdot)$ is large) will enforce a spike at $f_{\theta}(X)$. Moreover, when combined with the narrow kernel bandwidth, such spike behavior will be even more extreme with $f_{\theta}(X)$ being pushed to the \infh (see Section \ref{sec:DeepLogPDFOF} for the empirical illustration).

Hence, in both of the above cases $f_{\theta}$ will go to $\pm \infty$ at various locations. Meaning of this is lack of the optimization convergence. Furthermore, too large $f_{\theta}$' outputs may cause arithmetic underflow and overflow instabilities when computing the loss gradient, and hence will lead to a divergence of the learning task.

\begin{figure}
	\centering
	
	\begin{tabular}{cccc}

		\subfloat[\label{fig:SprtMsmtchFig-a}]{\includegraphics[width=0.22\textwidth]{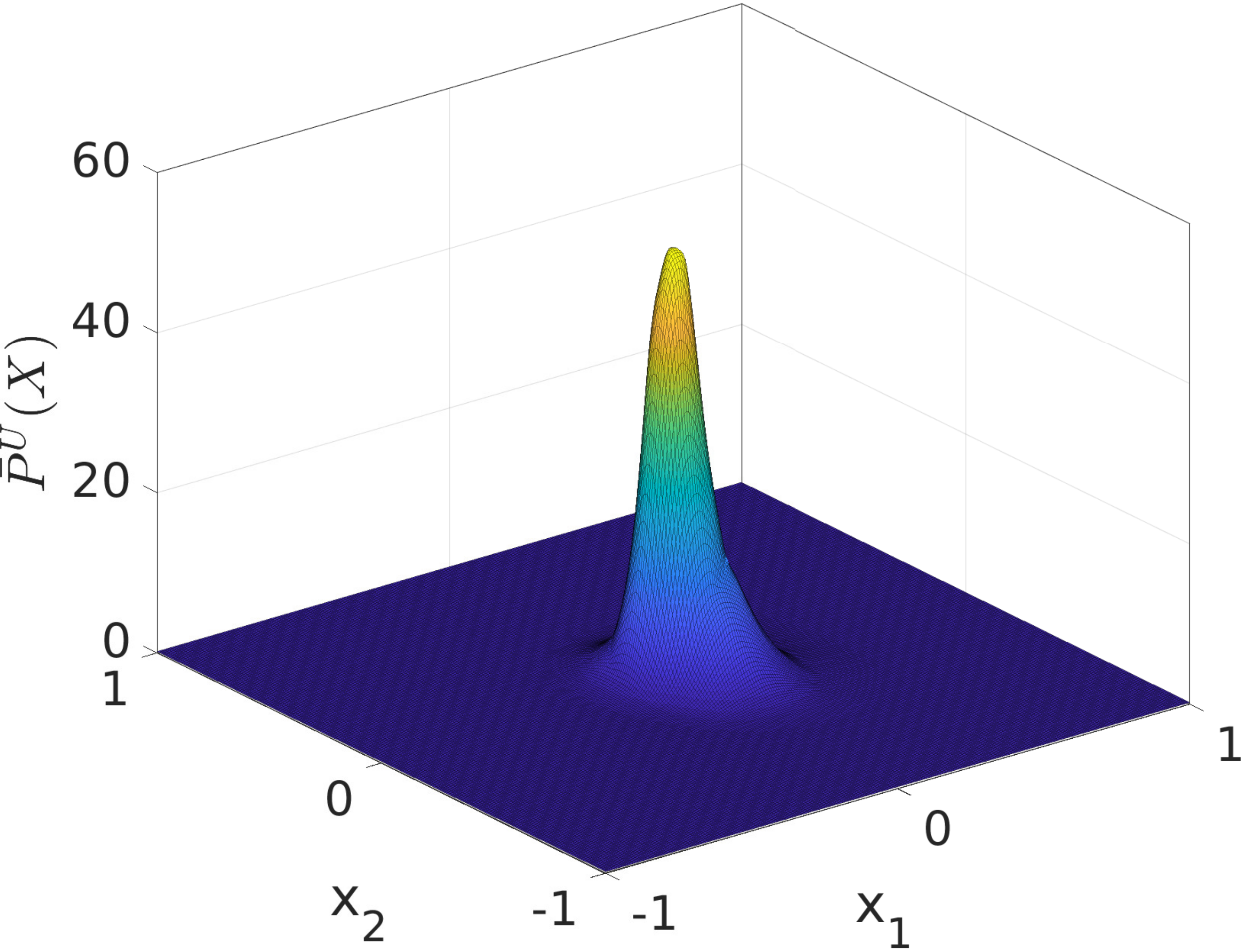}}
		&
		
		\subfloat[\label{fig:SprtMsmtchFig-b}]{\includegraphics[width=0.22\textwidth]{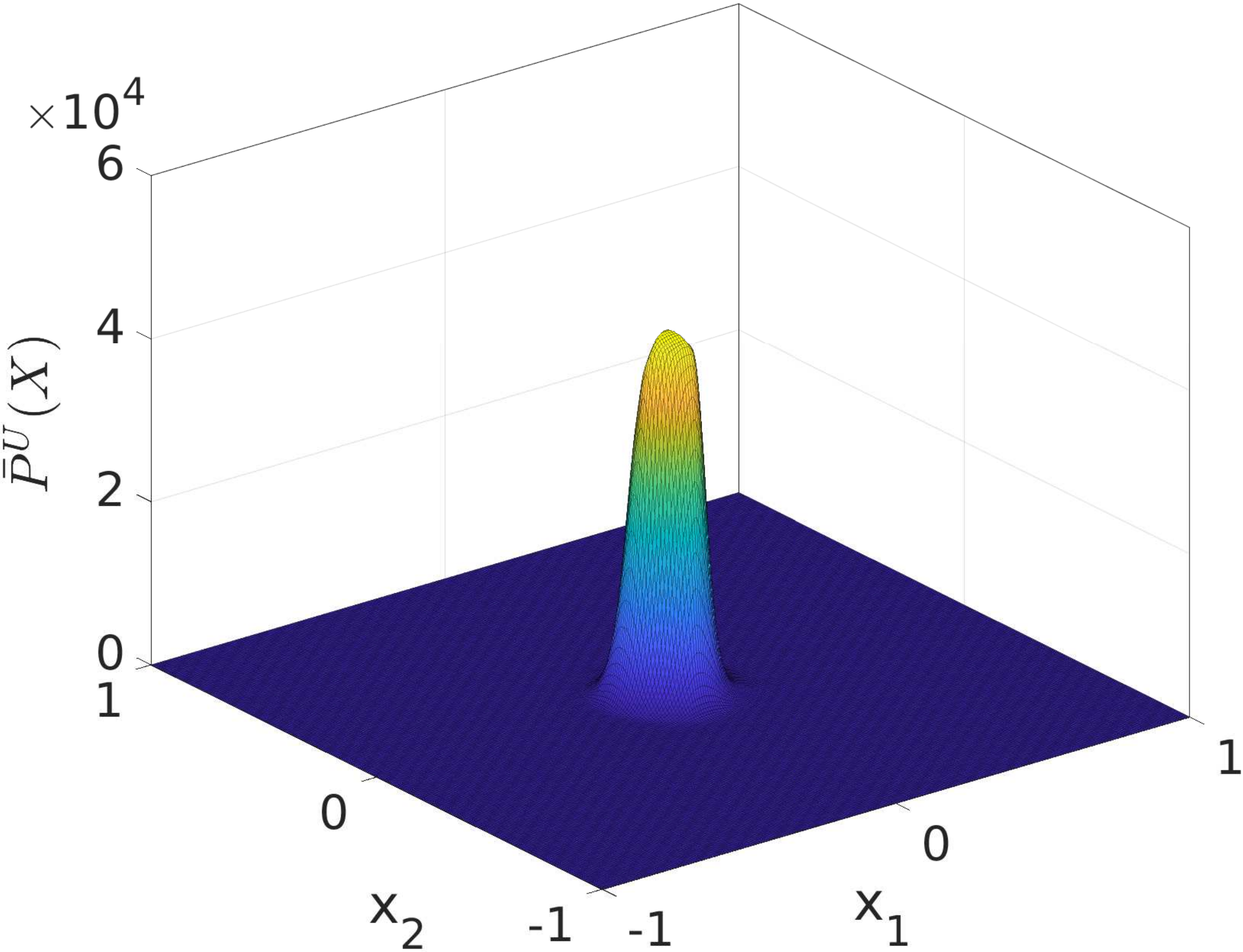}}
		&
		\subfloat[\label{fig:SprtMsmtchFig-c}]{\includegraphics[width=0.22\textwidth]{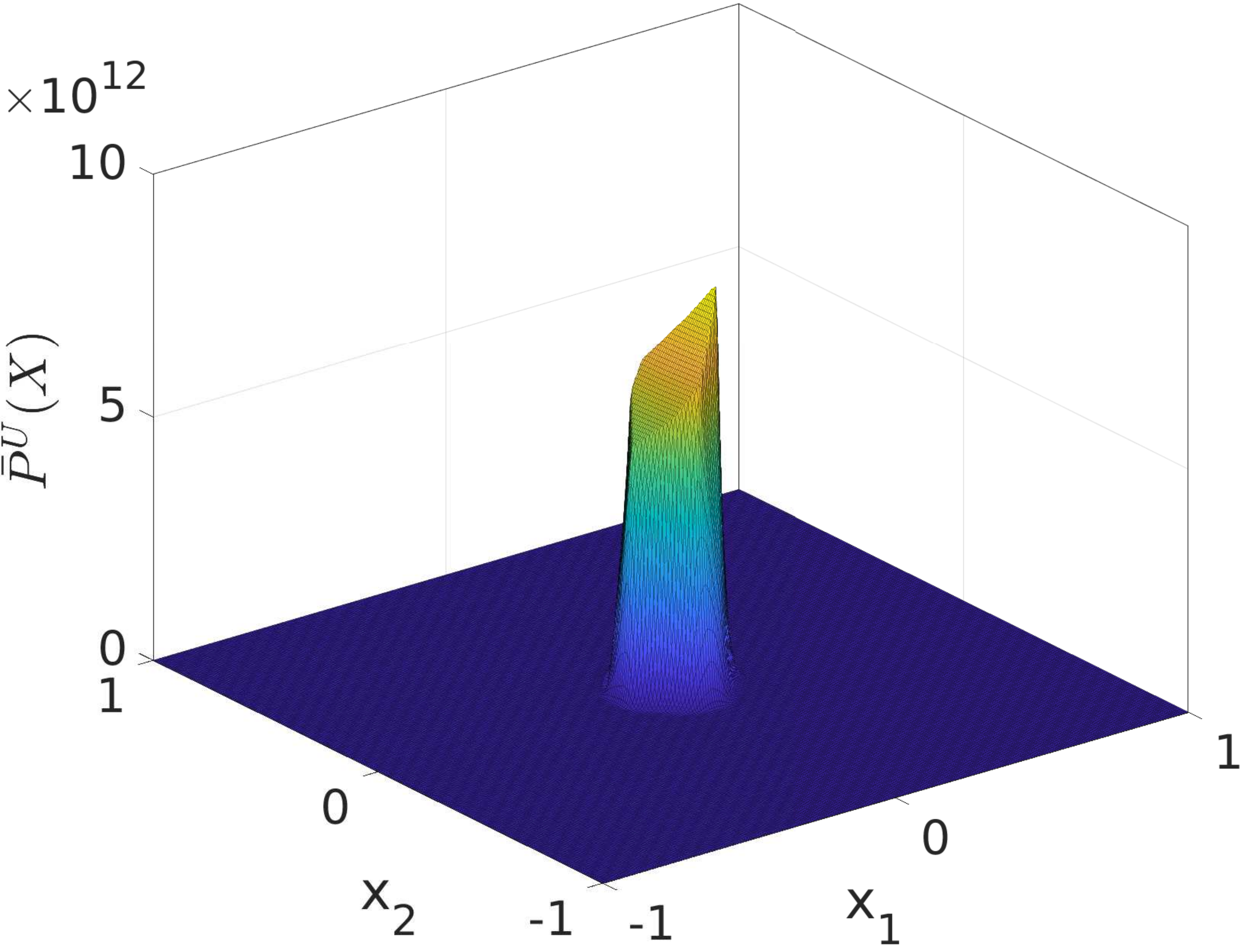}}
		&
		
		\subfloat[\label{fig:SprtMsmtchFig-d}]{\includegraphics[width=0.22\textwidth]{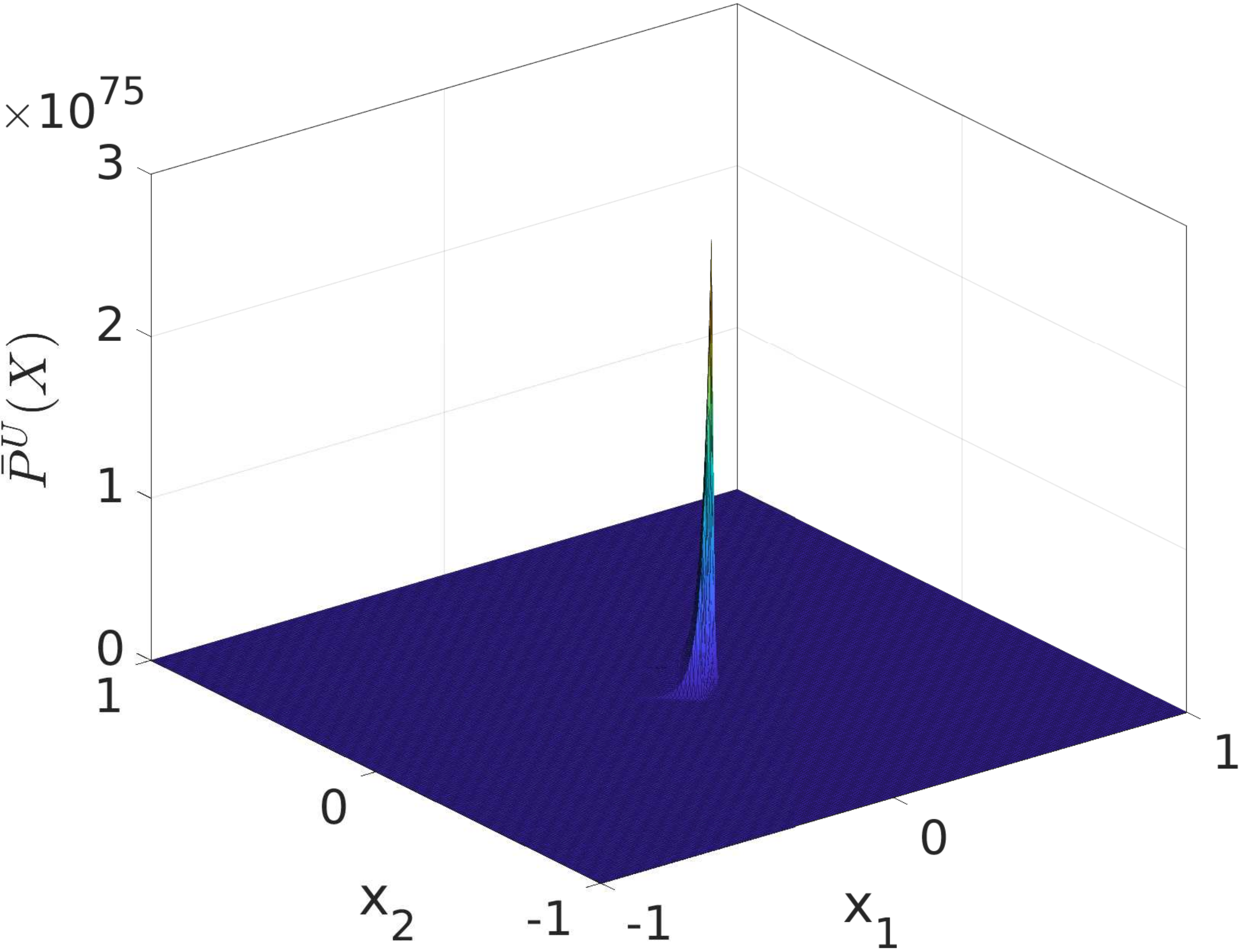}}
		
	\end{tabular}
	
	\protect
	\caption[Illustration of PSO behavior in areas outside of the mutual support.]{Illustration of PSO behavior in areas outside of the mutual support $\spp^{\udcapsuff}$. We inferred 2D \emph{Uniform} distribution $\probs{\usuff}$ via $\bar{\PP}^{\usuff}(X) = \exp f_{\theta}(X)$ by using PSO-LDE with $\alpha = \frac{1}{4}$ (see Table \ref{tbl:PSOInstances1} and Section \ref{sec:DeepLogPDF}). The applied NN architecture is block-diagonal with 6 layers, number of blocks $N_B = 50$ and block size $S_B = 64$ (see Section \ref{sec:BDLayers}). We plot $\bar{\PP}^{\usuff}(X)$ at different optimization times $t$: (a) $t = 100$, (b) $t = 200$, (c) $t = 10000$ and (d) $t = 100000$.
	The support $\spp^{\usuff}$ of $\probs{\usuff}$ is $[-1, 1]$ for both dimensions. The chosen \down density $\probs{\dsuff}$ is defined with $\spp^{\dsuff}$ being identical to $\spp^{\usuff}$, minus the circle of radius $0.3$ around the origin $(0, 0)$. In its entire support $\probs{\dsuff}$ is distributed uniformly. Thus, in this setup the zero-centered circle is outside of $\spp^{\udcapsuff}$ - we have samples $X^{\usuff}_i$ there but no samples $X^{\dsuff}_i$. For this reason, there is only the \up force $F_{\theta}^{\usuff}$ that is present in the circle area, pushing the surface there indefinitely \up. This can be observed from how the centered spire rises along the optimization time.
	}
	\label{fig:SprtMsmtchFig}
\end{figure}

In case $\probs{\usuff}$ and $\probs{\dsuff}$ are relatively similar distributions and when $g_{\theta}$'s bandwidth is wide enough, the above problem of \infh will not occur in practice. For other cases, there are two possible strategies to avoid the optimization divergence.

\paragraph{Indicator Magnitudes}

The above problem can be easily fixed
by multiplying any given \emph{magnitude} $M^{\usuff}$ (or $M^{\dsuff}$) with the following function:
\begin{equation}
reverse\_at
\left[
X,
f_{\theta}(X),
\varphi
\right]
= 
\begin{cases}
-1,
& \text{if } f_{\theta}(X) > \varphi \quad (\text{or } f_{\theta}(X) < \varphi)\\
1
,              & \text{otherwise}
\end{cases}
\label{eq:IHStopper}
\end{equation}
The $reverse\_at(\cdot)$ will change sign of near \mgn term when $f_{\theta}(X)$ at the training point $X$ passes the threshold height $\varphi$. This in turn will change a direction of the force, making it to oscillate the surface $f_{\theta}(X)$ around $\varphi$ (similarly to part 2-d of Theorem \ref{thrm:PSO}). Such behavior will happen only at the "problematic" areas where $f_{\theta}(X)$ got too high/low. In other "safe" areas the function $reverse\_at(\cdot)$ will not have any impact. Hence, applying $reverse\_at(\cdot)$ next to both $M^{\usuff}$ and $M^{\dsuff}$ will enforce the surface height at all $X$ to be between some minimal and maximal thresholds, improving in this way the optimization stability.

\begin{figure}
	\centering
	
	\begin{tabular}{cc}

		\subfloat[\label{fig:StopAtFig-a}]{\includegraphics[width=0.45\textwidth]{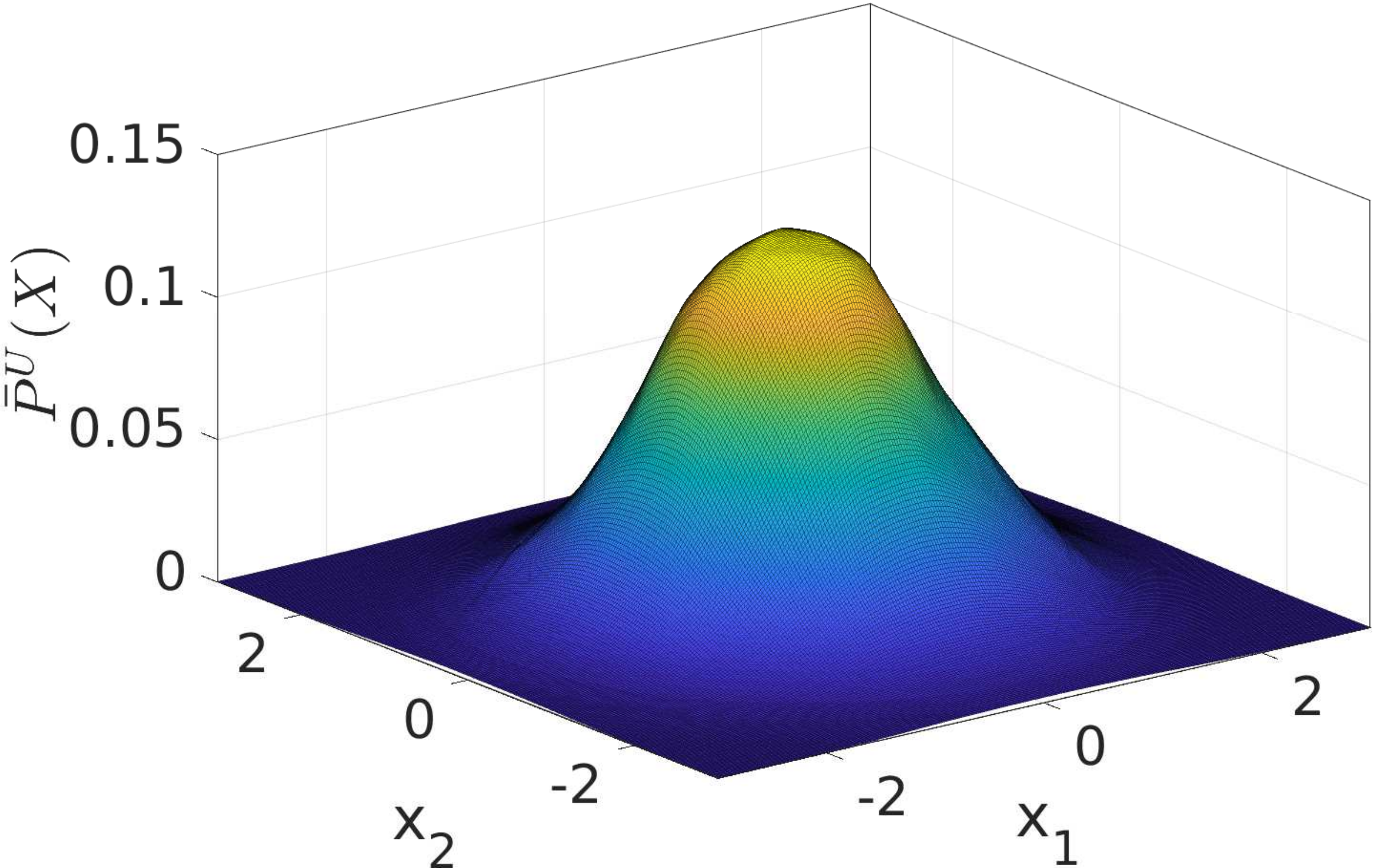}}
		&
		
		\subfloat[\label{fig:StopAtFig-b}]{\includegraphics[width=0.45\textwidth]{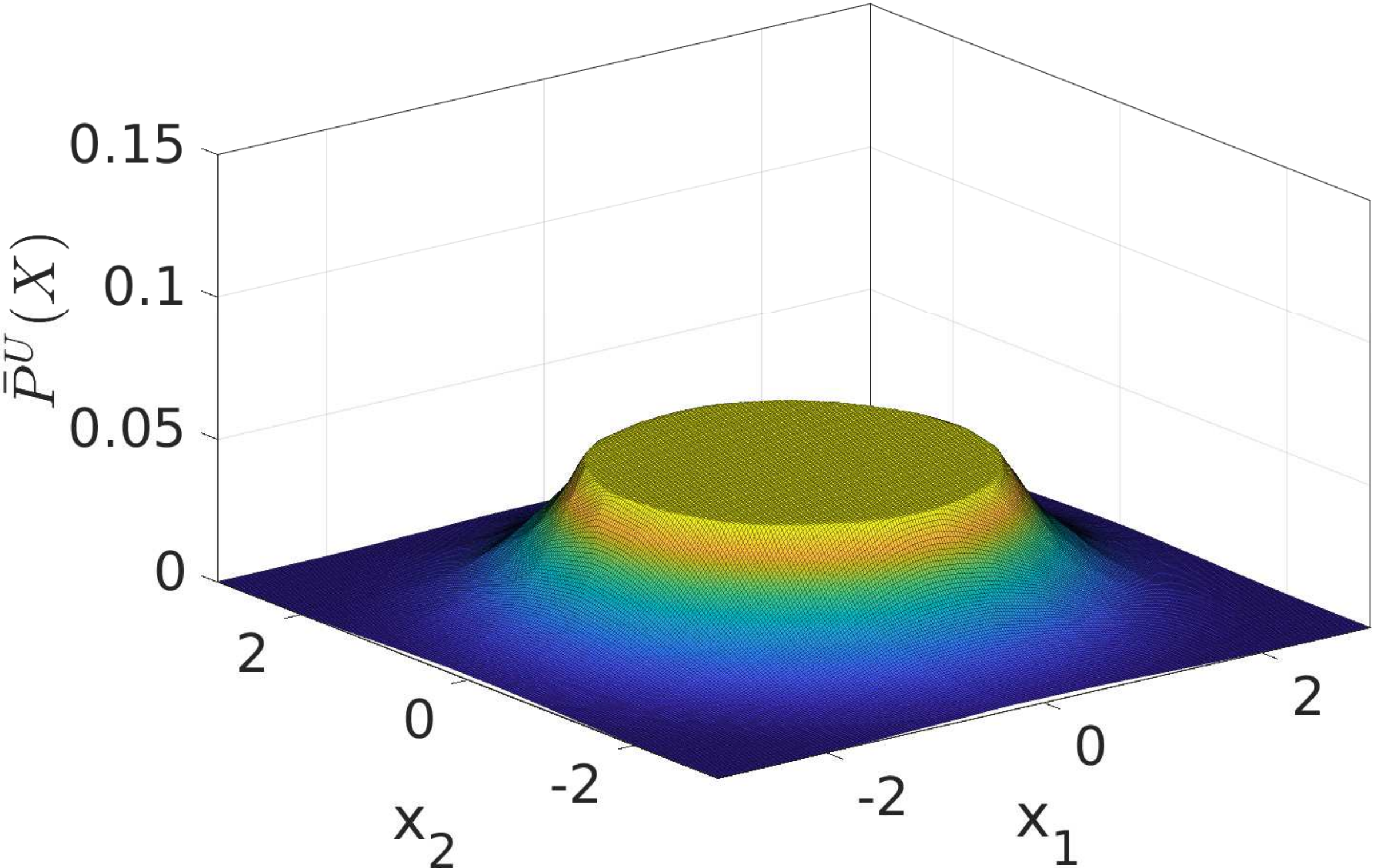}}
	\end{tabular}
	
	\protect
	\caption[Impact illustration of the function $cut\_at$.]{Impact illustration of the function $cut\_at\left[
		X,
		f_{\theta}(X),
		\varphi
		\right]
		$. (a) We inferred 2D \emph{Normal} distribution via $\bar{\PP}^{\usuff}(X) = \exp f_{\theta}(X)$ by using PSO-LDE with $\alpha = \frac{1}{4}$ (see Table \ref{tbl:PSOInstances1} and Section \ref{sec:DeepLogPDF}). The applied NN architecture is block-diagonal with 6 layers, number of blocks $N_B = 50$ and block size $S_B = 64$ (see Section \ref{sec:BDLayers}).
		(b) We performed the same learning algorithm as in (a), but with modified \up \emph{magnitude} function $\bar{M}^{\usuff}\left[X,f_{\theta}(X)\right] = M^{\usuff}\left[X,f_{\theta}(X)\right] \cdot cut\_at\left[
		X,
		f_{\theta}(X),
		-3
		\right]$, where $cut\_at$ is defined in Eq.~(\ref{eq:IHCutter}). This function deactivates \up pushes for points with $f_{\theta}(X) > -3$ (and $\exp f_{\theta}(X) > 0.0498$), thus the surface $f_{\theta}(X)$ at height above threshold -3 is only pushed by \down force and hence is enforced to converge to the threshold. Yet, note that at points where the converged model satisfies $f_{\theta}(X) \leq -3$ the convergence is the same as in (a).
	}
	\label{fig:StopAtFig}
\end{figure}

The alternative to the above function is:
\begin{equation}
cut\_at
\left[
X,
f_{\theta}(X),
\varphi
\right]
= 
\begin{cases}
0,
& \text{if } f_{\theta}(X) > \varphi \quad (\text{or } f_{\theta}(X) < \varphi)\\
1
,              & \text{otherwise}
\end{cases}
\label{eq:IHCutter}
\end{equation}
In contrast to $reverse\_at(\cdot)$, once the surface $f_{\theta}(X)$ at some training point $X$ passed the threshold height $\varphi$, the corresponding gradient term of $X$ (either \up or \down) is withdrawn from the total gradient $\nabla_{\theta}
\hat{L}_{PSO}^{N^{\usuff},N^{\dsuff}}(f_{\theta})
$ in Eq.~(\ref{eq:GeneralPSOLossFrml}), entirely deactivating the influence of $X$ on the learning task. 
Thus, the surface $f_{\theta}$ is not pushed anymore in areas where its height got too high/low; yet, it is still pushed at samples of the opposite force. Such force composition will constrain the surface height at unbalanced areas to converge to the threshold height $\varphi$, similarly to $reverse\_at(\cdot)$. Further, unlike $reverse\_at(\cdot)$, 
once any particular training point $X$ got disabled by $cut\_at(\cdot)$, it also does not have a side-influence (via $g_{\theta}(X, X')$) on the surface at other points. Likewise, it stops affecting the value of $\theta$, leading to a higher movement freedom within $\theta$-space. Empirically we observed $cut\_at(\cdot)$ to result in overall higher approximation accuracy compared to $reverse\_at(\cdot)$. The optimization outcome of $cut\_at(\cdot)$ usage is illustrated in Figure \ref{fig:StopAtFig}.

\paragraph{Restriction over Functions' Range}

Alternatively, we can enforce all functions within the considered function space $\FF$ to have any desired range $\AAA = [H_{min}, H_{max}]$. The constants $H_{min}$ and $H_{max}$ will represents the minimal and maximal surface heights respectively. For example, this can be accomplished by using the model:
\begin{equation}
f_{\theta}(X) =
\half
\cdot
[H_{max} - H_{min}]
\cdot
\tanh
[h_{\theta}(X)]
+
\half
\cdot
[H_{max} + H_{min}]
,
\label{eq:BoundedModel}
\end{equation}
where $h_{\theta}$ represents an inner model that may have unbounded outputs. Since $\tanh(\cdot)$ is bounded to have values in $[-1, 1]$, it is easy to verify that the above model can return only values between $H_{min}$ and $H_{max}$. Thus, using such model will eliminate the divergence of the surface to an infinite height. Likewise, other bounded functions can be used instead of $\tanh(\cdot)$ such as $\erf(\cdot)$, $\text{sigmoid}(\cdot)$, $\arctan(\cdot)$ and many others.

While the impact of both above strategies is intuitive and simple - prevention of $f_{\theta}$ from being pushed beyond pre-defined thresholds, the rigorous math proof is more difficult to obtain. First strategy induces estimators within PSO non-differentiable family presented in Section \ref{sec:FuncMutSupp_non_diff} - the setting that is not analyzed by this work. Second strategy leads to PSO functionals minimized over a function space, whose range may not contain the entire convergence interval $\KKK$.
A detailed analysis of the above special cases is left for future work since they are not the main focus of this paper.

\section{Density Estimation via PSO}
\label{sec:DensEst}

Till now we discussed a general formulation of PSO, where the presented analysis addressed properties of any PSO instance. In this section we focus in more detail on groups of PSO instances that can be applied for the density estimation problem, denoted above as $\pdfestset$ and $\logpdfestset$. In particular, in Section \ref{sec:DeepPDF} we shortly describe our previous work on density estimation, while in Section \ref{sec:DeepLogPDF} we explore new PSO approaches to infer density on a logarithmic scale, with bounded \emph{magnitude} functions that lead to the enhanced optimization performance.

\subsection{DeepPDF}
\label{sec:DeepPDF}

Here we briefly describe the density estimation approach, DeepPDF, introduced in \citep{Kopitkov18arxiv}, as a particular instance  of the PSO paradigm.
The density estimation problem involves learning a pdf function $\probi{\usuff}{X}$ from a dataset of i.i.d. samples $\{ X^{\usuff}_{i} \}$. For this purpose, the proposed \emph{pdf loss} was defined as:
\begin{equation}
L_{pdf}(\theta) = 
-
\E_{X \sim \probs{\usuff}}
f_{\theta}(X) \cdot \probi{\dsuff}{X}		+
\E_{X \sim \probs{\dsuff}}
\half
\Big[
f_{\theta}(X)
\Big]^{2}
,
\label{eq:PDFLoss}
\end{equation}
with corresponding \mfs
$M^{\usuff}\left[X,f_{\theta}(X)\right] = \probi{\dsuff}{X}$ and $M^{\dsuff}\left[X,f_{\theta}(X)\right] = f_{\theta}(X)$.
$\probs{\dsuff}$ is an arbitrary density with a known pdf function which can be easily sampled.

The above loss is a specific instance of PSO with \bp achieved when the surface $f_{\theta}(X)$ converges to $\probi{\usuff}{X}$ point-wise $\forall X \in \spp^{\udcapsuff}$ (see Theorem \ref{thrm:PSO_func_magn}), with the corresponding convergence interval being $\KKK = \RRpos$. Since conditions of Theorem \ref{thrm:PSO_func_magn_unconstr} hold, no restriction over $\FF$'s range is required.
Moreover, according to the condition 3 in Theorem \ref{thrm:PSO_UD_conv} upon the convergence $f_{\theta}(X)$ at $X \in \spp^{\udbacksuff}$ can be arbitrary - $M^{\usuff}$ is zero in the area $\spp^{\udbacksuff}$ and hence the force $F_{\theta}^{\usuff}$ is disabled. Due to similar Theorem \ref{thrm:PSO_DU_conv}, at $X \in \spp^{\dubacksuff}$ the optimal solution must satisfy $f_{\theta}(X) = 0$. Therefore, any candidate for $\probs{\dsuff}$ with $\spp^{\usuff} \subseteq \spp^{\dsuff}$ will lead to the convergence $\forall X \in \spp^{\udcupsuff}: f_{\theta}(X) = \probi{\usuff}{X}$. Outside of the support union $\spp^{\udcupsuff}$ a convergence can be arbitrary, depending of the model kernel, which is true for any PSO method.

Concluding from the above, selected $\probs{\dsuff}$ must satisfy $\spp^{\usuff} \subseteq \spp^{\dsuff}$.
In our experiments we typically use a Uniform distribution for \down density $\probs{\dsuff}$ (yet in practice any density can be applied). The minimum and maximum for each dimension of $\probs{\dsuff}$'s support are assigned to minimum and maximum of the same dimension from the available $\probs{\usuff}$'s data points. Thus, the available samples $\{ X^{\usuff}_{i} \}$ define $n$-dimensional \emph{hyperrectangle} in $\RR^n$ as support of $\probs{\dsuff}$, with $\probs{\usuff}$'s support being its subset. Inside this \emph{hyperrectangle} the surface is pushed by $F_{\theta}^{\usuff}$ and $F_{\theta}^{\dsuff}$. Note that if borders of this support \emph{hyperrectangle} can not be computed a priori (e.g. active learning), the $reverse\_at(\cdot)$ and $cut\_at(\cdot)$ functions can be used to prevent a possible optimization divergence as described in Section \ref{sec:PSOStability}.

After training is finished, the converged $f_{\theta}(X)$ may have slightly negative values at points $\{X \in \spp^{\dubacksuff} \}$ being that during optimization the oscillation around height zero is stochastic in nature. Moreover, surface values outside of the \emph{hyperrectangle} may be anything since the $f_{\theta}(X)$ was not optimized there. In order to deal with these possible inconsistencies, we can use the following \emph{proxy} function as our estimation of target $\probi{\usuff}{X}$:
\begin{equation}
\bar{f}_{\theta}(X) = 
\begin{cases}
0,& \text{if } f_{\theta}(X) < 0 \text{ or } \probi{\dsuff}{X} = 0\\
f_{\theta}(X),              & \text{otherwise}
\end{cases}
\label{eq:PDFProxy}
\end{equation}
which produces the desirable convergence $\forall X \in \RR^{n}: \bar{f}_{\theta}(X) = \probi{\usuff}{X}$.

In \citep{Kopitkov18arxiv} we demonstrated that the above DeepPDF method with $f_{\theta}$ parametrized by NN outperforms the kernel density estimation (KDE) in an inference accuracy, and is significantly faster at the query stage when the number of training points is large.

\subsection{PSO-LDE - Density Estimation on Logarithmic Scale}
\label{sec:DeepLogPDF}

Typically, the output from a multidimensional density $\probi{\usuff}{X}$ will tend to be extra small, where higher data dimension causes smaller pdf values. Representing very small numbers in a computer system may cause underflow and precision-loss problems. To overcome this, in general it is recommended to represent such small numbers at a logarithmic scale. Furthermore, the estimation of log-pdf is highly useful. For example, in context of robotics it can represent log-likelihood of sensor measurement and can be directly applied to infer an unobservable state of robot \citep{Kopitkov18iros}. Likewise, once log-pdf $\log \probi{\usuff}{X}$ is learned its average for data samples approximates the entropy of $\probs{\usuff}$, which can further be used for robot planning \citep{Kopitkov17ijrr}.

Here we derive several estimator families from PSO subgroup $\logpdfestset$ that infers logarithm of a pdf, $\log \probi{\usuff}{X}$, as its target function. Although some members of these families were already reported before (e.g. NCE, \citet{Gutmann10aistats}), the general formulation of these families was not considered previously. 
Further, presented below PSO instances with the convergence $\log \probi{\usuff}{X}$ can be separated into two groups - instances with unbounded and bounded \mfs \pair. As was discussed in Section \ref{sec:BoundUnboundMFs} and as will be shown in Section \ref{sec:Exper}, the latter group yields a better optimization stability and also produces a higher accuracy.

According to Table \ref{tbl:ClassicTransforms}, PSO convergence of the subgroup $\logpdfestset$ is described by $T\left[X, z \right] = \log z + \log \probi{\dsuff}{X}$, with its inverse being $R\left[X, s \right] = \frac{\exp s}{\probi{\dsuff}{X}}$.
Further, according to Theorem \ref{thrm:PSO_func_conv} the convergence interval is $\KKK = \RR$. Then, any pair of continuous positive functions \pair that satisfies:
\begin{equation}
\frac{M^{\dsuff}(X, f_{\theta}(X))}{M^{\usuff}(X, f_{\theta}(X))} = 
\frac{\exp f_{\theta}(X)}{\probi{\dsuff}{X}}
,
\label{eq:PSO_bal_log_PDF}
\end{equation}
will produce $f_{\theta}(X) = \log \probi{\usuff}{X}$ at the convergence.

\paragraph{Unbounded \emph{Magnitudes}}

To produce new PSO instances with the above equilibrium, infinitely many choices over \pair can be taken. In Table \ref{tbl:LogPDFInstances} we show several such alternatives. Note that according to PSO we can merely move any term $q(X, s)$ from $M^{\usuff}(X, s)$ into $M^{\dsuff}(X, s)$ as $\frac{1}{q(X, s)}$, and vice versa. Such modification will not change the PSO \bp and therefore allows for the exploration of various \ms that lead to the same convergence.

\begin{table}[tb]

	\centering
	\begin{tabular}{cllll}
		\toprule
		\textbf{Loss Version}     & \textbf{
			\emph{\underline{L}oss} / \emph{$M^{\usuff}(\cdot)$ and $M^{\dsuff}(\cdot)$}
		}      \\
		\midrule
		1  
		& \underline{L}: 
		$-
		\E_{X \sim \probs{\usuff}}
		f_{\theta}(X) \cdot \probi{\dsuff}{X}		+
		\E_{X \sim \probs{\dsuff}}
		\exp (f_{\theta}(X))
$
		\\
		& $M^{\usuff}$, $M^{\dsuff}$: 
		$\probi{\dsuff}{X}$,
		$
		\exp (f_{\theta}(X))
		$
		\\
		\midrule
		2  
		& \underline{L}: 
		$-
		\E_{X \sim \probs{\usuff}}
		f_{\theta}(X)		+
		\E_{X \sim \probs{\dsuff}}
		\frac{\exp (f_{\theta}(X))}{\probi{\dsuff}{X}}
		$
		
		\\
		& $M^{\usuff}$, $M^{\dsuff}$: 
		$1$,
		$
		\frac{\exp (f_{\theta}(X))}{\probi{\dsuff}{X}}
		$
		\\
		
		\midrule
		3  
		& \underline{L}: 
		$
		\E_{X \sim \probs{\usuff}}
		\frac{\probi{\dsuff}{X}}{\exp (f_{\theta}(X))}		+
		\E_{X \sim \probs{\dsuff}}
		f_{\theta}(X)
		$
		\\
		& $M^{\usuff}$, $M^{\dsuff}$: 
		$
		\frac{\probi{\dsuff}{X}}{\exp (f_{\theta}(X))}
		$,
		$
		1
		$
		\\
		
		\midrule
		4  
		& 
		\underline{L}: 
		$
		\E_{X \sim \probs{\usuff}}
		2 \cdot
		\exp \Big[
		\half \cdot 
		\big(
		\log \probi{\dsuff}{X} - f_{\theta}(X)
		\big)
		\Big]		+$
		\\ &
		$ \quad +
		\E_{X \sim \probs{\dsuff}}
		2 \cdot
		\exp \Big[
		\half \cdot 
		\big(
		f_{\theta}(X) - \log \probi{\dsuff}{X}
		\big)
		\Big]
		$
		\\ &
		$M^{\usuff}$, $M^{\dsuff}$: $
		\exp \Big[
		\half \cdot 
		\big(
		\log \probi{\dsuff}{X} - f_{\theta}(X)
		\big)
		\Big]
		$,
		\\ &
		$
		\quad
		\exp \Big[
		\half \cdot 
		\big(
		f_{\theta}(X) - \log \probi{\dsuff}{X}
		\big)
		\Big]
		$
		\\
		
		\midrule
		5  
		& \underline{L}: 
		$-
		\E_{X \sim \probs{\usuff}}
		\exp (f_{\theta}(X))
		\cdot
		\probi{\dsuff}{X}		+
		\E_{X \sim \probs{\dsuff}}
		\half
		\cdot
		\exp (2 \cdot f_{\theta}(X))
		$

		\\
		& $M^{\usuff}$, $M^{\dsuff}$: 
		$
		\probi{\dsuff}{X}
		\cdot
		\exp (f_{\theta}(X))
		$,
		$
		\exp (2 \cdot f_{\theta}(X))
		$
		\\
		
		\bottomrule
	\end{tabular}
	
	\caption{Several PSO Instances that converge to $f_{\theta}(X) = \log \probi{\usuff}{X}$}
	\label{tbl:LogPDFInstances}
	
\end{table}

\begin{remark}
Although we can see an obvious similarity and a relation between magnitudes in Table \ref{tbl:LogPDFInstances} (they all have the same ratio $M^{\dsuff}(\cdot)/M^{\usuff}(\cdot)$), the corresponding losses have a much smaller resemblance. Without applying PSO rules, it would be hard to deduce that they all approximate the same target function.
\end{remark}

The Table \ref{tbl:LogPDFInstances} with acquired losses serves as a demonstration for simplicity of applying PSO concepts to forge new methods for the log-density estimation. However, produced losses have unbounded \mfs, and are not very stable during the real optimization, as will be shown in our experiments.
The first loss in Table \ref{tbl:LogPDFInstances} can lead to precision problems since its \ms return (very) small outputs from $\probi{\dsuff}{X}$ and $\exp (f_{\theta}(X))$. Further, $M^{\usuff}$ of the third loss has devision by output from the current model $\exp (f_{\theta}(X))$ which is time-varying and can produce values arbitrarily close to zero. Likewise, methods 4 and 5 hold similar problems.

\paragraph{Bounded \emph{Magnitudes}}

Considering the above point, the PSO instances in Table \ref{tbl:LogPDFInstances} are sub-optimal. Instead, 
we want to find PSO losses with bounded $M^{\dsuff}(\cdot)$ and $M^{\usuff}(\cdot)$. Further, the required relation in Eq.~(\ref{eq:PSO_bal_log_PDF}) between two \emph{magnitudes} can be seen as:
\begin{equation}
\frac{M^{\dsuff}\left[X,f_{\theta}(X)\right]}{M^{\usuff}\left[X,f_{\theta}(X)\right]}
=
\exp \bar{d}
\left[
X, f_{\theta}(X)
\right]
,
\label{eq:MangConn}
\end{equation}
\begin{equation}
\bar{d}
\left[
X, f_{\theta}(X)
\right]
\triangleq 
f_{\theta}(X) - \log \probi{\dsuff}{X}
.
\label{eq:DDiffDefinition}
\end{equation}
$\bar{d}\left[
X, f_{\theta}(X)
\right]
$ is a logarithm difference between the model surface and log-pdf of \down density, which will play an essential role in \mfs below.

According to Section \ref{sec:BoundUnboundMFs} and Lemma \ref{lmm:PSO_consist_modific}, from Eq.~(\ref{eq:MangConn}) we can produce the following family of PSO instances:
\begin{equation}
M^{\usuff}\left[X,f_{\theta}(X)\right]
=
\frac{\probi{\dsuff}{X}}{D\left[X,f_{\theta}(X)\right]}
,
\quad
M^{\dsuff}\left[X,f_{\theta}(X)\right]
=
\frac{\exp f_{\theta}(X)}{D\left[X,f_{\theta}(X)\right]}
,
\label{eq:PSOLogEstNormalized}
\end{equation}
where the denominator function $D\left[X,f_{\theta}(X)\right] > 0$ takes the responsibility to normalize output of \emph{magnitude} functions to be in some range $[0,\epsilon]$. Moreover, choice of $D\left[X,f_{\theta}(X)\right]$ does not affect the PSO \bp; it is reduced when the above \emph{magnitudes} are introduced into Eq.~(\ref{eq:PSO_bal_log_PDF}).

\begin{figure}[tb]
	\centering
	
	\begin{tabular}{cccc}
		
		\subfloat[\label{fig:MagnFuncs-a}]{\includegraphics[width=0.4\textwidth]{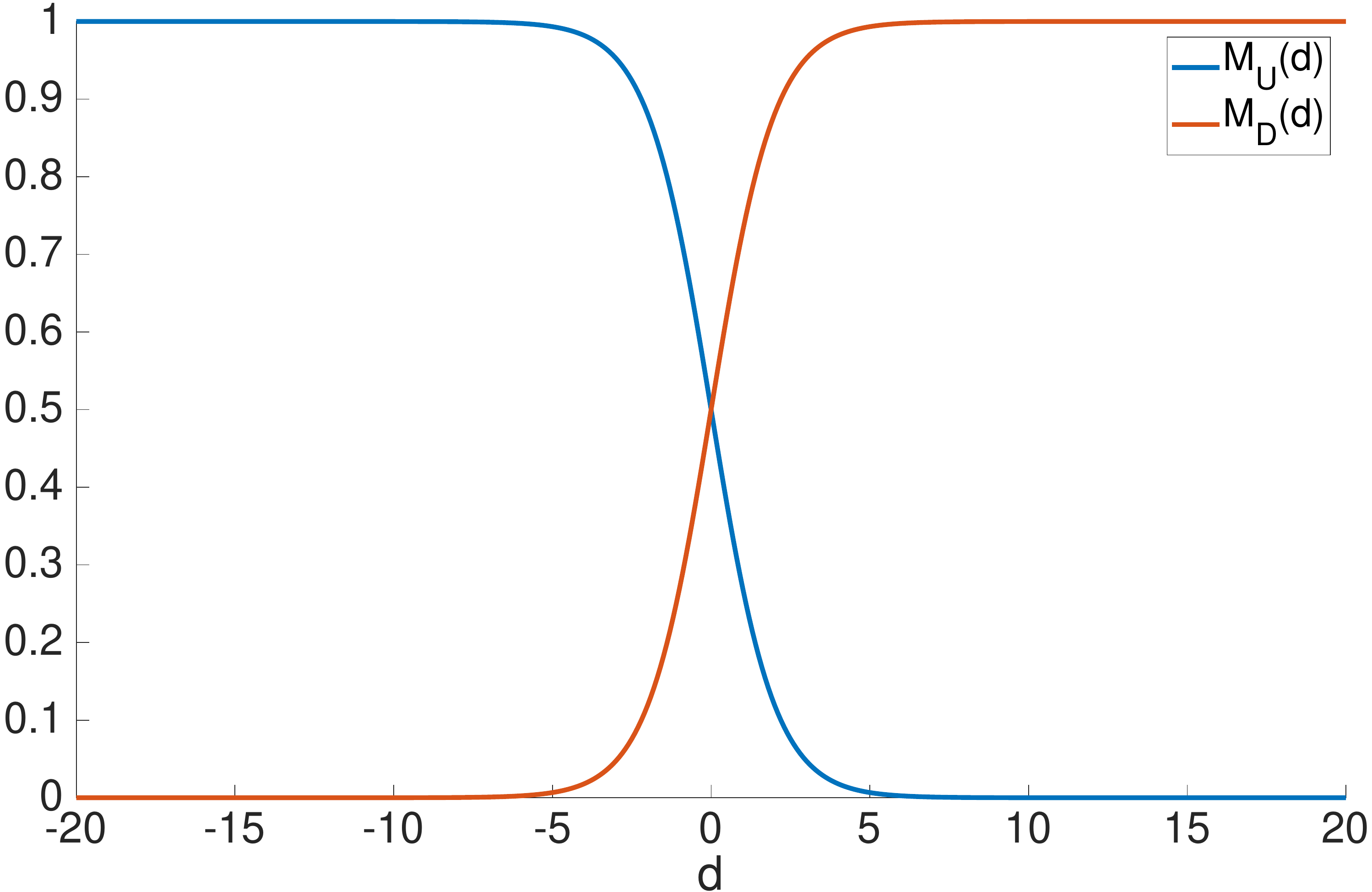}}
&
		\subfloat[\label{fig:MagnFuncs-b}]{\includegraphics[width=0.4\textwidth]{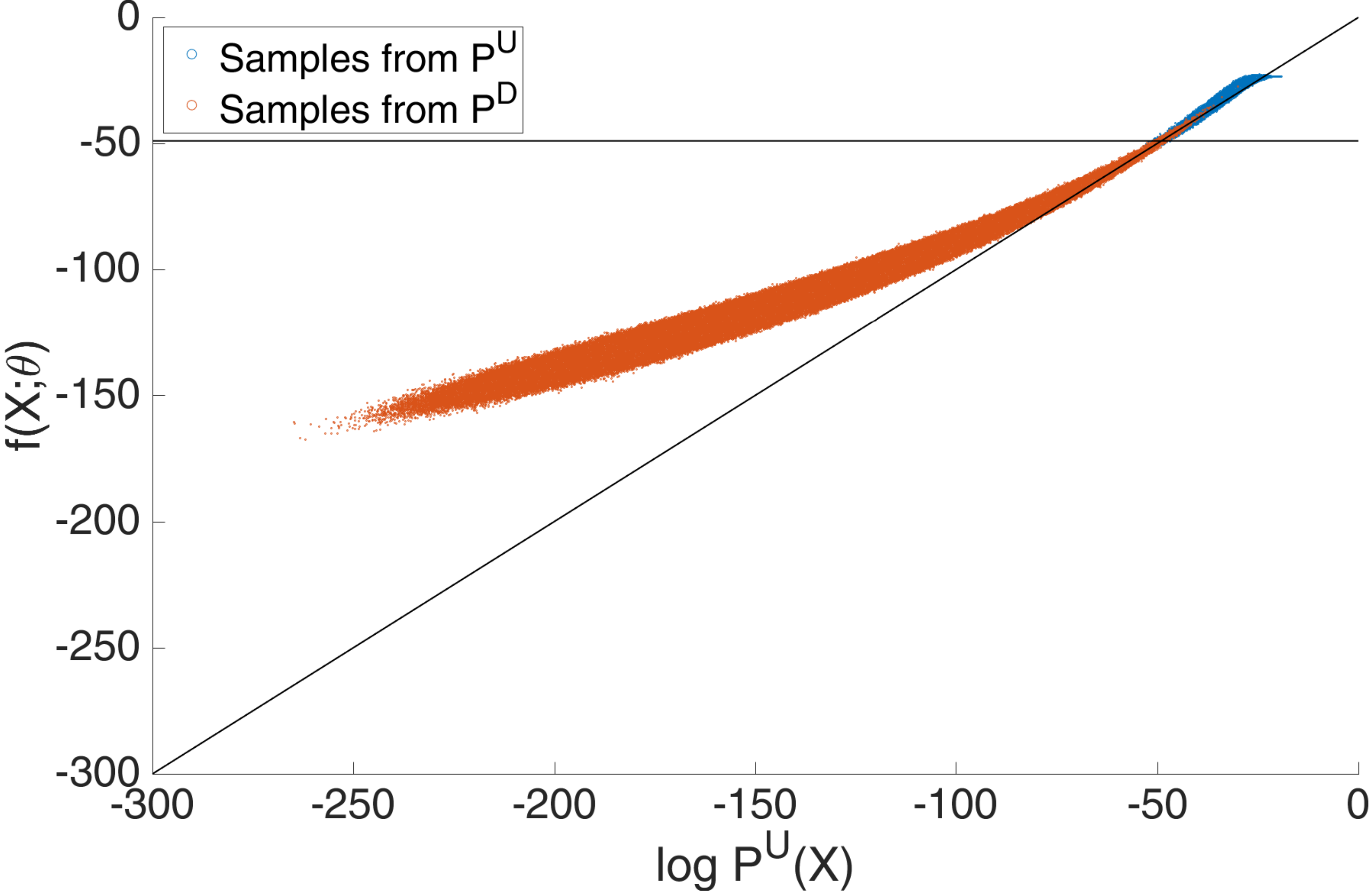}}

	\end{tabular}
	
	\protect
	\caption[NCE \emph{magnitudes} as functions of a log-difference.]{(a) NCE \emph{magnitudes} as functions of a difference $\bar{d}\left[
		X, f_{\theta}(X)
		\right]
		 \triangleq f_{\theta}(X) - \log \probi{\dsuff}{X}$.
	(b) Log density estimation via NCE for 20D data, 
	where $\probs{\usuff}$ is standard Normal 20D distribution and $\probs{\dsuff}$ is minimal Uniform 20D distribution that covers all samples from $\probs{\usuff}$. Blue points are sampled from $\probs{\usuff}$, while red points - from $\probs{\dsuff}$. The $x$ axes represent $\log \probi{\usuff}{X}$ for each sample, $y$ axes - the surface height $f_{\theta}(X)$ after the optimization was finished. Diagonal line represents $f_{\theta}(X) = \log \probi{\usuff}{X}$, where we would see all points in case of \emph{perfect} model inference. The black horizontal line represents $\log \probi{\dsuff}{X} = - 49$ which is constant for the Uniform density. As can be seen, these two densities have a \emph{relative} support mismatch - 
	the sampled points from both densities are obviously located mostly in different space neighborhoods; this can be concluded from values of $\log \probi{\usuff}{X}$ that are very different for both point populations. Further, points with relatively small $|\bar{d}|$ (around the horizontal line) have a small estimation error since the ratio $\frac{\probi{\usuff}{X}}{\probi{\dsuff}{X}}$ there is bounded and both points $X^{\usuff}$ and $X^{\dsuff}$ are sampled from these areas. In contrast, we can see that in areas where $|\bar{d}| > \varepsilon$ for some positive constant $\varepsilon$ we have samples only from one of the densities. Further, points with $\bar{d} > 0$ (\emph{above} the horizontal line) are pushed \up till a some threshold where the surface height is stuck due to \up \emph{magnitude} $M^{\usuff}(\cdot)$ going to zero (see also Figure (a)). Additionally, points with $\bar{d} < 0$ (\emph{below} the horizontal line) are pushed \down till their \emph{magnitude} $M^{\dsuff}(\cdot)$ also becomes zero. Note that \emph{below} points are pushed further from the horizontal line than the \emph{above} points. This is because the \emph{above} points are near the origin (mean of Normal distribution) which is a less flexible area; it is side-influenced from all directions by surrounding \down samples via $g_{\theta}$, which prevents it from getting too high. On opposite, the \emph{below} points are located far from the origin center on the edges of the considered point space. In these areas there are almost no samples from $\probs{\usuff}$ and thus the surface is much more easily pushed \down. Clearly, the PSO estimation task for the above choice of \up and \down densities can not yield a high accuracy, unless $g_{\theta}$ is a priori chosen in data-dependent manner (not considered in this paper). Yet, we can see that NCE does not push the surface to $\pm \infty$ at the unbalanced areas.
	}
	\label{fig:MagnFuncs}
\end{figure}

To bound functions $M^{\dsuff}(\cdot)$ and $M^{\usuff}(\cdot)$ in Eq.~(\ref{eq:PSOLogEstNormalized}),  $D\left[X,f_{\theta}(X)\right]$ can take infinitely many forms. One such form, that was implicitly applied by NCE technique \citep{Smith05acl, Gutmann10aistats}, is $D\left[X,f_{\theta}(X)\right] = \exp f_{\theta}(X) + \probi{\dsuff}{X}$ (see also Table \ref{tbl:PSOInstances1}). Such choice of normalization enforces outputs of both \emph{magnitude} functions in Eq.~(\ref{eq:PSOLogEstNormalized}) to be between 0 and 1. Moreover, NCE \emph{magnitudes} can be seen as functions of a logarithm difference $\bar{d}\left[
X, f_{\theta}(X)
\right]
$ in Eq.~(\ref{eq:DDiffDefinition}),
$M^{\usuff}(\bar{d}) = sigmoid(-\bar{d}\left[
X, f_{\theta}(X)
\right]
)$ and $M^{\dsuff}(\bar{d}) = sigmoid(\bar{d}\left[ X, f_{\theta}(X) \right])$. Thus, an output of \mfs at point $X \in \RR^n$ entirely depends on this logarithm difference at $X$.
Furthermore, the \up \emph{magnitude} reduces to zero for a large positive $\bar{d}$ and the \down \emph{magnitude} reduces to zero for a large negative $\bar{d}$ (see also Figure \ref{fig:MagnFuncs-a}).

Such property, produced by bounding \ms, is highly helpful and intuitively can be viewed as an elastic springy constraint over the surface $f_{\theta}$; it prevents \infh problem described in Section \ref{sec:PSOStability}, even when \up and \down densities are very different and when their support does not match. In neighborhoods where we sample many points from $\probs{\usuff}$ but almost no points from $\probs{\dsuff}$ (ratio $\frac{\probi{\usuff}{X}}{\probi{\dsuff}{X}}$ is large), the surface is pushed indefinitely \up through \up term within PSO loss, as proved by Theorem \ref{thrm:PSO_peak_convergence}. 
Yet, as it pushed higher, $\bar{d}$ for these neighborhoods becomes larger and thus the \up \emph{magnitude} $M^{\usuff}(\bar{d})$ goes quickly to zero. Therefore, when at a specific point $X$ the surface $f_{\theta}(X)$ was pushed \up too far from $\log \probi{\dsuff}{X}$, the \up \emph{magnitude} at this point becomes almost zero hence deactivating the \up force $F_{\theta}^{\usuff}(X)$ at this $X$. The same logic also applies to \down force $F_{\theta}^{\dsuff}(X)$ - in NCE this force is deactivated at points where $f_{\theta}(X)$ was pushed \down too far from $\log \probi{\dsuff}{X}$.

Critically, since $f_{\theta}(X)$ approximates $\log \probi{\usuff}{X}$, $\bar{d}\left[
X, f_{\theta}(X)
\right]
$ can also be viewed as an estimation of $\log \frac{\probi{\usuff}{X}}{\probi{\dsuff}{X}}$.
Therefore, the above exposition of NCE dynamics can be also summarized as follows. At points where logarithm difference $\log \probi{\usuff}{X} - \log \probi{\dsuff}{X}$ is in some dynamical active range $[- \varepsilon, \varepsilon]$ for positive $\varepsilon$, the \up and \down forces will be active and will reach the equilibrium with $f_{\theta}(X) = \log \probi{\usuff}{X}$. At points where $[\log \probi{\usuff}{X} - \log \probi{\dsuff}{X}] > \varepsilon \Leftrightarrow \frac{\probi{\usuff}{X}}{\probi{\dsuff}{X}} > \exp \varepsilon$, the surface will be pushed \up to height $\varepsilon$. And at points where $[\log \probi{\usuff}{X} - \log \probi{\dsuff}{X}] < -\varepsilon \Leftrightarrow \frac{\probi{\usuff}{X}}{\probi{\dsuff}{X}} < \frac{1}{\exp \varepsilon}$, the surface will be pushed \down to height $-\varepsilon$. Once the surface at some point $X$ passes above the height $\varepsilon$ or below the height $-\varepsilon$, the NCE loss stops pushing it due to (near) zero \emph{magnitude} component. Yet, the side-influence induced by model kernel $g_{\theta}(X, X')$ from non-zero \emph{magnitude} areas can still affect the surface height at $X$. The above NCE behavior is illustrated in Figure \ref{fig:MagnFuncs-b} where 20D log-density estimation is performed via NCE for Gaussian distribution $\probs{\usuff}$ and Uniform distribution $\probs{\dsuff}$.

\begin{remark}
Note that the scalar $\varepsilon$ represents a sensitivity threshold, where pushes at points with $\bar{d} > \varepsilon \Leftrightarrow |M^{\usuff}(\cdot)| < sigmoid(- \varepsilon)$ or at points with $\bar{d} < - \varepsilon \Leftrightarrow |M^{\dsuff}(\cdot)| < sigmoid(- \varepsilon)$ have a neglectable effect on the surface due to their small magnitude component. Such sensitivity is different for various functional spaces $f_{\theta} \in \FF$; for some spaces a small change of $\theta$ can only insignificantly affect the surface $f_{\theta}(X)$, while causing huge impact in others.
Hence, the value of $\varepsilon$ depends on specific choice of $\FF$ and of magnitude functions $M^{\usuff}(\cdot)$ and $M^{\dsuff}(\cdot)$.
\end{remark}

The above described relationship between NCE \emph{magnitude} functions and ratio $\frac{\probi{\usuff}{X}}{\probi{\dsuff}{X}}$ is very beneficial in the context of density estimation, since it produces high accuracy for points with bounded density ratio $| \log \probi{\usuff}{X} - $ $\log \probi{\dsuff}{X} | \leq \varepsilon$
and it is not sensitive to instabilities of areas where $| \log \probi{\usuff}{X} - $ $\log \probi{\dsuff}{X} | > \varepsilon$. Thus, even for very different densities $\probs{\usuff}$ and $\probs{\dsuff}$ the optimization process is still very stable. Further,  in our experiments we observed NCE to be much more accurate than unbounded losses in Table \ref{tbl:LogPDFInstances}.

Moreover, such dynamics are not limited only to the loss of NCE, and can actually be enforced through other PSO variants. Herein, we introduce a novel general algorithm family for PSO \emph{log density estimators} (PSO-LDE) that takes a \emph{normalized} form in Eq.~(\ref{eq:PSOLogEstNormalized}). The denominator function is defined as $D_{PSO\!-\!LDE}^{\alpha} \left[X,f_{\theta}(X)\right] \triangleq \big[[\exp f_{\theta}(X)]^{\alpha} + [\probi{\dsuff}{X}]^{\alpha}\big]^{
	\frac{1}{\alpha}}$ with $\alpha$ being family's hyper-parameter. Particularly,
each member of PSO-LDE has bounded \emph{magnitude} functions:
\begin{equation}
M^{\usuff}_{\alpha}
\left[
X, f_{\theta}(X)
\right]
=
\frac{\probi{\dsuff}{X}}
{\left[
	\left[\exp f_{\theta}(X)\right]^{\alpha} + 
	\left[\probi{\dsuff}{X}\right]^{\alpha}
	\right]^{\frac{1}{\alpha}}}
=
\left[
\exp
\left[\alpha
\! \cdot \!
\bar{d}
\left[
X, f_{\theta}(X)
\right]
\right]
+ 1
\right]^{-\frac{1}{\alpha}}
,
\label{eq:PSOLDELossMU}
\end{equation}
\begin{equation}
M^{\dsuff}_{\alpha}
\left[
X, f_{\theta}(X)
\right]
=
\frac{\exp f_{\theta}(X)}
{\left[
	\left[\exp \! f_{\theta}(X)\right]^{\alpha} + 
	\left[\probi{\dsuff}{X}\right]^{\alpha}
	\right]^{\frac{1}{\alpha}}}
=
\left[
\exp
\left[- \alpha
\! \cdot \!
\bar{d}
\left[
X, f_{\theta}(X)
\right]
\right]
+ 1
\right]^{-\frac{1}{\alpha}}
.
\label{eq:PSOLDELossMD}
\end{equation}

In Figure \ref{fig:MagnFuncsPSOLDE} the above \emph{magnitude} functions are plotted w.r.t. logarithm difference $\bar{d}$, for different values of $\alpha$. As can be observed, $\alpha$ controls the smoothness and the rate of a \emph{magnitude} decay to zero. 
Specifically, for smaller $\alpha$ \ms go faster to zero, which implies that the aforementioned active range $[- \varepsilon, \varepsilon]$ is narrower. Thus, small $\alpha$ introduce some elasticity constraints over $f_{\theta}$ that induce smoothness of the converged model.
We argue that these smoother dynamics of smaller $\alpha$ values allow for a more stable optimization and a more accurate convergence, similarly to the robustness of redescending M-estimators \citep{Shevlyakov08jspai}. Yet, we leave the theoretical analysis of this affect for future work. In Section \ref{sec:Exper} we will empirically investigate the impact of $\alpha$ on the performance of density estimation, where we will see that $\alpha = \frac{1}{4}$ typically has a better performance.

Additionally, the formulation of PSO-LDE in Eqs.~(\ref{eq:PSOLDELossMU})-(\ref{eq:PSOLDELossMD}) can be exploited to overcome possible underflow and overflow issues. In a typically used single-precision floating-point format the function $\exp(\cdot)$ can only be computed for values in the range $[-81, 81]$. Hence, there is an upper bound for values of $|\bar{d}\left[
X, f_{\theta}(X)
\right]
|$ above which 
$M^{\usuff}_{\alpha}(X,f_{\theta}(X))$ and $M^{\dsuff}_{\alpha}(X,f_{\theta}(X))$ can not be computed in practice.
Yet, we can set the hyper-parameter $\alpha$ to be small enough to overcome this numerical limitation.

\begin{figure}
	\centering
	
	\begin{tabular}{cccc}
		
		\subfloat{\includegraphics[width=0.7\textwidth]{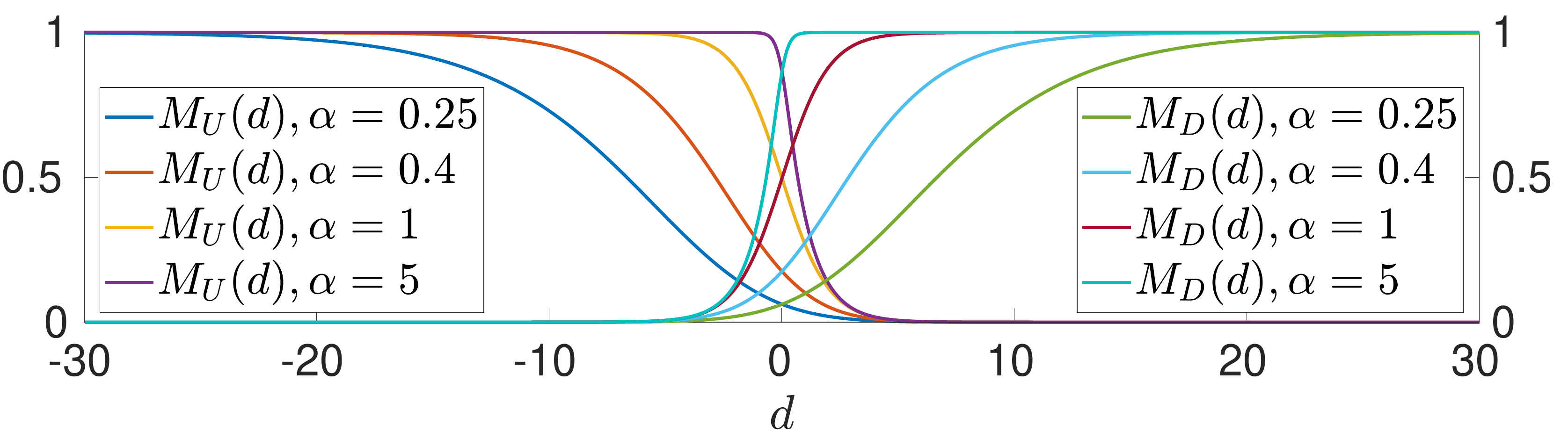}}

	\end{tabular}
	
	\protect
	\caption{PSO-LDE \emph{magnitudes} as functions of a difference $\bar{d} \triangleq f_{\theta}(X) - \log \probi{\dsuff}{X}$ for different values of a hyper-parameter $\alpha$.
	}
	\label{fig:MagnFuncsPSOLDE}
\end{figure}

\begin{remark}
Note that NCE is a member of the above PSO-LDE family for $\alpha = 1$. Further, the analytic loss for magnitudes in Eqs.~(\ref{eq:PSOLDELossMU})-(\ref{eq:PSOLDELossMD}) is unknown for general $\alpha$. Yet, the gradient of this loss can be easily calculated.
\end{remark}

To summarize, by replacing \emph{pdf loss} in Eq.~(\ref{eq:PDFLoss}) with PSO-LDE we succeeded to increase approximation accuracy of density estimation. We show these results in Section \ref{sec:Exper}. Furthermore, unlike typical density estimators, for both DeepPDF and PSO-LDE cases the total integral of the density estimator
is not explicitly constrained to 1, yet was empirically observed to be very close to it. This implies that the proposed herein methods produce an \emph{approximately} normalized density model. For many applications such approximate normalization is suitable. For example, in the estimation of a measurement likelihood model for Bayesian state inference in robotics \citep{Kopitkov18iros} the model is required only to be proportional to the real measurement likelihood.

\section{Conditional Density Estimation}
\label{sec:CondDeepPDFMain}

In this section we show how to utilize PSO \bp to infer conditional (ratio) density functions.

\subsection{Conditional Density Estimation}
\label{sec:CondDeepPDF}

Herein we will focus on problem of conditional density estimation, 
where i.i.d. samples of pairs $\{ X^{\usuff}_{i}, Y^{\usuff}_{i} \}$ are given as:
\begin{equation}
\DD =
\begin{blockarray}{cc}
( \underline{\text{columns of } X^{\usuff}}) &
( \underline{\text{columns of } Y^{\usuff}}) \\
\\
\begin{block}{(c|c)}
X^{\usuff}_{1} & Y^{\usuff}_{1}\\
X^{\usuff}_{2} & Y^{\usuff}_{2}\\
\vdots & \vdots\\
\end{block}
\end{blockarray}
,
\quad
\text{where }
X^{\usuff}_{i} \in \RR^{n_{x}}
,
Y^{\usuff}_{i} \in \RR^{n_{y}}
.
\label{eq:DataStruct}
\end{equation}
Again, we use $U$ to refer to \up force in PSO framework as will be described below.
For any dataset $\DD$, the generation process of its samples is governed by the following unknown data densities: $\PP^{\usuff}_{XY}(X,Y)$, $\PP^{\usuff}_{X}(X)$ and $\PP^{\usuff}_{Y}(Y)$. 
Specifically, in Eq.~(\ref{eq:DataStruct}) rows under $X^{\usuff}$ columns will be distributed by marginal pdf $\PP^{\usuff}_{X}(X)$, rows under $Y^{\usuff}$ columns - by marginal pdf $\PP^{\usuff}_{Y}(Y)$, and entire rows of $\DD$ will have the joint density $\PP^{\usuff}_{XY}(X,Y)$ (see also Table \ref{tbl:CondNottns} for list of main notations). Likewise, these densities induce the conditional likelihoods
$\PP^{\usuff}_{X|Y}(X|Y)$ and $\PP^{\usuff}_{Y|X}(Y|X)$, which can be formulated via Bayes theorem:
\begin{equation}
\PP^{\usuff}_{X|Y}(X|Y)
=
\frac{\PP^{\usuff}_{XY}(X,Y)}{\PP^{\usuff}_{Y}(Y)}
,
\quad
\PP^{\usuff}_{Y|X}(Y|X)
=
\frac{\PP^{\usuff}_{XY}(X,Y)}{\PP^{\usuff}_{X}(X)}
.
\label{eq:BayesThrm}
\end{equation}
Depending on the task at hand, conditional pdf $\PP^{\usuff}_{X|Y}(X|Y)$ can produce valuable information about given data.

\begin{table}[tb]
	\small
	\centering
	\begin{tabular}{ll}
		\toprule
		\textbf{Notation}     & \textbf{Description}      \\
		\midrule
		$X^{\usuff} \sim \PP^{\usuff}_{X}(X)$
		&
		$n_{x}$-dimensional random variable with marginal pdf $\PP^{\usuff}_{X}$
		\\
		$Y^{\usuff} \sim \PP^{\usuff}_{Y}(Y)$
		&
		$n_{y}$-dimensional random variable with marginal pdf $\PP^{\usuff}_{Y}$
		\\
		$[X^{\usuff}, Y^{\usuff}]$
		&
		$n$-dimensional random variable with joint pdf $\PP^{\usuff}_{XY}(X,Y)$, 
		\\
		& at samples of which we push the model surface \up
		\\
		$n = n_{x} + n_{y}$
		&
		joint dimension of random variable $[X^{\usuff}, Y^{\usuff}]$
		\\
		$\PP^{\usuff}_{X|Y}(X|Y)$
		&
		conditional probability density function of $X \equiv X^{\usuff}$ given $Y \equiv Y^{\usuff}$
		\\
		$X^{\dsuff} \sim \probs{\dsuff}$
		&
		$n_{x}$-dimensional random variable with pdf $\probs{\dsuff}$
		\\
		$[X^{\dsuff}, Y^{\dsuff}]$
		&
		$n$-dimensional random variable with joint pdf $\probi{\dsuff}{X} \cdot \PP^{\usuff}_{Y}(Y)$, 
		\\
		& at samples of which we push the model surface \down
		\\
		\bottomrule
	\end{tabular}
	
	\caption{Main Notations for Conditional Density Estimators}
	\label{tbl:CondNottns}
	
\end{table}

The simple way to infer $\PP^{\usuff}_{X|Y}(X|Y)$ is by first approximating separately the $\PP^{\usuff}_{XY}(X,Y)$ and $\PP^{\usuff}_{Y}(Y)$ from data samples (e.g. by using DeepPDF or PSO-LDE), and further applying Bayes theorem in Eq.~(\ref{eq:BayesThrm}).
Yet, such method is not computationally efficient and typically is also not optimal, since approximation errors of both functions can produce even bigger error in the combined function.

A different technique, based on PSO principles, can be performed as follows. Consider a model (PSO surface) $f_{\theta}(X, Y): \RR^{n} \rightarrow \RR$ with $n = n_{x} + n_{y}$, where the concatenated input $[X, Y]$ can be seen as the surface support. Define an arbitrary density $\probs{\dsuff}$ over $\RR^{n_{x}}$ with a known pdf function which can be easily sampled (e.g. Uniform). Density $\probs{\dsuff}$ will serve as a \down force to balance samples from $\PP^{\usuff}_{X}$, and thus is required to cover the support of $\PP^{\usuff}_{X}$. Further, $\PP^{\usuff}_{XY}(X, Y)$ will serve as \up density in PSO framework, and its sample batch $\{ X^{\usuff}_{i}, Y^{\usuff}_{i} \}_{i = 1}^{N^{\usuff}}$ will contain all rows from $\DD$. As well, we will use $\probi{\dsuff}{X} \cdot \PP^{\usuff}_{Y}(Y)$ as \down density. Corresponding samples $\{ X^{\dsuff}_{i}, Y^{\dsuff}_{i} \}_{i = 1}^{N^{\dsuff}}$ will be sampled in two steps. $\{ Y^{\dsuff}_{i} \}_{i = 1}^{N^{\dsuff}}$ are taken from $\DD$ under $Y^{\usuff}$ columns; $\{ X^{\dsuff}_{i} \}_{i = 1}^{N^{\dsuff}}$ are sampled from $\probi{\dsuff}{X}$.

Considering the above setup, we can apply PSO to push $f_{\theta}(X, Y)$ via \up and \down forces. The optimization gradient will be identical to Eq.~(\ref{eq:GeneralPSOLossFrml}) where $X^{\usuff}_{i}$ and $X^{\dsuff}_{i}$ are substituted by $\{ X^{\usuff}_{i}, Y^{\usuff}_{i} \}$ and $\{ X^{\dsuff}_{i}, Y^{\dsuff}_{i} \}$ respectively.
For any particular \pair the associated PSO \bp will be:
\begin{equation}
\frac{M^{\dsuff}\left[
	X, Y,
	f_{\theta}(X, Y)
	\right]}{M^{\usuff}\left[
	X, Y,
	f_{\theta}(X, Y)
	\right]}
=
\frac{\PP^{\usuff}_{XY}(X, Y)}{\probi{\dsuff}{X} \cdot \PP^{\usuff}_{Y}(Y)}
=
\frac{\PP^{\usuff}_{X|Y}(X|Y)}{\probi{\dsuff}{X}}
,
\label{eq:CondPSOBalll}
\end{equation}
where we can observe $\PP^{\usuff}_{X|Y}(X|Y)$, which we aim to learn. Similarly to Section \ref{sec:PSOInst}, below we formulate PSO subgroup for the conditional density estimation (or any function of it).

\begin{theorem}[Conditional Density Estimation]
\label{thrm:PSO_bal_state_new_cond} 

Denote the required PSO convergence by a transformation $T(X, Y, z): \RR^{n} \times \RR \rightarrow \RR$ s.t. $f_{\theta}(X, Y) = T\left[X, Y, \PP^{\usuff}_{X|Y}(X|Y) \right]$ is the function we want to learn. 
Denote its inverse function w.r.t. $z$ as $T^{-1}(X, Y, s)$. Then, any pair \pair satisfying:
	\begin{equation}
	\frac{M^{\dsuff}(X, Y, s)}{M^{\usuff}(X, Y, s)} = 
	\frac{T^{-1}(X, Y, s)}{\probi{\dsuff}{X}}
	,
	\label{eq:PSO_conv_th_new_cond}
	\end{equation}
	will produce the required convergence.
\end{theorem}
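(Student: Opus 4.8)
The plan is to reduce Theorem~\ref{thrm:PSO_bal_state_new_cond} to the already-established Theorem~\ref{thrm:PSO_func_magn} (or equivalently Theorem~\ref{thrm:PSO_func_conv}) by treating the concatenated input $[X,Y]$ as the single ``spectator'' argument that plays the role of $X$ in the unconditional setting. Concretely, I would first observe that the optimization gradient for the conditional problem has exactly the PSO form of Eq.~(\ref{eq:GeneralPSOLossFrml}) with $X^{\usuff}_i$ replaced by $[X^{\usuff}_i, Y^{\usuff}_i]$ sampled from $\PP^{\usuff}_{XY}$ and $X^{\dsuff}_i$ replaced by $[X^{\dsuff}_i, Y^{\dsuff}_i]$ sampled from the product density $\probi{\dsuff}{X}\cdot\PP^{\usuff}_{Y}(Y)$. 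Thus the whole conditional construction is literally an instance of unconditional PSO over the input space $\RR^{n}$ with $\probs{\usuff}_{\text{new}}(W) \triangleq \PP^{\usuff}_{XY}(X,Y)$ and $\probs{\dsuff}_{\text{new}}(W) \triangleq \probi{\dsuff}{X}\cdot\PP^{\usuff}_{Y}(Y)$ where $W \triangleq [X,Y]$.

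Next I would apply part~1 of Theorem~\ref{thrm:PSO} / Theorem~\ref{thrm:PSO_func_magn} to this instance: on the mutual support, the minimizer satisfies the balance state
\begin{equation}
\frac{M^{\dsuff}(W, f^*(W))}{M^{\usuff}(W, f^*(W))}
=
\frac{\probs{\usuff}_{\text{new}}(W)}{\probs{\dsuff}_{\text{new}}(W)}
=
\frac{\PP^{\usuff}_{XY}(X,Y)}{\probi{\dsuff}{X}\cdot\PP^{\usuff}_{Y}(Y)}
=
\frac{\PP^{\usuff}_{X|Y}(X|Y)}{\probi{\dsuff}{X}}
,
\label{eq:cond_proof_balance}
\end{equation}
where the last equality is Bayes' theorem, Eq.~(\ref{eq:BayesThrm}). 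Now if $\{M^{\usuff}, M^{\dsuff}\}$ are chosen to satisfy the hypothesis Eq.~(\ref{eq:PSO_conv_th_new_cond}), i.e. $\frac{M^{\dsuff}(X,Y,s)}{M^{\usuff}(X,Y,s)} = \frac{T^{-1}(X,Y,s)}{\probi{\dsuff}{X}}$, then substituting $s = f^*(X,Y)$ into Eq.~(\ref{eq:cond_proof_balance}) gives $\frac{T^{-1}(X,Y,f^*(X,Y))}{\probi{\dsuff}{X}} = \frac{\PP^{\usuff}_{X|Y}(X|Y)}{\probi{\dsuff}{X}}$, hence $T^{-1}(X,Y,f^*(X,Y)) = \PP^{\usuff}_{X|Y}(X|Y)$, and applying $T$ (using that $T$ is the inverse of $T^{-1}$ w.r.t. the last argument) yields $f^*(X,Y) = T[X,Y,\PP^{\usuff}_{X|Y}(X|Y)]$, which is the claimed convergence.

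To make this rigorous I would need to verify that the regularity hypotheses of Theorem~\ref{thrm:PSO_func_magn} carry over: that the ratio $R(X,Y,s) \triangleq M^{\dsuff}/M^{\usuff} = T^{-1}(X,Y,s)/\probi{\dsuff}{X}$ is continuous, strictly increasing and bijective from the convergence interval $\KKK$ onto $\RRpos$ (which follows from the corresponding properties assumed for $T$, since dividing by the fixed positive constant $\probi{\dsuff}{X}$ preserves monotonicity, continuity and bijectivity onto $\RRpos$), that $M^{\usuff}$ and $M^{\dsuff}$ are continuous and positive on $\KKK$, and that $\KKK$ is invariant under $[X,Y]$ — all inherited from the stated assumptions on the conditional $T$ exactly as in the unconditional theorem. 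I would state the theorem's conclusion as holding on the mutual support $\{[X,Y] : \PP^{\usuff}_{XY}(X,Y) > 0 \wedge \probi{\dsuff}{X}\cdot\PP^{\usuff}_{Y}(Y) > 0\}$, and note that behaviour off this support is governed by Theorems~\ref{thrm:PSO_UD_conv}--\ref{thrm:PSO_DU_conv} applied to the product densities.

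The main obstacle is not any deep argument — it is essentially a relabelling — but rather making sure the ``spectator'' bookkeeping is airtight: one must check that $\probi{\dsuff}{X}$, which depends only on the $X$-part of the concatenated input, legitimately functions as an $[X,Y]$-dependent quantity in the role of the auxiliary density factor in the density-estimation column of Table~\ref{tbl:ClassicTransforms}, and that the new down density $\probi{\dsuff}{X}\cdot\PP^{\usuff}_{Y}(Y)$ is indeed a proper density over $\RR^n$ (it integrates to $1$ since both factors do and they depend on disjoint coordinate blocks). Once this is noted, the proof is a direct corollary of the unconditional result.
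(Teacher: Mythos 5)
Your proposal is correct and follows essentially the same route as the paper: the paper's proof likewise treats $[X,Y]$ as a single PSO input with \up density $\PP^{\usuff}_{XY}$ and \down density $\probi{\dsuff}{X}\cdot\PP^{\usuff}_{Y}(Y)$, invokes the balance state of Eq.~(\ref{eq:CondPSOBalll}) (your Bayes-theorem step), substitutes the assumed magnitude ratio to get $T^{-1}(X,Y,f^*(X,Y)) = \PP^{\usuff}_{X|Y}(X|Y)$, and applies $T$. Your extra care in checking that the regularity conditions of Theorem~\ref{thrm:PSO_func_magn} transfer is exactly what the paper defers to Theorem~\ref{thrm:PSO_func_conv} with the remark that the sufficient conditions are omitted there.
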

The proof is trivial by noting that:
\begin{multline}
T^{-1}(X, Y, f_{\theta}(X, Y))
= 
\probi{\dsuff}{X}
\cdot
\frac{M^{\dsuff}(X, Y, f_{\theta}(X, Y))}{M^{\usuff}(X, Y, f_{\theta}(X, Y))}
=
\probi{\dsuff}{X}
\cdot
\frac{\PP^{\usuff}_{X|Y}(X|Y)}{\probi{\dsuff}{X}}
\quad
\Rightarrow\\
\Rightarrow
\quad
T^{-1}(X, Y, f_{\theta}(X, Y))
= \PP^{\usuff}_{X|Y}(X|Y)
\label{eq:RatioTransform_cond}
\end{multline}
where we used both Eq.~(\ref{eq:CondPSOBalll}) and Eq.~(\ref{eq:PSO_conv_th_new_cond}). From properties of inverse functions it follows:
\begin{equation}
T\left[X, Y, \PP^{\usuff}_{X|Y}(X|Y) \right]=
T\left[X, Y, T^{-1}(X, Y, f_{\theta}(X, Y)) \right]
=
f_{\theta}(X, Y)
.
\label{eq:RatioTransform_cond2}
\end{equation}
Further, the sufficient conditions over mappings $T$, $M^{\usuff}$ and $M^{\dsuff}$ are omitted since they already appear in Theorem \ref{thrm:PSO_func_conv}.

\begin{table}
	\centering
	\begin{tabular}{lllll}
		\toprule
		
		\textbf{Method}     & \textbf{
			\emph{\underline{F}inal} $f_{\theta}(X)$ / \emph{\underline{R}eferences} /
			\emph{\underline{L}oss} / \emph{$M^{\usuff}(\cdot)$ and $M^{\dsuff}(\cdot)$}
		}      \\

		\midrule

		Conditional
		& \underline{F}: $\probi{\usuff}{X|Y}$
		\\
		Density & \underline{R}: This paper
		\\
		Estimation & 		
		\underline{L}: 
		$-
		\E_{[X, Y] \sim \PP^{\usuff}_{XY}(X,Y)}
		f_{\theta}(X, Y)
		\cdot
		\probi{\dsuff}{X}
		+$
		\\ &
		$\quad +
		\E_{[X, Y] \sim \probi{\dsuff}{X} \cdot \PP^{\usuff}_{Y}(Y)}
		\half
		\Big[
		f_{\theta}(X, Y)
		\Big]^{2}
		$
		\\
		&
		$M^{\usuff}$, $M^{\dsuff}$: $
		\probi{\dsuff}{X}
		$
		,
		$
		f_{\theta}(X, Y)
		$   \\
		\midrule
		Conditional     
		& \underline{F}: $\log \probi{\usuff}{X|Y}$
		\\
		Log-density & \underline{R}: This paper
		\\
		Estimation & 		
		\underline{L}: 
		$-
		\E_{[X, Y] \sim \PP^{\usuff}_{XY}(X,Y)}
		f_{\theta}(X, Y)
		+
		\E_{[X, Y] \sim \probi{\dsuff}{X} \cdot \PP^{\usuff}_{Y}(Y)}
		\frac{\exp [f_{\theta}(X, Y)]}{\probi{\dsuff}{X}}$
		\\[5pt]
		&
		$M^{\usuff}$, $M^{\dsuff}$: $1$,
		$
		\frac{\exp [f_{\theta}(X, Y)]}{\probi{\dsuff}{X}}
		$  \\
		\midrule
		NCE   
		& \underline{F}: 
		$\log \probi{\usuff}{X|Y}$
		\\
		Conditional
		& \underline{R}: \citet{Mnih12arxiv,Mnih13nips}
		\\[3pt]
		Form
		& 		
		\underline{L}: 
		$
		\E_{[X, Y] \sim \PP^{\usuff}_{XY}(X,Y)}
		\log \frac{\exp[ f_{\theta}(X, Y)] + \probi{\dsuff}{X}}{\exp[ f_{\theta}(X, Y)]}
		+
		$
		\\ &
		$\quad +
		\E_{[X, Y] \sim \probi{\dsuff}{X} \cdot \PP^{\usuff}_{Y}(Y)}
		\log \frac{\exp[ f_{\theta}(X, Y)] + \probi{\dsuff}{X}}{\probi{\dsuff}{X}}$
		\\[5pt]
		&
		$M^{\usuff}$, $M^{\dsuff}$: $
		\frac{\probi{\dsuff}{X}}{\exp[ f_{\theta}(X, Y)] + \probi{\dsuff}{X}}
		$,
		$
		\frac{\exp[ f_{\theta}(X, Y)]}{\exp[ f_{\theta}(X, Y)] + \probi{\dsuff}{X}}
		$   \\
		\midrule
		PSO-LDE   
		& \underline{F}: $\log \probi{\usuff}{X|Y}$
		\\
		Conditional
		& \underline{R}: This paper
		\\
		Form
		& 		
		\underline{L}: 
		unknown
		\\
		&
		$M^{\usuff}$, $M^{\dsuff}$: 
		$
		\frac{\probi{\dsuff}{X}}
		{\left[
			\left[\exp f_{\theta}(X, Y)\right]^{\alpha} + 
			\left[\probi{\dsuff}{X}\right]^{\alpha}
			\right]^{\frac{1}{\alpha}}}
		$,
		$
		\frac{\exp f_{\theta}(X, Y)}
		{\left[
			\left[\exp f_{\theta}(X, Y)\right]^{\alpha} + 
			\left[\probi{\dsuff}{X}\right]^{\alpha}
			\right]^{\frac{1}{\alpha}}}
		$  
		\\
		\midrule
		Conditional  
		& \underline{F}: 
		$\frac{\PP^{\usuff}_{X|Y}(X|Y)}{\PP^{\usuff}_{X|Y}(X|Y) + \probii{\dsuff}{\phi}{X|Y}}$
		,
		\\ 
		GAN Critic
		&
		\quad\quad
		where
		$\probii{\dsuff}{\phi}{X|Y}$ is density of generator $h_{\phi}$ parametrized by $\phi$
		\\
		& \underline{R}: \citet{Mirza14arxiv}
		\\
		&
		\underline{L}: 
		$-
		\E_{[X, Y] \sim \PP^{\usuff}_{XY}(X,Y)}
		\log f_{\theta}(X, Y)		
		-
		$
		\\ &
		$\quad -
		\E_{[X, Y] \sim \probii{\dsuff}{\phi}{X|Y} \cdot \PP^{\usuff}_{Y}(Y)}
		\log
		\Big[
		1 - f_{\theta}(X, Y)
		\Big]$
		
		\\
		&
		$M^{\usuff}$, $M^{\dsuff}$: $
		\frac{1}{f_{\theta}(X, Y)}
		$,
		$
		\frac{1}{1 - f_{\theta}(X, Y)}
		$ \\
		\midrule
		Likelihood-Ratio
		& \underline{F}: $\log \frac{\PP^{\usuff}_{X|Y}(X|Y)}{\probii{\dsuff}{\phi}{X|Y}}$,
		\\ 
		with
		&
		\quad\quad
		where
		$\probii{\dsuff}{\phi}{X|Y}$ is density of generator $h_{\phi}$ parametrized by $\phi$
		\\
		Logistic Loss
		& \underline{R}: This paper
		\\
		&
		\underline{L}: 
		$
		\E_{[X, Y] \sim \PP^{\usuff}_{XY}(X,Y)}
		\log \big[
		1 + \exp [- f_{\theta}(X, Y)]
		\big]
		+$
		\\ &
		$\quad + 
		\E_{[X, Y] \sim \probii{\dsuff}{\phi}{X|Y} \cdot \PP^{\usuff}_{Y}(Y)}
		\log \big[
		1 + \exp [f_{\theta}(X, Y)]
		\big]
		$
		\\[5pt]
		&
		$M^{\usuff}$, $M^{\dsuff}$: $
		\frac{1}{\exp [f_{\theta}(X, Y)] + 1}
		$
		,
		$
		\frac{1}{\exp [- f_{\theta}(X, Y)] + 1}
		$ \\
		
		\bottomrule
	\end{tabular}
	
	\caption[PSO Instances For Conditional Density (Ratio) Estimation.]{PSO Instances For Conditional Density (Ratio) Estimation, see Sections \ref{sec:CondDeepPDF} and \ref{sec:CondGANs} for a detailed exposition of conditional PSO}
	\label{tbl:PSOInstances6}
	
\end{table}

\paragraph{Example 5:}
Consider a scenario where we would like to infer $f_{\theta}(X, Y) = \PP^{\usuff}_{X|Y}(X|Y)$. Thus, the PSO convergence is described by $T(X, Y, z) = z$. Its inverse is $T^{-1}(X, Y, s) = s$. Hence, \mfs must satisfy $\frac{M^{\dsuff}(X, Y, f_{\theta}(X, Y))}{M^{\usuff}(X, Y, f_{\theta}(X, Y))} = \frac{f_{\theta}(X, Y)}{\probi{\dsuff}{X}}$. One choice for such \ms is $M^{\usuff}
\left[
X,
Y,
f_{\theta}(X, Y)
\right] = \probi{\dsuff}{X}$ and $M^{\dsuff}
\left[
X,
Y,
f_{\theta}(X, Y)
\right] = f_{\theta}(X, Y)$, defined in Table \ref{tbl:PSOInstances6} as "Conditional Density Estimation".
\\

\paragraph{Example 6:}
Consider a scenario where we would like to infer $f_{\theta}(X, Y) = \log \PP^{\usuff}_{X|Y}(X|Y)$, which can be essential for high-dimensional data. The PSO convergence is described by $T(X, Y, z) = \log z$, and its inverse is $T^{-1}(X, Y, s) = \exp s$. Hence, \mfs must satisfy $\frac{M^{\dsuff}(X, Y, f_{\theta}(X, Y))}{M^{\usuff}(X, Y, f_{\theta}(X, Y))} = \frac{\exp f_{\theta}(X, Y)}{\probi{\dsuff}{X}}$. One choice for such \ms is $M^{\usuff}
\left[
X,
Y,
f_{\theta}(X, Y)
\right] = \frac{\probi{\dsuff}{X}}{D(X, Y, f_{\theta}(X, Y))}$ and $M^{\dsuff}
\left[
X,
Y,
f_{\theta}(X, Y)
\right] = \frac{\exp f_{\theta}(X, Y)}{D(X, Y, f_{\theta}(X, Y))}$, defined in Table \ref{tbl:PSOInstances6} as "PSO-LDE Conditional Form". The denominator $D(X, Y, f_{\theta}(X, Y)) = \left[
\left[\exp f_{\theta}(X, Y)\right]^{\alpha} + 
\left[\probi{\dsuff}{X}\right]^{\alpha}
\right]^{\frac{1}{\alpha}}$ serves as a normalization to enforce \ms to be bounded functions, similarly to PSO-LDE method in Section \ref{sec:DeepLogPDF}.
\\

Thus, we can estimate the conditional density, or any function of it, in a one-step algorithm by applying PSO procedure with \up and \down densities defined above. This again emphasizes the simplicity and usability of PSO formulation. Further, note that it is also possible to reuse sample $Y^{\usuff}_{i}$ as $Y^{\dsuff}_{i}$, since within the \down term this sample will still be independent of $X^{\dsuff}_{i}$ and its density is still the marginal $\PP^{\usuff}_{Y}(Y)$. Such reuse is popular for example in NCE methods \citep{Mnih12arxiv,Mnih13nips} in context of language modeling.

The above examples and several other options are listed in Table \ref{tbl:PSOInstances6}. Similarly to a case of the ordinary density estimation, also in the conditional case there are numerous PSO instances with the same target function $\PP^{\usuff}_{X|Y}(X|Y)$ (or $\log \PP^{\usuff}_{X|Y}(X|Y)$). Analyses of these techniques and search for the most "optimal" can be an interesting direction for future research.

\subsection{Relation to Conditional GANs}
\label{sec:CondGANs}

Furthermore, a similar idea was also presented in the context of GANs, where a conditional generation of data (e.g. images given labels) was explored. Below we show its connection to PSO framework.

Denote the dataset $\DD$ as in Eq.~(\ref{eq:DataStruct}), where \emph{real} sample pairs $\{ X^{\usuff}_{i}, Y^{\usuff}_{i} \}$ are distributed according to unknown $\PP^{\usuff}_{XY}(X,Y)$.
In Conditional GAN (cGAN) \citep{Mao17iccv} the generator produces \emph{fake} samples from the generator's conditional density $\probii{\dsuff}{\phi}{X|Y}$, where we again use notations $U$ and $D$ to refer to PSO forces, as is described below. Density $\probii{\dsuff}{\phi}{X|Y}$ is an implicit distribution of \emph{fake} samples that are returned by the generator $h_{\phi}(\upsilon, Y)$ from the latent space $\upsilon \in \RR^{n_{\upsilon}}$, where the label $Y$ was a priori sampled from $\PP^{\usuff}_{Y}(Y)$; $\phi$ is a generator's parametrization. Further, the critic sees pairs $[X, Y]$ coming from $\DD$ and from the generator, and tries to decide where the pair is originated from. This is done by estimating a statistical divergence between $\PP^{\usuff}_{X|Y}(X|Y)$ implicitly defined by $\DD$, and between $\probii{\dsuff}{\phi}{X|Y}$ implicitly defined by $h_{\phi}$. Moreover, the divergence estimation is typically done by first inferring the ratio $\frac{\PP^{\usuff}_{X|Y}(X|Y)}{\probii{\dsuff}{\phi}{X|Y}}$ (or some function of this ratio).

The proposed by \citep{Mao17iccv} algorithm is identical to PSO procedure, when $\PP^{\usuff}_{XY}(X,Y)$ serves as \up density, and $\probii{\dsuff}{\phi}{X|Y} \cdot \PP^{\usuff}_{Y}(Y)$ - as \down density. The \up sample batch $\{ X^{\usuff}_{i}, Y^{\usuff}_{i} \}_{i = 1}^{N^{\usuff}}$ will contain all rows from $\DD$. Further, samples $\{ X^{\dsuff}_{i}, Y^{\dsuff}_{i} \}_{i = 1}^{N^{\dsuff}}$ from \down density will be sampled in three steps. $\{ Y^{\dsuff}_{i} \}_{i = 1}^{N^{\dsuff}}$ are taken from $\DD$ under $Y^{\usuff}$ columns; $\{ {\upsilon}^i \}_{i = 1}^{N^{\dsuff}}$ are sampled from generator's base distribution; $\{ X^{\dsuff}_{i} \}_{i = 1}^{N^{\dsuff}}$ are generator's outputs for inputs $\{ Y^{\dsuff}_{i}, {\upsilon}_i \}_{i = 1}^{N^{\dsuff}}$. Extending the setup of Section \ref{sec:CondDeepPDF} to the above sampling procedure,
any particular \pair will produce PSO \bp:
\begin{equation}
\frac{M^{\dsuff}\left[
	X, Y,
	f_{\theta}(X, Y)
	\right]}{M^{\usuff}\left[
	X, Y,
	f_{\theta}(X, Y)
	\right]}
=
\frac{\PP^{\usuff}_{XY}(X,Y)}{\probii{\dsuff}{\phi}{X|Y} \cdot \PP^{\usuff}_{Y}(Y)}
=
\frac{\PP^{\usuff}_{X|Y}(X|Y)}{\probii{\dsuff}{\phi}{X|Y}}
,
\label{eq:CondPSOBalll_cgan}
\end{equation}
where the conditional ratio shows up. Similarly to the conditional density estimation, below we formulate PSO subgroup for inference of this ratio (or any function of it).

\begin{theorem}[Conditional Ratio Estimation]
\label{thrm:PSO_bal_state_new_cond_ratio} 
	
Denote the required PSO convergence by a transformation $T(X, Y, z): \RR^{n} \times \RR \rightarrow \RR$ s.t. $f_{\theta}(X, Y) = T\left[X, Y, \frac{\PP^{\usuff}_{X|Y}(X|Y)}{\probii{\dsuff}{\phi}{X|Y}} \right]$ is the function we want to learn.
Denote its inverse function w.r.t. $z$ as $T^{-1}(X, Y, s)$. Then, any pair \pair satisfying:
	\begin{equation}
	\frac{M^{\dsuff}(X, Y, s)}{M^{\usuff}(X, Y, s)} = 
	T^{-1}(X, Y, s)
	,
	\label{eq:PSO_conv_th_new_cond_ratio}
	\end{equation}
	will produce the required convergence.
\end{theorem}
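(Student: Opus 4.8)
The plan is to mirror the proof of Theorem~\ref{thrm:PSO_bal_state_new_cond}, since the conditional-ratio setting is, formally, an ordinary PSO instance on the concatenated input $[X,Y]\in\RR^{n}$ with \up density $\PP^{\usuff}_{XY}(X,Y)$ and \down density $\probii{\dsuff}{\phi}{X|Y}\cdot\PP^{\usuff}_{Y}(Y)$. First I would invoke the three-step sampling construction described above (drawing $\{Y^{\dsuff}_{i}\}$ from $\DD$ under the $Y^{\usuff}$ columns, $\{\upsilon_{i}\}$ from the latent prior, and $\{X^{\dsuff}_{i}\}=h_{\phi}(\upsilon_{i},Y^{\dsuff}_{i})$) to certify that the \down sample batch is indeed distributed according to $\probii{\dsuff}{\phi}{X|Y}\cdot\PP^{\usuff}_{Y}$, so the gradient in Eq.~(\ref{eq:GeneralPSOLossFrml}), with $X^{\usuff}_{i},X^{\dsuff}_{i}$ replaced by the pairs $\{X^{\usuff}_{i},Y^{\usuff}_{i}\}$ and $\{X^{\dsuff}_{i},Y^{\dsuff}_{i}\}$, is exactly the PSO gradient for these two densities. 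By part~1 of Theorem~\ref{thrm:PSO} (equivalently Theorem~\ref{thrm:PSO_func_magn}), at a minimizer the balance state Eq.~(\ref{eq:CondPSOBalll_cgan}) holds on the mutual support, i.e. $\frac{M^{\dsuff}\left[X,Y,f_{\theta}(X,Y)\right]}{M^{\usuff}\left[X,Y,f_{\theta}(X,Y)\right]}=\frac{\PP^{\usuff}_{X|Y}(X|Y)}{\probii{\dsuff}{\phi}{X|Y}}$.

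Then I substitute the hypothesis Eq.~(\ref{eq:PSO_conv_th_new_cond_ratio}), $\frac{M^{\dsuff}(X,Y,s)}{M^{\usuff}(X,Y,s)}=T^{-1}(X,Y,s)$, evaluated at $s=f_{\theta}(X,Y)$, to obtain $T^{-1}\!\left(X,Y,f_{\theta}(X,Y)\right)=\frac{\PP^{\usuff}_{X|Y}(X|Y)}{\probii{\dsuff}{\phi}{X|Y}}$, exactly as in Eq.~(\ref{eq:RatioTransform_cond}). Applying $T(X,Y,\cdot)$ to both sides and using the identity $T\!\left(X,Y,T^{-1}(X,Y,s)\right)=s$ yields $f_{\theta}(X,Y)=T\!\left[X,Y,\frac{\PP^{\usuff}_{X|Y}(X|Y)}{\probii{\dsuff}{\phi}{X|Y}}\right]$, which is the claim, cf. Eq.~(\ref{eq:RatioTransform_cond2}). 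The sufficient conditions on the mappings $T$, $M^{\usuff}$, $M^{\dsuff}$ (strict monotonicity and continuity of $T$ in $z\in\RRpos$, positivity and continuity of the magnitudes on $\KKK$, $X$-invariance of $\KKK$, and the range of $\FF$ being $\KKK$ or the relaxation of Theorem~\ref{thrm:PSO_func_magn_unconstr}) are inherited verbatim from Theorem~\ref{thrm:PSO_func_conv}, now with the ``spectator'' argument being the pair $[X,Y]$ rather than $X$; so I would simply cite them rather than re-derive, as the theorem statement already indicates.

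The only genuinely non-trivial points — and hence what I would regard as the hard part — are bookkeeping ones. First, one must check that the \down density $\probii{\dsuff}{\phi}{X|Y}\cdot\PP^{\usuff}_{Y}$ covers the support of the \up density $\PP^{\usuff}_{XY}$, so that the conclusion is meaningful on all of $\spp^{\usuff}$; outside the mutual support $\spp^{\udcapsuff}$ the behavior is instead governed by Theorems~\ref{thrm:PSO_UD_conv}–\ref{thrm:PSO_DU_conv}, which I would flag but not pursue, consistent with the paper's convention of restricting attention to $\spp^{\udcapsuff}$. Second, one must be careful that $T^{-1}$ is taken with respect to the probabilistic argument $z$ only, with $X,Y$ riding along unchanged, so that the composition $T\odot T^{-1}=\mathrm{id}$ is applied argument-wise. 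Both are routine, so the proof reduces to essentially the same two-line substitution already displayed for Theorem~\ref{thrm:PSO_bal_state_new_cond}, and I would present it in that same compressed multiline form.
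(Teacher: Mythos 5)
Your proposal is correct and follows essentially the same route as the paper: establish the PSO balance state Eq.~(\ref{eq:CondPSOBalll_cgan}) for the concatenated input $[X,Y]$ with the stated \up and \down densities, substitute the hypothesis $\frac{M^{\dsuff}}{M^{\usuff}}=T^{-1}$, and apply $T$ to both sides, deferring the sufficient conditions to Theorem~\ref{thrm:PSO_func_conv}. The extra remarks on support coverage and on $T^{-1}$ acting only in the $z$ argument are sensible bookkeeping but do not change the argument.
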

The proof is trivial by noting that:
\begin{multline}
T^{-1}(X, Y, f_{\theta}(X, Y))
= 
\frac{M^{\dsuff}(X, Y, f_{\theta}(X, Y))}{M^{\usuff}(X, Y, f_{\theta}(X, Y))}
=
\frac{\PP^{\usuff}_{X|Y}(X|Y)}{\probii{\dsuff}{\phi}{X|Y}}
\quad
\Rightarrow\\
\Rightarrow
\quad
T^{-1}(X, Y, f_{\theta}(X, Y))
= 
\frac{\PP^{\usuff}_{X|Y}(X|Y)}{\probii{\dsuff}{\phi}{X|Y}}
.
\label{eq:RatioTransform_cond_ratio}
\end{multline}
From properties of inverse functions the Theorem follows.

The cGAN method aimed to infer $\frac{\PP^{\usuff}_{X|Y}(X|Y)}{\PP^{\usuff}_{X|Y}(X|Y) + \probii{\dsuff}{\phi}{X|Y}}$ to measure Jensen-Shannon divergence between \emph{real} and \emph{fake} distributions. This is associated with PSO convergence $T(X, Y, z) = \frac{z}{z + 1}$ and the corresponding inverse $T^{-1}(X, Y, s) = \frac{s}{1 - s}$. According to Theorem \ref{thrm:PSO_bal_state_new_cond_ratio} the \ms must satisfy:
\begin{equation}
\frac{M^{\dsuff}(X, Y, f_{\theta}(X, Y))}{M^{\usuff}(X, Y, f_{\theta}(X, Y))}
=
\frac{f_{\theta}(X, Y)}{1 - f_{\theta}(X, Y)}
,
\label{eq:RatioTransform_cond_ratio_cgan}
\end{equation}
with the specific choice $\{
M^{\usuff}(X, Y, f_{\theta}(X, Y)) = \frac{1}{f_{\theta}(X, Y)}
,
M^{\dsuff}(X, Y, f_{\theta}(X, Y)) =
\frac{1}{1 - f_{\theta}(X, Y)}
\}
$ selected by the critic loss in \citep{Mao17iccv}.

Hence, we can see that cGAN critic loss is a particular instance of PSO, when the sampling procedure of \up and \down samples is as described above.
Further,
two main problems of the classical cGAN critic are the not-logarithmic scale of the target function and unboundedness of \mfs (for a general model $f_{\theta}(X, Y)$). In Table \ref{tbl:PSOInstances6} we propose a "Likelihood-Ratio with Logistic Loss" to learn $\log \frac{\PP^{\usuff}_{X|Y}(X|Y)}{\probii{\dsuff}{\phi}{X|Y}}$ whose \ms are bounded. We argue such choice to be more stable during the optimization which will lead to a better accuracy. 
Moreover, for specific case when cGAN critic $f_{\theta}(X, Y)$ is parameterized as $sigmoid(h_{\theta}(X, Y))$ with $h_{\theta}(X, Y)$ being the inner model, cGAN critic loss can be shown to be reduced to the above logistic loss. Thus, with such parametrization the inner NN $h_{\theta}(X, Y)$ within cGAN critic will converge to $\log \frac{\PP^{\usuff}_{X|Y}(X|Y)}{\probii{\dsuff}{\phi}{X|Y}}$.

\section{Additional Applications and Relations of PSO Framework}
\label{sec:PSOApp}

In this section we demonstrate how PSO principles can be exploited beyond the (conditional) pdf inference problem. 
Particularly, we relate PSO and cross-entropy loss, showing the latter to be a specific instance of the former. We also outline relation between PSO and MLE by deriving the latter from \psofunc. Further, we analyze PSO instance with unit \mfs and describe its connection to CD method \citep{Hinton02nc}. Additionally, we show how to use PSO for learning mutual information from available data samples.

\subsection{Cross-Entropy as Instance of PSO}
\label{sec:CrosEntr}

In this section we will show that the binary cross-entropy loss combined with a $sigmoid$ non-linearity, typical in binary classification problems, is instance of PSO. Further, in Appendix \ref{sec:App9} we extend this setup also to a more general case of $softmax$ cross-entropy. Similarly to a binary $sigmoid$ cross-entropy, a multi-class $softmax$ cross-entropy is shown to be a PSO instance, extended to models with multi-dimensional outputs. Thus, the optimization of multi-class $softmax$ cross-entropy can be seen as pushes of dynamical forces over $C$ different surfaces $\{f_{\theta}(X)_i\}_{i = 1}^{C}$ - the outputs of the model $f_{\theta}(X) \in \RR^{C}$ per each class.

To prove the above point, we derive the binary cross-entropy loss using PSO principles. Define training dataset of pairs $\{ X_i, Y_i \}_{i = 1}^{N}$ where $X_i \in \RR^{n}$ is data point of an arbitrary dimension $n$ (e.g. image) and $Y_i$ is its label - the discrete number that takes values from $\{0, 1\}$. Denote by $N_1$ and $N_0$ the number of samples with labels 1 and 0 respectively. Further, assume each sample pair to be i.i.d. sampled from an unknown density $\PP(X, Y) = \PP(X) \cdot \PP(Y | X)$. Our task is to enforce the output of $\sigma(f_{\theta}(X))$, the $sigmoid$ non-linearity over inner model $f_{\theta}(X)$, to converge to unknown conditional $\PP(Y = 1 | X)$. Such convergence is equivalent to
\begin{equation}
f_{\theta}(X) = 
- \log
\left[
\frac{1}{\PP(Y = 1 | X)}
-
1
\right]
=
\log
\frac{\PP(X, Y = 1)}{\PP(X, Y = 0)}
=
\log
\frac{\PP(X | Y = 1) \cdot \PP(Y = 1)}{\PP(X | Y = 0) \cdot \PP(Y = 0)}
.
\label{eq:CEBConvF}
\end{equation}

To apply PSO, we consider $\PP(X | Y = 1)$ as \up density $\probs{\usuff}$ and $\PP(X | Y = 0)$ as \down density $\probs{\dsuff}$. Sample batches $\{X^{\usuff}_{i}\}_{i = 1}^{N_1}$ and $\{X^{\dsuff}_{i}\}_{i = 1}^{N_0}$ from both can be obtained by fetching $X_i$ with appropriate label $Y_i$. Then the required convergence is described by $T(X, z) = \log
\frac{\PP(Y = 1)}{\PP(Y = 0)}
\cdot
z
$, with $f^*(X) = T(X, \frac{\PP(X | Y = 1)}{\PP(X | Y = 0)})$. Further, the $T$'s inverse is $R(X, s) = \frac{\PP(Y = 0)}{\PP(Y = 1)} \cdot \exp s$, and
according to Theorem \ref{thrm:PSO_func_conv}  \ms must satisfy $\frac{M^{\dsuff}
	\left[
	X,
	f_{\theta}(X)
	\right]}{M^{\usuff}
	\left[
	X,
	f_{\theta}(X)
	\right]} = \frac{\PP(Y = 0) \cdot \exp f_{\theta}(X)}{\PP(Y = 1)}$. One possible choice is:
\begin{equation}
M^{\usuff}
\left[
X,
f_{\theta}(X)
\right]
=
\frac{\PP(Y = 1)}{1 + \exp f_{\theta}(X)}
,
\quad
M^{\dsuff}
\left[
X,
f_{\theta}(X)
\right]
=
\frac{\PP(Y = 0) \cdot \exp f_{\theta}(X)}{1 + \exp f_{\theta}(X)}
\label{eq:CEBMagnFuncs}
\end{equation}
where the denominator $1 + \exp f_{\theta}(X)$ serves as a normalization factor that enforces \pair to be between 0 and 1. Further, the above \mfs have known antiderivatives:
\begin{equation}
\widetilde{M}^{\usuff}
\left[
X,
f_{\theta}(X)
\right]
=
\PP(Y = 1)
\cdot
\log
\left[
\sigma(f_{\theta}(X))
\right]
,
\;
\widetilde{M}^{\dsuff}
\left[
X,
f_{\theta}(X)
\right]
=
-
\PP(Y = 0)
\cdot
\log
\left[
1 -
\sigma(f_{\theta}(X))
\right]
\label{eq:CEBMagnTildeFuncs}
\end{equation}
that produce the following \emph{PSO functional}:
\begin{equation}
L_{PSO}(f)
=
-
\E_{X \sim \PP(X | Y = 1)}
\PP(Y = 1)
\cdot
\log
\left[
\sigma(f_{\theta}(X))
\right]
-
\E_{X \sim \PP(X | Y = 0)}
\PP(Y = 0)
\cdot
\log
\left[
1 -
\sigma(f_{\theta}(X))
\right]
.
\label{eq:CEBLoss_PSO}
\end{equation}

Finally, considering $\frac{N_1}{N}$ and $\frac{N_0}{N}$ as estimators of $\PP(Y = 1)$ and $\PP(Y = 0)$ respectively, the empirical version of the above loss is:
\begin{multline}
L_{PSO}(f)
\approx
-
\frac{1}{N_1}
\sum_{i = 1}^{N_1}
\frac{N_1}{N}
\cdot
\log
\left[
\sigma(f_{\theta}(X^{\usuff}_{i}))
\right]
-
\frac{1}{N_0}
\sum_{i = 1}^{N_0}
\frac{N_0}{N}
\cdot
\log
\left[
1 -
\sigma(f_{\theta}(X^{\dsuff}_{i}))
\right]
=\\
=
-
\frac{1}{N}
\sum_{i = 1}^{N}
\Big[
Y_i
\cdot
\log
\left[
\sigma(f_{\theta}(X_i))
\right]
+
\left[
1 - Y_i
\right]
\cdot
\log
\left[
1 -
\sigma(f_{\theta}(X_i))
\right]
\Big]
,
\label{eq:CEBLoss_PSO_emp}
\end{multline}
where we combine two sums of the first row into a single sum after introducing indicators $Y_i$ and $1 - Y_i$.

The second row is known in Machine Learning community as the binary cross-entropy loss.
Therefore, we can conclude that PSO instance with \ms in Eq.~(\ref{eq:CEBMagnFuncs}) corresponds to cross-entropy when $\PP(X | Y = 1)$ and $\PP(X | Y = 0)$ serve as \up and \down densities respectively. See a similar derivation for multi-class cross-entropy in Appendix \ref{sec:App9}. Therefore, convergence and stability properties of PSO are also shared by the \emph{supervised} classification domain which further motivates PSO analysis.

\subsection{Relation to Maximum Likelihood Estimation}
\label{sec:MLE_PSO}

Below we establish the relation between MLE and PSO procedures, by deriving MLE approach from principles of PSO. Consider a batch of i.i.d. samples $\{ X^{\usuff}_{i} \}_{i = 1}^{N^{\usuff}}$ sampled from density $\probs{\usuff}$, whose pdf we aim to estimate.
Define an auxiliary distribution $\probs{\dsuff}$ with analytically known pdf $\probi{\dsuff}{X}$ that satisfies $\spp^{\usuff} \subseteq \spp^{\dsuff}$, and define \psofunc as:
\begin{equation}
L_{PSO}(f)
=
-
\E_{X \sim \probs{\usuff}}
\left[
1
+
\log
f(X)
\right]
+
\E_{X \sim \probs{\dsuff}}
\frac{f(X)}{\probi{\dsuff}{X}}
\label{eq:MLE_PSO_loss}
\end{equation}
that is induced by the following \mfs:
\begin{equation}
M^{\usuff}
\left[
X,
f(X)
\right]
=
\frac{1}{f(X)}
,
\quad
M^{\dsuff}
\left[
X,
f(X)
\right]
=
\frac{1}{\probi{\dsuff}{X}}
.
\label{eq:MLE_PSO_loss_MagnFuncs}
\end{equation}
Define the hypothesis class $\FF$ with functions that are positive on $\spp^{\usuff}$ so that $\log
f(X)
$ is properly defined for all $f \in \FF$ and $X \in \spp^{\usuff}$. Then, the optimal $f^* = \argmin_{f \in \FF} L_{PSO}(f)$ will satisfy PSO \bp in Eq.~(\ref{eq:BalPoint}) which yields $f^*(X) = \probi{\usuff}{X}$.

Furthermore, in case $\FF$ is a space of positive functions whose total integral is equal to 1 (i.e. probability measure space), the above loss can be reduced to:
\begin{equation}
L_{PSO}(f)
=
\int
-
\probi{\usuff}{X}
\cdot
\left[
1
+
\log
f(X)
\right]
+
f(X)
dX
=
-
\E_{X \sim \probs{\usuff}}
\log
f(X)
,
\label{eq:MLE_PSO_loss2}
\end{equation}
where we apply an equality $\int \probi{\usuff}{X} dX = \int f(X) dX$ since $f$ is normalized. Note that limiting $\FF$ to be a probability measure space does not affect the \bp of PSO since the optimal solution $f^*$ is also a probability measure. Further, considering the physical perspective of PSO such choice of $\FF$ has an implicit regularization affect, removing a need for the \down force $F_{\theta}^{\dsuff}$ in order to achieve the force equilibrium over the surface $f(X)$.

The loss in Eq.~(\ref{eq:MLE_PSO_loss2}) and its empirical variant $L_{PSO}(f)
\approx
-
\frac{1}{N^{\usuff}}
\sum_{i = 1}^{N^{\usuff}}
\log f(X^{\usuff}_{i})
$ define the standard MLE procedure.
Therefore, we can conclude that PSO with \ms defined in Eq.~(\ref{eq:MLE_PSO_loss_MagnFuncs}) corresponds to MLE when the data distribution $\probs{\usuff}$ is absolutely continuous w.r.t. $\probs{\dsuff}$ and when each $f \in \FF$ is a normalized function. Likewise, such relation can also be explained by the connection between PSO and Kullback-Leibler (KL) divergencies described in Section \ref{sec:Bregman_PSO}.

\subsection{PSO with Unit \emph{Magnitudes} and Contrastive Divergence}
\label{sec:SimpL}

The PSO instance with unit \emph{magnitudes} $M^{\usuff}
\left[
X,
f_{\theta}(X)
\right] = M^{\dsuff}
\left[
X,
f_{\theta}(X)
\right] = 1$
can be frequently met in Machine Learning (ML) literature. For example, Integral Probability Metrics (IPMs) \citep{Muller97aap}, contrastive divergence (CD) \citep{Hinton02nc}, Maximum
Mean Discrepancy (MMD) \citep{Gretton07nips} and critic of the Wasserstein GAN \citep{Arjovsky17arxiv} all rely on this loss to measure some distance between densities $\probs{\usuff}$ and $\probs{\dsuff}$. In this section we will explore this \emph{unit} loss
\begin{equation}
L_{unit}(f_{\theta}) = 
-
\E_{X \sim \probs{\usuff}}
f_{\theta}(X)
+
\E_{X \sim \probs{\dsuff}}
f_{\theta}(X)
\label{eq:SimpleLoss}
\end{equation}
in a context of the proposed PSO framework.

By following the derivation from Section \ref{sec:FuncMutSupp}, 
the inner minimization problem solved by $\inf_{f \in \FF} L_{unit}(f)$ for each $X$ is:
\begin{equation}
s^* = \arginf_{s \in \RR} 
\left[
-
\probi{\usuff}{X}
+
\probi{\dsuff}{X}
\right]
\cdot
s
.
\label{eq:CD_inner}
\end{equation}
Since it is linear in $s$, the optima $s^*$ will be either $+ \infty$ (if $\probi{\usuff}{X} > \probi{\dsuff}{X}$) or $- \infty$ (if $\probi{\usuff}{X} < \probi{\dsuff}{X}$). Using physical system perspective,
we can say that given a flexible enough surface $f_{\theta}(X)$ (e.g. typical NN) the straight forward optimization via \emph{unit} loss in Eq.~(\ref{eq:SimpleLoss}) will diverge since forces $F_{\theta}^{\usuff}(X) = \probi{\usuff}{X}$ and $F_{\theta}^{\dsuff}(X) = \probi{\dsuff}{X}$ are actually independent of $\theta$ and cannot adapt to each other. That is, \bp $F_{\theta}^{\usuff}(X) = F_{\theta}^{\dsuff}(X)$ can not be achieved by the \emph{unit} loss. Thus, the model is pushed to $\pm \infty$ at various input points,
up to the surface flexibility. During such optimization, training will eventually fail due to numerical instability that involves too large/small numbers. Furthermore, this point can be easily verified in practice by training NN with loss in Eq.~(\ref{eq:SimpleLoss}).

\paragraph{CD}

One way to enforce the convergence of \emph{unit} loss is by adapting/changing density $\probs{\dsuff}$ towards $\probs{\usuff}$ along the optimization. Indeed, this is the main idea behind the CD method presented in \citep{Hinton02nc} and further improved in \citep{Ngiam11icml} and \citep{Liu17arxiv}. In CD, the \down density $\probi{\dsuff}{X}$ in Eq.~(\ref{eq:SimpleLoss}) represents the
current model distribution $\hat{\PP}_{\theta}(X) \triangleq \exp[f_{\theta}(X)]/ \int \exp[f_{\theta}(X')] dX'$, $\probs{\dsuff} \equiv \hat{\PP}_{\theta}$.
At each iteration, $\{ X^{\dsuff}_{i} \}_{i = 1}^{N^{\dsuff}}$ are sampled from $\hat{\PP}_{\theta}(X)$ by Gibbs sampling \citep{Hinton02nc},
Monte Carlo with Langevin dynamics \citep{Hyvarinen07tnn}, Hybrid Monte Carlo sampling \citep{Ngiam11icml}, or Stein Variational Gradient Descent (SVGD) \citep{Liu17arxiv,Liu16nips}. Thus, in CD algorithm forces $F_{\theta}^{\usuff}(X) = \probi{\usuff}{X}$ and $F_{\theta}^{\dsuff}(X) = \hat{\PP}_{\theta}(X)$ are adapted to each other via their frequency components $\probs{\usuff}$ and $\probs{\dsuff}$ instead of their \mgn components $M^{\usuff}
\left[
X,
f_{\theta}(X)
\right]$ and $M^{\dsuff}
\left[
X,
f_{\theta}(X)
\right]$.
The dynamics of such optimization will converge to the equilibrium only when $\probi{\usuff}{X} = \hat{\PP}_{\theta}(X)$ which will also lead to $\exp[f_{\theta}(X)] \propto \probi{\usuff}{X}$.

\paragraph{WGAN}

We additionally consider the relation between PSO concepts and Wasserstein GAN \citep{Arjovsky17arxiv} (WGAN) which has been recently proposed and is considered nowadays to be state-of-the-art. Apparently, the critic's loss in WGAN is exactly  Eq.~(\ref{eq:SimpleLoss}). 
It pushes the surface $f_{\theta}(X)$ \up at points sampled from the real data distribution $\probs{\usuff}$, and pushes \down at points sampled from the generator density $\probs{\dsuff}_\phi$, which is an implicit distribution of fake samples returned by a generator from the latent space, with $\phi$ being a generator parametrization. 

The critic's loss of WGAN was chosen as proxy to force critic's output to approximate Earth Mover (Wasserstein) distance between $\probs{\usuff}$ and $\probs{\dsuff}_\phi$.
Specifically, the \emph{unit} loss is a dual form of Wasserstein distance under the constraint that $f_{\theta}(X)$ is 1-Lipschitz continuous.
Intuitively, the critic network will return high values for samples coming from $\probs{\usuff}$, and low values for samples coming from $\probs{\dsuff}_\phi$, thus it learns to deduce if its input is sampled from $\probs{\usuff}$ or from $\probs{\dsuff}_\phi$. Once critic's optimization stage ends, the generator of WGAN optimizes its weights $\phi$ in order to increase $f_{\theta}(X)$'s output for samples coming from $\probs{\dsuff}_\phi$ via the loss
\begin{equation}
L_{WGAN}^{G}(\phi) = 
-
\E_{X \sim \probs{\dsuff}_\phi}
f_{\theta}(X)
.
\label{eq:WGANGenerLoss}
\end{equation}

The described above "infinity" divergence of \emph{unit} loss and 1-Lipschitz constraint may explain why the authors needed to clip NN weights to stabilize the approach's learning. In \citep{Arjovsky17arxiv} after each iteration the NN weights are constrained to be between $[-c,c]$ for some constant $c$. Likely, such handling reduces the flexibility of a surface $f_{\theta}(X)$, thus preventing it from getting too high/low output values.
Such conclusion about the reduced flexibility is also supported by \citep{Gulrajani17nips}.

Further, in \citep{Gulrajani17nips} authors prove that 1-Lipschitz constraint of WGAN implies that the surface has gradient (w.r.t. $X$) with norm at most 1 everywhere,
$\norm{\frac{\partial f_{\theta}(X)}{\partial X}|_{\theta = \theta^*}} = 1$. Instead of weight clipping, they proposed to combine the \emph{unit} loss with a $X$-gradient penalty term that forces this gradient norm to be close to 1. The effect of such regularization can be explained as follows. Considering the PSO principles, the optimal surface for the \emph{unit} loss has areas of $+ \infty$ and $- \infty$, thus requiring sharp slopes between these areas. The gradient penalty term constrains these slopes to be around 1, hence it prevents the surface from getting too high/low, solving in this way the "infinity" oscillations. Overall, by using weight clipping and other regularization techniques like gradient penalty \citep{Gulrajani17nips}, WGAN is in general highly successful in data generation task. Thus, we can see that basically unstable PSO instance with unit \emph{magnitudes} can be stabilized by a surface flexibility restriction via appropriate regularization terms within the loss.

\paragraph{MMD}

Finally, MMD algorithm \citep{Gretton07nips} exploits the \emph{unit} loss in Eq.~(\ref{eq:SimpleLoss}) to test if two separate datasets are generated from the same distribution. Authors express this loss over RKHS function with a bounded RKHS norm, thus implicitly constraining the model smoothness and eliminating the \infh problem.

\begin{remark}
	As was observed empirically on 20D data, even the prolonged GD optimization via the above unit loss in Eq.~(\ref{eq:SimpleLoss}) leaves the randomly initialized NN surface $f_{\theta}(X)$ almost unchanged for the case when $\probs{\usuff} \equiv \probs{\dsuff}$. This is due to the implicit force balance produced by the identical densities. In contrast, when densities are different the optimization diverges very fast, after only a few thousands of iterations. Also, the optimization gradient during these iterations is typically smaller for the same density scenario than for the different densities. Similarly to MMD method, such behavior can be exploited for example to test if samples from two datasets have the same density or not, by performing the optimization and seeing if it diverges.
\end{remark}

Overall, all of the above PSO instances with unit \ms, except for CD, handle the instabilities of \emph{unit} loss by restricting the flexibility of the model $f_{\theta}(X)$. Thus, a typical strategy is to enforce $K$-Lipschitz constraint. Yet, in context of DL it is still unclear if and how it is possible to enforce a model to be exact $K$-Lipschitz, even though there are several techniques recently proposed for this goal \citep{Gulrajani17nips,Petzka17arxiv,Miyato18arxiv,Zhou18arxiv}.

\subsection{Mutual Information Estimation}
\label{sec:MI_PSO}

Mutual information (MI) between two random multi-variable distributions represents correlation between their samples, and is highly useful in the Machine Learning domain \citep{Belghazi18arxiv}. Here we shortly describe possible techniques to learn MI from data, based on PSO principles.

Consider two random variables $X \in \RR^{n_x}$ and $Y \in \RR^{n_y}$ with marginal densities $\PP_X$ and $\PP_Y$. Additionally, denote by $\PP_{XY}$ their joint distribution. The MI between $X$ and $Y$ is defined as:
\begin{equation}
I(X, Y)
=
\int
\int
\PP_{XY}(X,Y)
\cdot
V(X,Y)
dX dY
,
\quad
V(X,Y)
\triangleq
\log
\frac{\PP_{XY}(X,Y)}{\PP_{X}(X) \cdot \PP_{Y}(Y)}
.
\label{eq:MIexpr}
\end{equation}
If log-ratio $V(X,Y)$ is known/learned in some way, and if we have samples $\{ X^i, Y^i \}_{i = 1}^{N}$ from joint density $\PP_{XY}$, we can approximate MI via a sample approximation:
\begin{equation}
I(X, Y)
\approx
\frac{1}{N}
\sum_{i = 1}^{N}
V(X_i,Y_i)
.
\label{eq:MIapprox}
\end{equation}

Further, $V(X,Y)$ can be easily learned by one of PSO instances in Tables \ref{tbl:PSOInstances1}-\ref{tbl:PSOInstances5} for logarithm density-ratio estimation as follows. Consider a model $f_{\theta}(X, Y): \RR^{n} \rightarrow \RR$, with $n = n_{x} + n_{y}$. Additionally, we will use $\PP_{XY}(X,Y)$ as \up density in PSO framework, and $\PP_{X}(X) \cdot \PP_{Y}(Y)$ - as \down density. To obtain sample from \up density, we can pick random pair from available dataset $\{ X^i, Y^i \}_{i = 1}^{N}$, similarly to conditional density estimation in Section \ref{sec:CondDeepPDF}. Further, samples from \down density can be acquired by picking $X^i$ and $Y^i$ from dataset independently.

Considering the above sampling procedure, we can apply PSO to push $f_{\theta}(X, Y)$ via \up and \down forces, using the corresponding \mfs. 
For any particular \pair the associated PSO \bp will be:
\begin{equation}
\frac{M^{\dsuff}\left[
	X, Y,
	f_{\theta}(X, Y)
	\right]}{M^{\usuff}\left[
	X, Y,
	f_{\theta}(X, Y)
	\right]}
=
\frac{\PP_{XY}(X,Y)}{\PP_{X}(X) \cdot \PP_{Y}(Y)}
.
\label{eq:PSOBalll_Mut_Inf}
\end{equation}
Hence, to produce the convergence $f_{\theta}(X, Y) = V(X, Y)$, the appropriate choice of \ms must satisfy $\frac{M^{\dsuff}\left[
	X, Y,
	s
	\right]}{M^{\usuff}\left[
	X, Y,
	s
	\right]} = \exp s$ - to infer log-ratio between \up and \down densities the \mgn ratio $R$ must be equal to $\exp s$, according to Table \ref{tbl:ClassicTransforms}.

For example, $M^{\usuff}\left[
X, Y,
f_{\theta}(X, Y)
\right] = \frac{1}{\exp [f_{\theta}(X, Y)] + 1}$ and $M^{\dsuff}\left[
X, Y,
f_{\theta}(X, Y)
\right] = \frac{1}{\exp [- f_{\theta}(X, Y)] + 1}$ can be used (e.g. a variant of the logistic loss in Table \ref{tbl:PSOInstances4}), yet many other alternatives can also be considered.
Recently, similar ideas were also presented in \citep{Belghazi18arxiv}.

\section{NN Architecture}
\label{sec:NNArch}

In this section we describe various design choices when constructing NN $f_{\theta}(X)$, and their impact on density estimation task. The discussed below are various connectivity architectures, activation functions and pre-conditioning techniques that helped us to improve overall learning accuracy. Likewise, where possible we relate the design choice to corresponding properties acquired by a model kernel $g_{\theta}(X, X')$.

Algorithms DeepPDF (see Section \ref{sec:DeepPDF}) and log-density estimators in Section \ref{sec:DeepLogPDF} typically produce highly accurate density approximations in low dimensional cases. For example, in \citep{Kopitkov18arxiv} we showed that DeepPDF produces a better accuracy than KDE methods in 2D and 3D scenarios. This likely can be accounted to the flexibility of NN - its universal ability to approximate any function. As empirically observed in \citep{Kopitkov19arxiv_spectrum}, the implicit model kernel $g_{\theta}(X', X)$ adapts to better represent any learned target function. Further, its bandwidth, discussed in Section \ref{sec:ExprrKernelBnd}, is typically different in various areas of the considered input space. This allows to prevent \overfit in areas with small amount of training points, and to reduce \underfit in areas where the amount of training data is huge.
In contrast, KDE methods are typically limited to a specific choice of a kernel and a bandwidth (yet variable-bandwidth KDE methods exist, see \citet{Terrell92aos}) that is applied to estimate the entire pdf surface with its many various details.

Yet, we also observed a considerable \underfit problem of the above PSO instances that grows with larger data dimension and with higher frequency/variability contained within the target function. Particularly, even in case where a high-dimensional training dataset is huge, for a typical fully-connected (FC) NN architecture the produced estimation is far away from the real data density, and often contains mode-collapses and other inference inconsistencies. We argue
that it is caused by
too wide bandwidth of $g_{\theta}(X', X)$ in FC architecture, which leads to a growing estimation bias. Such conclusion is supported by Theorem \ref{thrm:PSO_diff_diff} which stated that the bandwidth of $g_{\theta}(X, X')$ defines the flexibility of $f_{\theta}(X)$.

As was observed, in FC architecture $g_{\theta}$'s bandwidth is growing considerably with the higher data dimension, thus producing more side interference between the different training points and decreasing the overall elasticity of the network. In its turn, this limits the accuracy produced by PSO. Below we propose a new NN architecture that mitigates the bandwidth problem and increases a flexibility of the surface. Further, in Section \ref{sec:ColumnsEstNNAME}
we show that such architecture extremely improves the estimation accuracy.

\begin{remark}
Note that in context of generative adversarial networks (GANs), whose critics are also instances of PSO (see Tables \ref{tbl:PSOInstances1}-\ref{tbl:PSOInstances5}), the convergence problems (e.g. mode-collapse and non-convergence) were also reported. Typically, these problems are blamed on Nash equilibrium between critic and generator networks which is hard to optimize. Yet, in our work we see that such problems exist even without a two-player optimization. That is, even when only a  specific PSO instance (the critic in GAN's context) is trained separately, it is typically underfitting and has mode-collapses within the converged surface $f_{\theta}(X)$ where several separate "hills" from target $T\left[ X, \frac{\probi{\usuff}{X}}{\probi{\dsuff}{X}} \right]$ are represented as one hill inside $f_{\theta}(X)$. Below we present several techniques that allowed us to reduce these convergence problems.
\end{remark}

\subsection{Block-Diagonal Layers}
\label{sec:BDLayers}

\begin{figure}
	\centering
	
	\begin{tabular}{cccc}
		
		\subfloat[\label{fig:NNLayers-a}]{\includegraphics[width=0.44\textwidth]{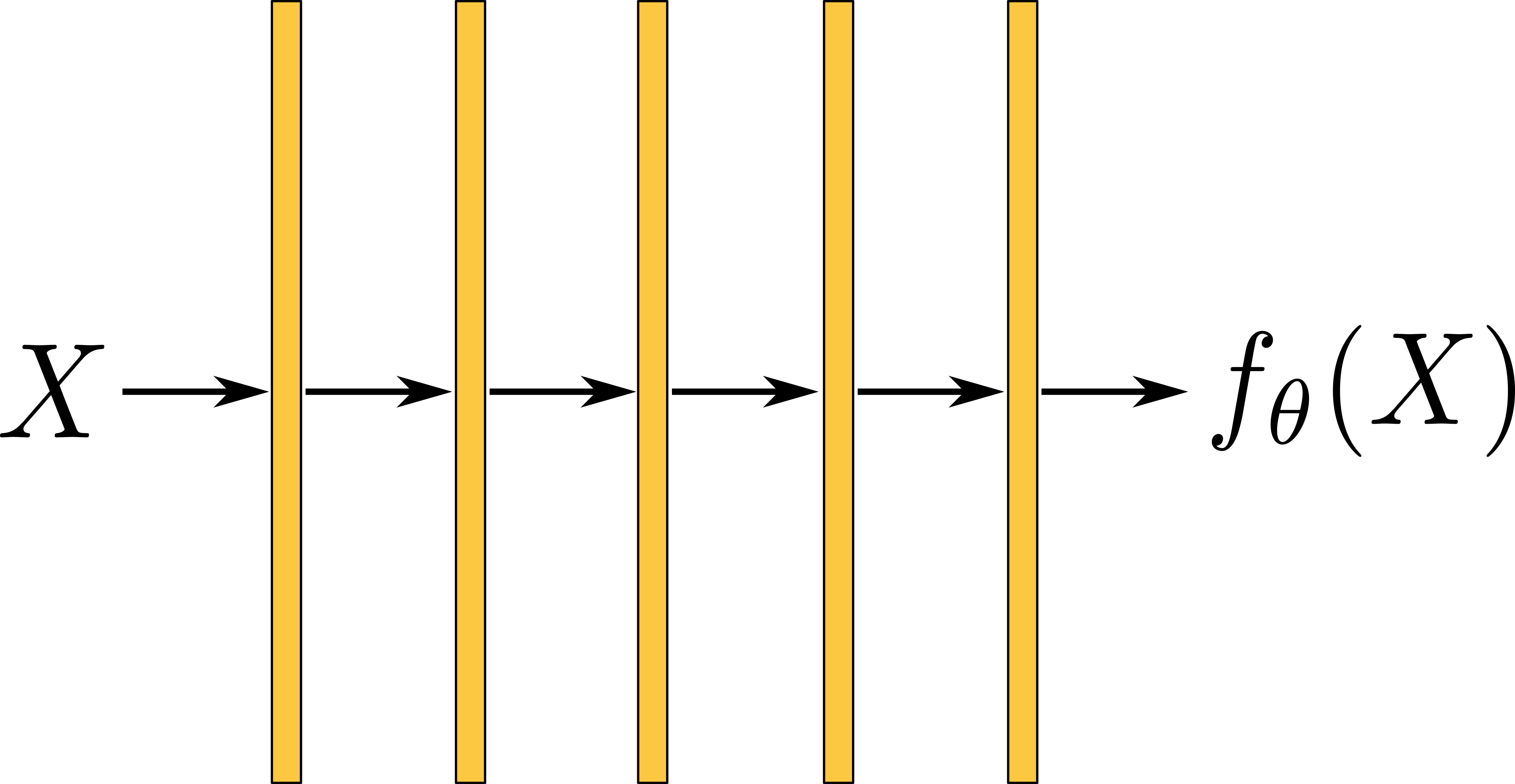}}
		&
		
		\subfloat[\label{fig:NNLayers-b}]{\includegraphics[width=0.45\textwidth]{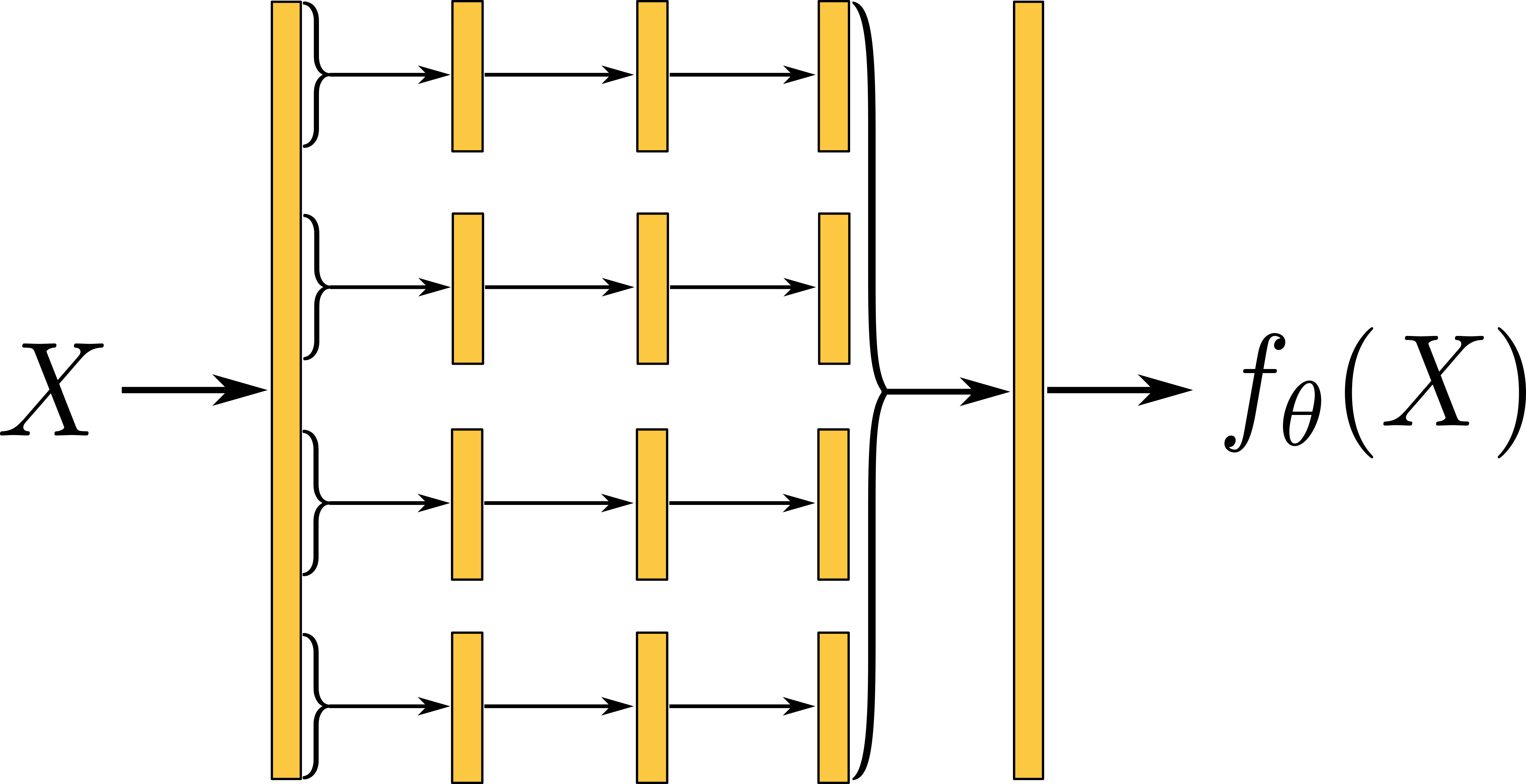}}
		
	\end{tabular}
	
	\protect
	\caption[FC and BD architectures.]{(a) Typical NN architecture used in \citep{Kopitkov18arxiv}. Yellow blocks are FC layers with non-linearity, except for last layer which is FC layer without activation function. Entire network can be seen as single transformation channel.
	(b) Proposed NN architecture used in this paper. Block-diagonal layers can be seen as set of independent transformation channels. This independence improves network's flexibility. Output vector of a first FC layer is sliced into $N_B$ separate vectors. Each small vector is used as input to separate channel - FC sub-network. At the end outputs of all channels are concatenated and sent to the final FC layer. All FC layers in this architecture (the yellow blocks) use non-linearity (typically Relu or Leaky-Relu), except for the final FC layer.
	}
	\label{fig:NNLayers}
\end{figure}

A typical FC network (Figure \ref{fig:NNLayers-a}) can be seen as one channel transformation from point $X$ to its surface height $f_{\theta}(X)$. During the optimization, due to such NN structure almost every parameter inside $\theta$ is updated as a consequence of pushing/optimizing at any training point $X \in \RR^{n}$ within PSO loss. 
Such high sharing of the weights between various regions of $\RR^{n}$ creates a huge side-influence between them - a dot product between $\nabla_{\theta} 
f_{\theta}(X)$ and $\nabla_{\theta} 
f_{\theta}(X')$ for faraway points $X$ and $X'$ is large. This in turn increases bandwidth of $g_{\theta}$ and decreases the flexibility of $f_{\theta}$.

The above line of thought guided us to propose an alternative NN architecture where several separate transformation channels are built into one network (see Figure \ref{fig:NNLayers-b}). As we will see below, such architecture is identical to a simple FC network where each layer's weight matrix $W_i$ is block-diagonal.

Specifically, we propose to pass input $X$ through a typical FC layer with output dimension $S$, and split this output into a set of $N_B$ smaller vectors of size $S_B = S/N_B$. Further, for each $S_B$-sized vector we construct a channel: a subnetwork with $[N_L - 2]$ FC layers of the same size $S_B$. Finally, the outputs of all channels are concatenated into vector of size $S$ and this vector is sent to the final FC layer (see illustration in Figure \ref{fig:NNLayers-b}). All FC layers within this architecture use non-linearity (typically Relu or Leaky-Relu), except for the last layer.

Exactly the same computational flow will be produced if we use the usual FC network from Figure \ref{fig:NNLayers-a} with inner layers having block-diagonal weight matrices. Namely, we can build the same simple network as in Figure \ref{fig:NNLayers-a}, with $N_L$ layers overall, where $[N_L - 2]$ inner FC layers have block-diagonal weight matrices $W_i$ of size $S \times S$. Each $W_i$ in its turn will contain $N_B$ blocks at its diagonal, of $S_B \times S_B$ size each, with rest of the entries being constant zeros.

A straightforward implementation of block-diagonal (BD) layers by setting off-diagonal entries to be constant zeros can be wasteful w.r.t. memory and computation resources. Instead, we can use multi-dimensional tensors for a more efficient implementation as follows. Consider output of the first FC layer as a tensor $\bar{v}$ with dimensions $[B, S]$, where $B$ is a batch dimension and $S$ is an output dimension of the layer. We can reshape $\bar{v}$ to have dimensions $[B, N_B, S_B]$, where the last dimension of $\bar{v}$ will contain small vectors $\bar{u}_j$ of size $S_B$ each, i.e.~inputs for independent channels. Further, each inner BD layer can be parametrized by a weight matrix $W$ with dimensions $[N_B, S_B, S_B]$ and bias vector $b$ with dimensions $[N_B, S_B]$. The multiplication between $\bar{v}$ and $W$, denoted as $V$, has to be done for each $\bar{u}_j$ with an appropriate slice of weight matrix, $W[j,:,:]$. Moreover, it should be done for every instance of the batch. This can be done via the following Einstein summation convention:
\begin{equation}
V[i,j,k]
=
\sum_{m = 1}^{S_B}
W[j,k,m]
\cdot
\bar{v}[i,j,m]
,
\label{eq:BDEinSum}
\end{equation}
which produces tensor $V$ with size $[B, N_B, S_B]$. Further, bias can be added as:
\begin{equation}
U[i,:,:]
=
V[i,:,:]
+
b
,
\label{eq:BDBias}
\end{equation}
where afterwards the tensor $U$ is transformed by point-wise activation function $\sigma(\cdot)$, finally producing the output of BD layer $\hat{U} = \sigma(U)$ of size $[B, N_B, S_B]$.

We construct $[N_L - 2]$ such BD layers that represent $N_B$ independent channels. Further, the output of the last BD layer is reshaped back to have dimensions $[B, S]$, and is sent to the final ordinary FC layer that returns a scalar.

\begin{remark}
The Einstein summation operation is typically offered by modern DL frameworks, thus implementing the above BD layers is convenient and easy. Yet, their runtime is slower relative to FC layers.
We hope that in future versions of DL frameworks such BD layers would be implemented efficiently on GPU level.
Also, our code for this layer can be found in open source library \url{\psorepo}.
\end{remark}

\subsubsection{Flexibility of BD vs FC}
\label{sec:BDLayersCompare}

The above BD architecture allowed us to tremendously improve accuracy of density estimation for high-dimensional data. This was achieved due to the enhanced flexibility of BD architecture vs FC, as we empirically demonstrate below.

\begin{figure}[tb] 
	\centering
	
	\begin{tabular}{ccc}
		
		\subfloat[\label{fig:NNCovariance-a}]{\includegraphics[width=0.31\textwidth]{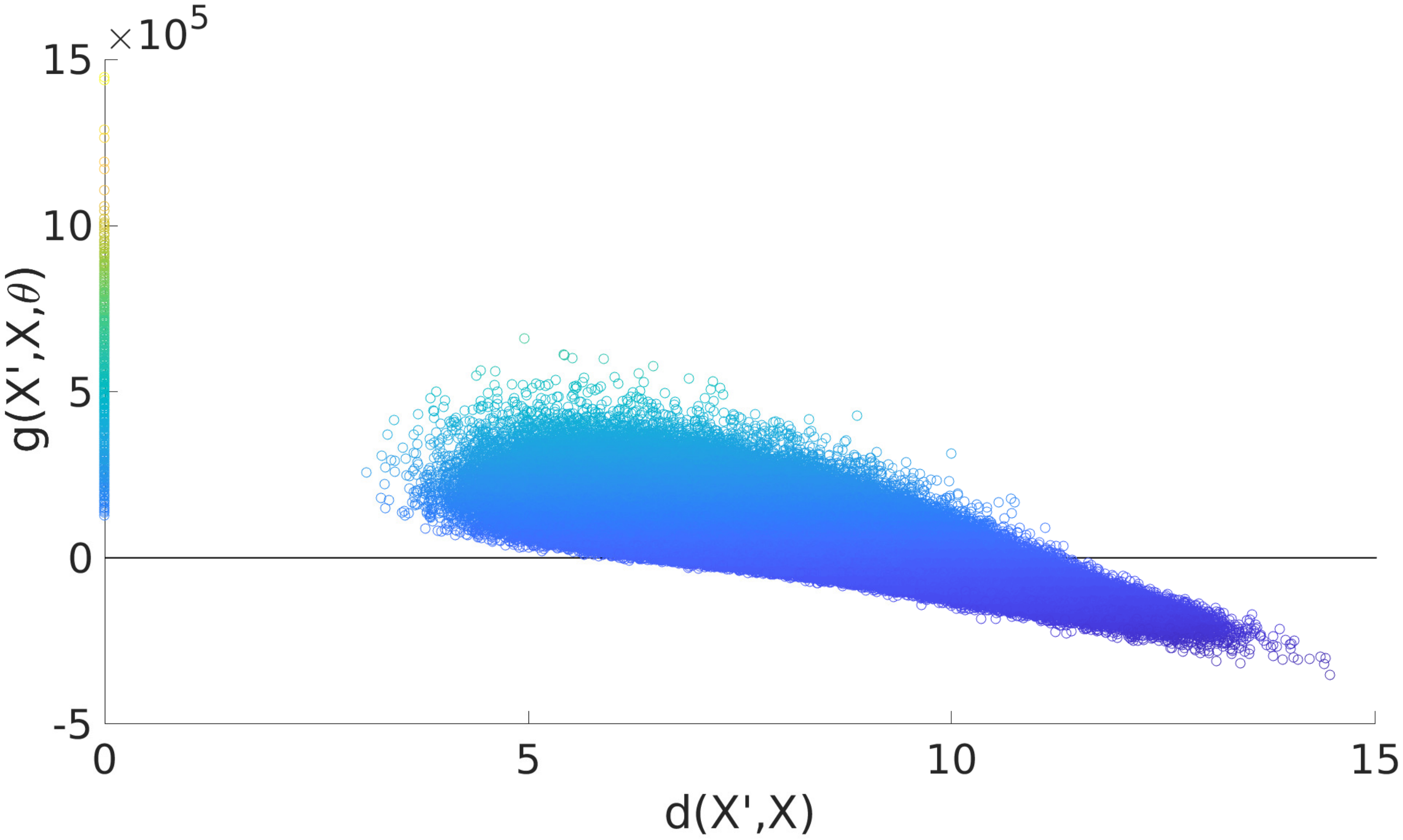}}
		&
		
		\subfloat[\label{fig:NNCovariance-b}]{\includegraphics[width=0.31\textwidth]{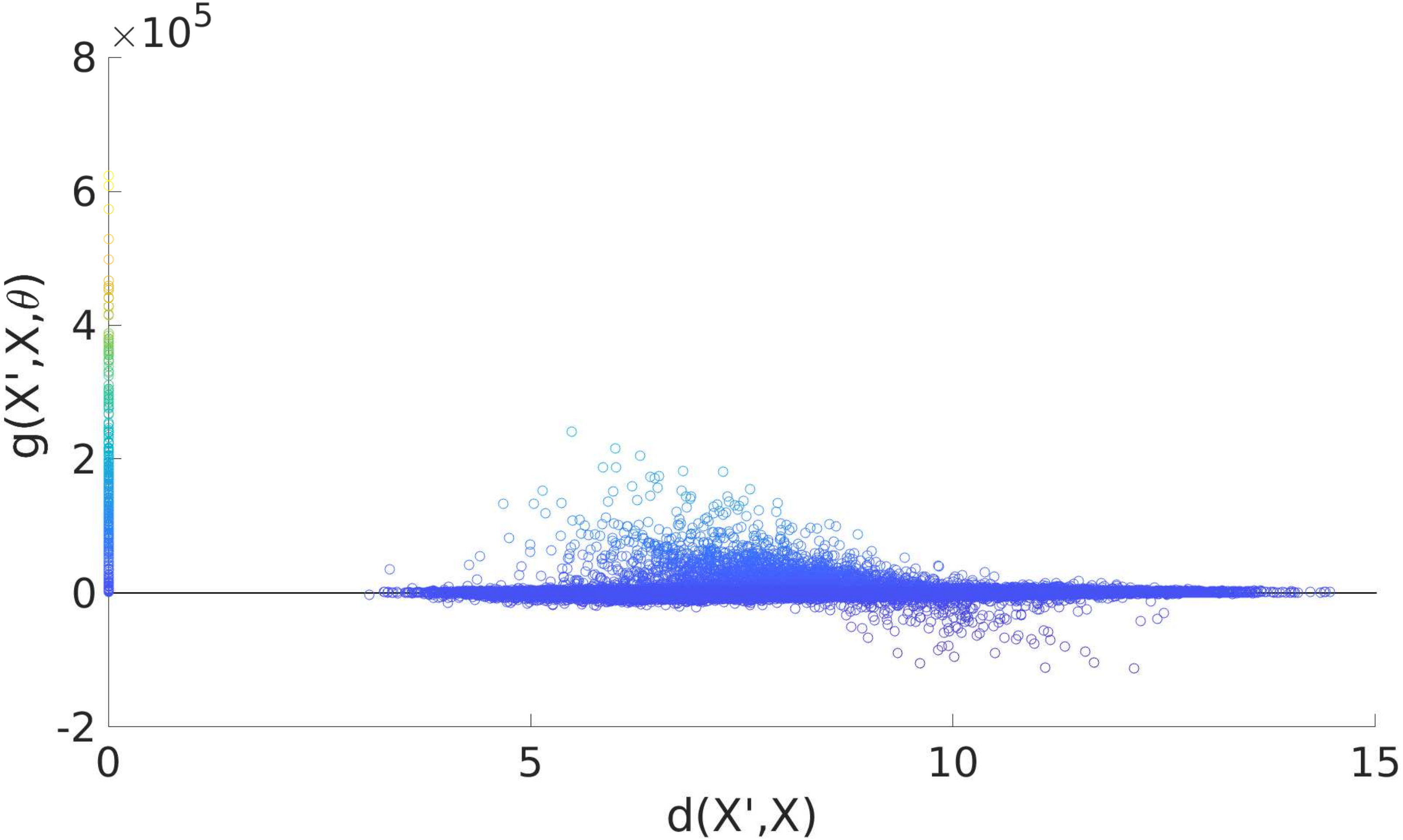}}
		
		&
		
		\subfloat[\label{fig:NNCovariance-c}]{\includegraphics[width=0.31\textwidth]{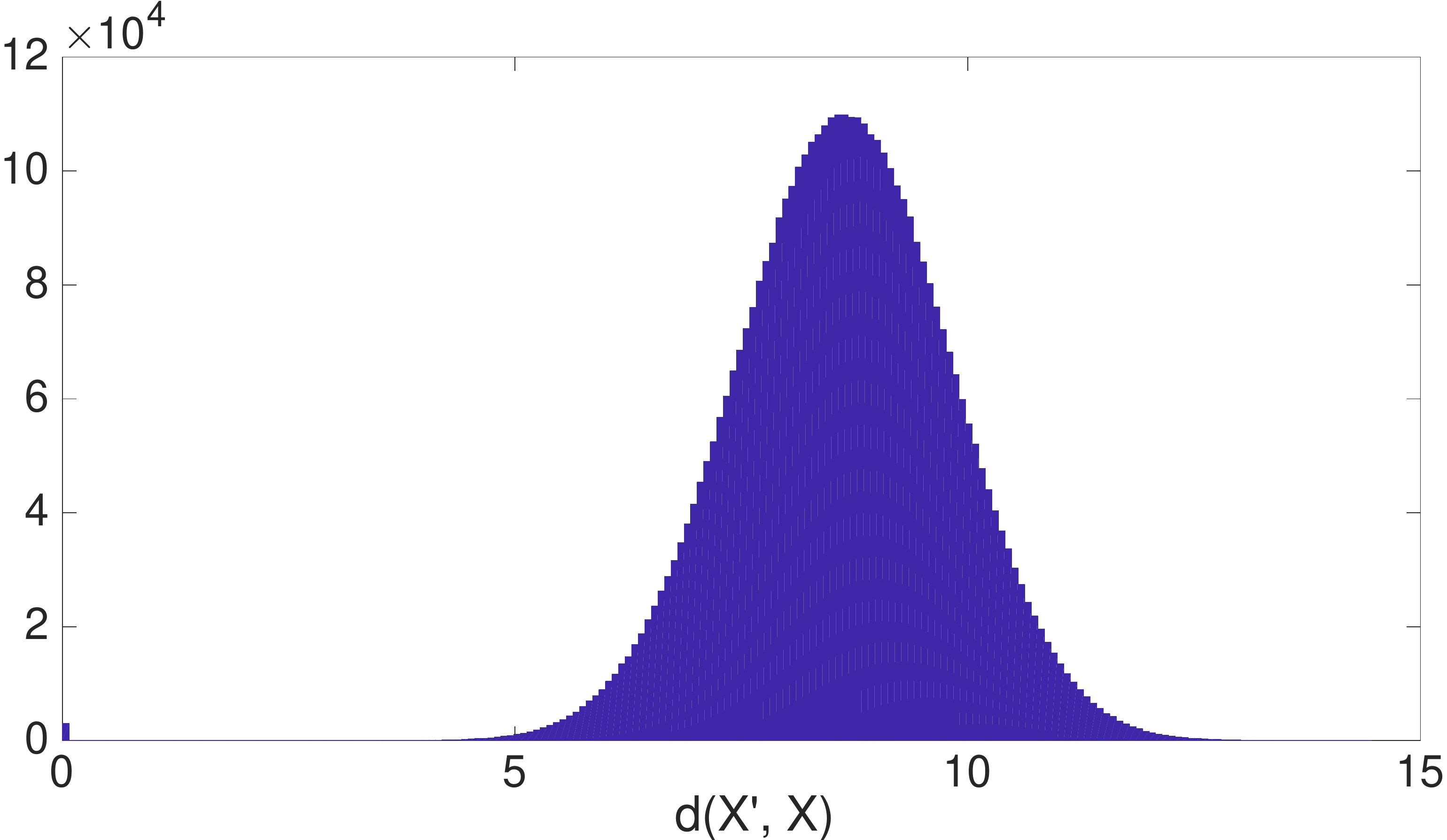}}
		
	\end{tabular}
	
	\newcommand{\width}[0] {0.48}
	\newcommand{\height}[0] {0.18}

	\begin{tabular}{cc}
		
		\subfloat[\label{fig:NNCovariance-d}]{\includegraphics[height=\height\textheight,width=\width\textwidth]{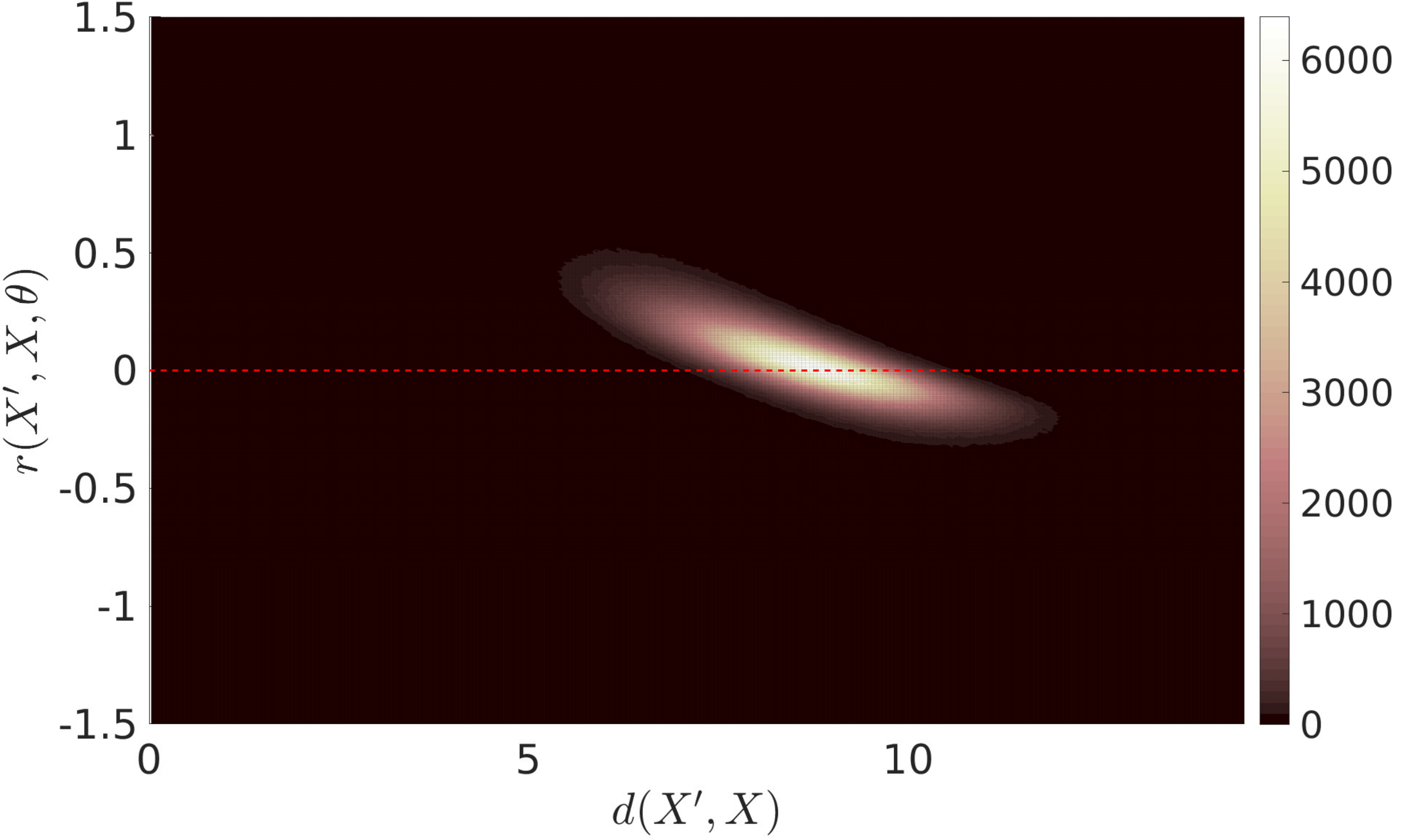}}
		&
		
		\subfloat[\label{fig:NNCovariance-e}]{\includegraphics[height=\height\textheight,width=\width\textwidth]{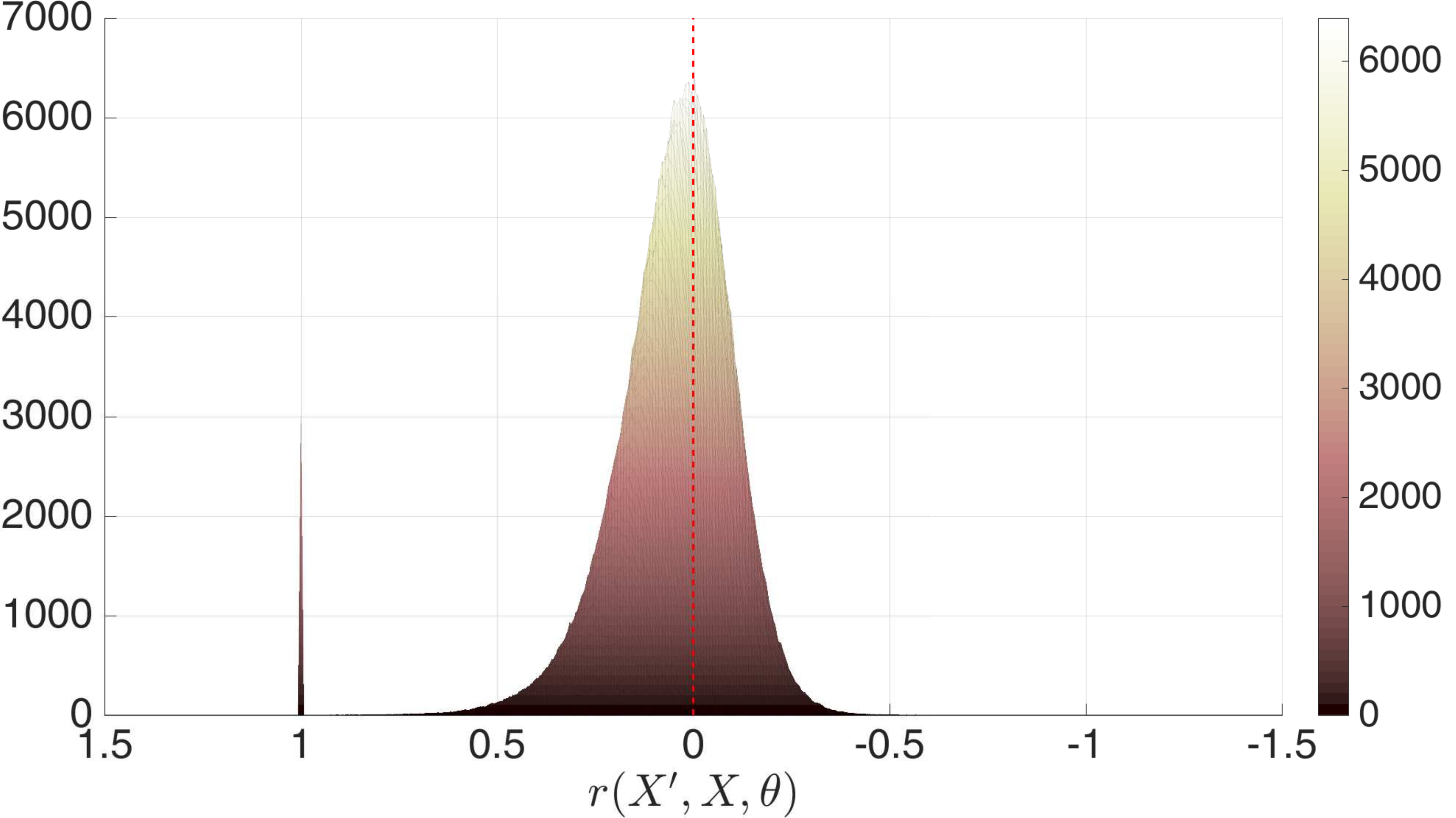}}
		
		\\
		
		\subfloat[\label{fig:NNCovariance-f}]{\includegraphics[height=\height\textheight,width=\width\textwidth]{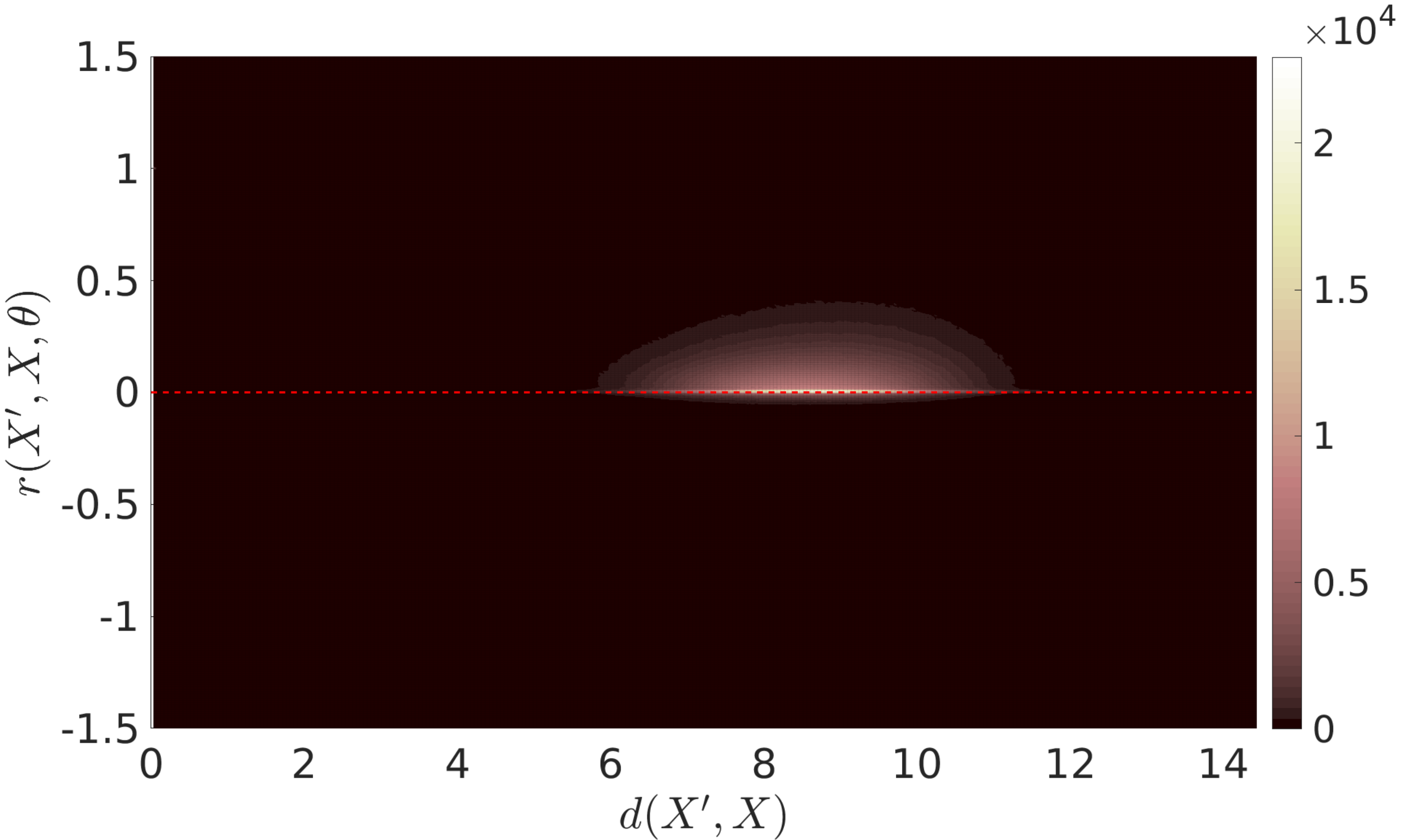}}
		&
		
		\subfloat[\label{fig:NNCovariance-g}]{\includegraphics[height=\height\textheight,width=\width\textwidth]{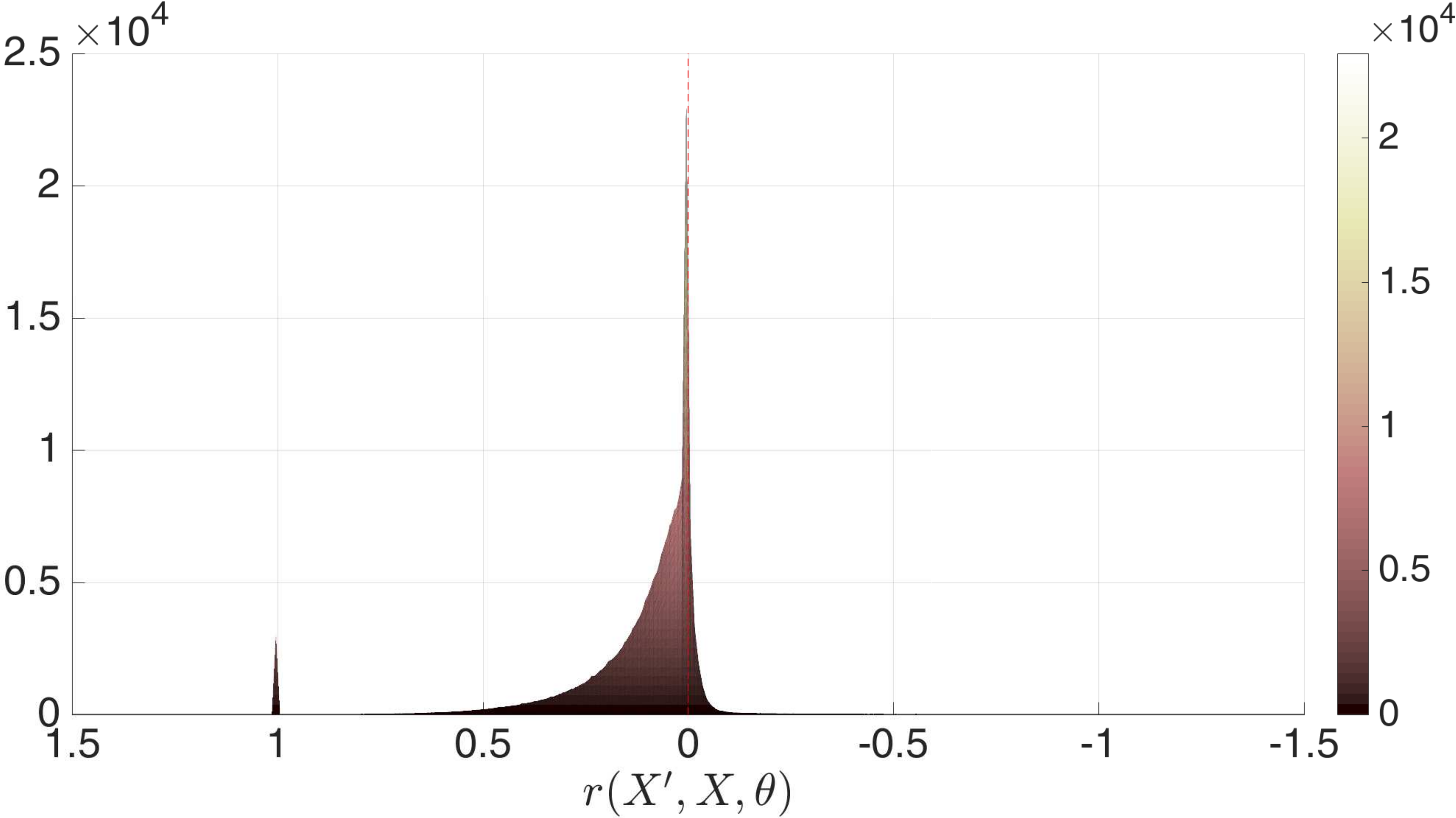}}
		
	\end{tabular}
	
	\protect
	\caption[Bandwidth of $g_{\theta}(X', X)$ and the surface flexibility of FC and BD models.]{Bandwidth of $g_{\theta}(X', X)$ and the surface flexibility of FC and BD models. We infer 20D \emph{Columns} distribution (see Section \ref{sec:ColumnsEst}) by using PSO-LDE with $\alpha = \frac{1}{4}$ (see Table \ref{tbl:PSOInstances1} and Section \ref{sec:DeepLogPDF}). Two networks were trained, FC and BD. The applied FC architecture contains 4 FC layers of size 1024. The applied BD architecture has 6 layers, number of blocks $N_B = 50$ and block size $S_B = 64$. Values of \emph{gradient similarity} $g_{\theta}(X', X)$ and values of Euclidean distance $d(X', X)$ are plotted for (a) FC network and (b) BD network. (c) Histogram of $d(X', X)$ calculated between all sample pairs from dataset $D$.
	Further, a histogram of obtained $\{ r_{\theta}(X_i, X_j) \}$ and $\{ d(X_i, X_j) \}$ is plotted for (d) FC model and (f) BD model. Side views of these histograms are depicted in (e) and (g). See more details in the main text.
	}
	\label{fig:NNCovariance}
\end{figure}

To this end, we analyze the bandwidth of $g_{\theta}(X, X')$ for each of the architectures as follows. We perform a typical density estimation task via PSO-LDE method proposed in Section \ref{sec:DeepLogPDF}. After training a model we sample $D = \{ X_i \}_{i = 1}^{3000}$ testing points from the target density $\probs{\usuff}$ and calculate their gradients $\nabla_{\theta} f_{\theta}(X_i)$. Further, we calculate Euclidean distance $d(X_i, X_j) = \norm{X_i - X_j}$ and \emph{gradient similarity} $g_{\theta}(X_i, X_j)$ between every two points within $D$, producing $\frac{3000 \cdot 3001}{2}$ pairs of distance and similarity values (we consider only unique pairs here). These values are plotted in Figures \ref{fig:NNCovariance-a}-\ref{fig:NNCovariance-b}. As can be seen, $g_{\theta}(X', X)$ values ($y$ axis) of FC network are much higher than these values in BD network. Likewise, there is strong correlation between values of \emph{gradient similarity} and Euclidean distance. In FC case, for $d(X', X) > 0$ values of $g_{\theta}(X', X)$ are far away from being zeros, thus implying strong side-influence of optimization pushes on surface $f_{\theta}(X)$ even between far away points. In contrast, for BD case we can see that $g_{\theta}(X', X)$ is centered around zero for $d(X', X) > 0$, hence side-influence here is less significant.
Furthermore, we stress that similar trends were achieved in all our experiments, for various densities $\probs{\usuff}$ and $\probs{\dsuff}$.

\begin{remark}
The gap in Figures \ref{fig:NNCovariance-a}-\ref{fig:NNCovariance-b} between points with $d(X', X) = 0$ and rest of the samples is explained as follows. At $d(X', X) = 0$ all point pairs are of a form $(X_i, X_i)$, with overall $3000$ such pairs. Rest of the samples are $\{(X_i, X_j) | i \neq j \}$. Furthermore, the histogram of $d(X', X)$ between points in $D$ is illustrated in Figure \ref{fig:NNCovariance-c}. As can be observed, $d(X', X)$ is distributed with Gaussian-like density centered around $8.6$. Hence, the gap between the points in Figures \ref{fig:NNCovariance-a}-\ref{fig:NNCovariance-b} can be explained by a very low probability of two sampled points to be close to each other when the considered space volume (here the subset of $\RR^{20}$) is huge.
\end{remark}

Further, for each sample pair in $D$ we also calculate the \emph{relative} model kernel $r_{\theta}(X_i, X_j)$ defined in Eq.~(\ref{eq:RelKernel}).
When $r_{\theta}$ is greater than 1, it implies that point $X_j$ has stronger impact over $f_{\theta}(X_i)$ than the point $X_i$ itself, and vice versa. Hence, for each point $X_i$ the $r_{\theta}(X_i, \cdot)$ can be interpreted as a \emph{relative} side-influence from other areas over $f_{\theta}(X_i)$, scaled w.r.t. the self-influence of $X_i$. Such normalization allows us to see the actual side-influence impact between two different points, since the value of $g_{\theta}(X, X')$ by itself is meaningless and only achieves significance when compared to the self-similarity $g_{\theta}(X, X)$. Moreover, unlike $g_{\theta}(X, X)$, $r_{\theta}(X, X')$ of different models and NN architectures is on the same scale, allowing to compare the side-influence level between different models.

For $9 \cdot 10^6$ calculated pairs of a \emph{relative} side-influence $r_{\theta}(X_i, X_j)$ and a Euclidean distance $d(X_i, X_j)$ we constructed a histogram in Figures \ref{fig:NNCovariance-d} and \ref{fig:NNCovariance-f} for FC and BD networks respectively. 
Here, we can see the real difference between side-similarities of two models. Within FC network we have a strong \emph{relative} side-influence even between far away regions. This side-influence interferes with the proper PSO optimization by introducing a bias, as was explained in Section \ref{sec:ExprrKernelBnd}. 
In contrast, within BD model the \emph{relative} side-influence between far away regions stays very close to zero, implying that the surface height $f_{\theta}(X)$ at point $X$ is only pushed by training points that are relatively close to $X$. Furthermore, this increases the flexibility of the surface, since with a less side-influence the surface is less constrained and can be pushed at each specific neighborhood more freely.

Hence, we see empirically that the kernel bandwidth of BD NN is smaller than the bandwidth of FC NN, which implies that BD surface is much more flexible than FC. Such flexibility also improves the overall accuracy performance achieved by BD networks (see Section \ref{sec:ColumnsEstNNAME}). The exact mechanism responsible for such difference in the \gs is currently unknown and we shall investigate it in future work.

\subsubsection{Relation between BD and FC - Additional Aspects}
\label{sec:BDLayersModelRel}

The multi-dimensional tensor implementation of BD layers in Eq.~(\ref{eq:BDEinSum}-\ref{eq:BDBias}) allows to significantly reduce size of $\theta$. For example, BD network applied in Figure \ref{fig:NNCovariance} with 6 layers has less than $10^6$ weights ($\left|\theta\right| = 902401$), while the straightforward implementation would require above $10^7$ weights - 10 times more; the same size that appropriate FC network with 6 layers of size 3200 would take.
Further, the size of FC network used in Figure \ref{fig:NNCovariance} is $\left|\theta\right| = 2121729$.
Yet, surprisingly the more compact BD network produces a narrower model kernel (and higher approximation accuracy as will be shown in Section \ref{sec:ColumnsEstNNAME}) than the more memory consuming FC network.

Interestingly, BD layers are contained in the hypothesis class of FC layers, thus being instance of the latter. Yet, a typical optimization of FC architecture will not impose weight matrices to be block-diagonal, since the local minima of FC network typically has dense weight matrices. However, as already stated, an optimized network with BD structure has a significantly lower error compared to FC structure.
This suggests that local minima of FC networks has a much bigger error compared to the error of the global minima for such architecture, since the global minima should be even smaller than the one achieved by BD network. Hence, this implies that common statement about local and global errors of NN being close is not always correct.

\subsubsection{Similar Proposed Architectures}
\label{sec:BDLayersRelWork}

The BD model can be expressed as a sum of sub-models, each representing separate network channel. Such design has a high resemblance to the products of experts (PoE) \citep{Hinton99} where model is constructed as sum (or product) of smaller models. Yet, in typical PoE each expert is trained separately, while herein we represent our block-diagonal model as single computational graph that is trained as whole by the classical backpropagation method.

In addition, we argue that also other DL domains can benefit from a BD architecture, and such investigation can be an interesting future work. In fact, separating network into several independent channels is not new. The family of convolutional Inception models \citep{Szegedy15cvpr,Szegedy16cvpr,Szegedy17aaai} also applied the \emph{split-transform-merge} paradigm, where each network block was separated into a set of independent transformations (channels). These models succeeded to achieve high accuracy at the image classification problem. Further, ResNeXt convolutional model in \citep{Xie17cvpr} generalized this idea to produce NN computational blocks that contain $C$ independent identical transformations, where $C$ is a \emph{cardinality} parameter of NN. Authors showed that increasing \emph{cardinality} instead of width/number of layers can significantly improve the accuracy produced by NN. In context of BD architecture, we have seen a similar trend where increasing number of channels $N_B$ (which is parallel to $C$) allows to provide a better approximation of the target function. We demonstrate this in our experiments in Section \ref{sec:Exper}.

Further, a similar architecture was proposed also in \citep{Nesky18icann} in the context of the classification, although it was implemented in a different way. The main motivation of that work was to condense a network size to improve the computational complexity of a NN. Authors showed that by forcing weight matrices of FC layers to be block-diagonal a significant speedup in time can be achieved with small loss in accuracy. In contrast, in our work we see that such NN structure not only improves runtime and reduces number of weights, but also produces a higher approximation performance.

\subsection{NN Pre-Conditioning}
\label{sec:PrecondNN}

It is a common practice in Machine Learning to pre-condition a learning algorithm by, for example,  whitening and uncorrelating data or performing any other transformation that improves a condition number of the optimization. We also found that such techniques can be valuable for the density estimation task. Specifically, the main considered by this paper application of PSO framework is to learn $\log \probi{\usuff}{X}$.
In our experiments we combine two pre-conditioning methods within our NN $f_{\theta}(X)$: data normalization and NN height bias.

First, we normalize data to have zero mean and unit standard deviation for each dimension $i$ independently, via:
\begin{equation}
\hat{X}_i
=
\frac{X_i - \mu_i}{\sigma_i}
,
\label{eq:NormData}
\end{equation}
where $\mu$ and $\sigma$ are mean and standard deviation vectors calculated for all available data $\{ X^{\usuff}_{i} \}_{i = 1}^{N^{\usuff}}$ from the target density $\probs{\usuff}$.

Second, we bias an initial surface $f_{\theta}(X)$ to coincide with logarithm of the chosen auxiliary \down density $\probi{\dsuff}{X}$. We assume that the target $\log \probi{\usuff}{X}$ and $\log \probi{\dsuff}{X}$ reside on a similar height on average. Thus, to accelerate the convergence we force the initial height of surface $f_{\theta}(X)$ to be identical to $\log \probi{\dsuff}{X}$ as follows. First, as observed a typical initialization of NN produces the initial surface $f_{\theta}(X) \approx 0$ for all points $X$. Hence, in order to bias it to the initial height $\log \probi{\dsuff}{X}$, we only need to add this log-pdf function to the output of the last layer, $f^{L}_{\theta}(X)$:
\begin{equation}
f_{\theta}(X)
=
f^{L}_{\theta}(X) + \log \probi{\dsuff}{X}
.
\label{eq:BiasNN}
\end{equation}

Moreover, such NN initialization enforces the logarithm difference $\bar{d}\left[
X, f_{\theta}(X)
\right]
 \triangleq f_{\theta}(X) - \log \probi{\dsuff}{X}$ from Eq.~(\ref{eq:DDiffDefinition})
to be approximately zero for all points $X \in \RR^n$ at beginning of the optimization. Further, considering \ms of PSO-LDE in Eqs.~(\ref{eq:PSOLDELossMU})-(\ref{eq:PSOLDELossMD}), for the above initialization each of $\{ M^{\usuff}_{\alpha}, M^{\dsuff}_{\alpha} \}$ will return $2^{- \frac{1}{\alpha}}$ for any point $X$.
Since both $M^{\usuff}_{\alpha}(X, f_{\theta}(X))$ and $M^{\dsuff}_{\alpha}(X, f_{\theta}(X))$ have the same value at every point $X \in \RR^n$, such NN bias produces a more balanced PSO gradient (see Eq.~(\ref{eq:GeneralPSOLossFrml})) at start of the training, which improves the optimization numerical stability. Furthermore, as mentioned above in case the chosen $\probs{\dsuff}$ is indeed close to the target $\probs{\usuff}$, the initial value of the surface $f_{\theta}(X)$ is also close to its final converged form; this in turn increases the convergence rate of PSO-LDE.
Further, recently a similar idea was suggested in \citep{Labeau18iccl} specifically for NCE method (PSO-LDE with $\alpha = 1$) in the context of discrete density estimation for language modeling, where NN initialization according to outputs of the noise density (parallel to $\probs{\dsuff}(X)$ in our work) helped to improve the learned model accuracy.

We perform both techniques inside the computational graph of NN $f_{\theta}(X)$, by adding at the beginning of graph the operation in Eq.~(\ref{eq:NormData}), and at the end of graph - the operation in Eq.~(\ref{eq:BiasNN}).

\subsection{Other NN Architecture Aspects}
\label{sec:NNOther}

In our experiments we also explored two choices of non-linear activation function to use within NN $f_{\theta}(X)$, Relu and Leaky Relu. We found that both have their advantages and disadvantages. Relu reduces training time by $30 \%$ w.r.t. Leaky Relu, allegedly due to its implicit gradient sparsity, and the converged surface looks more \emph{smooth}. Yet, it often contains mode collapse areas where several modes of $\probs{\usuff}$ are represented within $f_{\theta}(X)$ by a single "hill". On the other hand, Leaky-Relu sometimes produces artifacts near sharp edges within $f_{\theta}$, that resembles  Gibbs phenomenon. Yet, it yields significantly less mode collapses.

We argue that these mode collapses are in general caused by the reduced model flexibility, which in case of Relu is induced by more sparse gradients $\nabla_{\theta} 
f_{\theta}(\cdot)$ as follows. 
The implicit gradient sparsity of Relu, i.e. zero-gradient $\frac{\partial f_{\theta}(X)}{\partial \theta_i} = 0$ for the most part of the weights $\theta_i \in \theta$ at all input points $X \in \RR^{n}$, also reduces the effective dimension of subspace spanned by $\nabla_{\theta} 
f_{\theta}(\cdot)$ evaluated at training points.
Thus, the number of possible independent gradient vectors at different points (i.e. the rank of the aforementioned subspace) is also reduced, which increases (on average) the correlation $g_{\theta}(X, X') \triangleq \nabla_{\theta} 
f_{\theta}(X)^T \cdot \nabla_{\theta} 
f_{\theta}(X')$ between various (even faraway) points $X$ and $X'$. This increase can also be interpreted as an increase of the kernel bandwidth, which will lead to
expressiveness reduction of the model, according to Section \ref{sec:ExprrKernelBnd}.

Further, residual (skip) connections between different NN layers became very popular in recent NN architectures \citep{He16cvpr,Szegedy17aaai,Xie17cvpr}. Such connections allowed for using deeper neural networks and for the acceleration of learning convergence. Yet, in our work we did not observe any performance improvement from introducing skip connections into NN $f_{\theta}(X)$ with 8 or less layers. Thus, in most part of our experiments we did not employ these shortcuts. The only part where they were used is the Section \ref{sec:ImageEst} where networks with 14 layers were trained.

Additionally, the Batch Normalization (BN) \citep{Ioffe15arxiv} technique was shown in many DL works to stabilize the training process and improve the overall approximation accuracy. However, in our experiments on the density estimation we saw the opposite trend. That is, when BN is combined with PSO density estimators, the outcome is usually worsen than without it.

Finally, dropout \citep{Srivastava14jmlr} is known to be an useful regularization method to fight the \overfit. In our experiment we indeed observed the $g_{\theta}(X, X')$'s bandwidth increase along with an increase in the dropout probability. Hence, dropout can be considered as a tool to increase side-influence and bias of the estimation, and to reduce its variance. Further, the detailed investigation of dropout impact over $g_{\theta}(X, X')$ is outside of this paper scope.

\section{Overfitting of PSO}
\label{sec:DeepLogPDFOF}

In this section we will illustrate one of the major challenges involved in training PSO in a small dataset setting - the over-flexibility of model $f_{\theta}$ that induces \overfit. Likewise, herein we will also discuss possible solutions to overcome this issue.

\subsection{Problem Illustration}
\label{sec:DeepLogPDFOFIllustr}

\begin{figure}
	\centering
	
	\newcommand{\width}[0] {0.45}
	\newcommand{\height}[0] {0.16}

	\begin{tabular}{cc}

		\subfloat[\label{fig:GaussianOverfitted-a}]{\includegraphics[height=\height\textheight,width=\width\textwidth]{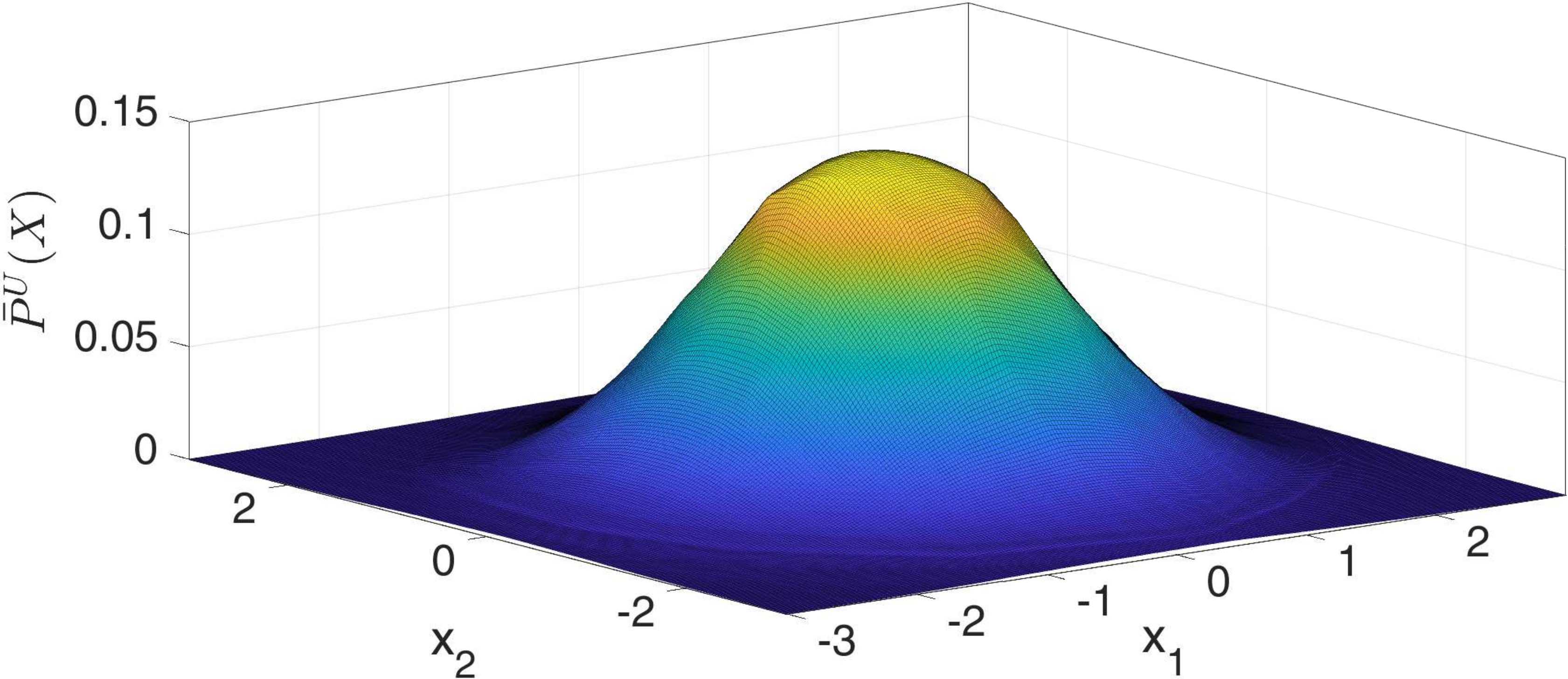}}
		&
		
		\subfloat[\label{fig:GaussianOverfitted-b}]{\includegraphics[height=\height\textheight,width=\width\textwidth]{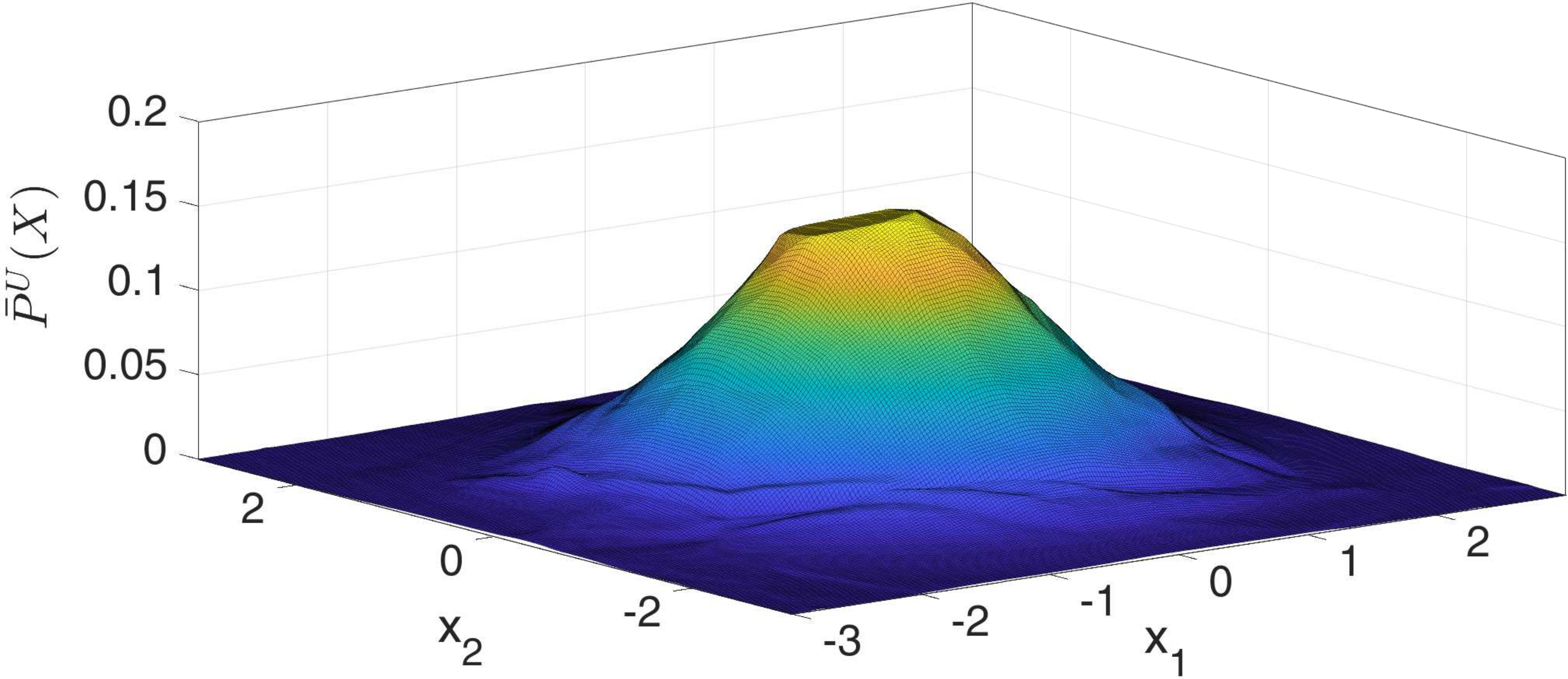}}
		
		\\
		
		\subfloat[\label{fig:GaussianOverfitted-c}]{\includegraphics[height=\height\textheight,width=\width\textwidth]{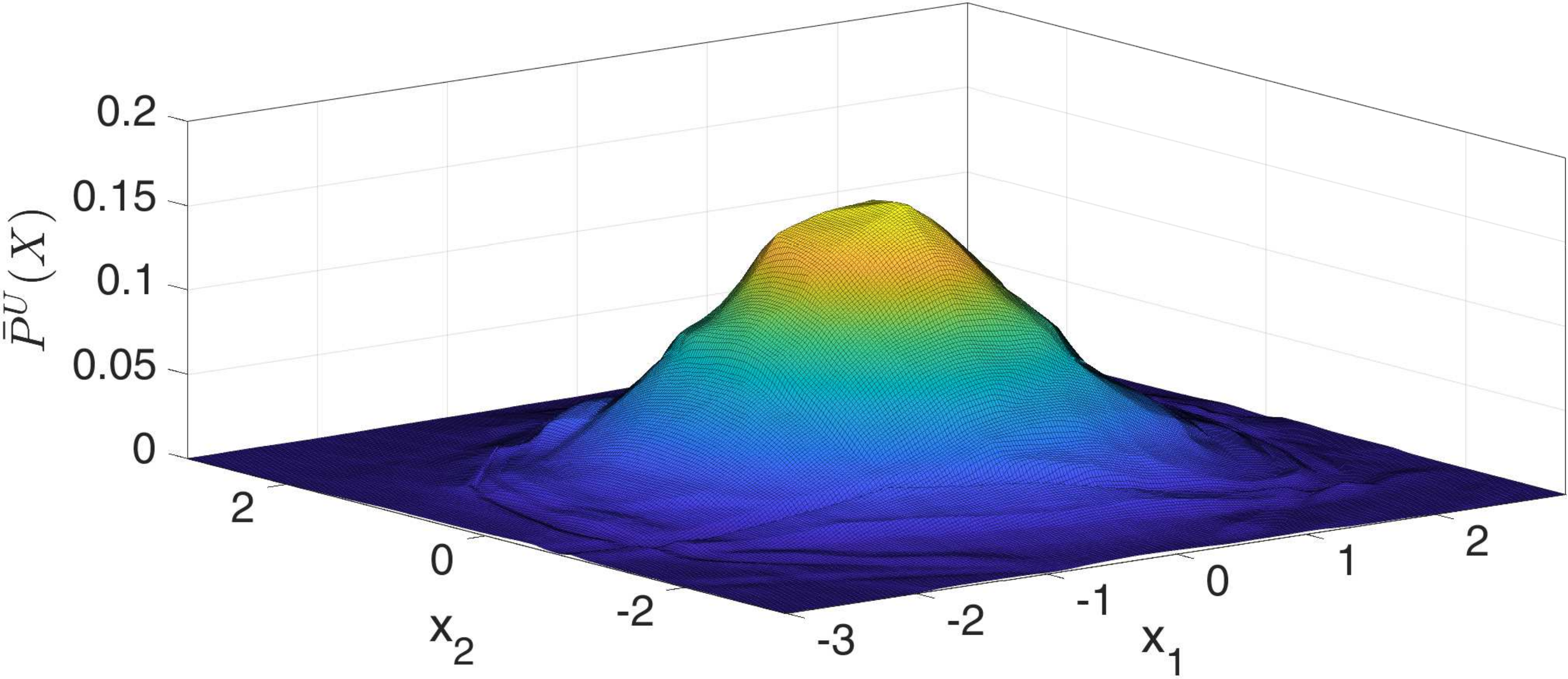}}
		&
		
		\subfloat[\label{fig:GaussianOverfitted-d}]{\includegraphics[height=\height\textheight,width=\width\textwidth]{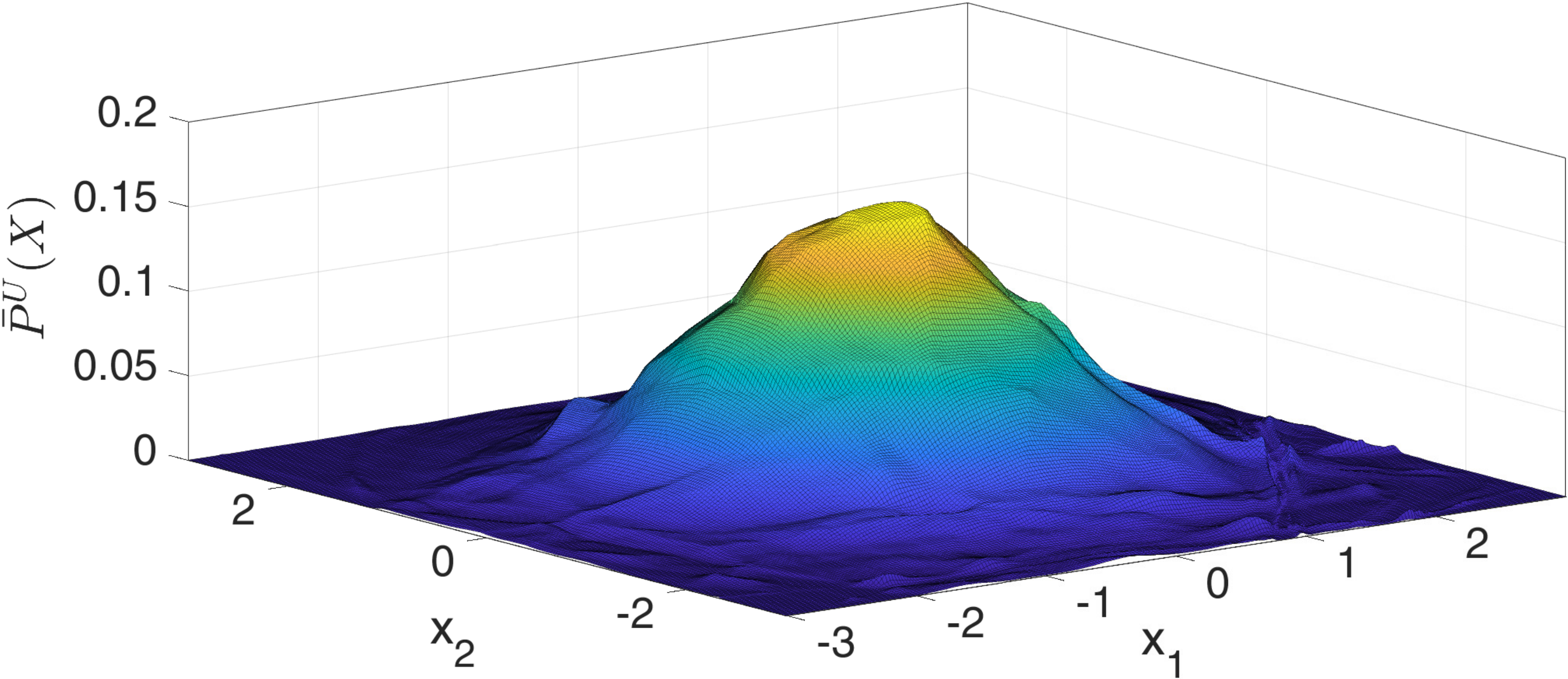}}

		\\
		
		\subfloat[\label{fig:GaussianOverfitted-e}]{\includegraphics[height=\height\textheight,width=\width\textwidth]{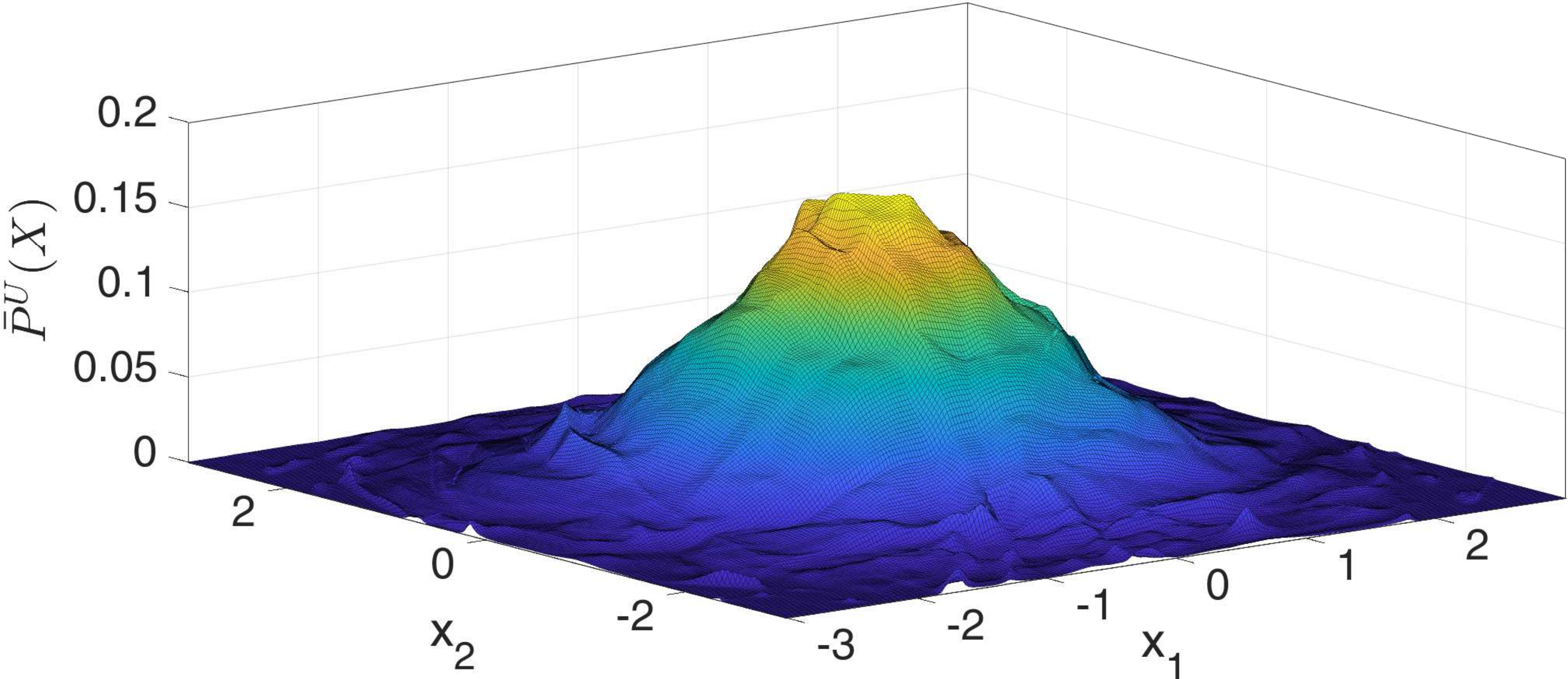}}
		&
		
		\subfloat[\label{fig:GaussianOverfitted-f}]{\includegraphics[height=\height\textheight,width=\width\textwidth]{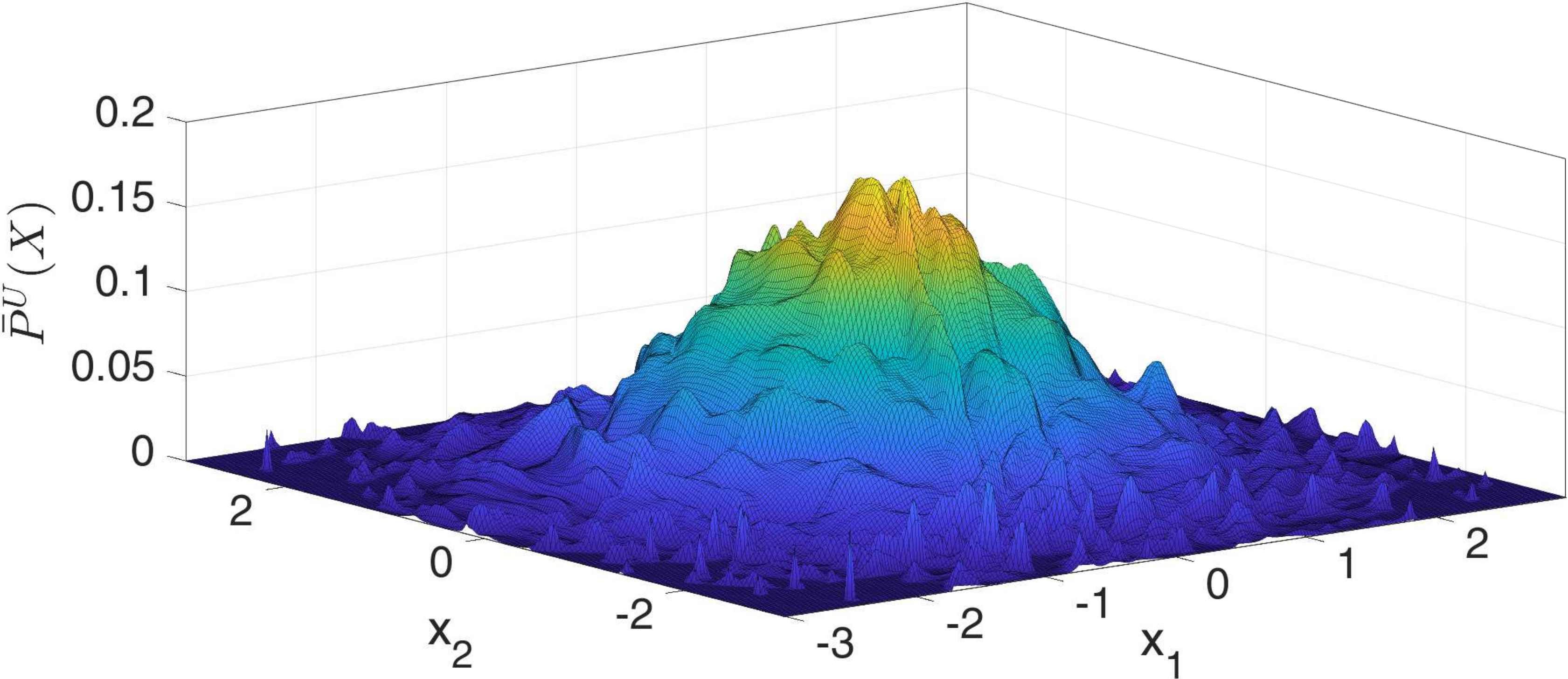}}
		
		\\
		
		\subfloat[\label{fig:GaussianOverfitted-g}]{\includegraphics[height=\height\textheight,width=\width\textwidth]{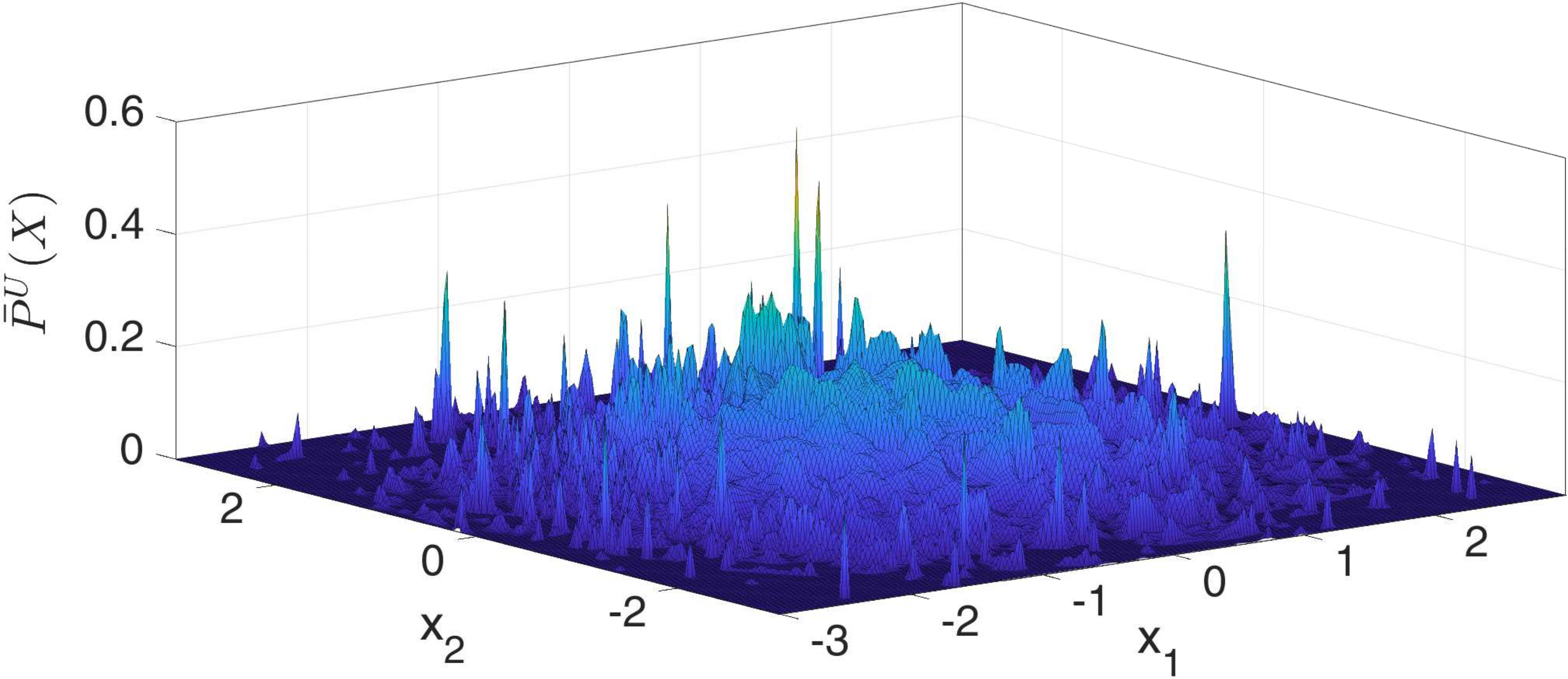}}
		&
		
		\subfloat[\label{fig:GaussianOverfitted-h}]{\includegraphics[height=\height\textheight,width=\width\textwidth]{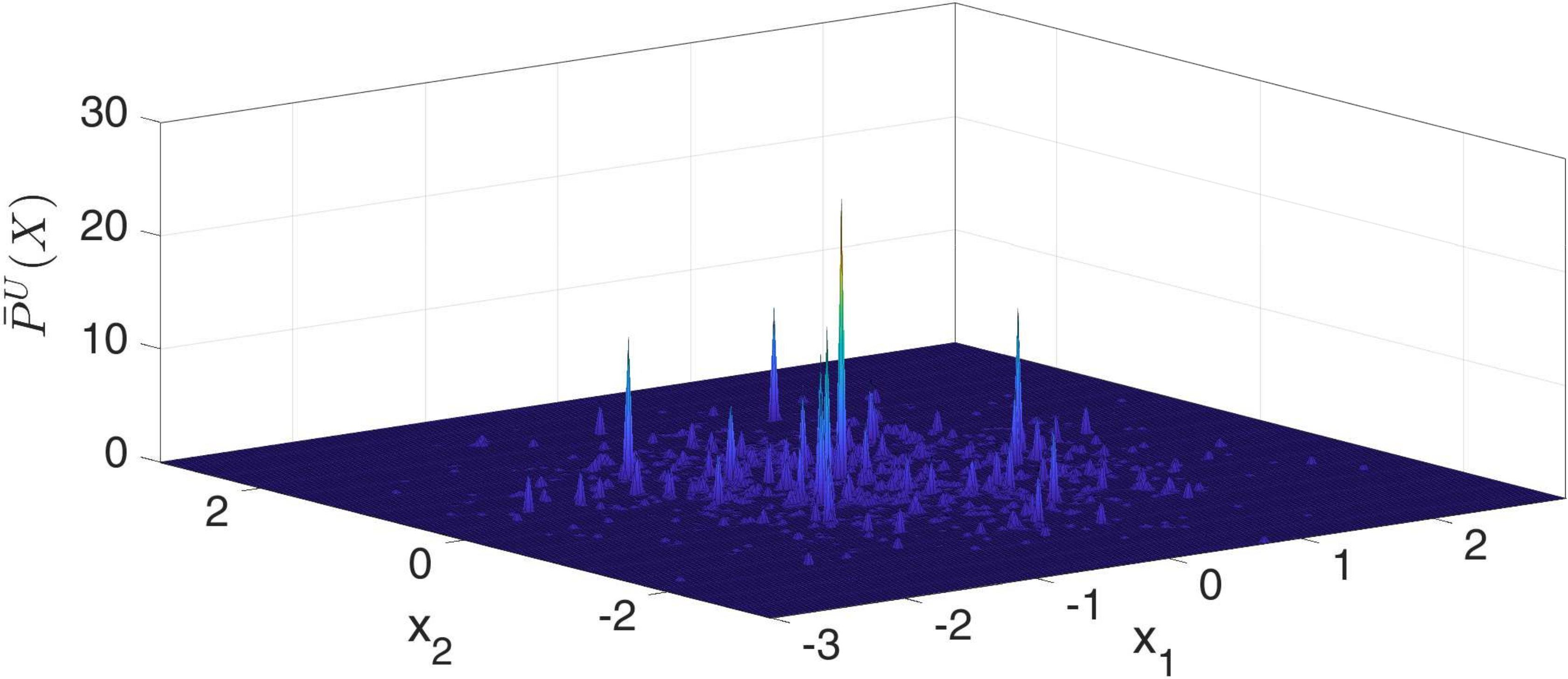}}
		
	\end{tabular}
	
	\protect
	\caption[Illustration of PSO \overfit when the training dataset is small.]{Illustration of PSO \overfit when the training dataset is small. We infer 2D \emph{Normal} distribution via $\bar{\PP}^{\usuff}(X) = \exp f_{\theta}(X)$ by using PSO-LDE with $\alpha = \frac{1}{4}$ (see Table \ref{tbl:PSOInstances1} and Section \ref{sec:DeepLogPDF}). The applied NN architecture is block-diagonal with 6 layers, number of blocks $N_B = 50$ and block size $S_B = 64$ (see Section \ref{sec:BDLayers}). Number of \up training points $\{ X^{\usuff}_{i} \}$ is (a) $10^6$, (b) $10^5$, (c) $80000$, (d) $60000$, (e) $40000$, (f) $20000$, (g) $10000$ and (h) $1000$. As observed, when using the same NN architecture, that is when we do not reduce the flexibility level of the model, the smaller number of training points leads to the spiky approximation. In other words, the converged model will contain a peak around each training sample point.
	}
	\label{fig:GaussianOverfitted}
\end{figure}

\begin{figure}[tb]
	\centering
	
	\newcommand{\width}[0] {0.43}
	\newcommand{\height}[0] {0.13}

	\begin{tabular}{cc}

		\subfloat[\label{fig:GaussianOverfittedR-a}]{\includegraphics[height=\height\textheight,width=\width\textwidth]{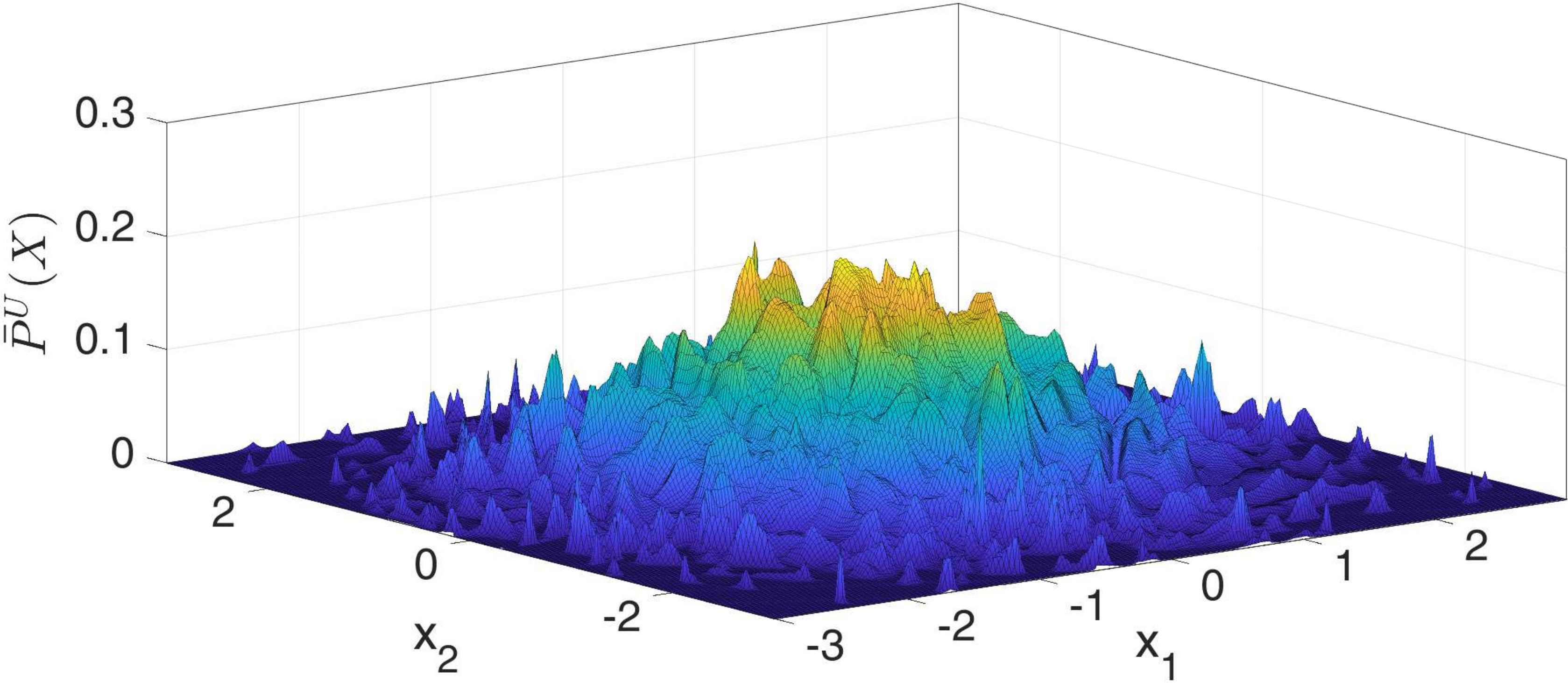}}

&
		
		\subfloat[\label{fig:GaussianOverfittedR-b}]{\includegraphics[height=\height\textheight,width=\width\textwidth]{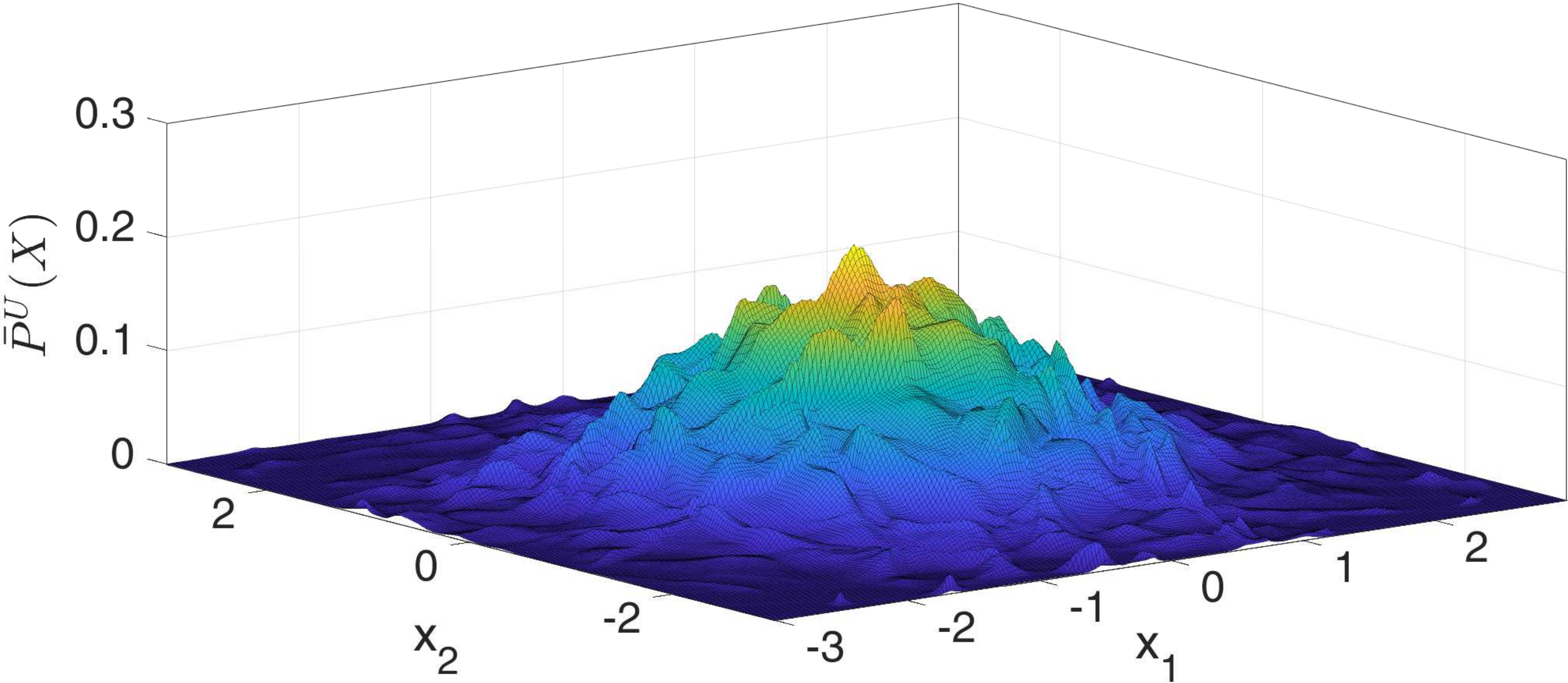}}
		
		\\
		
		\subfloat[\label{fig:GaussianOverfittedR-c}]{\includegraphics[height=\height\textheight,width=\width\textwidth]{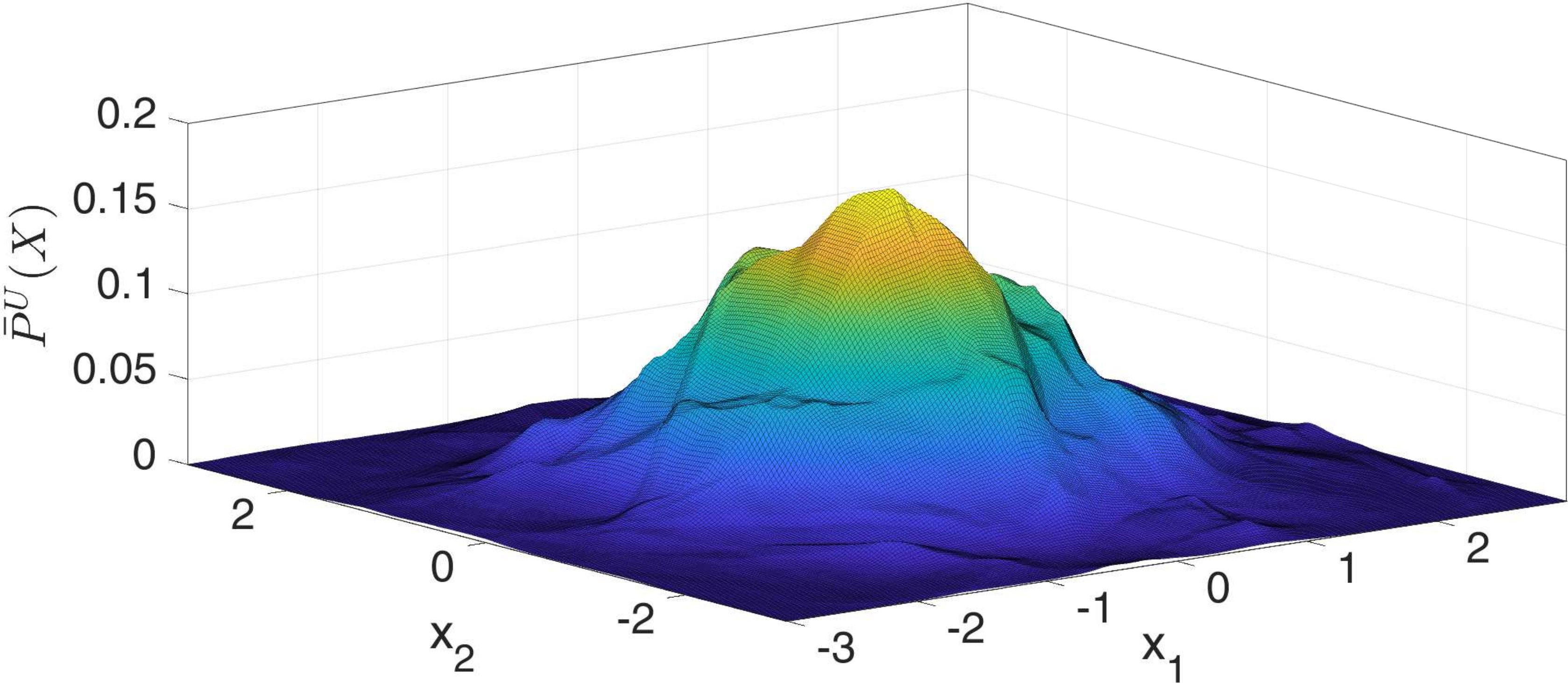}}
		&
		
		\subfloat[\label{fig:GaussianOverfittedR-d}]{\includegraphics[height=\height\textheight,width=\width\textwidth]{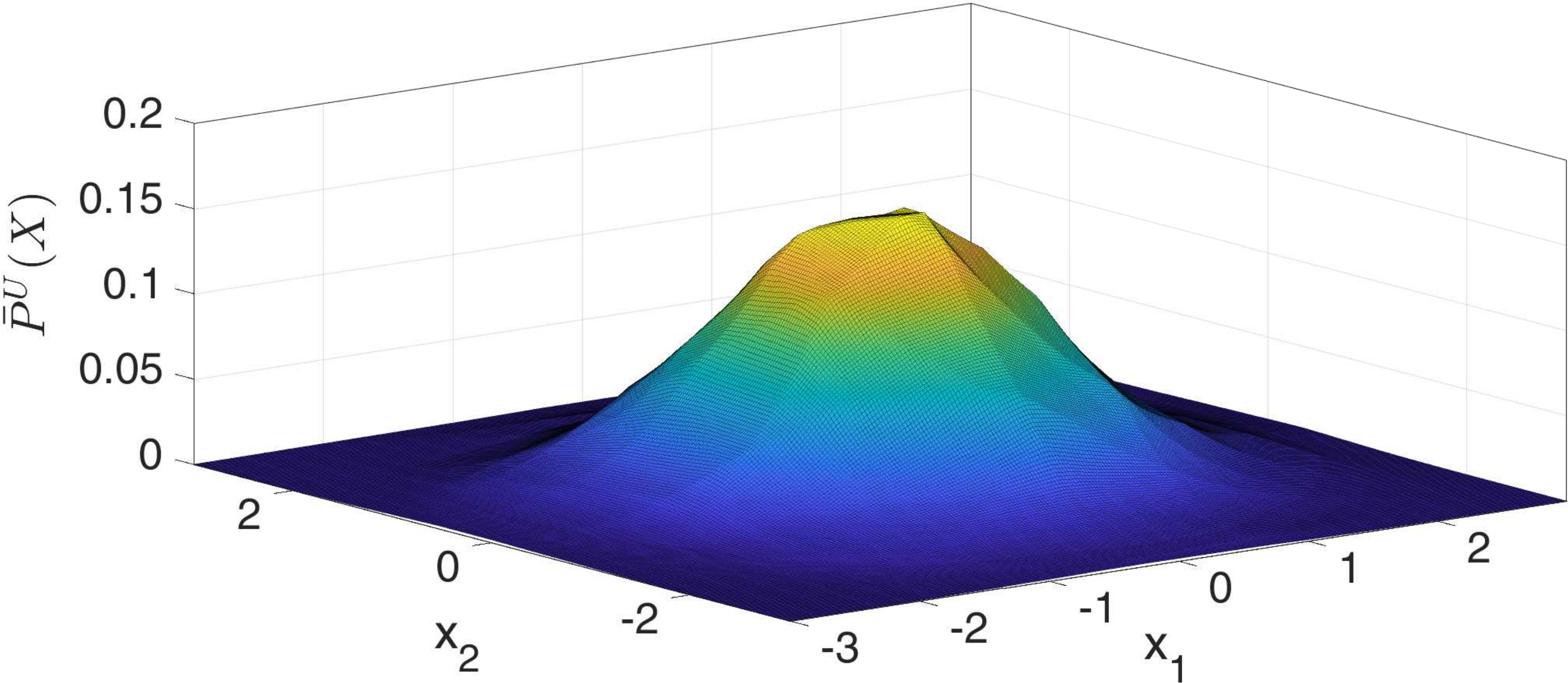}}

	\end{tabular}
	
	\protect
	\caption[Illustration of decrease in PSO \overfit when the NN flexibility is reduced.]{Illustration of decrease in PSO \overfit when the NN flexibility is reduced. We infer 2D \emph{Normal} distribution via $\bar{\PP}^{\usuff}(X) = \exp f_{\theta}(X)$, using only 10000 training samples $\{ X^{\usuff}_{i} \}$. The applied loss is PSO-LDE with $\alpha = \frac{1}{4}$ (see Table \ref{tbl:PSOInstances1} and Section \ref{sec:DeepLogPDF}). The applied NN architecture is block-diagonal with number of blocks $N_B = 20$ and block size $S_B = 64$ (see Section \ref{sec:BDLayers}). Number of layers within NN is (a) $5$, (b) $4$, (c) $3$ and (d) $2$. As observed, when the number of layers is decreasing, the converged model is more smooth, with less peaks around the training points.
	}
	\label{fig:GaussianOverfittedR}
\end{figure}

\begin{figure}[!tb]
	\centering

	\newcommand{\width}[0] {0.43}
	\newcommand{\height}[0] {0.123}

	\begin{tabular}{cc}

		\subfloat[\label{fig:GaussianOverfittedR_KDE-a}]{\includegraphics[height=\height\textheight,width=\width\textwidth]{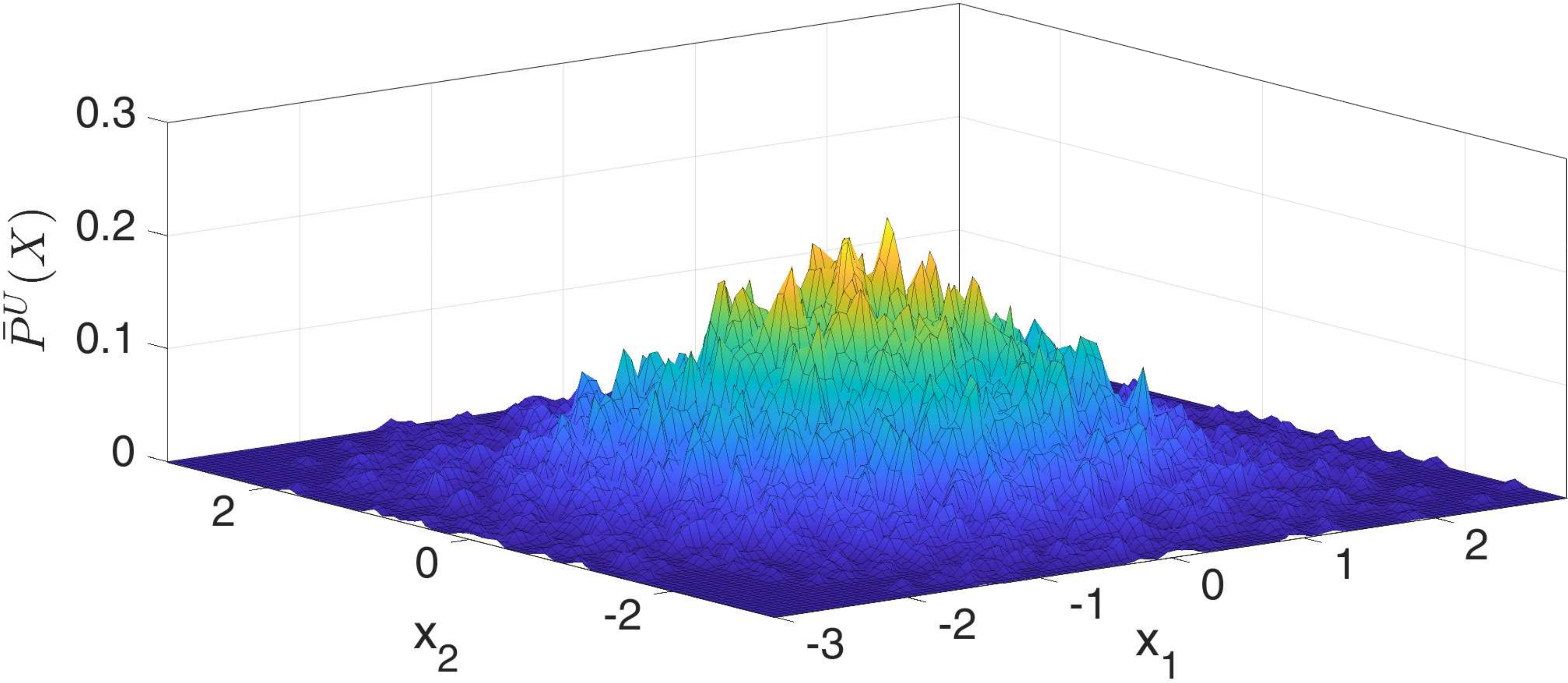}}
		
		&
		
		\subfloat[\label{fig:GaussianOverfittedR_KDE-b}]{\includegraphics[height=\height\textheight,width=\width\textwidth]{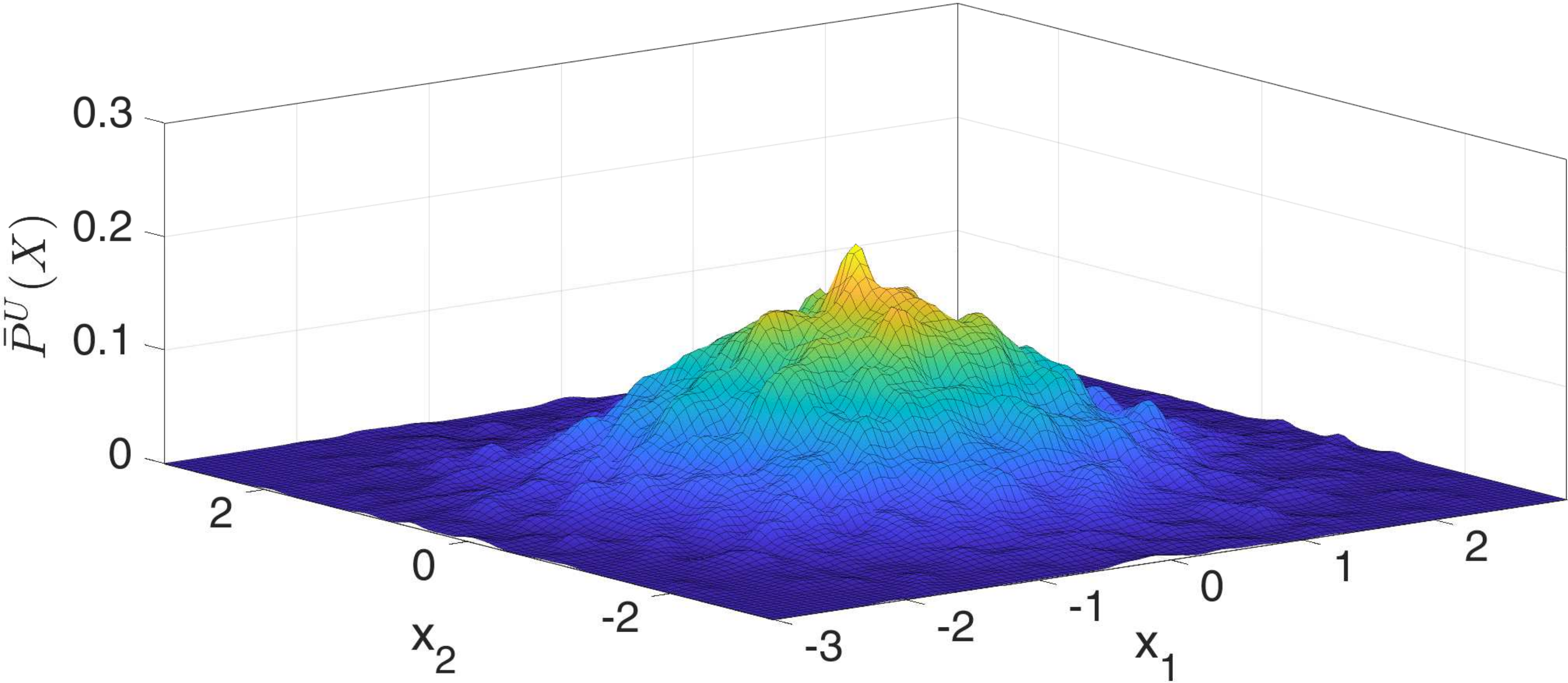}}
		
		\\
		
		\subfloat[\label{fig:GaussianOverfittedR_KDE-c}]{\includegraphics[height=\height\textheight,width=\width\textwidth]{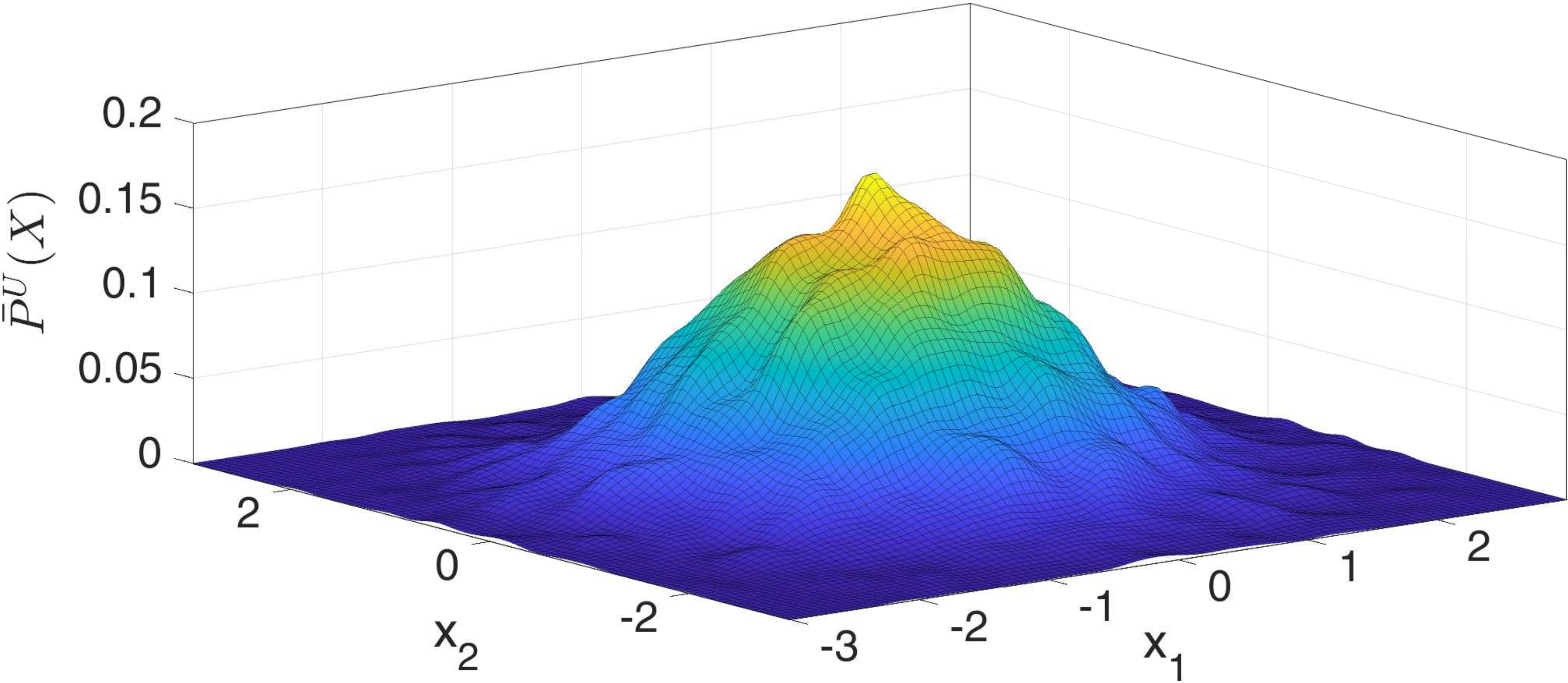}}
		&
		
		\subfloat[\label{fig:GaussianOverfittedR_KDE-d}]{\includegraphics[height=\height\textheight,width=\width\textwidth]{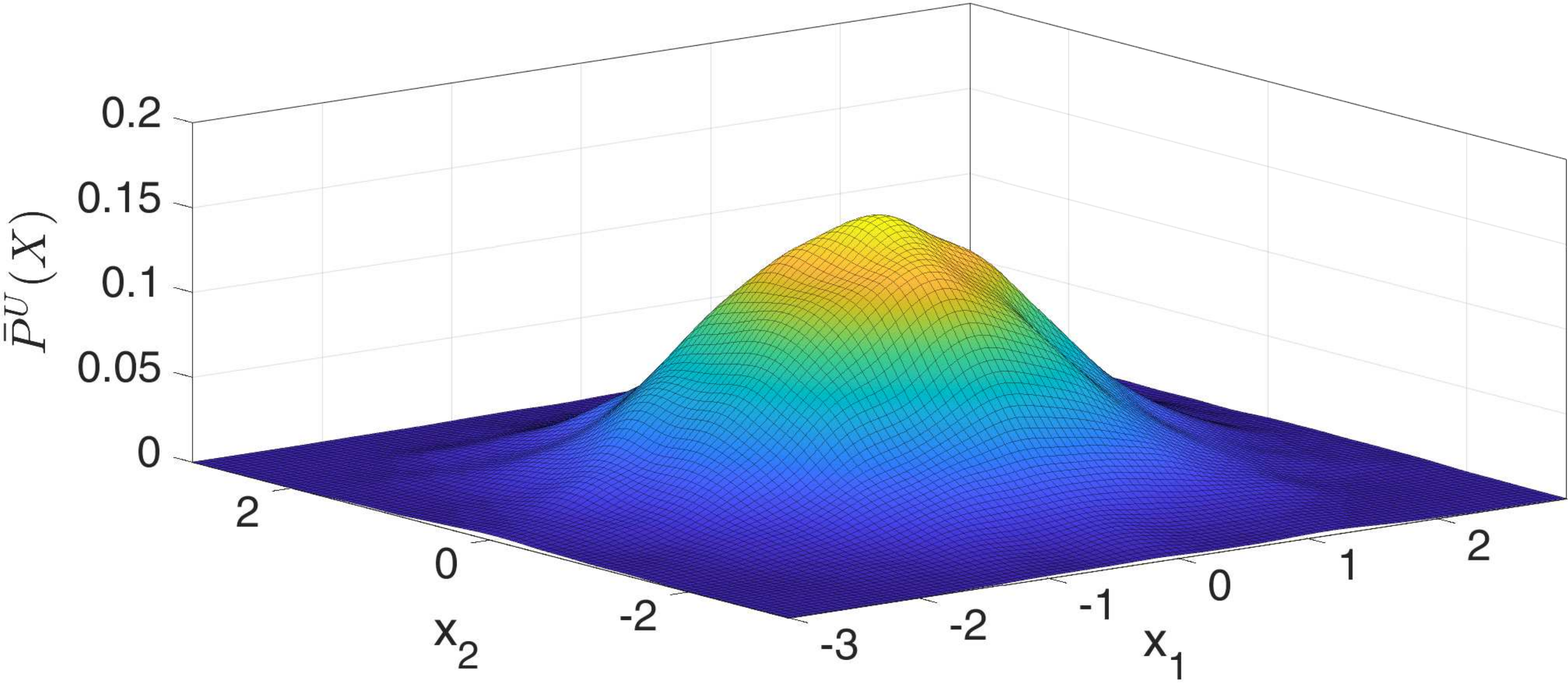}}

	\end{tabular}
	
	\protect
	\caption[Illustration of decrease in KDE (kernel density estimation) \overfit when the \emph{bandwidth} $h$ of applied Gaussian kernel is increased.]{Illustration of decrease in KDE (kernel density estimation) \overfit when the \emph{bandwidth} $h$ of applied Gaussian kernel is increased. We infer 2D \emph{Normal} distribution via KDE, using only 10000 training samples $\{ X^{\usuff}_{i} \}$. Used kernel has $h$ equal to (a) $0.04$, (b) $0.08$, (c) $0.12$ and (d) $0.2$. As observed, when the \emph{bandwidth} $h$ is increasing, the converged model is more smooth, with less peaks around the training points. Similar trend is observed for PSO in Figure \ref{fig:GaussianOverfittedR}.
	}
	\label{fig:GaussianOverfittedR_KDE}
\end{figure}

\begin{figure}[tbp]
	\centering
	
	\newcommand{\width}[0] {0.45}
	\newcommand{\height}[0] {0.18}
	
	\begin{tabular}{cc}

		\subfloat[\label{fig:GaussianOverfittedR_SI-a}]{\includegraphics[height=\height\textheight,width=\width\textwidth]{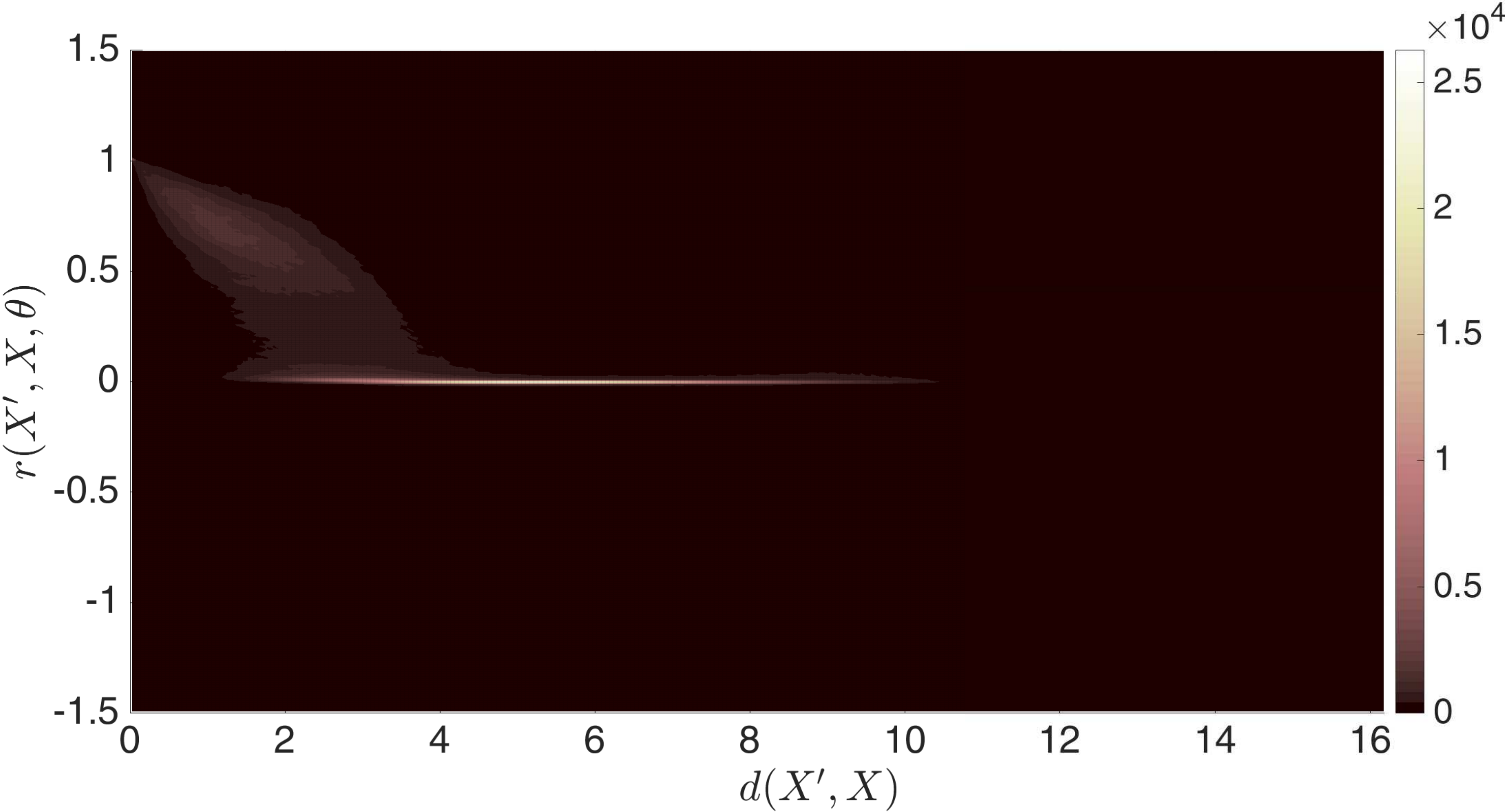}}
		
		&
		
		\subfloat[\label{fig:GaussianOverfittedR_SI-b}]{\includegraphics[height=\height\textheight,width=\width\textwidth]{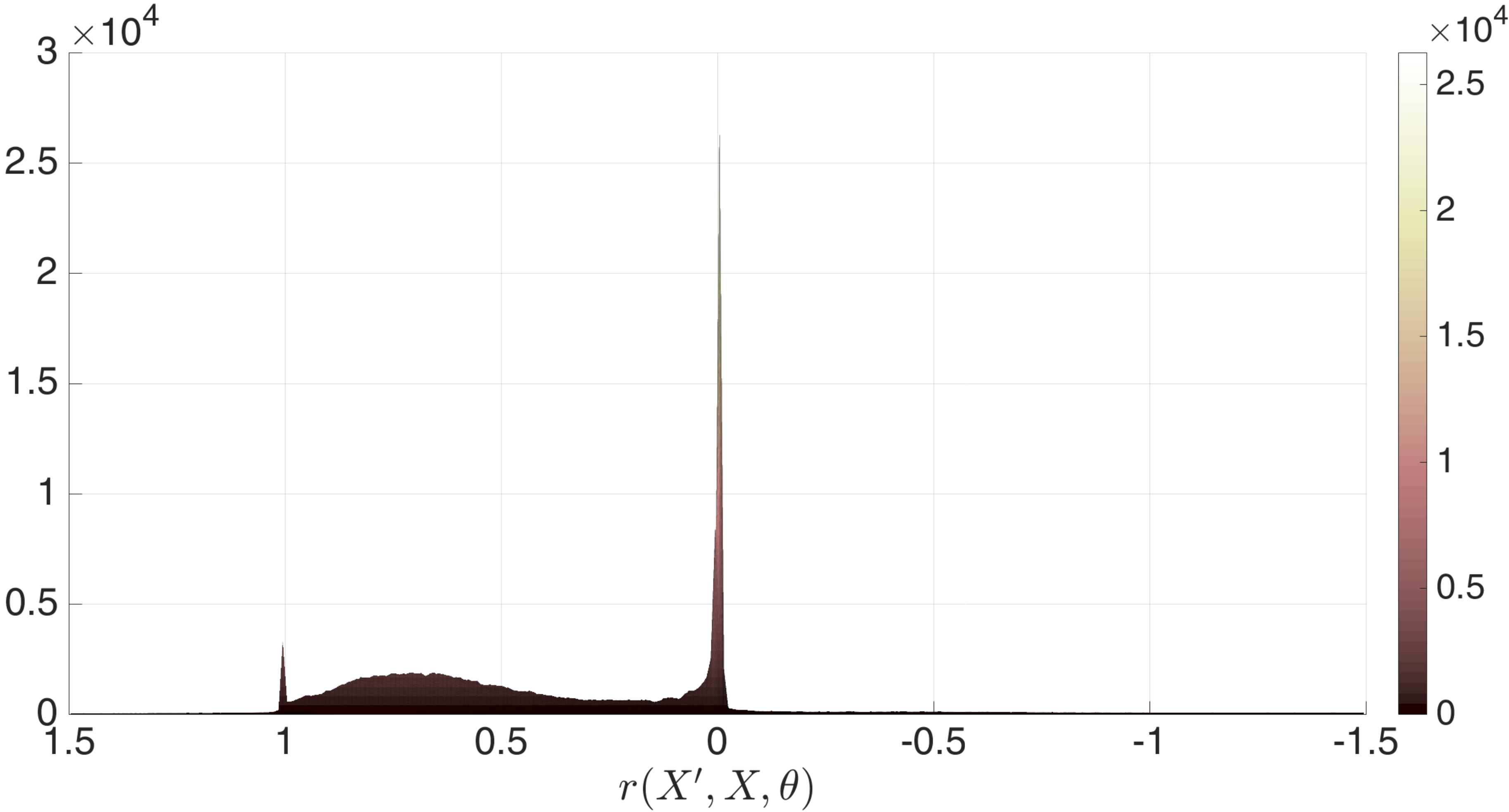}}
		
		\\
		
		\subfloat[\label{fig:GaussianOverfittedR_SI-c}]{\includegraphics[height=\height\textheight,width=\width\textwidth]{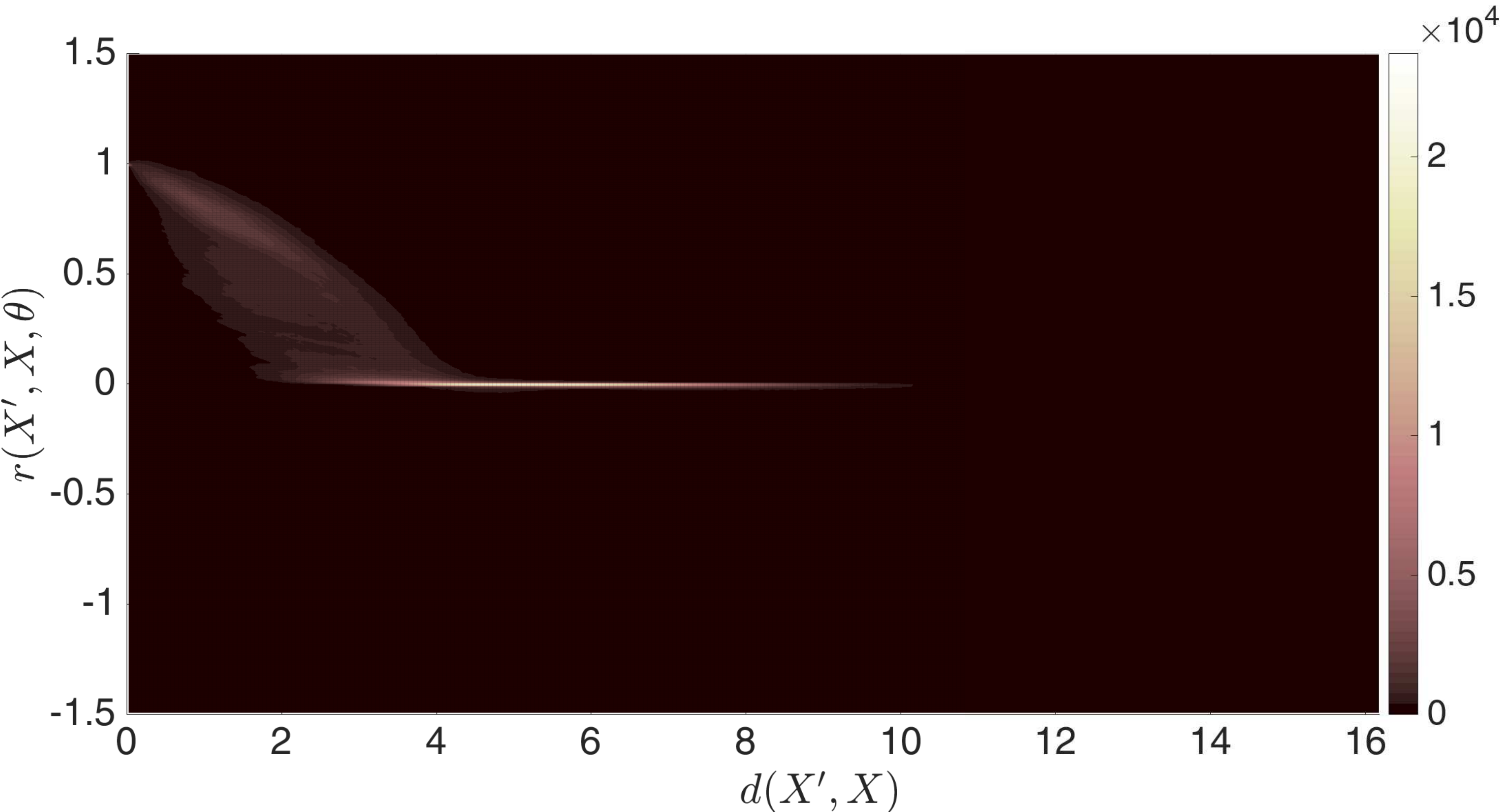}}
		
		&
		
		\subfloat[\label{fig:GaussianOverfittedR_SI-d}]{\includegraphics[height=\height\textheight,width=\width\textwidth]{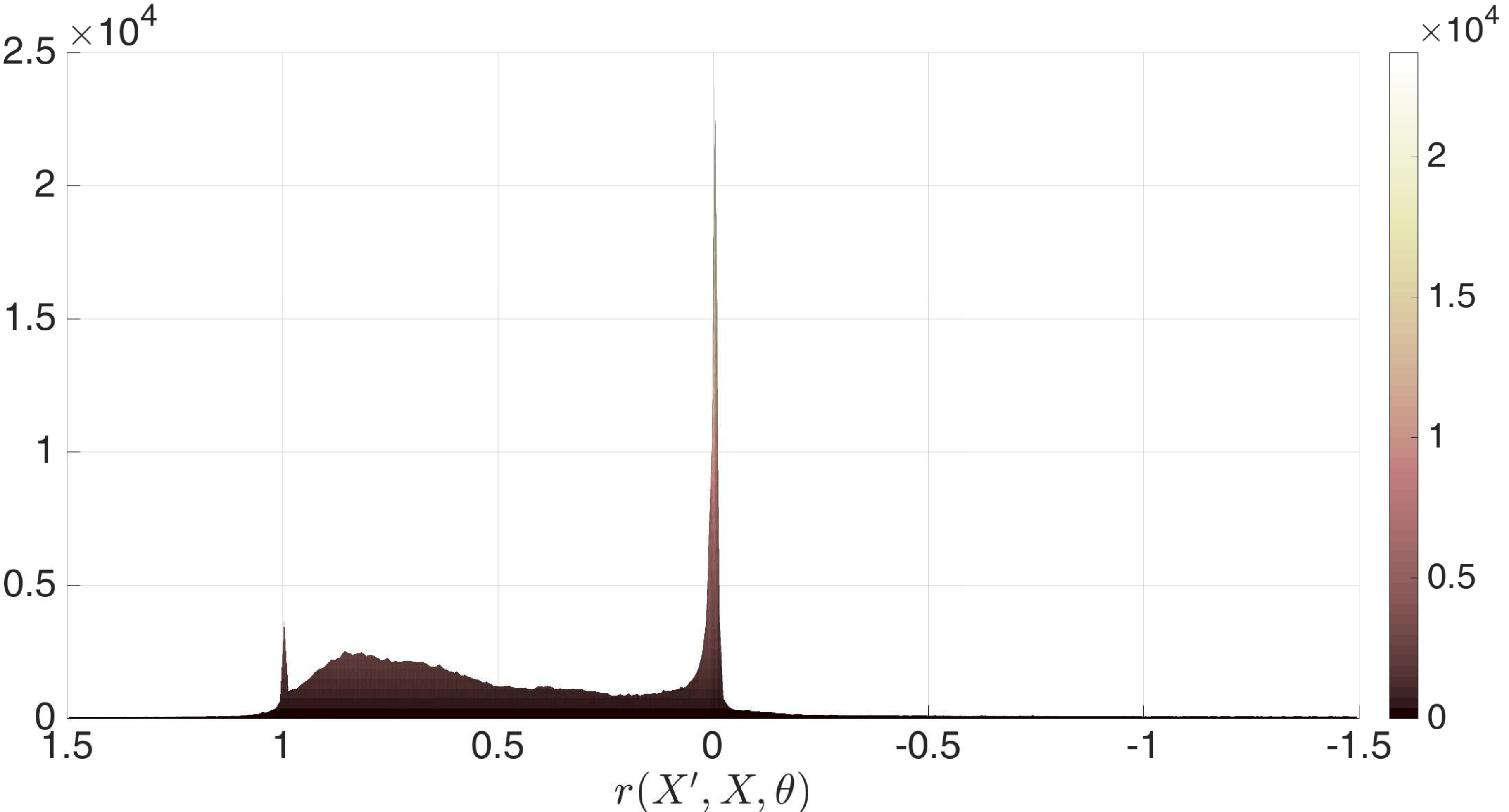}}
		
		\\

		\subfloat[\label{fig:GaussianOverfittedR_SI-e}]{\includegraphics[height=\height\textheight,width=\width\textwidth]{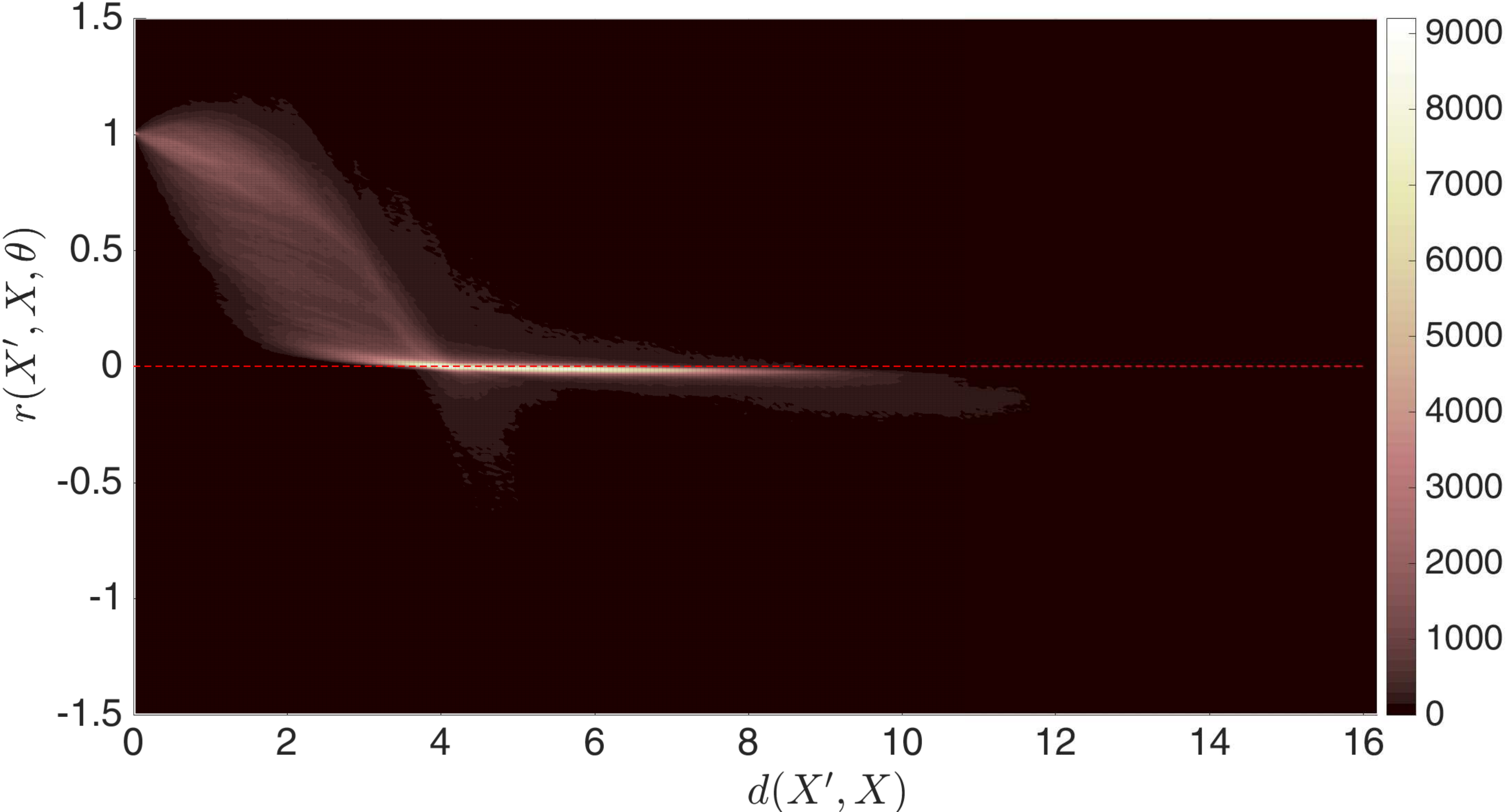}}
		
		&
		
		\subfloat[\label{fig:GaussianOverfittedR_SI-f}]{\includegraphics[height=\height\textheight,width=\width\textwidth]{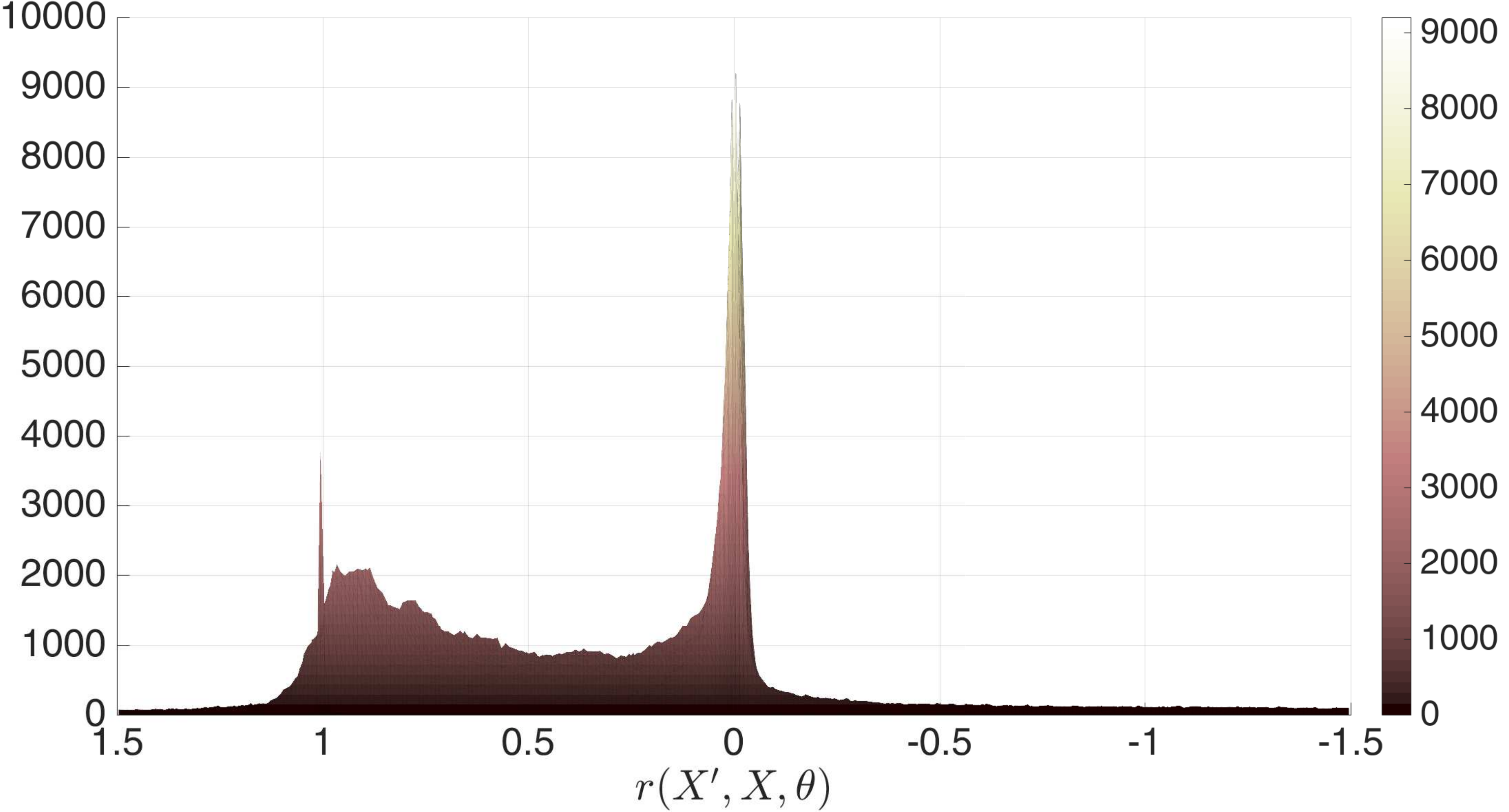}}
		
		\\
		
		\subfloat[\label{fig:GaussianOverfittedR_SI-g}]{\includegraphics[height=\height\textheight,width=\width\textwidth]{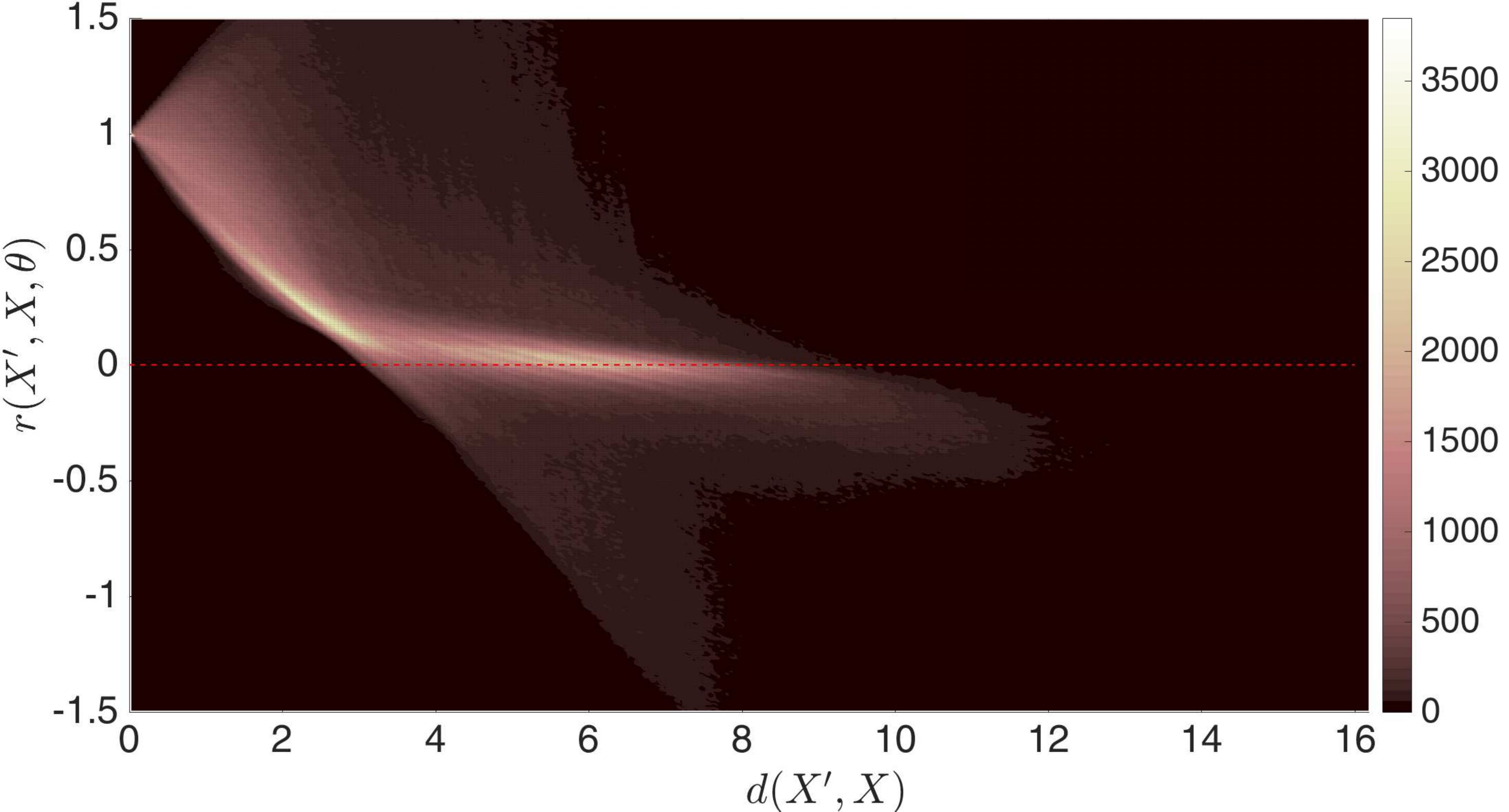}}
		
		&
		
		\subfloat[\label{fig:GaussianOverfittedR_SI-h}]{\includegraphics[height=\height\textheight,width=\width\textwidth]{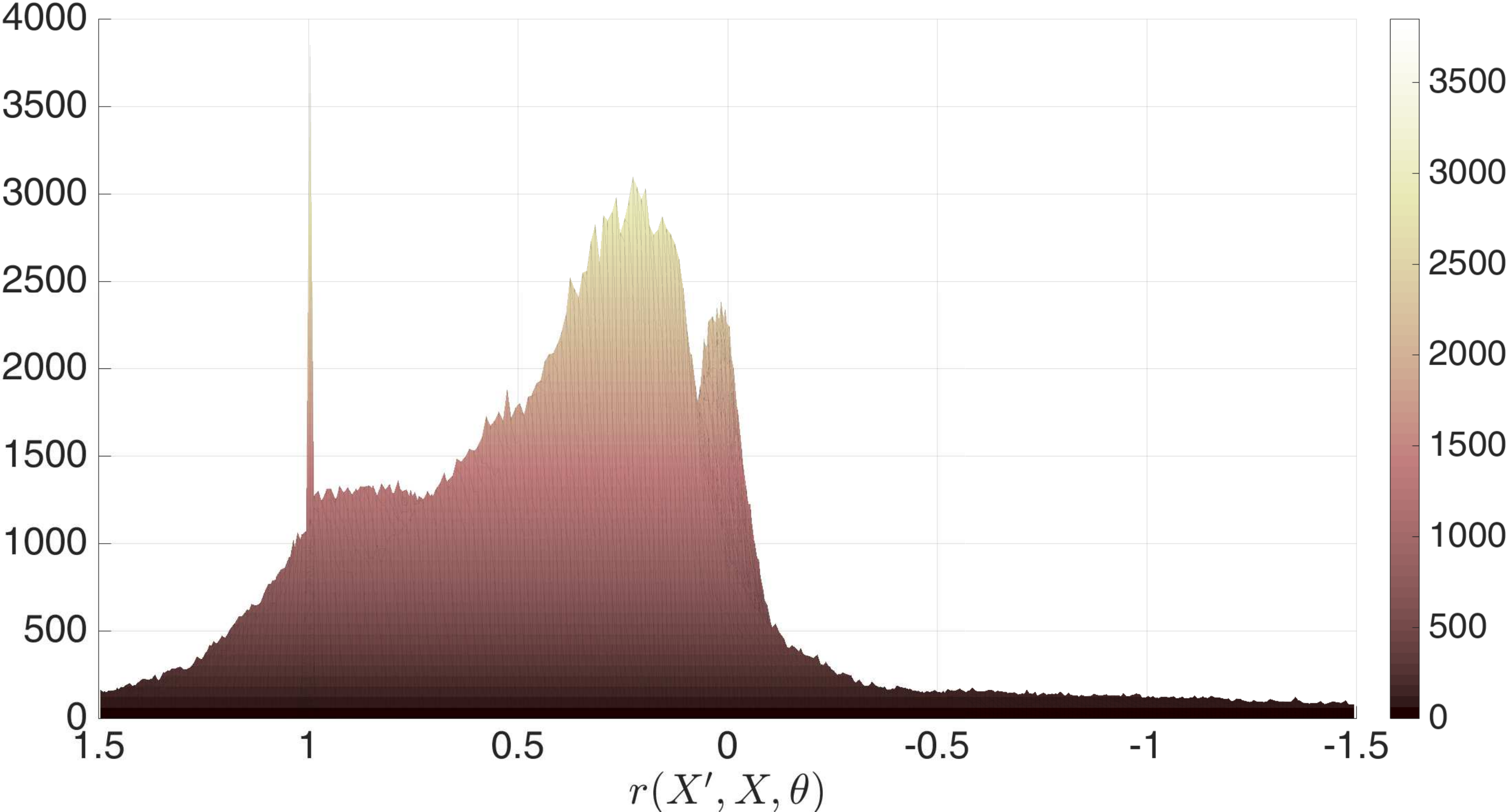}}

	\end{tabular}
	
	\protect
	\caption[Illustration of NN flexibility and the corresponding bandwidth of $g_{\theta}(X, X')$, for each model in Figure \ref{fig:GaussianOverfittedR}.]{Illustration of NN flexibility and the corresponding bandwidth of $g_{\theta}(X, X')$, for each model in Figure \ref{fig:GaussianOverfittedR}. We calculate a \emph{relative} model kernel $r_{\theta}(X_i, X_j)$ and a Euclidean distance $d(X_i, X_j)$ for $9 \cdot 10^6$ point pairs and depict a histogram of obtained $\{ r_{\theta}(X_i, X_j) \}$ and $\{ d(X_i, X_j) \}$ in left column. Likewise, a side view of this histogram is depicted in right column. Number of layers within NN is (a)-(b) $5$, (c)-(d) $4$, (e)-(f) $3$ and (g)-(h) $2$. See more details in the main text.
	}
	\label{fig:GaussianOverfittedR_SI}
\end{figure}

As was described in \citep{Kopitkov18arxiv}, the accuracy of the estimated density can be very low for a small dataset setting, since the converged surface can be flat with several spikes at locations of available training data points. This is due the fact that apparently we estimate the empirical density of data which in case of sparse datasets can be represented as a flat surface with several peaks. If the used model $f_{\theta}(X)$ is overly flexible and is not properly regularized, it can be indeed pushed to such spiky form, as was proved by Theorem \ref{thrm:PSO_peak_convergence} and as we empirically demonstrate below.

According to Section \ref{sec:ExprrKernelBnd}, the flexibility of the surface $f_{\theta}$ can be expressed via properties of the model kernel $g_{\theta}$, such as its bandwidth. The kernel acts as a connector of various input space areas, creating the side influence/force between these areas; it is balancing the overall physical force at each input point, with its equilibrium described by the convoluted PSO balance state in Eq.~(\ref{eq:ConvPSOBalStatePopul}).
Further, when the bandwidth of the model kernel is too narrow w.r.t. distance between training points, the \emph{influence} area of any training point $X$ will be some small neighborhood around $X$. Any input point $X'$ outside of all \emph{influence} areas is basically not optimized during the learning - $f_{\theta}(X')$ will mostly not change during such optimization.
Furthermore, the size of available dataset also has its impact, since with more data the distance between training samples decreases (on average) and the volume of overall \emph{influence} area increases.

In Figure \ref{fig:GaussianOverfitted}
the \overfit nature of NN surface is illustrated in an experiment where 2D Gaussian distribution is inferred. When the same network is used and the number of samples is decreasing, the outcome is the spiky surface at the end of the optimization.

Onwards, in Figure \ref{fig:GaussianOverfittedR} we can see the experiment where a small dataset of a size 10000 is used for the same pdf inference, and where the number of used layers is decreased. As observed, with less layers the spiky nature of the surface is decreasing due to the reduced NN flexibility/capacity. Similar behavior is also observed in KDE method in Figure \ref{fig:GaussianOverfittedR_KDE} where the bandwidth of Gaussian kernel is increasing; we can see that the surface estimated via KDE becomes more and more flexible for a smaller kernel \bnd, similarly to what we observed in Figure \ref{fig:GaussianOverfittedR}.
Thus, both KDE and PSO exhibit a similar flexibility behavior when the bandwidth of former is increased and when the layers depth of latter is reduced. Moreover, the reduction of layers in case of PSO produces a similar increase of $g_{\theta}(X, X')$'s bandwidth as we further show.

Particularly, in Figure \ref{fig:GaussianOverfittedR_SI} we present the bandwidth histogram of $g_{\theta}(X, X')$ for each trained model in Figure \ref{fig:GaussianOverfittedR}. We sample $\{ X_i \}_{i = 1}^{3000}$ testing points from 2D Gaussian and calculate \emph{relative} side-influence $r_{\theta}(X_i, X_j)$ defined in Eq.~(\ref{eq:RelKernel}) for each pair of points. Further, for each pair we also compute the Euclidean distance $d(X_i, X_j)$.

For $9 \cdot 10^6$ pairs of a \emph{relative} side-influence $r_{\theta}(X_i, X_j)$ and a Euclidean distance $d(X_i, X_j)$ we construct a histogram in Figure \ref{fig:GaussianOverfittedR_SI}. As observed, the \emph{relative} side-influence is reduced with $d(X_i, X_j)$ - faraway points affect each other on much lower level.
Further, we can see in left column of Figure \ref{fig:GaussianOverfittedR_SI} a sleeve right from a vertical line $d(X', X) = 0$ that implies an existence of overall local-support structure of $r_{\theta}(X_i, X_j)$, and a presence of some implicit kernel \bnd.
Likewise, we can also see a clear trend between $r_{\theta}(X_i, X_j)$ and the number of NN layers. For shallow networks (see Figures \ref{fig:GaussianOverfittedR_SI-g}-\ref{fig:GaussianOverfittedR_SI-h}) the \emph{relative} side-influence is strong even for faraway regions.  In contrast, in deeper networks (see Figures \ref{fig:GaussianOverfittedR_SI-a}-\ref{fig:GaussianOverfittedR_SI-b}) $r_{\theta}(X_i, X_j)$ is centered around zero for a pair of faraway points, with some close by points having non-zero side-influence. Hence, we can see the obvious relation between the network depth, the bandwidth of model kernel and the model flexibility, which supports conclusions made in Section \ref{sec:ExprrKernelBnd}.

Furthermore, since the impact of a kernel bandwidth is similar for PSO and KDE, we can compare our conclusions with well-known properties of KDE methods.
For instance, optimality of the KDE bandwidth was already investigated in many works \citep{Duong05sjs, Heidenreich13asta, OBrien16csda, Silverman18} and is known to strongly depend on the number of training data samples. Hence, this implies that the optimal/"desired" bandwidth of $g_{\theta}(X, X')$ also depends on the size of training dataset, which agrees with statements of Section \ref{sec:ExprrKernelBnd}.

\begin{remark}
In Figure \ref{fig:GaussianOverfittedR_SI-a} we can see that the side-influence between most points is zero, implying that gradients $\nabla_{\theta} f_{\theta}(X_i)$ at different points tend to be orthogonal for a highly flexible model. This gradient orthogonality does not present at NN initialization, and there is some mechanism that enforces it during NN training as shown in Appendix \ref{sec:UncorRes}. The nature of this mechanism is currently unknown.
\end{remark}

\subsection{Possible Solutions}
\label{sec:DeepLogPDFOFISol}

How big the training dataset should be and how to control NN flexibility to achieve the best performance are still open research questions. Some insights can be taken from KDE domain, yet we shall leave such analysis for future investigation.
Further, the over-flexibility issue yields a significant challenge for application of PSO density estimators on small datasets, as well as also for other PSO instances. However, there are relatively simple regularization methods to reduce such \overfit and to eliminate peaks from the converged surface $f_{\theta}$.

The first method is to introduce a weight regularization term into the loss, such as L2 norm of $\theta$. This will enforce the weight vector to be inside a ball in the parameter space, thus limiting the flexibility of the NN. Yet, it is unclear what is the exact impact of any specific weight regularization method on the final surface and on properties of $g_{\theta}(X, X')$, due to highly non-linear nature of modern deep models. Typically, this regularization technique is used in try-and-fail regime, where different norms of $\theta$ and regularization coefficients are applied till a good performance is achieved.

Another arguably more consistent method is data augmentation, which is highly popular in Machine Learning. In context of PSO and its gradient in Eq.~(\ref{eq:GeneralPSOLossFrml}), we can consider to introduce an additive noise into each sample $X^{\usuff}_{i}$ as:
\begin{equation}
\bar{X}^{\usuff}_{i} = X^{\usuff}_{i} + \upsilon
\label{eq:DataAugm}
\end{equation}
where $X^{\usuff}_{i}$ is the original sample from the data density $\probi{\usuff}{X}$ and $\upsilon$ is a random noise sampled from some density $\probi{\upsilon}{X}$ (e.g. Gaussian distribution). When using $\bar{X}^{\usuff}_{i}$ instead of $X^{\usuff}_{i}$, we will actually estimate the density of the random variable $\bar{X}^{\usuff}_{i}$ which is the convolution between two densities. Thus, for PSO-LDE the converged surface $f_{\theta}(X)$ will be:
\begin{equation}
f_{\theta^*}(X) = \log
\left(
[\probs{\usuff} * \probs{\upsilon}](X)
\right),
\label{eq:DataAugmConv}
\end{equation}
where $*$ defines the convolution operator.

Considering $\probi{\upsilon}{X}$ to be Gaussian and recalling that it is a solution of the \emph{heat equation} \citep{Beck92book}, the above expression elucidates the effect of such data augmentation as a simple \emph{diffusion} of the surface that would be estimated for the original $X^{\usuff}_{i}$. That is, assuming that  $f_{\theta}(X)$ would get a spiky form when approximating $\log \probi{\usuff}{X}$, the updated target density function (and thus also its approximation $f_{\theta}$) undergoes \emph{diffusion} in order to yield a \emph{smoother} final surface. In case of Gaussian noise, the smoothness depends on its covariance matrix. Yet, the other distributions can be used to perform  appropriate convolution and to achieve different \emph{diffusion} effects. We employ the above technique to improve an inference accuracy under a small training dataset setting in Section \ref{sec:ColumnsEstSmallDT}.

\begin{remark}
Additionally, in context of image processing, a typical data augmentation involves image flipping, resizing and introducing various photographic effects \citep{Wong16arxiv,Perez17arxiv}. Such methods produce new samples $\bar{X}^{\usuff}_{i}$ that are still assumed to have the original density $\probi{\usuff}{X}$, which can be justified by our prior knowledge about the space of all possible images. Given this knowledge is correct, the final estimation is still of $\probi{\usuff}{X}$ and not of its convolution (or any other operator) with the noise.
\end{remark}

\section{Experimental Evaluation}
\label{sec:Exper}

Below we report several experimental scenarios that demonstrate the efficiency of the proposed PSO algorithm family.
Concretely, in Section \ref{sec:ColumnsEst} we apply PSO to infer a pdf of 20D \emph{Columns} distribution, where in sub-section \ref{sec:ColumnsEstME} we compare between various PSO instances; in \ref{sec:ColumnsEstBslns} we experiment with state-of-the-art baselines and compare their accuracy with PSO-LDE; in \ref{sec:ColumnsEstNNAME} we evaluate the pdf inference performance for different NN architectures; 
in \ref{sec:BatchSizeI} we investigate the impact of a batch size on PSO performance;
and in \ref{sec:ColumnsEstSmallDT} we show how different sizes of training dataset affect  inference accuracy and explore different techniques to overcome difficulties of a small dataset setting. Furthermore, in Section \ref{sec:TrColumnsEst} we perform pdf inference over a more challenging distribution \emph{Transformed Columns} and in Section \ref{sec:ImageEst} we apply our pdf estimation approach over 3D densities generated from pixel landscape of RGB images. 
Additionally, in Appendix \ref{sec:DiffApr} we show that the first-order Taylor approximation of the surface \df in Eq.~(\ref{eq:PSODffrntl}) is actually very accurate in practice, and in Appendix \ref{sec:UncorRes} we empirically explore dynamics of $g_{\theta}(X, X')$ and of its bandwidth during a learning process.

Importantly, our main focus in this paper is to introduce a novel paradigm for inferring various statistics of an arbitrary data in a highly accurate and consistent manner. To this end and concretely in context of density estimation, we are required to demonstrate \textbf{quantitatively} that the converged approximation $\bar{\PP}_{\theta}(X)$ of the pdf function $\PP(X)$ is indeed very close to its target. Therefore, we avoid experiments on  real datasets (e.g. MNIST, \citet{LeCun98ieee}) since they lack  information about the true pdf values of the samples. Instead, we generate datasets for our experiments from analytically known pdf functions, which allows us to evaluate the \emph{ground truth} error between $\bar{\PP}_{\theta}(X)$ and $\PP(X)$. However, all selected pdf functions are highly multi-modal and therefore are very challenging to infer.

Likewise, in this work we purposely consider vector datasets instead of image data, to decouple our main approach from complexities coming with images and CNN models. Our main goal is to solve general unsupervised learning, and we do not want it to be biased towards spatial data. Moreover, vector data is mostly neglected in modern research and our method together with the new BD architecture addresses this gap.

\subsection{Learning Setup}
\label{sec:ColumnsEstLS}

All the pdf inference experiments were done using Adam optimizer \citep{Kingma14arxiv}, since Adam showed better convergence rate compared to stochastic GD. Note that replacing GD with Adam does not change the target function approximated by PSO estimation, although it changes the implicit model kernel. That is, the variational PSO \bp in Eq.~(\ref{eq:BalPoint}) stays the same while the convoluted equilibrium described in Section \ref{sec:ConvPSOBalState} will change according to the model kernel associated with Adam update rule.

The used Adam hyper-parameters are $\beta_1 = 0.75$, $\beta_2 = 0.999$ and $\epsilon = 10^{-10}$. Each experiment optimization is performed for 300000 iterations, which typically takes about one hour to run on a GeForce GTX 1080 Ti GPU card. The batch size is $N^{\usuff}= N^{\dsuff} = 1000$. During each iteration next batch of \up points $\{X^{\usuff}_{i}\}_{i = 1}^{1000}$ is retrieved from the training dataset of size $N_{DT}$, and next batch of \down points $\{X^{\dsuff}_{i}\}_{i = 1}^{1000}$ is sampled from \down density $\probs{\dsuff}$. For \emph{Columns} distribution in Section \ref{sec:ColumnsEst} we use a Uniform distribution as $\probs{\dsuff}$. Next, the optimizer updates the weights vector $\theta$ according to the loss gradient in Eq.~(\ref{eq:GeneralPSOLossFrml}), where the \emph{magnitude} functions are specified by a particular PSO instance. The applied learning rate is 0.0035. We keep it constant for first 40000 iterations and then exponentially decay it down to a minimum learning rate of $3\cdot10^{-9}$. Further, in all our models we use Leaky-Relu as a non-linearity activation function. Additionally, weights are initialized via popular Xavier initialization \citep{Glorot10aistats}. Each model is learned 5 times; we report its mean accuracy and the standard deviation. Further, PSO implementation based on \citet{Tensorflow_url} framework can be accessed via open source library \url{\psorepo}.

To evaluate performance and consistency of each learned model, we calculate three different errors over testing dataset $\{X^{\usuff}_{i}\}_{i = 1}^{N}$, where each point was sampled from $\probs{\usuff}$ and $N$ is $10^5$. First one is pdf squared error $PSQR = \frac{1}{N} \sum_{i = 1}^{N} \left[ \probi{\usuff}{X^{\usuff}_{i}} - \bar{\PP}_{\theta}(X^{\usuff}_{i}) \right]^2$, with $\bar{\PP}_{\theta}(\cdot)$ being the pdf estimator produced by a specific model after an optimization convergence. Further, since we deal with high-dimensional data, $PSQR$ involves operations with very small numbers. To prevent inaccuracies caused by the computer precision limit, the second used error is log-pdf squared error $LSQR = \frac{1}{N} \sum_{i = 1}^{N} \left[ \log \probi{\usuff}{X^{\usuff}_{i}} - \log \bar{\PP}_{\theta}(X^{\usuff}_{i}) \right]^2$. Since in this paper we target $\log \prob{\cdot}$ in the first place, the $LSQR$ error expresses a distance between the data log-pdf and the learned NN surface $f_{\theta}(\cdot)$. Moreover, $LSQR$ is related to statistical divergence between $\probs{\usuff}$ and $\bar{\PP}_{\theta}$ (see more details in Appendix \ref{sec:LSQRDivSec}).

Further, the above two errors require to know ground truth $\probs{\usuff}$ for their evaluation. Yet, in real applications such ground truth is not available. As an alternative, we can approximate \psofunc in Eq.~(\ref{eq:GeneralPSOLossFrmlPr_Limit_f}) over testing dataset. As explained in Section \ref{sec:PSO_div}, this loss is equal to \psodiv up to an additive constant, and thus can be used to measure a discrepancy between the PSO-optimized model and the target function. However, for most of the below applied PSO instances $L_{PSO}(f)$ is not analytically known and hence can not be computed. To overcome this problem and to measure model performance in real applications, we propose to use the loss of IS method from Table \ref{tbl:PSOInstances1}, $IS = - \frac{1}{N} \sum_{i = 1}^{N} f_{\theta}(X^{\usuff}_{i})
+ 
\frac{1}{N} \sum_{i = 1}^{N}
\frac{\exp [f_{\theta}(X^{\dsuff}_{i})]}{\probi{\dsuff}{X^{\dsuff}_{i}}}$, where $\{X^{\dsuff}_{i}\}_{i = 1}^{N}$ are i.i.d. samples from $\probs{\dsuff}$. As we will see, while $IS$ is less accurate than the ground truth errors, it still is a reliable indicator for choosing the best member from a set of learned models. Additionally, during the optimization $IS$ is correlated with the real error and if required can be used to monitor current convergence and to allow an early stop evaluation.

\begin{remark}
	Note that unlike typical density estimator evaluation, herein we do not use performance metrics such as perplexity \citep{Jelinek77} and various kinds of $f$-divergences, or negative-likelihood scores. This is because the PSO-learned models are only approximately normalized, while the aforementioned metrics typically require strictly normalized models for their metric consistency. Still, both $PSQR$ and $LSQR$ are mean squared errors between target and approximation functions, and are similar to other performance metrics that are widely applied in regression problems of Machine Learning domain. 
\end{remark}

\subsection{PDF Estimation via PSO - \emph{Columns} Distribution}
\label{sec:ColumnsEst}

\begin{figure}
	\centering
	
	\begin{tabular}{cccc}
		
		\subfloat[\label{fig:ColsRes1-a}]{\includegraphics[width=0.45\textwidth]{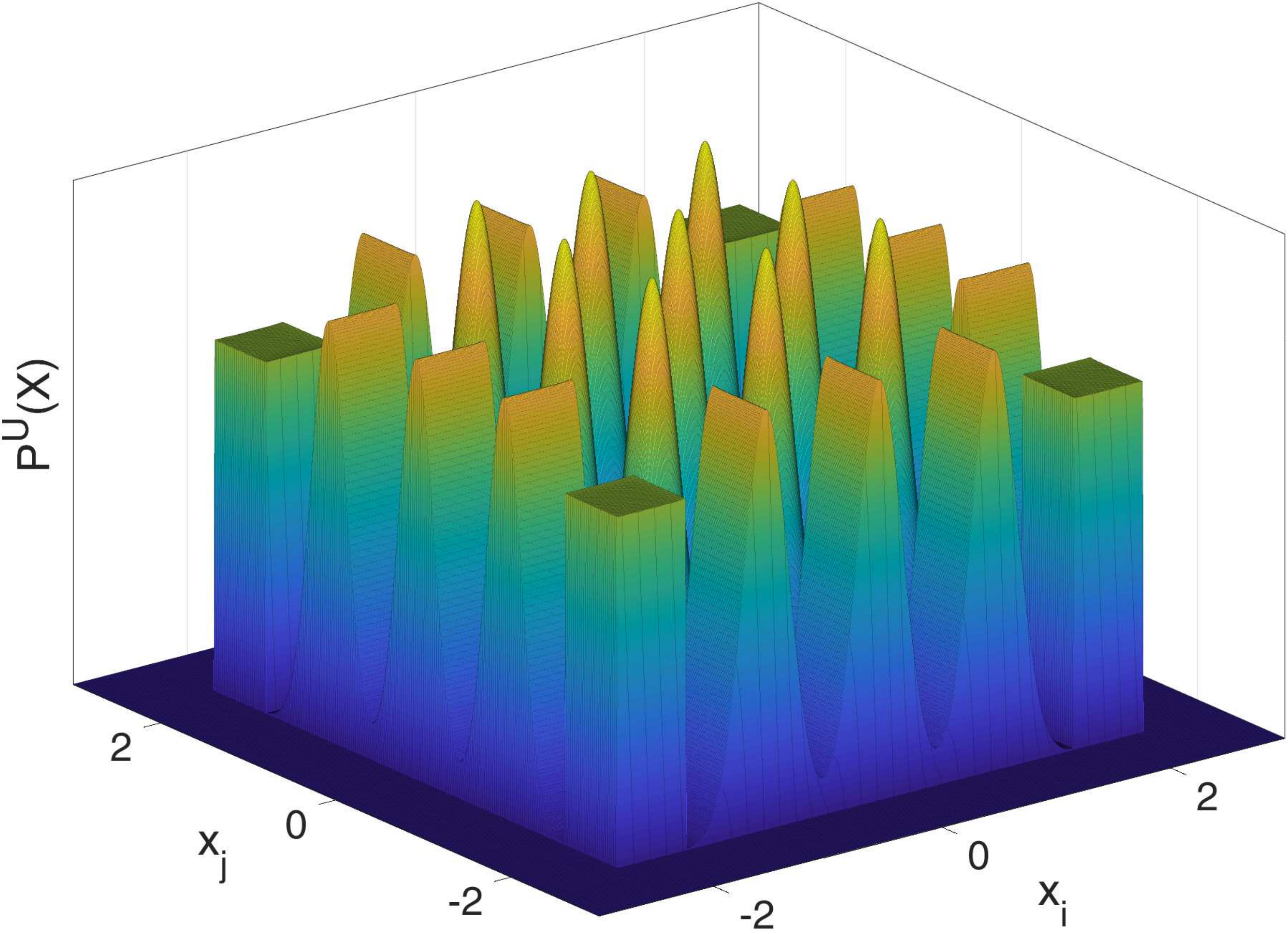}}
		&
		
		\subfloat[\label{fig:ColsRes1-b}]{\includegraphics[width=0.45\textwidth]{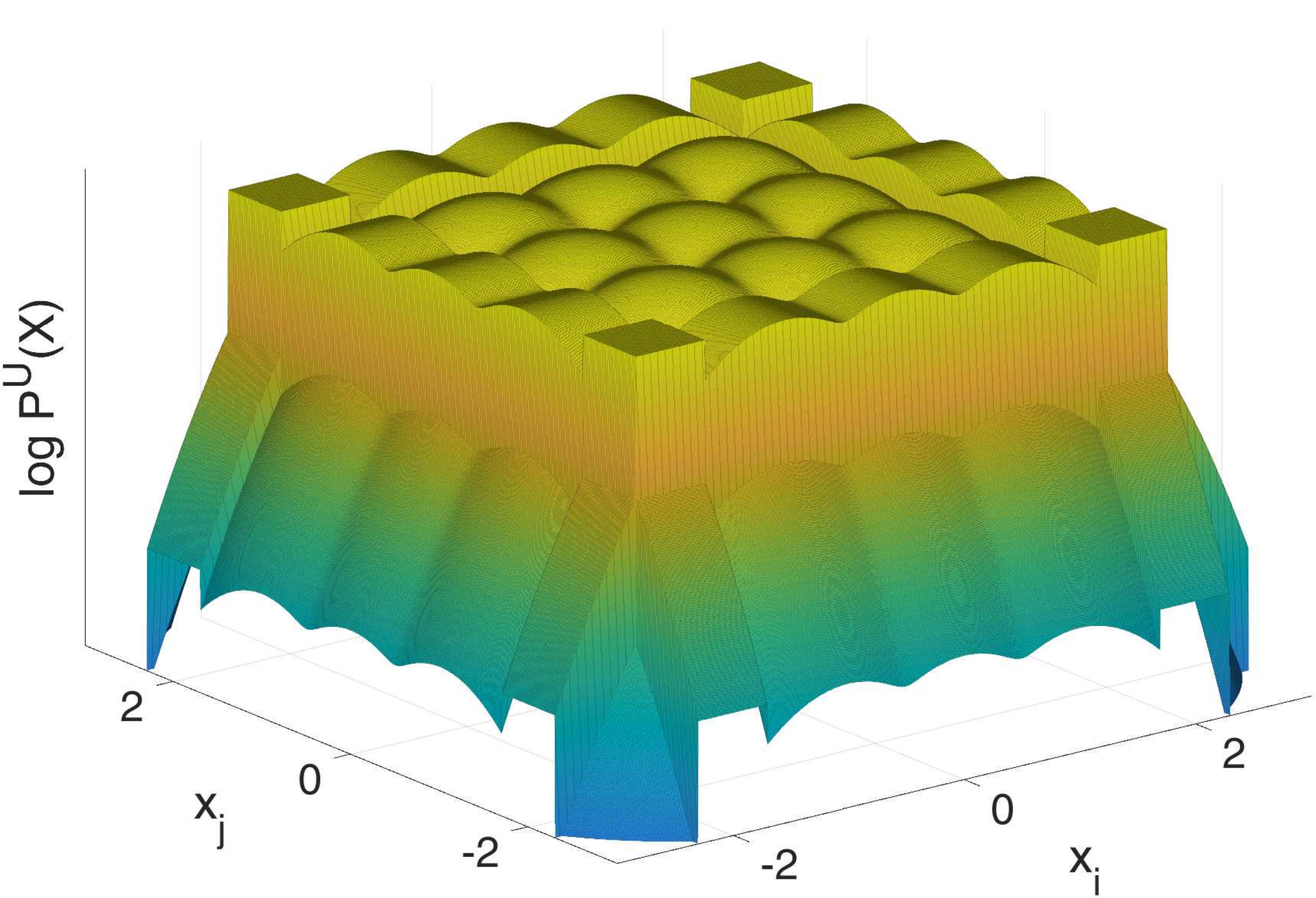}}

	\end{tabular}
	
	\protect
	\caption[Illustration of \emph{Columns} distribution.]{(a) Illustration of \emph{Columns} distribution. Every slice of its pdf function $\PP(x_i, x_j) = \probi{\usuff}{0, \ldots , 0, x_{i}, 0, \ldots , 0, x_{j}, 0, \ldots, 0}$ in Eq.~(\ref{eq:ColumnsDef}) contains 25 modes of different shape. Overall, this distribution has $5^{20}$ modes.
	(b) Logarithm of pdf slice in (a) that will be learned via NN surface.
	}
	\label{fig:ColsRes1}
\end{figure}

In this Section we will infer a 20D \emph{Columns} distribution from its sampled points, using various PSO instances and network architectures. The target pdf here is $\probi{\usuff}{X} = \probi{Clmns}{X}$ and is defined as:
\begin{equation}
\probi{Clmns}{x_1, \ldots , x_{20}}
=
\prod_{i = 1}^{20}
p(x_i)
,
\label{eq:ColumnsDef}
\end{equation}
where $p(\cdot)$ is a 1D mixture distribution with 5 components
$\{Uniform(-2.3, -1.7), \mathcal{N}(-1.0,$ $std = 0.2), \mathcal{N}(0.0, std = 0.2), \mathcal{N}(1.0, std = 0.2), Uniform(1.7, 2.3) \}$; each component has weight 0.2. This distribution has overall $5^{20} \approx 9.5 \cdot 10^{13}$ modes, making the structure of its entire pdf surface very challenging to learn. For the illustration see Figure \ref{fig:ColsRes1-a}.

First, we evaluate the proposed density estimation methods under the setting of infinite training dataset, with number of overall training points being $N_{DT} = 10^8$. Later, in Section \ref{sec:ColumnsEstSmallDT} we will investigate how a smaller dataset size affects the estimation accuracy, and propose various techniques to overcome issues of sparse data scenario.

\subsubsection{PSO Instances Evaluation}
\label{sec:ColumnsEstME}

\begin{figure}[tb]
	\centering
	
	\newcommand{\width}[0] {0.4}
	\newcommand{\height}[0] {0.17}
	
	\begin{tabular}{cc}

		\subfloat[\label{fig:ColsRes2-a}]{\includegraphics[height=\height\textheight,width=\width\textwidth]{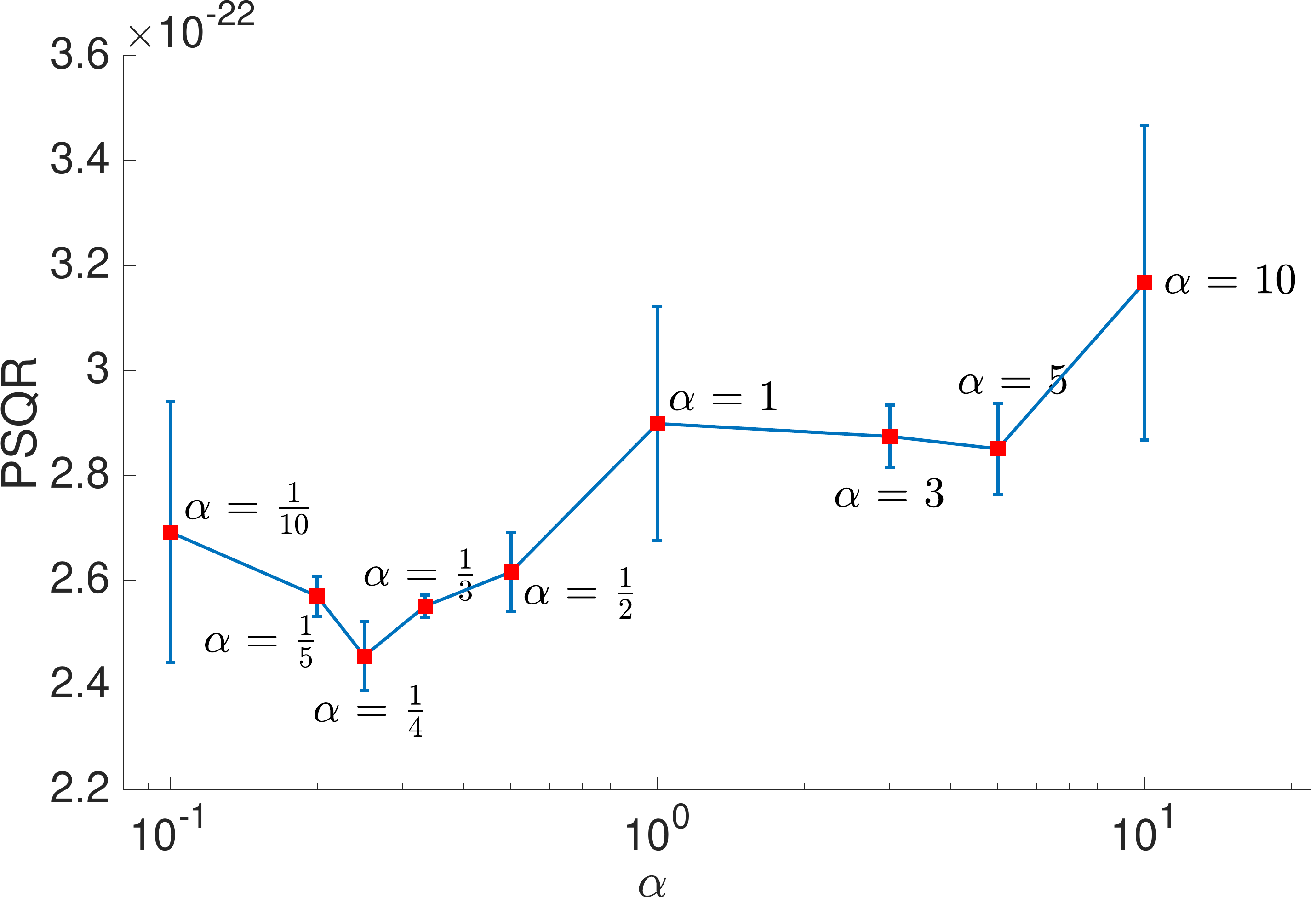}}
		&
		
		\subfloat[\label{fig:ColsRes2-b}]{\includegraphics[height=\height\textheight,width=\width\textwidth]{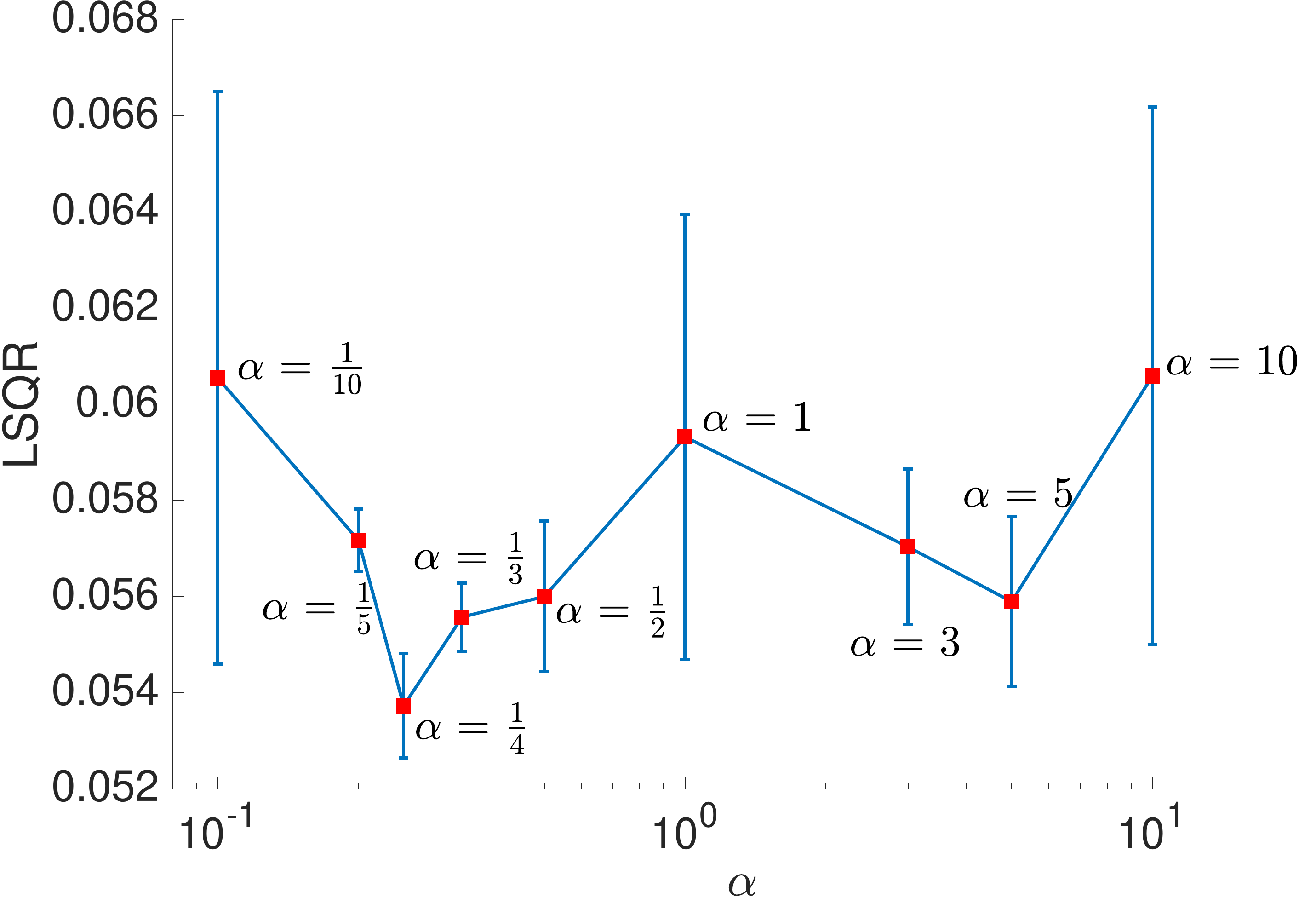}}
		\\
		\subfloat[\label{fig:ColsRes2-c}]{\includegraphics[height=\height\textheight,width=\width\textwidth]{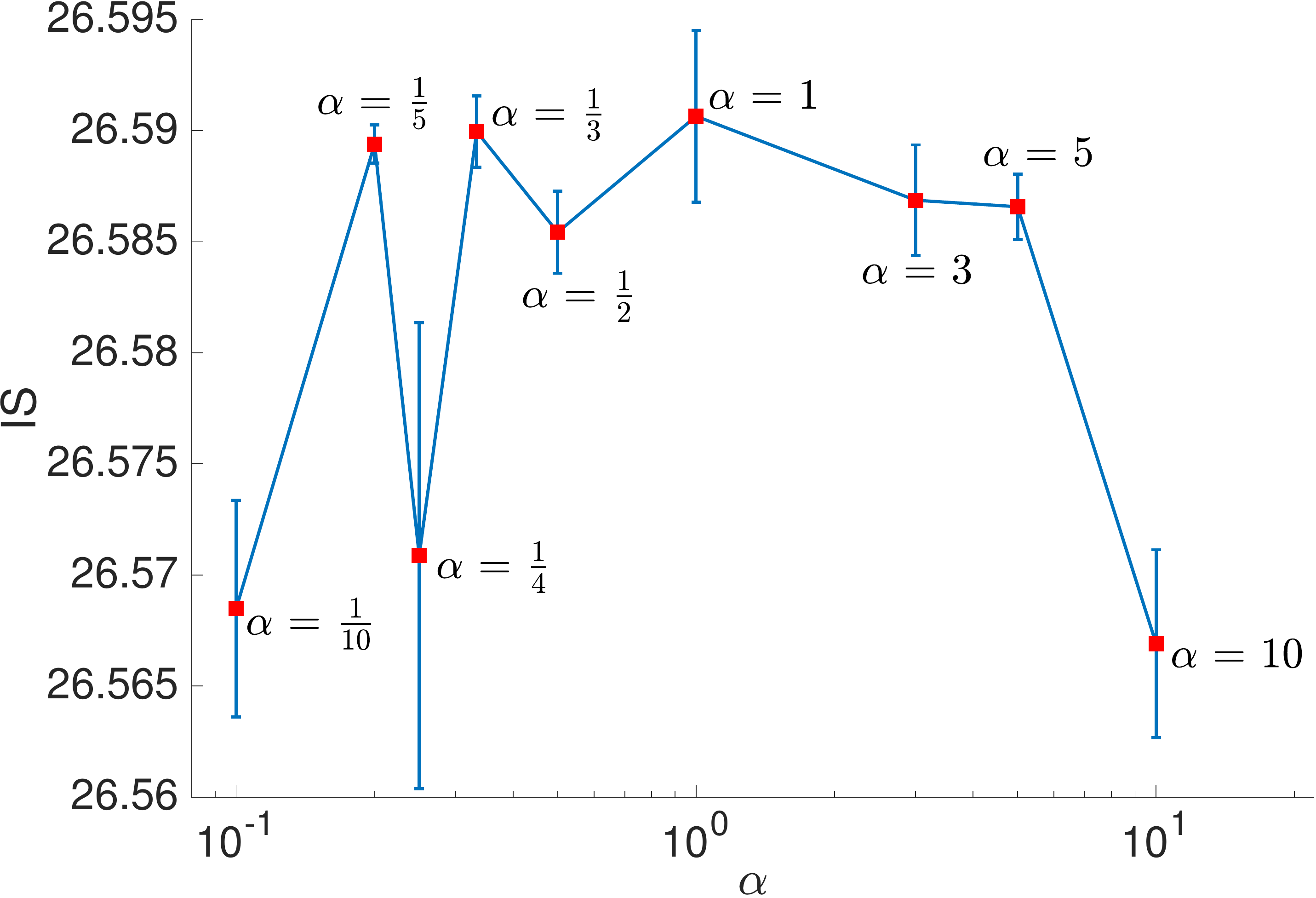}}
		&
		\subfloat[\label{fig:ColsRes2-d}]{\includegraphics[height=\height\textheight,width=\width\textwidth]{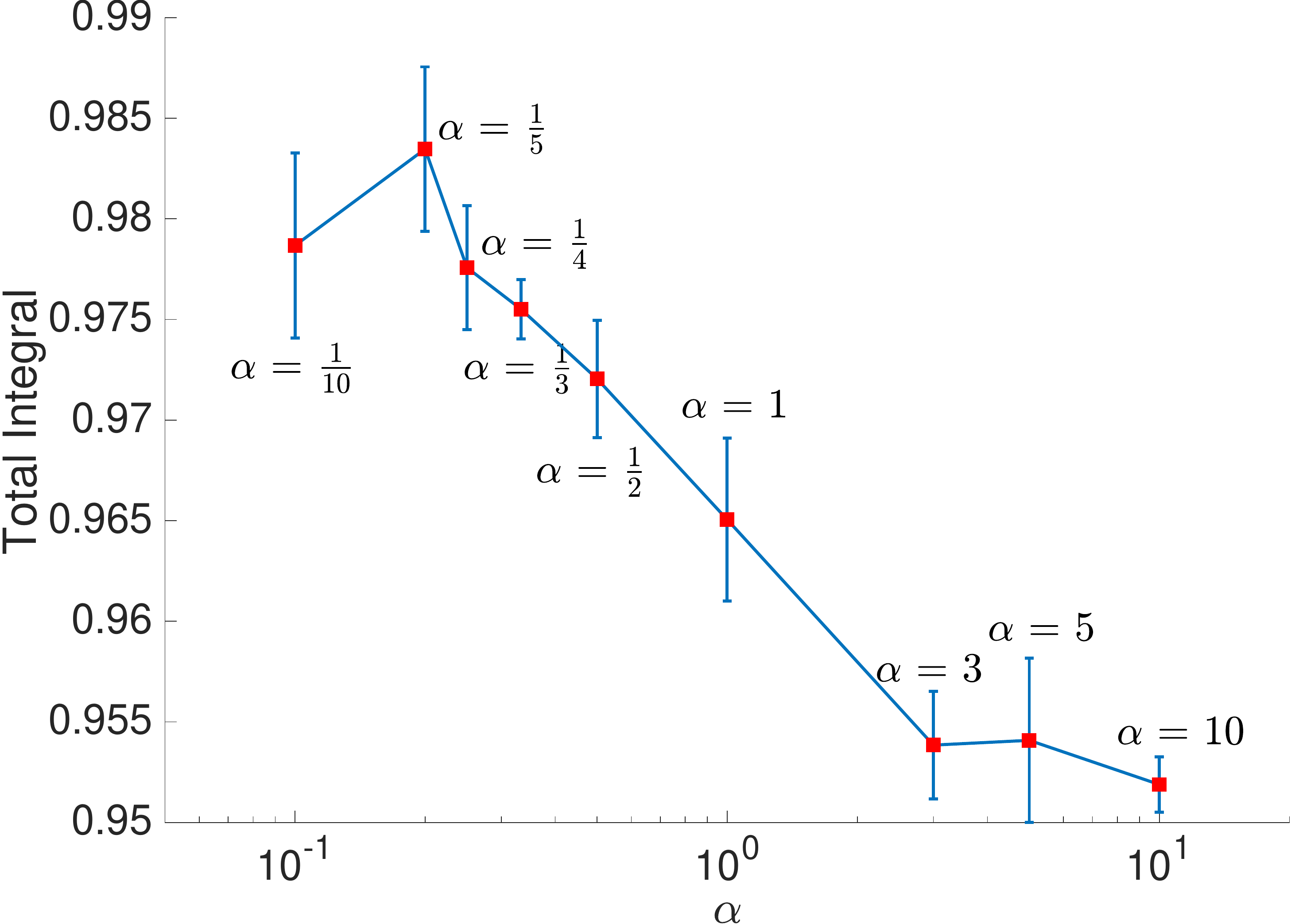}}

	\end{tabular}
	
	\protect
	\caption[Evaluation of PSO-LDE for estimation of \emph{Columns} distribution, for different values of a hyper-parameter $\alpha$.]{Evaluation of PSO-LDE for estimation of \emph{Columns} distribution, where NN architecture is block-diagonal with 6 layers, number of blocks $N_B = 50$ and block size $S_B = 64$ (see Section \ref{sec:BDLayers}). For different values of a hyper-parameter $\alpha$, (a) $PSQR$, (b) $LSQR$ and (c) $IS$  are reported, along with their empirical standard deviation. 
	(d) Estimators of the total integral $TI = \int \bar{\PP}_{\theta}(X) dX$ of learned models for each value of $\alpha$. For a specific learned model $\bar{\PP}_{\theta}(X)$ this integral is estimated through importance sampling as $\overline{TI} = \sum_{i = 1}^{N} \frac{\bar{\PP}_{\theta}(X^{\dsuff}_{i})}{\probi{\dsuff}{X^{\dsuff}_{i}}}$ over $N = 10^8$ samples from \down density $\probs{\dsuff}$. Note that such estimator is consistent, with $TI = \overline{TI}$ for $N \rightarrow \infty$.
	}
	\label{fig:ColsRes2}
\end{figure}

Here we perform pdf learning using different PSO instances, and compare their performance.
The applied NN architecture is block-diagonal from Section \ref{sec:BDLayers}, with 6 layers, number of blocks $N_B = 50$ and block size $S_B = 64$.

\paragraph{PSO-LDE and \boldmath$\alpha$}

First, we apply the PSO-LDE instances from Section \ref{sec:DeepLogPDF}, where we try various values for the hyper-parameter $\alpha$. In Figure \ref{fig:ColsRes2} we can see all three errors for different $\alpha$. All models produce highly accurate pdf estimation, with average $LSQR$ being around 0.057. That is, the learned NN surface $f_{\theta}(X)$ is highly close to the target $\log \probi{\usuff}{X}$. 
Further, we can see that some $\alpha$ values (e.g. $\alpha = \frac{1}{4}$) produce slightly better accuracy than others. This can be explained by smoother \emph{magnitude} dynamics with respect to logarithm difference $\bar{d}$ from Eq.~(\ref{eq:DDiffDefinition}), that small values of $\alpha$ yield (see also Section \ref{sec:DeepLogPDF}). Note that here $IS$ error is not very correlative with ground truth errors $PSQR$ and $LSQR$ since the accuracy of all models is very similar and $IS$ is not sensitive enough to capture the difference.

Further, we also estimate the total integral $TI = \int \bar{\PP}_{\theta}(X) dX$ for each learned model via importance sampling. In Figure \ref{fig:ColsRes2-d} we can see that the learned models are indeed very close to be \emph{normalized}, with the estimated total integral being on average 0.97 - very close to the proper value 1.
Note that in our experiments the model normalization was not enforced in any explicit way, and PSO-LDE achieved it via an implicit force equilibrium.

Furthermore, in Figure \ref{fig:ColsRes2-d} it is also shown that smaller values of $\alpha$ are more properly \emph{normalized}, which also correlates with the approximation error. Namely, in Figure \ref{fig:ColsRes2-a} same models with smaller $\alpha$ are shown to have a lower error. We argue that further approximation improvement (e.g. via better NN architecture) will also increase the \emph{normalization} quality of produced models.

\begin{figure}[tb]
	\centering
	
	\begin{tabular}{cccc}

		\subfloat[\label{fig:ColsRes3-a}]{\includegraphics[width=0.4\textwidth]{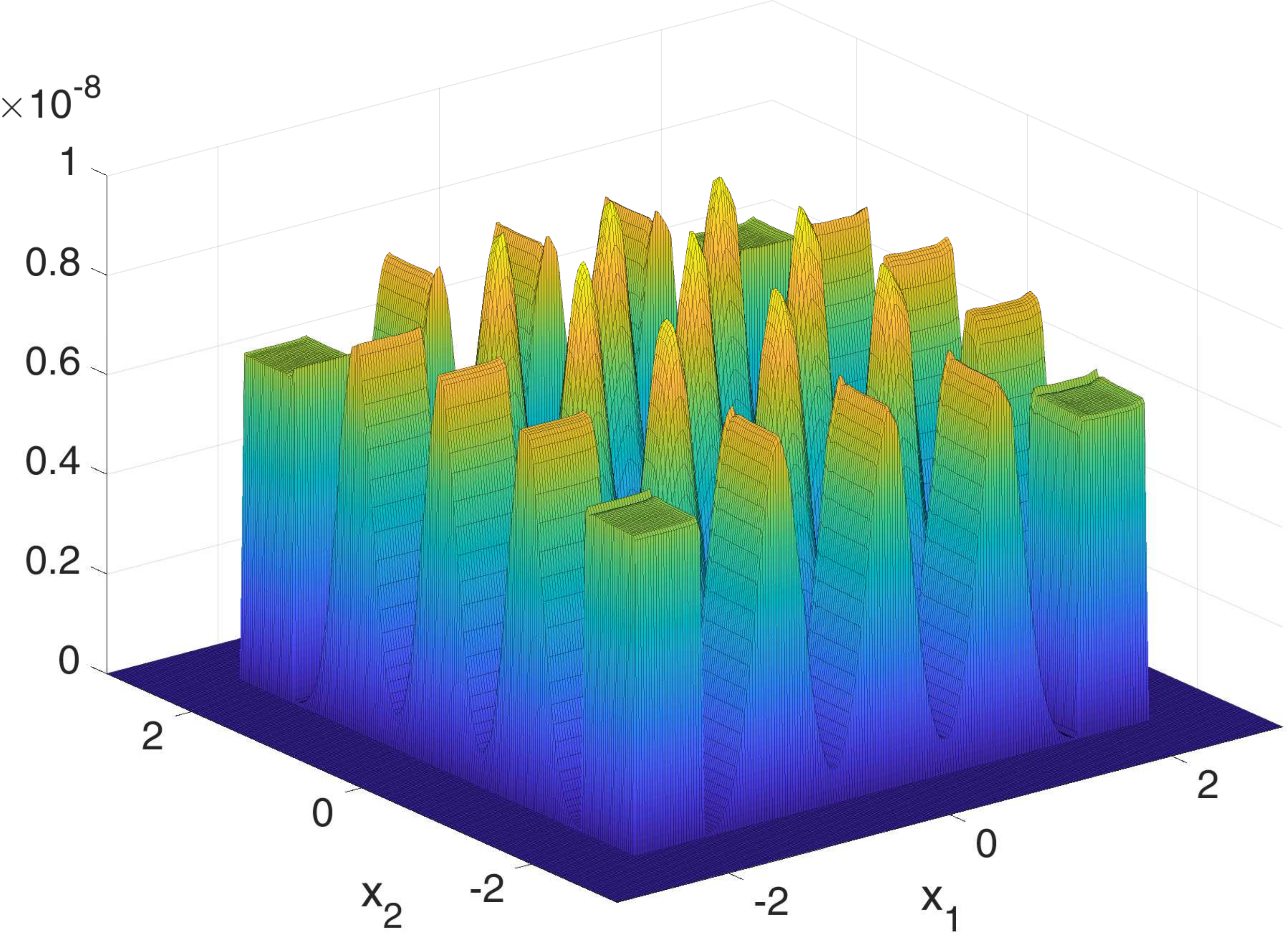}}
		&
		
		\subfloat[\label{fig:ColsRes3-b}]{\includegraphics[width=0.4\textwidth]{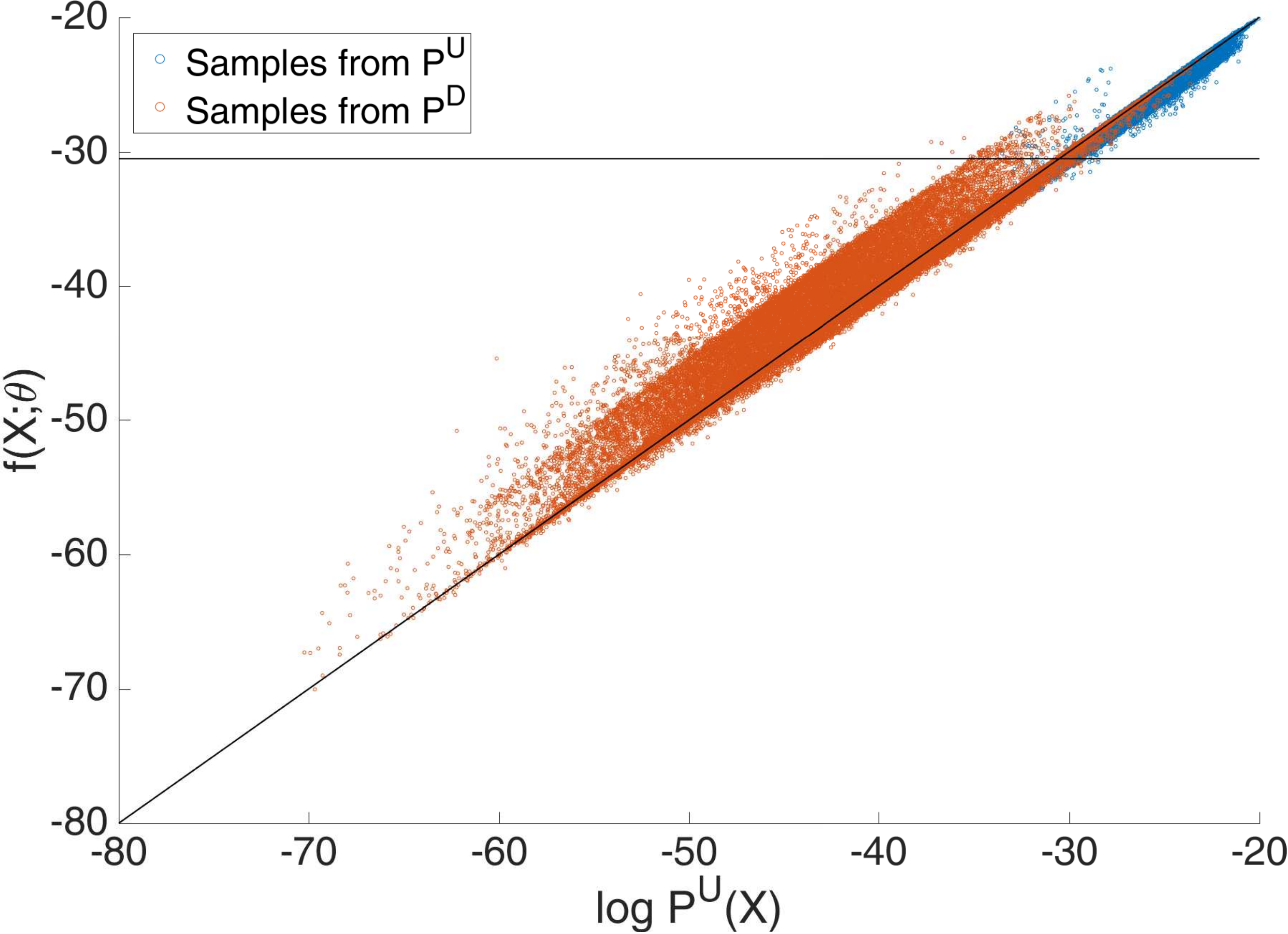}}
	\end{tabular}
	
		\subfloat[\label{fig:ColsRes3-c}]{\includegraphics[width=0.9\textwidth]{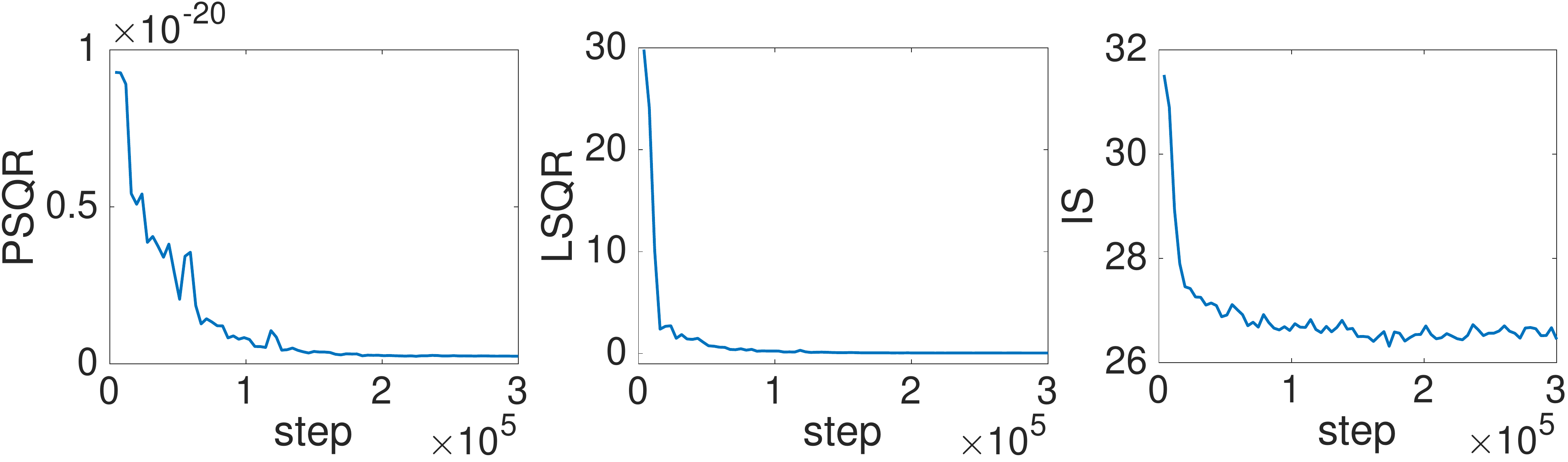}}
		
	\protect
	\caption[Learned pdf function of \emph{Columns} distribution by PSO-LDE with $\alpha = \frac{1}{4}$, where the applied NN is BD.]{Learned pdf function of \emph{Columns} distribution by PSO-LDE with $\alpha = \frac{1}{4}$, where NN architecture is block-diagonal with 6 layers, number of blocks $N_B = 50$ and block size $S_B = 64$ (see Section \ref{sec:BDLayers}).
		(a) Illustration of learned pdf function. The depicted slice is $\PP(x_1, x_2) = \bar{\PP}^{\usuff}(x_1, x_2, 0, \ldots , 0)$, with $x_1$ and $x_2$ forming a grid of points in first two dimensions of the density's support. As can be seen, all modes (within first two dimensions) and their appropriate shapes are recovered.
	(b) Illustration of the learned surface $f_{\theta}(X)$. Blue points are sampled from $\probs{\usuff}$, while red points - from $\probs{\dsuff}$, minimal 20D Uniform distribution that covers all samples from $\probs{\usuff}$. The $x$ axis represents $\log \probi{\usuff}{X}$ for each sample, $y$ axis represents the surface height $f_{\theta}(X)$ after the optimization was finished. The diagonal line represents $f_{\theta}(X) = \log \probi{\usuff}{X}$, where we would see all points in case of \emph{perfect} model inference. The black horizontal line represents $\log \probi{\dsuff}{X} = - 30.5$ which is a constant for the Uniform density. As can be seen, these two densities have a \emph{relative} support mismatch - although their pdf values are not zero within the considered point space, the sampled points from both densities are obviously located mostly in different space neighborhoods. This can be concluded from values of $\log \probi{\usuff}{X}$ that are very different for both point populations. Further, we can see that there are errors at both $X^{\usuff}$ and $X^{\dsuff}$ locations, possibly due to high bias of the surface estimator $f_{\theta}(X)$ imposed by the model kernel $g_{\theta}$ (see also Figure \ref{fig:ColsRes4}).
	(c) Testing errors as functions of the optimization iteration. All three errors can be used to monitor the learning convergence. Further, $IS$ error can be calculated without knowing the ground truth.
	}
	\label{fig:ColsRes3}
\end{figure}

Moreover, in Figure \ref{fig:ColsRes3-a} we can see a slice of the learned $\exp f_{\theta}(X)$ for the first two dimensions, where the applied PSO instance was PSO-LDE with $\alpha = \frac{1}{4}$. As observed, it is highly close to the real pdf slice from Figure \ref{fig:ColsRes1-a}. In particular, all modes and their shapes (within this slice) were recovered during learning. Further, in Figure \ref{fig:ColsRes3-b} we can observe the learned surface height $f_{\theta}(X)$ and the ground truth height $\log \probi{\usuff}{X}$ for test sample points from $\probs{\usuff}$ and $\probs{\dsuff}$. As shown, there are approximation errors at both $X^{\usuff}$ and $X^{\dsuff}$, with \down points having bigger error than \up points. As we will see below, these errors are correlated with the norm of $\theta$ gradient at each point.

Additionally, in Figure \ref{fig:ColsRes3-b} we can see an asymmetry of error w.r.t. horizontal line $\log \probi{\dsuff}{X} = - 30.5$, where points above this line (mostly blue points) have a NN height $f_{\theta}(X)$ slightly lower than a target $\log \probi{\usuff}{X}$, and points below this line (mostly red points) have a NN height $f_{\theta}(X)$ slightly higher than target $\log \probi{\usuff}{X}$. This trend was observed in all our experiments. Importantly, this error must be accounted to an estimation bias (in contrast to an estimation variance), since the considered herein setting is of infinite dataset setting where theoretically the variance is insignificant.

Further, we speculate that the reason for such bias can be explained as follows. The points above this horizontal line have a positive logarithm difference $\bar{d} = f_{\theta}(X) + 30.5$ defined in Eq.~(\ref{eq:DDiffDefinition}), $\bar{d} \geq 0$, whereas points below this line have a negative $\bar{d} \leq 0$. From the relation between $\bar{d}$ and \mfs discussed in Section \ref{sec:DeepLogPDF}, we know that for "above" points the \up \emph{magnitude} $M^{\usuff}(\cdot)$ is on average smaller than \down \emph{magnitude} $M^{\dsuff}(\cdot)$ (see Figure \ref{fig:MagnFuncsPSOLDE}). The opposite trend can be observed within "below" points. There $M^{\dsuff}(\cdot)$ has smaller values relatively to $M^{\usuff}(\cdot)$. Thus, the surface parts above this horizontal line have large \down \emph{magnitudes}, while parts below the line have large \up \emph{magnitudes}, which in its turn creates a global side-influence imposed via the model kernel. Finally, these global side influences generate this asymmetric error with $\log \probi{\dsuff}{X} = - 30.5$ being the center of the pressure. 
Likewise, we argue that this asymmetric tendency can be reduced by selecting $\probs{\usuff}$ and $\probs{\dsuff}$ densities that are closer to each other, as also enhancing NN architecture to be more flexible, with the side-influence between far away regions being reduced to zero. In fact, we empirically observed that NN architectures with bigger side-influence (e.g. FC networks) have a greater error asymmetry; the angle identified in Figure \ref{fig:ColsRes3-b} between a point cloud and line $f_{\theta}(X) = \log \probi{\usuff}{X}$ is bigger for a bigger overall side-influence within the applied model (see also Figure \ref{fig:ColsRes5-b} below). We leave a more thorough investigation of this asymmetry nature for future research.

Further, the above asymmetry also clears out why all learned models in Figure \ref{fig:ColsRes2-d} had the total integral less than 1. Since this integral is calculated by taking exponential over the learned $f_{\theta}(X)$, red points in Figure \ref{fig:ColsRes3-b} almost do not have any impact on it, compared with the blue points (red points' exponential is much lower than exponential of blue points). Yet, blue points have smaller $\exp f_{\theta}(X)$ than their real pdf values $\probi{\usuff}{X}$. Therefore, the total integral comes out to be slightly smaller than 1.

Also, in Figure \ref{fig:ColsRes3-c} we can see all three errors along the optimization time; the $IS$ is shown to monotonically decrease, similarly to ground truth errors. Hence, in theory it can be used in real applications where no ground truth is available, to monitor the optimization convergence.

\paragraph{Point-wise Error}

\begin{figure}[t!]
	\centering
	
	\newcommand{\width}[0] {0.44}
	\newcommand{\height}[0] {0.12}
	
	\begin{tabular}{cc}

		\subfloat[\label{fig:ColsRes4-a}]{\includegraphics[height=\height\textheight,width=\width\textwidth]{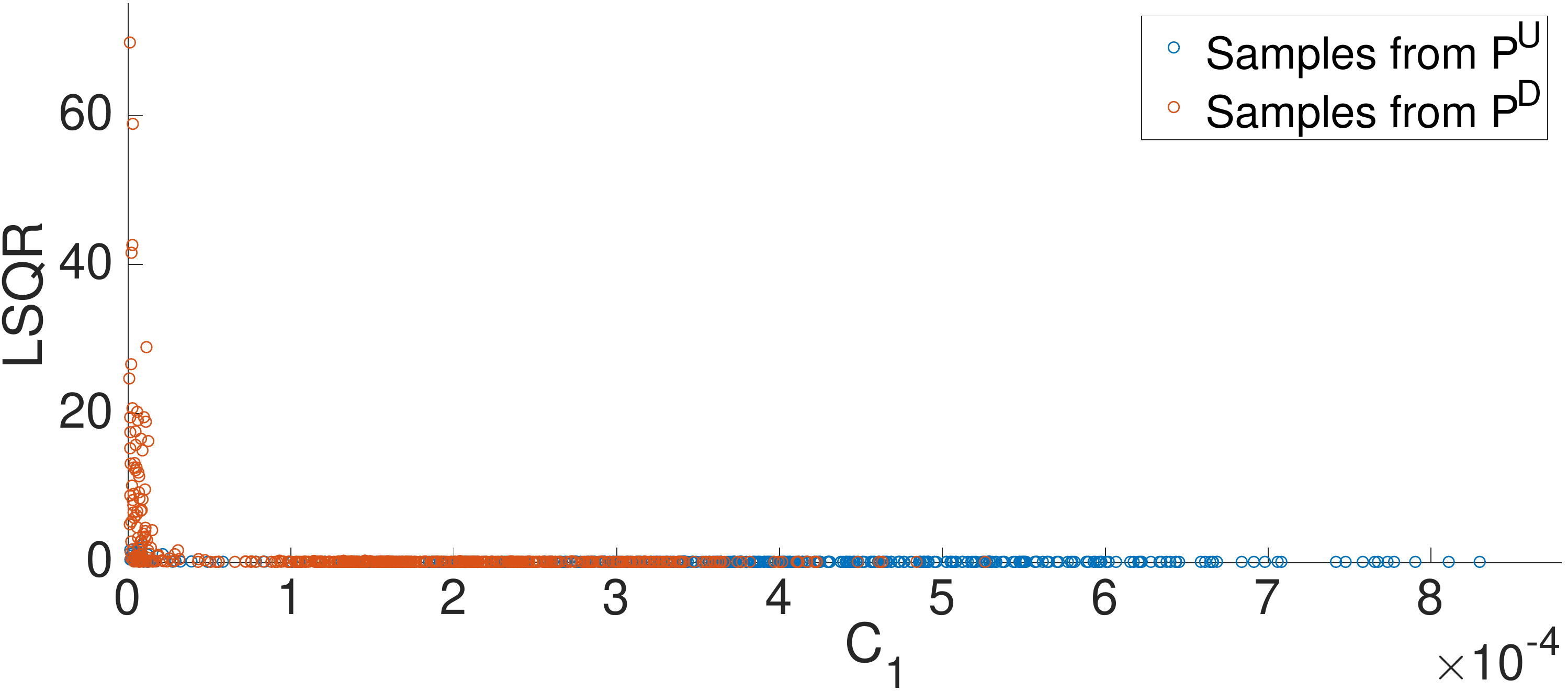}}
		&
		
		\subfloat[\label{fig:ColsRes4-b}]{\includegraphics[height=\height\textheight,width=\width\textwidth]{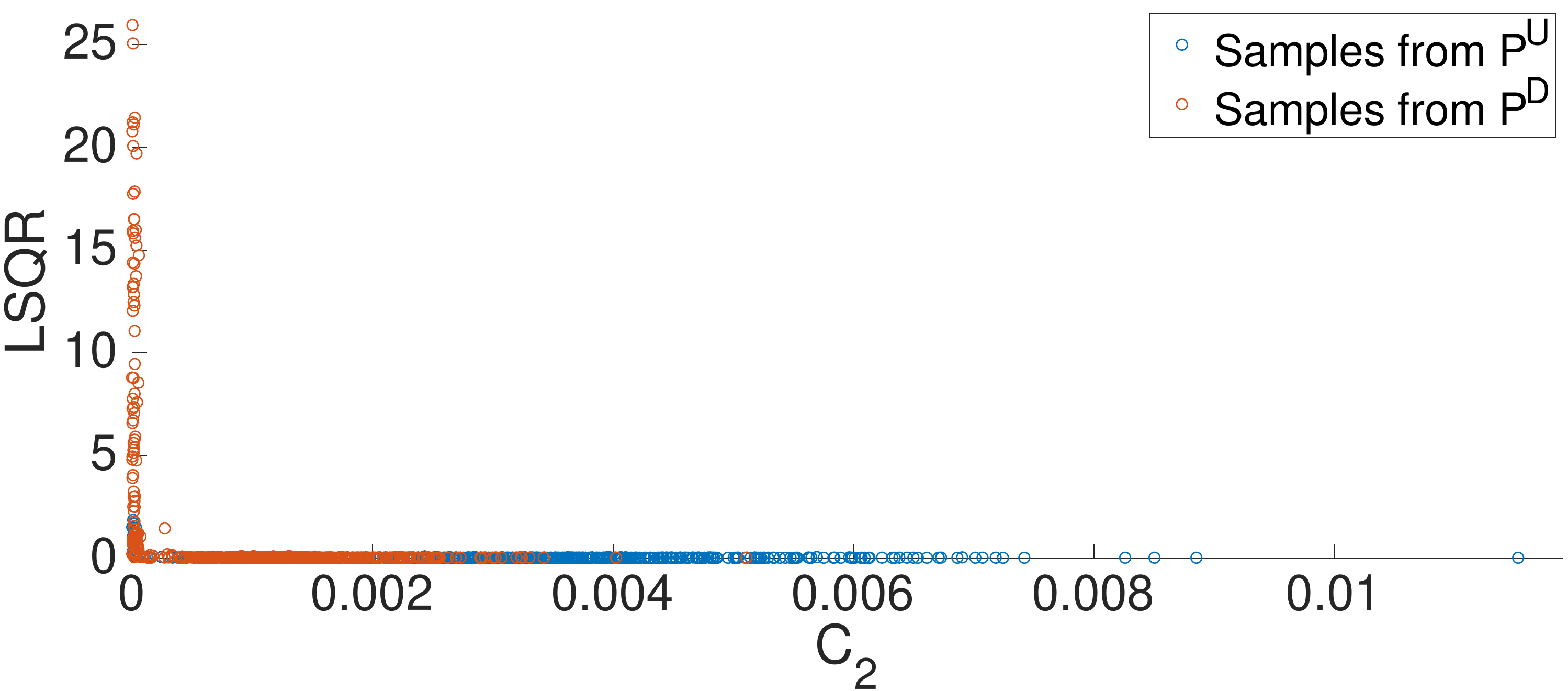}}
		
		\\
		\subfloat[\label{fig:ColsRes4-c}]{\includegraphics[height=\height\textheight,width=\width\textwidth]{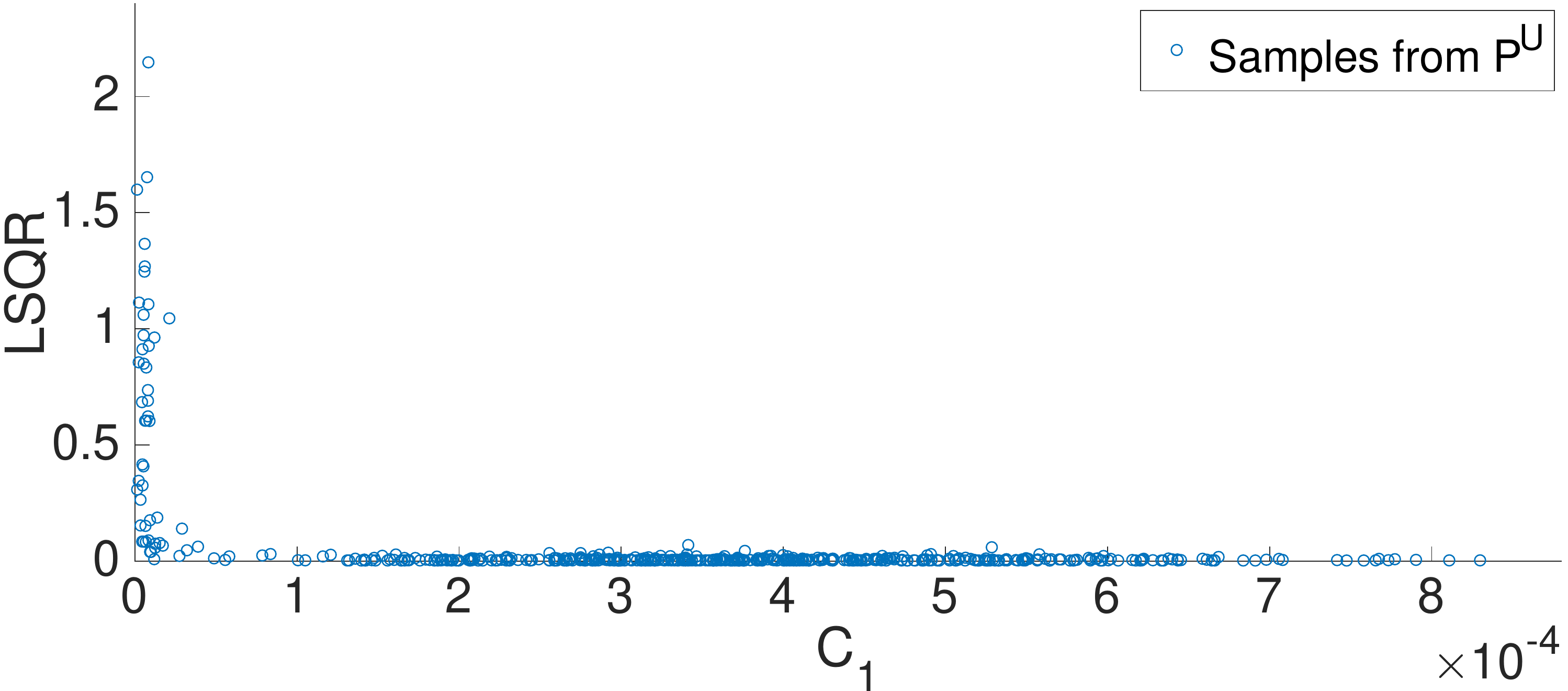}}
		&
		
		\subfloat[\label{fig:ColsRes4-d}]{\includegraphics[height=\height\textheight,width=\width\textwidth]{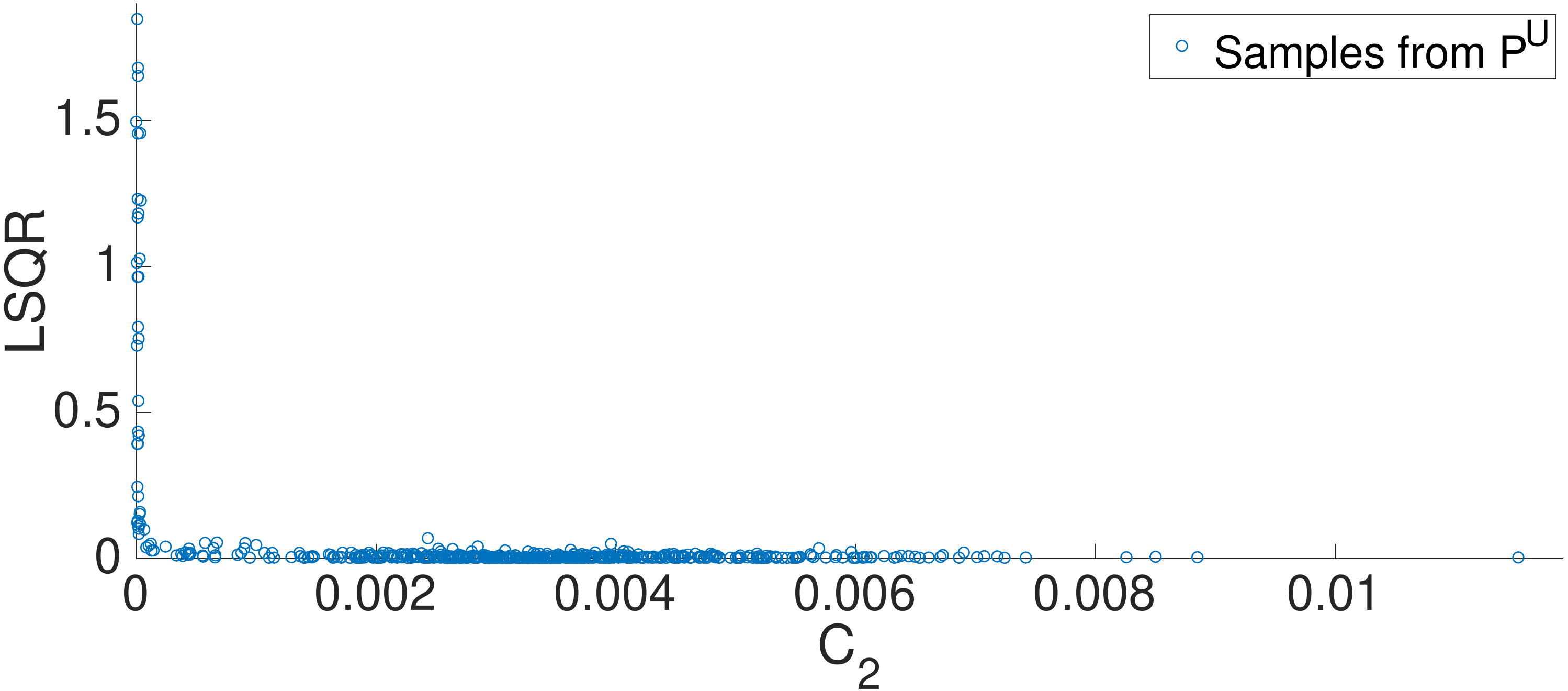}}	
		
		\\
		\subfloat[\label{fig:ColsRes4-e}]{\includegraphics[height=\height\textheight,width=\width\textwidth]{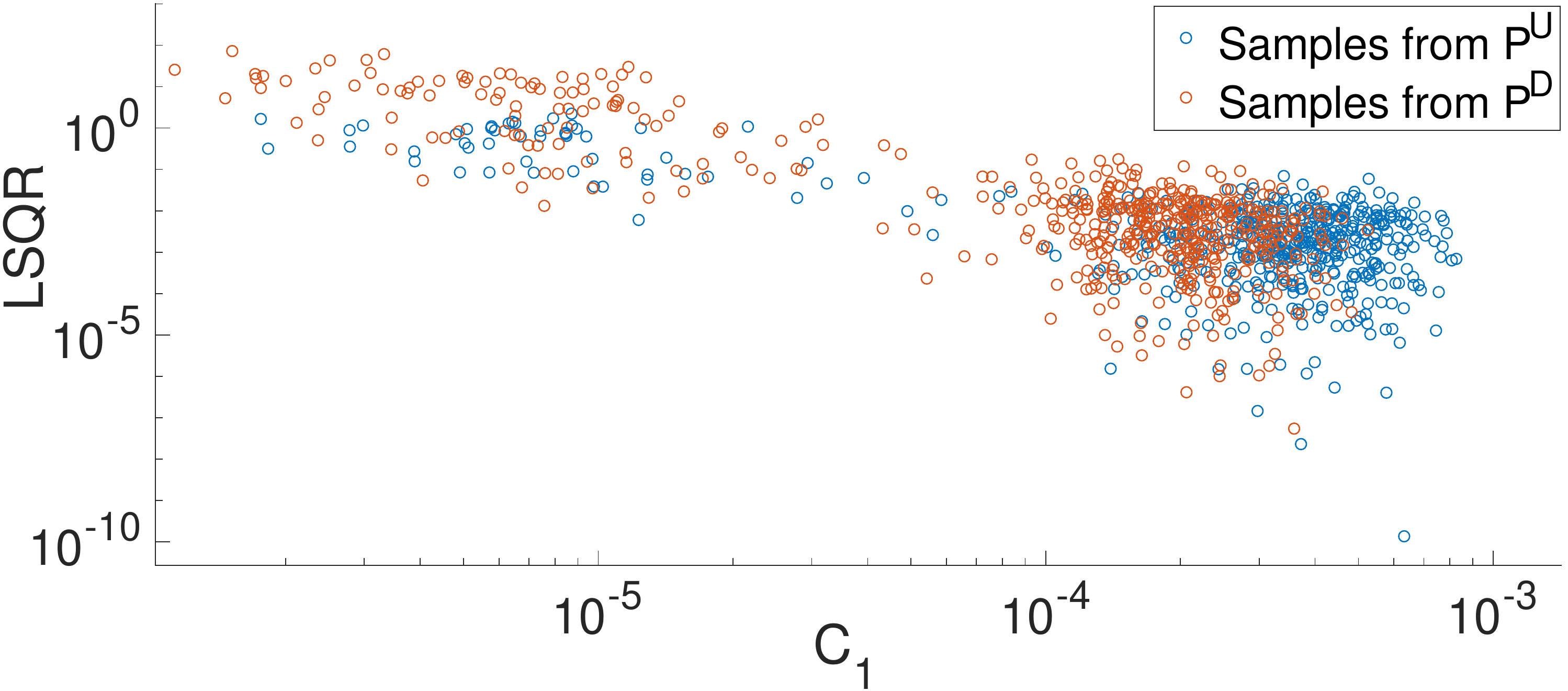}}
		&
		
		\subfloat[\label{fig:ColsRes4-f}]{\includegraphics[height=\height\textheight,width=\width\textwidth]{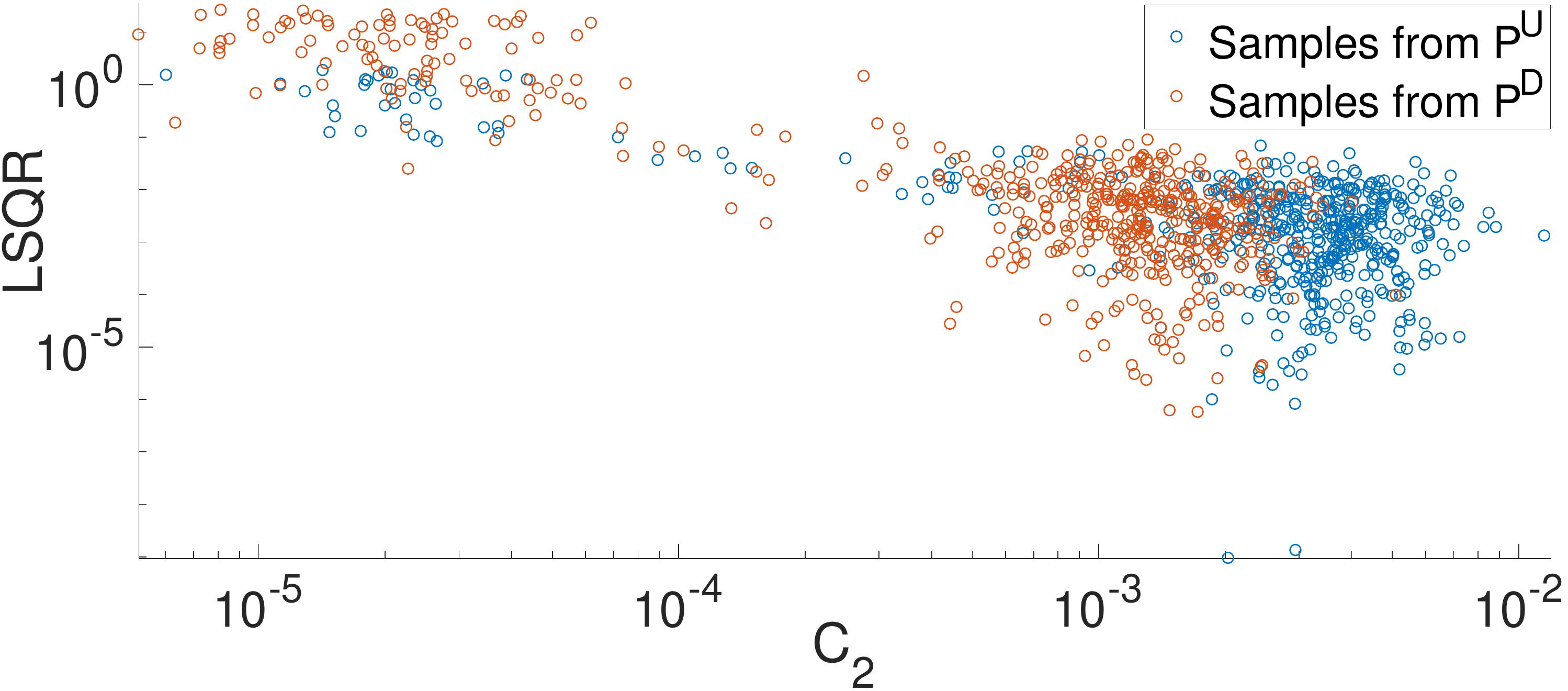}}

	\end{tabular}
	
	\subfloat[\label{fig:ColsRes4-g}]{\includegraphics[width=0.9\textwidth]{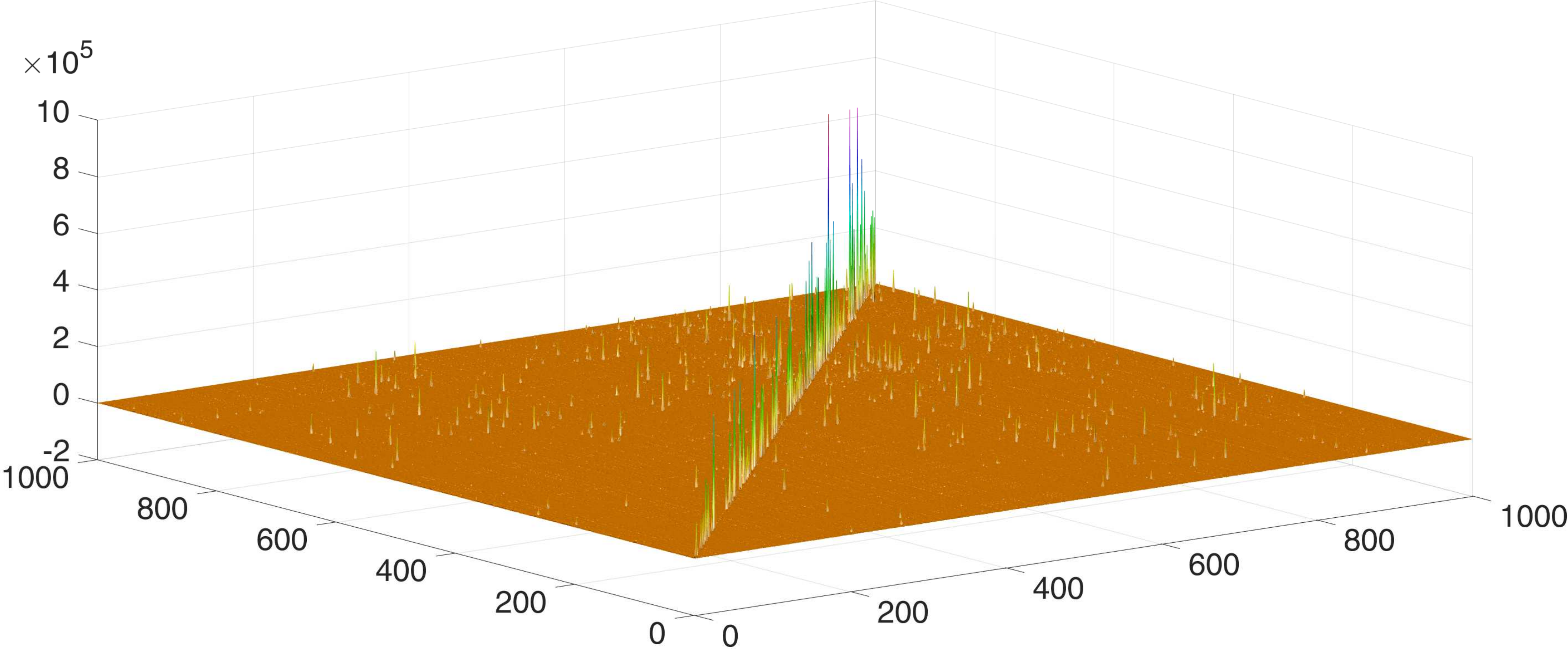}}
	
	\protect
	\caption[Relation between a point-wise error and a gradient norm.]{Relation between a point-wise error and a gradient norm. The pdf function of \emph{Columns} distribution is learned by PSO-LDE with $\alpha = \frac{1}{4}$, where NN architecture is block-diagonal with 6 layers, number of blocks $N_B = 50$ and block size $S_B = 64$ (see Section \ref{sec:BDLayers}).
		(a)-(b) Relation between inverse-gradient-norm metrics $C_1$ and $C_2$ and a point-wise error $LSQR$. As can be seen, points with the smaller inverse-gradient-norm (that is, with a bigger norm of $\theta$ gradient) have a greater approximation error. See details in the main text.
		(c)-(d) Plots of (a)-(b) with only samples from $\probs{\usuff}$ density.
		(e)-(f) Plots of (a)-(b) with both $x$ and $y$ axes scaled logarithmically.
		(g) Matrix $G$, with $G_{ij} = g_{\theta}(X_i, X_j)$.
	}
	\label{fig:ColsRes4}
\end{figure}

Furthermore, we empirically observe a direct connection between point-wise ground truth error and self \emph{gradient similarity} $g_{\theta}(X, X)$ (squared norm of gradient $\nabla_{\theta} 
f_{\theta}(X)$ at the point). To demonstrate this, we define two inverse-gradient-norm empirical metrics as follows. First, after training was finished we sample 1000 points $D = \{ X_i \}$, where 500 are sampled from $\probs{\usuff}$ and 500 - from $\probs{\dsuff}$, and calculate their gradients $\nabla_{\theta} 
f_{\theta}(X_i)$. Next, we compute the \emph{Gramian} matrix $G$ that contains all \emph{gradient similarities} among the samples, with $G_{ij} = g_{\theta}(X_i, X_j)$. Then, the first empirical metric $C_1$ for sample $X_i$ is calculated as
\begin{equation}
C_1(X_i)
=
\frac{1}{G_{ii}}
=
\frac{1}{g_{\theta}(X_i, X_i)}
.
\label{eq:IGNMetric1}
\end{equation}
The above $C_1(X_i)$ is bigger if $g_{\theta}(X, X)$ is smaller, and vice versa. The second metric $C_2$ is defined as
\begin{equation}
C_2(X_i)
=
\left[ G^{-1} \right]_{ii}
.
\label{eq:IGNMetric2}
\end{equation}
Since matrix $G$ is almost diagonal (see Figure \ref{fig:ColsRes4-g}), both $C_1$ and $C_2$ usually have a similar trend.

In Figure \ref{fig:ColsRes4} we can see that the above metrics $C_1(X_i)$ and $C_2(X_i)$ are highly correlated with point-wise $LSQR(X_i) = \left[ \log \probi{\usuff}{X_i} - \log \bar{\PP}_{\theta}(X_i) \right]^2$. That is, points with a bigger norm of the gradient $\nabla_{\theta} 
f_{\theta}(X)$ (bigger $g_{\theta}(X, X)$) have a bigger approximation error. 
One possible explanation for this trend is that there exists an estimation bias, which is amplified by a bigger gradient norm at the point. 
Further investigation is required to clarify this aspect.
Concluding, we empirically demonstrate that in the infinite data setting we can measure model uncertainty (error) at query point $X$ via a norm of its gradient. For a smaller dataset size the connection between the gradient norm and the approximation error is less obvious, probably because there we have another/additional factors that increase the approximation error (e.g. an estimation variance). Also, note that herein we use metrics $C_1$ and $C_2$ that are opposite-proportional to the gradient norm instead of using the gradient norm directly since the \emph{inverse} relation is visually much more substantial.

Additionally, in Figure \ref{fig:ColsRes4} it is visible that on average samples from $\probs{\dsuff}$ have a bigger gradient norm than samples from $\probs{\usuff}$. This can explain why in Figure \ref{fig:ColsRes3-b} we have higher error at samples from \down density.

\paragraph{Other PSO Instances}

Further, several other PSO instances were executed to compare with PSO-LDE. First is the IS method from Table \ref{tbl:PSOInstances1}. As was discussed in Section \ref{sec:DeepLogPDF}, its \mfs are unbounded which may cause instability during the optimization. In Table \ref{tbl:Perf1} we can see that indeed its performance is much inferior to PSO-LDE with bounded \emph{magnitudes}.

\begin{table}
	\centering
	\begin{tabular}{llll}
		\toprule
		Method     & $PSQR$ & $LSQR$     & $IS$ \\
		\midrule
		PSO-LDE, & $2.7 \cdot 10^{-22} \pm 2.58 \cdot 10^{-23}$ &
		$0.057 \pm 0.004$  &
		$26.58 \pm 0.01$  \\
		averaged over all $\alpha$ \\
		\midrule
		IS & $1.79 \cdot 10^{-21} \pm 5 \cdot 10^{-22}$ &
		$0.46 \pm 0.14$  &
		$26.84 \pm 0.07$  \\
		\midrule
		PSO-MAX & $3.04 \cdot 10^{-22} \pm 1.55 \cdot 10^{-23}$ &
		$0.058 \pm 0.002$  &
		$26.57 \pm 0.001$  \\
		
		\bottomrule
	\end{tabular}
	
	\caption{Performance comparison between various PSO instances}
	\label{tbl:Perf1}
	
\end{table}

Additionally, we used an instance of a normalized family defined in Eq.~(\ref{eq:PSOLogEstNormalized}), which we name PSO-MAX, with the following \mfs:
\begin{equation}
M^{\usuff}\left[X,f_{\theta}(X)\right]
=
\frac{\probi{\dsuff}{X}}{\max \left[\probi{\dsuff}{X}, \exp f_{\theta}(X) \right]}
=
\exp
\left[
- \max \left[\bar{d}\left[
X, f_{\theta}(X)
\right]
, 0\right]
\right]
,
\label{eq:PSOMaxLogEstNormalizedMU}
\end{equation}
\begin{equation}
M^{\dsuff}\left[X,f_{\theta}(X)\right]
=
\frac{\exp f_{\theta}(X)}{\max \left[\probi{\dsuff}{X}, \exp f_{\theta}(X) \right]}
=
\exp
\left[
\min \left[\bar{d}\left[
X, f_{\theta}(X)
\right]
, 0\right]
\right]
.
\label{eq:PSOMaxLogEstNormalizedMD}
\end{equation}
In Figure \ref{fig:ColsRes4.1} the above \emph{magnitudes} are depicted as functions of a logarithm difference $\bar{d}$ where we can see them to be also bounded. In fact, PSO-MAX is also an instance of PSO-LDE for a limit $\alpha \rightarrow \infty$.  Similarly to other instances of PSO-LDE, the bounded \emph{magnitudes} of PSO-MAX allow to achieve a high approximation accuracy, which gets very close to the performance of PSO-LDE for finite values of $\alpha$ (see Table \ref{tbl:Perf1}). Yet, PSO-MAX is slightly worse, suggesting that very high values of $\alpha$ are sub-optimal for the task of pdf inference.

\begin{figure}[tb]
	\centering
	
	\begin{tabular}{c}
		
		\subfloat{\includegraphics[width=0.4\textwidth]{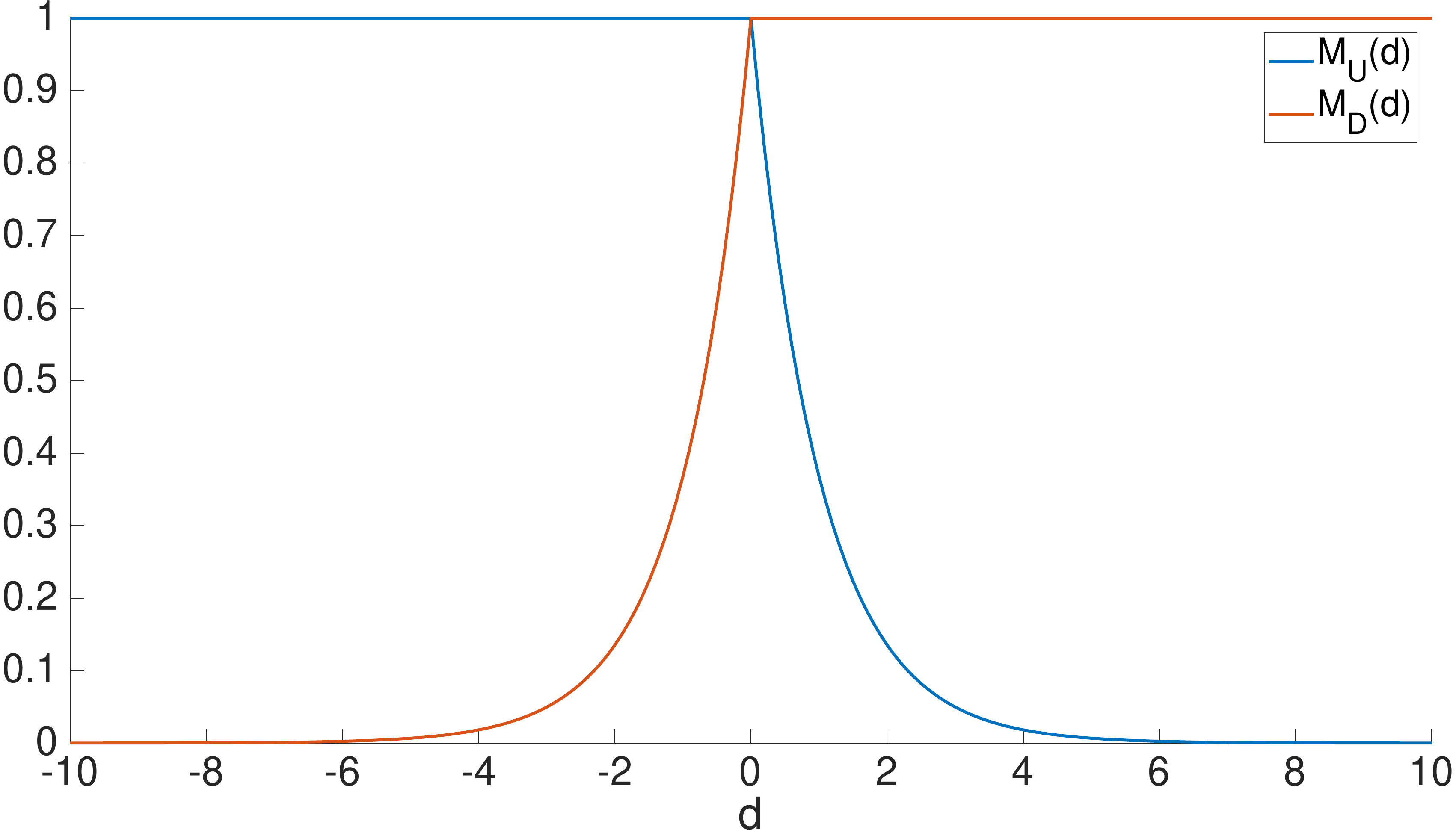}}

	\end{tabular}
	
	\protect
	\caption{PSO-MAX \emph{magnitudes} as functions of a difference $\bar{d}\left[
		X, f_{\theta}(X)
		\right]
		 = f_{\theta}(X) - \log \probi{\dsuff}{X}$.
	}
	\label{fig:ColsRes4.1}
\end{figure}

In overall, our experiments show that PSO instances with bounded \emph{magnitudes} have superior performance at pdf inference task. Further, PSO-LDE with $\alpha = \frac{1}{4}$ has better accuracy w.r.t. other values of $\alpha$. Note that this implies PSO-LDE with $\alpha = \frac{1}{4}$ is being superior to NCE \citep{Smith05acl,Gutmann10aistats}, which is PSO-LDE with $\alpha = 1$. Finally, in an infinite dataset setting and when using BD network architecture, we can measure model uncertainty of a specific query point $X$ via the self \emph{gradient similarity} $g_{\theta}(X, X)$.

\subsubsection{Baselines}
\label{sec:ColumnsEstBslns}

In the above section we showed that particular instances of PSO-LDE perform better than the NCE method (i.e. PSO-LDE with $\alpha = 1$). Likewise, in our previous work \citep{Kopitkov18arxiv} we showed on 2D and 3D data that PSO-based methods are much more accurate than kernel density estimation (KDE) approach. Unfortunately, the KDE method does not scale well with higher dimensions, with very few implementations handling data of arbitrary dimension. Instead, below we evaluate score matching
 \citep{Hyvarinen05mlr,Hyvarinen07csda,Zhai16icml,Saremi18arxiv}, Masked Auto-encoder for Distribution Estimation (MADE) \citep{Germain15icml} and Masked Autoregressive Flow (MAF) \citep{Papamakarios17nips} as state-of-the-art baselines in the context of density estimation.

\paragraph{Score Matching}

The originally introduced score matching approach  \citep{Hyvarinen05mlr} employed the following loss over samples $\{X_i^{\usuff}\}_{i = 1}^{N^{\usuff}}$ from the target density $\probi{\usuff}{X}$:
\begin{equation}
L_{SM}(\theta, \{X_i^{\usuff}\}_{i = 1}^{N^{\usuff}})
=
\frac{1}{N^{\usuff}}
\sum_{i = 1}^{N^{\usuff}}
\sum_{j = 1}^{n}
\left[
- 
\left(
\frac{\partial^2 f_{\theta}(X_i^{\usuff})}{\partial (X_{ij}^{\usuff})^2}
\right)
+
\half
\left(
\frac{\partial f_{\theta}(X_i^{\usuff})}{\partial X_{ij}^{\usuff}}
\right)^2
\right]
,
\label{eq:SMloss1}
\end{equation}
where $\frac{\partial f_{\theta}(X_i^{\usuff})}{\partial X_{ij}^{\usuff}}$ and $\frac{\partial^2 f_{\theta}(X_i^{\usuff})}{\partial (X_{ij}^{\usuff})^2}$ are first and second derivatives of $f_{\theta}(X_i^{\usuff})$ w.r.t. $j$-th entry of the $n$-dimensional sample $X_i^{\usuff}$.
Intuitively, we can see that this loss tries to construct a surface $f_{\theta}(X)$ where each sample point will be a local minima - its first derivative is "softly" enforced to be zero via the minimization of a term $\left(
\frac{\partial f_{\theta}(X_i^{\usuff})}{\partial X_{ij}^{\usuff}}
\right)^2$, whereas the second one is "softly" optimized to be positive via maximization of $\left(
\frac{\partial^2 f_{\theta}(X_i^{\usuff})}{\partial (X_{ij}^{\usuff})^2}
\right)$.
The inferred $f_{\theta}(X)$ of such optimization converges to the data \emph{energy} function,
which is proportional to the real negative log-pdf with some unknown partition constant, $\exp \left[ - f_{\theta}(X) \right] \sim \probi{\usuff}{X}$. Further, note that to optimize a NN model via $L_{SM}(\cdot)$, the typical GD-based back-propagation process will require to compute a third derivative of $f_{\theta}(X)$, which is typically computationally unfeasible for large NN models.

Due to the last point, in \citep{Zhai16icml,Saremi18arxiv} it was proposed to use the following loss as a proxy:
\begin{equation}
L_{SM}(\theta, \{X_i^{\usuff}\}_{i = 1}^{N^{\usuff}})
=
\frac{1}{N^{\usuff}}
\sum_{i = 1}^{N^{\usuff}}
\norm{
- \upsilon_i + 
\sigma^2
\cdot
\frac{\partial f_{\theta}(X)}{\partial X}|_{X = X_i^{\usuff} + \upsilon_i}
}_2^2
,
\label{eq:Deenloss}
\end{equation}
where $\upsilon_i$ is zero-centered i.i.d. noise that is typically sampled from $\sim \mathcal{N}(0, \sigma^2 \cdot I)$. This "denoising" loss was shown in \citep{Vincent11nc} to converge to the same target of the score matching loss in Eq.~(\ref{eq:SMloss1}).

Furthermore, the above loss enforces $f_{\theta}(X)$ to converge to the data \emph{energy} function. However, in this paper we are interested to estimate the data log-pdf, which is proportional to the negative data \emph{energy} function. To infer the latter via score matching, we employ the following sign change of the noise term:
\begin{equation}
L_{SM}(\theta, \{X_i^{\usuff}\}_{i = 1}^{N^{\usuff}})
=
\frac{1}{N^{\usuff}}
\sum_{i = 1}^{N^{\usuff}}
\norm{
	\upsilon_i + 
	\sigma^2
	\cdot
	\frac{\partial f_{\theta}(X)}{\partial X}|_{X = X_i^{\usuff} + \upsilon_i}
}_2^2
,
\label{eq:DeenlossPos}
\end{equation}
which has the same equilibrium as the loss in Eq.~(\ref{eq:Deenloss}), yet with the negative sign. Namely, at a convergence $f_{\theta}(X)$ will satisfy now $\exp \left[ f_{\theta}(X) \right] \sim \probi{\usuff}{X}$. In our experiments we used this version of score matching loss for the density estimation of 20D \emph{Columns} distribution.

\begin{figure}[tb]
	\centering
	
	\begin{tabular}{cc}
		
		\subfloat[\label{fig:ColsRes4.2-a}]{\includegraphics[width=0.4\textwidth]{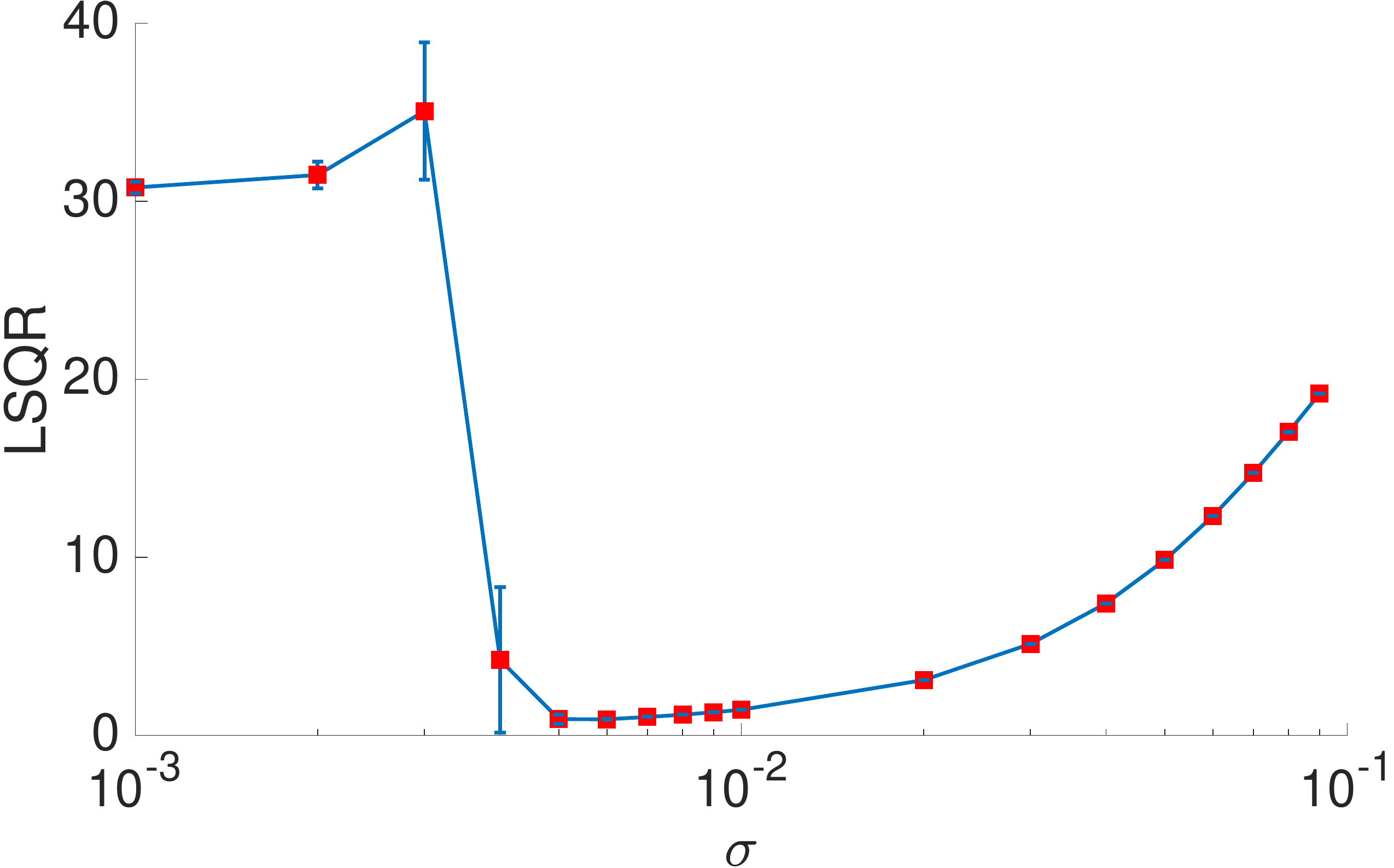}}
		
		&
		
		\subfloat[\label{fig:ColsRes4.2-b}]{\includegraphics[width=0.4\textwidth]{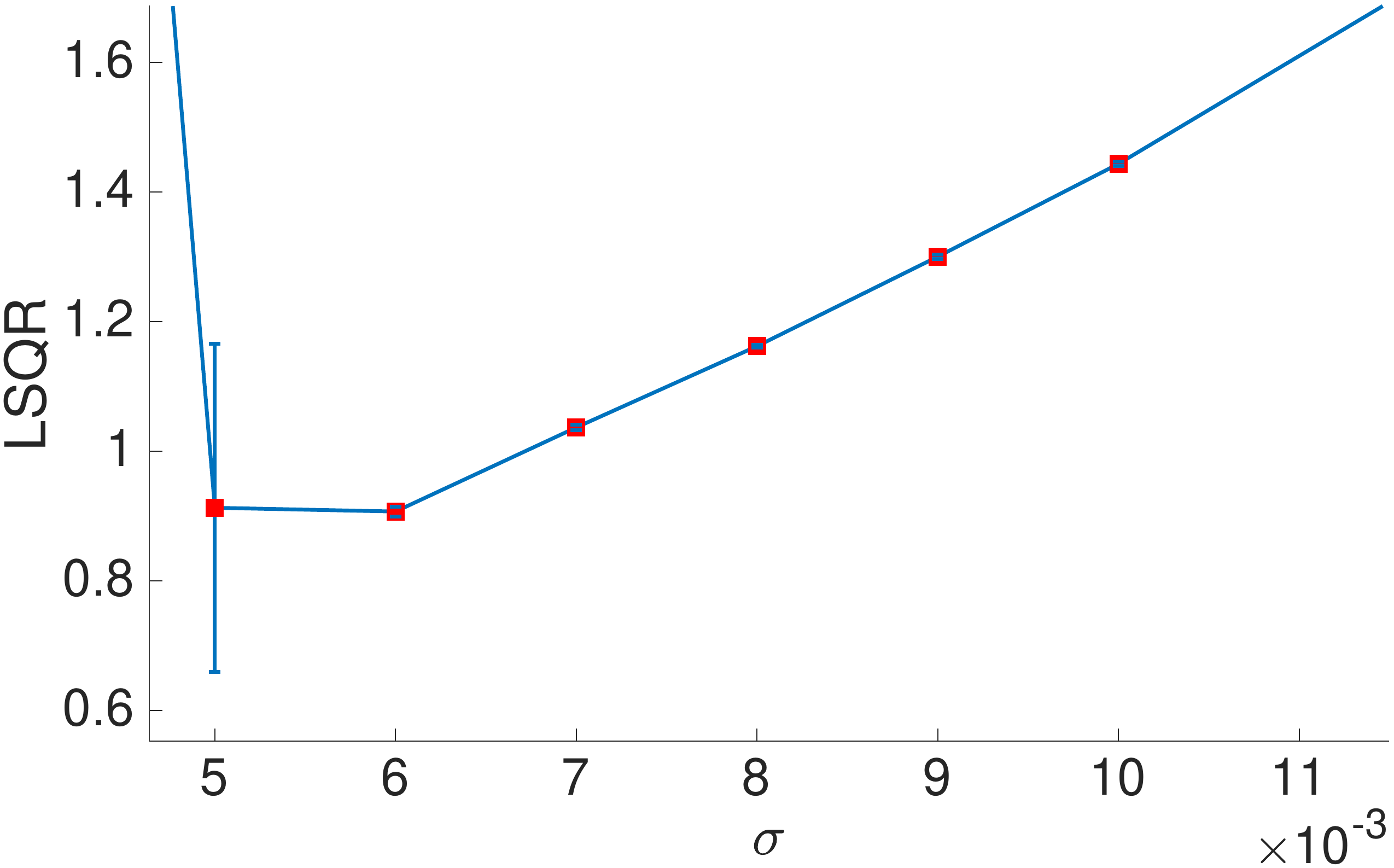}}
		
	\end{tabular}
	
	\begin{tabular}{ccc}
		
		\subfloat[\label{fig:ColsRes4.2-c}]{\includegraphics[width=0.37\textwidth]{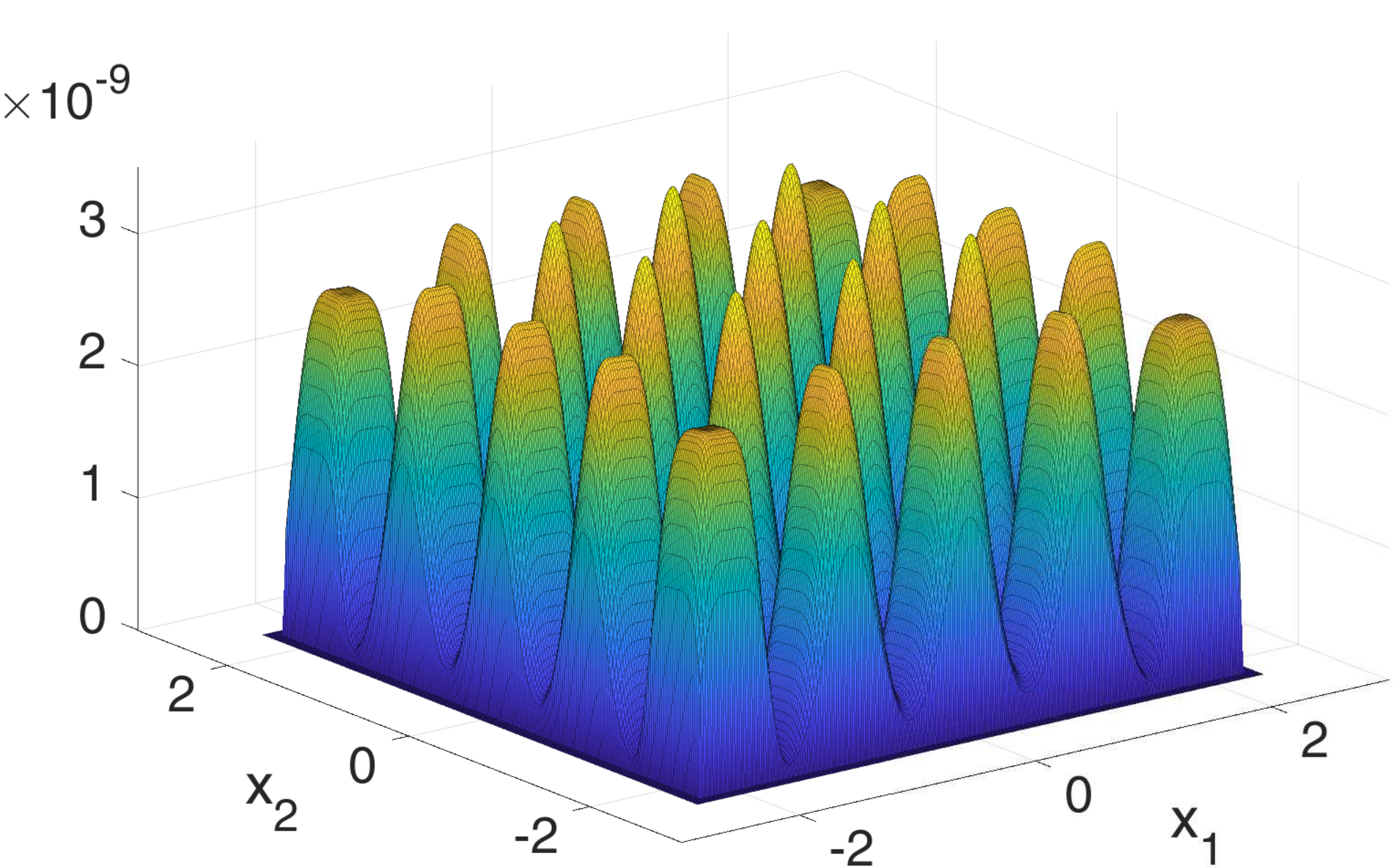}}
		&
		
		\subfloat[\label{fig:ColsRes4.2-d}]{\includegraphics[width=0.27\textwidth]{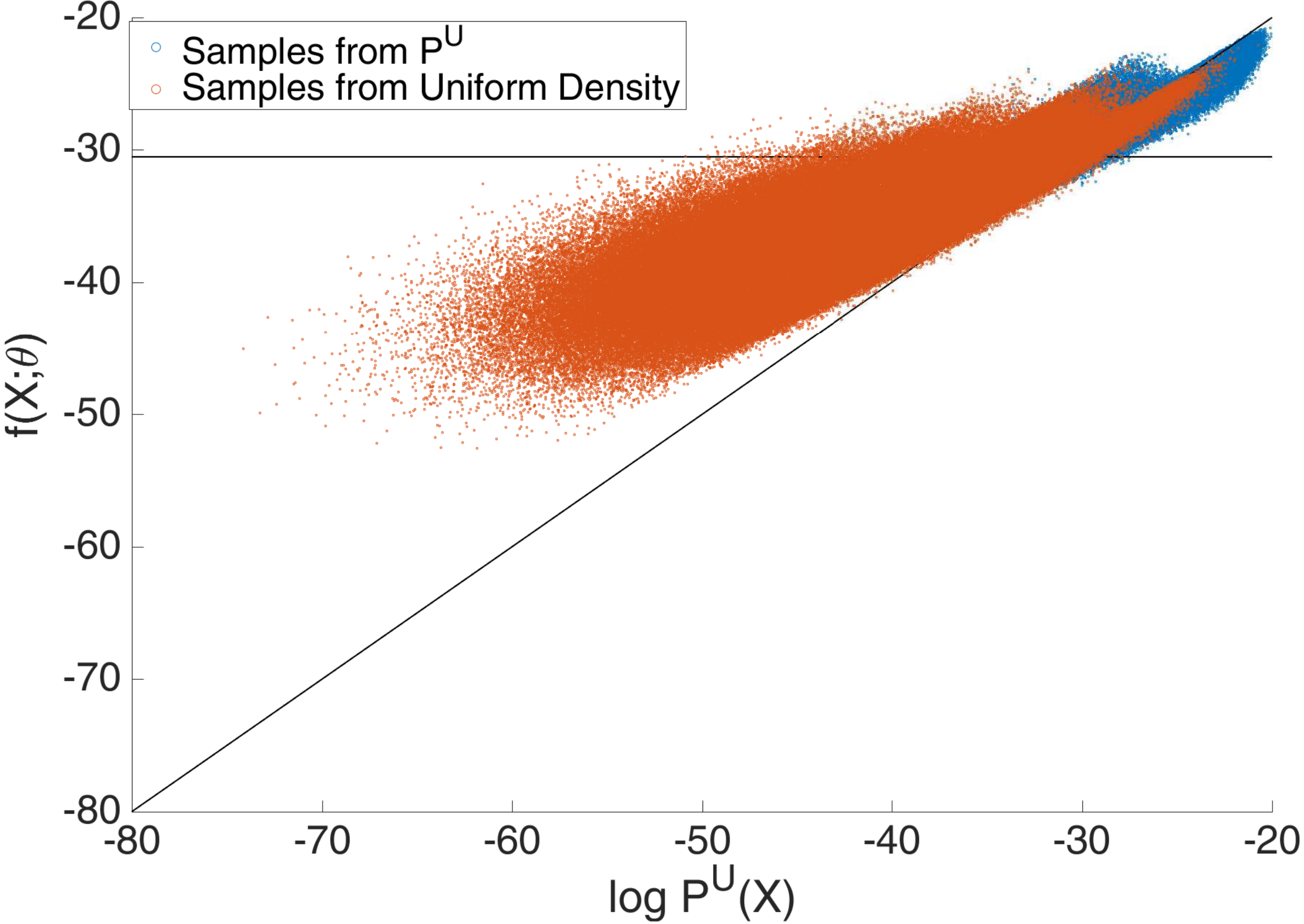}}
		
		&
		
		\subfloat[\label{fig:ColsRes4.2-e}]{\includegraphics[width=0.27\textwidth]{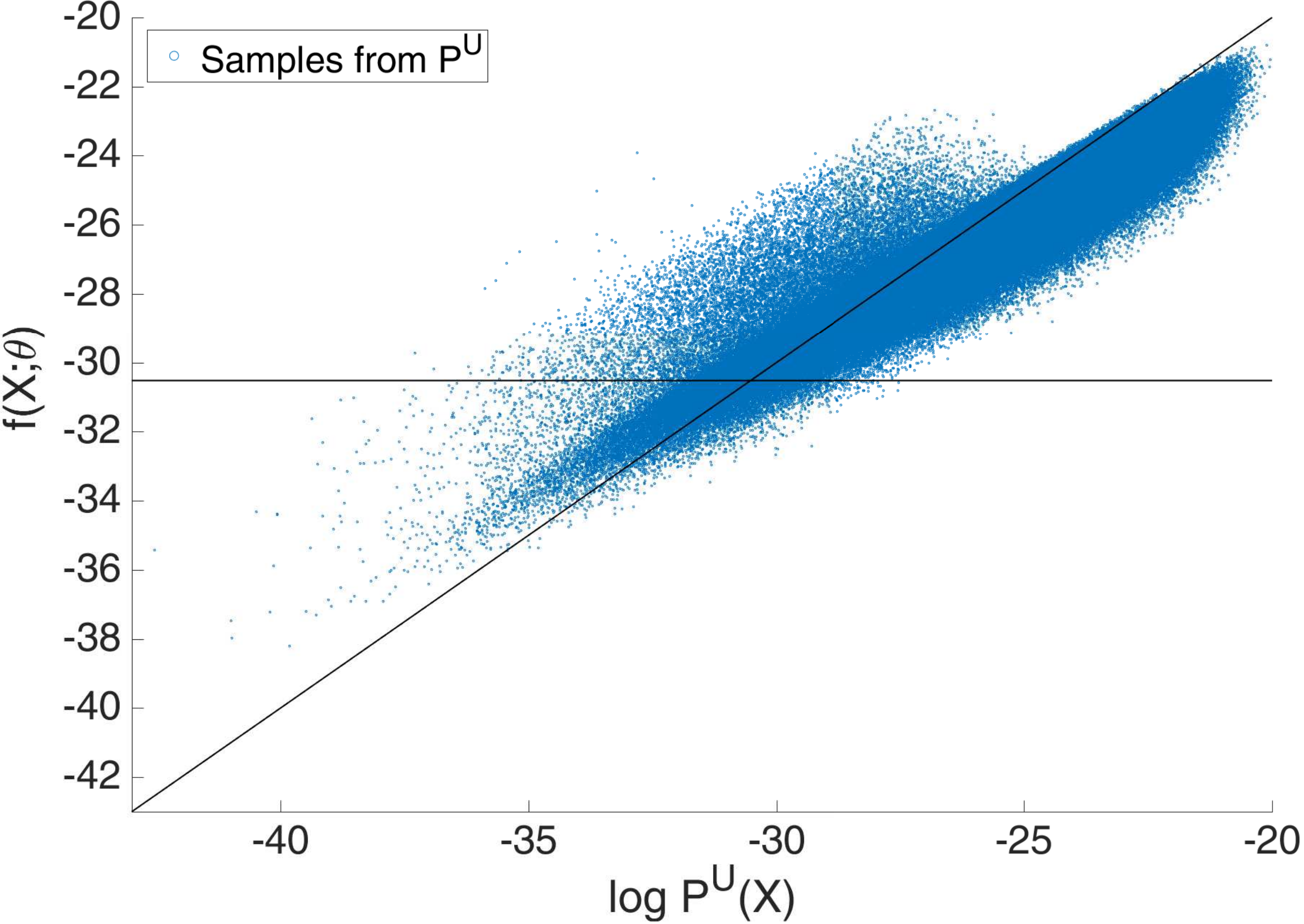}}
		
	\end{tabular}

	\protect
	\caption[Learned pdf function of \emph{Columns} distribution by score matching, where the applied NN is BD.]{Learned pdf function of \emph{Columns} distribution by score matching, where NN architecture is block-diagonal with 6 layers, number of blocks $N_B = 50$ and block size $S_B = 64$ (see Section \ref{sec:BDLayers}). The employed activation function is $tanh()$.
		(a) $LSQR$ error (mean and standard deviation) for various values of a scaling hyper-parameter $\sigma$;
		(b) Zoom of (a);
		(c) Illustration of learned pdf function for best model with $\sigma = 0.006$. The depicted slice is $\PP(x_1, x_2) = \bar{\PP}^{\usuff}(x_1, x_2, 0, \ldots , 0)$, with $x_1$ and $x_2$ forming a grid of points in first two dimensions of the density's support. As can be seen, the estimated pdf is over-smoothed w.r.t. real pdf in Figure \ref{fig:ColsRes1-a}. In contrast, PSO-LDE estimation in Figure \ref{fig:ColsRes3-a} does not have this extra-smoothing nature.
		(d) Illustration of the learned surface $f_{\theta}(X)$. Blue points are sampled from $\probs{\usuff}$, while red points - from $\probs{\dsuff}$, minimal 20D Uniform distribution that covers all samples from $\probs{\usuff}$. The $x$ axis represents $\log \probi{\usuff}{X}$ for each sample, $y$ axis represents the surface "normalized" height $\bar{f}_{\theta}(X) = f_{\theta}(X) - \log(\overline{TI}) = \log \bar{\PP}_{\theta}(X)$ after optimization was finished. The diagonal line represents $\bar{f}_{\theta}(X) = \log \probi{\usuff}{X}$, where we would see all points in case of \emph{perfect} model inference.
		(e) Plot from (d) with only samples from $\probs{\usuff}$.
		We can see that the produced surface is significantly less accurate than the one produced by PSO-LDE in Figures \ref{fig:ColsRes3-a}-\ref{fig:ColsRes3-b}.
	}
	\label{fig:ColsRes4.2}
\end{figure}

The employed learning setup of score matching is identical to PSO-LDE, with the loss in Eq.~(\ref{eq:DeenlossPos}) being applied in a mini-batch mode, where at each optimization iteration  a batch of samples $\{X_i^{\usuff}\}_{i = 1}^{N^{\usuff}}$ was fetched from the training dataset and the new noise batch $\{\upsilon^i\}_{i = 1}^{N}$ was generated. The learning rate of Adam optimizer was 0.003. Note that this method infers $\exp \left[ f_{\theta}(X) \right]$ which is only proportional to the real pdf with some unknown partition constant. Therefore, in order to compute $LSQR$ of such model we also calculated its partition via importance sampling. Specifically, for each learned model $\exp \left[ f_{\theta}(X) \right]$ its integral was estimated through $\overline{TI} = \sum_{i = 1}^{N^{\dsuff}} \frac{\exp \left[ f_{\theta}(X^{\dsuff}_{i}) \right]}{\probi{\dsuff}{X^{\dsuff}_{i}}}$ over $N^{\dsuff} = 10^8$ samples from density $\probs{\dsuff}$, which is the minimal 20D Uniform distribution that covers all samples from $\probs{\usuff}$. Further, we used $\bar{\PP}_{\theta}(X) = \exp \left[ f_{\theta}(X) - \log(\overline{TI}) \right]$ as the final estimation of data pdf.

Furthermore, we trained the score matching model for a range of $\sigma$ values. After the explicit normalization of each trained model, in Figures \ref{fig:ColsRes4.2-a}-\ref{fig:ColsRes4.2-b} we can see the $LSQR$ error for each value of a hyper-parameter $\sigma$. Particularly, for $\sigma = 0.006$ we got the smaller error $LSQR = 0.907 \pm 0.0075$, which is still much inferior to the accuracy obtained by PSO-LDE. Moreover, in Figure \ref{fig:ColsRes4.2-c} we can see that the estimated surface is over-smoothed and does not accurately approximate sharp edges of the target pdf. In contrast, PSO-LDE produces a very close pdf estimation of an arbitrary shape, as was shown in Section \ref{sec:ColumnsEstME}. Likewise, comparing Figures \ref{fig:ColsRes4.2-d} and \ref{fig:ColsRes3-b} we can see again that PSO-LDE yields a much better accuracy.

\paragraph{Masked Auto-encoder for Distribution Estimation}

\begin{figure}[tb]
	\centering
	
	\begin{tabular}{cc}
		
		\subfloat[\label{fig:ColsResMADE-a}]{\includegraphics[width=0.4\textwidth]{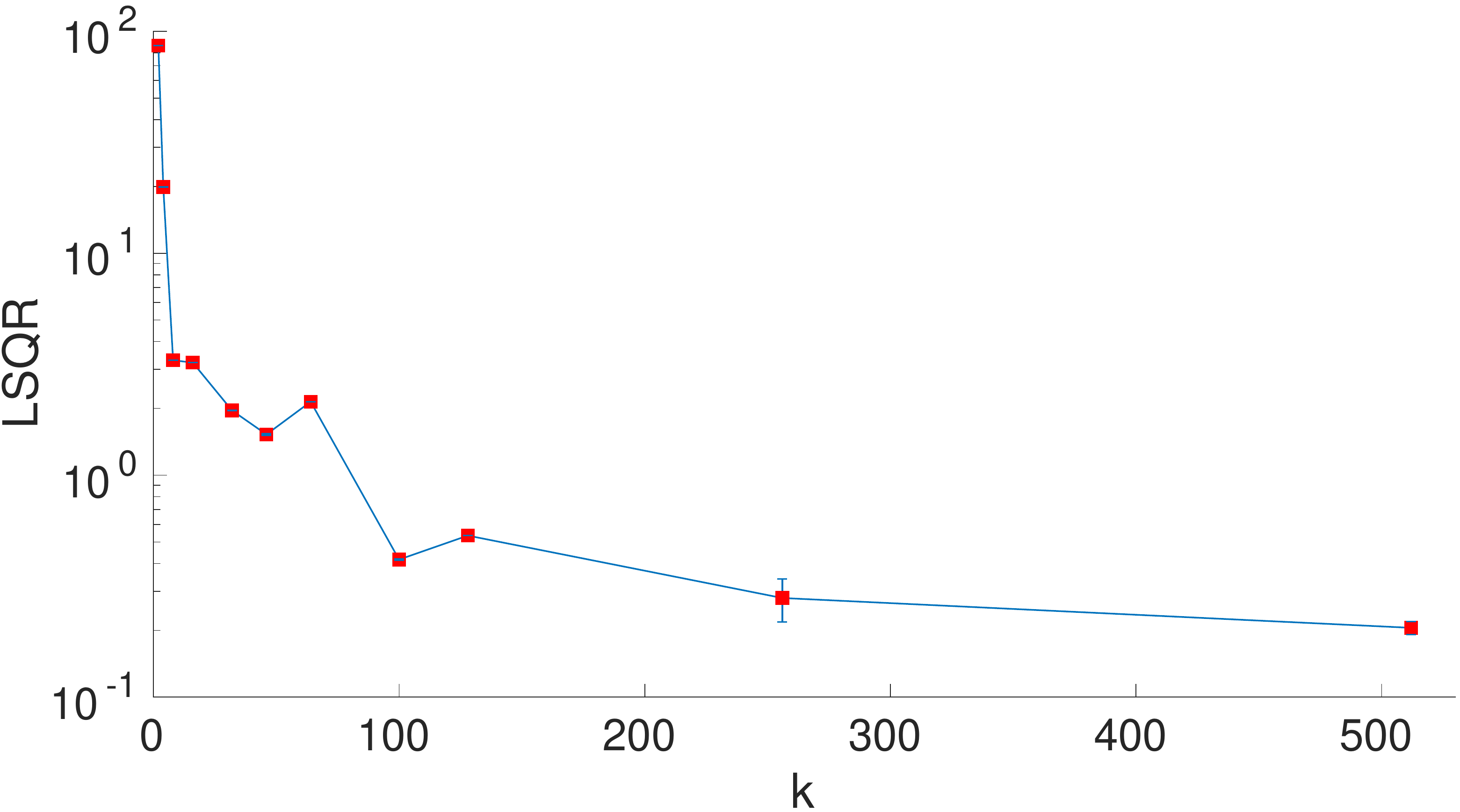}}
		
		&
		
		\subfloat[\label{fig:ColsResMADE-b}]{\includegraphics[width=0.4\textwidth]{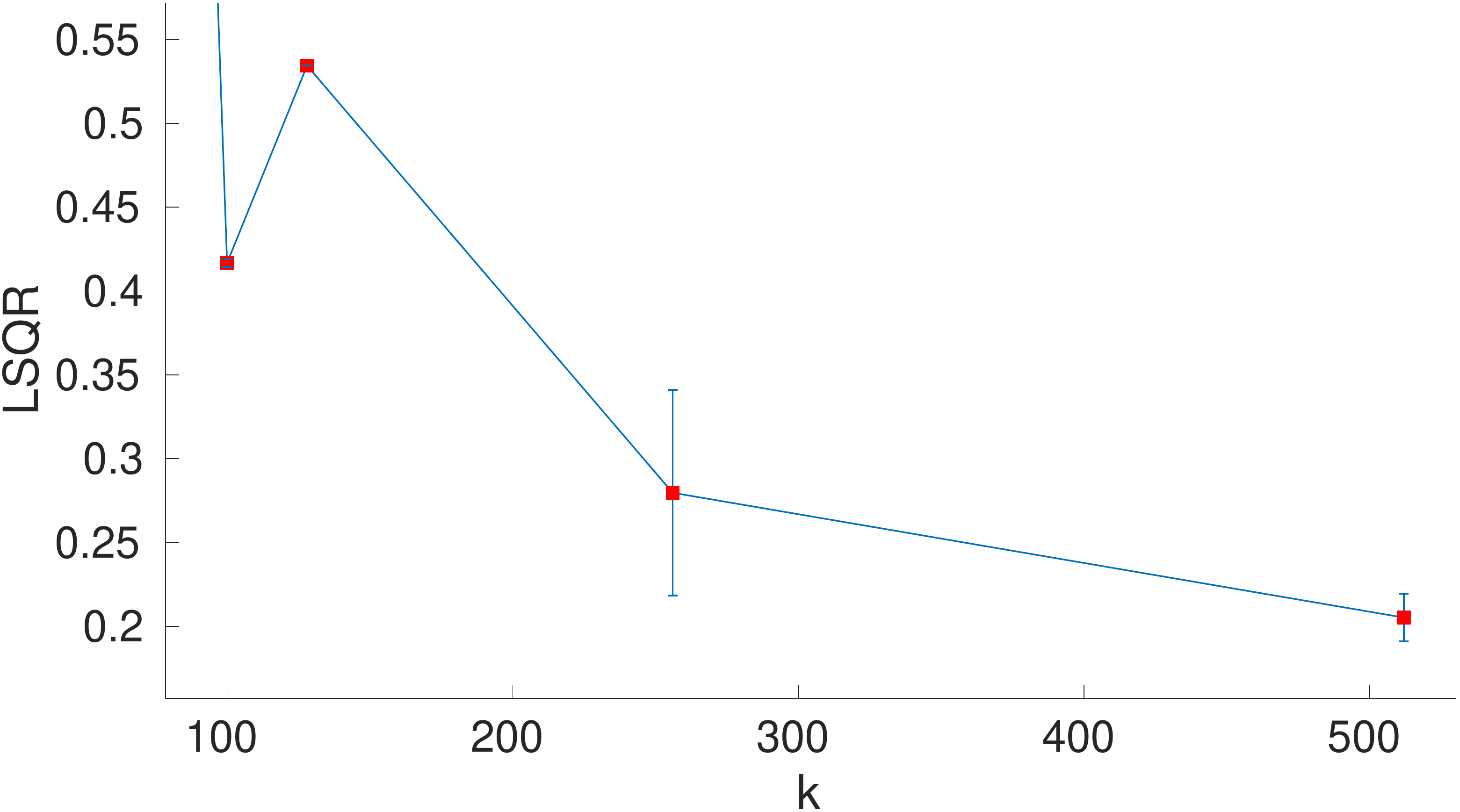}}
		
	\end{tabular}
	
	\begin{tabular}{ccc}
		
		\subfloat[\label{fig:ColsResMADE-c}]{\includegraphics[width=0.37\textwidth]{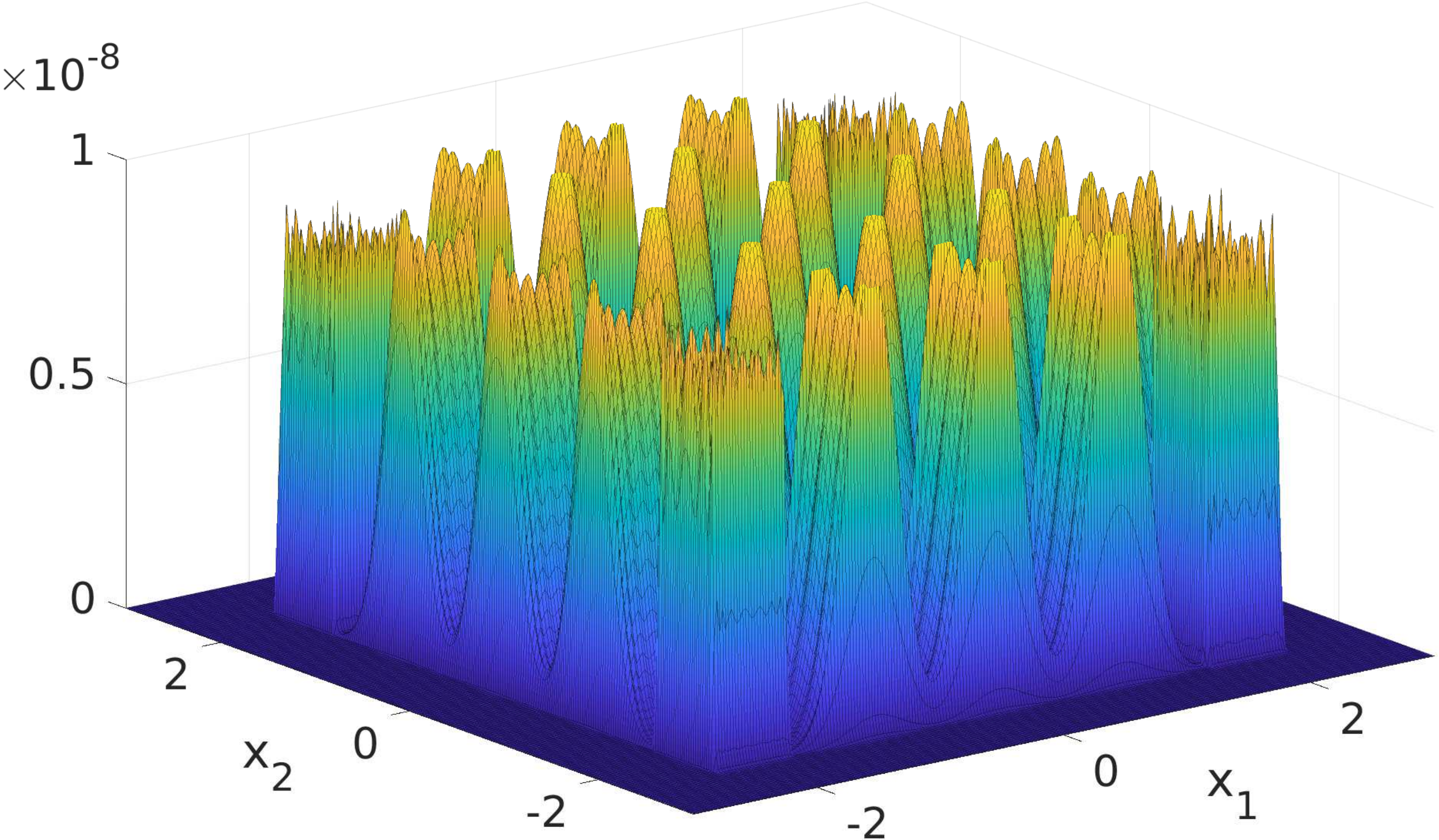}}
		&
		
		\subfloat[\label{fig:ColsResMADE-d}]{\includegraphics[width=0.27\textwidth]{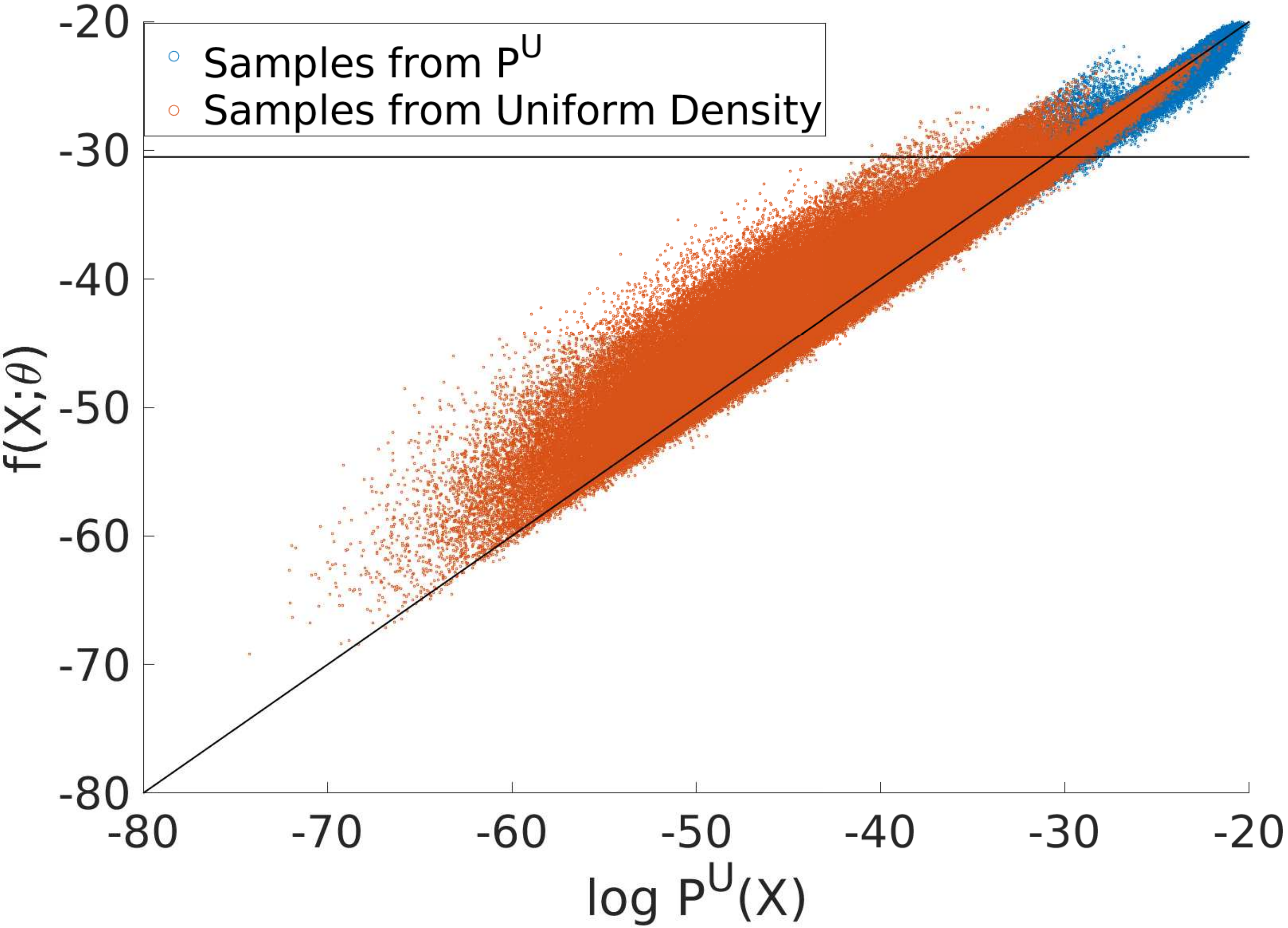}}
		
		&
		
		\subfloat[\label{fig:ColsResMADE-e}]{\includegraphics[width=0.27\textwidth]{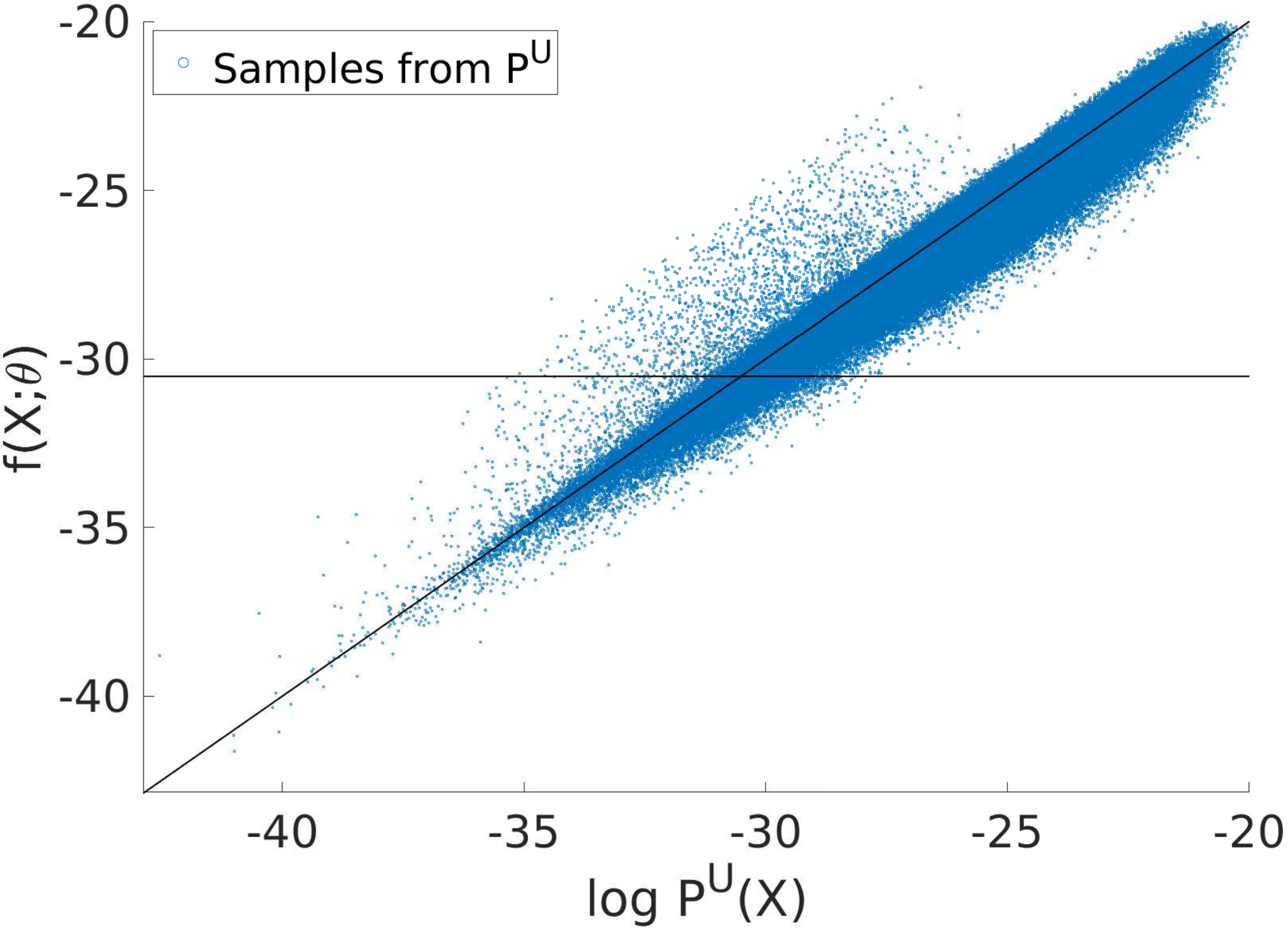}}
		
	\end{tabular}

	\protect
	\caption[Learned pdf function of \emph{Columns} distribution by MADE, where the applied NN is FC.]{Learned pdf function of \emph{Columns} distribution by MADE, where NN architecture is fully-connected with 4 layers of size 1024. The employed activation function is Relu.
		(a) $LSQR$ error (mean and standard deviation) for various values of a $k$ - a number of mixture components;
		(b) Zoom of (a);
		(c) Illustration of learned pdf function for the best model with $k = 512$. The depicted slice is $\PP(x_1, x_2) = \bar{\PP}^{\usuff}(x_1, x_2, 0, \ldots , 0)$, with $x_1$ and $x_2$ forming a grid of points in first two dimensions of the density's support. As can be seen, the estimated pdf is over-spiky in areas where the real pdf in Figure \ref{fig:ColsRes1-a} is flat. This is due to an inability of MoG model to represent flat non-zero surfaces. In contrast, PSO-LDE estimation in Figure \ref{fig:ColsRes3-a} does not have this issue.
		(d) Illustration of the estimated pdf $\bar{\PP}_{\theta}(X)$. Blue points are sampled from $\probs{\usuff}$, while red points - from $\probs{\dsuff}$, minimal 20D Uniform distribution that covers all samples from $\probs{\usuff}$. The $x$ axis represents $\log \probi{\usuff}{X}$ for each sample, $y$ axis represents $\bar{f}_{\theta}(X) \triangleq \log \bar{\PP}_{\theta}(X)$ after optimization was finished. The diagonal line represents $\bar{f}_{\theta}(X) = \log \probi{\usuff}{X}$, where we would see all points in case of \emph{perfect} model inference.
		(e) Plot from (d) with only samples from $\probs{\usuff}$.
	}
	\label{fig:ColsResMADE}
\end{figure}

This technique is based on the autoregressive property of density functions, $\PP(x_1, \ldots, x_n) = \prod_{i = 1}^{n} \PP(x_i | x_1, \ldots, x_{i-1})$, where each conditional $\PP(x_i | x_1, \ldots, x_{i-1})$ is parameterized by NN. MADE constructs a network with sequential FC layers, where the autoregressive property is preserved via masks applied on activations of each layer \citep{Germain15icml}. Likewise, each conditional can be modeled as 1D density of any known distribution family, with a typical choice being Gaussian or Mixture of Gaussians (MoG).

In our experiments we used MoG with $k$ components to model each conditional, due to the highly multi-modal nature of \emph{Columns} distribution. Moreover, we evaluated MADE for a range of various $k$, to see how the components number affects the technique's performance. Furthermore, the learning setup was similar to other experiments, with the only difference that the applied NN architecture was FC, with 4 layers of size 1024 each, and the exploited non-linearity was Relu.

In Figures \ref{fig:ColsResMADE-a}-\ref{fig:ColsResMADE-b} the $LSQR$ error is shown for each value of $k$. We can clearly see that with higher number of components the accuracy improves, where the best performance was achieved by $k = 512$ with $LSQR = 0.2 \pm 0.0141$. Furthermore, in Figure \ref{fig:ColsResMADE-c} we can see an estimated surface for the best learned model.
As observed, most of the MoG components are spent to represent flat peaks of the target density.
Such outcome is natural since for MoG to approximate flat areas the value of $k$ has to go to infinity. Moreover, this demonstrates the difference between parametric and non-parametric techniques. Due to an explicit parametrization of each conditional, MADE can be considered as a member of the former family, while PSO-LDE is definitely a member of the latter. Further, non-parametric approaches are known to be more robust/flexible in general. In overall, we can see that PSO-LDE outperforms MADE even for a large number of mixture components.

\paragraph{Masked Autoregressive Flow}

\begin{figure}[tb]
	\centering
	
	\begin{tabular}{cc}
		
		\subfloat[\label{fig:ColsResMAF-a}]{\includegraphics[width=0.4\textwidth]{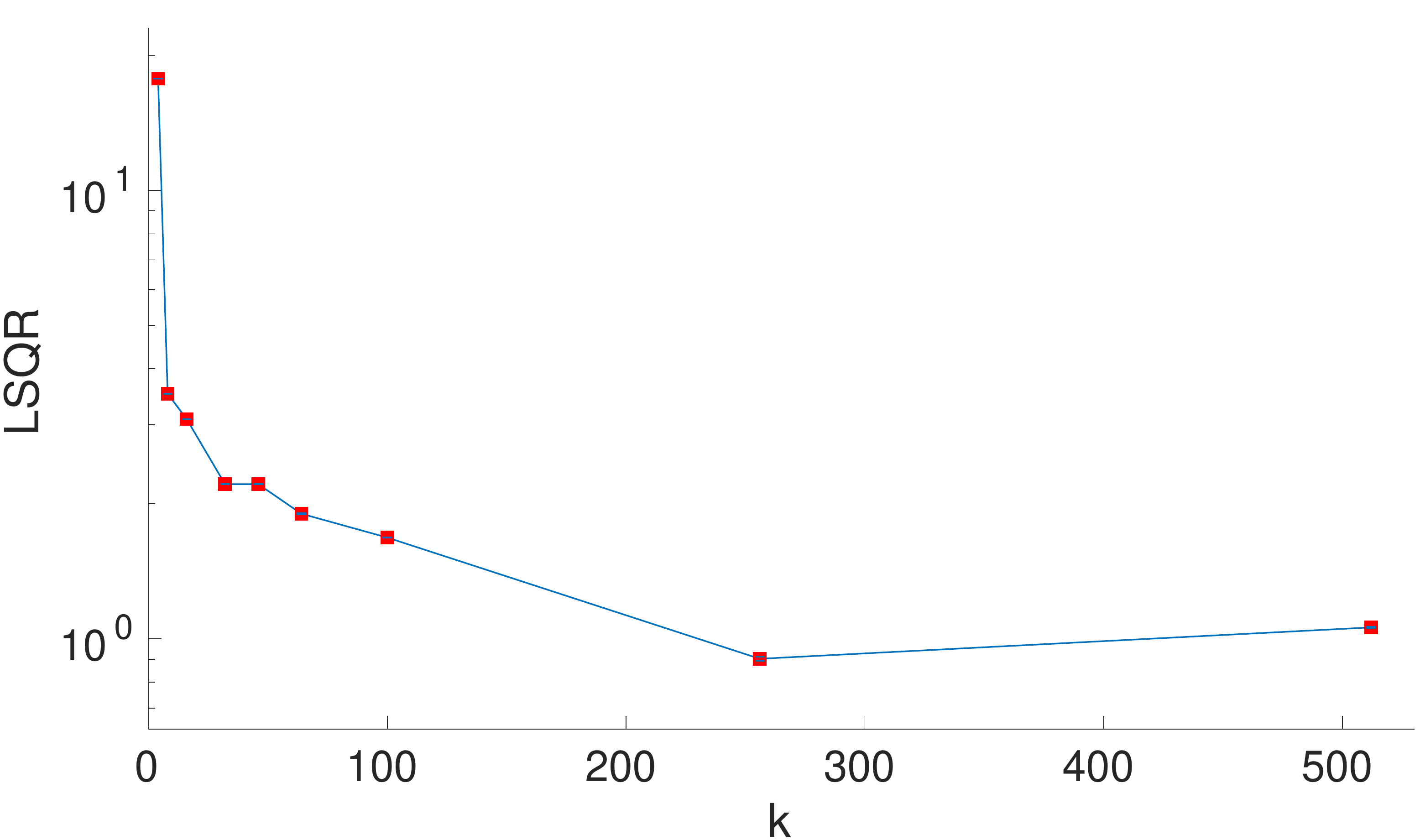}}
		
		&
		
		\subfloat[\label{fig:ColsResMAF-b}]{\includegraphics[width=0.4\textwidth]{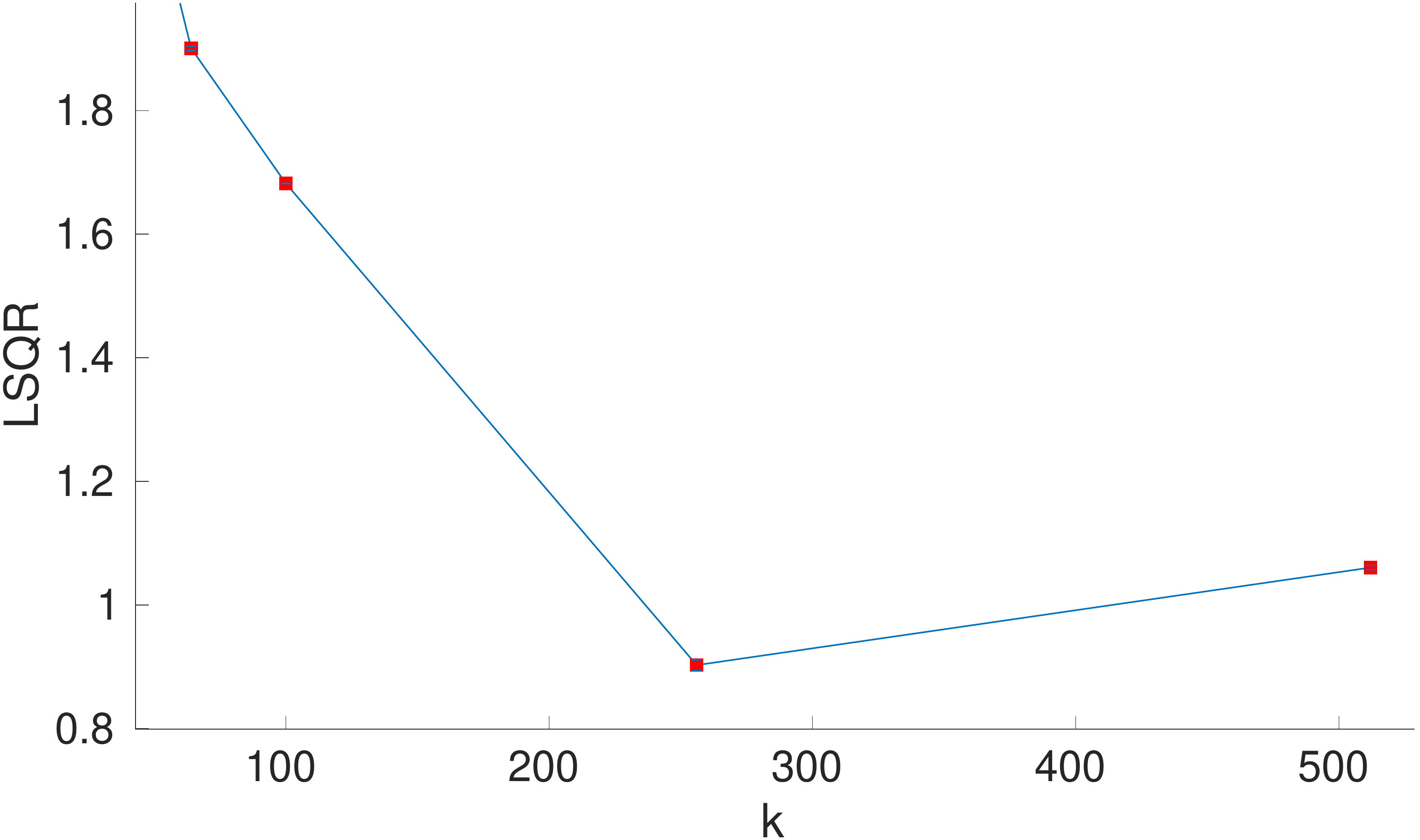}}
		
	\end{tabular}
	
	\begin{tabular}{ccc}
		
		\subfloat[\label{fig:ColsResMAF-c}]{\includegraphics[width=0.37\textwidth]{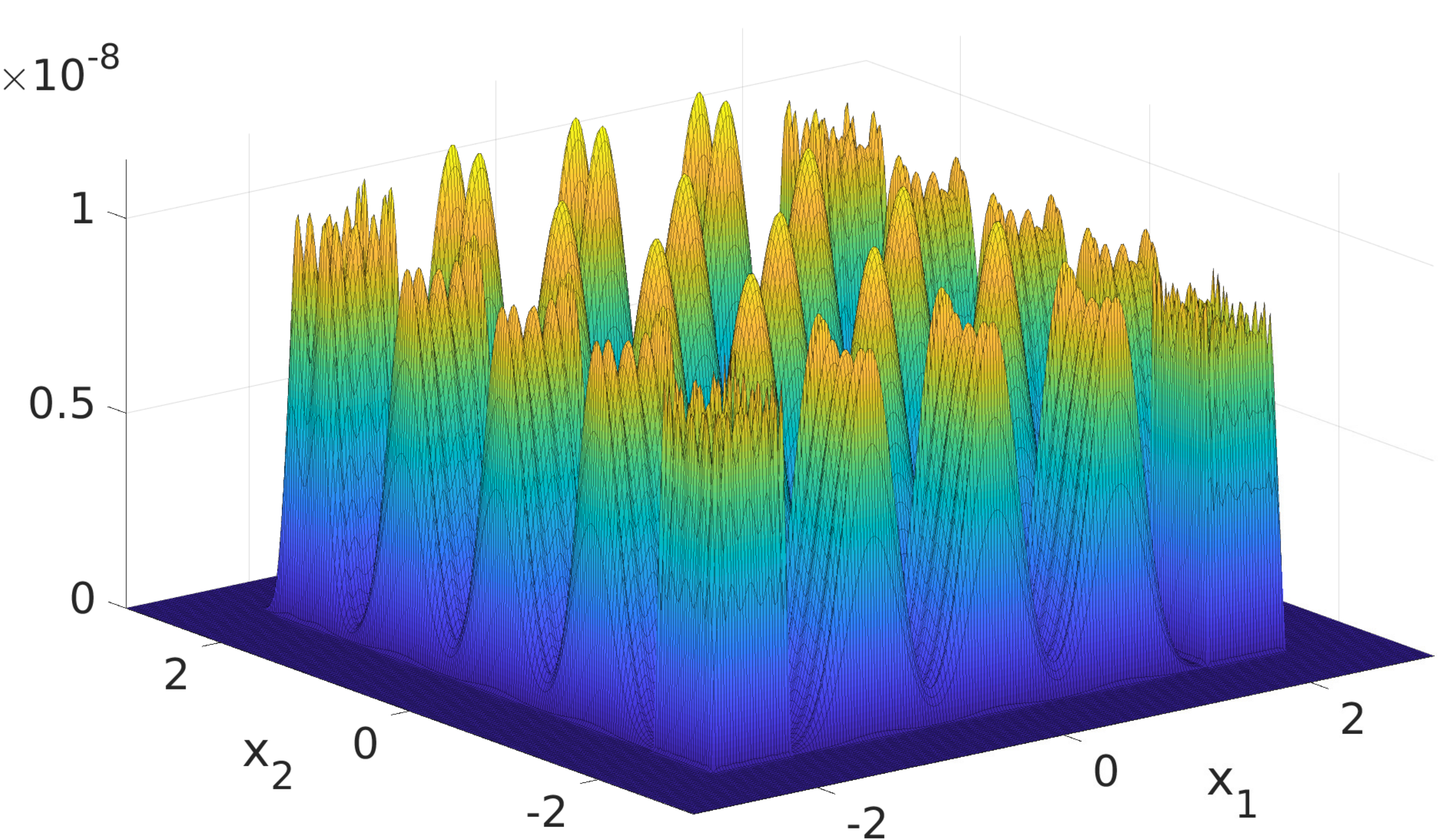}}
		&
		
		\subfloat[\label{fig:ColsResMAF-d}]{\includegraphics[width=0.27\textwidth]{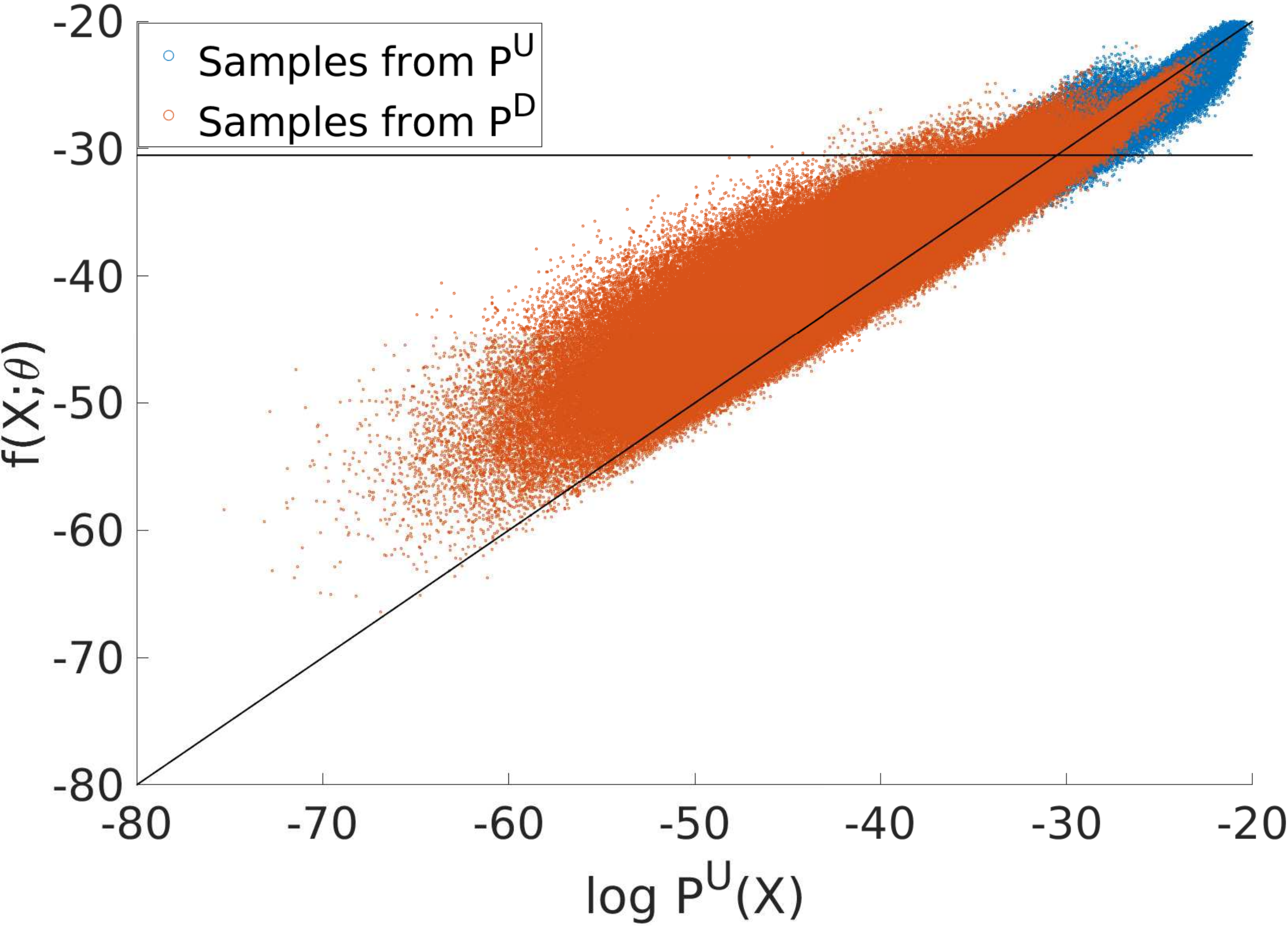}}
		
		&
		
		\subfloat[\label{fig:ColsResMAF-e}]{\includegraphics[width=0.27\textwidth]{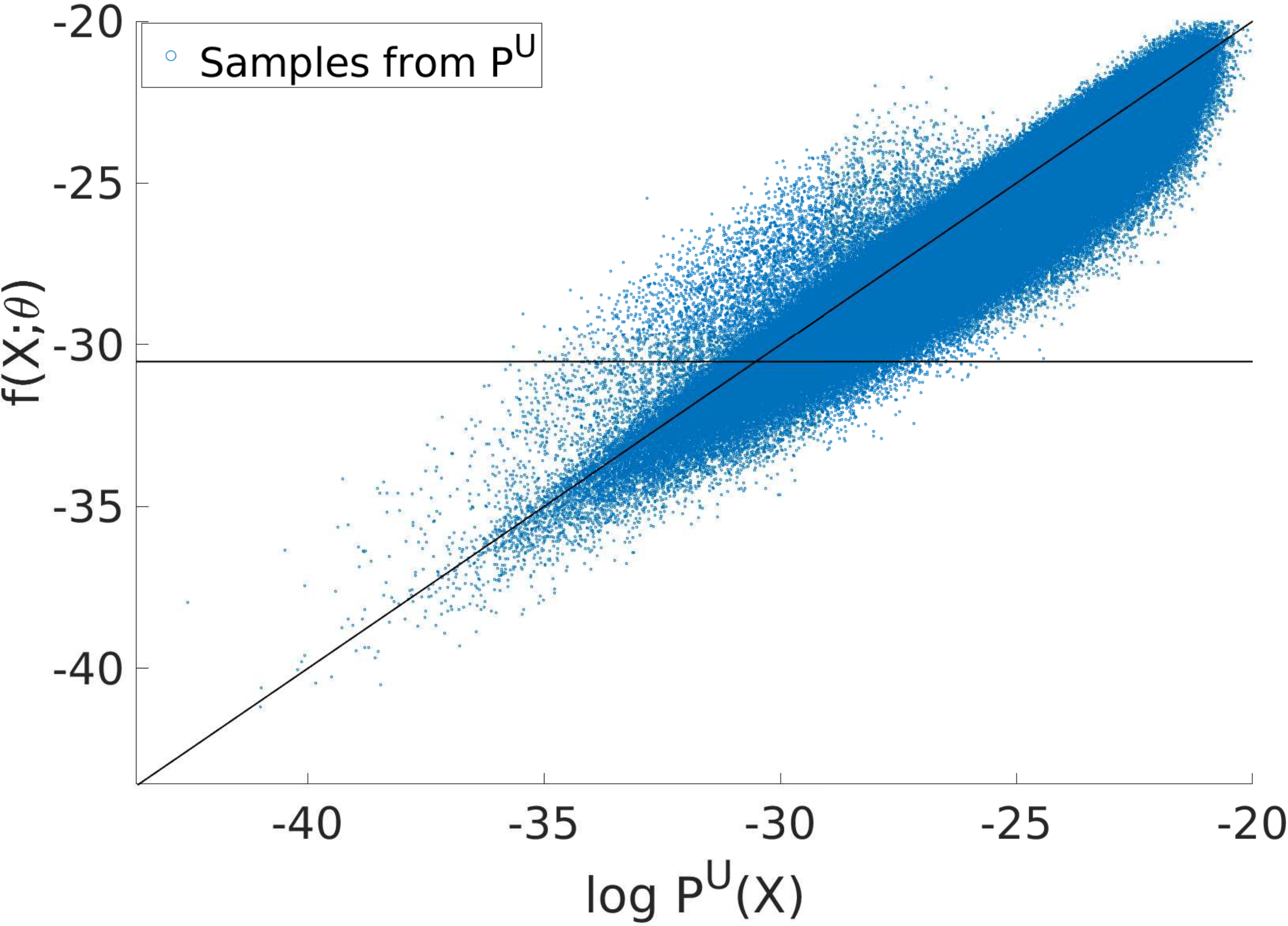}}
		
	\end{tabular}

	\protect
	\caption[Learned pdf function of \emph{Columns} distribution by MAF, with 5 inner MADE bijections and MADE MoG as a base density.]{Learned pdf function of \emph{Columns} distribution by MAF, with 5 inner MADE bijections and MADE MoG as a base density.
		(a) $LSQR$ error (mean and standard deviation) for various values of a $k$ - a number of mixture components;
		(b) Zoom of (a);
		(c) Illustration of learned pdf function for the best model with $k = 256$. The depicted slice is $\PP(x_1, x_2) = \bar{\PP}^{\usuff}(x_1, x_2, 0, \ldots , 0)$, with $x_1$ and $x_2$ forming a grid of points in first two dimensions of the density's support. The same over-spiky behavior can be observed as in Figure \ref{fig:ColsResMADE-c}.
		(d) Illustration of the estimated pdf $\bar{\PP}_{\theta}(X)$ for the best model, constructed similarly to Figure \ref{fig:ColsResMADE-d}.
		(e) Plot from (d) with only samples from $\probs{\usuff}$.
	}
	\label{fig:ColsResMAF}
\end{figure}

Shortly MAF, this technique combines an NN architecture of the previous MADE method with the idea of a normalizing flow \citep{Rezende15arxiv} where a bijective transformation $h(\cdot)$ is applied to transform a priori chosen base density into the target density. Such bijective transformation allows to re-express the density of target data via an inverse of $h(\cdot)$ and via the known pdf of a base density, and further to infer the target pdf via a standard MLE loss. Moreover, the architecture of MADE can be seen as such bijective transformation, which is specifically exploited by MAF method \citep{Papamakarios17nips}. Particularly, several MADE transformations are stuck together into one large bijective transformation, which allows for richer representation of the inferred pdf. In our experiments we evaluated MAF method with 5 inner MADE bijections.

Furthermore, the original paper proposed two MAF types. First one, referred as MAF($\cdot$) in the paper, uses multivariate normal distribution as a base density. During the evaluation this type did not succeed to infer 20D \emph{Columns} distribution at all, probably because of its inability to handle distribution with $5^{20}$ modes.

The second MAF type, referred as MAF MoG($\cdot$) in the paper, uses MADE MoG as a base density in addition to the MADE-based bijective transformation. This type showed better performance w.r.t. first type, and we used it as an another baseline. Likewise, note that also in the original paper \citep{Papamakarios17nips} this type was shown on average to be superior between the two.

Like in MADE experiments, also here we tested MAF MoG for different values of $k$ - mixture components number of the base density, parametrized by a separate MADE MoG model.
In Figures \ref{fig:ColsResMAF-a}-\ref{fig:ColsResMAF-b} the $LSQR$ error is shown for each value of $k$. As in MADE case, also here accuracy improves with a higher number of components. The top accuracy was achieved by $k = 256$ with $LSQR = 0.9 \pm 0.009$. On average, MAF MoG showed the same trends as MADE method, yet with some higher $LSQR$ error. Moreover, during the experiments it was observed as a highly unstable technique, with thorough hyper-parameter tuning needed to overcome numerical issues of this approach.

In overall, we observed that non-parametric PSO-LDE is superior to other state-of-the-art baselines when dealing with highly multi-modal \emph{Columns} distribution.

\subsubsection{NN Architectures Evaluation}
\label{sec:ColumnsEstNNAME}

Here we compare performance of various NN architectures for the pdf estimation task. 

\paragraph{FC Architecture}

We start with applying PSO-LDE with different values of $\alpha$ where the used NN architecture is now fully-connected (FC), with 4 layers of size 1024. In Figure \ref{fig:ColsRes5-a} we show $LSQR$ for different $\alpha$, where again we can see that $\alpha = \frac{1}{4}$ (and now also $\alpha = \frac{1}{3}$) performs better than other values of $\alpha$. On average, $LSQR$ error is around 2.5 which is significantly higher than 0.057 for BD architecture. Note also that BD network, used in Section \ref{sec:ColumnsEstME}, is twice smaller than FC network, containing only 902401 weights in BD vs 2121729 in FC, yet it produced a significantly better performance.

Further, in Figure \ref{fig:ColsRes5-b} we illustrated the learned surface $f_{\theta}(X)$ for a single FC model with $\alpha = \frac{1}{4}$. Compared with Figure \ref{fig:ColsRes3-b}, we can see that FC architecture produces a much less accurate NN surface. We address it to the fact that in BD network the \emph{gradient similarity} $g_{\theta}(X, X')$ has much smaller overall side-influence (bandwidth) and the induced bias compared to the FC network, as was demonstrated in Section \ref{sec:BDLayers}. Hence, BD models are more flexible than FC and can be pushed closer to the target function $\log \probi{\usuff}{X}$, producing more accurate estimations.

Additionally, note the error asymmetry in Figure \ref{fig:ColsRes5-b} which was already observed in Figure \ref{fig:ColsRes3-b}. Also here we can see that the entire cloud of points is rotated from zero error line $f_{\theta}(X) = \log \probi{\usuff}{X}$ by some angle where the rotation axis is also around horizontal line $\log \probi{\dsuff}{X} = - 30.5$. As explained in Section \ref{sec:ColumnsEstME}, according to our current hypotheses there are global \up and \down side-influence forces that are responsible for this angle.

\begin{figure}[tb]
	\centering
	
	\begin{tabular}{cccc}

		\subfloat[\label{fig:ColsRes5-a}]{\includegraphics[width=0.4\textwidth]{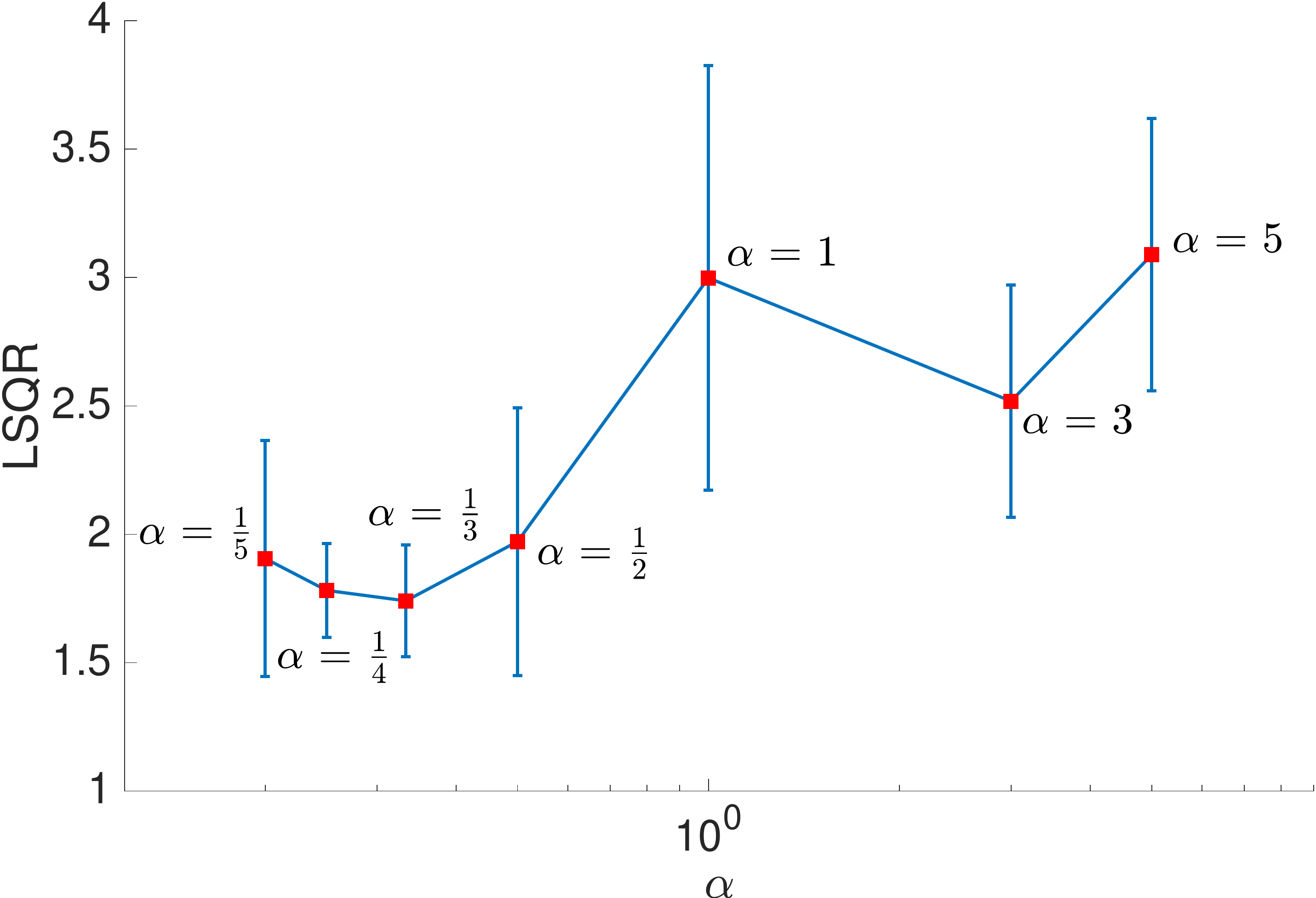}}
		&
		
		\subfloat[\label{fig:ColsRes5-b}]{\includegraphics[width=0.4\textwidth]{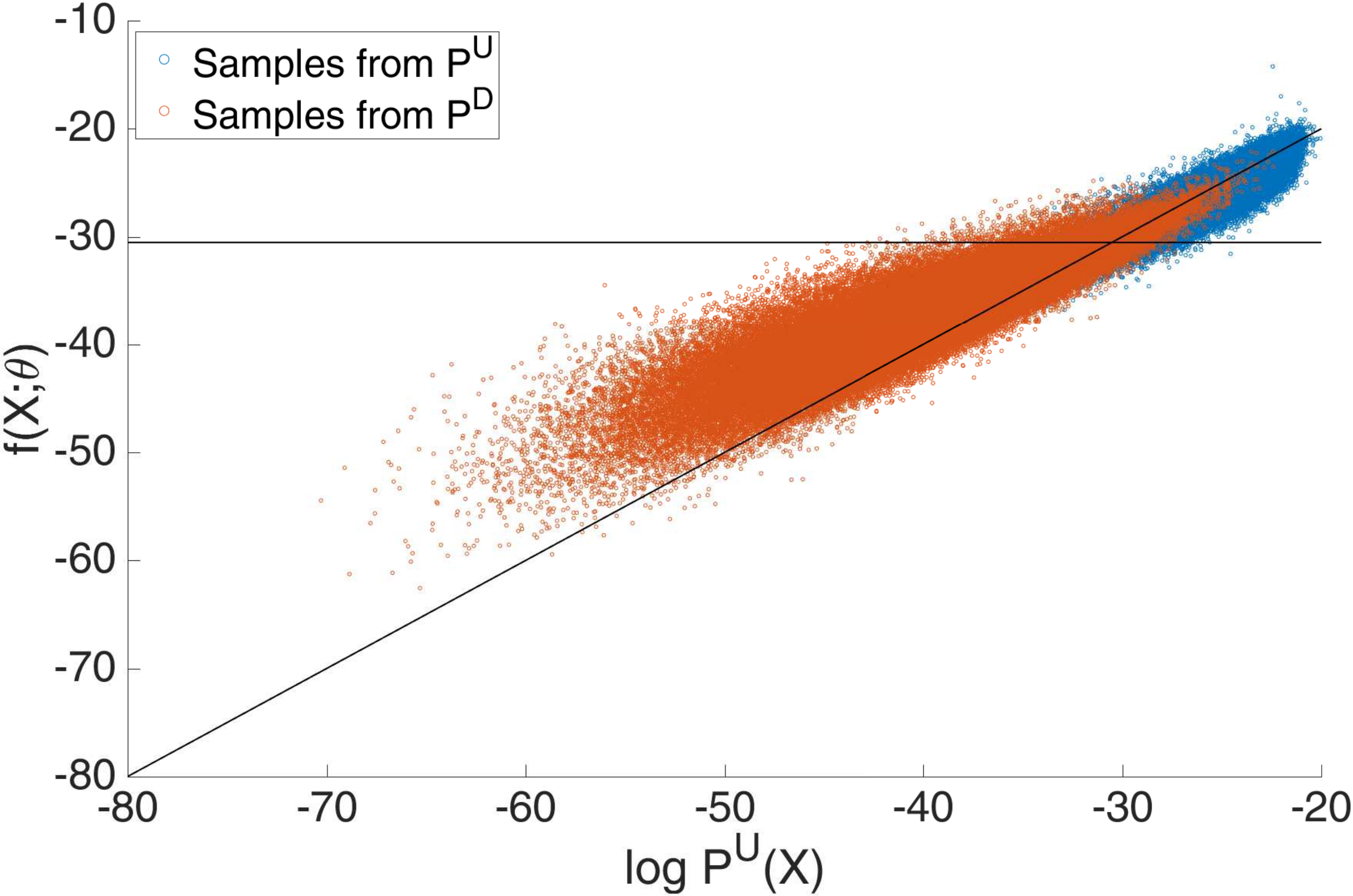}}
		
	\end{tabular}
	
	\protect
	\caption[Evaluation of PSO-LDE for estimation of \emph{Columns} distribution for different values of a hyper-parameter $\alpha$, where NN architecture is fully-connected with 4 layers of size 1024.]{Evaluation of PSO-LDE for estimation of \emph{Columns} distribution, where NN architecture is fully-connected with 4 layers of size 1024 (see Section \ref{sec:BDLayers}). (a) For different values of a hyper-parameter $\alpha$, $LSQR$ error is reported along with its empirical standard deviation.
		(b) Illustration of the learned surface $f_{\theta}(X)$. Blue points are sampled from $\probs{\usuff}$, while red points from $\probs{\dsuff}$. The $x$ axes represent $\log \probi{\usuff}{X}$ for each sample, $y$ axes - the surface height $f_{\theta}(X)$ after optimization was finished. Diagonal line represents $f_{\theta}(X) = \log \probi{\usuff}{X}$, where we would see all points in case of \emph{perfect} model inference. The black horizontal line represents $\log \probi{\dsuff}{X} = - 30.5$ which is constant for the Uniform density. As can be seen, there are high approximation errors at both $X^{\usuff}$ and $X^{\dsuff}$ locations. Compared with BD architecture in Figure \ref{fig:ColsRes3-b}, on average the error is much higher for FC network.
	}
	\label{fig:ColsRes5}
\end{figure}

\begin{figure}[!tbp]
	\centering
	
	\begin{tabular}{cc}

		\subfloat[\label{fig:ColsRes5.2-a}]{\includegraphics[width=0.45\textwidth]{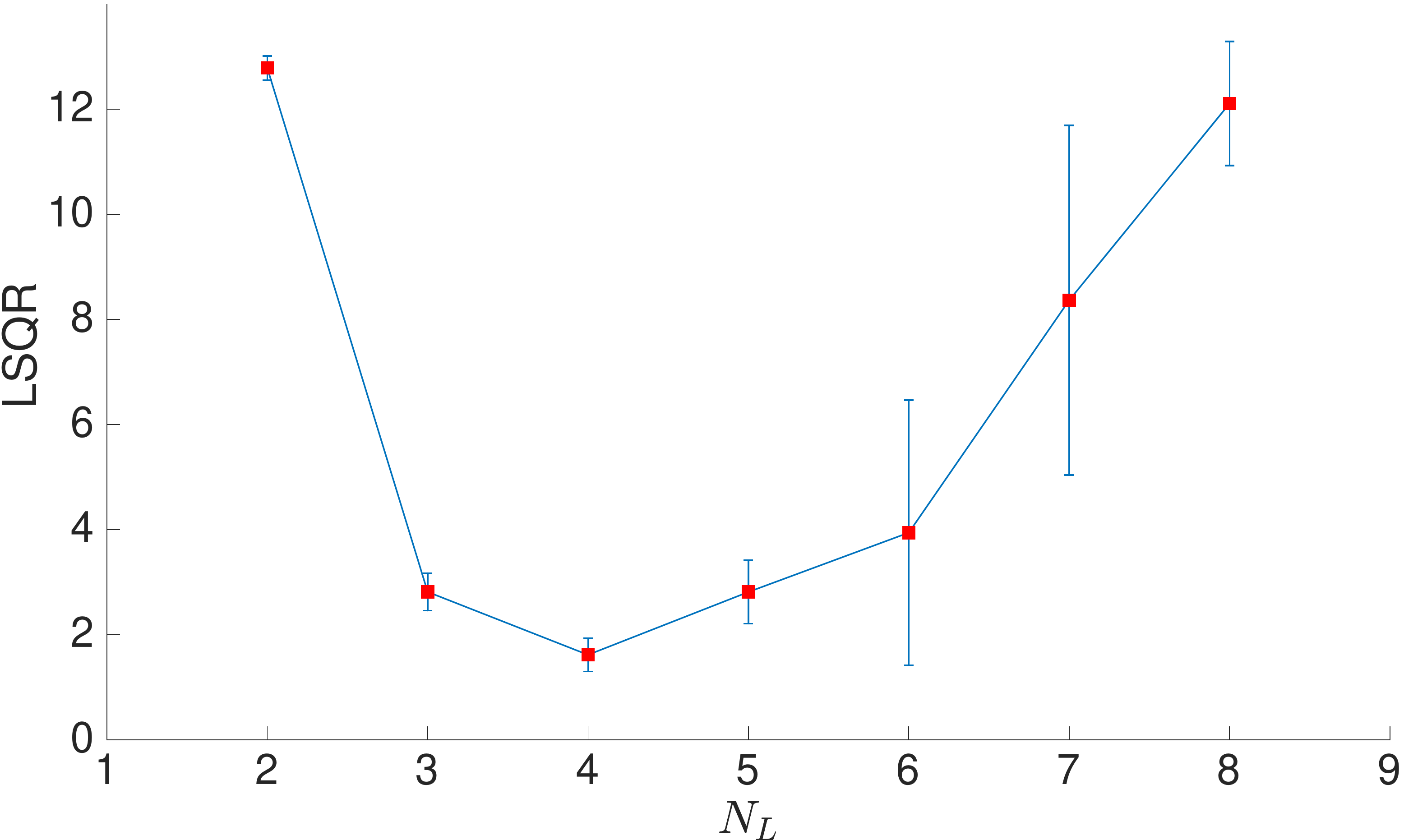}}

		&
		
		\subfloat[\label{fig:ColsRes5.2-b}]{\includegraphics[width=0.45\textwidth]{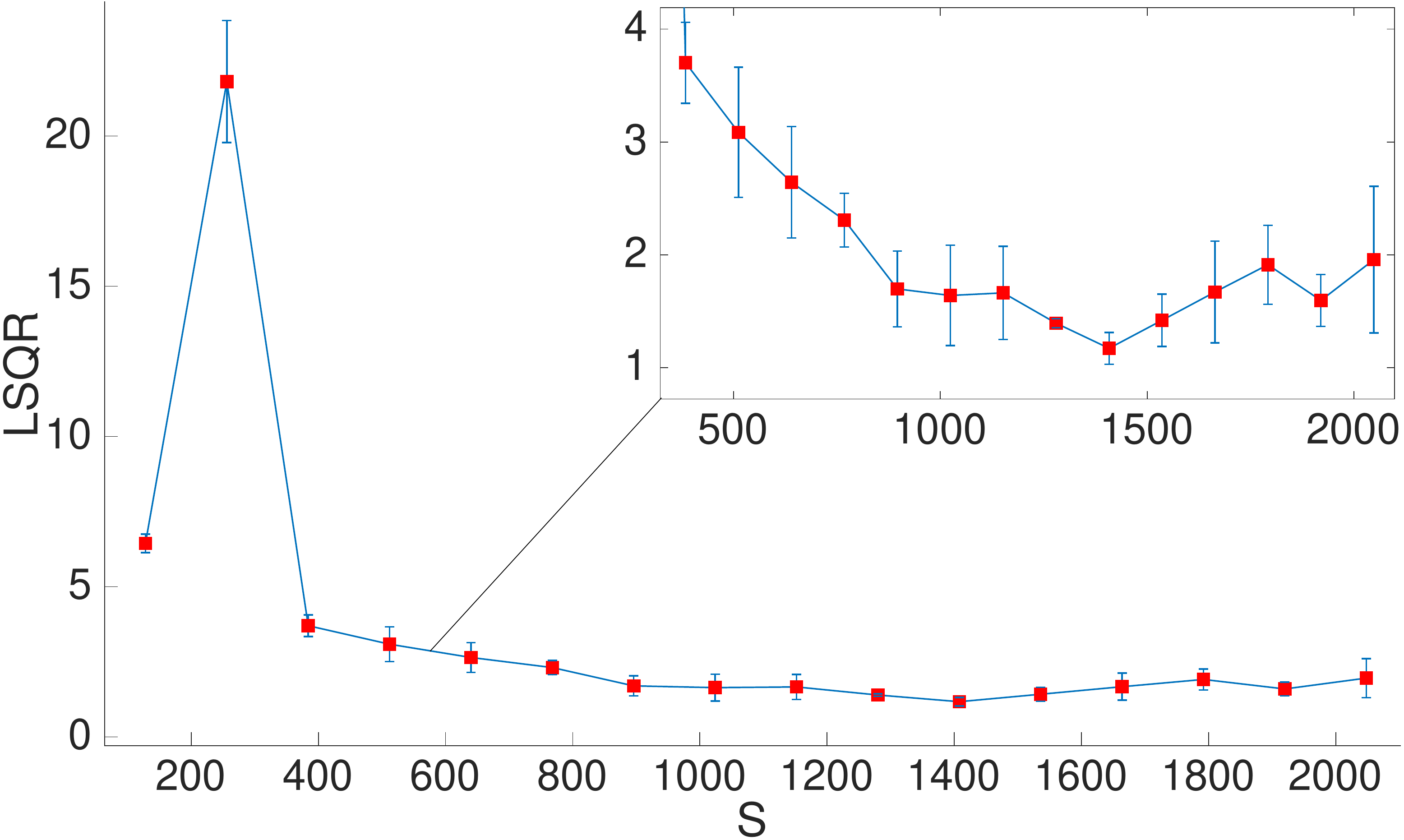}}

	\end{tabular}
	
	\protect
	\caption[Evaluation of PSO-LDE for estimation of \emph{Columns} distribution, where NN architecture is fully-connected with various sizes.]{Evaluation of PSO-LDE for estimation of \emph{Columns} distribution, where NN architecture is fully-connected (see Section \ref{sec:BDLayers}). The applied loss is PSO-LDE with $\alpha = \frac{1}{4}$. (a) For different number of layers $N_L$, $LSQR$ error is reported, where a size of each layer is $S = 1024$.
		(b) For different values of layer size $S$, $LSQR$ error is reported, where a number of layers is $N_L = 4$.
	}
	\label{fig:ColsRes5.2}
\end{figure}

Further, to ensure that FC architecture can not produce any better results for the given inference task, we also evaluate it for different values of $N_L$ and $S$ - number of layers and size of each layer respectively. In Figure \ref{fig:ColsRes5.2} we see that $N_L = 4$ and $S = 1408$ achieve best results for FC NN. Yet, the achieved performance is only $LSQR = 1.17$, which is still nowhere near the accuracy of BD architecture.

\begin{figure}[!tbp]
	\centering
	
	\begin{tabular}{c}

		\subfloat[\label{fig:ColsRes6-a}]{\includegraphics[width=0.95\textwidth]{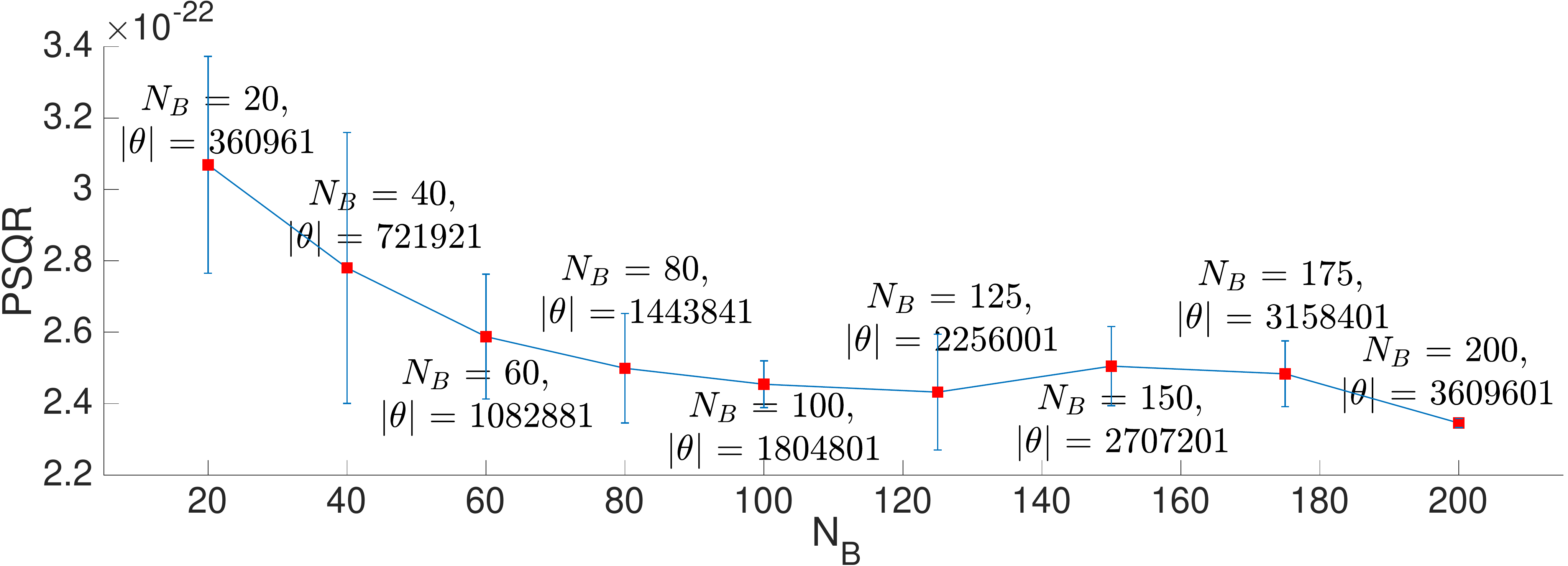}}
		\\
		
		\subfloat[\label{fig:ColsRes6-b}]{\includegraphics[width=0.95\textwidth]{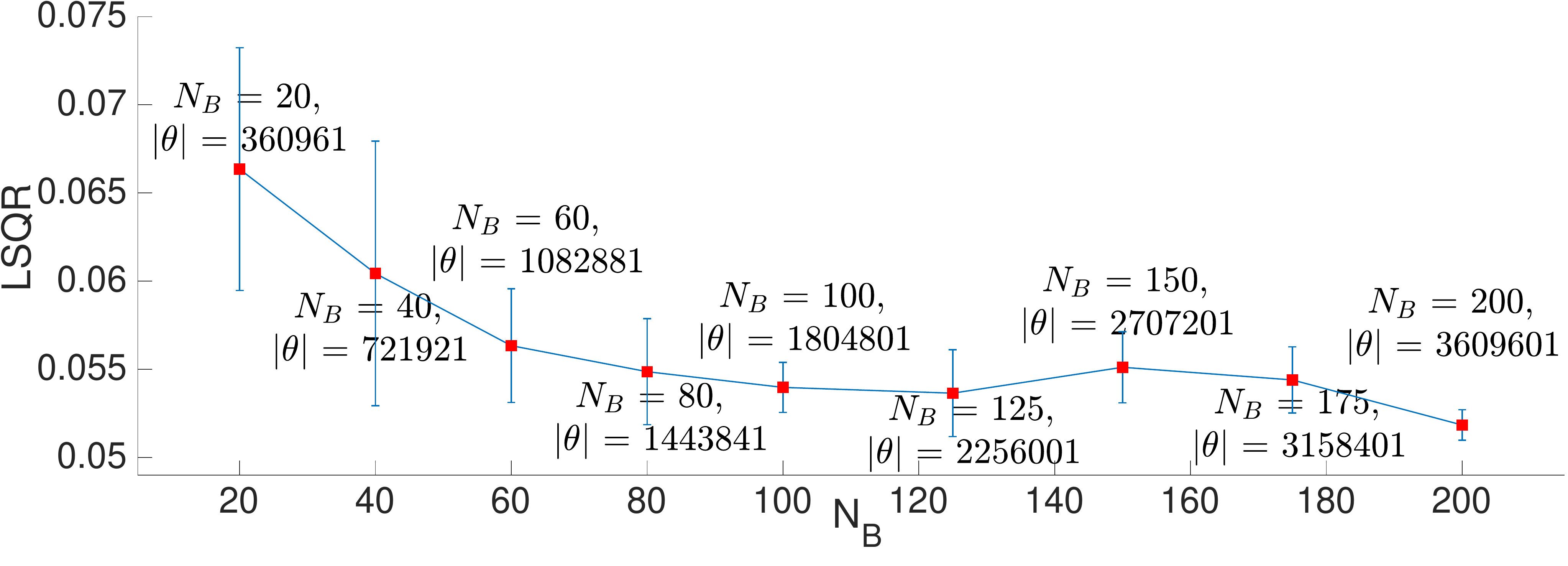}}
		
		\\

\subfloat[\label{fig:ColsRes6-c}]{\includegraphics[width=0.95\textwidth]{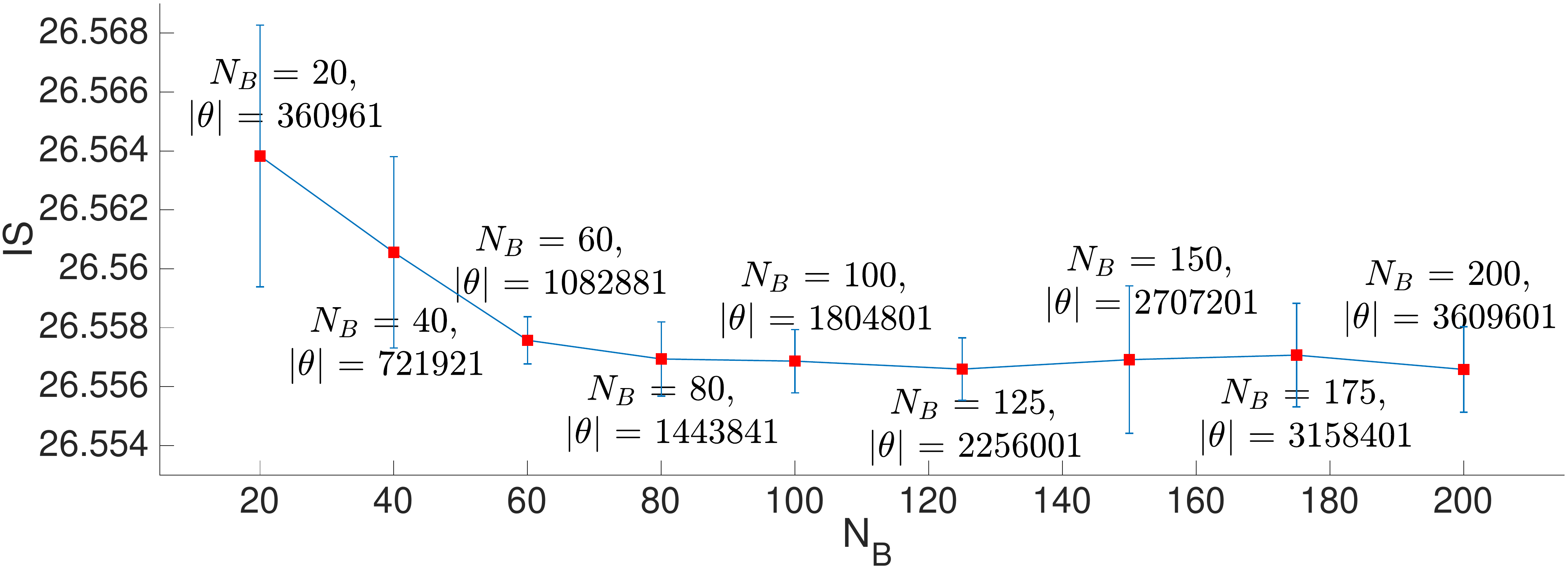}}

	\end{tabular}
	
	\protect
	\caption[Evaluation of PSO-LDE for estimation of \emph{Columns} distribution, where NN architecture is block-diagonal with various block numbers.]{Evaluation of PSO-LDE for estimation of \emph{Columns} distribution, where NN architecture is block-diagonal with 6 layers and block size $S_B = 64$ (see Section \ref{sec:BDLayers}). The number of blocks $N_B$ is changing. The applied loss is PSO-LDE with $\alpha = \frac{1}{4}$.
		For different values of $N_B$, (a) $PSQR$ (b) $LSQR$ and (c) $IS$ are reported. As observed, the bigger number of blocks (e.g. independent channels) $N_B$ improves the pdf inference.
	}
	\label{fig:ColsRes6}
\end{figure}

\paragraph{BD Architecture}
Further, we performed learning with a BD architecture, but with increasing number of blocks $N_B$. For $N_B$ taking values between 20 and 200, in Figure \ref{fig:ColsRes6} we can see that with bigger $N_B$ there is improvement in approximation accuracy. This can be explained by the fact that bigger $N_B$ produces bigger number of independent transformation channels inside NN; with more such channels there is less parameter sharing and side-influence between far away input regions - different regions on average rely on different transformation channels. As a result, the NN becomes highly flexible.
Further, in the setting of infinite dataset such high NN flexibility is desirable, and leads to a higher approximation accuracy. In contrast, in Section \ref{sec:ColumnsEstSmallDT} below we will see that for a smaller dataset size the relation between NN flexibility and the accuracy is very different.

\begin{figure}[tb]
	\centering
	
	\begin{tabular}{c}

		\subfloat[\label{fig:ColsRes6.0-a}]{\includegraphics[width=0.75\textwidth]{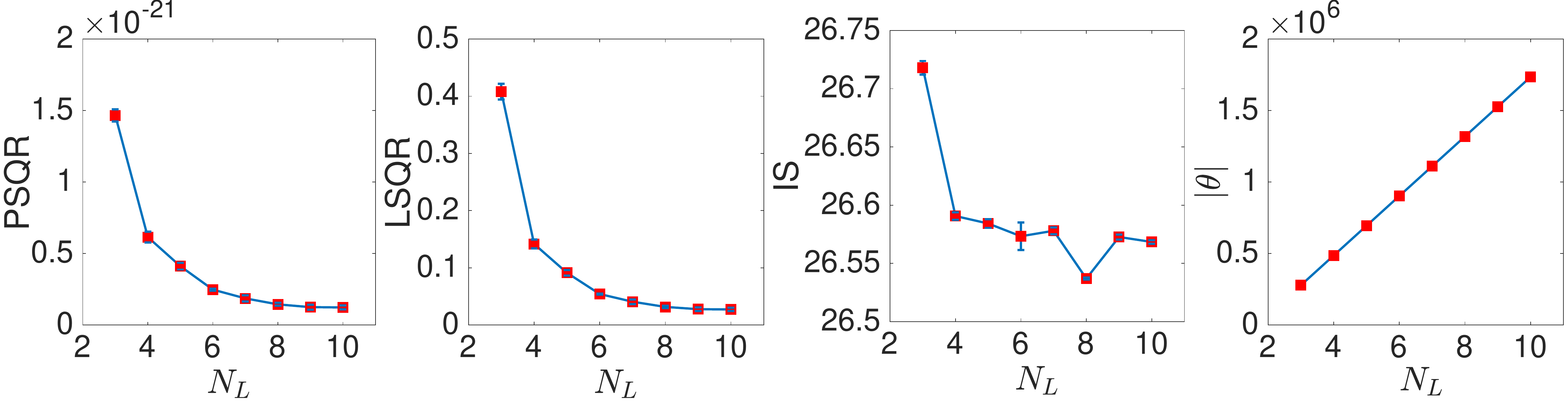}}
		\\
		
		\subfloat[\label{fig:ColsRes6.0-b}]{\includegraphics[width=0.408\textwidth]{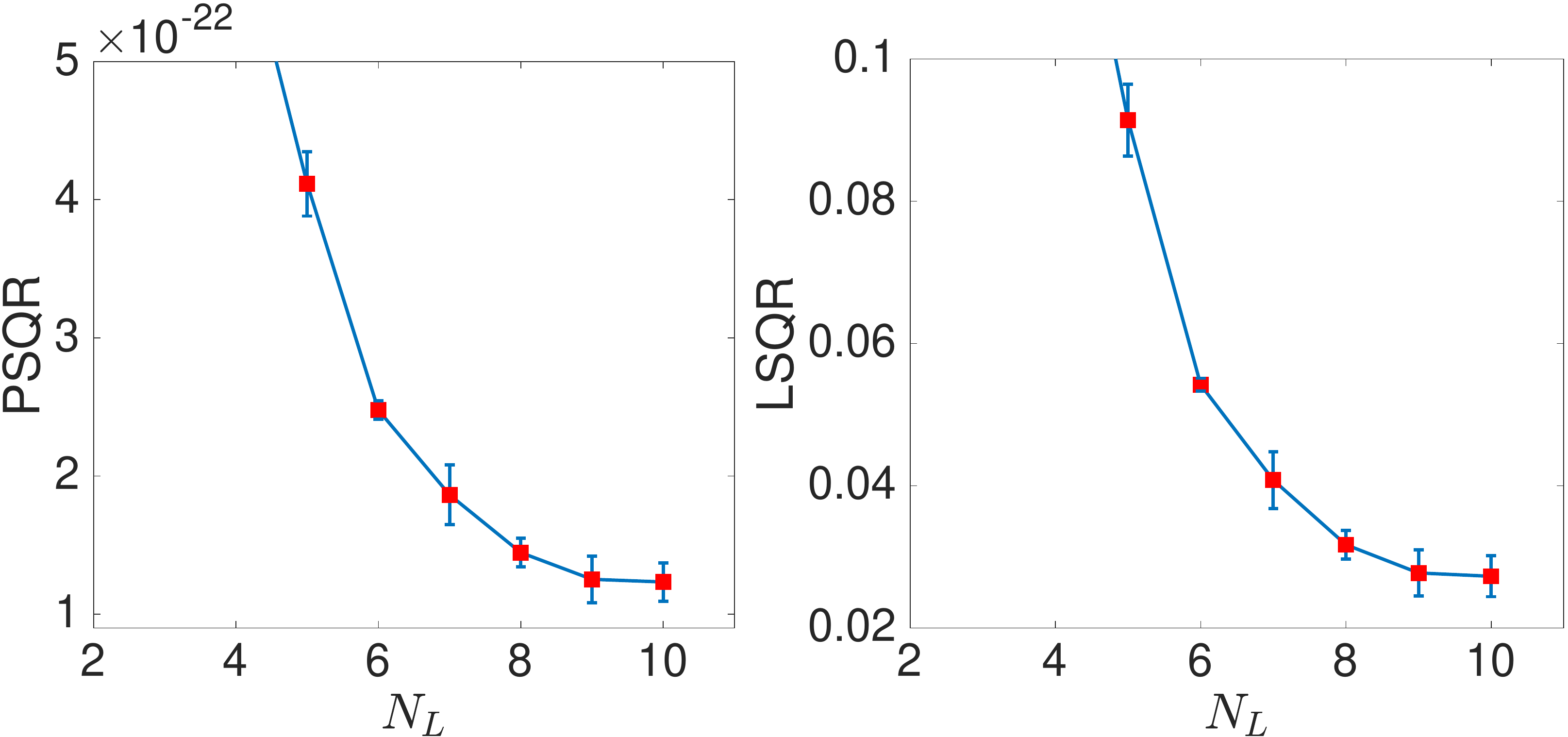}}
		
	\end{tabular}
	
	\protect
	\caption[Evaluation of PSO-LDE for estimation of \emph{Columns} distribution, where NN architecture is block-diagonal with various numbers of layers.]{Evaluation of PSO-LDE for estimation of \emph{Columns} distribution, where NN architecture is block-diagonal with number of blocks $N_B = 50$ and block size $S_B = 64$ (see Section \ref{sec:BDLayers}). The number of layers $N_L$ is changing from 3 to 10. The applied loss is PSO-LDE with $\alpha = \frac{1}{4}$.
		(a) For different values of $N_L$ we report $PSQR$, $LSQR$ and $IS$, and their empirical standard deviation. Additionally, in the last column we depict the size of $\theta$ for each value of $N_L$.
		(b) Zoom of (a).
	}
	\label{fig:ColsRes6.0}
\end{figure}

\begin{figure}[tb]
	\centering

	\begin{tabular}{c}

		\subfloat[\label{fig:ColsRes6.01-a}]{\includegraphics[width=0.75\textwidth]{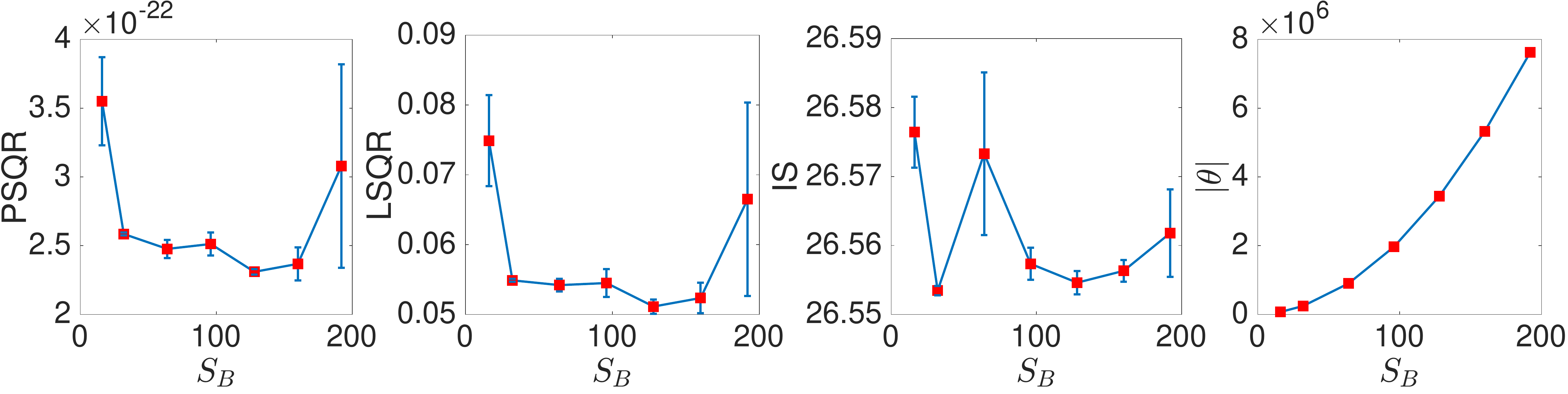}}
		\\
		
		\subfloat[\label{fig:ColsRes6.01-b}]{\includegraphics[width=0.4099\textwidth]{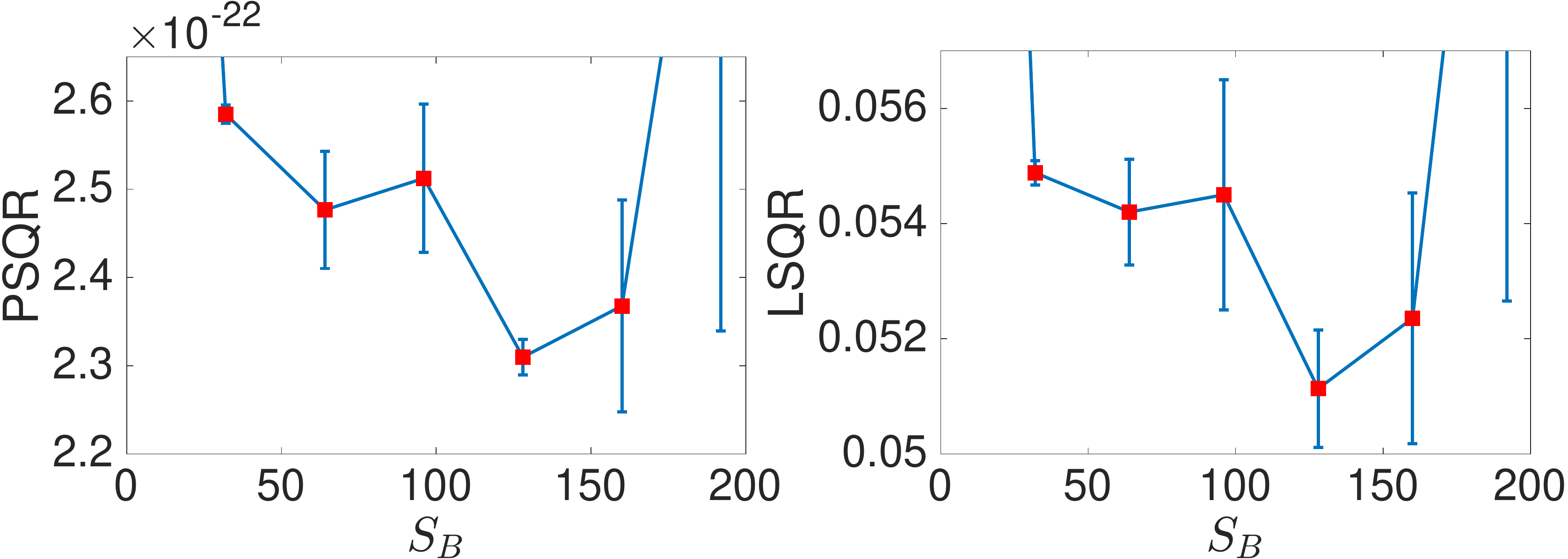}}
		
	\end{tabular}
	
	\protect
	\caption[Evaluation of PSO-LDE for estimation of \emph{Columns} distribution, where NN architecture is block-diagonal with various block sizes.]{Evaluation of PSO-LDE for estimation of \emph{Columns} distribution, where NN architecture is block-diagonal with 6 layers and number of blocks $N_B = 50$ (see Section \ref{sec:BDLayers}). The block size $S_B$ is taking values $\{16, 32, 64, 96, 128, 160, 192 \}$. The applied loss is PSO-LDE with $\alpha = \frac{1}{4}$.
		(a) For different values of $S_B$ we report $PSQR$, $LSQR$ and $IS$, and their empirical standard deviation. Additionally, in the last column we depict the size of $\theta$ for each value of $S_B$.
		(b) Zoom of (a).
	}
	\label{fig:ColsRes6.01}
\end{figure}

Likewise, we experiment with number of layers $N_L$ to see how the  network depth of a BD architecture affects the accuracy of pdf inference. In Figure \ref{fig:ColsRes6.0} we see that deeper networks allow us to further decrease $LSQR$ error to around $0.03$. Also, we can see that at some point increasing $N_L$ causes only a slight error improvement. Thus, increasing $N_L$ beyond that point is not beneficial, since for a very small error reduction we will pay with higher computational cost due to the increased size of $\theta$.

Furthermore, in Figure \ref{fig:ColsRes6.01} we evaluate BD performance for different sizes of blocks $S_B$. Here we don't see anymore a monotonic error decrease that we observed above for $N_B$ and $N_L$. The error is big for $S_B$ below 32 or above 160. This probably can be explained as follows. For a small block size $S_B$ each independent channel has too narrow width that is not enough to properly transfer the required signal from NN input to output. Yet, surprisingly it still can achieve a very good approximation, yielding $LSQR$ error of 0.075 for $S_B = 16$ with only 72001 weights, which is very impressive for such small network. Further, for a large block size $S_B$ each independent channel becomes too wide, with information from too many various regions in $\RR^n$ passing through it. This in  turn causes interference (side-influence) between different regions and reduces overall NN flexibility, similarly to what is going on inside a regular FC network.

\begin{table}

	\centering
	\begin{tabular}{lllll}
		\toprule
		Method     & $PSQR$ & $LSQR$     & $IS$ \\
		\midrule
		BD, \cmark DT, \cmark HB & 
		$2.48 \cdot 10^{-22} \pm 6.63 \cdot 10^{-24}$ &
		$0.054 \pm 0.0009$  &
		$26.57 \pm 0.01$  \\
		BD, \xmark DT, \cmark HB  & 
		$2.49 \cdot 10^{-22} \pm 9.8 \cdot 10^{-24}$ &
		$0.055 \pm 0.0021$  &
		$26.57 \pm 0.002$  \\
		BD, \cmark DT, \xmark HB & 
		$2.51 \cdot 10^{-22} \pm 1.1 \cdot 10^{-23}$ &
		$0.056 \pm 0.0026$  &
		$26.57 \pm 0.002$  \\
		BD, \xmark DT, \xmark HB & 
$3 \cdot 10^{-22} \pm 4.79 \cdot 10^{-23}$ &
$0.066 \pm 0.011$  &
$26.57 \pm 0.005$  \\

		\midrule
		FC, \cmark DT, \cmark HB & 
		$5.56 \cdot 10^{-18} \pm 1.02 \cdot 10^{-17}$ &
		$1.78 \pm 0.18$  &
		$27.29 \pm 0.05$  \\
		FC, \xmark DT, \cmark HB  & 
		$1.2 \cdot 10^{-16} \pm 2.67 \cdot 10^{-16}$ &
		$1.35 \pm 0.058$  &
		$27.157 \pm 0.029$  \\
		FC, \cmark DT, \xmark HB & 
		$7.9 \cdot 10^{-21} \pm 2.37 \cdot 10^{-21}$ &
		$2.38 \pm 0.3$  &
		$27.52 \pm 0.08$  \\
		FC, \xmark DT, \xmark HB & 
		$1.36 \cdot 10^{-12} \pm 3 \cdot 10^{-12}$ &
		$2.5 \pm 0.23$  &
		$27.54 \pm 0.08$  \\

		\bottomrule
	\end{tabular}
	
	\caption[Performance comparison between various NN pre-conditioning ways.]{Performance comparison between various NN pre-conditioning ways. The pdf function of \emph{Columns} distribution is learned by PSO-LDE with $\alpha = \frac{1}{4}$. The applied models are fully-connected (FC) with 4 layers of size 1024, and block-diagonal (BD) with 6 layers, number of blocks $N_B = 50$ and block size $S_B = 64$ (see Section \ref{sec:BDLayers}). Evaluated pre-conditioning techniques are the data normalization in Eq.~(\ref{eq:NormData}) (DT), and the height bias in Eq.~(\ref{eq:BiasNN}) (HB).}
	\label{tbl:Perf2}
	
\end{table}

\paragraph{NN Pre-conditioning}

Finally, we verified efficiency of pre-conditioning techniques proposed in Section \ref{sec:PrecondNN}, namely the data normalization in Eq.~(\ref{eq:NormData}) and the height bias in Eq.~(\ref{eq:BiasNN}). In Table \ref{tbl:Perf2} we see that both methods improve the estimation accuracy. Further, in case the used model is FC, the $LSQR$ error improvement produced by the height bias is much more significant. Yet, for FC architecture it is unclear if the data normalization indeed helps.

Overall, our experiments \textbf{combined} with empirical observations from Section \ref{sec:BDLayersCompare} show that BD architecture has a smaller side-influence (small values of $g_{\theta}(X, X')$ for $X \neq X'$) and a higher flexibility than FC architecture. This in turn yields superior accuracy for BD vs FC networks.
Moreover, in an infinite dataset setting we can see that further increase of NN flexibility by increasing $N_B$ or $N_L$ yields even a better approximation accuracy. The block size $S_B$ around 64 produces a better performance in general. Yet, its small values are very attractive since they yield small networks with appropriate computational benefits, with relatively only a little error increase.

\subsubsection{Batch Size Impact}
\label{sec:BatchSizeI}

\begin{figure}
	\centering
	
	\newcommand{\width}[0] {0.44}
	\newcommand{\height}[0] {0.16}

	\begin{tabular}{cc}

		\subfloat[\label{fig:BatchRes1-a}]{\includegraphics[height=\height\textheight,width=\width\textwidth]{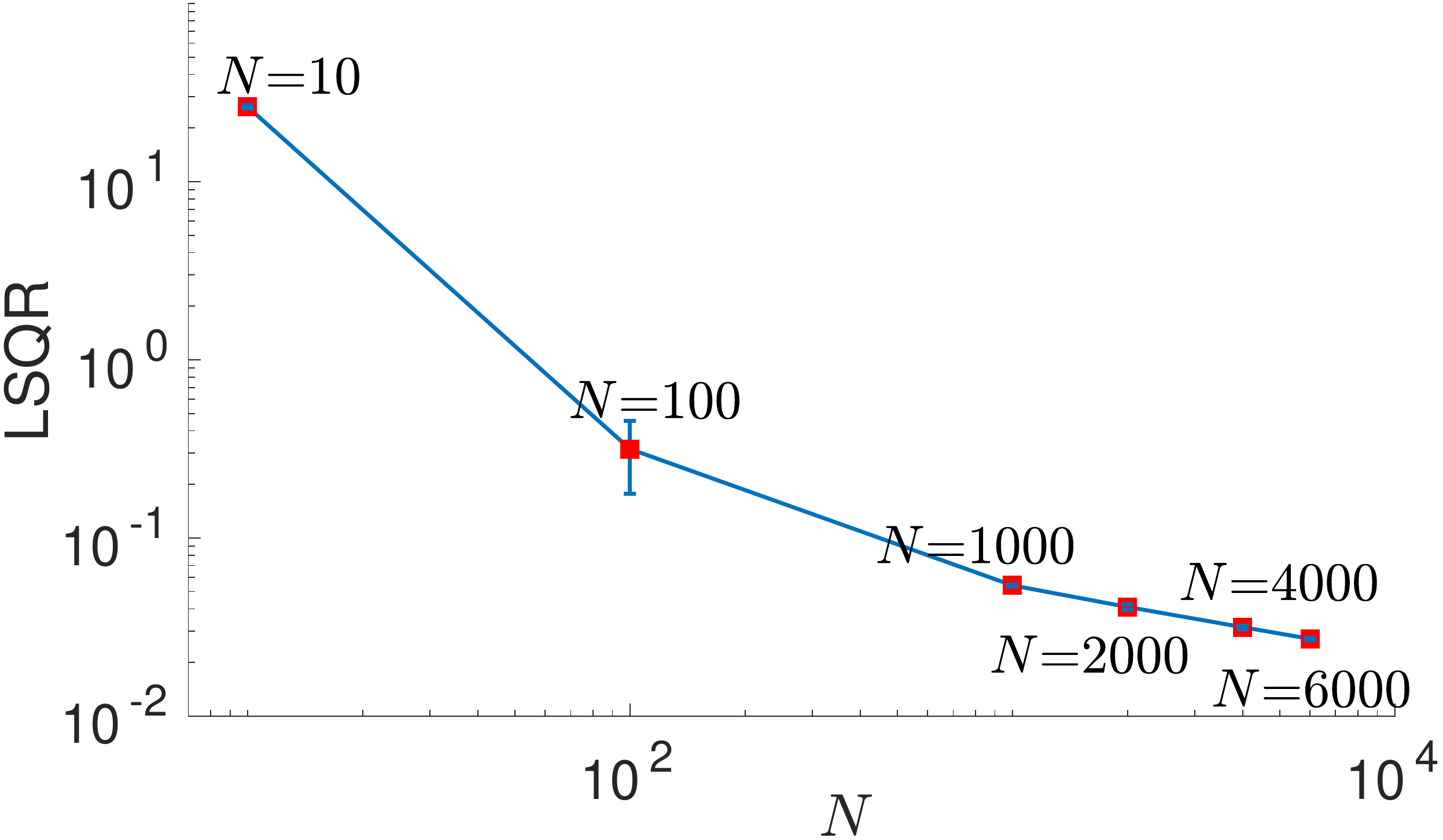}}

&

		\subfloat[\label{fig:BatchRes1-b}]{\includegraphics[height=\height\textheight,width=\width\textwidth]{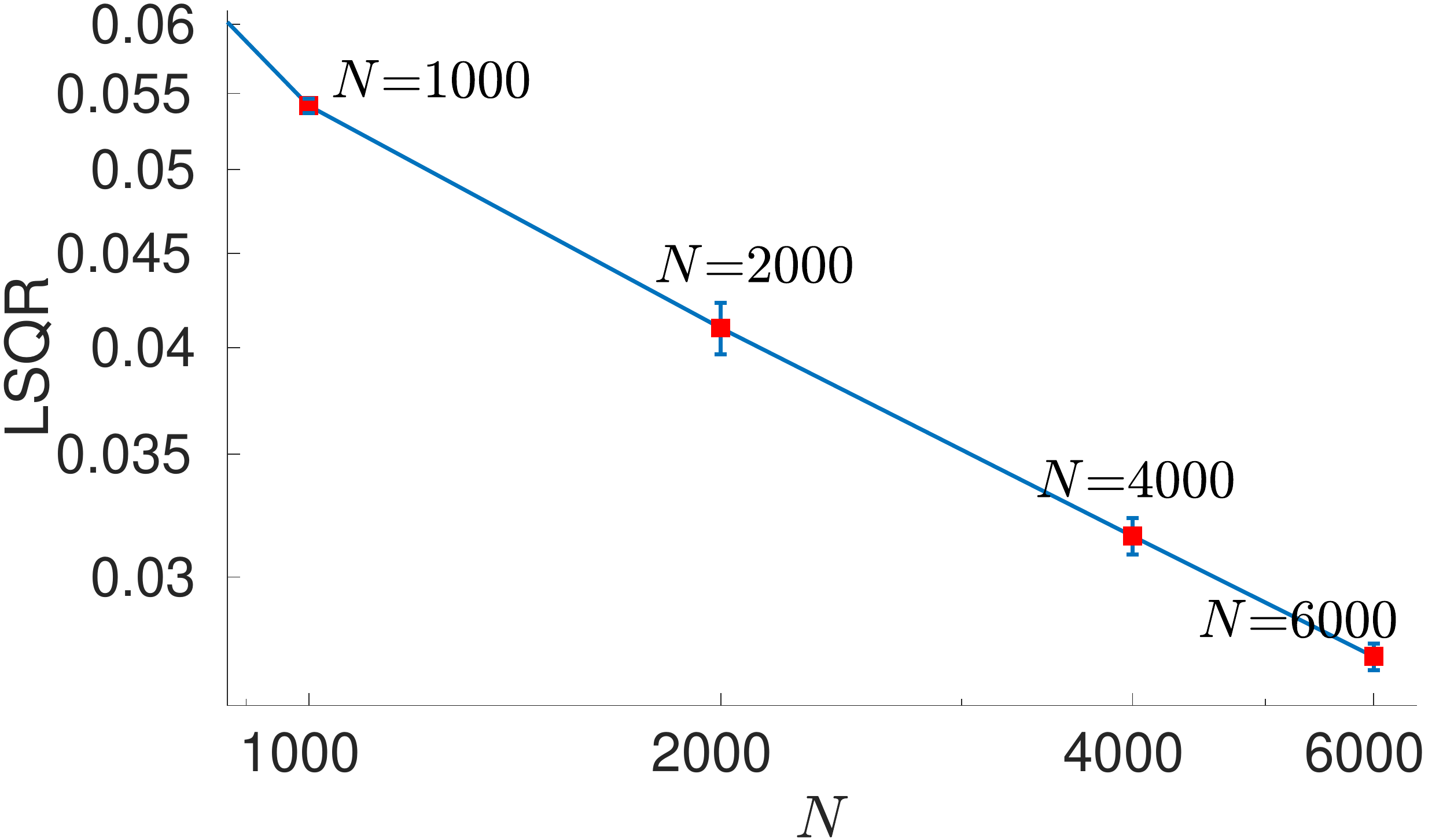}}
		
		\\
		
		\subfloat[\label{fig:BatchRes1-c}]{\includegraphics[height=\height\textheight,width=\width\textwidth]{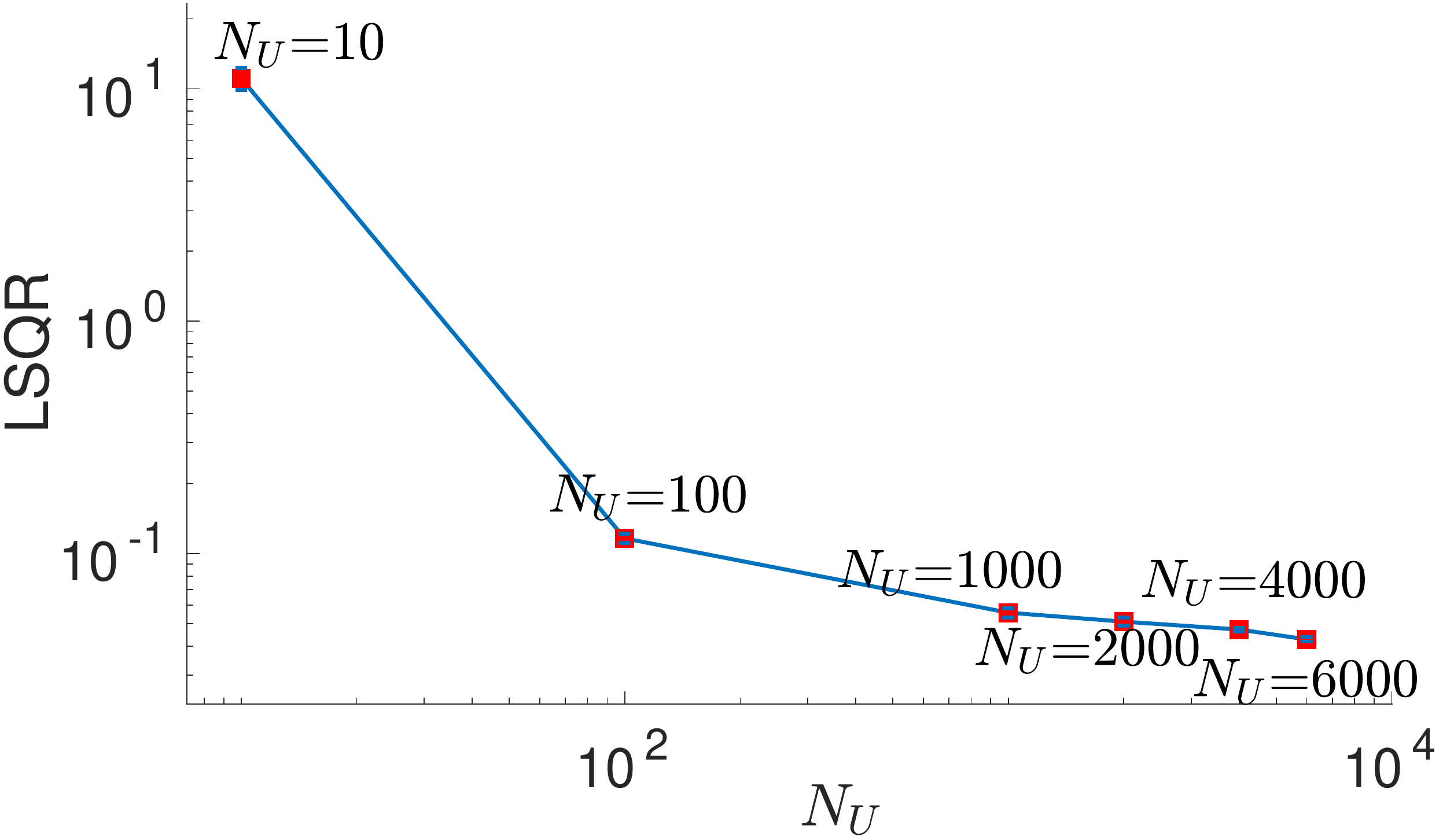}}
		
		&
		
		\subfloat[\label{fig:BatchRes1-d}]{\includegraphics[height=\height\textheight,width=\width\textwidth]{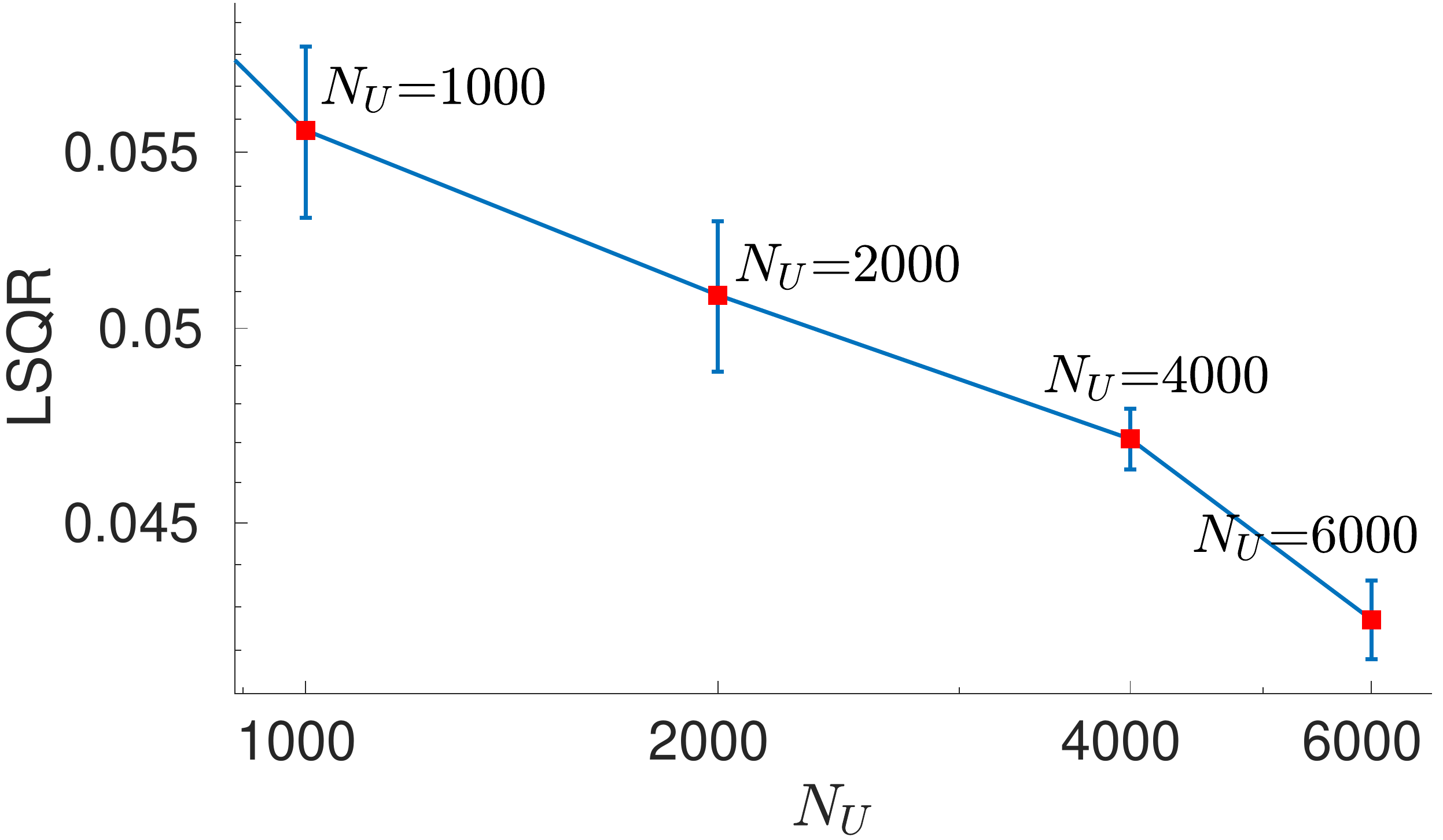}}

		\\
		
		\subfloat[\label{fig:BatchRes1-e}]{\includegraphics[height=\height\textheight,width=\width\textwidth]{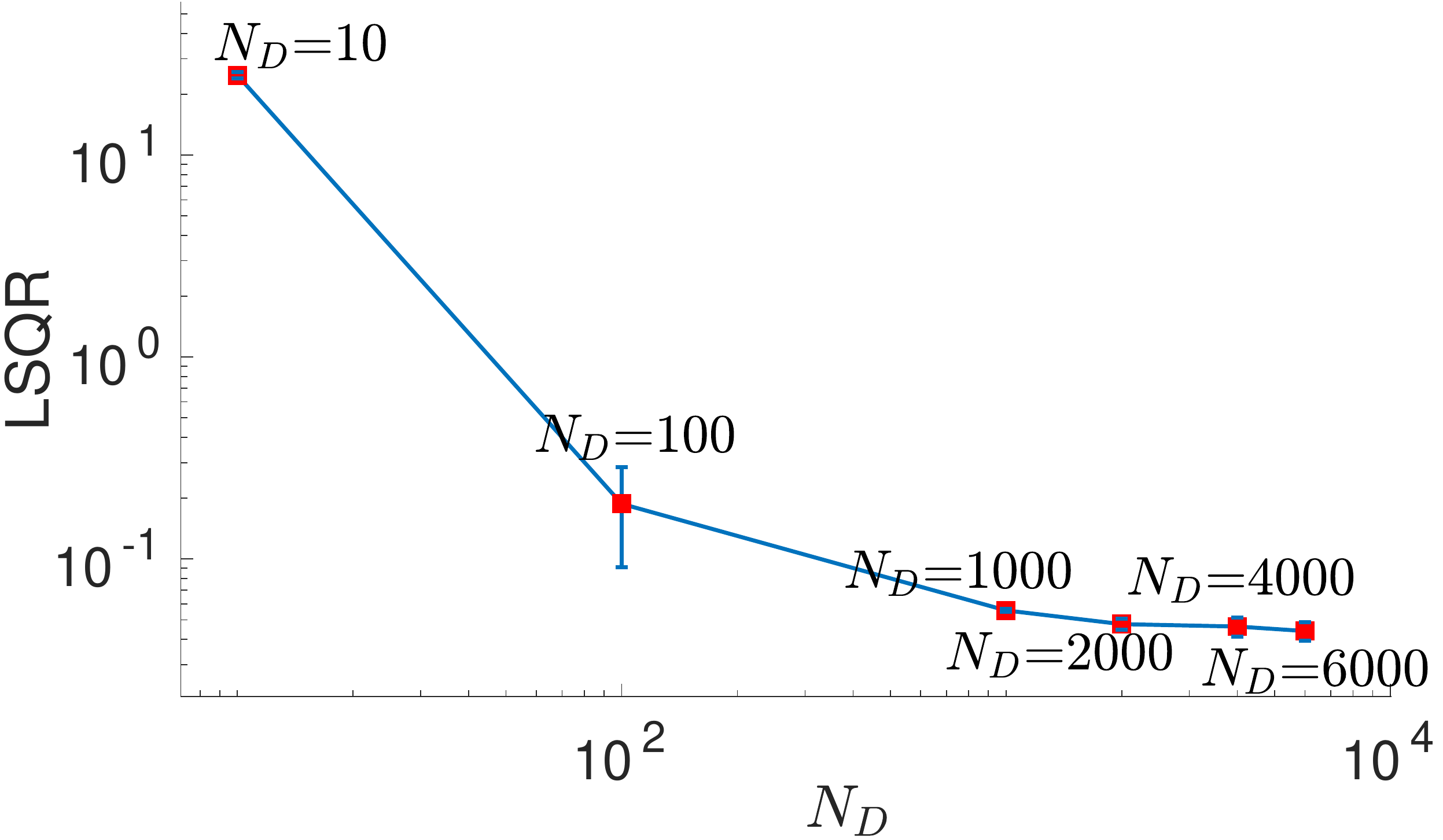}}
		
		&
		
		\subfloat[\label{fig:BatchRes1-f}]{\includegraphics[height=\height\textheight,width=\width\textwidth]{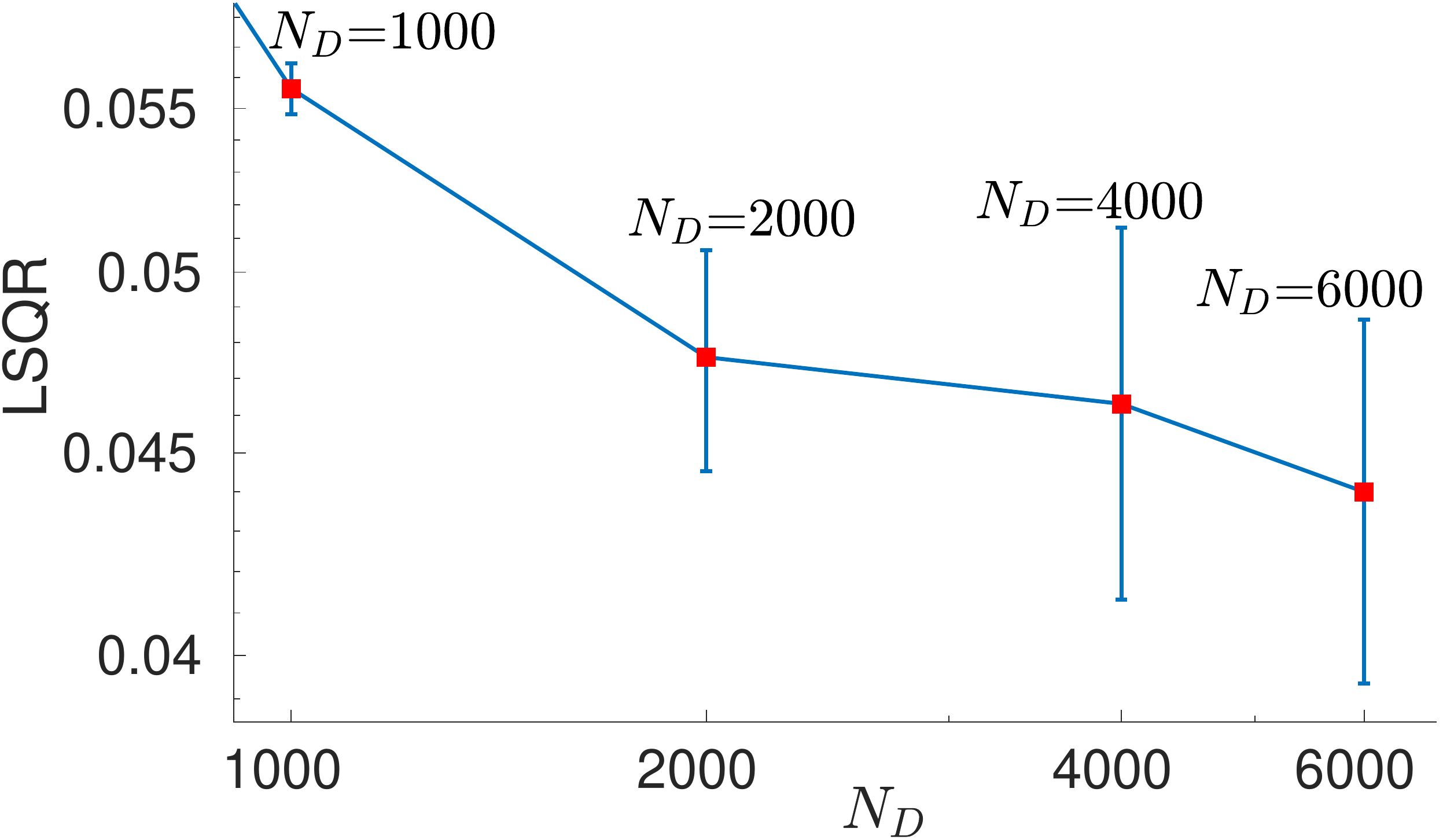}}
		
		\\
		
		\subfloat[\label{fig:BatchRes1-g}]{\includegraphics[height=\height\textheight,width=\width\textwidth]{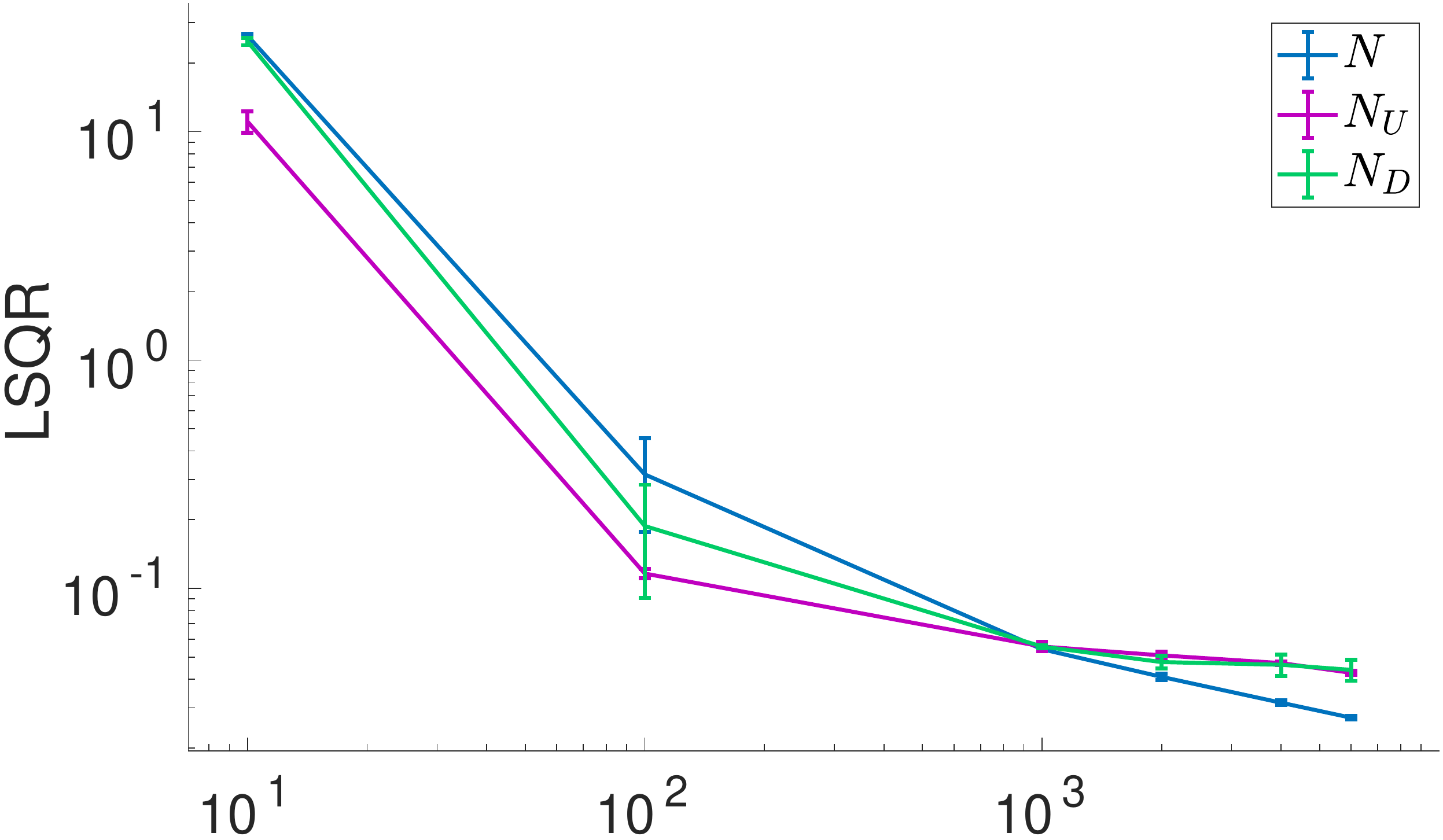}}
		
		&
		
		\subfloat[\label{fig:BatchRes1-h}]{\includegraphics[height=\height\textheight,width=\width\textwidth]{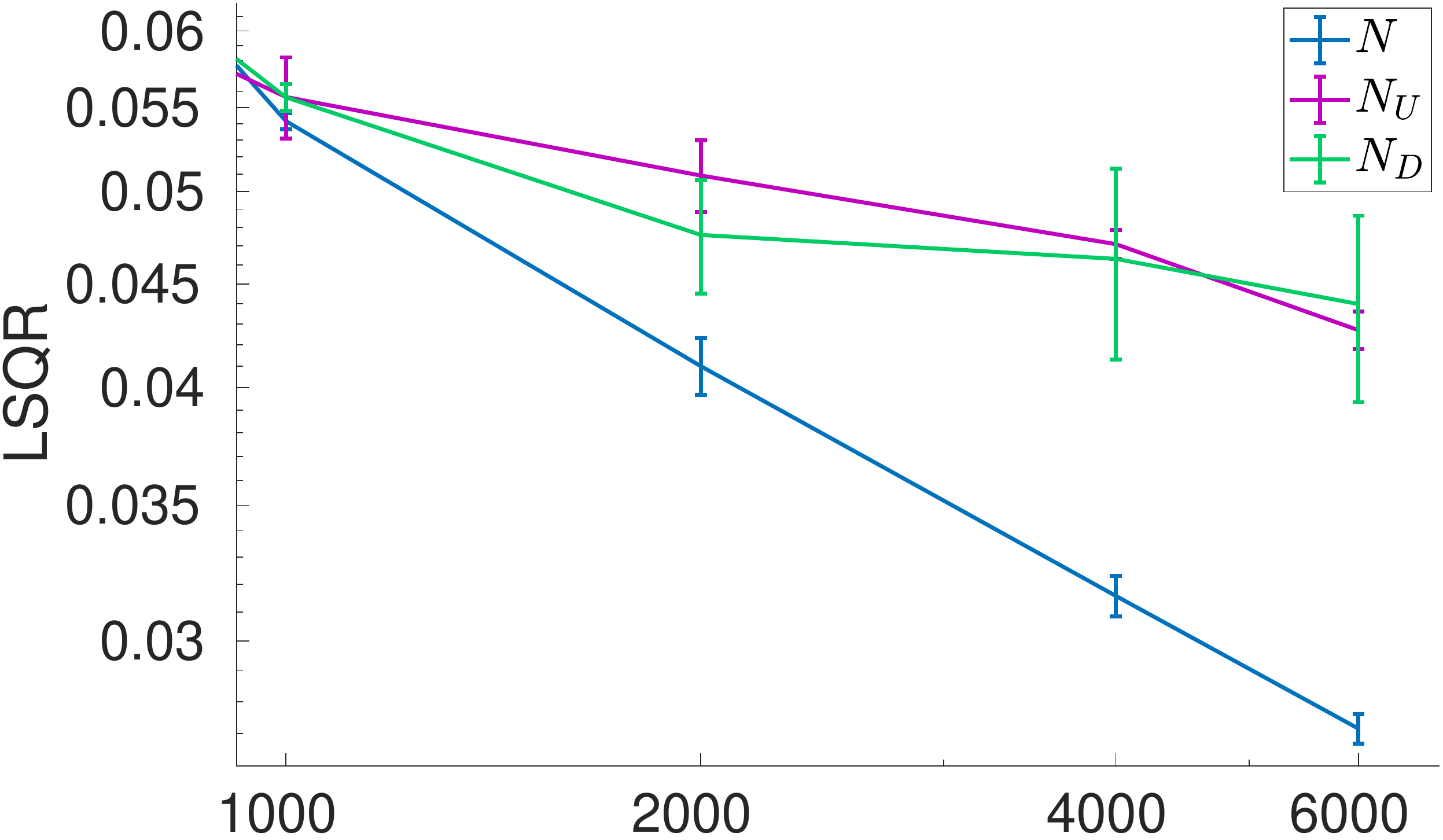}}

	\end{tabular}
	
	\protect
	\caption[Evaluation of PSO-LDE for various batch sizes.]{Batch size evaluation. \emph{Columns} distribution is estimated via PSO-LDE with $\alpha = \frac{1}{4}$, where NN architecture is block-diagonal with 6 layers, number of blocks $N_B = 50$ and block size $S_B = 64$ (see Section \ref{sec:BDLayers}).
	(a)-(b) For different values of a batch size $N = N^{\usuff} = N^{\dsuff}$ we report $LSQR$ and its empirical standard deviation. Both the \up batch size $N^{\usuff}$ and the \down batch size $N^{\dsuff}$ are kept the same.
	(c)-(d) The $N^{\usuff}$ receives different values while the $N^{\dsuff}$ is 1000.
	(e)-(f) The $N^{\dsuff}$ receives different values while the $N^{\usuff}$ is 1000.
	(g)-(h) All scenarios are plotted together in the same graph.
	Note that both $x$ and $y$ axes are log-scaled. Right column is zoom-in of left column.
	}
	\label{fig:BatchRes1}
\end{figure}

Herein we investigate the relation between PSO approximation error and a batch size of training points. In particular, in PSO loss we have two terms, the \up term which is a sum over batch points $\{X^{\usuff}_{i}\}_{i = 1}^{N^{\usuff}}$ and the \down term which is a sum over batch points $\{X^{\dsuff}_{i}\}_{i = 1}^{N^{\dsuff}}$. We will see how values of $N^{\usuff}$ and $N^{\dsuff}$ affect PSO performance.

\paragraph{Increasing both \boldmath$N^{\usuff}$ and $N^{\dsuff}$}
First, we run a scenario where both batch sizes are the same, $N^{\usuff} = N^{\dsuff} = N$. We infer \emph{Columns} distribution with different values of $N$, ranging from 10 to 6000. In Figures \ref{fig:BatchRes1-a}-\ref{fig:BatchRes1-b} we can observe that $LSQR$ error is decreasing for bigger $N$, which is expected since then the stochastic forces at each point $X \in \RR^n$ are getting closer to the averaged forces $F_{\theta}^{\usuff}(X) = 
\probi{\usuff}{X} \cdot 
M^{\usuff}\left[X,f_{\theta}(X)\right]$ and $F_{\theta}^{\dsuff}(X) = 
\probi{\dsuff}{X} \cdot 
M^{\dsuff}\left[X,f_{\theta}(X)\right]$. In other words, the sampled approximation of PSO gradient in Eq.~(\ref{eq:GeneralPSOLossFrml_Limit}) becomes more accurate.

Moreover, we can observe that for the smaller batch size the actual PSO-LDE performance is very poor, with $LSQR$ being around 26.3 for $N = 10$ and decreasing to 0.31 for $N = 100$. The high accuracy, in range 0.03-0.05, is only achieved when we increase the number of batch points to be above 1000. This implies that to reach a higher accuracy, PSO will require a higher demand over the memory/computation resources. Therefore, the higher available resources, expected from future GPU cards, will lead to a higher PSO accuracy.

\paragraph{Increasing only \boldmath$N^{\usuff}$}
Further, we experiment with increasing/decreasing only one of the batch sizes while the other stays constant. In Figures \ref{fig:BatchRes1-c}-\ref{fig:BatchRes1-d}  a scenario is depicted where $N^{\usuff}$ is changing while $N^{\dsuff}$ is 1000. Its error for small values of $N^{\usuff}$ is smaller than in the previous scenario, with $LSQR$ being around 11 for $N^{\usuff} = 10$ and decreasing to 0.11 for $N^{\usuff} = 100$. Comparing with the previous experiment, we can see that even if $N^{\usuff}$ is small, a high value of $N^{\dsuff}$ (1000) improves the optimization performance.

\paragraph{Increasing only \boldmath$N^{\dsuff}$}
Furthermore, in Figures \ref{fig:BatchRes1-e}-\ref{fig:BatchRes1-f} we depict the opposite scenario where $N^{\dsuff}$ is changing while $N^{\usuff}$ is 1000. Unlike the experiment in Figures \ref{fig:BatchRes1-c}-\ref{fig:BatchRes1-d}, here the improvement of error for small values of $N^{\dsuff}$ w.r.t. the first experiment is not that significant. For $N^{\dsuff} = 10$ the error is 24.8 and for $N^{\dsuff} = 100$ it is 0.19. Hence, the bigger number of \up points ($N^{\usuff} = 1000$) does not lead to a much higher accuracy if a number of \down points $N^{\dsuff}$ is stll too small.

Finally, in Figures \ref{fig:BatchRes1-g}-\ref{fig:BatchRes1-h} we plot all three experiments together. Note that all lines cross at the same point, where all experiments were configured to have $N^{\usuff} = N^{\dsuff} = 1000$. We can see that in case our resource budget is low (smaller values of $x$ in Figure \ref{fig:BatchRes1-g}), it is more efficient to spend them to increase $N^{\dsuff}$. Yet, for a high overall resource budget (higher values of $x$ in Figure \ref{fig:BatchRes1-g}) both $N^{\usuff}$ and $N^{\dsuff}$ affect the error similarly, and it is better to keep them equal and increase them as much as possible.

\subsubsection{Small Training Dataset}
\label{sec:ColumnsEstSmallDT}

In this section we will learn 20D \emph{Columns} distribution using only 100000 training sample points. As we will see below, the density inference task via non-parametric PSO becomes much more challenging when the size of the training dataset is limited.

\begin{figure}[!tbp]
	\centering
	
	\begin{tabular}{cc}

		\subfloat[\label{fig:ColsRes7.0-a}]{\includegraphics[width=0.35\textwidth]{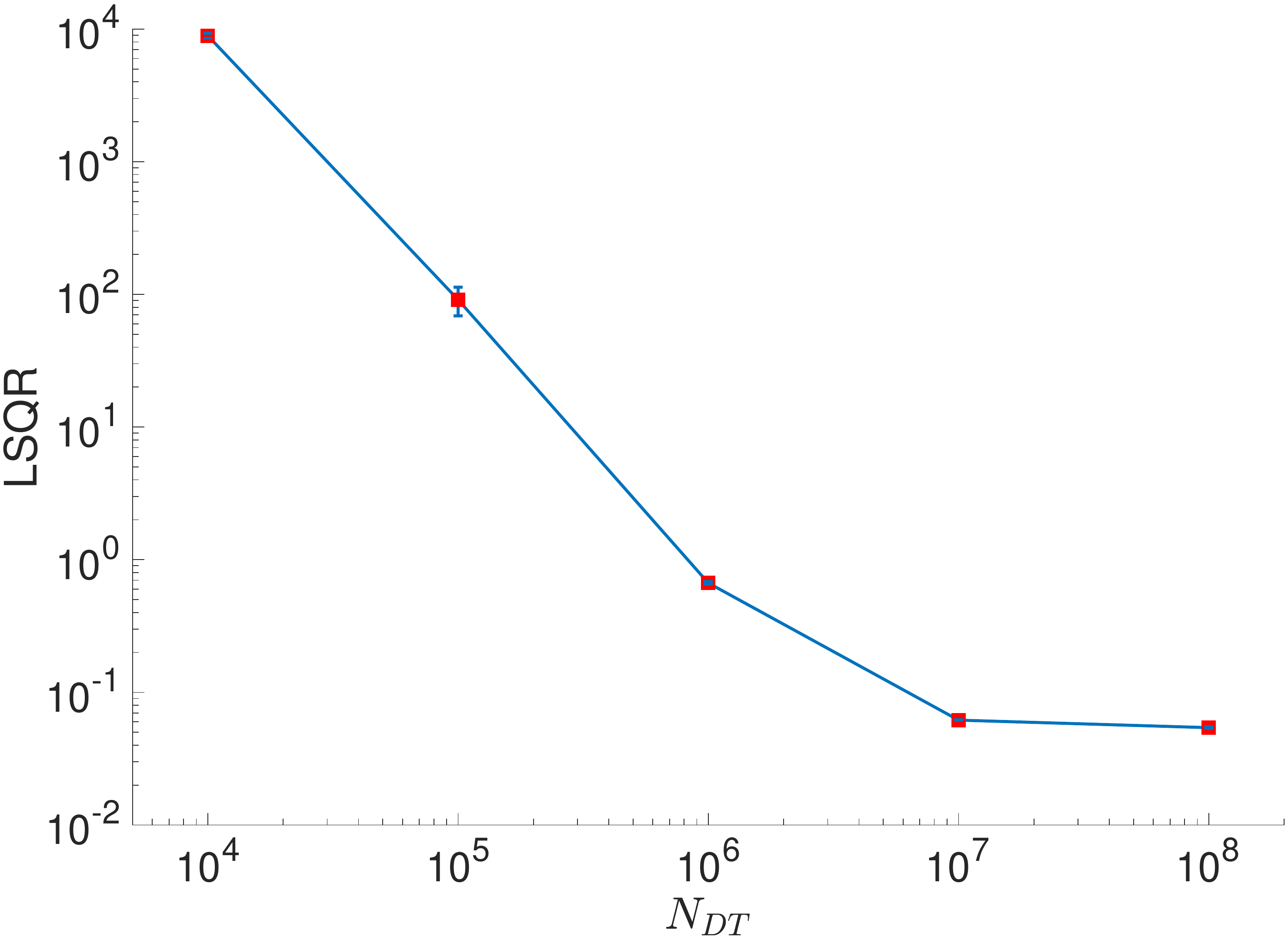}}
		&
		
		\subfloat[\label{fig:ColsRes7.0-b}]{\includegraphics[width=0.4\textwidth]{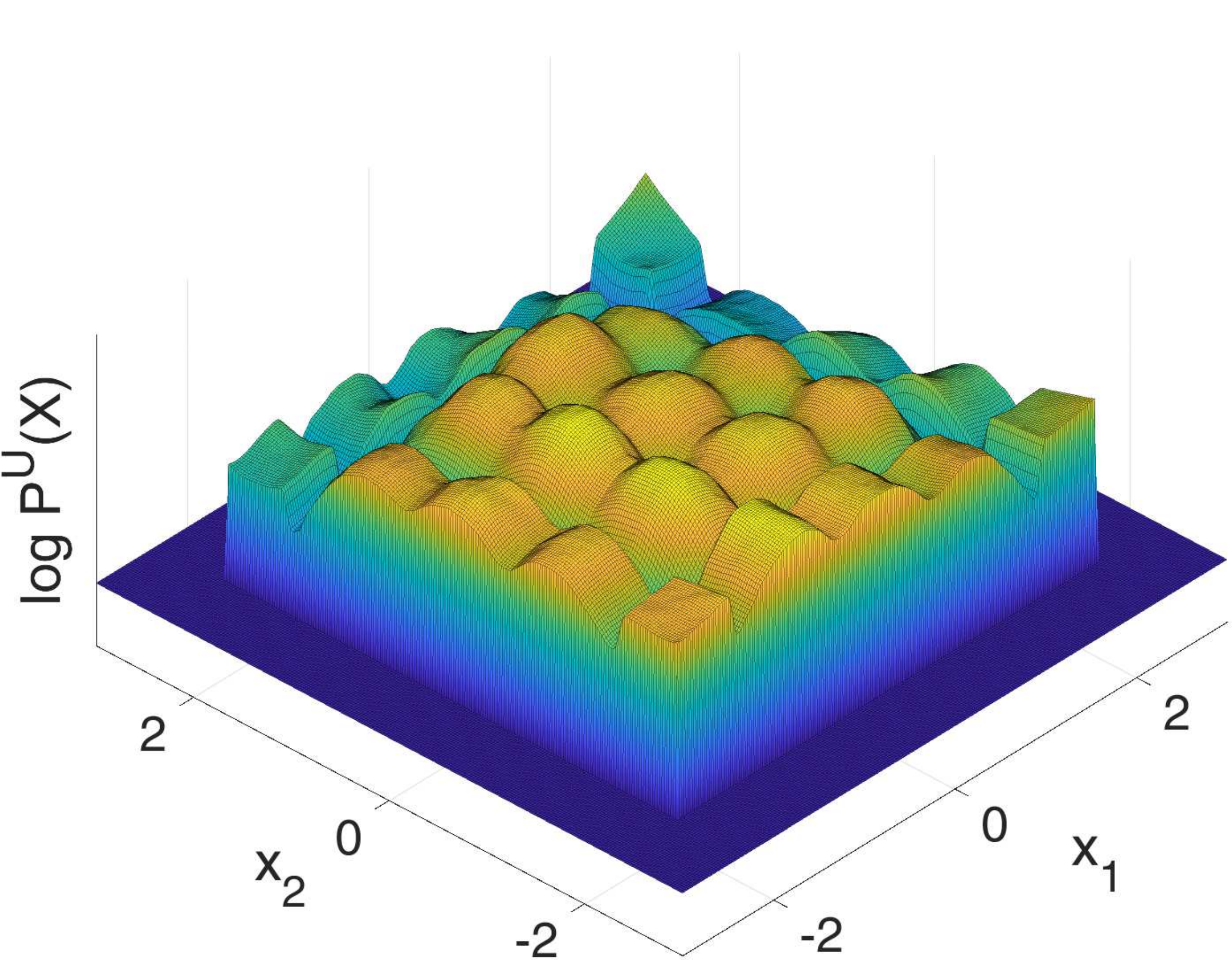}}
		
	\end{tabular}

	\protect
	\caption[Evaluation of PSO-LDE for estimation of \emph{Columns} distribution with different sizes of the training dataset.]{Evaluation of PSO-LDE for estimation of \emph{Columns} distribution with different sizes $N_{DT}$ of the training dataset.
		Used NN architecture is block-diagonal with 6 layers, number of blocks $N_B = 50$ and block size $S_B = 64$ (see Section \ref{sec:BDLayers}). Number of weights is $\left| \theta \right| = 902401$.
		Applied PSO instance is PSO-LDE with $\alpha = \frac{1}{4}$.
		(a) $LSQR$ error as a function of the training dataset size $N_{DT}$, where both $x$ and $y$ axes are log-scaled.
		(b) Illustration of the learned log-pdf surface $f_{\theta}(X)$ for the dataset size $N_{DT} = 10^5$. The depicted slice is $\log \PP(x_1, x_2) \equiv f_{\theta}([x_1, x_2, 0, \ldots , 0])$, with $x_1$ and $x_2$ forming a grid of points in first two dimensions of the density's support. 
		Note the resemblance similarity between this plot and the target surface in Figure \ref{fig:ColsRes1-b}. Even though $LSQR$ error of this model in (a) is very high ($\approx 91.2$), the local structure within the converged surface is close to the local structure of the target function. Unfortunately, its global structure is inconsistent and far away from the target.
		}
	\label{fig:ColsRes7.0}
\end{figure}

\paragraph{Various Sizes of Dataset}

To infer the data pdf, here we applied PSO-LDE with $\alpha = \frac{1}{4}$. Further, we use BD NN architecture since it has superior approximation performance over FC architecture. First, we perform the inference task using the same BD network as in Section \ref{sec:ColumnsEstME}, with 6 layers, $N_B = 50$ and $S_B = 64$, where the size of the entire weights vector is $\left| \theta \right| = 902401$.
The \emph{Columns} pdf is inferred using a various number $N_{DT}$ of \textbf{overall} training points $\{X^{\usuff}_{i}\}_{i = 1}^{N_{DT}}$. As can be observed from Figure \ref{fig:ColsRes7.0-a}, $LSQR$ error increases for a smaller size $N_{DT}$ of the training dataset. From error $0.05$ for $N_{DT} = 10^8$ it gets to $0.062$ for $N_{DT} = 10^7$, $0.67$ for $N_{DT} = 10^6$, $91.2$ for $N_{DT} = 10^5$ and $8907$ for $N_{DT} = 10^4$. As we will see below and as was already discussed in Section \ref{sec:DeepLogPDFOF}, one of the main reasons for such large errors is a too flexible NN model, which in a small dataset setting can significantly damage the performance of PSO-LDE, and PSO in general.

Interestingly, although $LSQR$ error is high for models with $N_{DT} = 10^5$, in Figure \ref{fig:ColsRes7.0-b} we can see that the converged surface $f_{\theta}(X)$ visually highly resembles the real target log-pdf in Figure \ref{fig:ColsRes1-b}. We can see that main lines and forms of the target surface were learned by $f_{\theta}(X)$, yet the overall global shape of NN surface is far away from the target. Furthermore, if we calculate  $\exp f_{\theta}(X)$, we will get a surface that is very far from its target $\probi{\usuff}{X}$ since the exponential will amplify small errors into large.

Further, in Figure \ref{fig:ColsRes7.01} we can observe error curves for models learned in Figure \ref{fig:ColsRes7.0-a}. Both train and test errors are reported for all three error types, per a different dataset size $N_{DT}$. Train and test errors are very similar for big $N_{DT}$ in Figures \ref{fig:ColsRes7.01-a}-\ref{fig:ColsRes7.01-b}. Moreover, in Figure \ref{fig:ColsRes7.01-b} we can see according to the $LSQR$ error (the middle column) that at the beginning error decreases but after $10^5$ steps it starts increasing, which suggests a possible \overfit to the training dataset at $N_{DT} = 10^6$. Further, in Figures \ref{fig:ColsRes7.01-c}-\ref{fig:ColsRes7.01-d} we can see that train and test errors are more distinct from each other. Likewise, here $PSQR$ and $LSQR$, both train and test, are increasing almost from the start of the optimization. In contrast, in Figures \ref{fig:ColsRes7.01-c}-\ref{fig:ColsRes7.01-d} in case of $IS$ the train and test errors have different trends compared with each other. While the test $IS$ is increasing, the train $IS$ is decreasing. This is a typical behavior of the optimization error that indicates strong \overfit of the model, here for $N_{DT} = 10^5$ and $N_{DT} = 10^4$. In  turn, this means that we apply a too rich model family - over-flexible NN which can be pushed to form peaks around the training points, as was demonstrated in Section \ref{sec:DeepLogPDFOF}. Further, we can detect such \overfit by comparing train and test $IS$ errors, which do not depend on ground truth.

Note that train and test errors of $PSQR$/$LSQR$ have the same trend herein, unlike $IS$. The reason for this is that $PSQR$ and $LSQR$ express a real ground truth distance between NN surface and the target function, whereas $IS$ error is only a some rough estimation of it.

\begin{figure}[!tbp]
	\centering
	
	\newcommand{\width}[0] {0.85}
	\newcommand{\height}[0] {0.16}

	\begin{tabular}{c}

		\subfloat[\label{fig:ColsRes7.01-a}]{\includegraphics[height=\height\textheight,width=\width\textwidth]{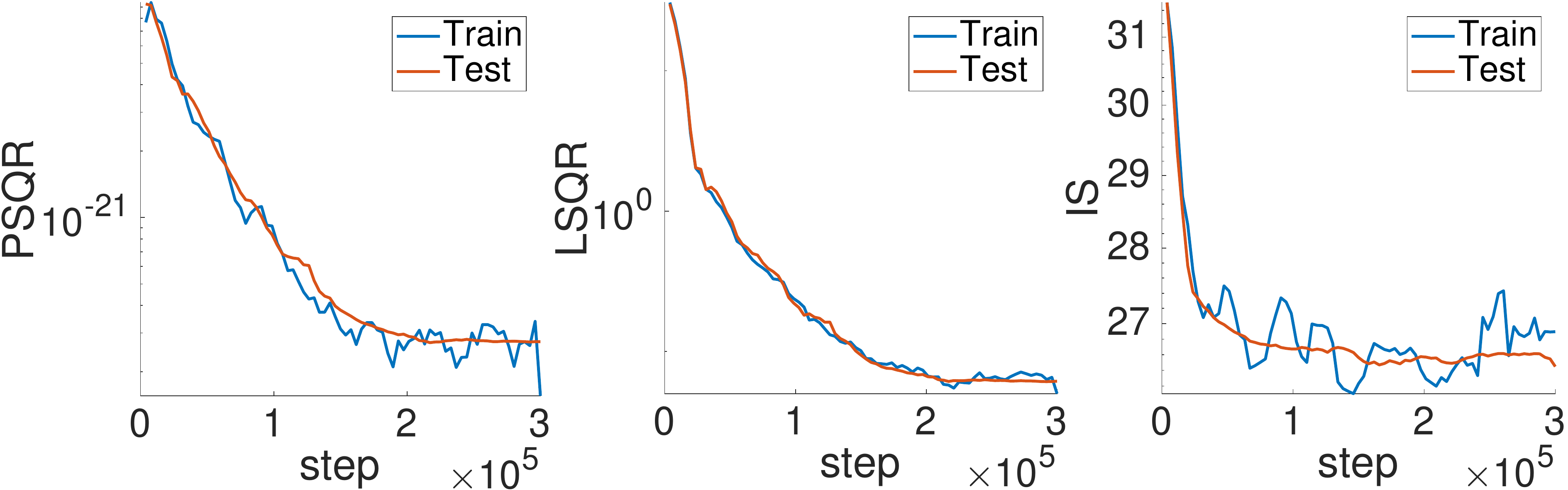}}
		\\
		\subfloat[\label{fig:ColsRes7.01-b}]{\includegraphics[height=\height\textheight,width=\width\textwidth]{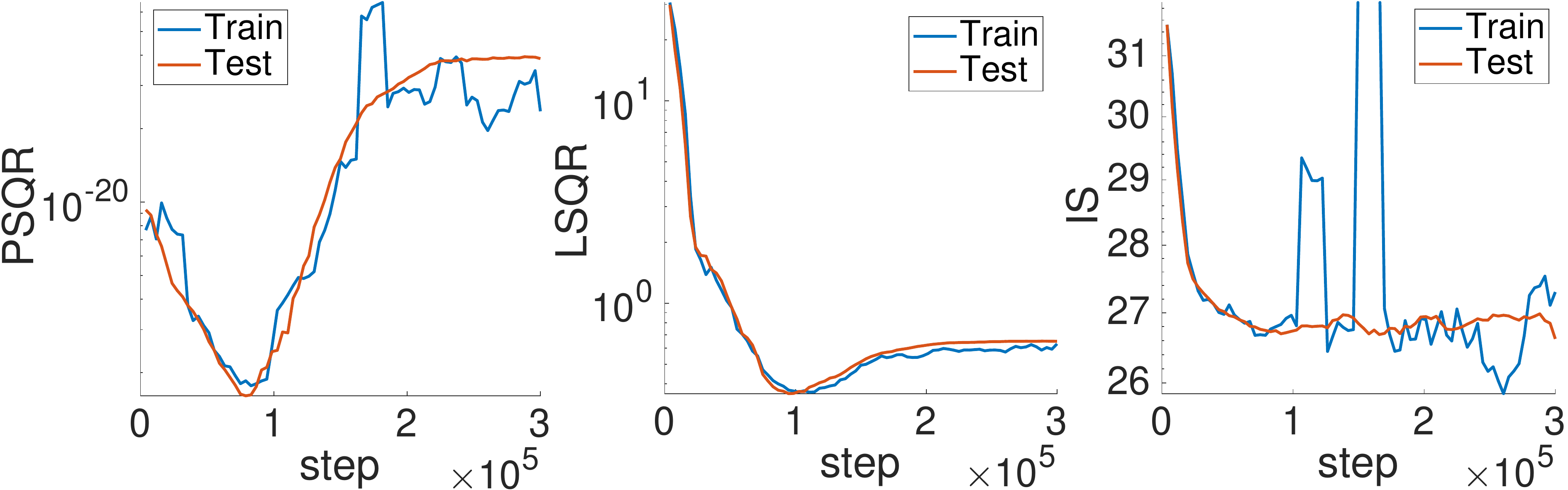}}
		\\
		\subfloat[\label{fig:ColsRes7.01-c}]{\includegraphics[height=\height\textheight,width=\width\textwidth]{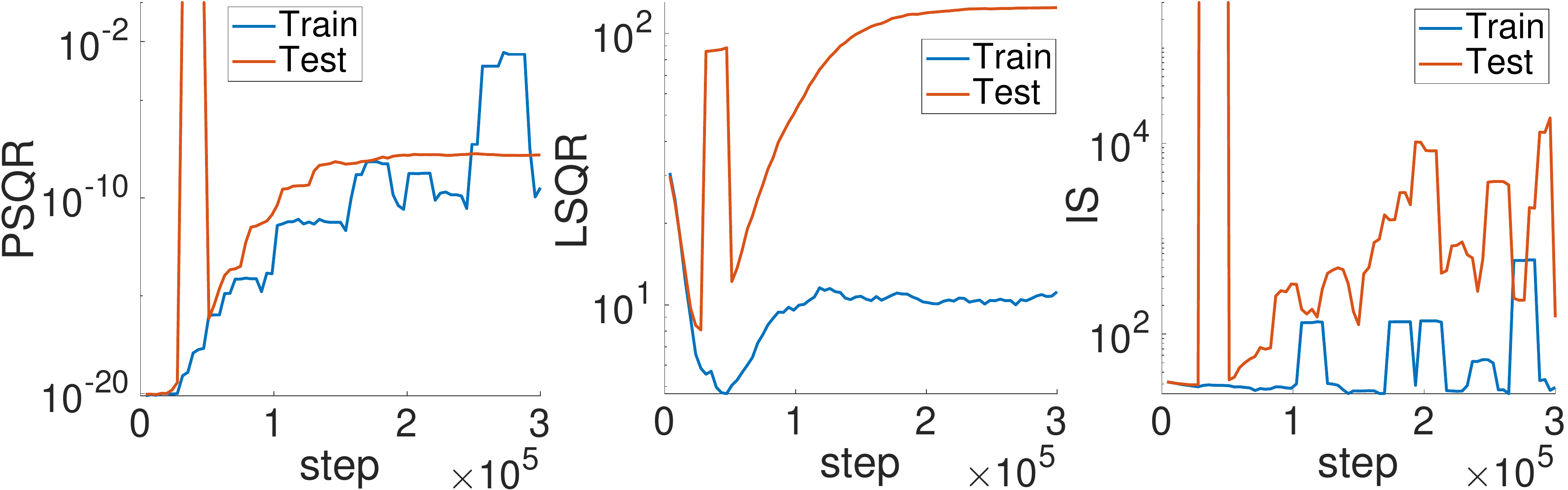}}
		\\
		\subfloat[\label{fig:ColsRes7.01-d}]{\includegraphics[height=\height\textheight,width=\width\textwidth]{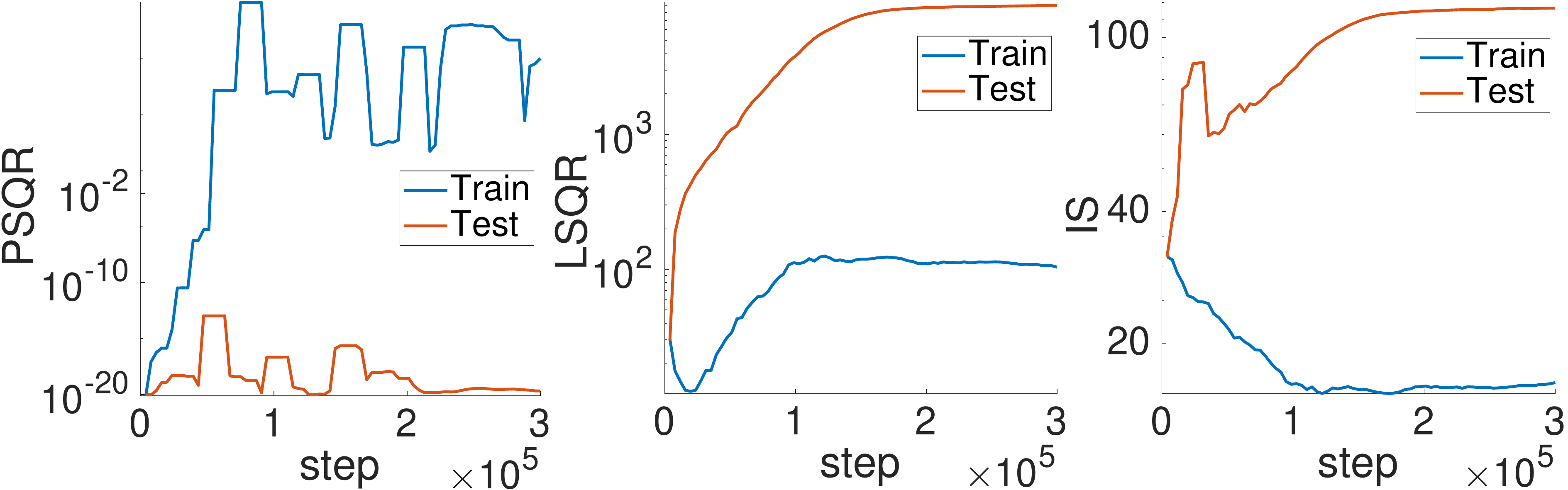}}

	\end{tabular}

	\protect
	\caption[Error curves during the optimization for models learned in Figure \ref{fig:ColsRes7.0-a}.]{Error curves during the optimization for models learned in Figure \ref{fig:ColsRes7.0-a}, where various dataset sizes $N_{DT}$ were evaluated. The error is reported for (a) $N_{DT} = 10^{7}$, (b) $N_{DT} = 10^{6}$, (c) $N_{DT} = 10^{5}$ and (d) $N_{DT} = 10^{4}$. In each row we report: $PSQR$ - first column; $LSQR$ - middle column; $IS$ - last column. Each plot contains both train and test errors; the former is evaluated over $10^3$ \up points from training dataset chosen as a batch for a specific optimization iteration, whereas the latter - over $10^5$ \up points from the testing dataset.
	}
	\label{fig:ColsRes7.01}
\end{figure}

\begin{figure}[!tbp]
	\centering
	
	\begin{tabular}{cc}

		\subfloat[\label{fig:ColsRes7-a}]{\includegraphics[width=0.45\textwidth]{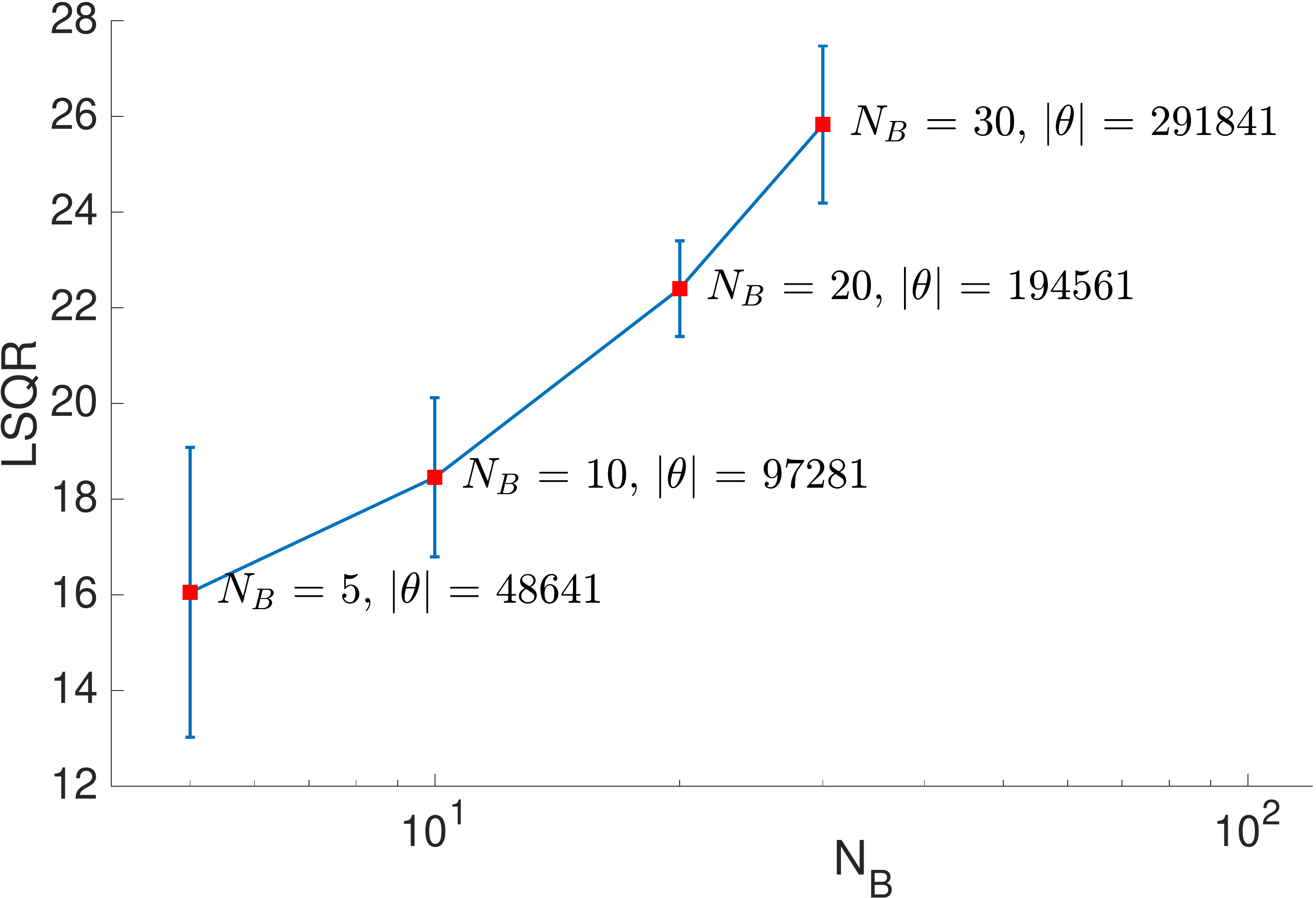}}
		&
		
		\subfloat[\label{fig:ColsRes7-b}]{\includegraphics[width=0.45\textwidth]{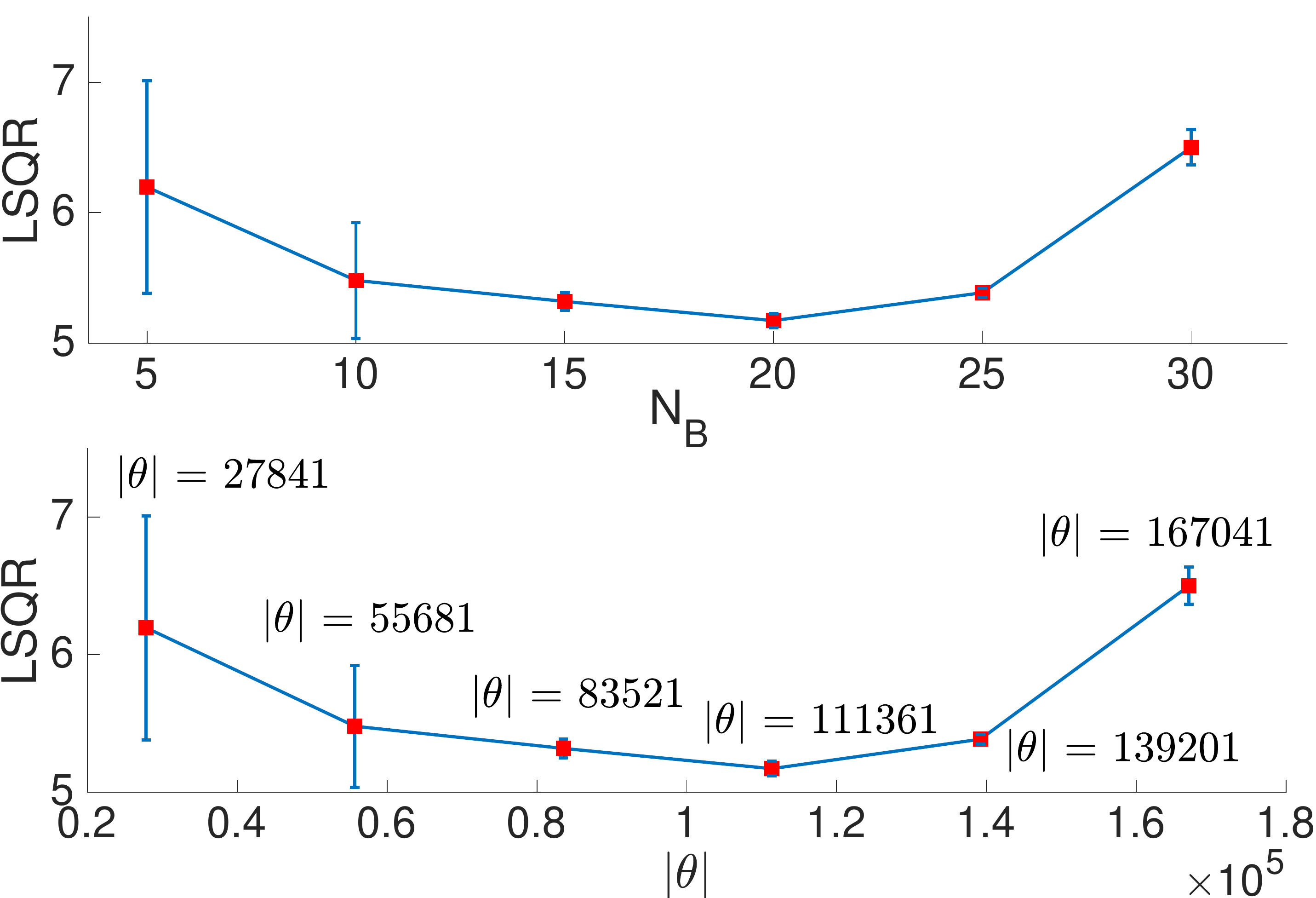}}
		
	\end{tabular}
	
	\protect
	\caption[Evaluation of PSO-LDE for estimation of \emph{Columns} distribution with only $N_{DT} = 10^5$ training samples.]{Evaluation of PSO-LDE for estimation of \emph{Columns} distribution with only $N_{DT} = 10^5$ training samples.
		Used NN architecture is block-diagonal with block size $S_B = 64$ (see Section \ref{sec:BDLayers}). Applied PSO instance is PSO-LDE with $\alpha = \frac{1}{4}$.
		(a) $LSQR$ error for models with 4 layers and various values of the blocks number $N_B$. Size of weights vector $\theta$ is depicted for each model.
		(b) $LSQR$ error for models with 3 layers and various values of blocks number $N_B$. Top - $LSQR$ is shown as a function of $N_B$; bottom - $LSQR$ is shown as a function of the size $\left| \theta \right|$.
		As observed, too big and too small number of parameters, $\left| \theta \right|$, produces a less accurate pdf approximation in the small dataset setting.
	}
	\label{fig:ColsRes7}
\end{figure}

\paragraph{Reduction of NN Flexibility to Tackle Overfitting}

Next, we perform the inference task using BD network with only 4 layers, on the training dataset of size $N_{DT} = 10^5$. We learn models for various numbers of blocks $N_B$ inside our network, being between 5 and 30. In Figure \ref{fig:ColsRes7-a} we can see that $LSQR$ error is still very high compared to the results of an infinite dataset setting in Section \ref{sec:ColumnsEstME}: $\approx \! 20$ vs $\approx \! 0.05$. Yet, it is smaller than in Figure \ref{fig:ColsRes7.0-a}, where we used 6 layers instead of 4 and $N_B = 50$. Moreover, we can see in Figure \ref{fig:ColsRes7-a} that the error is reducing with a smaller number of blocks $N_B$ and a smaller number of NN parameters $\left| \theta \right|$.

Further, we perform the same experiment where BD architecture has only 3 layers (first and last are FC layers and in the middle there is BD layer). In Figure \ref{fig:ColsRes7-b} we can observe that for a too small/large value of $N_B$ the $LSQR$ is higher. That is, for a too big/small number of weights in $\theta$ we have a worse pdf approximation. This can be explained as follows. Small size $\left| \theta \right|$ implies NN with low flexibility which is not enough to closely approximate a target surface $\log \probi{\usuff}{X}$, thus producing \underfit. We observed similar results also in an infinite dataset setting in Section \ref{sec:ColumnsEstNNAME}, where bigger size of $\theta$ yielded an even smaller error. Furthermore, when $\left| \theta \right|$ is too big, the NN surface becomes too flexible and causes \overfit. Such over-flexibility is not appropriate for small dataset setting, since it allows to closely approximate a peak around each training sample $X^{\usuff}$ (or $X^{\dsuff}$), where the produced spiky surface $f_{\theta}(X)$ will obviously have a high approximation error, as was demonstrated in Section \ref{sec:DeepLogPDFOF}. In other words, in contrast to common regression learning, in case of \emph{unsupervised} PSO approaches the size (and the flexibility) of NN should be adjusted according to the number of available training points, otherwise the produced approximation error will be enormous. In contrast, in common DL-based regression methods such over-flexible NN may cause \overfit to training samples and increase the testing error, yet it will not affect the overall approximation performance as destructively as in PSO.

Interestingly, the optimal size $\left| \theta \right|$ in Figure \ref{fig:ColsRes7-b}-bottom is around 100000 - the number of available training points. It would be an important investigatory direction to find the exact mathematical relation between the dataset size and properties of NN (e.g. its size, architecture and \gs) for the optimal inference. We shall leave it for future research.

\begin{figure}[tb]
	\centering
	
	\newcommand{\width}[0] {0.35}
	\newcommand{\height}[0] {0.18}
	
	\begin{tabular}{cc}

		\subfloat[\label{fig:ColsRes8-a}]{\includegraphics[height=\height\textheight,width=\width\textwidth]{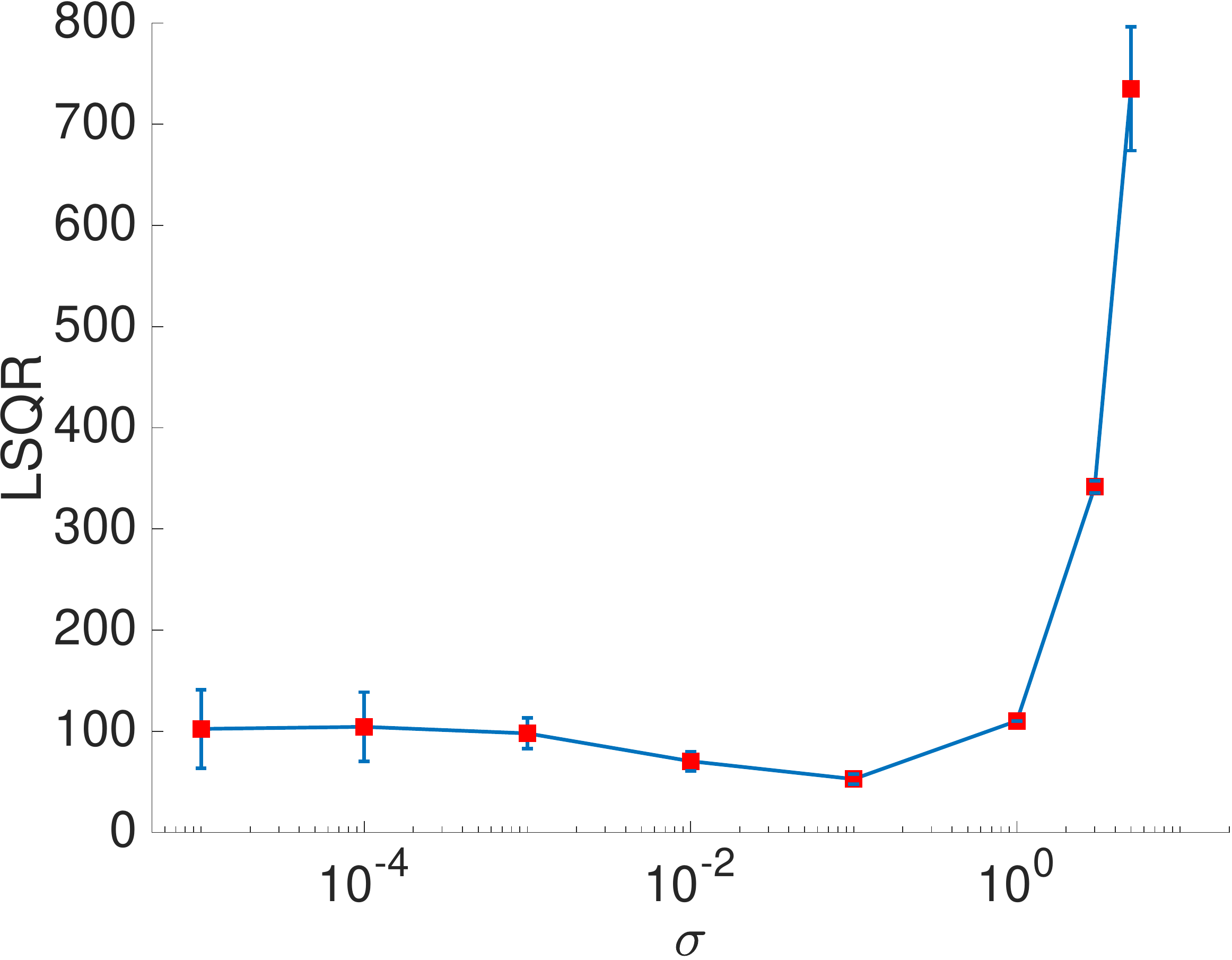}}
		&
		
		\subfloat[\label{fig:ColsRes8-b}]{\includegraphics[height=\height\textheight,width=\width\textwidth]{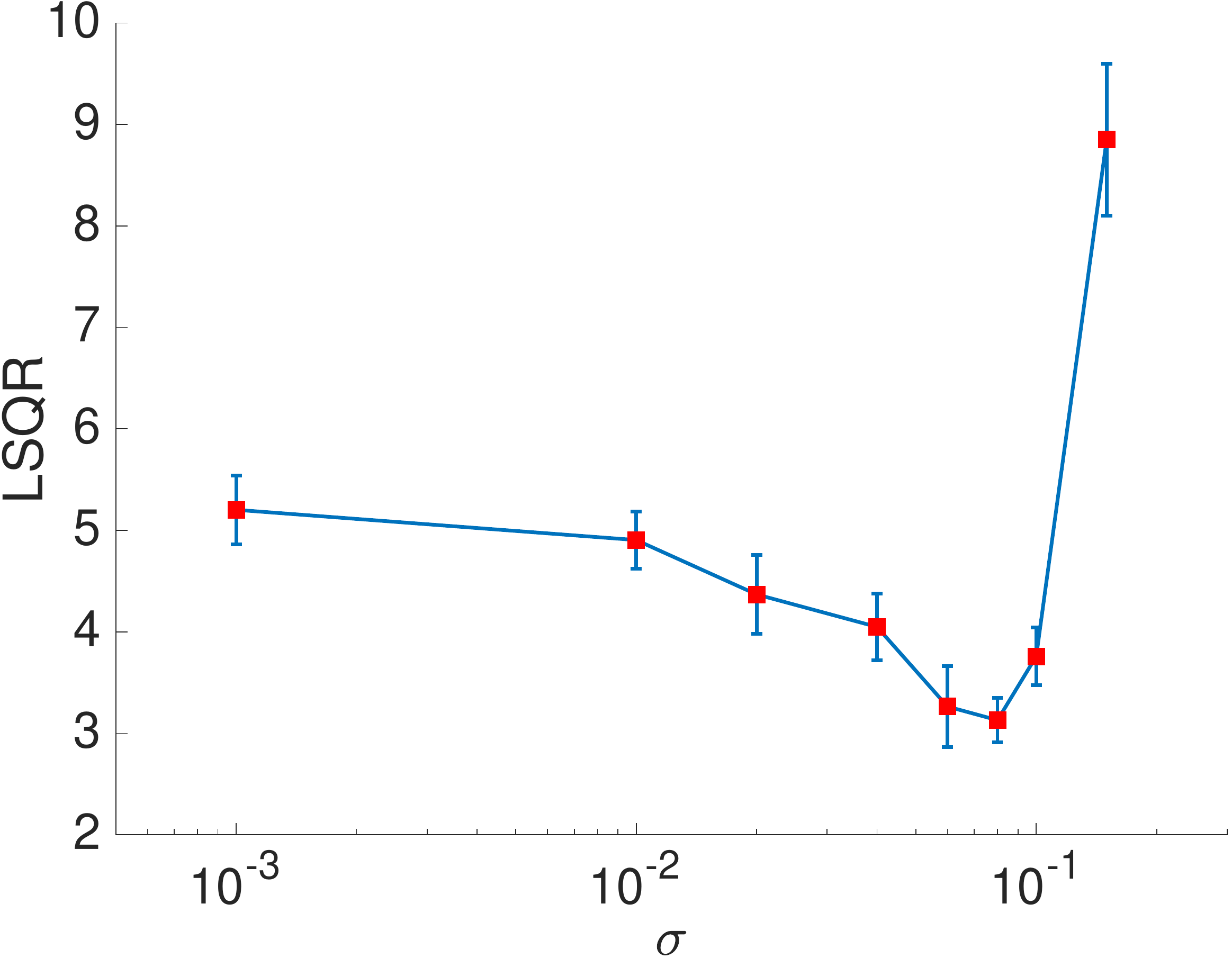}}
		
		\\
		
		\subfloat[\label{fig:ColsRes8-c}]{\includegraphics[height=\height\textheight,width=\width\textwidth]{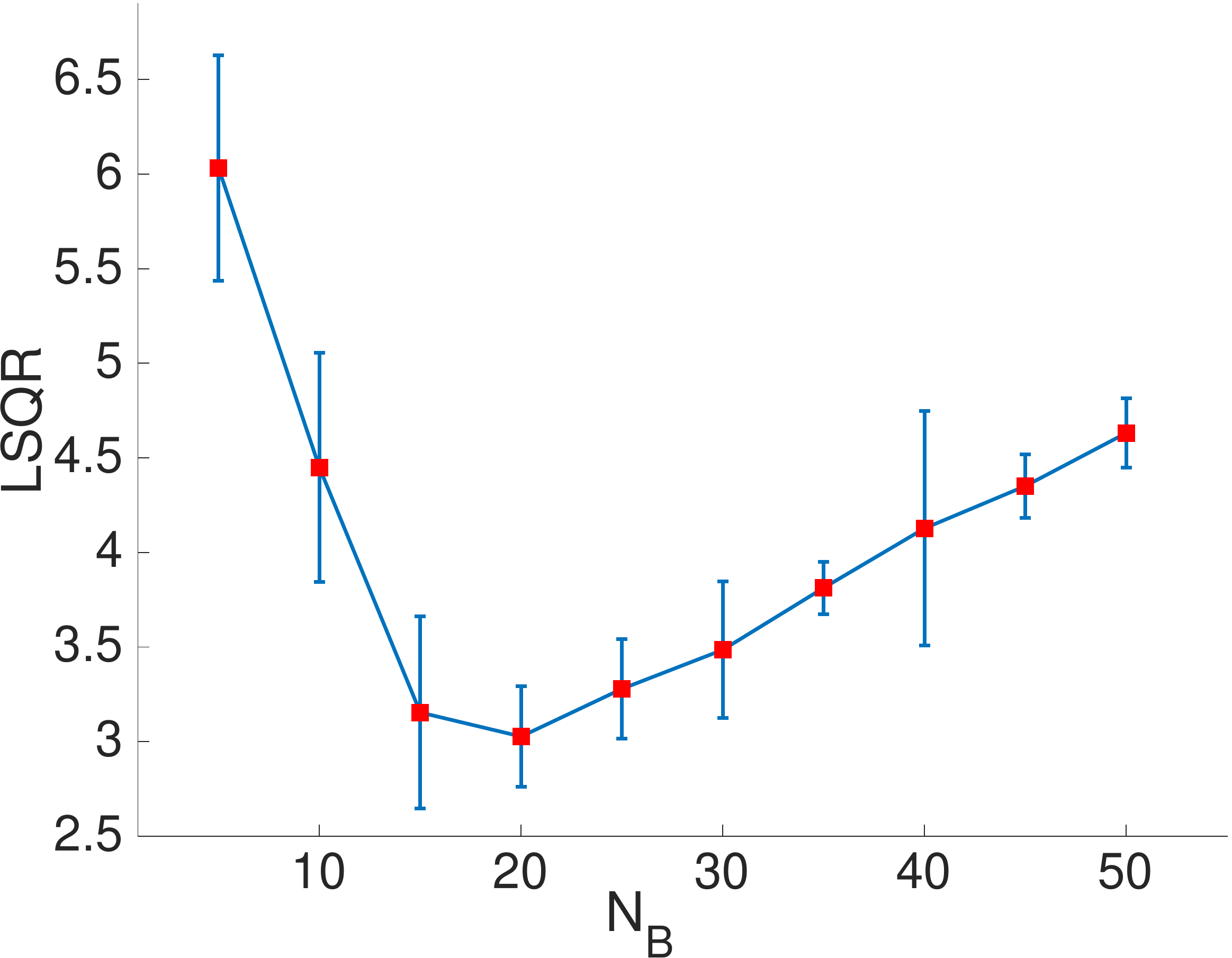}}
&

\subfloat[\label{fig:ColsRes8-d}]{\includegraphics[height=\height\textheight,width=\width\textwidth]{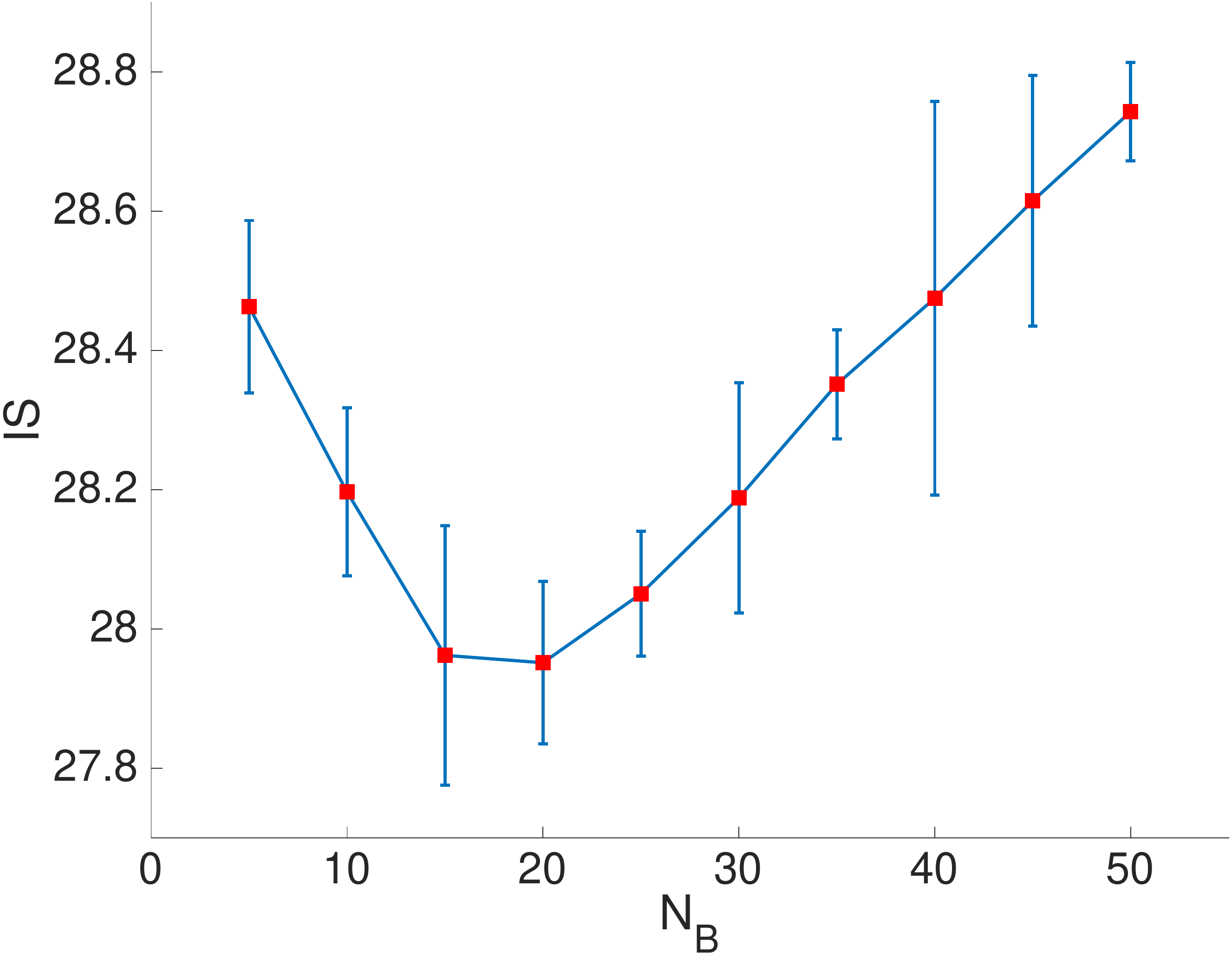}}

	\end{tabular}
	
	\protect
	\caption[Evaluation of PSO-LDE for estimation of \emph{Columns} distribution with only $N_{DT} = 10^5$ training samples, using a data augmentation noise.]{Evaluation of PSO-LDE for estimation of \emph{Columns} distribution with only $N_{DT} = 10^5$ training samples, using a data augmentation noise.
		The used NN architecture is block-diagonal with block size $S_B = 64$ (see Section \ref{sec:BDLayers}). Applied PSO instance is PSO-LDE with $\alpha = \frac{1}{4}$. Each \up training point $X^{\usuff}$ is sampled from the data density $\probs{\usuff}$. Further, an additive 20D noise $\upsilon \sim \mathcal{N}(0, \sigma^2 \cdot I)$ is sampled and added to $X^{\usuff}$. The PSO-LDE loss is applied on $\bar{X}^{\usuff} = X^{\usuff} + \upsilon$ instead of the original $X^{\usuff}$.
		Number of overall $\probs{\usuff}$'s samples is constant $10^5$; new samples of noise $\upsilon$ are sampled at each optimization iteration.
		We learn models using various values of $\sigma$.
		(a) $LSQR$ error as a function of $\sigma$, for models with 6 layers and the blocks number $N_B = 50$.
		(b) $LSQR$ error as a function of $\sigma$, for models with 3 layers and the blocks number $N_B = 20$.
		(c) $LSQR$ error  and (d) $IS$ error as a function of the blocks number $N_B$, for models with 3 layers and $\sigma = 0.08$.
		As can be seen, smaller size (up to some threshold) of NN produces much higher accuracy in a limited training dataset setting.
		Also, the additive noise can yield an accuracy improvement.
	}
	\label{fig:ColsRes8}
\end{figure}

\paragraph{Data Augmentation to Tackle Overfitting}

Additionally, we also apply the augmentation data technique to smooth the converged surface $f_{\theta}(X)$, as was described in Section \ref{sec:DeepLogPDFOF}. Concretely, we use samples of r.v. $\bar{X}^{\usuff} = X^{\usuff} + \upsilon$ as our \up points, where we push the NN surface \up. The $X^{\usuff}$ is sampled from the data density $\probs{\usuff}$, while the additive 20D noise $\upsilon$ is sampled from $\sim \mathcal{N}(0, \sigma^2 \cdot I)$. At each optimization iteration the next batch of $\{ X^{\usuff}_{i} \}_{i = 1}^{N^{\usuff}}$ is fetched from a priori prepared training dataset of size $10^5$, and new noise instances $\{ \upsilon^i \}_{i = 1}^{N^{\usuff}}$ are generated. Further,  $\{ \bar{X}^{\usuff}_{i} \}_{i = 1}^{N^{\usuff}}$ is used as the batch of \up points within PSO-LDE loss, where $\bar{X}^{\usuff}_{i} = X^{\usuff}_{i} + \upsilon^i$. Such a method allows us to push the $f_{\theta}(X)$ \up not only at the limited number of training points $X^{\usuff}_{i}$, but also at other points in some ball neighborhood around each $X^{\usuff}_{i}$, thus implicitly changing the approximated function to be smoother and less spiky. Another perspective to look over it is that we apply Gaussian diffusion over our NN surface, since adding Gaussian noise is identical to replacing the target pdf $\probi{\usuff}{X}$ with the convolution $\probi{\usuff}{X} * \mathcal{N}(0, \sigma^2 \cdot I)$.

In Figure \ref{fig:ColsRes8-a} we can see results of such data augmentation for BD network with 6 layers and 50 blocks, with $S_B = 64$. In such case the used model is over-flexible with too many degrees of freedom, and the data augmentation is not helpful, with an overall error being similar to the one obtained in Figure \ref{fig:ColsRes7.0-a}. Yet, when the model size (and its flexibility) is reduced to only 3 layers and 20 blocks, the performance trend becomes different. In Figure \ref{fig:ColsRes8-b} we can see that for particular values of the noise s.t.d. $\sigma$ (e.g. 0.08) the data augmentation technique reduces $LSQR$ error from 5.17 (see Figure \ref{fig:ColsRes7-b}) to only 3.13.

Further, in Figures \ref{fig:ColsRes8-c}-\ref{fig:ColsRes8-d} we can see again that there is an optimal NN size/flexibility that produces the best performance, where a smaller NN suffers from \underfit and a larger NN suffers from \overfit. Moreover, in Figure \ref{fig:ColsRes8-d} we can see again that the empirical error $IS$ is correlated with the ground truth error $LSQR$, although the former is less accurate than the latter. Hence, $IS$ can be used in practice to select the best learned model.

Overall, in our experiments we observed both \underfit and \overfit cases of PSO optimization. The first typically happens for large dataset size when NN is not flexible enough to represent all the information contained within training samples. In contrast, the second typically happens for small datasets when NN is over-flexible so that it can be pushed to have spikes around the training points. Further, \overfit can be detected via $IS$ test error. Finally, the data augmentation reduces effect of \overfit.

\subsection{PDF Estimation via PSO - \emph{Transformed Columns} Distribution}
\label{sec:TrColumnsEst}

In this section we will show that PSO is not limited only to isotropic densities (e.g. \emph{Columns} distribution from Section \ref{sec:ColumnsEst}) where there is no correlation among different data dimensions, and can be actually applied also to data with a complicated correlation structure between various dimensions.
Specifically, herein we infer a 20D \emph{Transformed Columns} distribution, $\probi{\usuff}{X} = \probi{TrClmns}{X}$, which is produced from isotropic \emph{Columns} by multiplying a random variable $X \sim \probs{Clmns}$ (defined in Eq.~(\ref{eq:ColumnsDef})) by a dense invertible matrix $A$ that enforces correlation between different dimensions. Its pdf can be written as:
\begin{equation}
\probi{TrClmns}{X}
=
\frac{1}{abs
	\left[ \det A \right]}
\probi{Clmns}{A^{-1} \cdot X}
,
\label{eq:TransformedColumnsDef}
\end{equation}
where $A$ appears in Appendix \ref{sec:App8}. As we will see below, the obtained results for this more sophisticated distribution have similar trends to results of \emph{Columns}. Additionally, we will also show how important is the choice of $\probs{\dsuff}$. Unconcerned reader may skip it to the next Section \ref{sec:ImageEst}.

\begin{figure}[tb]
	\centering
	
	\begin{tabular}{ccc}

		\subfloat[\label{fig:TrColsRes1-a}]{\includegraphics[width=0.31\textwidth]{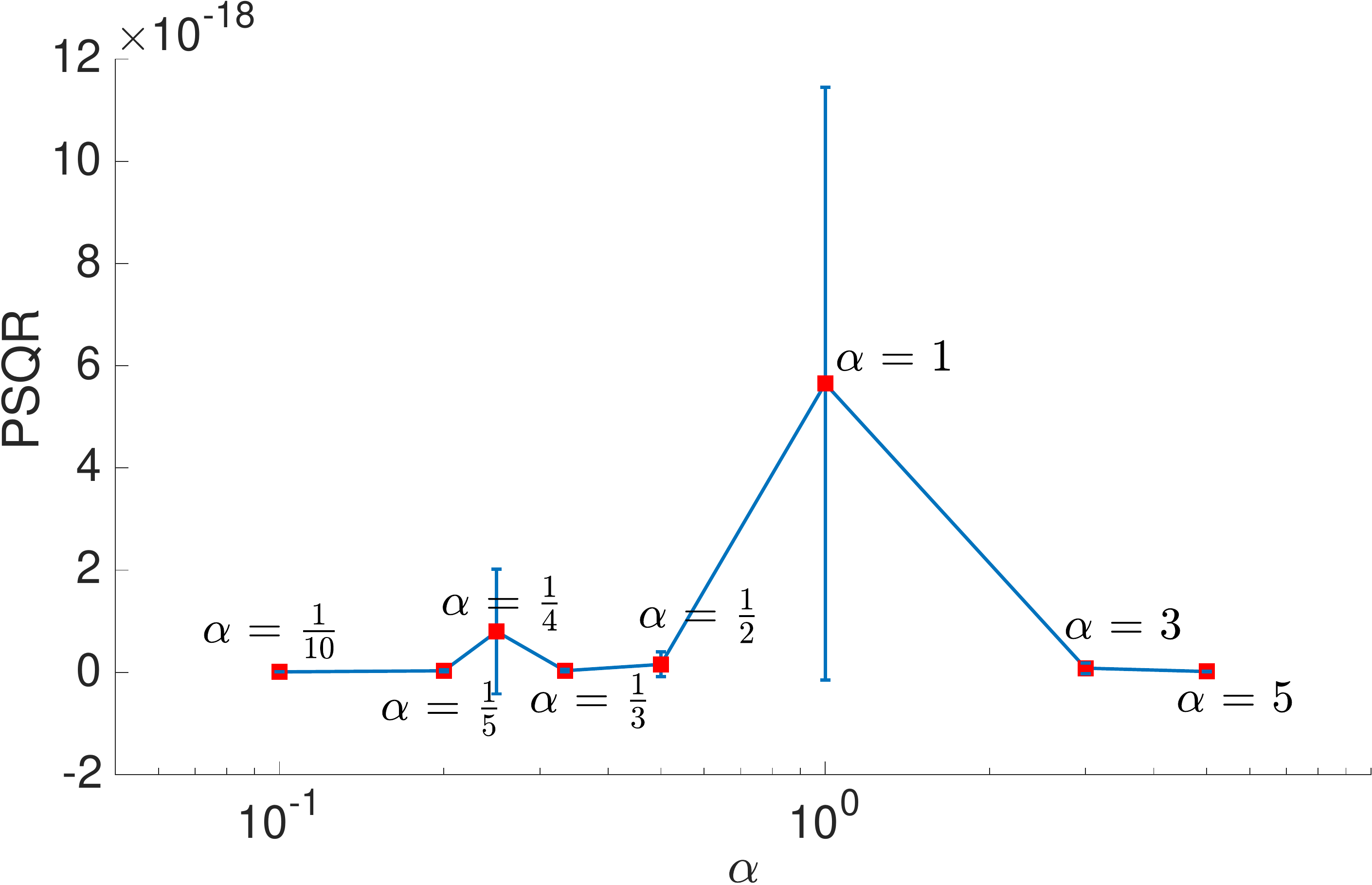}}
		&
		
		\subfloat[\label{fig:TrColsRes1-b}]{\includegraphics[width=0.31\textwidth]{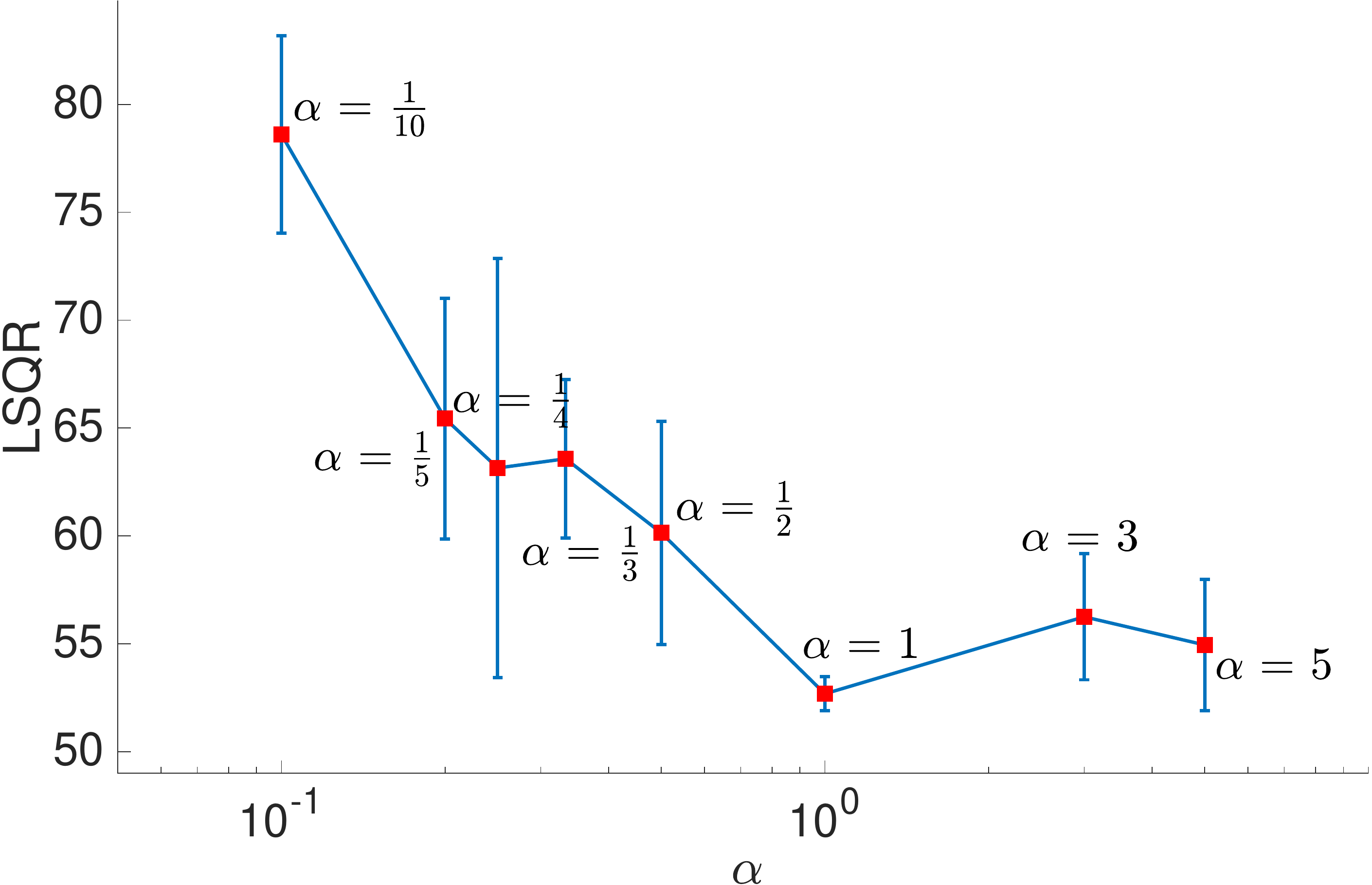}}
		
		&
		\subfloat[\label{fig:TrColsRes1-c}]{\includegraphics[width=0.31\textwidth]{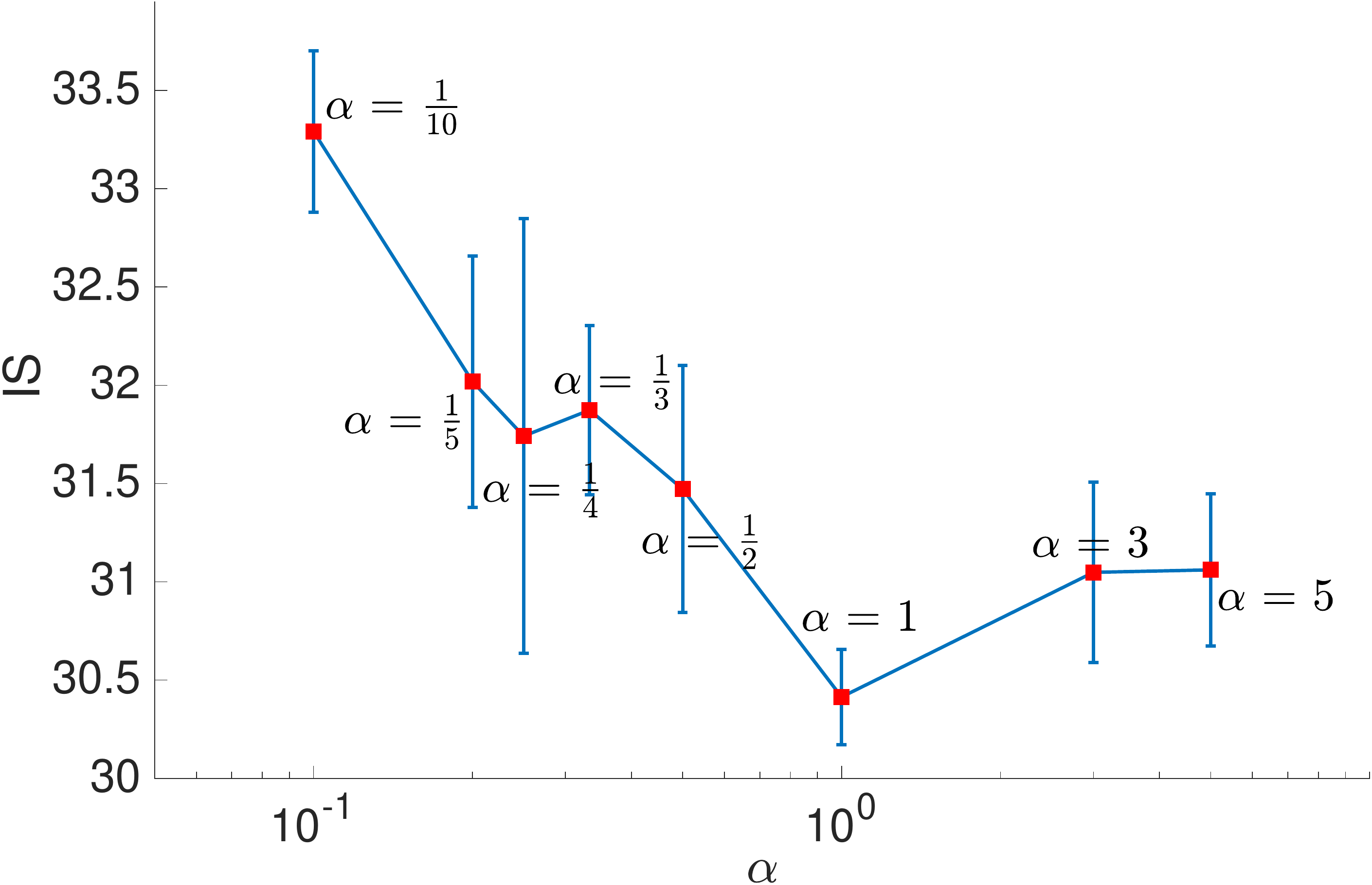}}
		
		\\
		\subfloat[\label{fig:TrColsRes1-d}]{\includegraphics[width=0.31\textwidth]{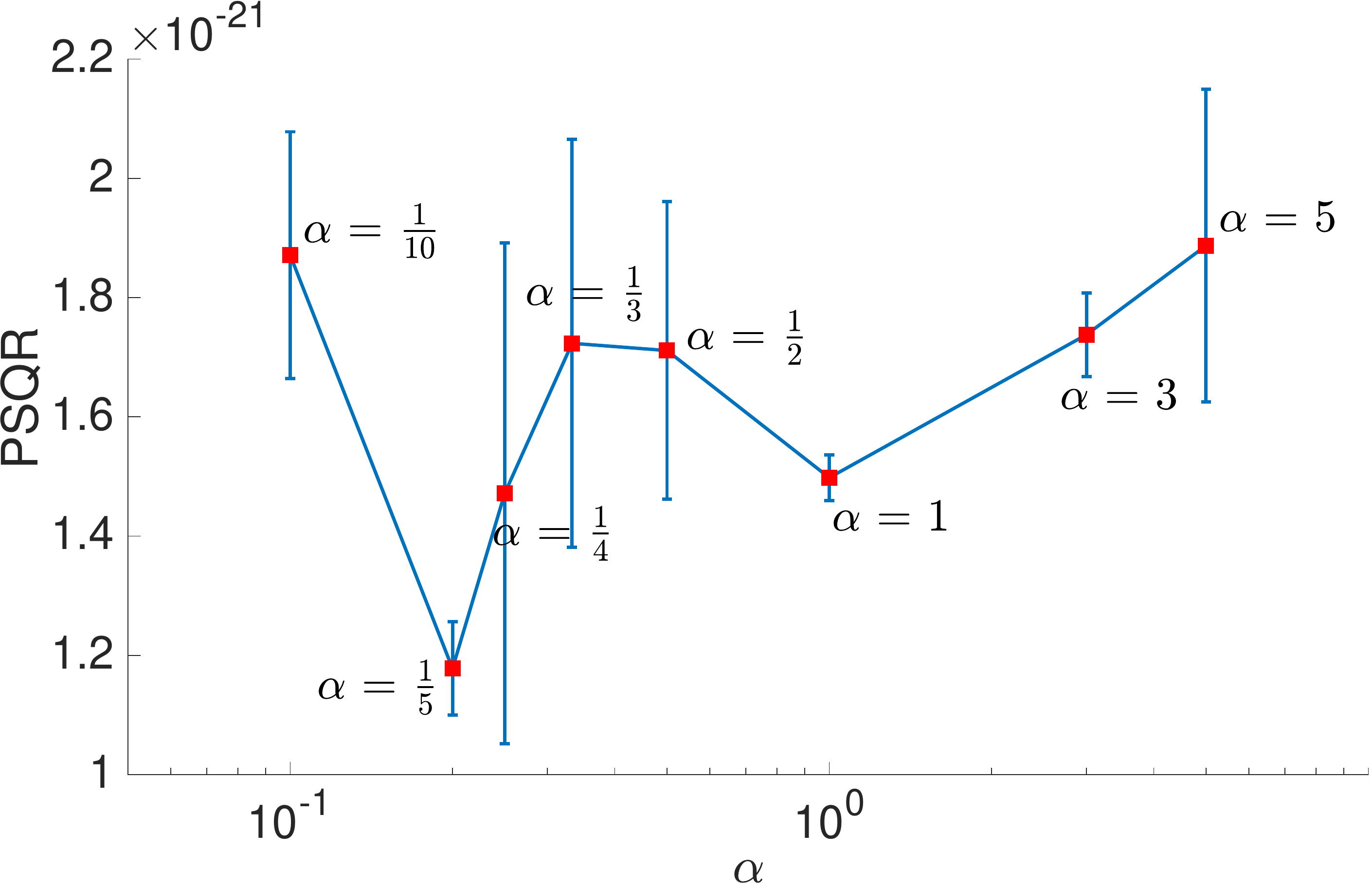}}
		&
		
		\subfloat[\label{fig:TrColsRes1-e}]{\includegraphics[width=0.31\textwidth]{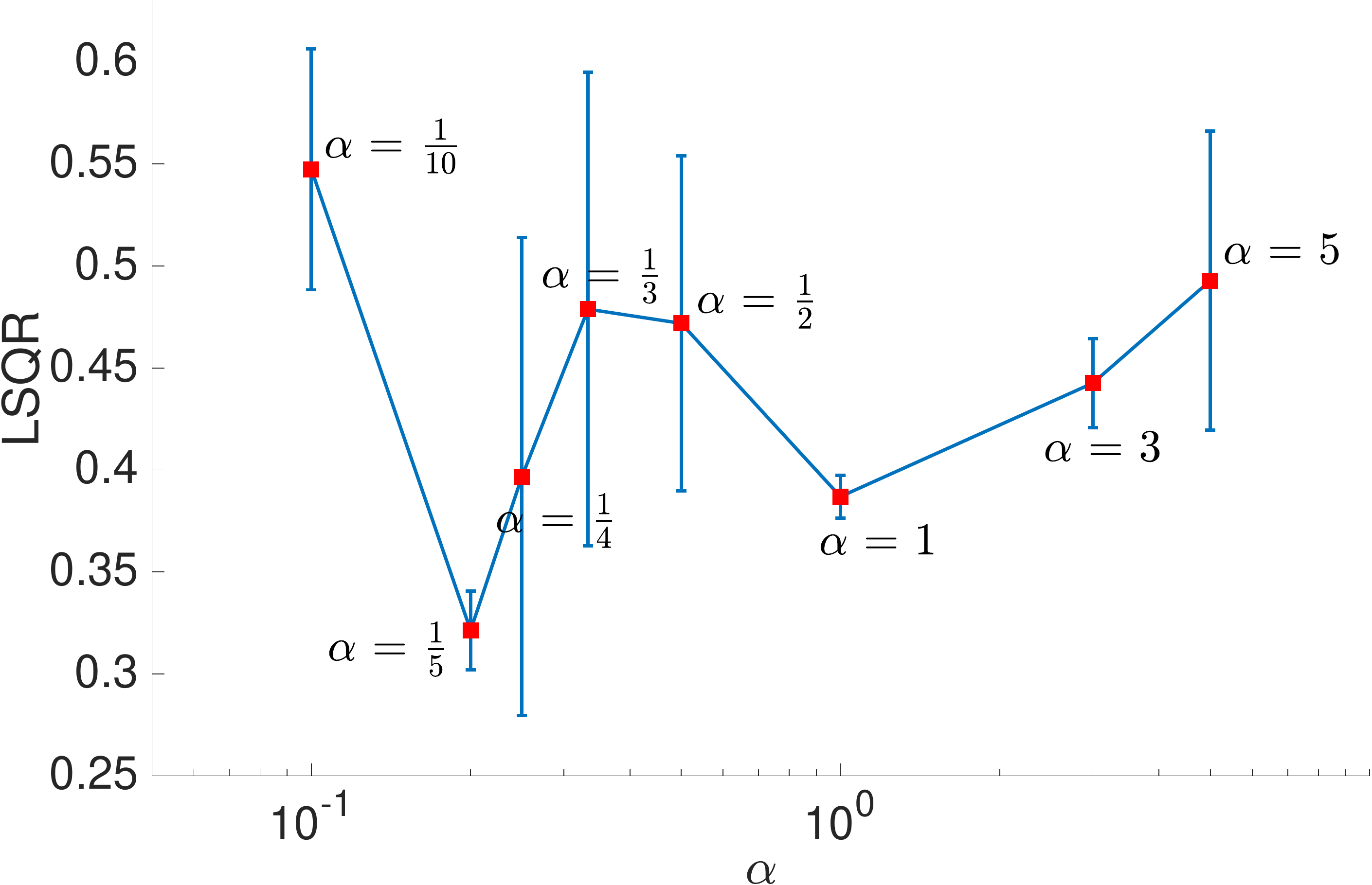}}
		
		&
		\subfloat[\label{fig:TrColsRes1-f}]{\includegraphics[width=0.31\textwidth]{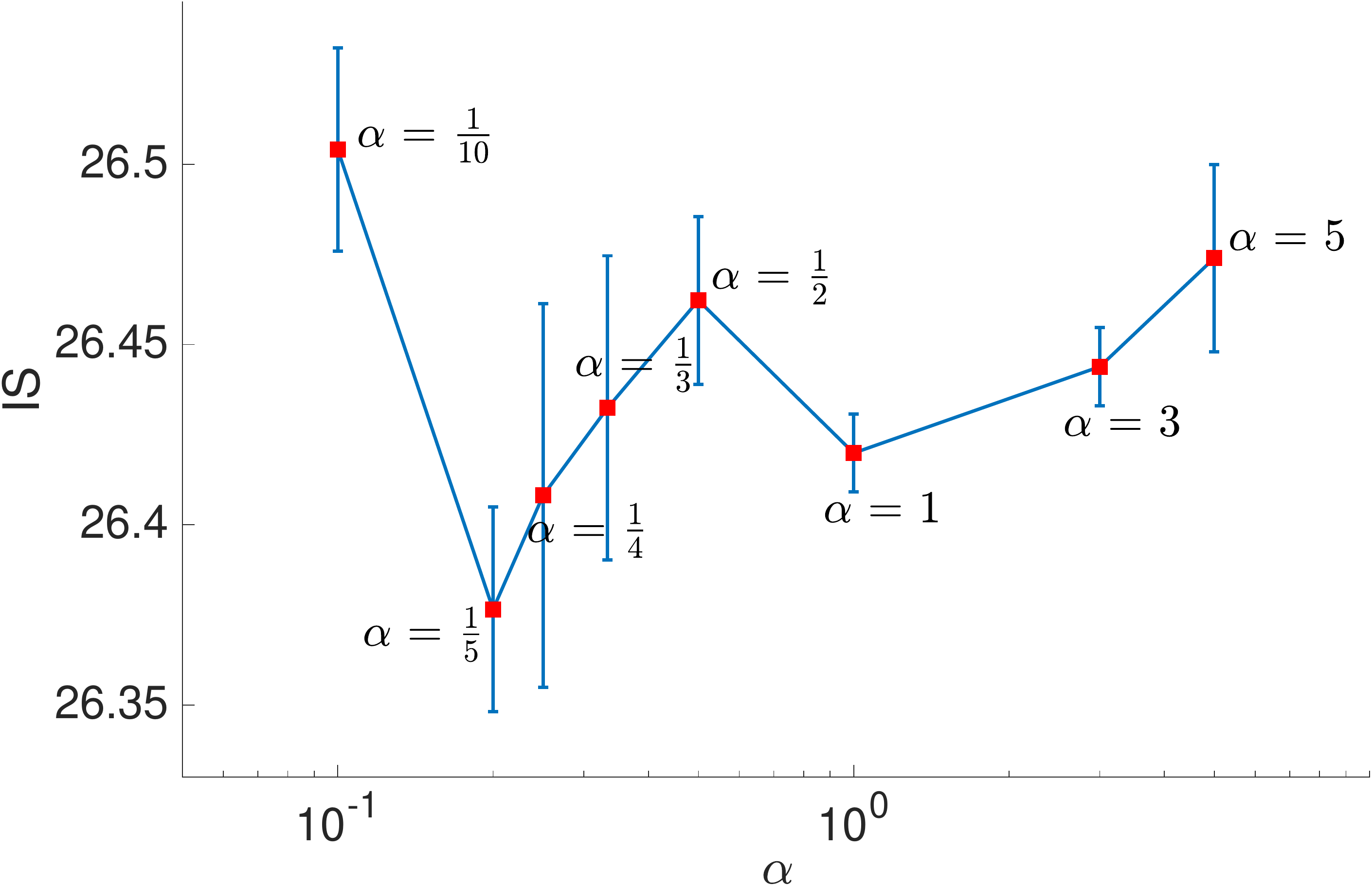}}

	\end{tabular}
	
	\protect
	\caption[Evaluation of PSO-LDE for estimation of \emph{Transformed Columns} distribution, where NN architecture is BD. Different values of a hyper-parameter $\alpha$ are verified.]{Evaluation of PSO-LDE for estimation of \emph{Transformed Columns} distribution, where NN architecture is BD with 6 layers, number of blocks $N_B = 50$ and block size $S_B = 64$ (see Section \ref{sec:BDLayers}). The \down density $\probs{\dsuff}$ is Uniform in the top row (a)-(b)-(c), and Gaussian in the bottom row (d)-(e)-(f).
		For different values of a hyper-parameter $\alpha$, (a)-(d) $PSQR$, (b)-(e) $LSQR$ and (c)-(f) $IS$  are reported, along with their empirical standard deviation. As observed, when $\probs{\dsuff}$ is Uniform, PSO-LDE fails to learn the data density due to large \emph{relative support mismatch}.
		On the other hand, when $\probs{\dsuff}$ is Gaussian, the target surface is accurately approximated.
	}
	\label{fig:TrColsRes1}
\end{figure}

\paragraph{Uniform \boldmath$\probs{\dsuff}$}

First, we evaluate PSO-LDE for different values of $\alpha$ on the density inference task. The applied model is BD with 6 layers, number of blocks $N_B = 50$ and block size $S_B = 64$. The dataset size is $N_{DT} = 10^8$ and $\probs{\dsuff}$ is Uniform. In Figures \ref{fig:TrColsRes1-a}-\ref{fig:TrColsRes1-b}-\ref{fig:TrColsRes1-c} we can see the corresponding errors for learned models. The errors are huge, implying that the inference task failed. The reason for this is the \emph{relative support mismatch}
between Uniform and \emph{Transformed Columns} densities. After transformation by matrix $A$ the samples from $\probs{TrClmns}$ are more widely spread out within the space $\RR^{20}$. The range of samples along each dimension is now around $[-10, 10]$ instead of the corresponding range $[-2.3, 2.3]$ in \emph{Columns} distribution. Yet, the samples are mostly located in a small subspace of \emph{hyperrectangle} $\mathcal{R} = [-10, 10]^{20}$. When we choose $\probs{\dsuff}$ to be Uniform with $\mathcal{R}$ as its support, most of the samples $X^{\usuff}$ and $X^{\dsuff}$ are located in different areas of this huge \emph{hyperrectangle} and cannot balance each other to reach PSO equilibrium. Such \emph{relative support mismatch} prevents  proper learning of the data density function.

\paragraph{Gaussian \boldmath$\probs{\dsuff}$}

Next, instead of Uniform we used Gaussian distribution $\mathcal{N}(\mu, \Sigma)$ as our \down density $\probs{\dsuff}$. The mean vector $\mu$ is equal to the mean of samples from $\probs{\usuff}$; the $\Sigma$ is a diagonal matrix whose non-zero values are empirical variances for each dimension of available \up samples. In Figures \ref{fig:TrColsRes1-d}-\ref{fig:TrColsRes1-e}-\ref{fig:TrColsRes1-f} we can observe that overall achieved accuracy is high, yet it is worse than the results for \emph{Columns} distribution. Such difference can be again explained by a mismatch between $\probs{\usuff}$ and $\probs{\dsuff}$ densities. While \emph{Columns} and Uniform densities in Section \ref{sec:ColumnsEst} are relatively aligned to each other, the \emph{Transformed Columns} and Gaussian distributions have a bounded ratio $\frac{\probi{\usuff}{X}}{\probi{\dsuff}{X}}$ only around their mean point $\mu$. In far away regions such ratio becomes too big/small, causing PSO inaccuracies. Hence, we argue that a better choice of $\probs{\dsuff}$ would further improve the accuracy.

\begin{table}

	\centering
	\begin{tabular}{llll}
		\toprule
		Method     & $PSQR$ & $LSQR$     & $IS$ \\
		\midrule
		PSO-LDE, & $1.6 \cdot 10^{-21} \pm  3.17 \cdot 10^{-22}$ &
		$0.442 \pm 0.095$  &
		$26.44 \pm 0.05$  \\
		averaged over all $\alpha$ \\
		\midrule
		IS & $9.3 \cdot 10^{-21} \pm 3.4 \cdot 10^{-24}$ &
		$26.63 \pm 0.86$  &
		$31.3 \pm 0.04$  \\
		\midrule
		PSO-MAX & $2.1 \cdot 10^{-21} \pm 2.96 \cdot 10^{-22}$ &
		$0.54 \pm 0.088$  &
		$26.52 \pm 0.03$  \\
		
		\bottomrule
	\end{tabular}
	
	\caption{Performance comparison between various PSO instances for \emph{Transformed Columns} density}
	\label{tbl:Perf3}
\end{table}

Additionally, we can see that in case of \emph{Transformed Columns} PSO-LDE with $\alpha = \frac{1}{5}$ achieves the smallest error, with its $LSQR$ being $0.32 \pm 0.02$. Additionally, we evaluated the Importance Sampling (IS) method from Table \ref{tbl:PSOInstances1} and PSO-MAX defined in Eq.~(\ref{eq:PSOMaxLogEstNormalizedMU})-(\ref{eq:PSOMaxLogEstNormalizedMD}). In Table \ref{tbl:Perf3} we see that the performance of PSO-MAX is slightly worse than of PSO-LDE, similarly to what was observed for \emph{Columns}. Moreover, the IS fails entirely, producing a very large error. Furthermore, in order to stabilize its learning process we were required to reduce the learning rate from $0.0035$ to $0.0001$. Hence, here we can see again the superiority of bounded \mfs over not bounded.

\begin{figure}[tb]
	\centering
	
	\begin{tabular}{c}
		
		\subfloat{\includegraphics[width=0.45\textwidth]{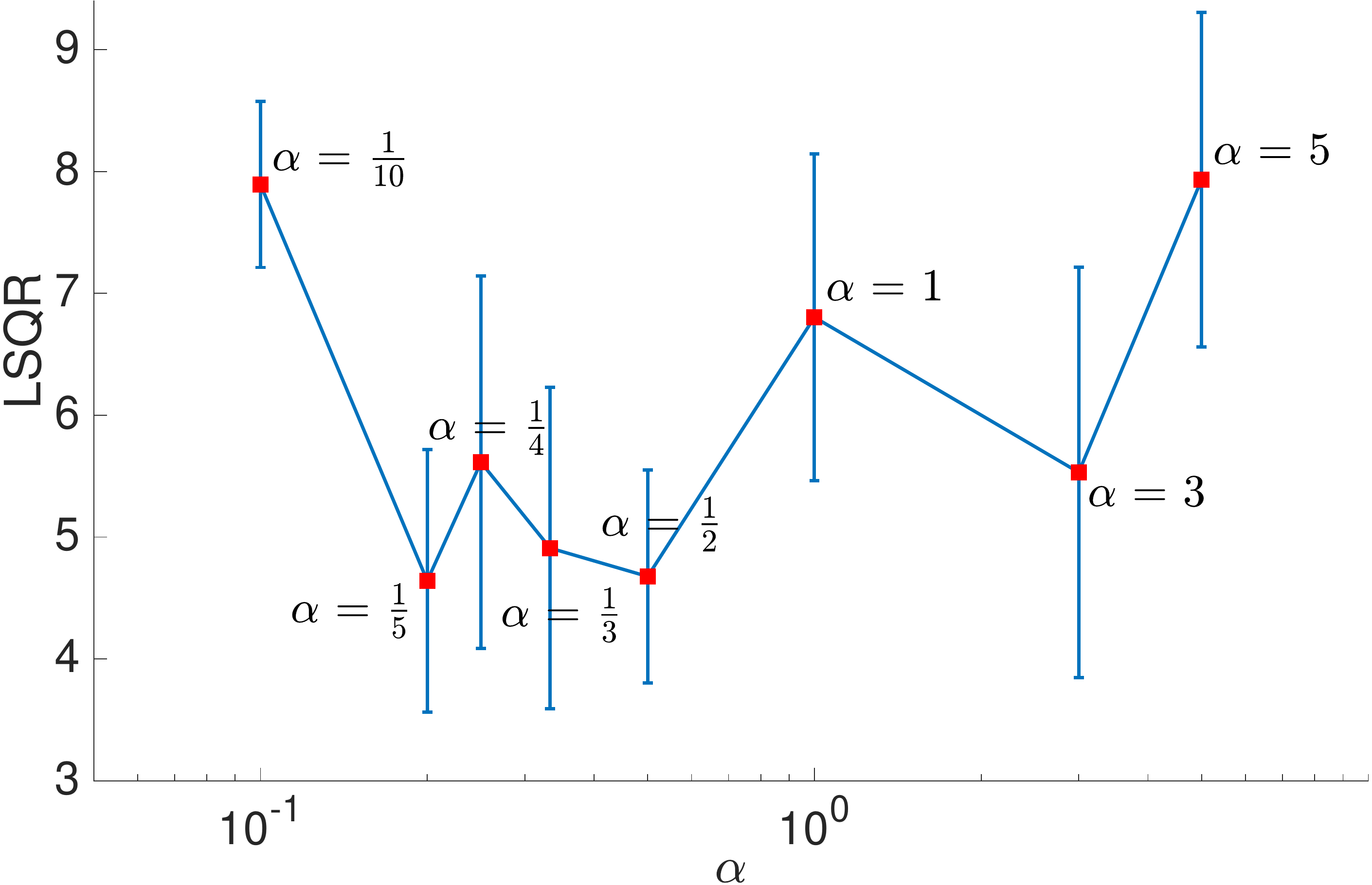}}
		
	\end{tabular}
	
	\protect
	\caption[Evaluation of PSO-LDE for estimation of \emph{Transformed Columns} distribution, where NN architecture is FC.]{Evaluation of PSO-LDE for estimation of \emph{Transformed Columns} distribution, where NN architecture is FC with 4 layers of size 1024 (see Section \ref{sec:BDLayers}). For different values of a hyper-parameter $\alpha$, $LSQR$ error is reported, along with its empirical standard deviation. Again, the small values of $\alpha$ (around $\frac{1}{5}$) have a lower error.
	}
	\label{fig:TrColsRes2}
\end{figure}

\paragraph{Various NN Architectures}

Additionally, we evaluated several different NN architectures for \emph{Transformed Columns} distribution, with $\probs{\dsuff}$ being Gaussian. In Figure \ref{fig:TrColsRes2} we report the performance for FC networks. As observed, the FC architecture has a higher error w.r.t. BD architecture in Figure \ref{fig:TrColsRes1-e}. Moreover, PSO-LDE with $\alpha$ around $\frac{1}{5}$ performs better.

\begin{figure}[tb]
	\centering
	
	\begin{tabular}{c}

		\subfloat{\includegraphics[width=0.98\textwidth]{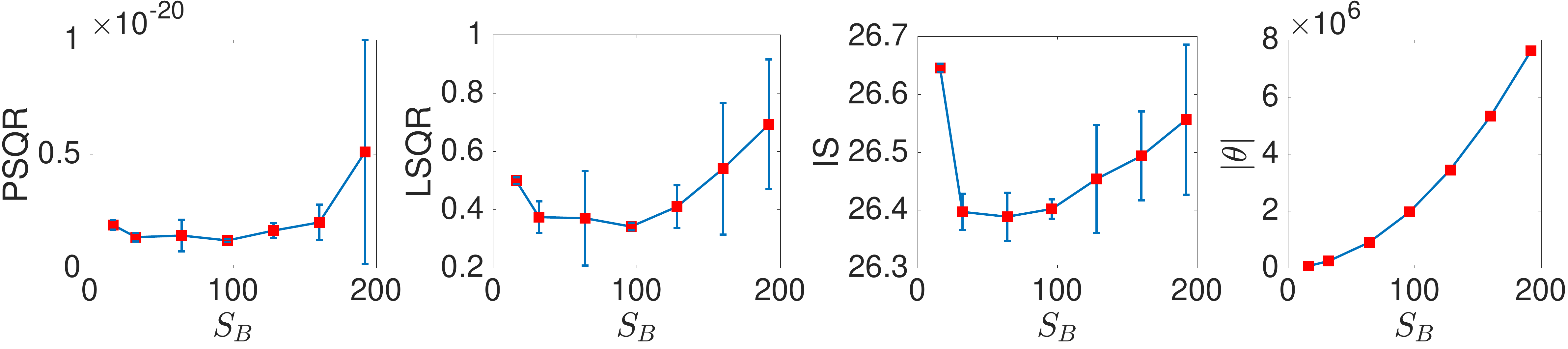}}
		
	\end{tabular}
	
	\protect
	\caption[Evaluation of PSO-LDE for estimation of \emph{Transformed Columns} distribution, where NN architecture is BD with various block sizes.]{Evaluation of PSO-LDE for estimation of \emph{Transformed Columns} distribution, where NN architecture is BD with 6 layers and number of blocks $N_B = 50$ (see Section \ref{sec:BDLayers}). The block size $S_B$ is taking values $\{16, 32, 64, 96, 128, 160, 192 \}$. The applied loss is PSO-LDE with $\alpha = \frac{1}{5}$.
		For different values of $S_B$ we report $PSQR$, $LSQR$ and $IS$, and their empirical standard deviation. Additionally, in the last column we depict the size of $\theta$ for each value of $S_B$.
	}
	\label{fig:TrColsRes3}
\end{figure}

Further, in Figure \ref{fig:TrColsRes3} we experiment with BD architecture for different values of the block size $S_B$. Similarly to what was observed in Section \ref{sec:ColumnsEstNNAME} for \emph{Columns} distribution, also here a too small/large block size has worsen accuracy.

\begin{figure}[tb]
	\centering
	
	\begin{tabular}{cc}

		\subfloat[\label{fig:TrColsRes4-a}]{\includegraphics[width=0.4\textwidth]{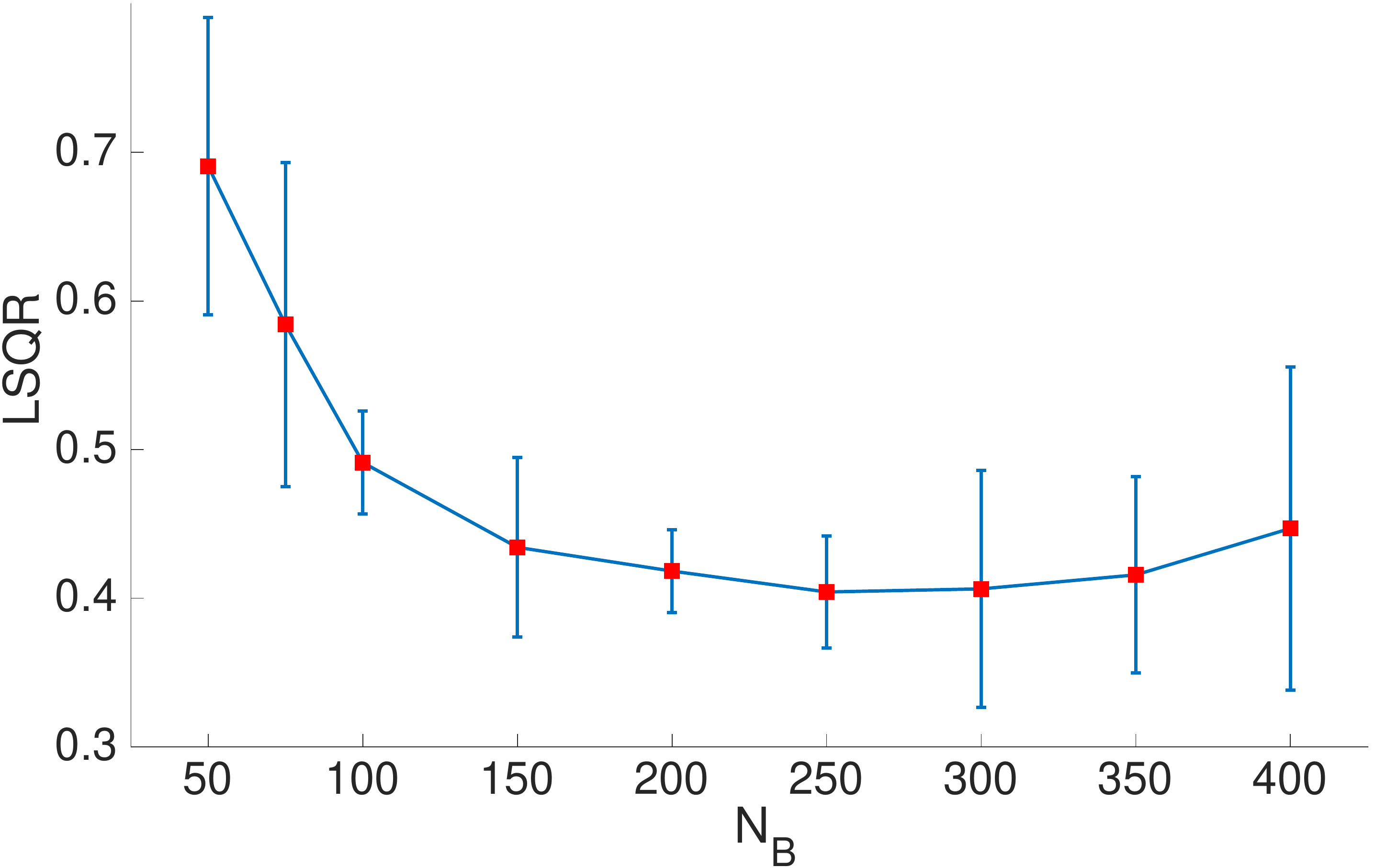}}
		&
		\subfloat[\label{fig:TrColsRes4-b}]{\includegraphics[width=0.4\textwidth]{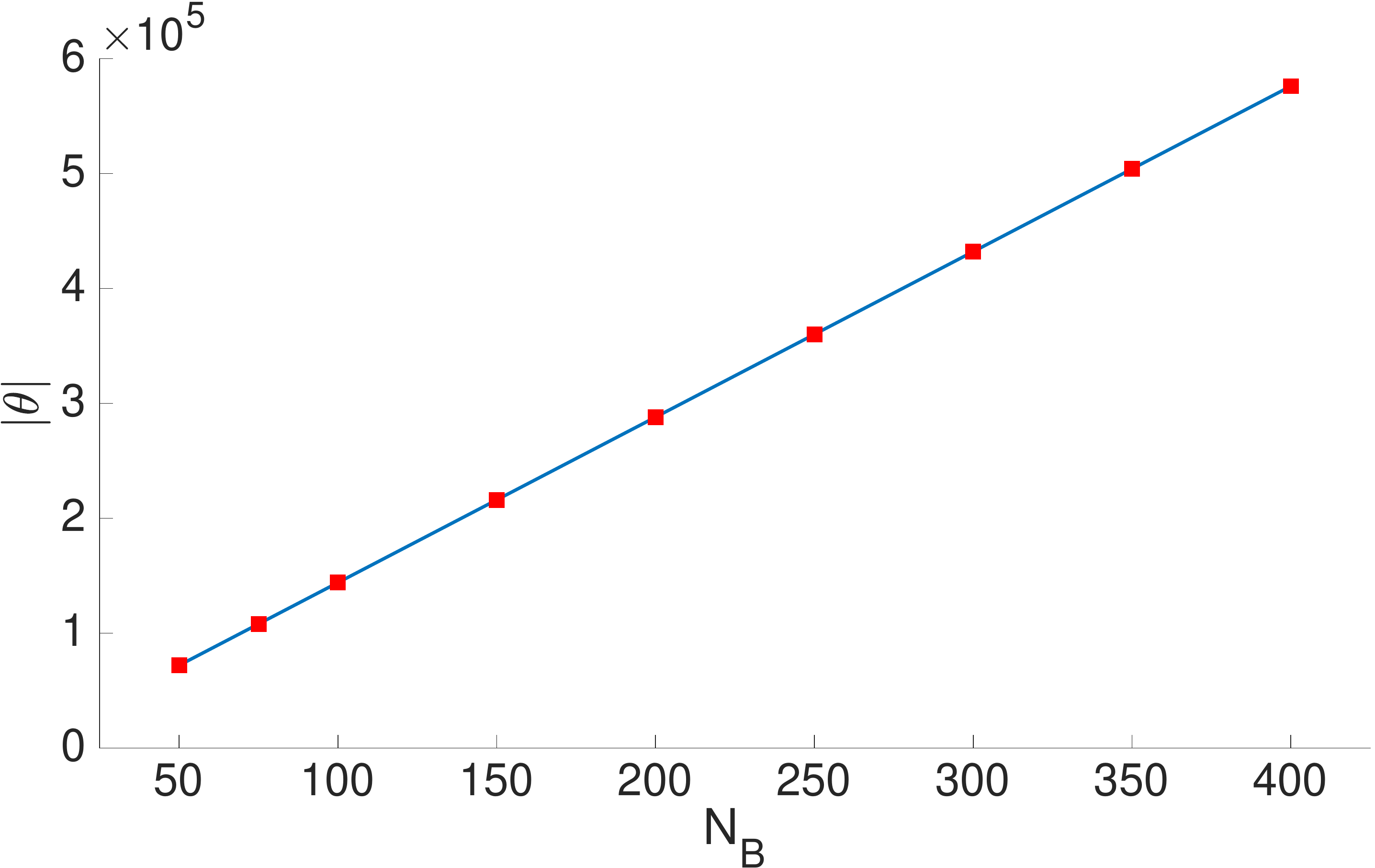}}
		
		\\
		
	\end{tabular}
	
	\protect
	\caption[Evaluation of PSO-LDE for estimation of \emph{Transformed Columns} distribution, where NN architecture is BD with various numbers of blocks.]{Evaluation of PSO-LDE for estimation of \emph{Transformed Columns} distribution, where NN architecture is BD with 6 layers and block size $S_B = 16$ (see Section \ref{sec:BDLayers}). The number of blocks $N_B$ is changing. The applied loss is PSO-LDE with $\alpha = \frac{1}{5}$.
		For different values of $N_B$, (a) $LSQR$ error and (b) the number of weights $\left| \theta \right|$ are reported.
	}
	\label{fig:TrColsRes4}
\end{figure}

We also perform additional experiments for BD architecture where this time we use small blocks, with $S_B = 16$. In Figure \ref{fig:TrColsRes4} we see that for such networks the bigger number of blocks per layer $N_B$, that is the bigger number of transformation channels, yields better performance. However, we also observe the drop in an accuracy when the number of these channels grows considerably (above 250). Furthermore, no matter how big is $N_B$, the models with small blocks ($S_B = 16$) in Figure \ref{fig:TrColsRes4} produced inferior results w.r.t. model with big blocks ($S_B = 64$, see Figure \ref{fig:TrColsRes1-e} for $\alpha = \frac{1}{5}$).

\begin{figure}[!tb]
	\centering
	
	\begin{tabular}{c}

		\subfloat[\label{fig:TrColsRes5-a}]{\includegraphics[width=0.98\textwidth]{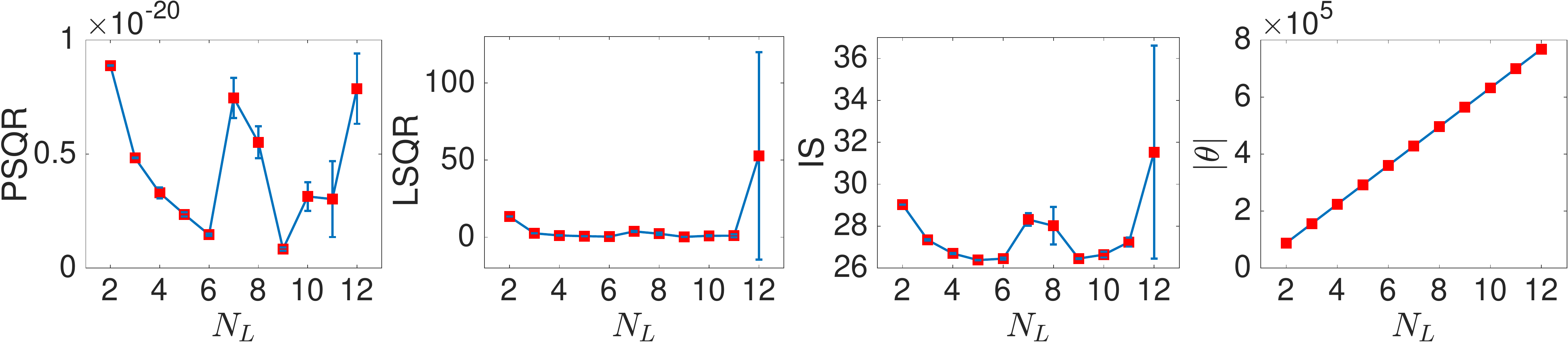}}
		\\
		
		\subfloat[\label{fig:TrColsRes5-b}]{\includegraphics[width=0.9\textwidth]{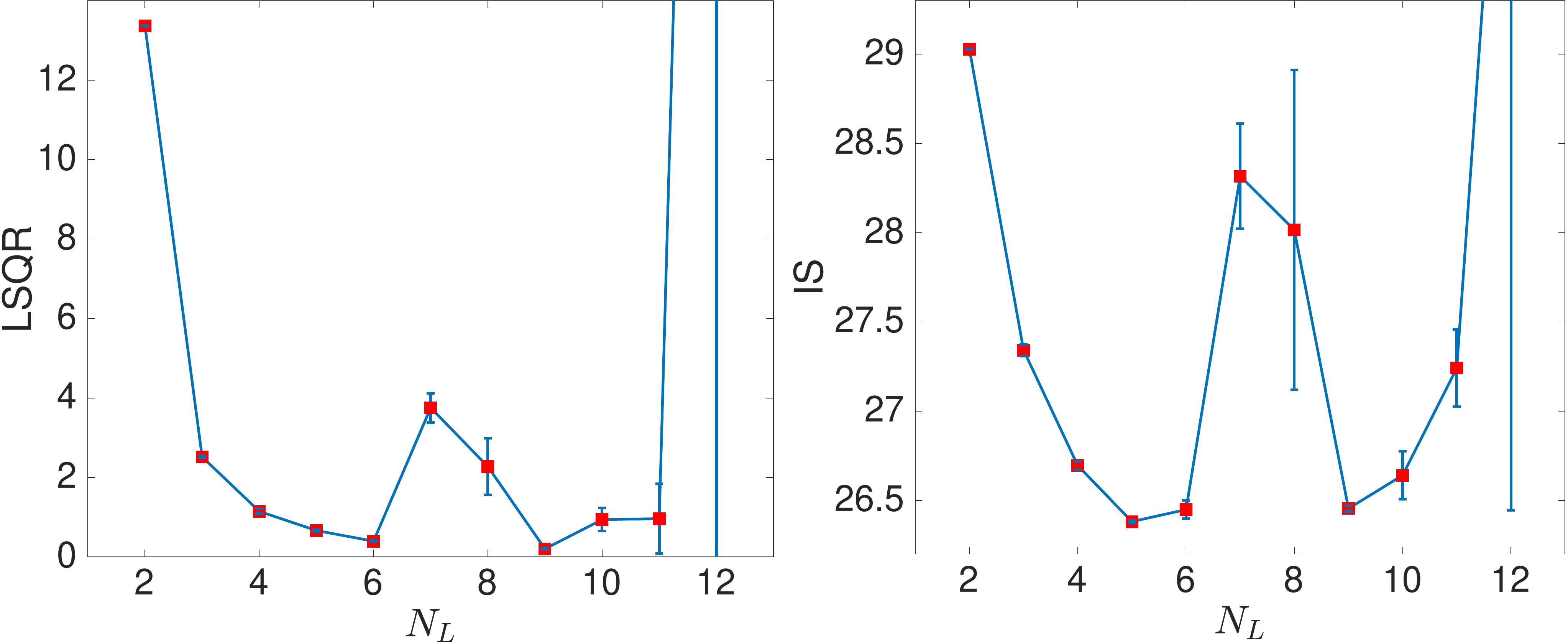}}
		
	\end{tabular}
	
	\protect
	\caption[Evaluation of PSO-LDE for estimation of \emph{Transformed Columns} distribution, where NN architecture is BD with various numbers of layers.]{Evaluation of PSO-LDE for estimation of \emph{Transformed Columns} distribution, where NN architecture is BD with number of blocks $N_B = 250$ and block size $S_B = 16$ (see Section \ref{sec:BDLayers}). The number of layers $N_L$ is changing from 2 to 12. The applied loss is PSO-LDE with $\alpha = \frac{1}{5}$.
		(a) For different values of $N_L$ we report $PSQR$, $LSQR$ and $IS$, and their empirical standard deviation. Additionally, in the last column we depict the size of $\theta$ for each value of $N_L$.
		(b) Zoom of (a).
	}
	\label{fig:TrColsRes5}
\end{figure}

Further, we evaluated the same small-block network for a different number of layers $N_L$, with $S_B = 16$ and $N_B = 250$. In Figure \ref{fig:TrColsRes5} we can see that when $N_L$ grows from 2 to 6, the overall performance is getting better. Yet, for a larger number of layers the performance trend is inconsistent. When $N_L$ is between 7 and 11, some values of $N_L$ are better than other, with no evident improvement pattern for large $N_L$. Moreover, for $N_L = 12$ the error grows significantly. More so, we empirically observed that 2 out of 5 runs of this setting totally failed due to the zero loss gradient. Thus, the most likely conclusion for this setting is that for a too deep networks the signal from input fails to reach its end, which is a known issue in DL domain. Also, from our experiments it follows that  learning still can succeed, depending on the initialization of network weights.

Furthermore, along with the above inconsistency that can occur for too deep networks, for $N_L = 9$ in Figure \ref{fig:TrColsRes5} we still received our best results on \emph{Transformed Columns} dataset, with the mean $LSQR$ being 0.204. This leads to the conclusion that even when BD network uses blocks of a small size ($S_B = 16$), it still can produce a superior performance if it is deep enough. Besides, the zero-gradient issue may be tackled by adding shortcut connections between various layers \citep{He16cvpr}.

Overall, in our experiments we saw that when $\probs{\usuff}$ and $\probs{\dsuff}$ are not properly aligned (i.e.~far from each other), PSO fails entirely. 
Further, PSO-LDE with values of $\alpha$ around $\frac{1}{5}$ was observed to perform better, which is an another superiority evidence for small values of $\alpha$.
Moreover, IS with unbounded \mfs could not be applied at all for faraway densities. BD architecture showed again a significantly higher accuracy over FC networks. Block size $S_B = 96$ produced a better inference. Finally, for BD networks with small blocks ($S_B = 16$) the bigger number of blocks $N_B$ and the bigger number of layers $N_L$ improve an accuracy. However, at some point the increase in both can cause the performance drop.

\subsection{PDF Estimation via PSO - 3D \emph{Image-based} Densities}
\label{sec:ImageEst}

\begin{figure}[tbp]
	\centering
	
	\begin{tabular}{cc}
		
		\subfloat[\label{fig:ImagePDF1-a}]{\includegraphics[width=0.46\textwidth]{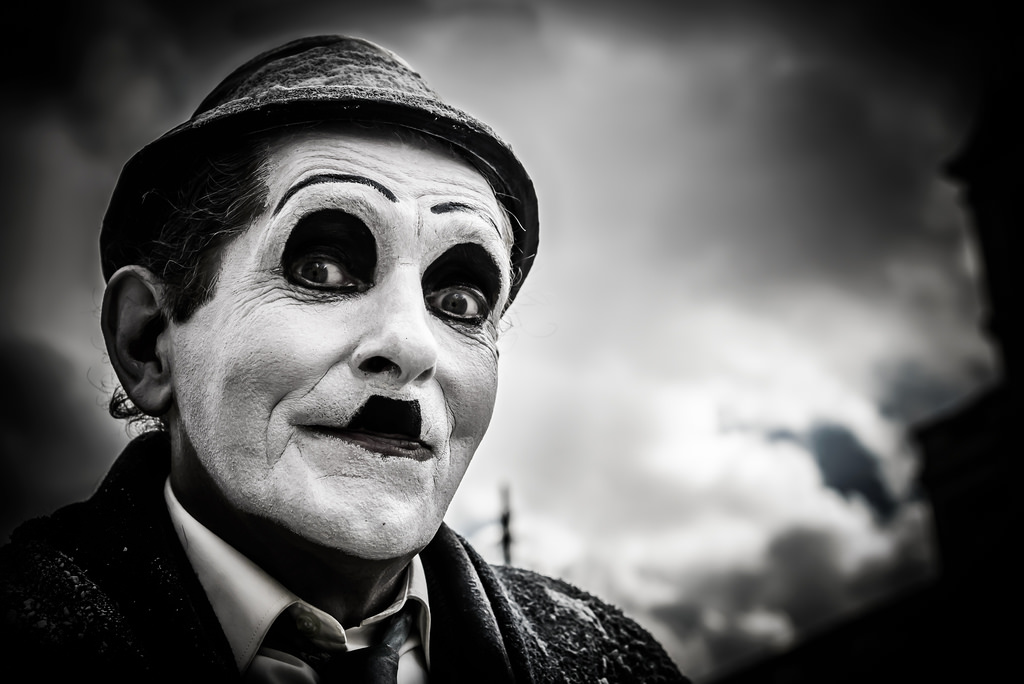}}
		&
		
		\subfloat[\label{fig:ImagePDF1-b}]{\includegraphics[width=0.46\textwidth]{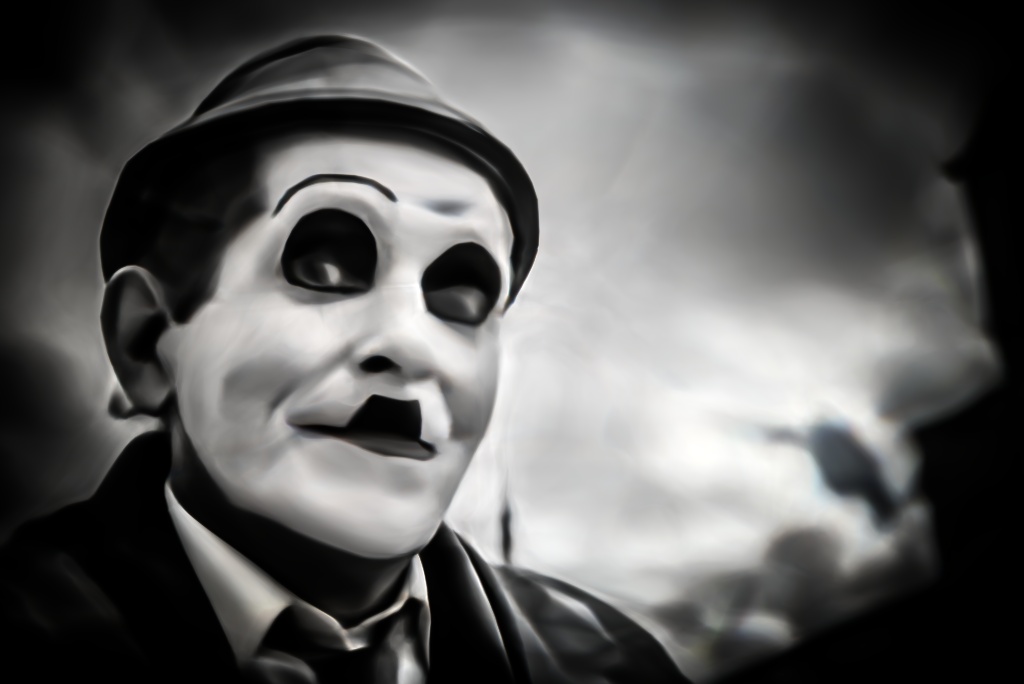}}
		\\	
		
		\subfloat[\label{fig:ImagePDF1-c}]{\includegraphics[width=0.46\textwidth]{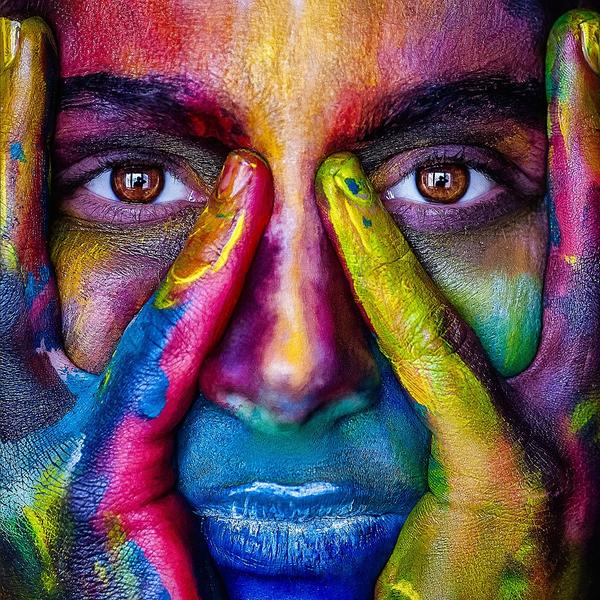}}
		&
		
		\subfloat[\label{fig:ImagePDF1-d}]{\includegraphics[width=0.46\textwidth]{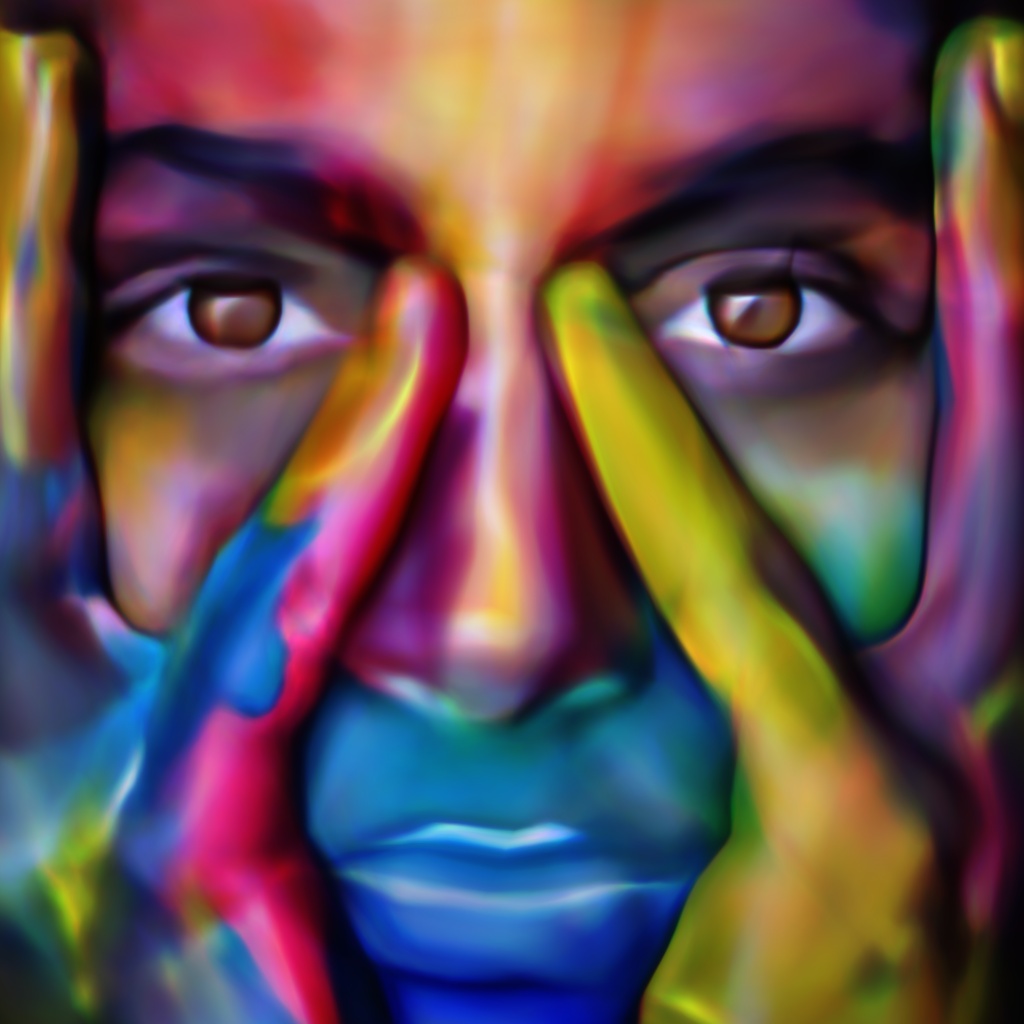}}
		\\

		\subfloat[\label{fig:ImagePDF1-e}]{\includegraphics[width=0.46\textwidth]{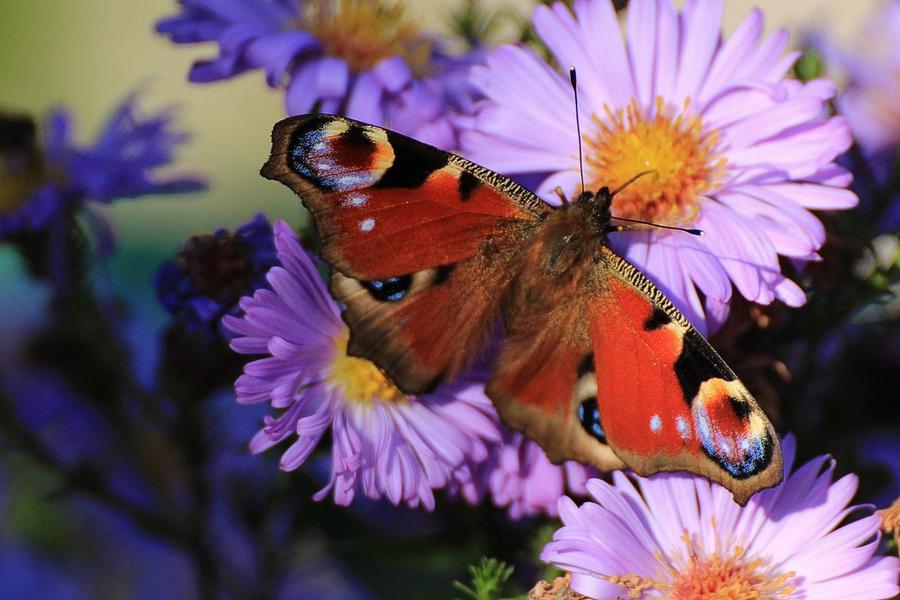}}
		&
		
		\subfloat[\label{fig:ImagePDF1-f}]{\includegraphics[width=0.46\textwidth]{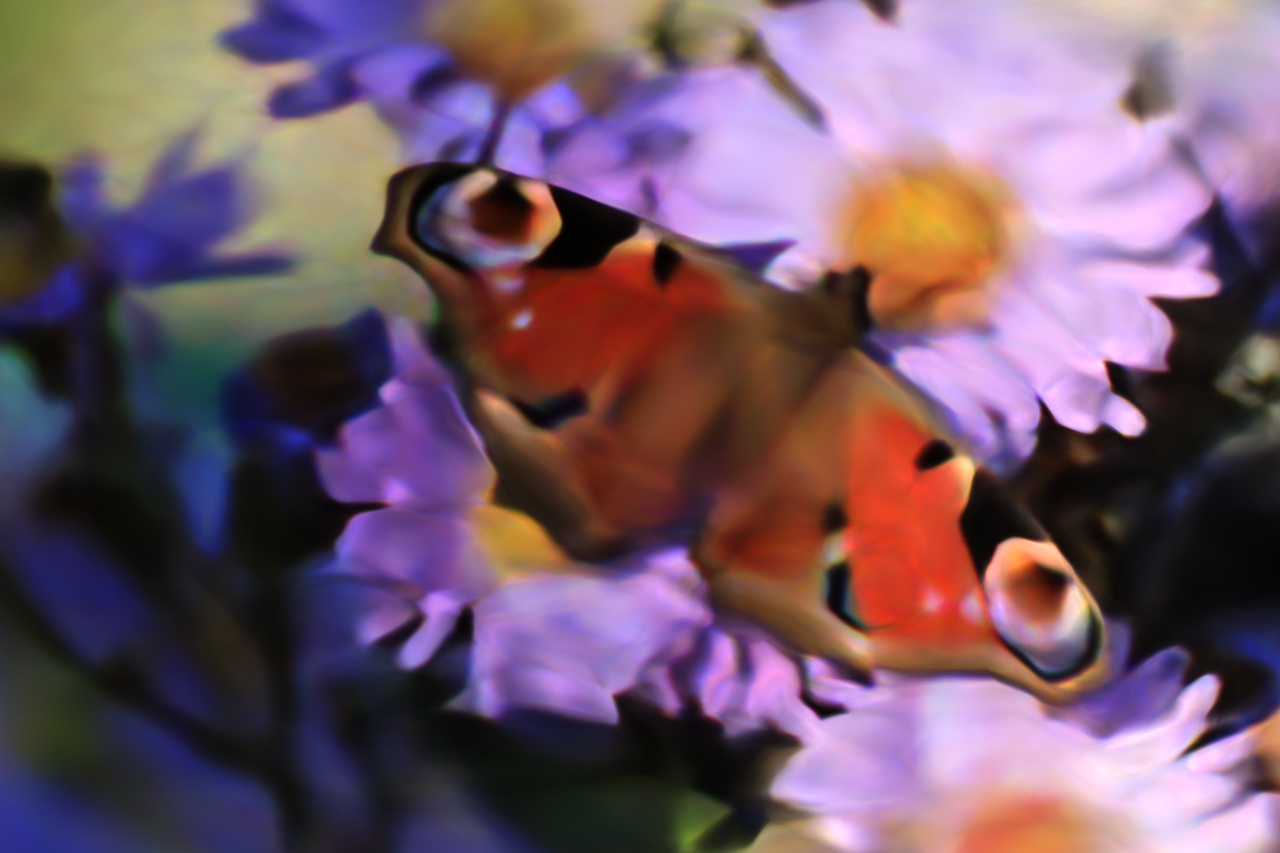}}
		
	\end{tabular}
	
	\protect
	\caption{(a),(c),(e) \emph{Image-based} densities and (b),(d),(f) their approximations - Part 1.
	}
	\label{fig:ImagePDF1}
\end{figure}

\begin{figure}
	\centering
	
	\newcommand{\width}[0] {0.41}
	
	\begin{tabular}{cc}

		\subfloat[\label{fig:ImagePDF2-a}]{\includegraphics[width=\width\textwidth]{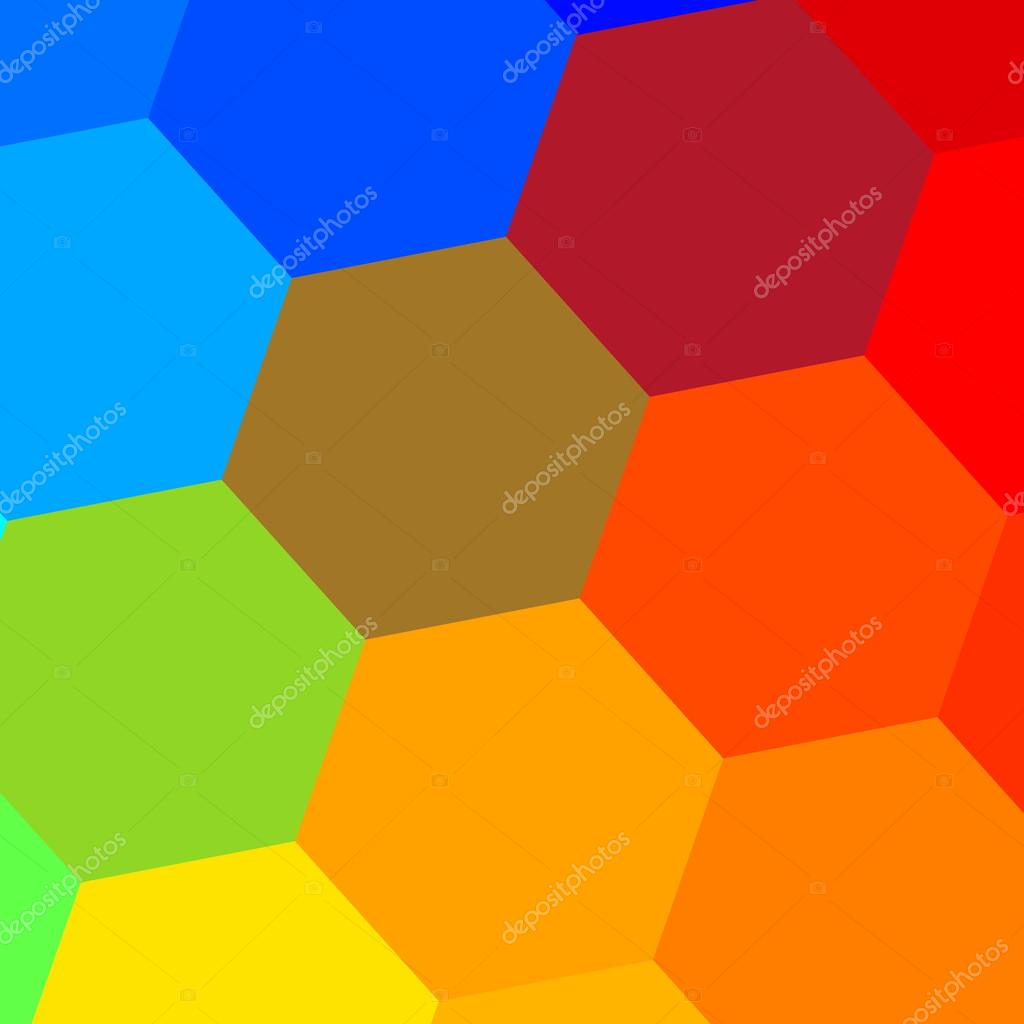}}
		&
		
		\subfloat[\label{fig:ImagePDF2-b}]{\includegraphics[width=\width\textwidth]{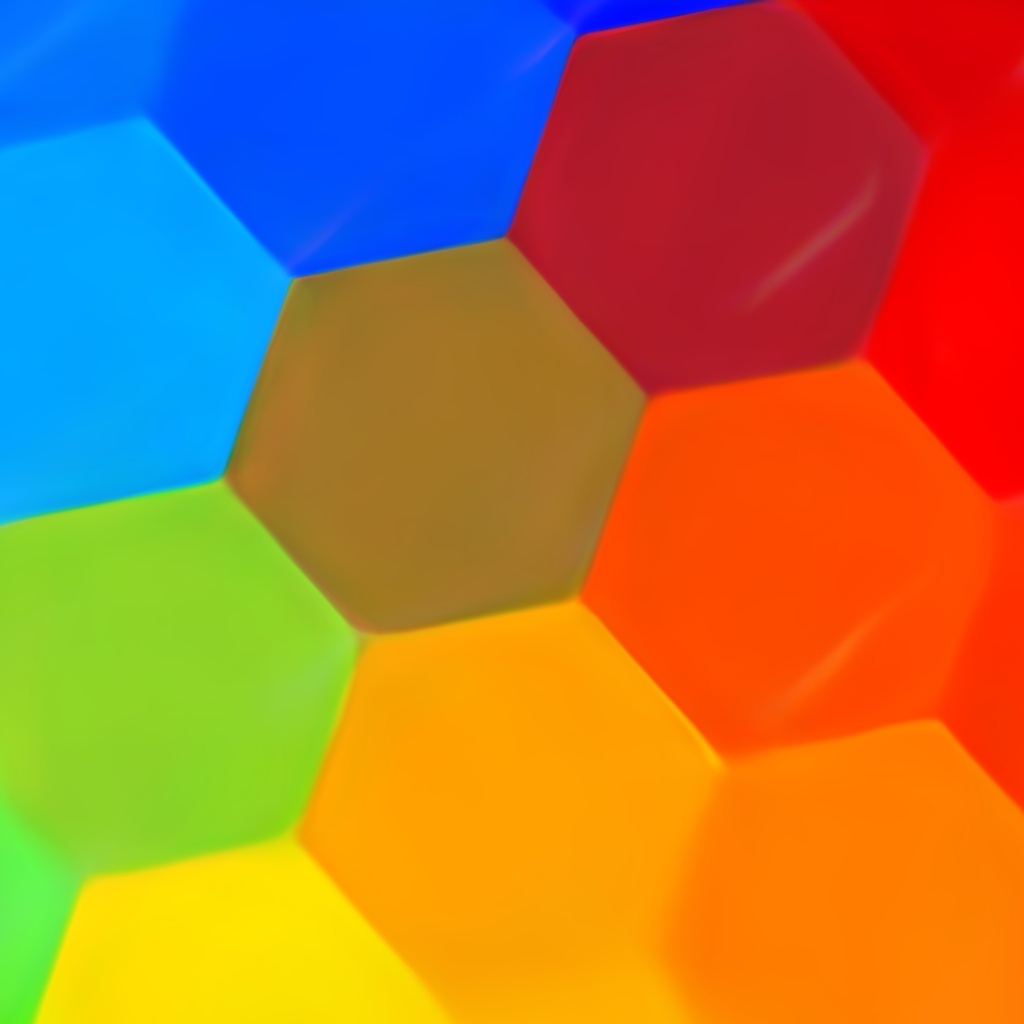}}
		\\

		\subfloat[\label{fig:ImagePDF2-c}]{\includegraphics[width=\width\textwidth]{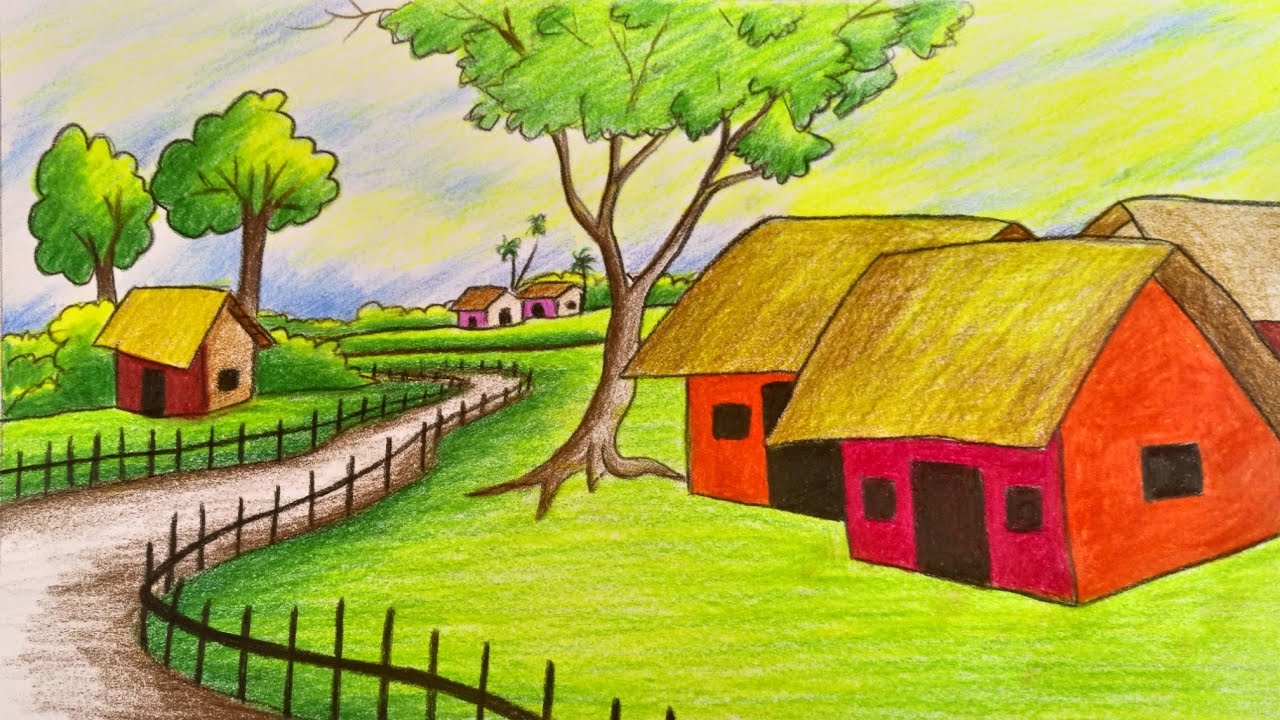}}
		&
		
		\subfloat[\label{fig:ImagePDF2-d}]{\includegraphics[width=\width\textwidth]{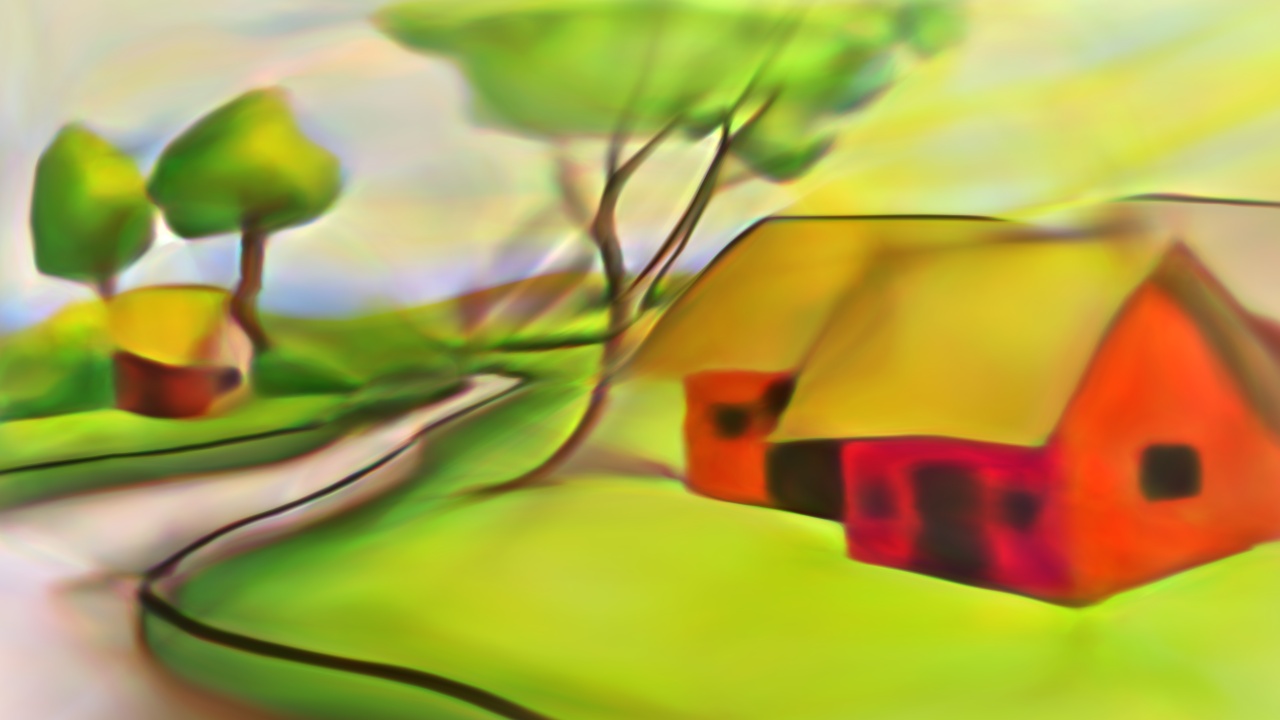}}
		\\

		\subfloat[\label{fig:ImagePDF2-e}]{\includegraphics[width=\width\textwidth]{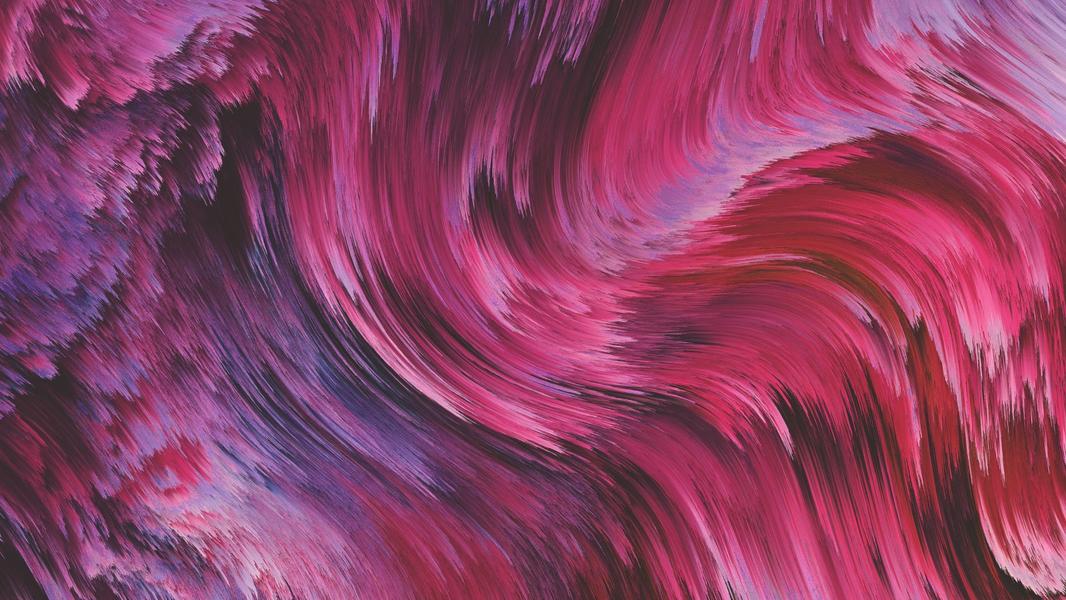}}
		&
		
		\subfloat[\label{fig:ImagePDF2-f}]{\includegraphics[width=\width\textwidth]{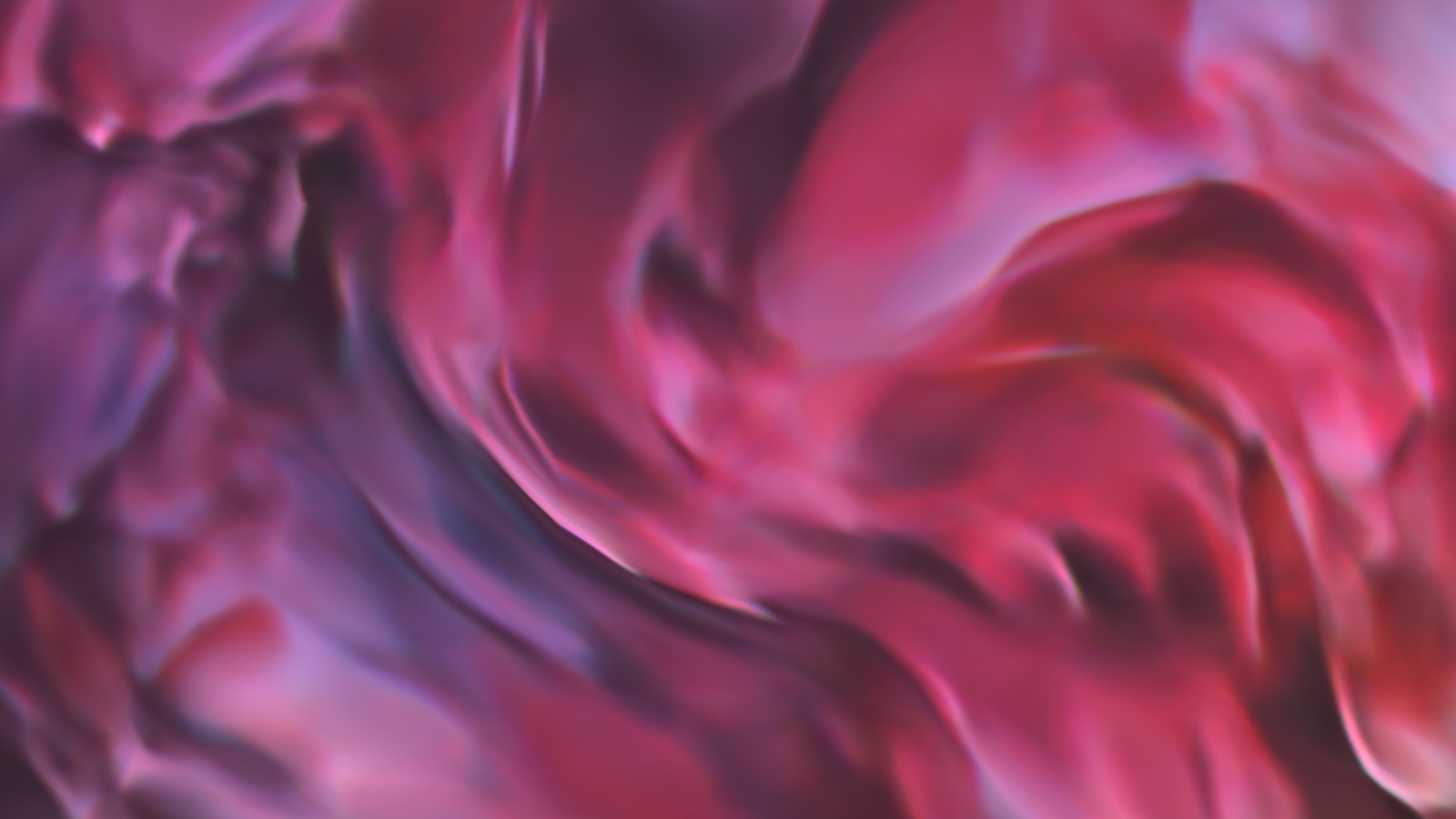}}

		%
		%
		%
		
	\end{tabular}
	
	\protect
	\caption{(a),(c),(e) \emph{Image-based} densities and (b),(d),(f) their approximations - Part 2.
	}
	\label{fig:ImagePDF2}
\end{figure}

In order to further evaluate the presented herein PSO-LDE density estimation approach, we use intricate 3D densities that are based on image surfaces. More specifically, we consider a given RGB image $I$ as a function $F(x,y,c)$ from $\RR^3$ to $\RR$ where $x$, $y$ and $c$ represent width, height and color channel of $I$ respectively. For  simplicity we define the range for each input scalar variable from $[x, y, c]$ to be $[0, 1]$, with $F: [0, 1]^3 \rightarrow [0, 1]$. We use grid of values from an image $I$ to appropriately interpolate outputs of the function $F(X)$ at any input point $X \in [0, 1]^3 \subseteq \RR^3$. In our experiments we used a linear interpolation.

Further, we use $F$ as pdf function which we sample to create a dataset for our PSO-LDE experiments. Yet, the function $F$, interpolated from image $I$, is not a valid pdf function since its integral can be any positive number. Thus, we normalize it by its total integral to get a normalized function $\bar{F}$ which we use as an intricate 3D pdf function for further density estimation evaluation. 
Next, we sample the density $\bar{F}$ via rejection sampling method and gather a dataset of size $10^7$ per each \emph{image-based} density. 
Furthermore, we approximate the target surface $\log \bar{F}(X)$ via PSO-LDE only from these sampled points.

Note that $\log \bar{F}$ has a very sophisticated structure since used images have a very high contrast and  nearby pixels typically have significantly different values. This makes $\log \bar{F}$ a highly non-linear function, which cannot be easily approximated by typical parametric density estimation techniques. Yet, as we will see below, due to the approximation power of DL, which is exploited in full by PSO, and due to a high flexibility of the proposed BD architecture, the non-parametric PSO-LDE allows us to accurately estimate even such complex distributions. Importantly, we emphasize that the evaluated distributions in this section have their support in $\RR^3$, and are \textbf{not} some very high-dimensional densities over image data that can be often encountered in DL domain.

We use PSO-LDE with $\alpha = \frac{1}{4}$ in order to learn the target $\log \bar{F}$. The applied NN architecture is block-diagonal with a number of blocks $N_B = 15$, a block size $S_B = 64$ and a number of layers $N_L = 14$ (see Section \ref{sec:BDLayers}). In order to tackle the problem of vanishing gradients in such a deep model, we introduced shortcut connections into our network, where each layer has a form $u_{i} = h_{i}(u_{i - 1}; \theta_{i}) + u_{i - 1}$ with $u_i$ and $h_{i}(\cdot; \theta_{i})$ being respectively the output and the applied transformation function of $i$-th BD layer within the BD network. Note that in this paper we use such shortcuts only for the experiments of this section.
Furthermore, after training is finished, we convert the inferred $\bar{F}$ back into an image format, producing an \emph{inferred image} $I'$.

In Figures \ref{fig:ImagePDF1} and \ref{fig:ImagePDF2}
we show \emph{inferred images} for several input images. 
As can be seen, there is high resemblance between both input $I$ and inferred $I'$.
That is, PSO-LDE succeeded to accurately infer densities even with a very complicated surface.
Note that each \emph{image-based} density was inferred by using identical hyper-parameters, that are the same as in the rest of our experiments (except for NN structure where shortcuts were applied). Additional parameter-tuning per a specific input image will most probably improve the produced herein results.

\section{Conclusions and Future Work}
\label{sec:Concl}

In this paper we contributed a new algorithm family, \emph{Probabilistic Surface Optimization} (PSO), that allows to learn numerous different statistical functions of given data, including (conditional) density estimation and ratios between two unknown pdfs of two given datasets. In our work we found a new perspective to view a model as a representation of a virtual physical surface, which is pushed by the PSO algorithm \up and \down via gradient descent (GD) optimization updates. Further, the equilibrium at each point, that is, when \up and \down forces are point-wise equal, ensures that the converged surface satisfies PSO \bp, where the ratio of the frequency components is equal to the opposite ratio of the analytical components. In Section \ref{sec:PSOInst} we saw that such formulation yields infinitely many estimation approaches to learn almost any function of the density ratio. Moreover,
it generalizes numerous existing works, like \emph{energy} and \emph{unnormalized} models as also critics of GAN approaches. Likewise, we showed that $f$-divergence and Bregman divergence based techniques (e.g. the cross-entropy loss from the image classification domain) are also instances of PSO, applying the same physical forces over the model surface.

We provided a thorough analysis of the \psofunc implicitly employed during the optimization, describing its equilibrium for a wide diapason of settings. Furthermore, we derived the sufficient conditions over PSO \mfs under which the equilibrium is stable. We likewise related PSO to Legendre-Fenchel transform, demonstrating that its convergence is an inverse of the \mgn ratio, with their \ps being convex-conjugate of each other. This resembles the relationship between Langrangian and Hamiltonian mechanics, opening interesting future directions to connect control and learning theories.

Furthermore, we systematically modulated the set of all PSO instances into various subgroups, providing a useful terminology for a future PSO study. Additionally, along this paper we described several possible parameterizations of PSO family, with each having its own benefits. Concretely, PSO can be represented/parametrized via a pair of \ms \pair which leads to the geometrical/physical force perspective. Such angle brings many insights and is the central focus of this work. Likewise, we can parametrize PSO by \widepair that leads to the \psofunc, which may be viewed as an energy of the optimized physical system. Further, the polar parametrization $\{ c_{r}, c_{\angle} \}$ in Section \ref{sec:ConVerComp} permits for an easier feasibility verification. Lastly, $\{ \phi^c, G \}$ described in Section \ref{sec:fDivv} allows to connect PSO methods with $f$-divergence between \up and \down densities.

Moreover, the \textbf{main} goal behind this work is to introduce a novel universal way in forging new statistical techniques and corresponding objective functions.
Due to simplicity and intuitiveness of the presented PSO principles, this new framework allows for an easy derivation of new statistical approaches, which, in turn, is highly useful in many different domains. Depending on the target function required by a specific application, a data analyst can select suitable \mfs according to the PSO \bp, and simply employ them inside the general PSO loss. Along this paper we demonstrated a step-by-step derivation of several such new approaches.

Likewise, herein we investigated the reason for high resemblance between the statistical model inference and physics over virtual surfaces. We showed that during the optimization, a change of model output at any point (the height change of the virtual surface at the point) is equal to the model kernel (a.k.a. NTK, \cite{Jacot18nips}) between this point and the optimized training point. Following from this, the optimization can be viewed as pushes at training points performed via some employed sticks, whose shape is described by the kernel. We analyzed this kernel's properties (e.g. the shape of the pushing sticks) and their impact over the convergence of PSO algorithm. Specifically, we showed that its bandwidth corresponds to the flexibility/expressiveness of the model - with a narrower kernel it is possible to push the surface towards various target forms, making it more elastic.

Further, the bandwidth of the model kernel can be viewed as a hyper-parameter that controls the estimation bias-variance tradeoff.
We empirically investigated both \underfit and \overfit scenarios that can occur in PSO.
In our experiments we showed that the wide bandwidth is correlated with a sub-optimal optimization performance in case of a large training dataset. Moreover, if it is too narrow and in case of a small training dataset, the surface converges to peaks around the training points and also produce a poor target approximation. Thus, the optimal kernel bandwidth depends on the number of available training points, which agrees with existing analysis of KDE methods.

Furthermore, we applied  PSO to learn data log-density, proposing several new PSO instances for this purpose, including PSO \emph{log density estimators} (PSO-LDE).
Additionally, we presented a new NN block-diagonal architecture that allowed us to significantly reduce the bandwidth of the model kernel and to extremely increase an approximation accuracy. In our experiments we showed how the above methods can be used to perform precise pdf inference of multi-modal 20D data, getting a superior accuracy over other state-of-the-art baselines. Importantly, in an infinite dataset setting we also empirically revealed a connection between the point-wise error and gradient norm at the point, which in theory can be used for measuring a model uncertainty.

Along this paper we remarked many possible research directions to further enhance PSO estimation techniques. The current solution is still very new and many of its aspects require additional attention and further study. First of all, it is important to understand the precise relation between an accuracy of PSO estimators and various properties of the model kernel.
This can lead to a better explanation of the PSO convergence properties, and may guide us to better NN architectures and new methods to control the bias-variance tradeoff, as also new techniques for quantifying model uncertainty in a small dataset setting.

Moreover, it is still unclear how to pick the most optimal PSO instance for any considered target function. Particularly in the context of density estimation, currently we learn multiple models for different values of PSO-LDE hyper-parameter $\alpha$ and choose the one with the highest performance metric. Yet, such brute-force procedure is computationally very expensive. It is important to understand the exact connection between $\alpha$ and the produced log-pdf estimation, and also to provide a more intelligent way to choose $\alpha$ based on properties of the given data. Furthermore, since PSO framework allows us to generate an infinite number of various PSO instances to approximate a specific target function,
a more thorough exploration of all PSO instances for log-pdf inference is required.
Herein, the goal is to categorize different instances by their statistic robustness properties and to find more optimal density estimation techniques.

We plan to address all of the above in our future work.

\section{Acknowledgments}

This work was supported in part by the Israel Ministry of Science \& Technology (MOST) and Intel Corporation. We gratefully acknowledge the support of NVIDIA Corporation with the donation of the Titan Xp GPU, which, among other GPUs, was used for this research.

\renewcommand{\theHsection}{A\arabic{section}}

\appendix

\section{Proof of Lemmas \ref{lmm:PSO_second_der} and \ref{lmm:PSO_infor}}
\label{sec:PSO_derivatives_proof}

\subsection{Lemma \ref{lmm:PSO_second_der}}

Consider the setting of Section \ref{sec:Consist}. Further, notate the model Hessian as $\HH_{\theta}(X) \equiv \nabla_{\theta\theta}
f_{\theta}(X)
$. Using the gradient of $L_{PSO}(f_{\theta})$ defined in Eq.~(\ref{eq:GeneralPSOLossFrml_Limit}), the second derivative of $L_{PSO}(f_{\theta})$ w.r.t. $\theta$ is:
\begin{multline}
\nabla_{\theta\theta}
L_{PSO}(f_{\theta})
=
-
\E_{X \sim \probs{\usuff}}
\left[
M^{\usuff}{'}
\left[
X,
f_{\theta}(X)
\right]
\cdot
\II_{\theta}(X, X)
+
M^{\usuff}
\left[
X,
f_{\theta}(X)
\right]
\cdot
\HH_{\theta}(X)
\right]
+\\
+
\E_{X \sim \probs{\dsuff}}
\left[
M^{\dsuff}{'}
\left[
X,
f_{\theta}(X)
\right]
\cdot
\II_{\theta}(X, X)
+
M^{\dsuff}
\left[
X,
f_{\theta}(X)
\right]
\cdot
\HH_{\theta}(X)
\right]
=\\
=
-
\E_{X \sim \probs{\usuff}}
M^{\usuff}{'}
\left[
X,
f_{\theta}(X)
\right]
\cdot
\II_{\theta}(X, X)
-
\E_{X \sim \probs{\usuff}}
M^{\usuff}
\left[
X,
f_{\theta}(X)
\right]
\cdot
\HH_{\theta}(X)
+\\
+
\E_{X \sim \probs{\dsuff}}
M^{\dsuff}{'}
\left[
X,
f_{\theta}(X)
\right]
\cdot
\II_{\theta}(X, X)
+
\E_{X \sim \probs{\dsuff}}
M^{\dsuff}
\left[
X,
f_{\theta}(X)
\right]
\cdot
\HH_{\theta}(X)
.
\label{eq:GeneralPSOLossFrml_Hessss}
\end{multline}
Likewise, at $\theta^*$ we also have:
\begin{multline}
-
\E_{X \sim \probs{\usuff}}
M^{\usuff}
\left[
X,
f_{\theta^*}(X)
\right]
\cdot
\HH_{\theta^*}(X)
+
\E_{X \sim \probs{\dsuff}}
M^{\dsuff}
\left[
X,
f_{\theta^*}(X)
\right]
\cdot
\HH_{\theta^*}(X)
=\\
=
-
\E_{X \sim \probs{\usuff}}
M^{\usuff}
\left[
X,
f^*(X)
\right]
\cdot
\HH_{\theta^*}(X)
+
\E_{X \sim \probs{\dsuff}}
M^{\dsuff}
\left[
X,
f^*(X)
\right]
\cdot
\HH_{\theta^*}(X)
=\\
=
\int 
\left[
-
\probi{\usuff}{X}
\cdot
M^{\usuff}
\left[
X,
f^*(X)
\right]
+
\probi{\dsuff}{X}
\cdot
M^{\dsuff}
\left[
X,
f^*(X)
\right]
\right]
\cdot
\HH_{\theta^*}(X)
dX
=
0
,
\label{eq:GeneralPSOLossFrml_Hessss2}
\end{multline}
where the last row is true because $f^*$ satisfies PSO \bp.

Therefore, we have
\begin{multline}
\hess \equiv 
\nabla_{\theta\theta}
L_{PSO}(f_{\theta^*})
=\\
=
-
\E_{X \sim \probs{\usuff}}
M^{\usuff}{'}
\left[
X,
f_{\theta^*}(X)
\right]
\cdot
\II_{\theta^{*}}(X, X)
+
\E_{X \sim \probs{\dsuff}}
M^{\dsuff}{'}
\left[
X,
f_{\theta^*}(X)
\right]
\cdot
\II_{\theta^{*}}(X, X)
=\\
=
-
\E_{X \sim \probs{\usuff}}
M^{\usuff}{'}
\left[
X,
f^{*}(X)
\right]
\cdot
\II_{\theta^{*}}(X, X)
+
\E_{X \sim \probs{\dsuff}}
M^{\dsuff}{'}
\left[
X,
f^{*}(X)
\right]
\cdot
\II_{\theta^{*}}(X, X)
.
\label{eq:GeneralPSOLossHessian_proof}
\end{multline}
Observe also that only $\II_{\theta^{*}}$ terms depend on the parameter vector $\theta^{*}$.

Additionally, $\hess$ has an another form:
\begin{equation}
\hess
=
\E_{X \sim \probs{\dsuff}}
\frac{M^{\usuff}
	\left[
	X,
	f^{*}(X)
	\right]
	}{T'\left[X, \frac{\probi{\usuff}{X}}{\probi{\dsuff}{X}}\right]}
\cdot
\II_{\theta^{*}}(X, X)
,
\label{eq:GeneralPSOLossHessian_proof2}
\end{equation}
where $T'(X, z) \triangleq \frac{\partial T(X, z)}{\partial z}$ is a first derivative of the considered PSO convergence $T(X, z)$. This can be derived as follows. 

First, since $T$ and $R \equiv \frac{M^{\dsuff}}{M^{\usuff}}$ are inverse functions, derivative of $R$ can be computed via derivative of $T$ as:
\begin{equation}
\frac{\partial R(X, s)}{\partial s}
=
\frac{1}{T'(X, R(X, s))}
.
\label{eq:Deriv_of_ratio}
\end{equation}
Observe that due to PSO \bp $R(X, f^{*}(X)) = \frac{\probi{\usuff}{X}}{\probi{\dsuff}{X}}$ we have
\begin{equation}
\frac{\partial R(X, s)}{\partial s}
\Bigg|_{s = f^{*}(X)}
=
\frac{1}{T'(X, R(X, f^{*}(X)))}
=
\frac{1}{T'(X, \frac{\probi{\usuff}{X}}{\probi{\dsuff}{X}})}
.
\label{eq:Deriv_of_ratio_optim}
\end{equation}

Next, the expression inside integral of Eq.~(\ref{eq:GeneralPSOLossHessian_proof}) is:
\begin{multline}
-
\probi{\usuff}{X}
\cdot
M^{\usuff}{'}
\left[
X,
f^{*}(X)
\right]
+
\probi{\dsuff}{X}
\cdot
M^{\dsuff}{'}
\left[
X,
f^{*}(X)
\right]
=\\
=
\probi{\dsuff}{X}
\cdot
M^{\usuff}{'}
\left[
X,
f^{*}(X)
\right]
\cdot
\left[
- \frac{\probi{\usuff}{X}}{\probi{\dsuff}{X}}
+
\frac{M^{\dsuff}{'}
	\left[
	X,
	f^{*}(X)
	\right]
	}{M^{\usuff}{'}
	\left[
	X,
	f^{*}(X)
	\right]
	}
\right]
=\\
=
\probi{\dsuff}{X}
\cdot
M^{\usuff}{'}
\left[
X,
f^{*}(X)
\right]
\cdot
\left[
- 
\frac{M^{\dsuff}
	\left[
	X,
	f^{*}(X)
	\right]
}{M^{\usuff}
\left[
X,
f^{*}(X)
\right]
}
+
\frac{M^{\dsuff}{'}
	\left[
	X,
	f^{*}(X)
	\right]
}{M^{\usuff}{'}
\left[
X,
f^{*}(X)
\right]
}
\right]
=\\
=
\probi{\dsuff}{X}
\cdot
M^{\usuff}
\left[
X,
f^{*}(X)
\right]
\cdot
\frac{
M^{\dsuff}{'}
	\left[
	X,
	f^{*}(X)
	\right]
\cdot
M^{\usuff}
\left[
X,
f^{*}(X)
\right]
-
}
{M^{\usuff}
	\left[
	X,
	f^{*}(X)
	\right]^2
	}
\\
\frac{-M^{\dsuff}
	\left[
	X,
	f^{*}(X)
	\right]
	\cdot
	M^{\usuff}{'}
	\left[
	X,
	f^{*}(X)
	\right]}{}
=\\
=
\probi{\dsuff}{X}
\cdot
M^{\usuff}
\left[
X,
f^{*}(X)
\right]
\cdot
\frac{\partial \frac{M^{\dsuff}}{M^{\usuff}}(X, s)}{\partial s}
|_{s = f^{*}(X)}
=\\
=
\probi{\dsuff}{X}
\cdot
M^{\usuff}
\left[
X,
f^{*}(X)
\right]
\cdot
\frac{\partial R(X, s)}{\partial s}
|_{s = f^{*}(X)}
=
\frac{\probi{\dsuff}{X}
	\cdot
	M^{\usuff}
	\left[
	X,
	f^{*}(X)
	\right]
	}{T'(X, \frac{\probi{\usuff}{X}}{\probi{\dsuff}{X}})}
.
\label{eq:GeneralPSOLossHessian_proof3}
\end{multline}
Hence:
\begin{equation}
\hess
=
\int
\frac{\probi{\dsuff}{X}
	\cdot
	M^{\usuff}
	\left[
	X,
	f^{*}(X)
	\right]
}{T'(X, \frac{\probi{\usuff}{X}}{\probi{\dsuff}{X}})}
\cdot
\II_{\theta^{*}}(X, X)
dX
,
\label{eq:GeneralPSOLossHessian_proof4}
\end{equation}
from which Eq.~(\ref{eq:GeneralPSOLossHessian_proof2}) follows.

\hfill $\blacksquare$

\subsection{Lemma \ref{lmm:PSO_infor}}

Consider the empirical PSO gradient $\nabla_{\theta}
\hat{L}_{PSO}^{N^{\usuff},N^{\dsuff}}(f_{\theta})$ as defined in Eq.~(\ref{eq:GeneralPSOLossFrml}).
Its uncentered variance is then:
\begin{multline}
\E
\left[
\nabla_{\theta}
\hat{L}_{PSO}^{N^{\usuff},N^{\dsuff}}(f_{\theta})
\cdot
\nabla_{\theta}
\hat{L}_{PSO}^{N^{\usuff},N^{\dsuff}}(f_{\theta})^T
\right]
=\\
=
\frac{1}{(N^{\usuff})^2}
\sum_{i,j = 1}^{N^{\usuff}}
\E
\left[
M^{\usuff}
\left[
X^{\usuff}_{i},
f_{\theta}(X^{\usuff}_{i})
\right]
\cdot
M^{\usuff}
\left[
X^{\usuff}_{j},
f_{\theta}(X^{\usuff}_{j})
\right]
\cdot
\II_{\theta}(X^{\usuff}_{i}, X^{\usuff}_{j})
\right]
+\\
+
\frac{1}{(N^{\dsuff})^2}
\sum_{i,j = 1}^{N^{\dsuff}}
\E
\left[
M^{\dsuff}
\left[
X^{\dsuff}_{i},
f_{\theta}(X^{\dsuff}_{i})
\right]
\cdot
M^{\dsuff}
\left[
X^{\dsuff}_j,
f_{\theta}(X^{\dsuff}_j)
\right]
\cdot
\II_{\theta}(X^{\dsuff}_{i}, X^{\dsuff}_j)
\right]
-\\
-
\frac{1}{N^{\usuff} N^{\dsuff}}
\sum_{i = 1}^{N^{\usuff}}
\sum_{j = 1}^{N^{\dsuff}}
\E
\left[
M^{\usuff}
\left[
X^{\usuff}_{i},
f_{\theta}(X^{\usuff}_{i})
\right]
\cdot
M^{\dsuff}
\left[
X^{\dsuff}_j,
f_{\theta}(X^{\dsuff}_j)
\right]
\cdot
\II_{\theta}(X^{\usuff}_{i}, X^{\dsuff}_j)
\right]
-\\
-
\frac{1}{N^{\usuff} N^{\dsuff}}
\sum_{i = 1}^{N^{\usuff}}
\sum_{j = 1}^{N^{\dsuff}}
\E
\left[
M^{\usuff}
\left[
X^{\usuff}_{i},
f_{\theta}(X^{\usuff}_{i})
\right]
\cdot
M^{\dsuff}
\left[
X^{\dsuff}_j,
f_{\theta}(X^{\dsuff}_j)
\right]
\cdot
\II_{\theta}(X^{\dsuff}_j, X^{\usuff}_{i})
\right]
=\\
=
\frac{1}{(N^{\usuff})^2}
\sum_{i = 1}^{N^{\usuff}}
\E
\left[
M^{\usuff}
\left[
X^{\usuff}_{i},
f_{\theta}(X^{\usuff}_{i})
\right]
\cdot
M^{\usuff}
\left[
X^{\usuff}_{i},
f_{\theta}(X^{\usuff}_{i})
\right]
\cdot
\II_{\theta}(X^{\usuff}_{i}, X^{\usuff}_{i})
\right]
+\\
+
\frac{1}{(N^{\dsuff})^2}
\sum_{i = 1}^{N^{\dsuff}}
\E
\left[
M^{\dsuff}
\left[
X^{\dsuff}_{i},
f_{\theta}(X^{\dsuff}_{i})
\right]
\cdot
M^{\dsuff}
\left[
X^{\dsuff}_{i},
f_{\theta}(X^{\dsuff}_{i})
\right]
\cdot
\II_{\theta}(X^{\dsuff}_{i}, X^{\dsuff}_{i})
\right]
+\\
+
\frac{1}{(N^{\usuff})^2}
\sum_{\substack{i, j = 1 \\ i \neq j}}^{N^{\usuff}}
\E
\left[
M^{\usuff}
\left[
X^{\usuff}_{i},
f_{\theta}(X^{\usuff}_{i})
\right]
\cdot
M^{\usuff}
\left[
X^{\usuff}_{j},
f_{\theta}(X^{\usuff}_{j})
\right]
\cdot
\II_{\theta}(X^{\usuff}_{i}, X^{\usuff}_{j})
\right]
+\\
+
\frac{1}{(N^{\dsuff})^2}
\sum_{\substack{i, j = 1 \\ i \neq j}}^{N^{\dsuff}}
\E
\left[
M^{\dsuff}
\left[
X^{\dsuff}_{i},
f_{\theta}(X^{\dsuff}_{i})
\right]
\cdot
M^{\dsuff}
\left[
X^{\dsuff}_j,
f_{\theta}(X^{\dsuff}_j)
\right]
\cdot
\II_{\theta}(X^{\dsuff}_{i}, X^{\dsuff}_j)
\right]
-\\
-
\frac{1}{N^{\usuff} N^{\dsuff}}
\sum_{i = 1}^{N^{\usuff}}
\sum_{j = 1}^{N^{\dsuff}}
\E
\left[
M^{\usuff}
\left[
X^{\usuff}_{i},
f_{\theta}(X^{\usuff}_{i})
\right]
\cdot
M^{\dsuff}
\left[
X^{\dsuff}_j,
f_{\theta}(X^{\dsuff}_j)
\right]
\cdot
\II_{\theta}(X^{\usuff}_{i}, X^{\dsuff}_j)
\right]
-\\
-
\frac{1}{N^{\usuff} N^{\dsuff}}
\sum_{i = 1}^{N^{\usuff}}
\sum_{j = 1}^{N^{\dsuff}}
\E
\left[
M^{\usuff}
\left[
X^{\usuff}_{i},
f_{\theta}(X^{\usuff}_{i})
\right]
\cdot
M^{\dsuff}
\left[
X^{\dsuff}_j,
f_{\theta}(X^{\dsuff}_j)
\right]
\cdot
\II_{\theta}(X^{\dsuff}_j, X^{\usuff}_{i})
\right]
.
\label{eq:Grad_empir_proof2}
\end{multline}

Denote:
\begin{equation}
\bar{\mu}_{\theta}^{\usuff} \triangleq \E_{X \sim \probs{\usuff}}
M^{\usuff}
\left[
X,
f_{\theta}(X)
\right]
\cdot
\nabla_{\theta} f_{\theta}(X)
,
\quad
\bar{\mu}_{\theta}^{\dsuff} \triangleq \E_{X \sim \probs{\dsuff}}
M^{\dsuff}
\left[
X,
f_{\theta}(X)
\right]
\cdot
\nabla_{\theta} f_{\theta}(X)
.
\label{eq:nottt_proof}
\end{equation}
According to Eq.~(\ref{eq:Grad_empir_proof2}), we have:
\begin{multline}
\E
\left[
\nabla_{\theta}
\hat{L}_{PSO}^{N^{\usuff},N^{\dsuff}}(f_{\theta})
\cdot
\nabla_{\theta}
\hat{L}_{PSO}^{N^{\usuff},N^{\dsuff}}(f_{\theta})^T
\right]
=\\
=
\frac{1}{N^{\usuff}}
\E_{X \sim \probs{\usuff}}
\left[
M^{\usuff}
\left[
X,
f_{\theta}(X)
\right]^2
\cdot
\II_{\theta}(X, X)
\right]
+
\frac{1}{N^{\dsuff}}
\E_{X \sim \probs{\dsuff}}
\left[
M^{\dsuff}
\left[
X,
f_{\theta}(X)
\right]^2
\cdot
\II_{\theta}(X, X)
\right]
+\\
+
\frac{N^{\usuff} - 1}{N^{\usuff}}
\bar{\mu}_{\theta}^{\usuff} \cdot (\bar{\mu}_{\theta}^{\usuff})^T
+
\frac{N^{\dsuff} - 1}{N^{\dsuff}}
\bar{\mu}_{\theta}^{\dsuff} \cdot (\bar{\mu}_{\theta}^{\dsuff})^T
-
\bar{\mu}_{\theta}^{\usuff} \cdot (\bar{\mu}_{\theta}^{\dsuff})^T
-
\bar{\mu}_{\theta}^{\dsuff} \cdot (\bar{\mu}_{\theta}^{\usuff})^T
=\\
=
\frac{1}{N^{\usuff}}
\left[
\E_{X \sim \probs{\usuff}}
\left[
M^{\usuff}
\left[
X,
f_{\theta}(X)
\right]^2
\cdot
\II_{\theta}(X, X)
\right]
-
\bar{\mu}_{\theta}^{\usuff} \cdot (\bar{\mu}_{\theta}^{\usuff})^T
\right]
+\\
+
\frac{1}{N^{\dsuff}}
\left[
\E_{X \sim \probs{\dsuff}}
\left[
M^{\dsuff}
\left[
X,
f_{\theta}(X)
\right]^2
\cdot
\II_{\theta}(X, X)
\right]
-
\bar{\mu}_{\theta}^{\dsuff} \cdot (\bar{\mu}_{\theta}^{\dsuff})^T
\right]
+\\
+
\bar{\mu}_{\theta}^{\usuff} \cdot (\bar{\mu}_{\theta}^{\usuff})^T
+
\bar{\mu}_{\theta}^{\dsuff} \cdot (\bar{\mu}_{\theta}^{\dsuff})^T
-
\bar{\mu}_{\theta}^{\usuff} \cdot (\bar{\mu}_{\theta}^{\dsuff})^T
-
\bar{\mu}_{\theta}^{\dsuff} \cdot (\bar{\mu}_{\theta}^{\usuff})^T
.
\label{eq:Grad_empir_proof3}
\end{multline}

Further, the outer product of $\nabla_{\theta}
\hat{L}_{PSO}^{N^{\usuff},N^{\dsuff}}(f_{\theta})$'s expected value $\E
\left[
\nabla_{\theta}
\hat{L}_{PSO}^{N^{\usuff},N^{\dsuff}}(f_{\theta})
\right]
=
- \bar{\mu}_{\theta}^{\usuff} + \bar{\mu}_{\theta}^{\dsuff}
$ is:
\begin{equation}
\E
\left[
\nabla_{\theta}
\hat{L}_{PSO}^{N^{\usuff},N^{\dsuff}}(f_{\theta})
\right]
\cdot
\E
\left[
\nabla_{\theta}
\hat{L}_{PSO}^{N^{\usuff},N^{\dsuff}}(f_{\theta})
\right]^T
=
\bar{\mu}_{\theta}^{\usuff} \cdot (\bar{\mu}_{\theta}^{\usuff})^T
+
\bar{\mu}_{\theta}^{\dsuff} \cdot (\bar{\mu}_{\theta}^{\dsuff})^T
-
\bar{\mu}_{\theta}^{\usuff} \cdot (\bar{\mu}_{\theta}^{\dsuff})^T
-
\bar{\mu}_{\theta}^{\dsuff} \cdot (\bar{\mu}_{\theta}^{\usuff})^T
,
\label{eq:Grad_empir_exp_value}
\end{equation}
and hence:
\begin{multline}
\variance
\left[
\nabla_{\theta}
\hat{L}_{PSO}^{N^{\usuff},N^{\dsuff}}(f_{\theta})
\right]
=
\frac{1}{N^{\usuff}}
\left[
\E_{X \sim \probs{\usuff}}
\left[
M^{\usuff}
\left[
X,
f_{\theta}(X)
\right]^2
\cdot
\II_{\theta}(X, X)
\right]
-
\bar{\mu}_{\theta}^{\usuff} \cdot (\bar{\mu}_{\theta}^{\usuff})^T
\right]
+\\
+
\frac{1}{N^{\dsuff}}
\left[
\E_{X \sim \probs{\dsuff}}
\left[
M^{\dsuff}
\left[
X,
f_{\theta}(X)
\right]^2
\cdot
\II_{\theta}(X, X)
\right]
-
\bar{\mu}_{\theta}^{\dsuff} \cdot (\bar{\mu}_{\theta}^{\dsuff})^T
\right]
.
\label{eq:Grad_empir_var}
\end{multline}
Next, using relations $N^{\usuff} = \frac{\tau}{\tau + 1} N$ and $N^{\dsuff} = \frac{1}{\tau + 1} N$, we can write the above variance as:
\begin{multline}
\variance
\left[
\nabla_{\theta}
\hat{L}_{PSO}^{N^{\usuff},N^{\dsuff}}(f_{\theta})
\right]
=
\frac{1}{N}
\bigg[
\frac{\tau + 1}{\tau}
\E_{X \sim \probs{\usuff}}
\left[
M^{\usuff}
\left[
X,
f_{\theta}(X)
\right]^2
\cdot
\II_{\theta}(X, X)
\right]
+\\
+
\left[
\tau + 1
\right]
\E_{X \sim \probs{\dsuff}}
\left[
M^{\dsuff}
\left[
X,
f_{\theta}(X)
\right]^2
\cdot
\II_{\theta}(X, X)
\right]
-
\frac{\tau + 1}{\tau}
\bar{\mu}_{\theta}^{\usuff} \cdot (\bar{\mu}_{\theta}^{\usuff})^T
-
\left[
\tau + 1
\right]
\bar{\mu}_{\theta}^{\dsuff} \cdot (\bar{\mu}_{\theta}^{\dsuff})^T
\bigg]
.
\label{eq:Grad_empir_var2}
\end{multline}

Further, due to the identical support assumption $\spp^{\usuff} \equiv \spp^{\dsuff}$ we also have $\bar{\mu}_{\theta^*}^{\usuff} = \bar{\mu}_{\theta^*}^{\dsuff}$:
\begin{multline}
\E
\left[
\nabla_{\theta}
\hat{L}_{PSO}^{N^{\usuff},N^{\dsuff}}(f_{\theta^*})
\right]
=
- \bar{\mu}_{\theta^*}^{\usuff} + \bar{\mu}_{\theta^*}^{\dsuff}
=\\
=
-
\E_{X \sim \probs{\usuff}}
M^{\usuff}
\left[
X,
f^*(X)
\right]
\cdot
\nabla_{\theta} f_{\theta^*}(X)
+
\E_{X \sim \probs{\dsuff}}
M^{\dsuff}
\left[
X,
f^*(X)
\right]
\cdot
\nabla_{\theta} f_{\theta^*}(X)
=\\
=
\int 
\left[
-
\probi{\usuff}{X}
\cdot
M^{\usuff}
\left[
X,
f^*(X)
\right]
+
\probi{\dsuff}{X}
\cdot
M^{\dsuff}
\left[
X,
f^*(X)
\right]
\right]
\cdot
\nabla_{\theta} f_{\theta^*}(X)
dX
=
0
,
\label{eq:Grad_empir_proof4}
\end{multline}
where the last row is true because $f^*$ satisfies PSO \bp. Hence, the following is also true:
\begin{equation}
\bar{\mu}_{\theta^*}^{\usuff} \cdot (\bar{\mu}_{\theta^*}^{\usuff})^T = \bar{\mu}_{\theta^*}^{\dsuff} \cdot (\bar{\mu}_{\theta^*}^{\dsuff})^T = \bar{\mu}_{\theta^*}^{\usuff} \cdot (\bar{\mu}_{\theta^*}^{\dsuff})^T =
\bar{\mu}_{\theta^*}^{\dsuff} \cdot (\bar{\mu}_{\theta^*}^{\usuff})^T
.
\label{eq:Grad_empir_proof5555}
\end{equation}

Therefore, $\variance
\left[
\nabla_{\theta}
\hat{L}_{PSO}^{N^{\usuff},N^{\dsuff}}(f_{\theta})
\right]$ (Eq.~(\ref{eq:Grad_empir_var2})) at $\theta^*$ is $\variance
\left[
\nabla_{\theta}
\hat{L}_{PSO}^{N^{\usuff},N^{\dsuff}}(f_{\theta^*})
\right]
= \frac{1}{N} \JJ$ with:
\begin{multline}
\JJ
=
\frac{\tau + 1}{\tau}
\E_{X \sim \probs{\usuff}}
\left[
M^{\usuff}
\left[
X,
f^*(X)
\right]^2
\cdot
\II_{\theta^*}(X, X)
\right]
+\\
+
(\tau + 1)
\E_{X \sim \probs{\dsuff}}
\left[
M^{\dsuff}
\left[
X,
f^*(X)
\right]^2
\cdot
\II_{\theta^*}(X, X)
\right]
-\\
-
\frac{(\tau + 1)^2}{\tau}
\E_{\substack{X \sim \probs{\usuff}\\ X' \sim \probs{\dsuff}}}
M^{\usuff}
\left[
X,
f^{*}(X)
\right]
\cdot
M^{\dsuff}
\left[
X',
f^{*}(X')
\right]
\cdot
\II_{\theta^{*}}(X, X')
,
\label{eq:Grad_empir_proof7}
\end{multline}
where we applied identity from Eq.~(\ref{eq:Grad_empir_proof5555}).
Observe that only $\II_{\theta^{*}}$ terms depend on the parameter vector $\theta^{*}$. Likewise, the term next to $\frac{(\tau + 1)^2}{\tau}$ is actually $\bar{\mu}_{\theta^*}^{\usuff} \cdot (\bar{\mu}_{\theta^*}^{\dsuff})^T$, and it can be substituted by either of $\{\bar{\mu}_{\theta^*}^{\usuff} \cdot (\bar{\mu}_{\theta^*}^{\usuff})^T; \bar{\mu}_{\theta^*}^{\dsuff} \cdot (\bar{\mu}_{\theta^*}^{\dsuff})^T; \bar{\mu}_{\theta^*}^{\dsuff} \cdot (\bar{\mu}_{\theta^*}^{\usuff})^T  \}$.

\hfill $\blacksquare$

\section{Proof of Theorem \ref{thrm:PSO_asymp_norm}}
\label{sec:AsympNormalProof}

First, we prove the stepping stone lemmas.

\subsection{Lemmata}
\label{sec:AsympNormalGradProof}

\begin{lemma}
	\label{lmm:PSO_grad_conv} 
Denote $N \triangleq N^{\usuff} + N^{\dsuff}$ and $\tau \triangleq \frac{N^{\usuff}}{N^{\dsuff}}$, and assume $\tau$ to be a strictly positive, finite and constant scalar. Then
	$\sqrt{N}
	\cdot
	\nabla_{\theta}
	\hat{L}_{PSO}^{N^{\usuff},N^{\dsuff}}(f_{\theta^*})
	\dconv
	\NN(0, \JJ)
	$, convergence in distribution along with $N \rightarrow \infty$, where $\JJ$ is defined by Lemma \ref{lmm:PSO_infor}.
\end{lemma}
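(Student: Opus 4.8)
The plan is to recognize $\nabla_{\theta} \hat{L}_{PSO}^{N^{\usuff},N^{\dsuff}}(f_{\theta^*})$ as a difference of two independent empirical means over i.i.d. samples and apply the classical (Lindeberg–Lévy) central limit theorem to each, then combine. Concretely, from Eq.~(\ref{eq:GeneralPSOLossFrml}) write
\begin{equation}
\nabla_{\theta} \hat{L}_{PSO}^{N^{\usuff},N^{\dsuff}}(f_{\theta^*})
= - \bar{A}_{N^{\usuff}} + \bar{B}_{N^{\dsuff}},
\end{equation}
where $\bar{A}_{N^{\usuff}} = \frac{1}{N^{\usuff}} \sum_{i=1}^{N^{\usuff}} A(X_i^{\usuff})$ with $A(X) \triangleq M^{\usuff}[X, f_{\theta^*}(X)] \cdot \nabla_\theta f_{\theta^*}(X)$, and similarly $\bar{B}_{N^{\dsuff}} = \frac{1}{N^{\dsuff}} \sum_{i=1}^{N^{\dsuff}} B(X_i^{\dsuff})$ with $B(X) \triangleq M^{\dsuff}[X, f_{\theta^*}(X)] \cdot \nabla_\theta f_{\theta^*}(X)$. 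The two sums are independent since the $\probs{\usuff}$-batch and $\probs{\dsuff}$-batch are drawn independently. By the assumption $\spp^{\usuff} \equiv \spp^{\dsuff}$ and PSO \bp at $\theta^*$, we have $\expectv A(X^{\usuff}) = \expectv B(X^{\dsuff}) = \bar{\mu}_{\theta^*}^{\usuff} = \bar{\mu}_{\theta^*}^{\dsuff}$ (this is exactly Eq.~(\ref{eq:Grad_empir_proof4})), so the gradient has zero mean; its variance is $\frac{1}{N}\JJ$ by Lemma \ref{lmm:PSO_infor}.

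First I would apply the multivariate CLT to each normalized sum separately: $\sqrt{N^{\usuff}}\,[\bar{A}_{N^{\usuff}} - \bar{\mu}_{\theta^*}^{\usuff}] \dconv \NN(0, \Sigma_A)$ with $\Sigma_A = \expectv_{X \sim \probs{\usuff}}[M^{\usuff}[X, f^*(X)]^2 \cdot \II_{\theta^*}(X,X)] - \bar{\mu}_{\theta^*}^{\usuff} (\bar{\mu}_{\theta^*}^{\usuff})^T$, and analogously for $B$ with covariance $\Sigma_B$. These invoke the moment-boundedness hypotheses implicit in Theorem \ref{thrm:PSO_asymp_norm} (the entries of $\JJ$ being integrable/finite ensures the finite second moments required by the CLT). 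Then I would rescale by $\sqrt{N}$ using $N^{\usuff} = \frac{\tau}{\tau+1} N$ and $N^{\dsuff} = \frac{1}{\tau+1} N$, obtaining
\begin{equation}
\sqrt{N}\,[\bar{A}_{N^{\usuff}} - \bar{\mu}_{\theta^*}^{\usuff}] \dconv \NN\!\left(0, \tfrac{\tau+1}{\tau}\Sigma_A\right),
\quad
\sqrt{N}\,[\bar{B}_{N^{\dsuff}} - \bar{\mu}_{\theta^*}^{\dsuff}] \dconv \NN\!\left(0, (\tau+1)\Sigma_B\right).
\end{equation}
Because the two are independent, their joint limit is the product Gaussian, and by the continuous mapping theorem the linear combination $-\sqrt{N}[\bar{A}_{N^{\usuff}} - \bar{\mu}_{\theta^*}^{\usuff}] + \sqrt{N}[\bar{B}_{N^{\dsuff}} - \bar{\mu}_{\theta^*}^{\dsuff}]$ converges to $\NN(0, \frac{\tau+1}{\tau}\Sigma_A + (\tau+1)\Sigma_B)$; since the mean terms coincide and cancel, this quantity equals $\sqrt{N}\,\nabla_{\theta}\hat{L}_{PSO}^{N^{\usuff},N^{\dsuff}}(f_{\theta^*})$. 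It remains to check that $\frac{\tau+1}{\tau}\Sigma_A + (\tau+1)\Sigma_B = \JJ$, which follows by substituting the definitions of $\Sigma_A$, $\Sigma_B$, using $\bar{\mu}_{\theta^*}^{\usuff} = \bar{\mu}_{\theta^*}^{\dsuff}$ so that the cross term $\frac{(\tau+1)^2}{\tau}\bar{\mu}_{\theta^*}^{\usuff}(\bar{\mu}_{\theta^*}^{\dsuff})^T$ matches, and comparing with Eq.~(\ref{eq:Grad_empir_proof7}).

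The main obstacle is bookkeeping rather than conceptual: one must carefully track the two different normalizations ($\sqrt{N^{\usuff}}$ vs.\ $\sqrt{N}$), confirm the joint convergence of the independent pair (which is standard once marginals converge and independence holds), and verify that the resulting covariance algebraically reduces to the $\JJ$ of Lemma \ref{lmm:PSO_infor} — in particular that the cancellation $\bar{\mu}_{\theta^*}^{\usuff} = \bar{\mu}_{\theta^*}^{\dsuff}$ (which relies essentially on the identical-support assumption) is what lets the mixed product term in $\JJ$ be written in the convenient form. I would also note in passing that the constancy and finiteness of $\tau$ is exactly what makes the rescaling coefficients $\frac{\tau+1}{\tau}$ and $\tau+1$ well-defined and fixed, so no further subtlety arises there.
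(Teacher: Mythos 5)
Your proposal is correct and follows essentially the same route as the paper's proof: apply the multivariate CLT separately to the \up and \down empirical means, rescale by $\sqrt{N}$ via $N^{\usuff} = \frac{\tau}{\tau+1}N$ and $N^{\dsuff} = \frac{1}{\tau+1}N$, combine the independent limits, use $\bar{\mu}_{\theta^*}^{\usuff} = \bar{\mu}_{\theta^*}^{\dsuff}$ to identify the result with $\sqrt{N}\,\nabla_{\theta}\hat{L}_{PSO}^{N^{\usuff},N^{\dsuff}}(f_{\theta^*})$, and check the covariance equals $\JJ$ via Eq.~(\ref{eq:Grad_empir_proof5555}). Your explicit remark that independence of the two batches is what licenses the joint convergence is a point the paper leaves implicit, but it does not change the argument.
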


\begin{proof}	

Define:
\begin{equation}
\Sigma_{\theta}^{\usuff} \triangleq 
\variance_{X \sim \probs{\usuff}}
M^{\usuff}
\left[
X,
f_{\theta}(X)
\right]
\cdot
\nabla_{\theta} f_{\theta}(X)
=
\E_{X \sim \probs{\usuff}}
\left[
M^{\usuff}
\left[
X,
f_{\theta}(X)
\right]^2
\cdot
\II_{\theta}(X, X)
\right]
-
\bar{\mu}_{\theta}^{\usuff} \cdot (\bar{\mu}_{\theta}^{\usuff})^T
,
\label{eq:nottt_proof_cov1}
\end{equation}
\begin{equation}
\Sigma_{\theta}^{\dsuff} \triangleq 
\variance_{X \sim \probs{\dsuff}}
M^{\dsuff}
\left[
X,
f_{\theta}(X)
\right]
\cdot
\nabla_{\theta} f_{\theta}(X)
=
\E_{X \sim \probs{\dsuff}}
\left[
M^{\dsuff}
\left[
X,
f_{\theta}(X)
\right]^2
\cdot
\II_{\theta}(X, X)
\right]
-
\bar{\mu}_{\theta}^{\dsuff} \cdot (\bar{\mu}_{\theta}^{\dsuff})^T
,
\label{eq:nottt_proof_cov2}
\end{equation}
where $\bar{\mu}_{\theta}^{\usuff}$ and $\bar{\mu}_{\theta}^{\dsuff}$ are defined in Eq.~(\ref{eq:nottt_proof}).

Consider the average $\frac{1}{N^{\usuff}}
\sum_{i = 1}^{N^{\usuff}}
M^{\usuff}
\left[
X^{\usuff}_{i},
f_{\theta^*}(X^{\usuff}_{i})
\right]
\cdot
\nabla_{\theta} f_{\theta^*}(X^{\usuff}_{i})
$.
It contains i.i.d. random vectors with mean $\bar{\mu}_{\theta^*}^{\usuff}$ and variance $\Sigma_{\theta^*}^{\usuff}$.
Using a multivariate Central Limit Theorem (CLT) on the considered average, we have:
\begin{equation}
\bar{a}
\equiv
\sqrt{N^{\usuff}}
\left[
\frac{1}{N^{\usuff}}
\sum_{i = 1}^{N^{\usuff}}
M^{\usuff}
\left[
X^{\usuff}_{i},
f_{\theta^*}(X^{\usuff}_{i})
\right]
\cdot
\nabla_{\theta} f_{\theta^*}(X^{\usuff}_{i})
-
\bar{\mu}_{\theta^*}^{\usuff}
\right]
\dconv
\NN(0, \Sigma_{\theta^*}^{\usuff})
.
\label{eq:U_sum_conv2}
\end{equation}
In similar manner, we also have:
\begin{equation}
\bar{b}
\equiv
\sqrt{N^{\dsuff}}
\left[
\frac{1}{N^{\dsuff}}
\sum_{i = 1}^{N^{\dsuff}}
M^{\dsuff}
\left[
X^{\dsuff}_{i},
f_{\theta^*}(X^{\dsuff}_{i})
\right]
\cdot
\nabla_{\theta} f_{\theta^*}(X^{\dsuff}_{i})
-
\bar{\mu}_{\theta^*}^{\dsuff}
\right]
\dconv
\NN(0, \Sigma_{\theta^*}^{\dsuff})
.
\label{eq:U_sum_conv3}
\end{equation}

Therefore, the linear combination also has a convergence in distribution:
\begin{equation}
-
\sqrt{\frac{\tau + 1}{\tau}}
\cdot
\bar{a}
+
\sqrt{\tau + 1}
\cdot
\bar{b}
\dconv
\NN(0, \varSigma)
,
\quad
\varSigma \triangleq 
\frac{\tau + 1}{\tau} \cdot \Sigma_{\theta^*}^{\usuff} + (\tau + 1) \cdot \Sigma_{\theta^*}^{\dsuff}
,
\label{eq:U_sum_conv4}
\end{equation}
where $\sqrt{\frac{\tau + 1}{\tau}}$ and $\sqrt{\tau + 1}$ are finite constant scalars, and where the convergence happens along with $N^{\usuff} \rightarrow \infty$ and $N^{\dsuff} \rightarrow \infty$. Note that this implies the convergence in $N \rightarrow \infty$, since the latter leads to $\{ N^{\usuff} \rightarrow \infty, N^{\dsuff} \rightarrow \infty, \min(N^{\usuff}, N^{\dsuff}) \rightarrow \infty \}$ due to $\tau$ being fixed and finite.

Further, using relations $N^{\usuff} = \frac{\tau}{\tau + 1} N$ and $N^{\dsuff} = \frac{1}{\tau + 1} N$, the above linear combination is equal to: 
\begin{multline}
-
\sqrt{\frac{\tau + 1}{\tau}}
\cdot
\bar{a}
+
\sqrt{\tau + 1}
\cdot
\bar{b}
=
\sqrt{N}
\Big[
-
\frac{1}{N^{\usuff}}
\sum_{i = 1}^{N^{\usuff}}
M^{\usuff}
\left[
X^{\usuff}_{i},
f_{\theta^*}(X^{\usuff}_{i})
\right]
\cdot
\nabla_{\theta} f_{\theta^*}(X^{\usuff}_{i})
+
\bar{\mu}_{\theta^*}^{\usuff}
+\\
+
\frac{1}{N^{\dsuff}}
\sum_{i = 1}^{N^{\dsuff}}
M^{\dsuff}
\left[
X^{\dsuff}_{i},
f_{\theta^*}(X^{\dsuff}_{i})
\right]
\cdot
\nabla_{\theta} f_{\theta^*}(X^{\dsuff}_{i})
-
\bar{\mu}_{\theta^*}^{\dsuff}
\Big]
=
\sqrt{N}
\cdot
\nabla_{\theta}
\hat{L}_{PSO}^{N^{\usuff},N^{\dsuff}}(f_{\theta^*})
,
\label{eq:U_sum_conv5}
\end{multline}
where we used $\bar{\mu}_{\theta^*}^{\usuff} = \bar{\mu}_{\theta^*}^{\dsuff}$ from Eq.~(\ref{eq:Grad_empir_proof4}). Thus, we have $\sqrt{N}
\cdot
\nabla_{\theta}
\hat{L}_{PSO}^{N^{\usuff},N^{\dsuff}}(f_{\theta^*}) \dconv
\NN(0, \varSigma)$.

Finally, we have:
\begin{multline}
\varSigma = 
\frac{\tau + 1}{\tau} \cdot
\left[
\E_{X \sim \probs{\usuff}}
\left[
M^{\usuff}
\left[
X,
f_{\theta^*}(X)
\right]^2
\cdot
\II_{\theta^*}(X, X)
\right]
-
\bar{\mu}_{\theta^*}^{\usuff} \cdot (\bar{\mu}_{\theta^*}^{\usuff})^T
\right]
+\\
+ (\tau + 1) \cdot 
\left[
\E_{X \sim \probs{\dsuff}}
\left[
M^{\dsuff}
\left[
X,
f_{\theta^*}(X)
\right]^2
\cdot
\II_{\theta^*}(X, X)
\right]
-
\bar{\mu}_{\theta^*}^{\dsuff} \cdot (\bar{\mu}_{\theta^*}^{\dsuff})^T
\right]
.
\label{eq:U_sum_conv6}
\end{multline}
Using Eq.~(\ref{eq:Grad_empir_proof5555}) we conclude $\JJ \equiv \varSigma$.

\end{proof}

\subsection{Proof of Theorem}

Define $\hat{\theta}_{N^{\usuff},N^{\dsuff}} = \argmin_{\theta \in \Theta} \hat{L}_{PSO}^{N^{\usuff},N^{\dsuff}}(f_{\theta})$ and $\theta^* = \argmin_{\theta \in \Theta} L_{PSO}(f_{\theta})$. Since assumptions of Theorem \ref{thrm:PSO_consist} are also the assumptions of Theorem \ref{thrm:PSO_asymp_norm}, $f_{\theta^*}$ satisfies PSO \bp and that $\hat{\theta}_{N^{\usuff},N^{\dsuff}} \pconv \theta^*$ when $\min(N^{\usuff}, N^{\dsuff}) \rightarrow \infty$.

Further, note that first order conditions (FOCs) are also satisfied $\nabla_{\theta}
\hat{L}_{PSO}^{N^{\usuff},N^{\dsuff}}(f_{\hat{\theta}_{N^{\usuff},N^{\dsuff}}}) = 0$.
Assuming that $\hat{L}_{PSO}^{N^{\usuff},N^{\dsuff}}(f_{\theta})$ is continuously differentiable w.r.t. $\theta$, we can apply mean-value theorem on FOCs:
\begin{equation}
\nabla_{\theta}
\hat{L}_{PSO}^{N^{\usuff},N^{\dsuff}}(f_{\hat{\theta}_{N^{\usuff},N^{\dsuff}}})
=
\nabla_{\theta}
\hat{L}_{PSO}^{N^{\usuff},N^{\dsuff}}(f_{\theta^*})
+
\nabla_{\theta\theta}
\hat{L}_{PSO}^{N^{\usuff},N^{\dsuff}}(f_{\bar{\theta}})
\cdot
\left[
\hat{\theta}_{N^{\usuff},N^{\dsuff}}
-
\theta^*
\right]
 = 0,
\label{eq:asymp_normal_proof1}
\end{equation}
where $\bar{\theta}$ is located on a line segment between $\hat{\theta}_{N^{\usuff},N^{\dsuff}}$ and $\theta^*$.
Given the estimation consistency $\hat{\theta}_{N^{\usuff},N^{\dsuff}} \pconv \theta^*$, the definition of $\bar{\theta}$ implies $\bar{\theta} \pconv \theta^*$.

The identity in Eq.~(\ref{eq:asymp_normal_proof1}) can be rewritten as:
\begin{equation}
\sqrt{N}
\cdot
\left[
\hat{\theta}_{N^{\usuff},N^{\dsuff}}
-
\theta^*
\right]
=
-
\left[
\nabla_{\theta\theta}
\hat{L}_{PSO}^{N^{\usuff},N^{\dsuff}}(f_{\bar{\theta}})
\right]^{-1}
\cdot
\sqrt{N}
\cdot
\nabla_{\theta}
\hat{L}_{PSO}^{N^{\usuff},N^{\dsuff}}(f_{\theta^*})
,
\label{eq:asymp_normal_proof2}
\end{equation}
where we used a notation $N \triangleq N^{\usuff} + N^{\dsuff}$. Using CLT in Lemma \ref{lmm:PSO_grad_conv} we have:
\begin{equation}
\sqrt{N}
\cdot
\nabla_{\theta}
\hat{L}_{PSO}^{N^{\usuff},N^{\dsuff}}(f_{\theta^*})
\dconv
\NN(0, \JJ)
.
\label{eq:asymp_normal_proof3}
\end{equation}

Next, $\nabla_{\theta\theta}
\hat{L}_{PSO}^{N^{\usuff},N^{\dsuff}}(f_{\bar{\theta}})
$ has a form:
\begin{equation}
\nabla_{\theta\theta}
\hat{L}_{PSO}^{N^{\usuff},N^{\dsuff}}(f_{\bar{\theta}})
=
-
\frac{1}{N^{\usuff}}
\sum_{i = 1}^{N^{\usuff}}
\nabla_{\theta\theta}
\widetilde{M}^{\usuff}
\left[
X^{\usuff}_{i},
f_{\bar{\theta}}(X^{\usuff}_{i})
\right]
+
\frac{1}{N^{\dsuff}}
\sum_{i = 1}^{N^{\dsuff}}
\nabla_{\theta\theta}
\widetilde{M}^{\dsuff}
\left[
X^{\dsuff}_{i},
f_{\bar{\theta}}(X^{\dsuff}_{i})
\right]
.
\label{eq:asymp_normal_proof4}
\end{equation}
Using the uniform law of large numbers (LLN) and $\bar{\theta} \pconv \theta^*$, we get:
\begin{equation}
\frac{1}{N^{\usuff}}
\sum_{i = 1}^{N^{\usuff}}
\nabla_{\theta\theta}
\widetilde{M}^{\usuff}
\left[
X^{\usuff}_{i},
f_{\bar{\theta}}(X^{\usuff}_{i})
\right]
\pconv
\E_{X \sim \probs{\usuff}}
\nabla_{\theta\theta}
\widetilde{M}^{\usuff}
\left[
X,
f_{\theta^*}(X)
\right]
,
\label{eq:asymp_normal_proof5}
\end{equation}
\begin{equation}
\frac{1}{N^{\dsuff}}
\sum_{i = 1}^{N^{\dsuff}}
\nabla_{\theta\theta}
\widetilde{M}^{\dsuff}
\left[
X^{\dsuff}_{i},
f_{\bar{\theta}}(X^{\dsuff}_{i})
\right]
\pconv
\E_{X \sim \probs{\dsuff}}
\nabla_{\theta\theta}
\widetilde{M}^{\dsuff}
\left[
X,
f_{\theta^*}(X)
\right]
,
\label{eq:asymp_normal_proof6}
\end{equation}
and hence
\begin{multline}
\nabla_{\theta\theta}
\hat{L}_{PSO}^{N^{\usuff},N^{\dsuff}}(f_{\bar{\theta}})
\pconv
-
\E_{X \sim \probs{\usuff}}
\nabla_{\theta\theta}
\widetilde{M}^{\usuff}
\left[
X,
f_{\theta^*}(X)
\right]
+
\E_{X \sim \probs{\dsuff}}
\nabla_{\theta\theta}
\widetilde{M}^{\dsuff}
\left[
X,
f_{\theta^*}(X)
\right]
=
\\
=
\nabla_{\theta\theta}
\left[
-
\E_{X \sim \probs{\usuff}}
\widetilde{M}^{\usuff}
\left[
X,
f_{\theta^*}(X)
\right]
+
\E_{X \sim \probs{\dsuff}}
\widetilde{M}^{\dsuff}
\left[
X,
f_{\theta^*}(X)
\right]
\right]
=
\nabla_{\theta\theta}
L_{PSO}(f_{\theta^*})
=
\hess
,
\label{eq:asymp_normal_proof7}
\end{multline}
with $\hess$ being defined by Lemma \ref{lmm:PSO_second_der}.
Applying the Continuous Mapping Theorem, we get:
\begin{equation}
\left[
\nabla_{\theta\theta}
\hat{L}_{PSO}^{N^{\usuff},N^{\dsuff}}(f_{\bar{\theta}})
\right]^{-1}
\pconv
\hess^{-1}
.
\label{eq:asymp_normal_proof8}
\end{equation}

Further, we apply Slutzky theorem on Eqs.~(\ref{eq:asymp_normal_proof3})-(\ref{eq:asymp_normal_proof8}) to get:
\begin{equation}
-
\left[
\nabla_{\theta\theta}
\hat{L}_{PSO}^{N^{\usuff},N^{\dsuff}}(f_{\bar{\theta}})
\right]^{-1}
\cdot
\sqrt{N}
\cdot
\nabla_{\theta}
\hat{L}_{PSO}^{N^{\usuff},N^{\dsuff}}(f_{\theta^*})
\dconv
-
\hess^{-1}
\NN(0, \JJ)
=
\NN(0, \hess^{-1} \JJ \hess^{-1})
.
\label{eq:asymp_normal_proof9}
\end{equation}
Note that $\min(N^{\usuff}, N^{\dsuff}) \rightarrow \infty$ is required for the convergence (see Lemma \ref{lmm:PSO_grad_conv}). This limit is identical to $N \rightarrow \infty$ due to assumption that $\tau$ is fixed and constant.

Therefore, we get:
\begin{equation}
\sqrt{N}
\cdot
\left[
\hat{\theta}_{N^{\usuff},N^{\dsuff}}
-
\theta^*
\right]
\dconv
\NN(0, \hess^{-1} \JJ \hess^{-1})
,
\label{eq:asymp_normal_proof10}
\end{equation}
where the convergence in distribution is achieved along with $N \rightarrow \infty$. 
Further, all the regulatory assumptions of the theorem are required for application of CLT, LLN, and satisfaction of FOCs. See theorem 3.1 in \citep{Newey94book} for the more technical exposition.

\hfill $\blacksquare$

\section{Proof of Theorem \ref{thrm:PSO_diff_stats_props}}
\label{sec:PSO_diff_stats_props_proof}

The proof for $df_{\theta}(X)$'s expected value is trivial. Its covariance is derived as following:
\begin{multline}
\E
\left[
df_{\theta}(X)
\cdot
df_{\theta}(X')
\right]
=
\delta^2 \cdot 
\E
\left[
\nabla_{\theta} f_{\theta}(X)^T
\cdot
\nabla_{\theta}
\hat{L}_{PSO}^{N^{\usuff},N^{\dsuff}}(f_{\theta})
\cdot
\nabla_{\theta}
\hat{L}_{PSO}^{N^{\usuff},N^{\dsuff}}(f_{\theta})^T
\cdot
\nabla_{\theta} f_{\theta}(X')
\right]
=\\
=
\delta^2 \cdot 
\nabla_{\theta} f_{\theta}(X)^T
\cdot
\E
\left[
\nabla_{\theta}
\hat{L}_{PSO}^{N^{\usuff},N^{\dsuff}}(f_{\theta})
\cdot
\nabla_{\theta}
\hat{L}_{PSO}^{N^{\usuff},N^{\dsuff}}(f_{\theta})^T
\right]
\cdot
\nabla_{\theta} f_{\theta}(X')
,
\label{eq:PSO_diff_stats_proof_2}
\end{multline}
\begin{equation}
\E
\left[
df_{\theta}(X)
\right]
=
-
\delta \cdot 
\nabla_{\theta} f_{\theta}(X)^T
\cdot
\E
\left[
\nabla_{\theta}
\hat{L}_{PSO}^{N^{\usuff},N^{\dsuff}}(f_{\theta})
\right]
,
\label{eq:PSO_diff_stats_proof_3}
\end{equation}
\begin{multline}
\cov
\left[
df_{\theta}(X),
df_{\theta}(X')
\right]
=
\E
\left[
df_{\theta}(X)
\cdot
df_{\theta}(X')
\right]
-
\E
\left[
df_{\theta}(X)
\right]
\cdot
\E
\left[
df_{\theta}(X')
\right]
=\\
=
\delta^2 \cdot 
\nabla_{\theta} f_{\theta}(X)^T
\cdot
\E
\left[
\nabla_{\theta}
\hat{L}_{PSO}^{N^{\usuff},N^{\dsuff}}(f_{\theta})
\cdot
\nabla_{\theta}
\hat{L}_{PSO}^{N^{\usuff},N^{\dsuff}}(f_{\theta})^T
\right]
\cdot
\nabla_{\theta} f_{\theta}(X')
-\\
-
\delta^2 \cdot 
\nabla_{\theta} f_{\theta}(X)^T
\cdot
\E
\left[
\nabla_{\theta}
\hat{L}_{PSO}^{N^{\usuff},N^{\dsuff}}(f_{\theta})
\right]
\cdot
\E
\left[
\nabla_{\theta}
\hat{L}_{PSO}^{N^{\usuff},N^{\dsuff}}(f_{\theta})
\right]^T
\cdot
\nabla_{\theta} f_{\theta}(X')
=\\
=
\delta^2 \cdot 
\nabla_{\theta} f_{\theta}(X)^T
\cdot
\Bigg[
\E
\left[
\nabla_{\theta}
\hat{L}_{PSO}^{N^{\usuff},N^{\dsuff}}(f_{\theta})
\cdot
\nabla_{\theta}
\hat{L}_{PSO}^{N^{\usuff},N^{\dsuff}}(f_{\theta})^T
\right]
-\\
-
\E
\left[
\nabla_{\theta}
\hat{L}_{PSO}^{N^{\usuff},N^{\dsuff}}(f_{\theta})
\right]
\cdot
\E
\left[
\nabla_{\theta}
\hat{L}_{PSO}^{N^{\usuff},N^{\dsuff}}(f_{\theta})
\right]^T
\Bigg]
\cdot
\nabla_{\theta} f_{\theta}(X')
=\\
=
\delta^2 \cdot 
\nabla_{\theta} f_{\theta}(X)^T
\cdot
\variance
\left[
\nabla_{\theta}
\hat{L}_{PSO}^{N^{\usuff},N^{\dsuff}}(f_{\theta})
\right]
\cdot
\nabla_{\theta} f_{\theta}(X')
,
\label{eq:PSO_diff_stats_proof_1}
\end{multline}
where $\variance
\left[
\nabla_{\theta}
\hat{L}_{PSO}^{N^{\usuff},N^{\dsuff}}(f_{\theta})
\right]
$ was proven to have a form in Eq.~(\ref{eq:Grad_empir_var2}). Observe that it is proportional to $\frac{1}{N}$ where $N = N^{\usuff} + N^{\dsuff}$.

\hfill $\blacksquare$

\section{Proof of Theorem \ref{thrm:PSO_peak_convergence}}
\label{sec:PSO_peak_convergence_proof}

Assume that first order conditions (FOCs) were satisfied, $\nabla_{\theta}
\hat{L}_{PSO}^{N^{\usuff},N^{\dsuff}}(f_{\theta}) = 0$. Then we have:
\begin{equation}
\frac{1}{N^{\usuff}}
\sum_{i = 1}^{N^{\usuff}}
M^{\usuff}
\left[
X^{\usuff}_{i},
f_{\theta}(X^{\usuff}_{i})
\right]
\cdot
\nabla_{\theta} f_{\theta}(X^{\usuff}_{i})
=
\frac{1}{N^{\dsuff}}
\sum_{i = 1}^{N^{\dsuff}}
M^{\dsuff}
\left[
X^{\dsuff}_{i},
f_{\theta}(X^{\dsuff}_{i})
\right]
\cdot
\nabla_{\theta} f_{\theta}(X^{\dsuff}_{i})
.
\label{eq:peak_proof_1}
\end{equation}

Consider a specific $\probs{\usuff}$'s training sample $X \equiv X^{\usuff}_{j}$ with $j \in \{ 1, \ldots, N^{\usuff} \}$. Multiplying the above expression by $\nabla_{\theta} f_{\theta}(X)^T$, we get
\begin{multline}
M^{\usuff}
\left[
X,
f_{\theta}(X)
\right]
\cdot
g_{\theta}(X, X)
+
\sum_{i,i \neq j}^{N^{\usuff}}
M^{\usuff}
\left[
X^{\usuff}_{i},
f_{\theta}(X^{\usuff}_{i})
\right]
\cdot
g_{\theta}(X, X^{\usuff}_{i})
=\\
=
\frac{N^{\usuff}}{N^{\dsuff}}
\sum_{i = 1}^{N^{\dsuff}}
M^{\dsuff}
\left[
X^{\dsuff}_{i},
f_{\theta}(X^{\dsuff}_{i})
\right]
\cdot
g_{\theta}(X, X^{\dsuff}_{i})
.
\label{eq:peak_proof_2}
\end{multline}

Kernel $g_{\theta}$ is non-negative due to boundedness assumed in Eq.~(\ref{eq:RelKernelBounds}).
Since $M^{\usuff}$ is likewise assumed to be non-negative, we obtain an inequality:
\begin{equation}
M^{\usuff}
\left[
X,
f_{\theta}(X)
\right]
\cdot
g_{\theta}(X, X)
\leq
\frac{N^{\usuff}}{N^{\dsuff}}
\sum_{i = 1}^{N^{\dsuff}}
M^{\dsuff}
\left[
X^{\dsuff}_{i},
f_{\theta}(X^{\dsuff}_{i})
\right]
\cdot
g_{\theta}(X, X^{\dsuff}_{i})
.
\label{eq:peak_proof_3}
\end{equation}
Next, we divide by $g_{\theta}(X, X)$:
\begin{multline}
M^{\usuff}
\left[
X,
f_{\theta}(X)
\right]
\leq
\frac{N^{\usuff}}{N^{\dsuff}}
\sum_{i = 1}^{N^{\dsuff}}
M^{\dsuff}
\left[
X^{\dsuff}_{i},
f_{\theta}(X^{\dsuff}_{i})
\right]
\cdot
r_{\theta}(X, X^{\dsuff}_{i})
\leq\\
\leq
\frac{N^{\usuff}}{N^{\dsuff}}
\sum_{i = 1}^{N^{\dsuff}}
M^{\dsuff}
\left[
X^{\dsuff}_{i},
f_{\theta}(X^{\dsuff}_{i})
\right]
\cdot
\exp
\left[
- \frac{d(X, X^{\dsuff}_{i})}{h_{max}}
\right]
\equiv
\alpha
,
\label{eq:peak_proof_4}
\end{multline}
where in the last part we applied the assumed bounds over $r_{\theta}$.

Denote the inverse function of $M^{\usuff} \left[X, s\right]$ by $(M^{\usuff})^{-1} \left[X, z\right]$. Since $M^{\usuff}$ is assumed to be strictly decreasing, then so is its inverse $(M^{\usuff})^{-1}$. Further, apply $(M^{\usuff})^{-1}$ on both sides of Eq.~(\ref{eq:peak_proof_4}):
\begin{equation}
(M^{\usuff})^{-1}
\left[
X,
M^{\usuff}
\left[
X,
f_{\theta}(X)
\right]
\right]
=
f_{\theta}(X)
\geq
(M^{\usuff})^{-1}
\left[
X,
\alpha
\right]
,
\label{eq:peak_proof_5}
\end{equation}
where we reversed the inequality since the applied function is strictly decreasing.

Next, observe that $0 \leq \alpha \leq \infty$ due to the assumed non-negativity of $M^{\dsuff}$. Further, for $h_{max} \rightarrow 0$ we also have $\alpha \rightarrow 0$ - for zero bandwidth $\alpha$ goes also to zero. Moreover, due to its properties $(M^{\usuff})^{-1}\left[
X,
\alpha
\right]
$ is strictly decreasing for $\alpha \in [0, \infty]$. Hence, $(M^{\usuff})^{-1}
\left[
X,
\alpha
\right]
\rightarrow 
\max_{\alpha' \in [0, \infty]}
(M^{\usuff})^{-1}
\left[
X,
\alpha'
\right]
$ along with $\alpha \rightarrow 0$.

Further note that the range of $(M^{\usuff})^{-1}$ is the subset of values within $\RR$ that $f_{\theta}(X)$ can have. That is, $(M^{\usuff})^{-1}$ and $f_{\theta}$ share their range. Assuming that this range is entire $\RR$ or its positive part $\RRpos$, extended to contain $\infty$,
we will have $\max_{\alpha' \in [0, \infty]}
(M^{\usuff})^{-1}
\left[
X,
\alpha'
\right]
=
\infty
$.

To conclude, we have that $f_{\theta}(X)
\geq
(M^{\usuff})^{-1}
\left[
X,
\alpha
\right]$, where for $\alpha \rightarrow 0$ this lower bound behaves as $(M^{\usuff})^{-1}
\left[
X,
\alpha
\right]
\rightarrow 
\infty
$.

\hfill $\blacksquare$

\section{Proof of Theorem \ref{thrm:PSO_diff_diff}}
\label{sec:PSO_diff_diff_Proof}

First, we prove the stepping stone lemmas.

\subsection{Lemmata}

\begin{lemma}
	\label{lmm:Rel_Kernel_Diff} 
	Consider the relative model kernel $r_{\theta}$ defined in Eq.~(\ref{eq:RelKernel}), and assume it to be bounded as in Eq.~(\ref{eq:RelKernelBounds}). Then $| r_{\theta}(X_1, X) - r_{\theta}(X_2, X) | \leq \epsilon\left[ X_1, X_2, X \right]$ with
	$\epsilon\left[ X_1, X_2, X \right]
	\triangleq
	 1 -
	\exp \left[
	-
	\frac{1}{h_{min}}
	d(X_1, X_2)
	\right]
	\cdot
	\exp \left[
	-
	\frac{1}{h_{min}}
	\max
	\left[
	d(X_1, X),
	d(X_2, X)
	\right]
	\right]
	$.
\end{lemma}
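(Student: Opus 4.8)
The plan is a two-sided clipping estimate followed by a triangle-inequality argument, exploiting that the target bound $\epsilon[X_1,X_2,X]$ is symmetric under interchange of $X_1$ and $X_2$. First I would reduce to the case $r_\theta(X_1,X)\ge r_\theta(X_2,X)$; the reverse ordering then follows by relabeling, since $\epsilon[X_1,X_2,X]=\epsilon[X_2,X_1,X]$ (the symmetry of $d$ and of the $\max$). In this case the absolute value drops, $|r_\theta(X_1,X)-r_\theta(X_2,X)| = r_\theta(X_1,X)-r_\theta(X_2,X)$, and I would bound the two terms separately using only Eq.~(\ref{eq:RelKernelBounds}): the upper bound gives $r_\theta(X_1,X)\le\exp[-d(X_1,X)/h_{max}]\le 1$, and the lower bound gives $r_\theta(X_2,X)\ge\exp[-d(X_2,X)/h_{min}]$, so that
\begin{equation}
|r_\theta(X_1,X)-r_\theta(X_2,X)| \le 1 - \exp\left[-\frac{d(X_2,X)}{h_{min}}\right].
\end{equation}

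Next I would apply the triangle inequality $d(X_2,X)\le d(X_1,X_2)+d(X_1,X)$ and then weaken $d(X_1,X)\le\max\left[d(X_1,X),d(X_2,X)\right]$, which gives $d(X_2,X)\le d(X_1,X_2)+\max[d(X_1,X),d(X_2,X)]$. Since $z\mapsto\exp[-z/h_{min}]$ is positive, strictly decreasing, and turns sums in the exponent into products, this yields
\begin{equation}
\exp\left[-\frac{d(X_2,X)}{h_{min}}\right] \ge \exp\left[-\frac{d(X_1,X_2)}{h_{min}}\right]\cdot\exp\left[-\frac{\max[d(X_1,X),d(X_2,X)]}{h_{min}}\right],
\end{equation}
so the right-hand side of the previous display is at most $\epsilon[X_1,X_2,X]$, which is the claim. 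For the ordering $r_\theta(X_2,X)\ge r_\theta(X_1,X)$ I would run the identical argument with the triangle inequality $d(X_1,X)\le d(X_1,X_2)+d(X_2,X)$ and appeal to the symmetry of $\epsilon$ in its first two arguments.

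I do not anticipate a genuine obstacle: once the trick is seen it is a two-line bound. The only point needing care is bookkeeping — the $\max$ in the definition of $\epsilon$ is exactly what lets a single expression dominate both orderings of $r_\theta(X_1,X)$ and $r_\theta(X_2,X)$, so one must invoke the triangle inequality through the correct endpoint ($X_1$ in one case, $X_2$ in the other); the argument also uses symmetry of $d$, consistent with the pseudometric examples mentioned after Eq.~(\ref{eq:RelKernelBounds}). To close the loop toward Theorem \ref{thrm:PSO_diff_diff}, this lemma is later combined with the decomposition $g_\theta(X_1,\cdot)-g_\theta(X_2,\cdot)=g_\theta(X_1,X_1)\,[r_\theta(X_1,\cdot)-r_\theta(X_2,\cdot)]+[g_\theta(X_1,X_1)-g_\theta(X_2,X_2)]\,r_\theta(X_2,\cdot)$ together with $|r_\theta|\le 1$, applied termwise in Eq.~(\ref{eq:PSODffrntl}); this is precisely how the two summands of $\nu_\theta$ in Eq.~(\ref{eq:PSODffrntlChangeBound_nu_def}) arise.
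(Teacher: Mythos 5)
Your proof is correct and follows essentially the same route as the paper's: a two-sided clipping ($r_\theta(X_1,X)\le 1$ from the $h_{max}$ bound, $r_\theta(X_2,X)\ge\exp[-d(X_2,X)/h_{min}]$ from the $h_{min}$ bound) followed by the triangle inequality through the appropriate endpoint. The paper merely writes out both orderings explicitly and takes $\max(c_1,c_2)$ at the end, whereas you handle one ordering and invoke the symmetry of $\epsilon$ — a purely cosmetic difference.
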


\begin{proof}	
	
Consider two scenarios: $r_{\theta}(X_1, X) \geq r_{\theta}(X_2, X)$ and $r_{\theta}(X_1, X) < r_{\theta}(X_2, X)$.
In the first case we have:
\begin{multline}
\left|
r_{\theta}(X_1, X) - r_{\theta}(X_2, X)
\right|
=
r_{\theta}(X_1, X) - r_{\theta}(X_2, X)
\leq
1 -
\exp \left[
-
\frac{1}{h_{min}}
d(X_2, X)
\right]
\leq\\
\leq
1 -
\exp \left[
-
\frac{1}{h_{min}}
d(X_1, X_2)
\right]
\cdot
\exp \left[
-
\frac{1}{h_{min}}
d(X_1, X)
\right]
\triangleq
c_1
,
\label{eq:diff_diff_proofff1}
\end{multline}
where in the second row we used a triangle inequality $d(X_2, X) \leq d(X_1, X_2) + d(X_1, X)$.

Similarly, in the second case $r_{\theta}(X_1, X) < r_{\theta}(X_2, X)$ we will obtain:
\begin{multline}
\left|
r_{\theta}(X_1, X) - r_{\theta}(X_2, X)
\right|
=
r_{\theta}(X_2, X) - r_{\theta}(X_1, X)
\leq\\
\leq
1 -
\exp \left[
-
\frac{1}{h_{min}}
d(X_1, X_2)
\right]
\cdot
\exp \left[
-
\frac{1}{h_{min}}
d(X_2, X)
\right]
\triangleq
c_2
.
\label{eq:diff_diff_proofff2}
\end{multline}

Next, we combine the two cases:
\begin{multline}
\left|
r_{\theta}(X_1, X) - r_{\theta}(X_2, X)
\right|
\leq
\max
(c_1, c_2)
=\\
=
1 -
\exp \left[
-
\frac{1}{h_{min}}
d(X_1, X_2)
\right]
\cdot
\exp \left[
-
\frac{1}{h_{min}}
\max
\left[
d(X_1, X),
d(X_2, X)
\right]
\right]
.
\label{eq:diff_diff_proofff3}
\end{multline}

\end{proof}

\begin{lemma}
	\label{lmm:PSO_Diff_rel_Diff} 
	Consider the relative model kernel $r_{\theta}$ defined in Eq.~(\ref{eq:RelKernel}), and assume it to be bounded as in Eq.~(\ref{eq:RelKernelBounds}). Then $\left|\frac{df_{\theta}(X_1)}{g_{\theta}(X_1, X_1)} - \frac{df_{\theta}(X_2)}{g_{\theta}(X_2, X_2)}\right| \leq \varepsilon_1(X_1, X_2)$ with:
	\begin{multline}
	\varepsilon_1(X_1, X_2)
	=
	\delta \cdot 
	\Bigg[
	\frac{1}{N^{\usuff}}
	\sum_{i = 1}^{N^{\usuff}}
	\abs{M^{\usuff}\left[X^{\usuff}_{i},f_{\theta}(X^{\usuff}_{i})\right]}
	\cdot
	\epsilon\left[ X_1, X_2, X^{\usuff}_{i} \right]
	+\\
	+
	\frac{1}{N^{\dsuff}}
	\sum_{i = 1}^{N^{\dsuff}}
	\abs{M^{\dsuff}\left[X^{\dsuff}_{i},f_{\theta}(X^{\dsuff}_{i})\right]}
	\cdot
	\epsilon\left[ X_1, X_2, X^{\dsuff}_{i} \right]
	\Bigg]
	.
	\label{eq:diff_diff_proofff4}
	\end{multline}
\end{lemma}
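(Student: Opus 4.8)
The plan is to reduce the claim to Lemma~\ref{lmm:Rel_Kernel_Diff} by a single triangle inequality, after rewriting the differential in terms of the relative kernel $r_\theta$. First I would take Eq.~(\ref{eq:PSODffrntl}) for $df_\theta(X)$ and divide both sides by the self-similarity $g_\theta(X,X)$; since the lower bound in Eq.~(\ref{eq:RelKernelBounds}) forces $g_\theta(X,X)>0$, this is legitimate, and it converts every factor $g_\theta(X,X^{\usuff}_i)$ and $g_\theta(X,X^{\dsuff}_i)$ into $r_\theta(X,X^{\usuff}_i)$ and $r_\theta(X,X^{\dsuff}_i)$ respectively, leaving the magnitude coefficients $M^{\usuff}[X^{\usuff}_i,f_\theta(X^{\usuff}_i)]$ and $M^{\dsuff}[X^{\dsuff}_i,f_\theta(X^{\dsuff}_i)]$ untouched.

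Next I would write down this identity for $X=X_1$ and $X=X_2$ and subtract. Because the two expansions are over the same training batches with the same magnitude coefficients, every term collapses into one of the form $M^{\usuff}_i\big[r_\theta(X_1,X^{\usuff}_i)-r_\theta(X_2,X^{\usuff}_i)\big]$ (and the analogous $D$-term). Applying the triangle inequality termwise over the two sums and factoring out $|M^{\usuff}_i|$, $|M^{\dsuff}_i|$ and the positive learning rate $\delta$ gives an upper bound equal to $\delta$ times a weighted average of the quantities $|r_\theta(X_1,X^{\usuff}_i)-r_\theta(X_2,X^{\usuff}_i)|$ and $|r_\theta(X_1,X^{\dsuff}_i)-r_\theta(X_2,X^{\dsuff}_i)|$.

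Finally, I would substitute the bound $|r_\theta(X_1,X)-r_\theta(X_2,X)|\le \epsilon[X_1,X_2,X]$ from Lemma~\ref{lmm:Rel_Kernel_Diff} into each term; the resulting right-hand side is exactly $\varepsilon_1(X_1,X_2)$ as stated. There is no genuine obstacle here: the entire analytic content — controlling how $r_\theta$ changes as its first argument moves — has already been isolated in Lemma~\ref{lmm:Rel_Kernel_Diff}, and what remains is bookkeeping of the $U$- and $D$-sums plus one triangle inequality. The only point deserving a line of care is the division by $g_\theta(X,X)$ and recording that the self-similarities need not coincide at $X_1$ and $X_2$, which is precisely why the normalized differential $df_\theta(X)/g_\theta(X,X)$ (rather than $df_\theta(X)$ itself) is the right object to compare; this normalized version then feeds directly into the proof of Theorem~\ref{thrm:PSO_diff_diff}.
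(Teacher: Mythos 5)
Your proposal is correct and follows essentially the same route as the paper's proof: normalize $df_{\theta}(X)$ by $g_{\theta}(X,X)$ to rewrite Eq.~(\ref{eq:PSODffrntl}) in terms of $r_{\theta}$, subtract the two expansions, apply the triangle inequality termwise, and invoke Lemma~\ref{lmm:Rel_Kernel_Diff} on each difference $\left|r_{\theta}(X_1,X)-r_{\theta}(X_2,X)\right|$. Nothing is missing.
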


\begin{proof}

According to Eq.~(\ref{eq:PSODffrntl}) we have:
\begin{multline}
\left|\frac{df_{\theta}(X_1)}{g_{\theta}(X_1, X_1)} - \frac{df_{\theta}(X_2)}{g_{\theta}(X_2, X_2)}\right|
=
\delta \cdot 
\Bigg|
\frac{1}{N^{\usuff}}
\sum_{i = 1}^{N^{\usuff}}
M^{\usuff}\left[X^{\usuff}_{i},f_{\theta}(X^{\usuff}_{i})\right]
\cdot
\left[
r_{\theta}(X_1, X^{\usuff}_{i}) - r_{\theta}(X_2, X^{\usuff}_{i})
\right]
-\\
-
\frac{1}{N^{\dsuff}}
\sum_{i = 1}^{N^{\dsuff}}
M^{\dsuff}\left[X^{\dsuff}_{i},f_{\theta}(X^{\dsuff}_{i})\right]
\cdot
\left[
r_{\theta}(X_1, X^{\dsuff}_{i}) - r_{\theta}(X_2, X^{\dsuff}_{i})
\right]
\Bigg|
\leq\\
\leq
\delta \cdot 
\frac{1}{N^{\usuff}}
\sum_{i = 1}^{N^{\usuff}}
\abs{M^{\usuff}\left[X^{\usuff}_{i},f_{\theta}(X^{\usuff}_{i})\right]}
\cdot
\left|
r_{\theta}(X_1, X^{\usuff}_{i}) - r_{\theta}(X_2, X^{\usuff}_{i})
\right|
+\\
+
\delta \cdot 
\frac{1}{N^{\dsuff}}
\sum_{i = 1}^{N^{\dsuff}}
\abs{M^{\dsuff}\left[X^{\dsuff}_{i},f_{\theta}(X^{\dsuff}_{i})\right]}
\cdot
\left|
r_{\theta}(X_1, X^{\dsuff}_{i}) - r_{\theta}(X_2, X^{\dsuff}_{i})
\right|
\leq\\
\leq
\delta \cdot 
\frac{1}{N^{\usuff}}
\sum_{i = 1}^{N^{\usuff}}
\abs{M^{\usuff}\left[X^{\usuff}_{i},f_{\theta}(X^{\usuff}_{i})\right]}
\cdot
\epsilon\left[ X_1, X_2, X^{\usuff}_{i} \right]
+\\
+
\delta \cdot 
\frac{1}{N^{\dsuff}}
\sum_{i = 1}^{N^{\dsuff}}
\abs{M^{\dsuff}\left[X^{\dsuff}_{i},f_{\theta}(X^{\dsuff}_{i})\right]}
\cdot
\epsilon\left[ X_1, X_2, X^{\dsuff}_{i} \right]
,
\label{eq:diff_diff_proofff5}
\end{multline}
where in the last part we applied Lemma \ref{lmm:Rel_Kernel_Diff}.

\end{proof}

\begin{lemma}
	\label{lmm:PSO_Diff_rel_magn} 
	Consider the relative model kernel $r_{\theta}$ defined in Eq.~(\ref{eq:RelKernel}), and assume it to be bounded as in Eq.~(\ref{eq:RelKernelBounds}). Then $\left|\frac{df_{\theta}(X)}{g_{\theta}(X, X)} \right| \leq \varepsilon_2$ with:
	\begin{equation}
	\varepsilon_2
	=
	\delta \cdot 
	\Bigg[
	\frac{1}{N^{\usuff}}
	\sum_{i = 1}^{N^{\usuff}}
	\abs{M^{\usuff}\left[X^{\usuff}_{i},f_{\theta}(X^{\usuff}_{i})\right]}
	+
	\frac{1}{N^{\dsuff}}
	\sum_{i = 1}^{N^{\dsuff}}
	\abs{M^{\dsuff}\left[X^{\dsuff}_{i},f_{\theta}(X^{\dsuff}_{i})\right]}
	\Bigg]
	.
	\label{eq:diff_diff_proofff41}
	\end{equation}
\end{lemma}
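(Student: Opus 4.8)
The plan is to derive this bound directly from the first-order expression for the differential in Eq.~(\ref{eq:PSODffrntl}), exactly as was done for Lemma \ref{lmm:PSO_Diff_rel_Diff}, but without the extra step of subtracting two differentials. First I would write out $df_{\theta}(X)$ using Eq.~(\ref{eq:PSODffrntl}) and divide both sides by $g_{\theta}(X, X)$, which by the definition of the relative model kernel in Eq.~(\ref{eq:RelKernel}) turns each $g_{\theta}(X, X^{\usuff}_{i})$ into $r_{\theta}(X, X^{\usuff}_{i})$ and each $g_{\theta}(X, X^{\dsuff}_{i})$ into $r_{\theta}(X, X^{\dsuff}_{i})$. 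This gives the clean representation
\begin{equation}
\frac{df_{\theta}(X)}{g_{\theta}(X, X)}
=
\delta \cdot
\Bigg[
\frac{1}{N^{\usuff}}
\sum_{i = 1}^{N^{\usuff}}
M^{\usuff}\left[X^{\usuff}_{i},f_{\theta}(X^{\usuff}_{i})\right]
\cdot
r_{\theta}(X, X^{\usuff}_{i})
-
\frac{1}{N^{\dsuff}}
\sum_{i = 1}^{N^{\dsuff}}
M^{\dsuff}\left[X^{\dsuff}_{i},f_{\theta}(X^{\dsuff}_{i})\right]
\cdot
r_{\theta}(X, X^{\dsuff}_{i})
\Bigg].
\end{equation}

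Next I would apply the triangle inequality to bound the absolute value of the right-hand side by the sum of absolute values of all the summands, pulling $\delta$ and the $\frac{1}{N^{\usuff}}$, $\frac{1}{N^{\dsuff}}$ factors out. Then the only remaining ingredient is the upper bound $0 < r_{\theta}(X, X') \leq 1$, which is precisely the assumed boundedness in Eq.~(\ref{eq:RelKernelBounds}) (the right-most inequality there). Replacing each $|r_{\theta}(X, \cdot)|$ by $1$ yields
\begin{equation}
\left|\frac{df_{\theta}(X)}{g_{\theta}(X, X)}\right|
\leq
\delta \cdot
\Bigg[
\frac{1}{N^{\usuff}}
\sum_{i = 1}^{N^{\usuff}}
\abs{M^{\usuff}\left[X^{\usuff}_{i},f_{\theta}(X^{\usuff}_{i})\right]}
+
\frac{1}{N^{\dsuff}}
\sum_{i = 1}^{N^{\dsuff}}
\abs{M^{\dsuff}\left[X^{\dsuff}_{i},f_{\theta}(X^{\dsuff}_{i})\right]}
\Bigg]
=
\varepsilon_2,
\end{equation}
which is the claim.

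This lemma is essentially a one-line calculation, so I do not anticipate a genuine obstacle; the only point worth stating carefully is that the bound $r_{\theta} \leq 1$ is an assumption (Eq.~(\ref{eq:RelKernelBounds})) rather than a general fact about NTK-type kernels, so the lemma's validity is conditional on that hypothesis holding for the model at hand. Once combined with Lemma \ref{lmm:PSO_Diff_rel_Diff} and the elementary identity $df_{\theta}(X_1) - df_{\theta}(X_2) = g_{\theta}(X_1,X_1)\big[\tfrac{df_{\theta}(X_1)}{g_{\theta}(X_1,X_1)} - \tfrac{df_{\theta}(X_2)}{g_{\theta}(X_2,X_2)}\big] + \big[g_{\theta}(X_1,X_1) - g_{\theta}(X_2,X_2)\big]\tfrac{df_{\theta}(X_2)}{g_{\theta}(X_2,X_2)}$, this lemma supplies the factor multiplying $\tfrac{|g_{\theta}(X_1,X_1) - g_{\theta}(X_2,X_2)|}{g_{\theta}(X_1,X_1)}$ in the definition of $\nu_{\theta}$, completing the proof of Theorem \ref{thrm:PSO_diff_diff}.
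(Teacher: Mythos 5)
Your proposal is correct and follows exactly the paper's own proof: divide the first-order differential by $g_{\theta}(X,X)$ to rewrite it in terms of $r_{\theta}$, apply the triangle inequality, and then replace each (positive) $r_{\theta}(X,\cdot)$ by its assumed upper bound $1$ from Eq.~(\ref{eq:RelKernelBounds}). Your closing remark about how the lemma feeds the $\frac{|g_{\theta}(X_1,X_1)-g_{\theta}(X_2,X_2)|}{g_{\theta}(X_1,X_1)}$ term of $\nu_{\theta}$ in Theorem~\ref{thrm:PSO_diff_diff} also matches the paper's usage.
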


\begin{proof}	
\begin{multline}
\left|\frac{df_{\theta}(X)}{g_{\theta}(X, X)}\right|
=\\
=
\delta \cdot 
\Bigg|
\frac{1}{N^{\usuff}}
\sum_{i = 1}^{N^{\usuff}}
M^{\usuff}\left[X^{\usuff}_{i},f_{\theta}(X^{\usuff}_{i})\right]
\cdot
r_{\theta}(X, X^{\usuff}_{i})
-
\frac{1}{N^{\dsuff}}
\sum_{i = 1}^{N^{\dsuff}}
M^{\dsuff}\left[X^{\dsuff}_{i},f_{\theta}(X^{\dsuff}_{i})\right]
\cdot
r_{\theta}(X, X^{\dsuff}_{i})
\Bigg|
\leq\\
\leq
\delta \cdot 
\frac{1}{N^{\usuff}}
\sum_{i = 1}^{N^{\usuff}}
\abs{M^{\usuff}\left[X^{\usuff}_{i},f_{\theta}(X^{\usuff}_{i})\right]}
\cdot
r_{\theta}(X, X^{\usuff}_{i})
+
\delta \cdot 
\frac{1}{N^{\dsuff}}
\sum_{i = 1}^{N^{\dsuff}}
\abs{M^{\dsuff}\left[X^{\dsuff}_{i},f_{\theta}(X^{\dsuff}_{i})\right]}
\cdot
r_{\theta}(X, X^{\dsuff}_{i})
\leq\\
\leq
\delta \cdot 
\frac{1}{N^{\usuff}}
\sum_{i = 1}^{N^{\usuff}}
\abs{M^{\usuff}\left[X^{\usuff}_{i},f_{\theta}(X^{\usuff}_{i})\right]}
+
\delta \cdot 
\frac{1}{N^{\dsuff}}
\sum_{i = 1}^{N^{\dsuff}}
\abs{M^{\dsuff}\left[X^{\dsuff}_{i},f_{\theta}(X^{\dsuff}_{i})\right]}
,
\label{eq:diff_diff_proofff42}
\end{multline}
\end{proof}	
where in the last part we used $r_{\theta}(X, X') \leq 1$.

\subsection{Proof of Theorem}

Observe that:
\begin{multline}
\left|\frac{df_{\theta}(X_1)}{g_{\theta}(X_1, X_1)} - \frac{df_{\theta}(X_2)}{g_{\theta}(X_2, X_2)}\right|
=\\
=
\frac{1}{g_{\theta}(X_1, X_1)}
\cdot
\left| df_{\theta}(X_1) - df_{\theta}(X_2) - \frac{g_{\theta}(X_1, X_1) - g_{\theta}(X_2, X_2)}{g_{\theta}(X_2, X_2)} \cdot df_{\theta}(X_2) \right|
.
\label{eq:diff_diff_proofff6}
\end{multline}
Applying Lemma \ref{lmm:PSO_Diff_rel_Diff}, we have:
\begin{equation}
\left| df_{\theta}(X_1) - df_{\theta}(X_2) - \frac{g_{\theta}(X_1, X_1) - g_{\theta}(X_2, X_2)}{g_{\theta}(X_2, X_2)} \cdot df_{\theta}(X_2) \right|
\leq
g_{\theta}(X_1, X_1)
\cdot
\varepsilon_1(X_1, X_2)
.
\label{eq:diff_diff_proofff7}
\end{equation}

Using the reverse triangle inequality $||x| - |y|| \leq |x - y|$ on left part of the above equation, we get:
\begin{equation}
\left| 
\left|
df_{\theta}(X_1) - df_{\theta}(X_2)
\right|
- 
\left|
\frac{g_{\theta}(X_1, X_1) - g_{\theta}(X_2, X_2)}{g_{\theta}(X_2, X_2)} \cdot df_{\theta}(X_2) 
\right|
\right|
\leq
g_{\theta}(X_1, X_1)
\cdot
\varepsilon_1(X_1, X_2)
.
\label{eq:diff_diff_proofff8}
\end{equation}

Next, we check the above inequality under two possible scenarios:

\paragraph{1) $\left|
	df_{\theta}(X_1) - df_{\theta}(X_2)
	\right|
	\geq 
	\left|
	\frac{g_{\theta}(X_1, X_1) - g_{\theta}(X_2, X_2)}{g_{\theta}(X_2, X_2)} \cdot df_{\theta}(X_2) 
	\right|
	$:}
Here we have:
\begin{equation}
\left|
df_{\theta}(X_1) - df_{\theta}(X_2)
\right|
\leq
g_{\theta}(X_1, X_1)
\cdot
\varepsilon_1(X_1, X_2)
+
\left|
\frac{g_{\theta}(X_1, X_1) - g_{\theta}(X_2, X_2)}{g_{\theta}(X_2, X_2)} \cdot df_{\theta}(X_2) 
\right|
.
\label{eq:diff_diff_proofff9}
\end{equation}

\paragraph{2) $\left|
	df_{\theta}(X_1) - df_{\theta}(X_2)
	\right|
	< 
	\left|
	\frac{g_{\theta}(X_1, X_1) - g_{\theta}(X_2, X_2)}{g_{\theta}(X_2, X_2)} \cdot df_{\theta}(X_2) 
	\right|
	$:}
In such case Eq.~(\ref{eq:diff_diff_proofff9}) is trivially satisfied.

Hence, Eq.~(\ref{eq:diff_diff_proofff9}) is satisfied always and therefore:
\begin{multline}
\left|
df_{\theta}(X_1) - df_{\theta}(X_2)
\right|
\leq
g_{\theta}(X_1, X_1)
\cdot
\varepsilon_1(X_1, X_2)
+
\left|
g_{\theta}(X_1, X_1) - g_{\theta}(X_2, X_2)
\right|
\cdot
\left|
\frac{df_{\theta}(X_2)}{g_{\theta}(X_2, X_2)}
\right|
\leq\\
\leq
g_{\theta}(X_1, X_1)
\cdot
\varepsilon_1(X_1, X_2)
+
\left|
g_{\theta}(X_1, X_1) - g_{\theta}(X_2, X_2)
\right|
\cdot
\varepsilon_2
\equiv
\varepsilon_3(X_1, X_2)
,
\label{eq:diff_diff_proofff10}
\end{multline}
where we applied Lemma \ref{lmm:PSO_Diff_rel_magn}.

Further, using definitions of $\varepsilon_1$ and $\varepsilon_2$ we get:
\begin{multline}
\varepsilon_3(X_1, X_2)
=
g_{\theta}(X_1, X_1)
\cdot
	\delta \cdot 
	\Bigg[
	\frac{1}{N^{\usuff}}
	\sum_{i = 1}^{N^{\usuff}}
	\abs{M^{\usuff}\left[X^{\usuff}_{i},f_{\theta}(X^{\usuff}_{i})\right]}
	\cdot
	\epsilon\left[ X_1, X_2, X^{\usuff}_{i} \right]
	+\\
	+
	\frac{1}{N^{\dsuff}}
	\sum_{i = 1}^{N^{\dsuff}}
	\abs{M^{\dsuff}\left[X^{\dsuff}_{i},f_{\theta}(X^{\dsuff}_{i})\right]}
	\cdot
	\epsilon\left[ X_1, X_2, X^{\dsuff}_{i} \right]
	\Bigg]
+\\
+
\left|
g_{\theta}(X_1, X_1) - g_{\theta}(X_2, X_2)
\right|
\cdot
	\delta \cdot 
	\Bigg[
	\frac{1}{N^{\usuff}}
	\sum_{i = 1}^{N^{\usuff}}
	\abs{M^{\usuff}\left[X^{\usuff}_{i},f_{\theta}(X^{\usuff}_{i})\right]}
	+
	\frac{1}{N^{\dsuff}}
	\sum_{i = 1}^{N^{\dsuff}}
	\abs{M^{\dsuff}\left[X^{\dsuff}_{i},f_{\theta}(X^{\dsuff}_{i})\right]}
	\Bigg]
=\\
=
	\delta \cdot 
g_{\theta}(X_1, X_1)
\cdot
	\Bigg[
	\frac{1}{N^{\usuff}}
	\sum_{i = 1}^{N^{\usuff}}
	\abs{M^{\usuff}\left[X^{\usuff}_{i},f_{\theta}(X^{\usuff}_{i})\right]}
\cdot
\left[
\epsilon\left[ X_1, X_2, X^{\usuff}_{i} \right]
+
\frac{\left|
	g_{\theta}(X_1, X_1) - g_{\theta}(X_2, X_2)
	\right|
	}{g_{\theta}(X_1, X_1)}
\right]
	+\\
	+
	\frac{1}{N^{\dsuff}}
	\sum_{i = 1}^{N^{\dsuff}}
	\abs{M^{\dsuff}\left[X^{\dsuff}_{i},f_{\theta}(X^{\dsuff}_{i})\right]}
\cdot
\left[
\epsilon\left[ X_1, X_2, X^{\dsuff}_{i} \right]
+
\frac{\left|
	g_{\theta}(X_1, X_1) - g_{\theta}(X_2, X_2)
	\right|
	}{g_{\theta}(X_1, X_1)}
\right]
	\Bigg]
.
\label{eq:diff_diff_proofff11}
\end{multline}

Define $\nu_{\theta}(X_1, X_2, X) \triangleq 
\epsilon\left[ X_1, X_2, X \right]
+
\frac{\left|
	g_{\theta}(X_1, X_1) - g_{\theta}(X_2, X_2)
	\right|
}{g_{\theta}(X_1, X_1)}
$. Then, the combination of Eq.~(\ref{eq:diff_diff_proofff10}) and Eq.~(\ref{eq:diff_diff_proofff11}) will lead to:
\begin{multline}
\left|
df_{\theta}(X_1) - df_{\theta}(X_2)
\right|
\leq
	\delta \cdot 
g_{\theta}(X_1, X_1)
\cdot
	\Bigg[
	\frac{1}{N^{\usuff}}
	\sum_{i = 1}^{N^{\usuff}}
	\abs{M^{\usuff}\left[X^{\usuff}_{i},f_{\theta}(X^{\usuff}_{i})\right]}
	\cdot
\nu_{\theta}(X_1, X_2, X^{\usuff}_{i})
	+\\
	+
	\frac{1}{N^{\dsuff}}
	\sum_{i = 1}^{N^{\dsuff}}
	\abs{M^{\dsuff}\left[X^{\dsuff}_{i},f_{\theta}(X^{\dsuff}_{i})\right]}
	\cdot
\nu_{\theta}(X_1, X_2, X^{\dsuff}_{i})
	\Bigg]
.
\label{eq:diff_diff_proofff12}
\end{multline}

\hfill $\blacksquare$

\section{Proof of Softmax Cross-Entropy being Instance of PSO}
\label{sec:App9}

Here we will derive the cross-entropy loss combined with a $Softmax$ layer, typically used in the image classification domain, via PSO principles, showing it to be another instance of PSO. For this we define our training dataset as a set of pairs $\{ X_i, Y_i \}_{i = 1}^{N}$ where $X_i \in \RR^{n}$ is a data point of an arbitrary dimension $n$ (e.g. image) and $Y_i$ is its label - a discrete number that takes values from $\{1, \ldots, C\}$ with $C$ being the number of classes. 
Number of samples for each class is denoted by $\{N_1, \ldots, N_C\}$, with $\sum_{j = 1}^{C} N_j = N$. 
For the classification task we assume that each sample pair is i.i.d. sampled from an unknown density $\PP(X, Y) = \PP(X) \cdot \PP(Y | X)$. Our goal is to enforce the output of $Softmax$ layer to converge to the unknown conditional $\PP(Y | X)$. To this end, define a model $f_{\theta}$ that returns $C$ dimensional output $f_{\theta}(X) \in \RR^{C}$, with its $j$-th entry denoted by $f_{\theta,j}(X)$. Further, $Softmax$ transformation $h_{\theta}(X)$ is defined as:
\begin{equation}
h_{\theta,j}(X) = 
\frac{\exp f_{\theta,j}(X)}
{\sum_{k = 1}^{C} \exp f_{\theta,k}(X)}
=
\frac{\exp f_{\theta,j}(X)}
{\norm{\exp f_{\theta}(X)}_1}
,
\label{eq:SoftTransf}
\end{equation}
which yields properties $h_{\theta,j}(X) \geq 0$ and $\sum_{k} h_{\theta,k}(X) = 1$. We aim for $h_{\theta,j}(X)$ to converge to $\PP(Y = j | X)$ - the probability of $X$'s label to be $j$. Each $f_{\theta,j}(X)$ will be considered as an independent surface in PSO framework, which we will push to the equilibrium where
\begin{equation}
\frac{\exp f_{\theta,j}(X)}
{\norm{\exp f_{\theta}(X)}_1}
= \PP(Y = j | X),
\label{eq:Softmax_conv}
\end{equation}
by optimizing the corresponding loss $L_{PSO}^{j}(f_{\theta,j})$, with total minimized loss being defined as $L_{PSO}(f_{\theta}) = \sum_{j = 1}^{C} L_{PSO}^{j}(f_{\theta,j})$. That is, the described below minimization of $L_{PSO}(f_{\theta})$ will consist of solving $C$ PSO problems in parallel.

\paragraph{PSO over $f_{\theta,j}(X)$ via $L_{PSO}^{j}(f_{\theta,j})$:}

Consider a typical PSO estimation, where $\PP(X | Y = j)$ serves as \up density $\probs{\usuff}$ and $\PP(X | Y \neq j)$ - as \down density $\probs{\dsuff}$. Sample batch from $\PP(X | Y = j)$ is obtained by fetching samples with label $Y = j$; data points from $\PP(X | Y \neq j)$ will be the rest of samples. Note also that the identity $\frac{\PP(X | Y = j)}{\PP(X | Y \neq j)} = \frac{\PP(Y \neq j)}{\PP(Y = j)}
\cdot
\frac{\PP(Y = j | X)}{1 - \PP(Y = j | X)}
$ holds, due to below derivation:
\begin{multline}
\frac{\PP(X | Y \neq j)}{\PP(X | Y = j)}
=
\frac{\PP(X, Y \neq j) \cdot \PP(Y = j)}{\PP(X, Y = j) \cdot \PP(Y \neq j)}
=
\frac{\PP(Y = j)}{\PP(Y \neq j)}
\cdot
\frac{\sum_{k \neq j} \PP(X, Y = k)}{\PP(X, Y = j)}
=\\
=
\frac{\PP(Y = j)}{\PP(Y \neq j)}
\cdot
\frac{\left[ \sum_{k = 1}^{C} \PP(X, Y = k) \right] - \PP(X, Y = j)}{\PP(X, Y = j)}
=
\frac{\PP(Y = j)}{\PP(Y \neq j)}
\cdot
\left[
\frac{\PP(X)}{\PP(X, Y = j)}
- 1
\right]
=\\
=
\frac{\PP(Y = j)}{\PP(Y \neq j)}
\cdot
\frac{1 - \PP(Y = j | X)}{\PP(Y = j | X)}
.
\label{eq:BaesClassif}
\end{multline}
Considering PSO \bp, we are looking for a pair of \ms $\{ M^{\usuff}_{j}, M^{\dsuff}_{j} \}$ that for the below system:
\begin{equation}
\frac{M^{\dsuff}_{j}
	\left[
	X,
	f_{\theta}(X)
	\right]}{M^{\usuff}_{j}
	\left[
	X,
	f_{\theta}(X)
	\right]}
=
\frac{\PP(X | Y = j)}{\PP(X | Y \neq j)},
\label{eq:PSO_balance_softmax}
\end{equation}
will produce a solution at Eq.~(\ref{eq:Softmax_conv}). That is, denoting $\frac{M^{\dsuff}_{j}
	\left[
	X,
	s
	\right]}{M^{\usuff}_{j}
	\left[
	X,
	s
	\right]}$ by $R(X, s): \RR^n \times \RR^{C} \rightarrow \RR$ and using the identity in Eq.~(\ref{eq:BaesClassif}), we are looking for the transformation $R$ s.t. the solution $f_{\theta}(X)$ of:
\begin{equation}
R
	\left[
	X,
	f_{\theta}(X)
	\right]
=
\frac{\PP(Y \neq j)}{\PP(Y = j)}
\cdot
\frac{\PP(Y = j | X)}{1 - \PP(Y = j | X)}
,
\label{eq:PSO_balance_R}
\end{equation}
will satisfy Eq.~(\ref{eq:Softmax_conv}). Assuming that $R$ has a form $R
\left[
X,
f_{\theta}(X)
\right]
=
\bar{R}\left[
\frac{\exp f_{\theta,j}(X)}
{\norm{\exp f_{\theta}(X)}_1}
\right]
$, the above is equivalent to find the transformation $\bar{R}(s): \RR \rightarrow \RR$ s.t. the solution $s$ of a system:
\begin{equation}
\bar{R}(s)
=
\frac{\PP(Y \neq j)}{\PP(Y = j)}
\cdot
\frac{\PP(Y = j | X)}{1 - \PP(Y = j | X)}
\label{eq:PSO_balance_RR}
\end{equation}
is $\PP(Y = j | X)$. Thus, it can be easily identified as $\bar{R}(s) = \frac{\PP(Y \neq j)}{\PP(Y = j)}
\cdot
\frac{s}{1 - s}$. From this we conclude that $R
\left[
X,
f_{\theta}(X)
\right] = 
\frac{\PP(Y \neq j)}{\PP(Y = j)}
\cdot
\frac{\exp f_{\theta,j}(X)}
{\norm{\exp f_{\theta}(X)}_1 - \exp f_{\theta,j}(X)}$ and that \ms must satisfy:
\begin{equation}
\frac{M^{\dsuff}_{j}
	\left[
	X,
	f_{\theta}(X)
	\right]}{M^{\usuff}_{j}
	\left[
	X,
	f_{\theta}(X)
	\right]}
=
\frac{\PP(Y \neq j)}{\PP(Y = j)}
\cdot
\frac{\exp f_{\theta,j}(X)}
{\norm{\exp f_{\theta}(X)}_1 - \exp f_{\theta,j}(X)}.
\label{eq:PSO_balance_softmax2}
\end{equation}
Specifically, we will choose them to be:
\begin{equation}
M^{\usuff}_{j}
\left[
X,
f_{\theta}(X)
\right]
=
\PP(Y = j)
\cdot
\frac{\norm{\exp f_{\theta}(X)}_1 - \exp f_{\theta,j}(X)}
{\norm{\exp f_{\theta}(X)}_1}
=
\PP(Y = j)
\cdot
\frac{\sum_{k = 1, k \neq j}^{C} \exp f_{\theta,k}(X)}
{\norm{\exp f_{\theta}(X)}_1}
,
\label{eq:CEMagnFuncsMU11}
\end{equation}
\begin{equation}
M^{\dsuff}_{j}
\left[
X,
f_{\theta}(X)
\right]
=
\PP(Y \neq j)
\cdot
\frac{\exp f_{\theta,j}(X)}
{\norm{\exp f_{\theta}(X)}_1}
\label{eq:CEMagnFuncsMD11}
\end{equation}
where the denominator $\norm{\exp f_{\theta}(X)}_1$ serves as a normalization factor that enforces $\{ M^{\usuff}_{j}, M^{\dsuff}_{j} \}$ to be between 0 and 1. Such normalization is only one from many possible, yet this choice will eventually yield the popular softmax cross-entropy loss.

Using the above setting to define $L_{PSO}^{j}(f_{\theta,j})$, its gradient can be written as 
\begin{multline}
\nabla_{\theta}
L_{PSO}^{j}(f_{\theta,j})
=
-
\E_{X \sim \PP(X | Y = j)}
M^{\usuff}_{j}
\left[
X,
f_{\theta}(X)
\right]
\cdot
\nabla_{\theta} f_{\theta,j}(X)
+
\\
+
\E_{X \sim \PP(X | Y \neq j)}
M^{\dsuff}_{j}
\left[
X,
f_{\theta}(X)
\right]
\cdot
\nabla_{\theta} f_{\theta,j}(X)
.
\label{eq:PSOLoss_grad_mult_class}
\end{multline}
Gradient-based optimization via the above expression will lead to Eq.~(\ref{eq:Softmax_conv}). Also, in practice $\PP(Y = j)$ inside $M^{\usuff}_{j}$ can be approximated as $\frac{N_j}{N}$, and $\PP(Y \neq j)$ inside $M^{\dsuff}_{j}$ - as $\frac{N - N_j}{N}$.

\paragraph{PSO over multiple surfaces via $L_{PSO}(f_{\theta})$:}

 Further, combining all losses together into $L_{PSO}(f_{\theta}) = \sum_{j = 1}^{C} L_{PSO}^{j}(f_{\theta,j})$ will produce $\nabla_{\theta} L_{PSO}(f_{\theta}) = \sum_{j = 1}^{C} \nabla_{\theta} L_{PSO}^{j}(f_{\theta,j})$:
\begin{multline}
\nabla_{\theta}
L_{PSO}(f_{\theta})
=
\sum_{j = 1}^{C}
\bigg[
-
\E_{X \sim \PP(X | Y = j)}
M^{\usuff}_{j}
\left[
X,
f_{\theta}(X)
\right]
\cdot
\nabla_{\theta} f_{\theta,j}(X)
+
\\
+
\E_{X \sim \PP(X | Y \neq j)}
M^{\dsuff}_{j}
\left[
X,
f_{\theta}(X)
\right]
\cdot
\nabla_{\theta} f_{\theta,j}(X)
\bigg]
.
\label{eq:PSOLoss_grad_mult_class_all}
\end{multline}
The above expression is also the gradient of softmax cross-entropy, which can be shown as follows. First, note that the second term can be rewritten as:
\begin{multline}
\E_{X \sim \PP(X | Y \neq j)}
M^{\dsuff}_{j}
\left[
X,
f_{\theta}(X)
\right]
\cdot
\nabla_{\theta} f_{\theta,j}(X)
=\\
=
\int
\PP(X, Y \neq j)
\cdot
\frac{\exp f_{\theta,j}(X)}
{\norm{\exp f_{\theta}(X)}_1}
\cdot
\nabla_{\theta} f_{\theta,j}(X)
dX
=\\
=
\int
\left[
\sum_{k = 1, k \neq j}^{C} \PP(X, Y = k)
\right]
\cdot
\frac{\exp f_{\theta,j}(X)}
{\norm{\exp f_{\theta}(X)}_1}
\cdot
\nabla_{\theta} f_{\theta,j}(X)
dX
=\\
=
\sum_{k = 1, k \neq j}^{C}
\PP(Y = k)
\cdot
\E_{X \sim \PP(X | Y = k)}
\frac{\exp f_{\theta,j}(X)}
{\norm{\exp f_{\theta}(X)}_1}
\cdot
\nabla_{\theta} f_{\theta,j}(X)
.
\label{eq:PSOLoss_grad_mult_class_all2}
\end{multline}
Then, $\nabla_{\theta}
L_{PSO}(f_{\theta})
$ is equal to:
\begin{multline}
\nabla_{\theta}
L_{PSO}(f_{\theta})
=
\sum_{j = 1}^{C}
\bigg[
-
\PP(Y = j)
\cdot
\E_{X \sim \PP(X | Y = j)}
\frac{\sum_{k = 1, k \neq j}^{C} \exp f_{\theta,k}(X)}
{\norm{\exp f_{\theta}(X)}_1}
\cdot
\nabla_{\theta} f_{\theta,j}(X)
+
\\
+
\sum_{k = 1, k \neq j}^{C}
\PP(Y = k)
\cdot
\E_{X \sim \PP(X | Y = k)}
\frac{\exp f_{\theta,j}(X)}
{\norm{\exp f_{\theta}(X)}_1}
\cdot
\nabla_{\theta} f_{\theta,j}(X)
\bigg]
=\\
=
-
\sum_{j = 1}^{C}
\PP(Y = j)
\cdot
\E_{X \sim \PP(X | Y = j)}
\frac{\sum_{k = 1, k \neq j}^{C} \exp f_{\theta,k}(X)}
{\norm{\exp f_{\theta}(X)}_1}
\cdot
\nabla_{\theta} f_{\theta,j}(X)
+
\\
+
\sum_{j = 1}^{C}
\PP(Y = j)
\cdot
\E_{X \sim \PP(X | Y = j)}
\sum_{k = 1, k \neq j}^{C}
\frac{\exp f_{\theta,k}(X)}
{\norm{\exp f_{\theta}(X)}_1}
\cdot
\nabla_{\theta} f_{\theta,k}(X)
,
\label{eq:PSOLoss_grad_mult_class_all3333}
\end{multline}
where the second equality can be verified by examining coefficients of each $\nabla_{\theta} f_{\theta,j}(X)$ before and after "=". Further:
\begin{multline}
\nabla_{\theta}
L_{PSO}(f_{\theta})
=
-
\sum_{j = 1}^{C}
\PP(Y = j)
\cdot
\E_{X \sim \PP(X | Y = j)}
\bigg[
\frac{\norm{\exp f_{\theta}(X)}_1 - \exp f_{\theta,j}(X)}
{\norm{\exp f_{\theta}(X)}_1}
\cdot
\nabla_{\theta} f_{\theta,j}(X)
-\\
-
\sum_{k = 1, k \neq j}^{C}
\frac{\exp f_{\theta,k}(X)}
{\norm{\exp f_{\theta}(X)}_1}
\cdot
\nabla_{\theta} f_{\theta,k}(X)
\bigg]
,
\label{eq:PSOLoss_grad_mult_class_all4444}
\end{multline}
with the expression in brackets being derivative of $\log \frac{\exp f_{\theta,j}(X)}
{\norm{\exp f_{\theta}(X)}_1}
$ w.r.t. $\theta$.

Concluding from above, $L_{PSO}(f_{\theta})$ with the above gradient can be written as:
\begin{multline}
L_{PSO}(f_{\theta})
=
-
\sum_{j = 1}^{C}
\PP(Y = j)
\cdot
\E_{X \sim \PP(X | Y = j)}
\log \frac{\exp f_{\theta,j}(X)}
{\norm{\exp f_{\theta}(X)}_1}
=\\
=
-
\int
\sum_{j = 1}^{C}
\left[
\PP(X, Y = j)
\cdot
\log \frac{\exp f_{\theta,j}(X)}
{\norm{\exp f_{\theta}(X)}_1}
\right]
dX
=\\
=
-
\E_{X, Y \sim \PP(X, Y)}
\log \frac{\exp f_{\theta,Y}(X)}
{\norm{\exp f_{\theta}(X)}_1}
.
\label{eq:PSOLoss_grad_mult_class_all5555}
\end{multline}
with its empirical version being:
\begin{equation}
L_{PSO}(f_{\theta})
\approx
-
\frac{1}{N}
\sum_{i = 1}^{N}
\log \frac{\exp f_{\theta,Y_i}(X_i)}
{\norm{\exp f_{\theta}(X_i)}_1}
.
\label{eq:CESoftmaxLoss_PSO_emp}
\end{equation}

The above loss is known in Machine Learning community as softmax cross-entropy loss.
Therefore, we can conclude that PSO over multiple surfaces $\{ f_{\theta,j}(X) \}_{j = 1}^{C}$ with \ms in Eqs.~(\ref{eq:CEMagnFuncsMU11})-(\ref{eq:CEMagnFuncsMD11}) corresponds to cross-entropy when $\PP(X | Y = j)$ and $\PP(X | Y \neq j)$ serve as \up and \down densities respectively.
Yet, according to the PSO principles the \ms in Eqs.~(\ref{eq:CEMagnFuncsMU11})-(\ref{eq:CEMagnFuncsMD11}) are not the only choice for such convergence. In fact, we can change the norm within the denominator of $M^{\usuff}(\cdot)$ and $M^{\dsuff}(\cdot)$ to any $L$-p norm, since the denominator term is eventually canceled out and since its actual role is to bound outputs of \mfs. Similarly to what we observed in our experiments about the PSO-LDE (see Sections \ref{sec:DeepLogPDF} and \ref{sec:Exper}), different norms (the $\alpha$ value in context of PSO-LDE) can have smoother dynamics and produce a smaller approximation error.

\hfill $\blacksquare$

\section{Differential approximation}
\label{sec:DiffApr}

\begin{figure}[tb]
	\centering
	
	\newcommand{\width}[0] {0.4}
	\newcommand{\height}[0] {0.15}
	
	\begin{tabular}{cccc}
		
		\subfloat[\label{fig:DifferntlRes-a}]{\includegraphics[height=\height\textheight,width=\width\textwidth]{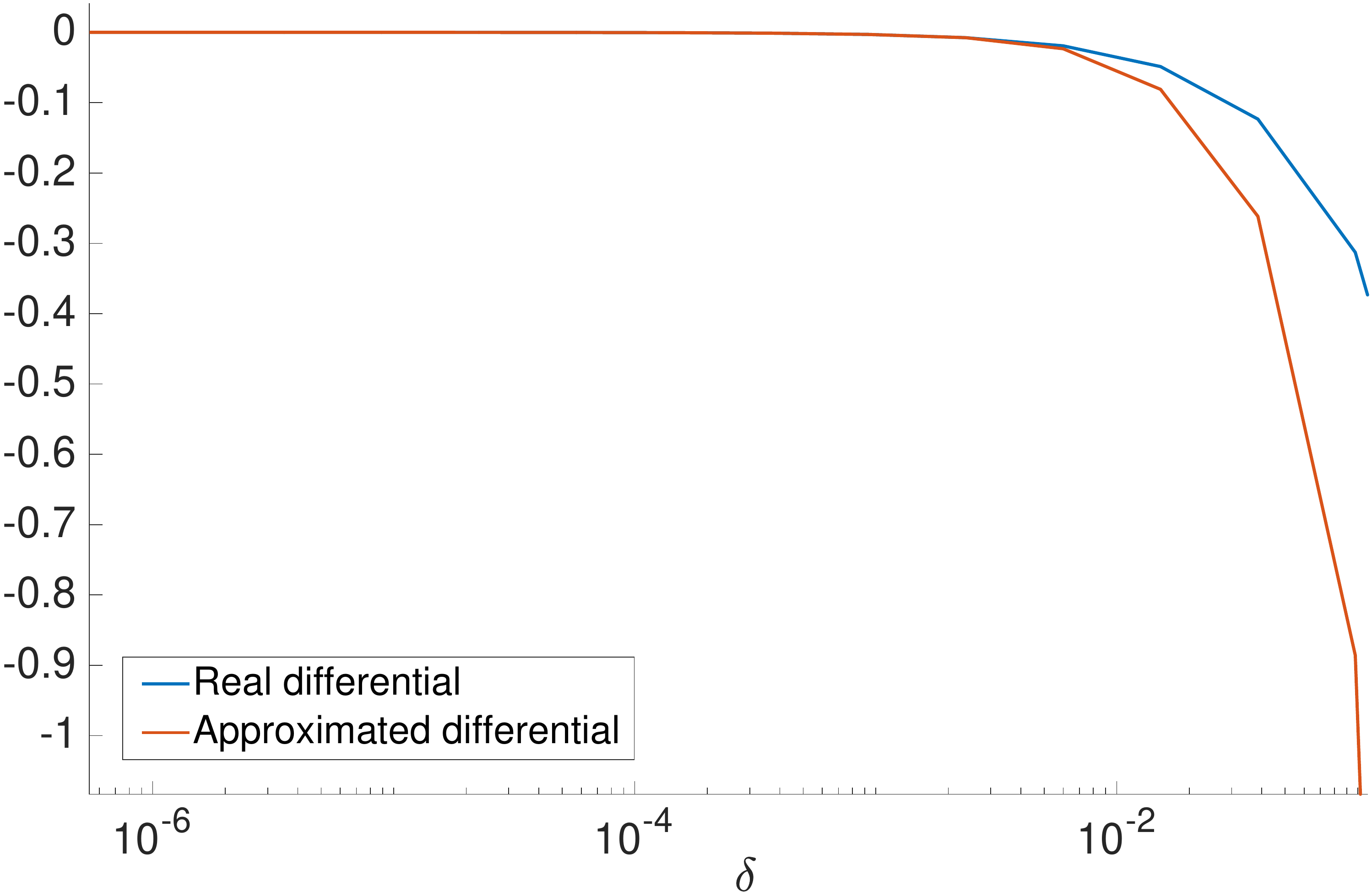}}
		&
		
		\subfloat[\label{fig:DifferntlRes-b}]{\includegraphics[height=\height\textheight,width=\width\textwidth]{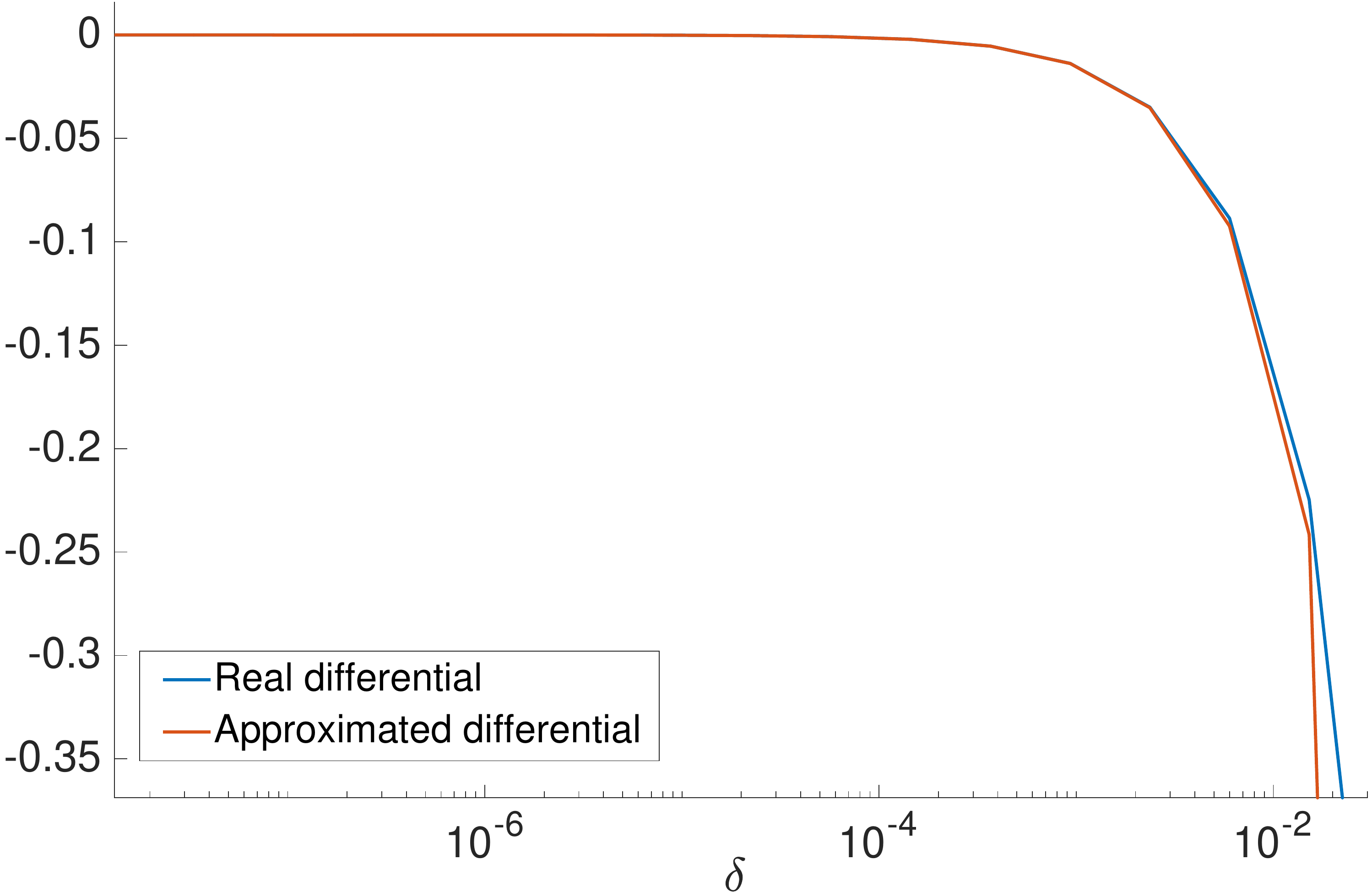}}
		\\
		\subfloat[\label{fig:DifferntlRes-c}]{\includegraphics[height=\height\textheight,width=\width\textwidth]{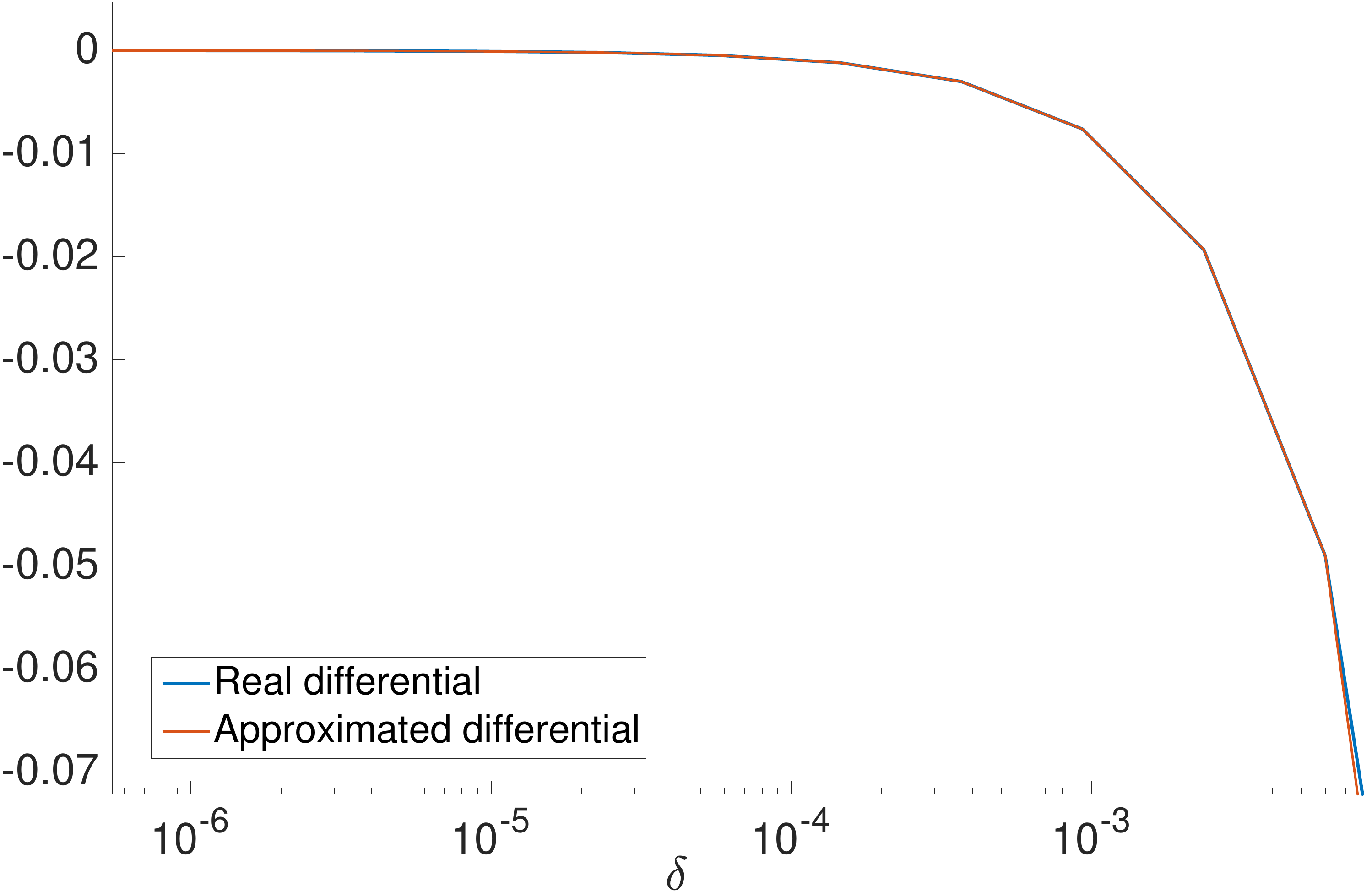}}
		&
		
		\subfloat[\label{fig:DifferntlRes-d}]{\includegraphics[height=\height\textheight,width=\width\textwidth]{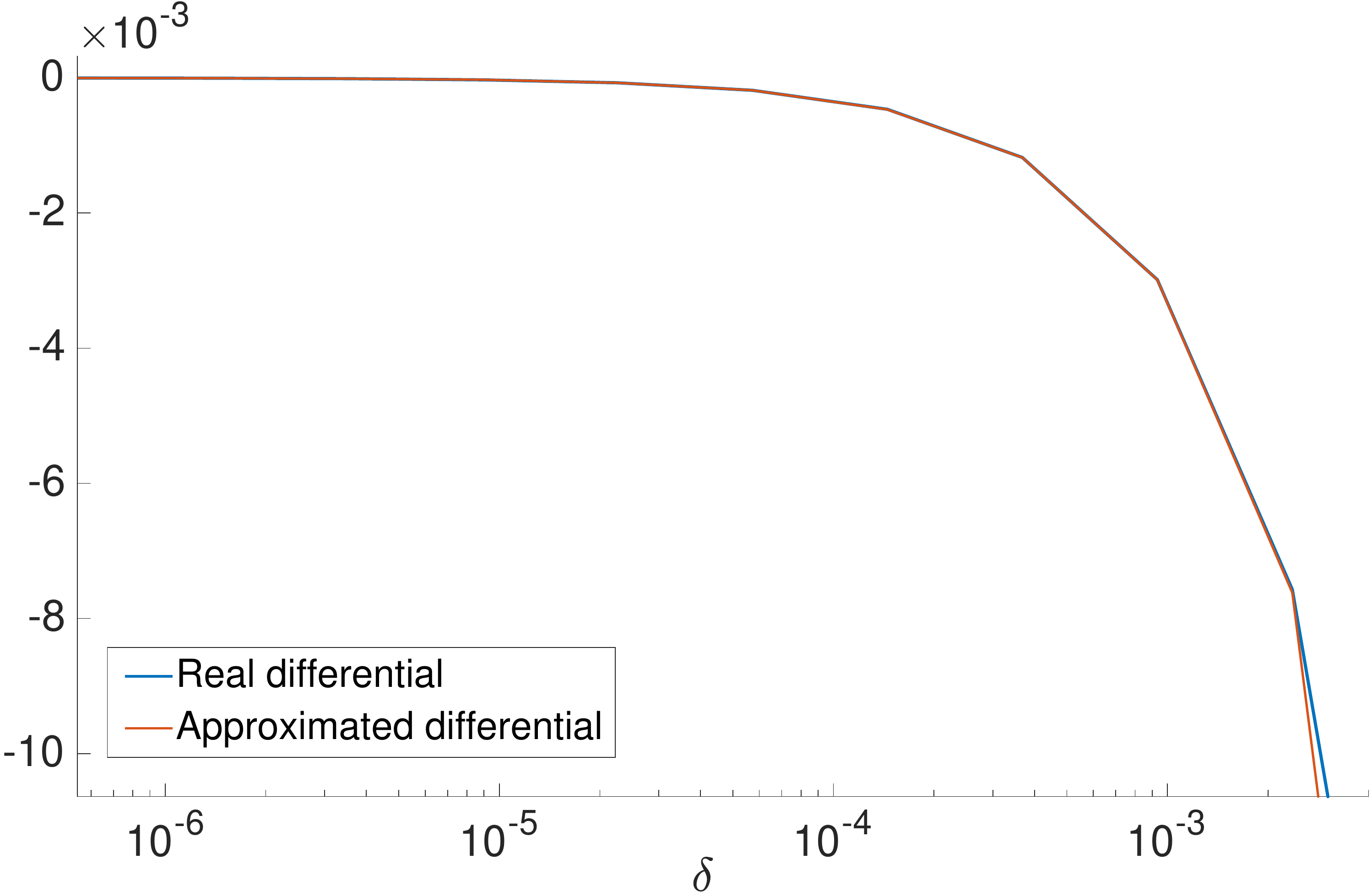}}
		\\
		\subfloat[\label{fig:DifferntlRes-e}]{\includegraphics[height=\height\textheight,width=\width\textwidth]{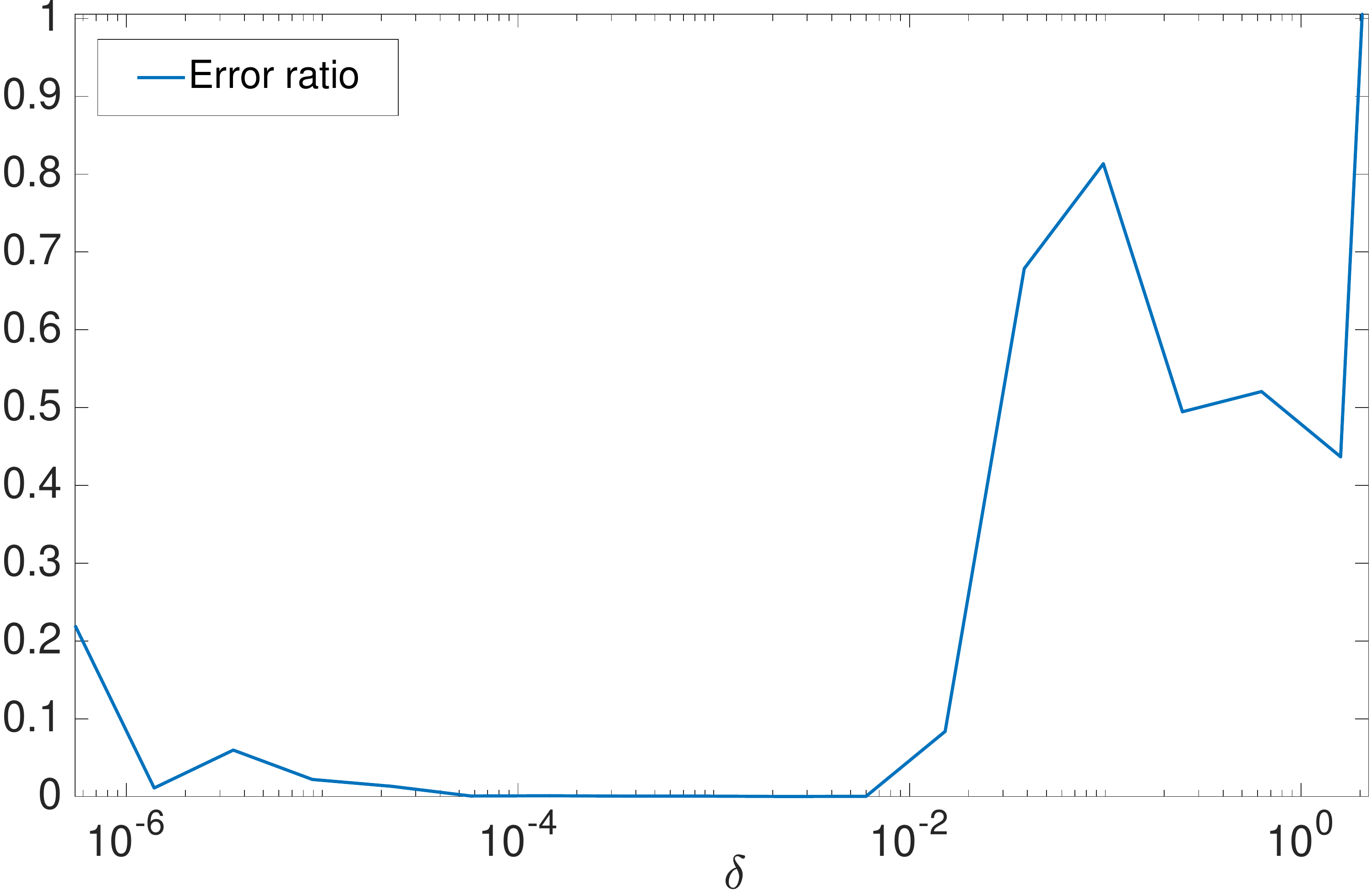}}
		&
		
		\subfloat[\label{fig:DifferntlRes-f}]{\includegraphics[height=\height\textheight,width=\width\textwidth]{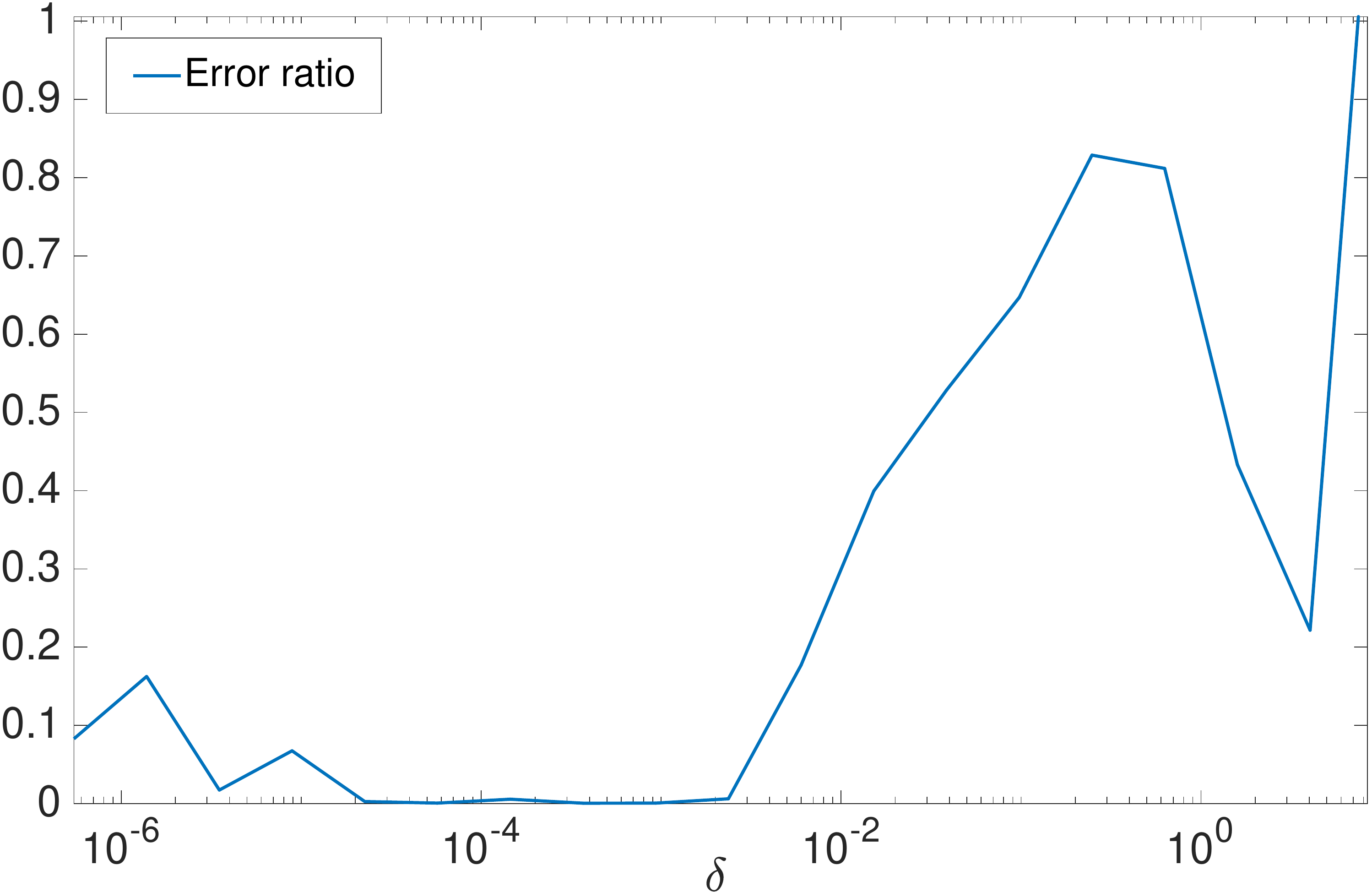}}
		
	\end{tabular}
	
	\protect
	\caption[Real and approximated \dfs at specified points.]{Real and approximated \dfs for the training point $X_{train}$ (a)-(c)-(e) and the testing point $X_{test}$ (b)-(d)-(f). (a)-(b) Real \df (blue line) vs approximated \df (red line), as a function of the learning rate $\delta$;
		(c)-(d) Zoom-in of (a)-(b);
		(e)-(f) Error ratio, defined in Eq.~(\ref{eq:RatioDef}).
	}
	\label{fig:DifferntlRes}
\end{figure}

In this section we will empirically justify our approximation in Eq.~(\ref{eq:PSODffrntl}), where we assumed that the surface \df, caused by GD update of weights $\theta$, can be approximated via its first-order Taylor expansion.

For this purpose we performed a single iteration of GD optimization and measured the real and the estimated \dfs at train and test points as following. First, points $D = \{ X_i \}_{i=1}^{2000}$ were sampled from $\probs{\usuff}$ density, which is \emph{Columns} distribution from Section \ref{sec:ColumnsEst}, where $X_i \in \RR^{n}$ with $n = 20$. Further, we performed a single GD iteration of the following loss:
\begin{equation}
L(\theta, D) = 
-
\frac{1}{1000}
\sum_{i = 1}^{1000}
f_{\theta}(X_i)
+
\frac{1}{1000}
\sum_{i = 1001}^{2000}
f_{\theta}(X_i),
\label{eq:ExprLoss}
\end{equation}
where $f_{\theta}(X)$ is a FC network depicted in Figure \ref{fig:NNLayers-a}, with overall 4 layers of size 1024 each. Next,
we measured the surface height $f_{\theta}(X)$ at two points $X_{train}$ and $X_{test}$ before and after GD update, where $X_{train} \in D$ and $X_{test} \notin D$. We performed this procedure for a range of learning rate values and thus obtained the real \df $df_{\theta}(X)$ at $X_{train}$ and $X_{test}$ as a function of $\delta$ (see Figure \ref{fig:DifferntlRes}).
Likewise, we calculated the \emph{approximated} \df $\bar{df}(X)$ at $X_{train}$ and $X_{test}$ using first-order Taylor expansion as:
\begin{multline}
\bar{df}(X) = 
\frac{\delta}{1000}
\cdot
\nabla_{\theta} 
f_{\theta}(X)^T
\cdot
\Big[
\sum_{i = 1}^{1000}
\nabla_{\theta} 
f_{\theta}(X_i)
-
\sum_{i = 1001}^{2000}
\nabla_{\theta} 
f_{\theta}(X_i)
\Big]
=\\
=
\frac{\delta}{1000}
\cdot
\Big[
\sum_{i = 1}^{1000}
g_{\theta}(X, X_i)
-
\sum_{i = 1001}^{2000}
g_{\theta}(X, X_i)
\Big],
\label{eq:ExprLossDiff}
\end{multline}
where $\theta$ is taken at time before GD update.

In Figures \ref{fig:DifferntlRes-a} and \ref{fig:DifferntlRes-b} we can see the calculated \dfs for both $X_{train}$ and $X_{test}$, respectively. In both figures the real \df (blue line) and the estimated \df (red line) become very close to each other for $\delta < 0.01$. Note that for the most part of a typical optimization process $\delta$ satisfies this criteria.

Further, in Figures \ref{fig:DifferntlRes-e} (for $X_{train}$) and \ref{fig:DifferntlRes-f} (for $X_{test}$) we can see the ratio:
\begin{equation}
ratio = 
\left|df_{\theta}(X) - \bar{df}(X)\right|
/
\left|df_{\theta}(X)\right|
,
\label{eq:RatioDef}
\end{equation}
which expresses an error $\left|df_{\theta}(X) - \bar{df}(X)\right|$ as the percentage from the real \df. As can be seen, for both $X_{train}$ and $X_{test}$ the error ratio is very low for $\delta < 0.01$ (under $10 \%$ for most part). Additionally, the error ratio slightly increases for a very small $\delta$ (around $10^{-6}$). We speculate this to be a precision artifact, since the calculation of an approximated \df in Eq.~(\ref{eq:ExprLossDiff}) was done in a single-precision floating-point format (float32) and involved the multiplication by a very small number $\delta$.

\begin{figure}[tb]
	\centering
	
	\newcommand{\width}[0] {0.4}
	\newcommand{\height}[0] {0.15}
	
	\begin{tabular}{cccc}
		
		\subfloat[\label{fig:DifferntlRes2-a}]{\includegraphics[height=\height\textheight,width=\width\textwidth]{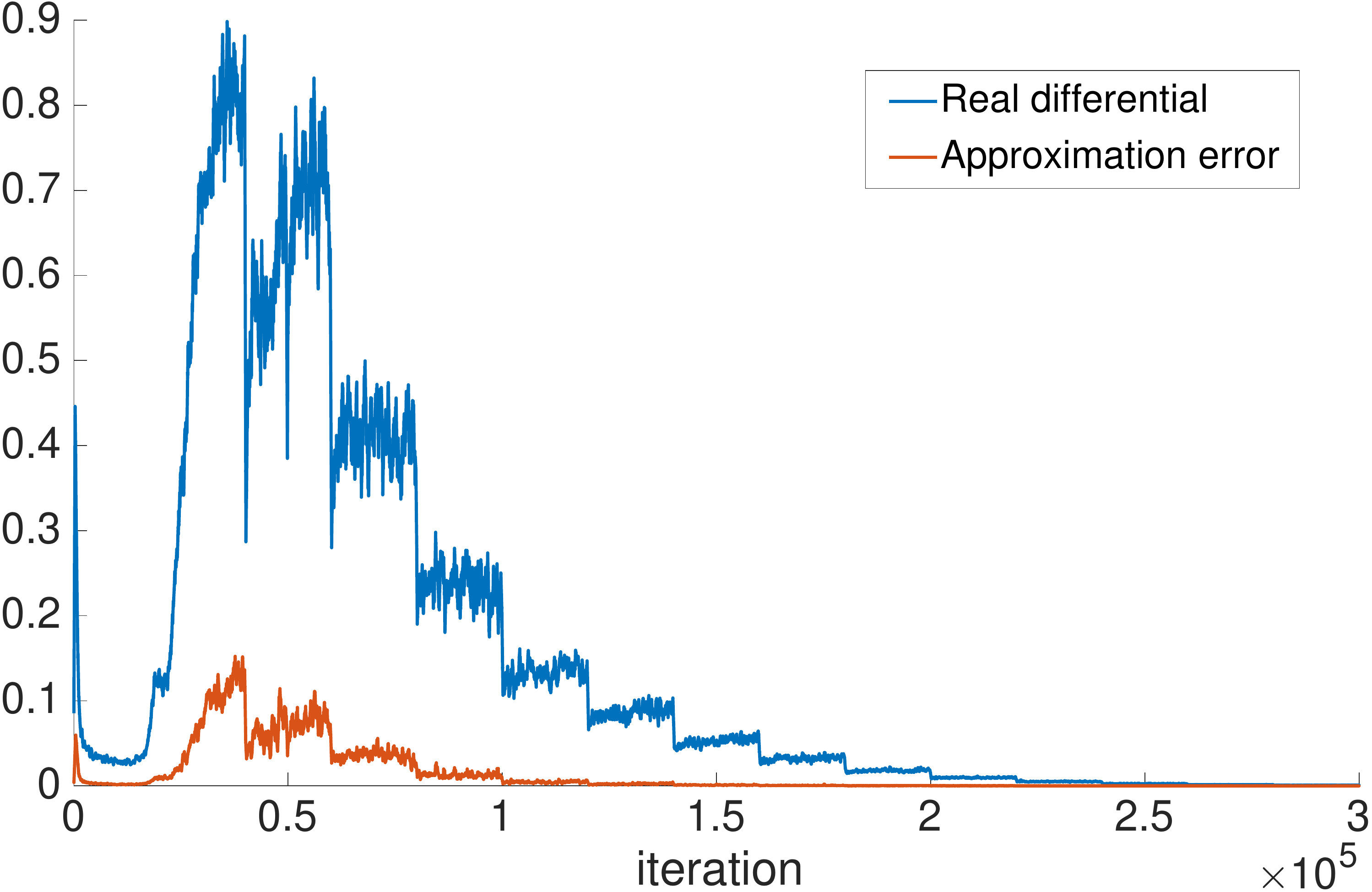}}
		&
		
		\subfloat[\label{fig:DifferntlRes2-b}]{\includegraphics[height=\height\textheight,width=\width\textwidth]{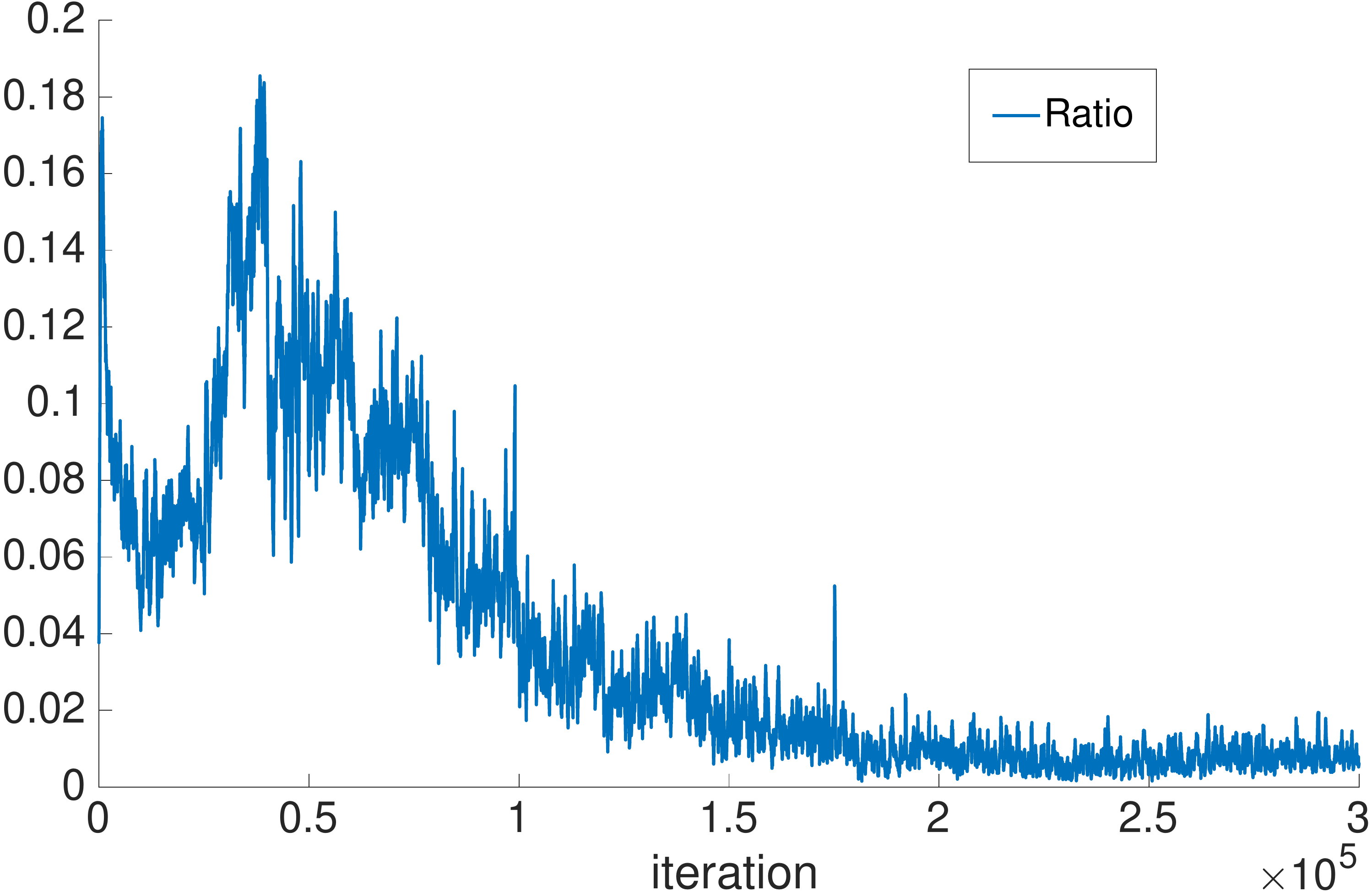}}
		
		\\
		
		\subfloat[\label{fig:DifferntlRes2-c}]{\includegraphics[height=\height\textheight,width=\width\textwidth]{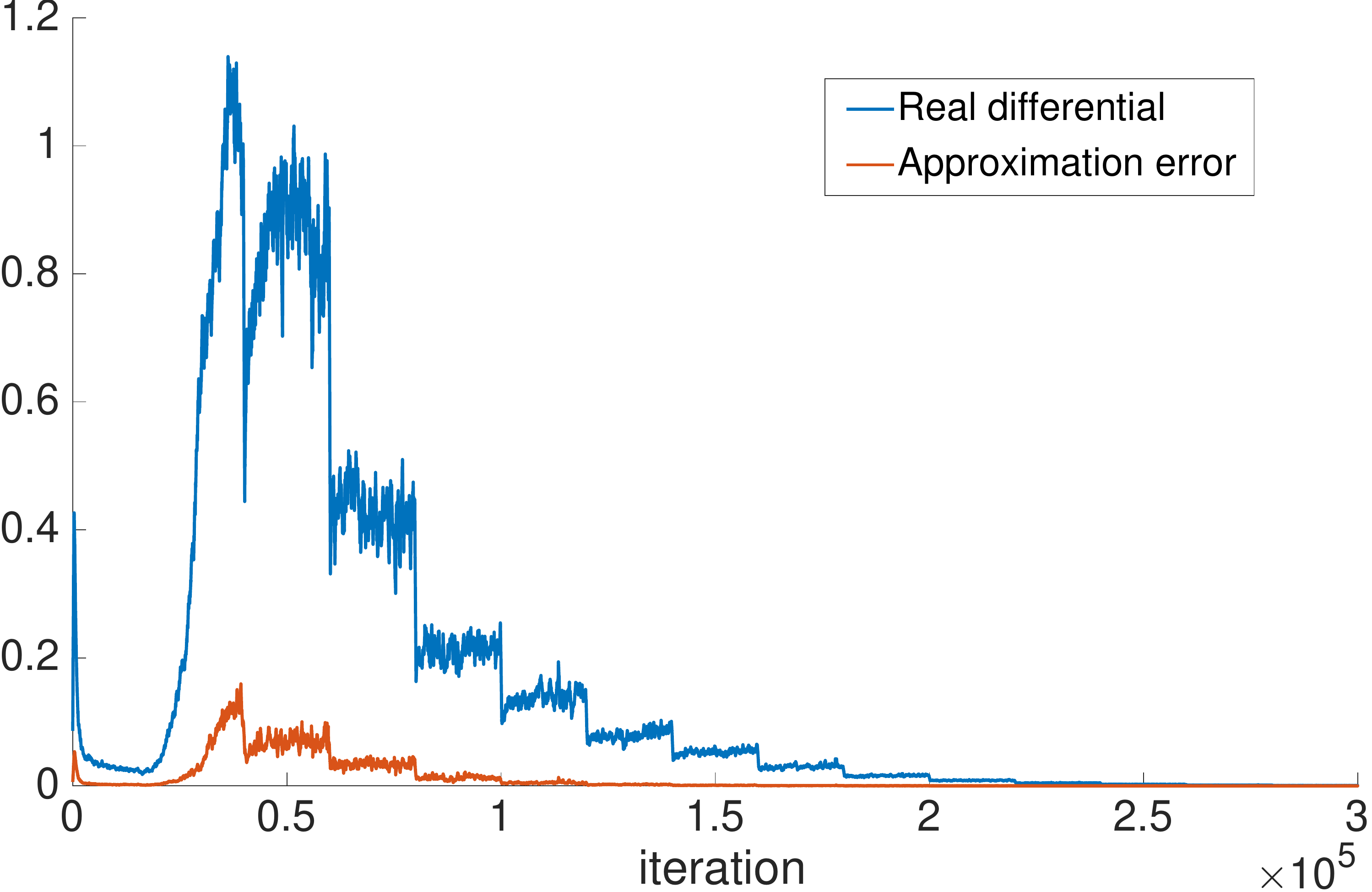}}
		&
		
		\subfloat[\label{fig:DifferntlRes2-d}]{\includegraphics[height=\height\textheight,width=\width\textwidth]{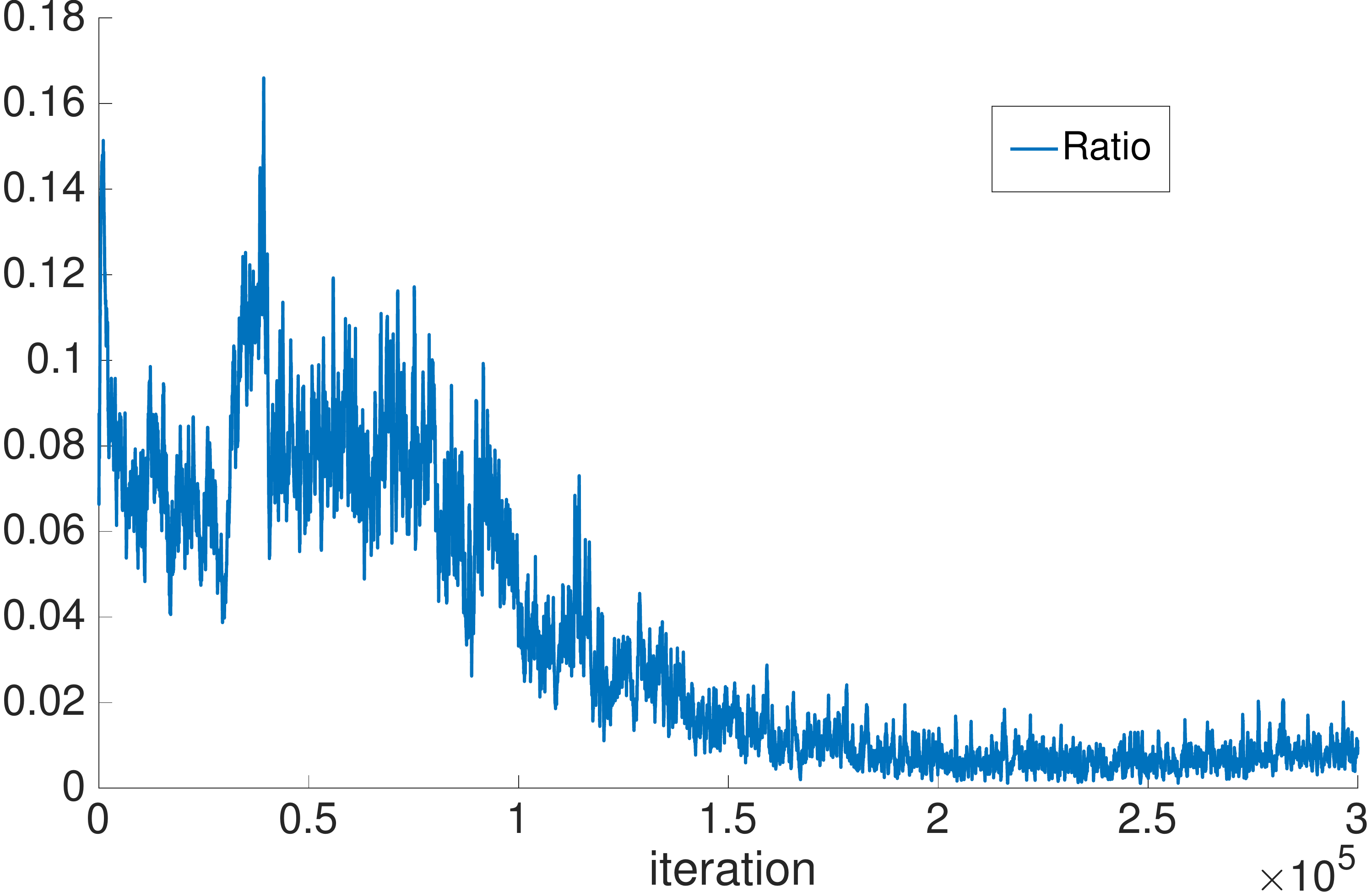}}

	\end{tabular}
	
	\protect
	\caption[The real \df and the approximation error during the training of PSO-LDE for two different testing points.]{The real \df and the approximation error during the training of PSO-LDE for two different testing points $X_1$ and $X_2$.
		(a)-(b) Results for $X_1$.
		(c)-(d) Results for $X_2$.
		(a)-(c) Blue line is an absolute value of the real \df at a specific point for each iteration time, smoothed via a moving mean with the window size 300; red line is the absolute value of a difference between the real \df and the approximated one, smoothed via a moving mean of the same window size.
		(b)-(d) Ratio between two lines in (a)-(c), can be seen as a moving mean version of Eq.~(\ref{eq:RatioDef}) - an error as the percentage of the real \df.
	}
	\label{fig:DifferntlRes2}
\end{figure}

Additionally, we calculated the real and approximated \dfs along the entire GD optimization process of PSO-LDE, where the same NN architecture was used as in the first experiment, and where the pdf inference was applied to \emph{Columns} distribution from Section \ref{sec:ColumnsEst}. Particularly, we trained a NN for 300000 iterations, while during each iteration we computed the real and the approximated \dfs for a specific test point $X$. We performed such simulation twice, for two different points and plotted their \dfs in Figure \ref{fig:DifferntlRes2}. In the left column, the blue line is the absolute value of the real \df, $\left|df_{\theta}(X)\right|$, and the red line is error $\left|df_{\theta}(X) - \bar{df}(X)\right|$, both smoothed via moving mean with window size 300. The right column shows the ratio between smoothed $\left|df_{\theta}(X) - \bar{df}(X)\right|$ and smoothed $\left|df_{\theta}(X)\right|$, $\left|df_{\theta}(X) - \bar{df}(X)\right|/\left|df_{\theta}(X)\right|$, which can be seen as the error percentage from the real \df. As shown in Figures \ref{fig:DifferntlRes2-b} and \ref{fig:DifferntlRes2-d},  this error percentage is less than $15\%$ and for most part of the training is even lower. This trend is the same for both verified points, suggesting that the real \df indeed can be approximated very closely by the first-order Taylor expansion. 
In overall, above we showed that most of the surface change can be explained by the \emph{gradient similarity} $g_{\theta}(X, X')$ in Eq.~(\ref{eq:PSODffrntl}).

\section{Weights Uncorrelation and Gradient Similarity Space}
\label{sec:UncorRes}

In this appendix we empirically demonstrate the relation between \emph{gradient similarity} $g_{\theta}(X, X') = \nabla_{\theta} 
f_{\theta}(X)^T \cdot \nabla_{\theta} 
f_{\theta}(X')$ and Euclidean distance $d(X, X')$, and show how this relation changes along the optimization over NNs. Particularly, we observe empirically that during first several thousand iterations of a typical optimization the trend is achieved where high values of $g_{\theta}(X, X')$ are correlated with small values of $d(X, X')$ - the model kernel of NN obtains a local-support structure. Further, this trend is preserved during the rest part of the optimization. This behavior can be seen as an another motivation for the kernel bandwidth analysis made in Section \ref{sec:ExprrKernelBnd} - the shape of $r_{\theta}(X, X') = \frac{g_{\theta}(X, X')}{g_{\theta}(X, X)}$ has some implicit particular bandwidth.

\begin{figure}[!tbp]
	\centering
	
	\begin{tabular}{cc}

		\subfloat[\label{fig:ColsRes6.1-a}]{\includegraphics[width=0.45\textwidth]{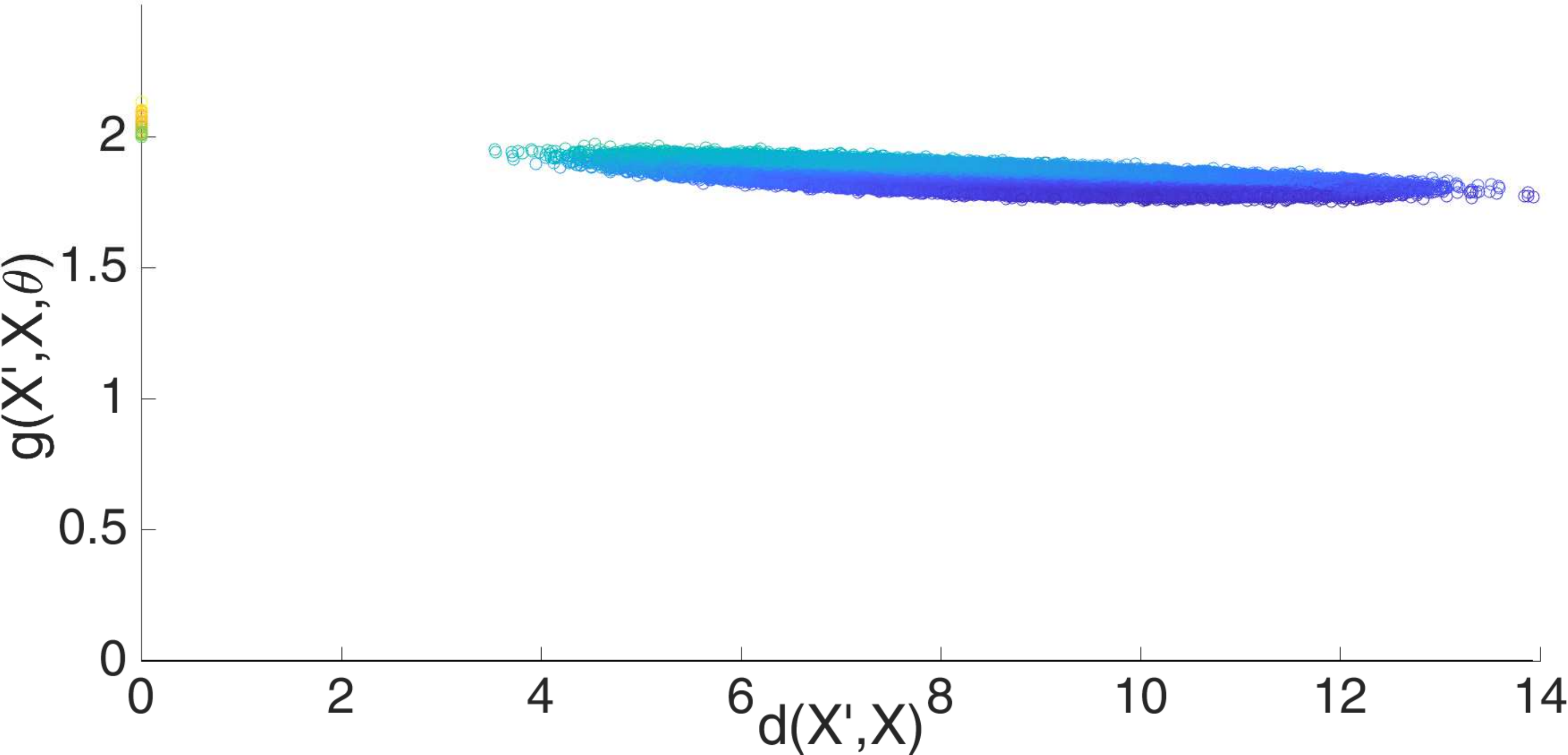}}
		&
		
		\subfloat[\label{fig:ColsRes6.1-b}]{\includegraphics[width=0.45\textwidth]{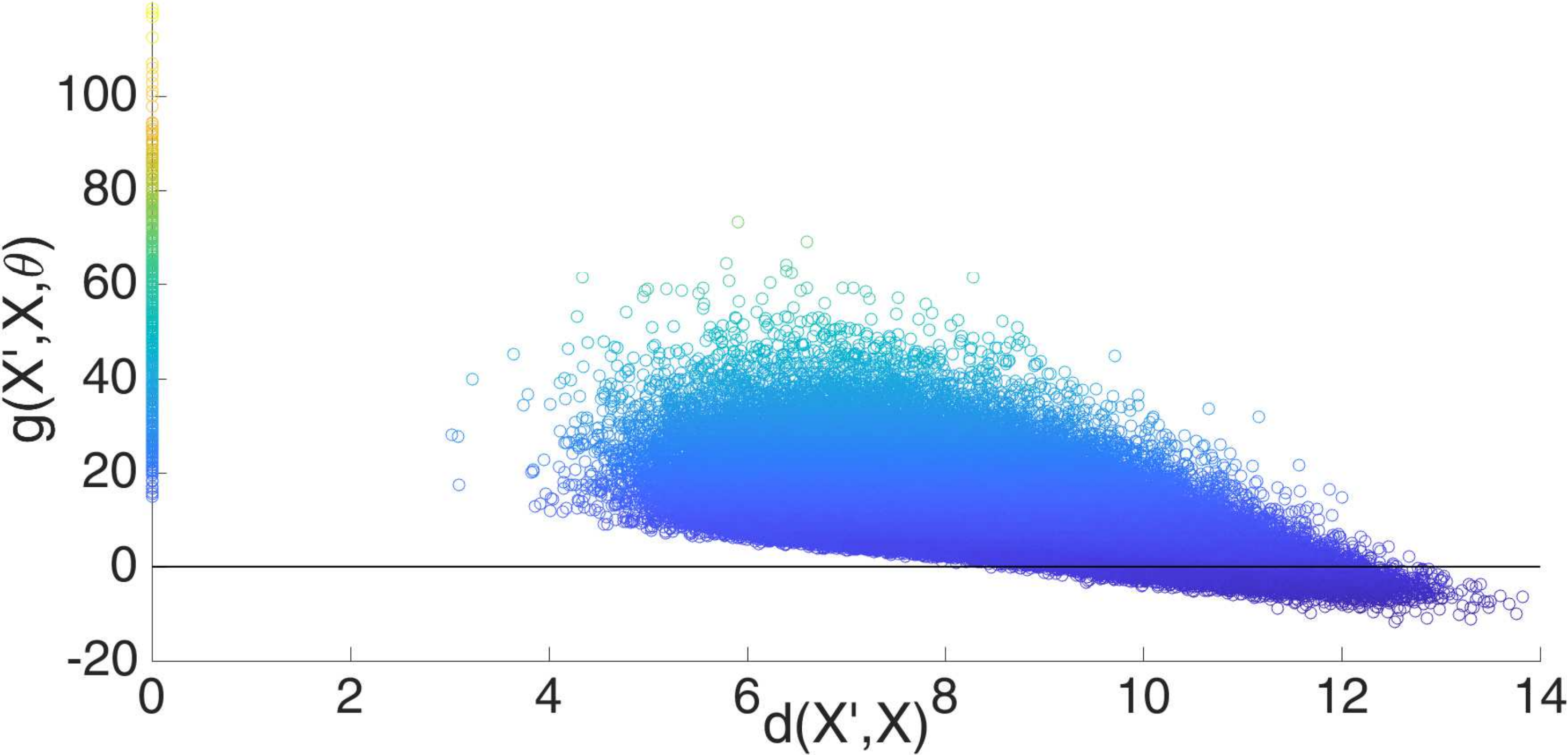}}
		\\
		\subfloat[\label{fig:ColsRes6.1-c}]{\includegraphics[width=0.45\textwidth]{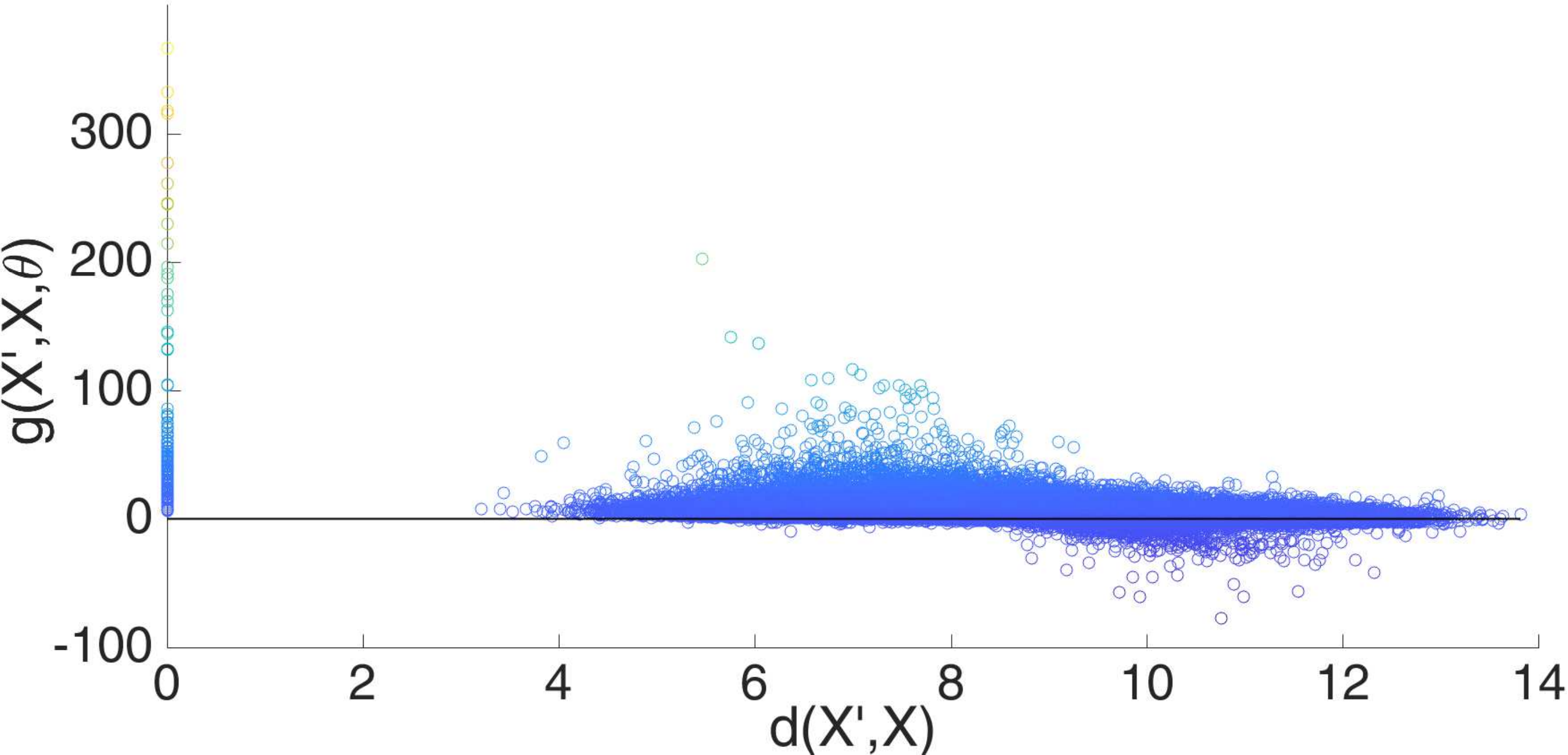}}
		&
		\subfloat[\label{fig:ColsRes6.1-d}]{\includegraphics[width=0.45\textwidth]{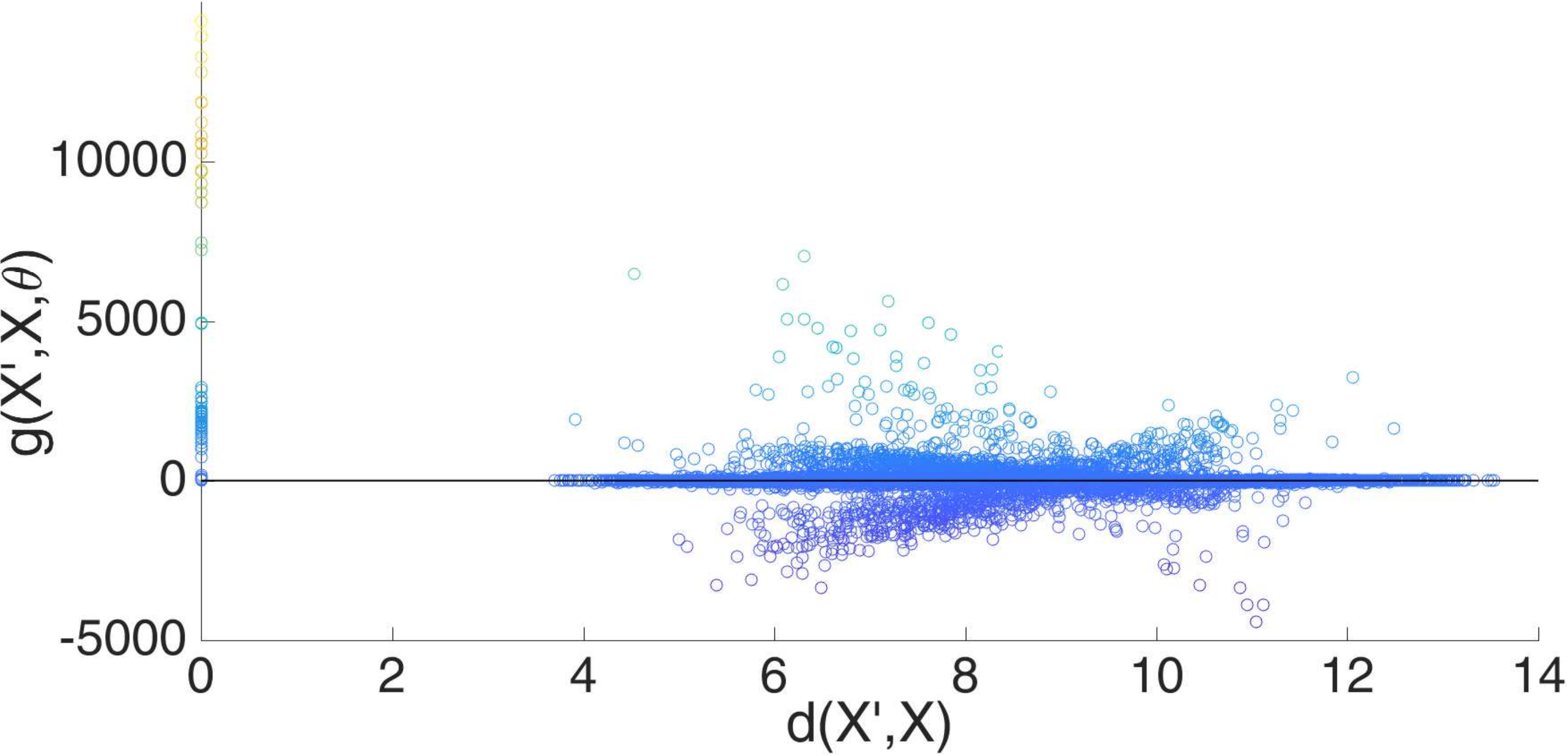}}
		
		\\
		\subfloat[\label{fig:ColsRes6.1-e}]{\includegraphics[width=0.45\textwidth]{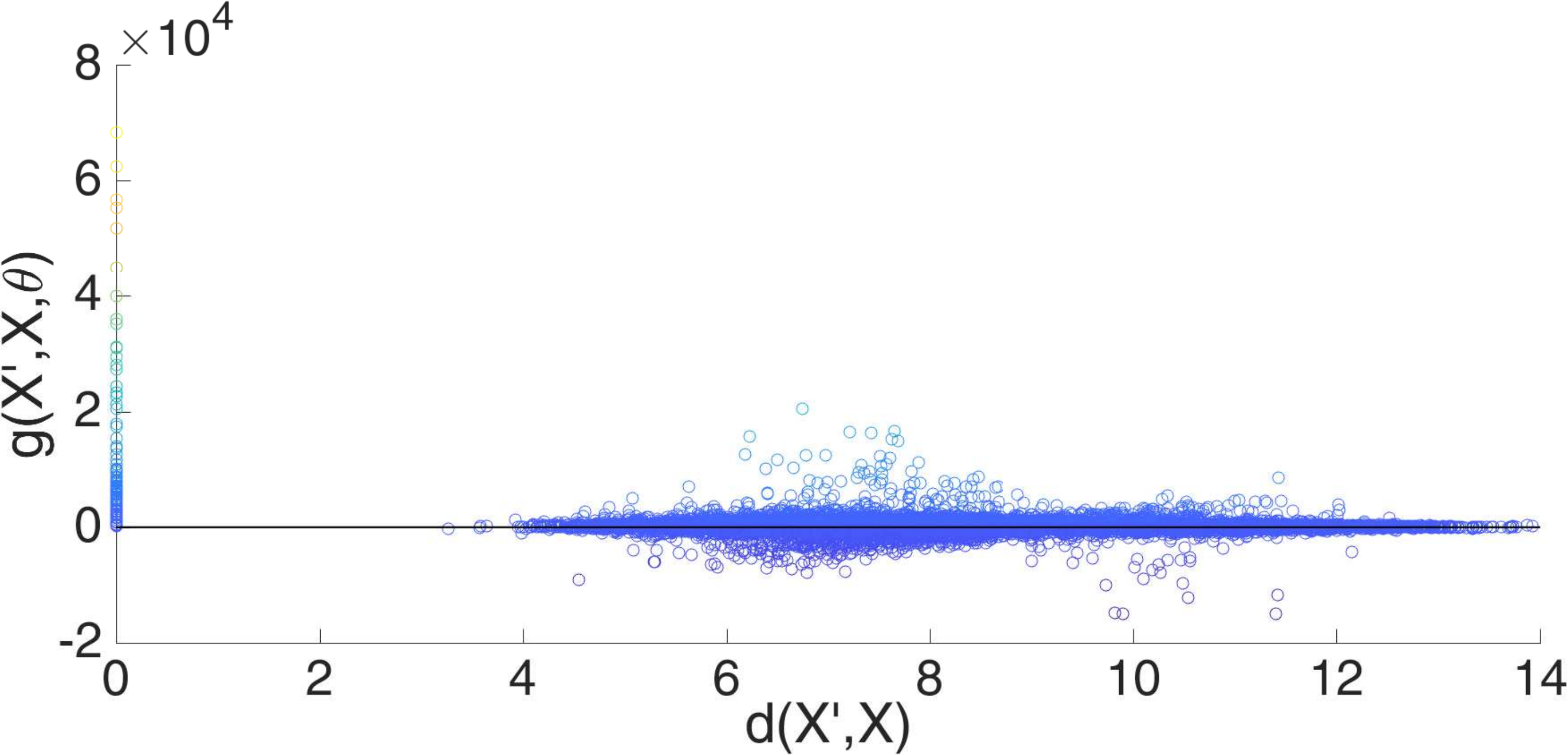}}
		&
		\subfloat[\label{fig:ColsRes6.1-f}]{\includegraphics[width=0.45\textwidth]{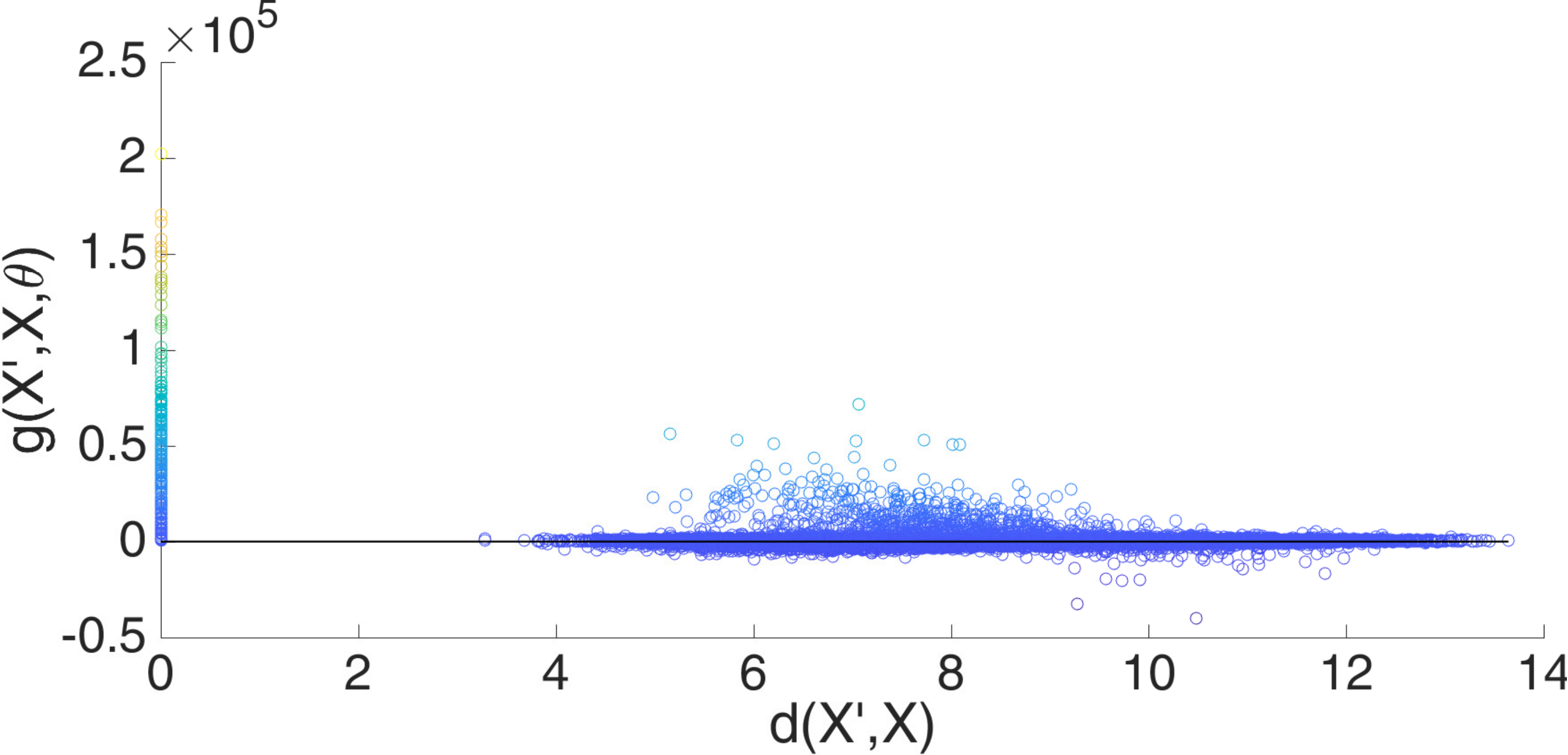}}
		
		\\
		\subfloat[\label{fig:ColsRes6.1-g}]{\includegraphics[width=0.45\textwidth]{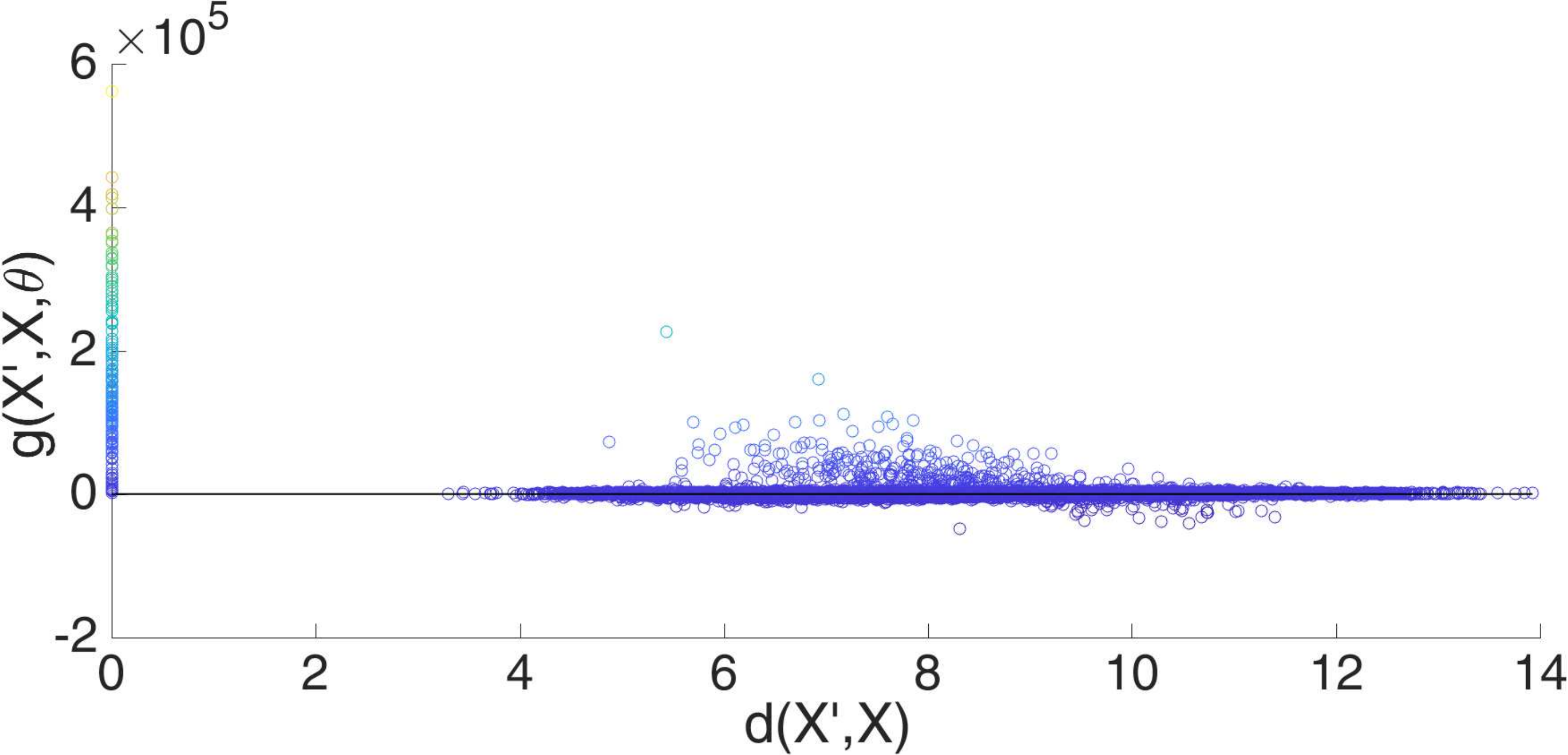}}
		&
		\subfloat[\label{fig:ColsRes6.1-h}]{\includegraphics[width=0.45\textwidth]{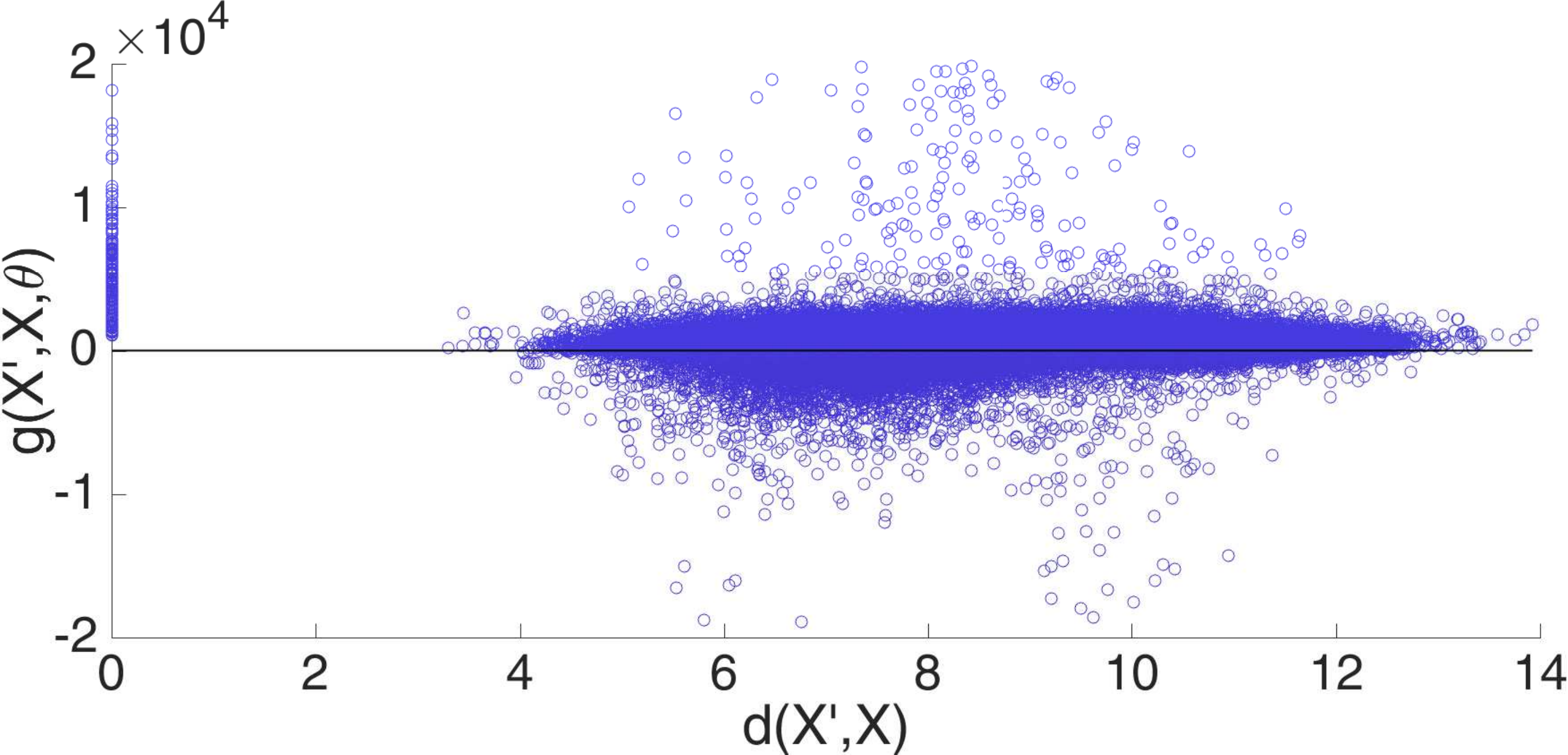}}

	\end{tabular}
	
	\protect
	\caption[Relation between values of \emph{gradient similarity} $g_{\theta}(X', X)$ and values of Euclidean distance $d(X', X)$, along the optimization time $t$.]{
		Relation between values of \emph{gradient similarity} $g_{\theta}(X', X)$ and values of Euclidean distance $d(X', X)$, along the optimization time $t$ (iteration index).
		PSO-LDE with $\alpha = \frac{1}{4}$ is applied, where NN architecture is block-diagonal with 6 layers, number of blocks $N_B = 50$ and block size $S_B = 64$ (see Section \ref{sec:BDLayers}).
		Each plot is constructed similarly to Figure \ref{fig:NNCovariance-b}, see the main text for more details.
		Outputs from both  $g_{\theta}(X', X)$ and $d(X', X)$ are demonstrated at different times; (a) $t = 0$, (b) $t = 100$, (c) $t = 3200$, (d) $t = 3400$, (e) $t = 6000$, (f) $t = 100000$ and (g) $t = 200000$. (h) Zoom-in of (g). As can be seen, self similarities $g_{\theta}(X, X)$, depicted at $d(X', X) = 0$, are high and increase during the optimization. The \emph{side} similarities $g_{\theta}(X', X)$ for $X' \neq X$, depicted at $d(X', X) > 0$, are centered around zero at $t = 200000$ and are significantly lower than self similarities.
	}
	\label{fig:ColsRes6.1}
\end{figure}

\begin{figure}[tb]
	\centering
	
	\newcommand{\width}[0] {0.235}
	\newcommand{\height}[0] {0.15}
	\setlength{\tabcolsep}{0pt}
	\renewcommand{\arraystretch}{0}
	
	\begin{tabular}{cccc}

		\subfloat[\label{fig:ColsRes6.12-a}]{
			\includegraphics[height=\height\textheight,width=\width\textwidth]{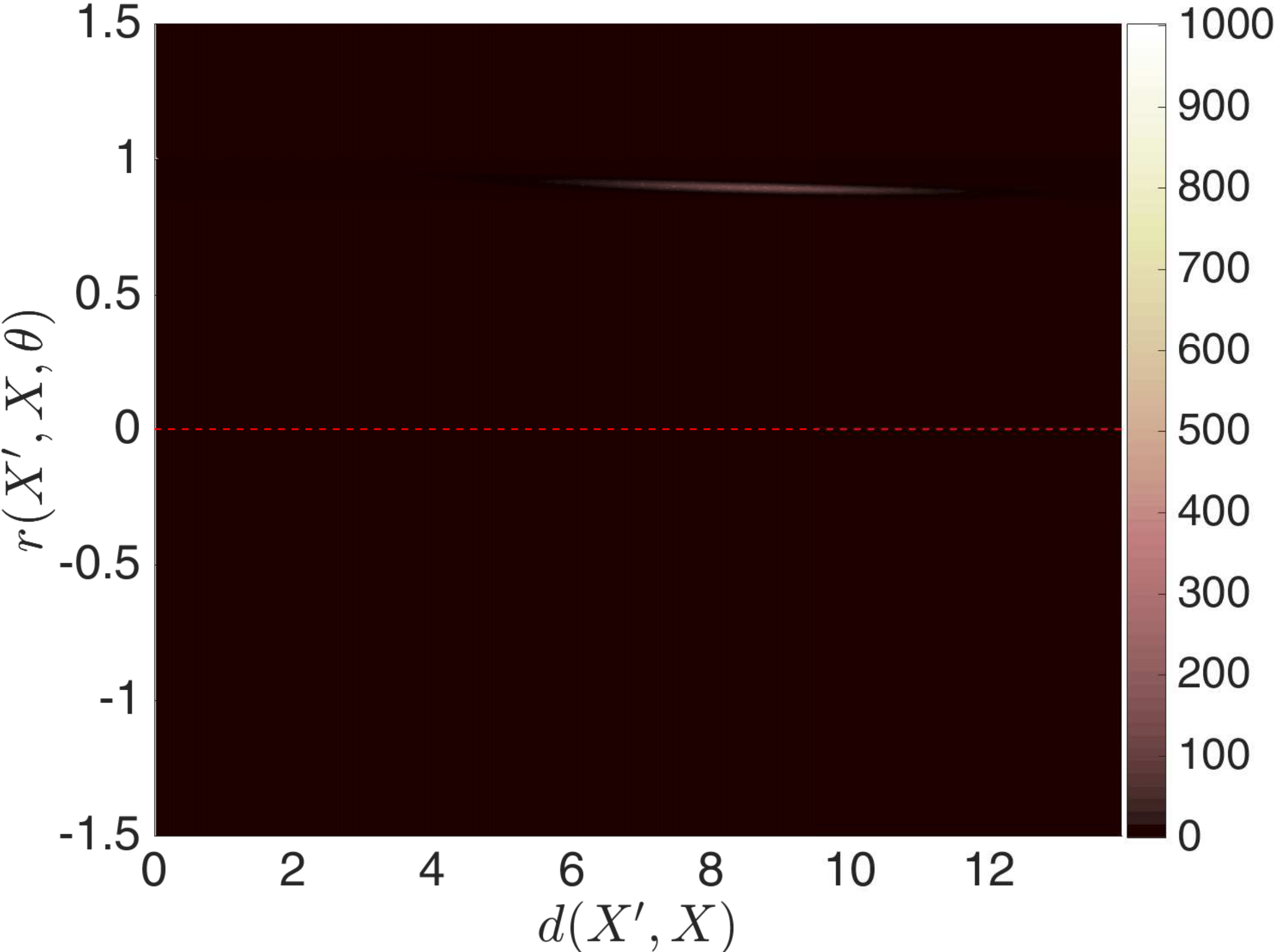}
		}
		
		&
		
		\subfloat[\label{fig:ColsRes6.12-b}]{
			\includegraphics[height=\height\textheight,width=\width\textwidth]{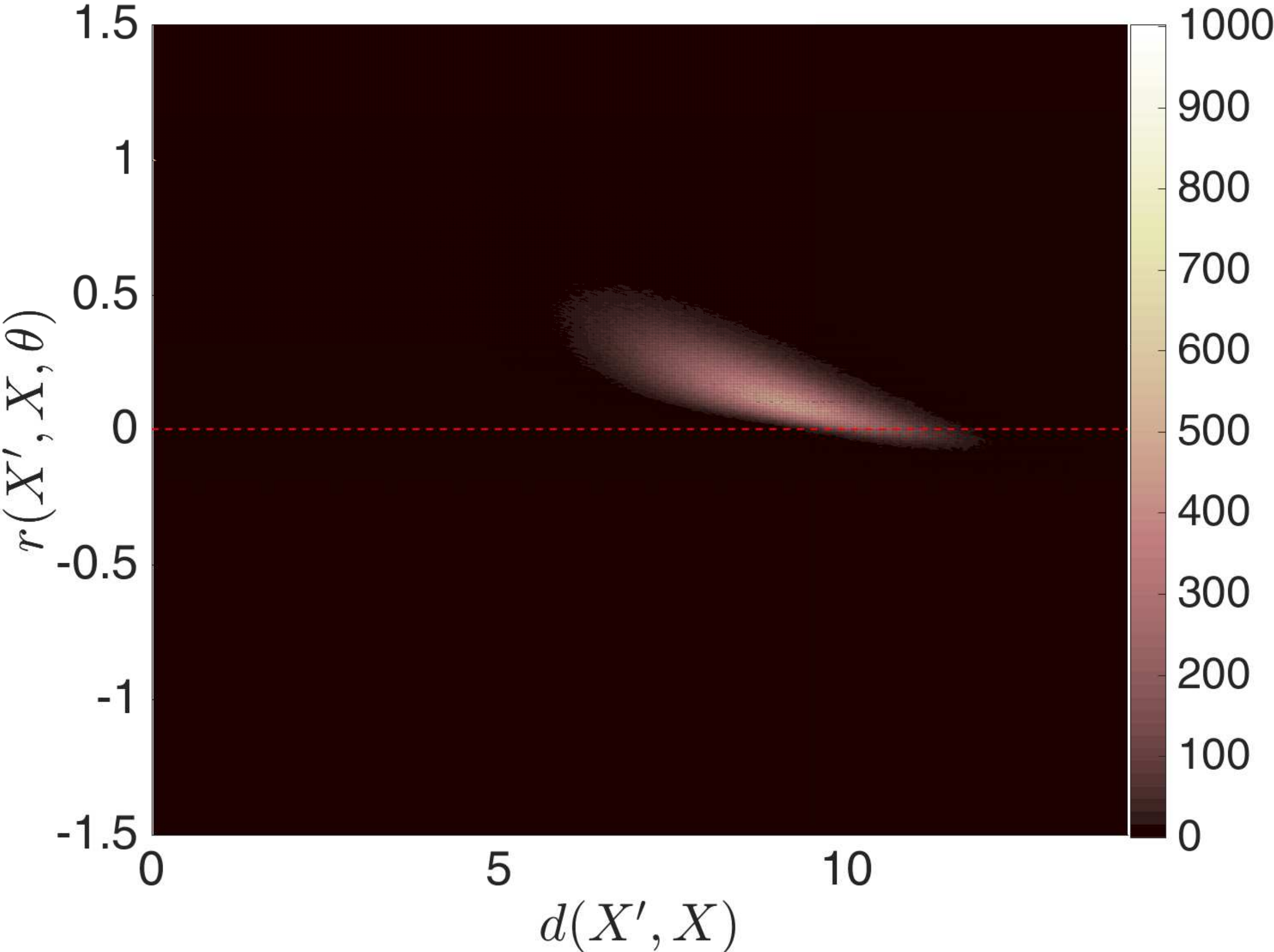}
		}
		
		&

		\subfloat[\label{fig:ColsRes6.12-c}]{
			\includegraphics[height=\height\textheight,width=\width\textwidth]{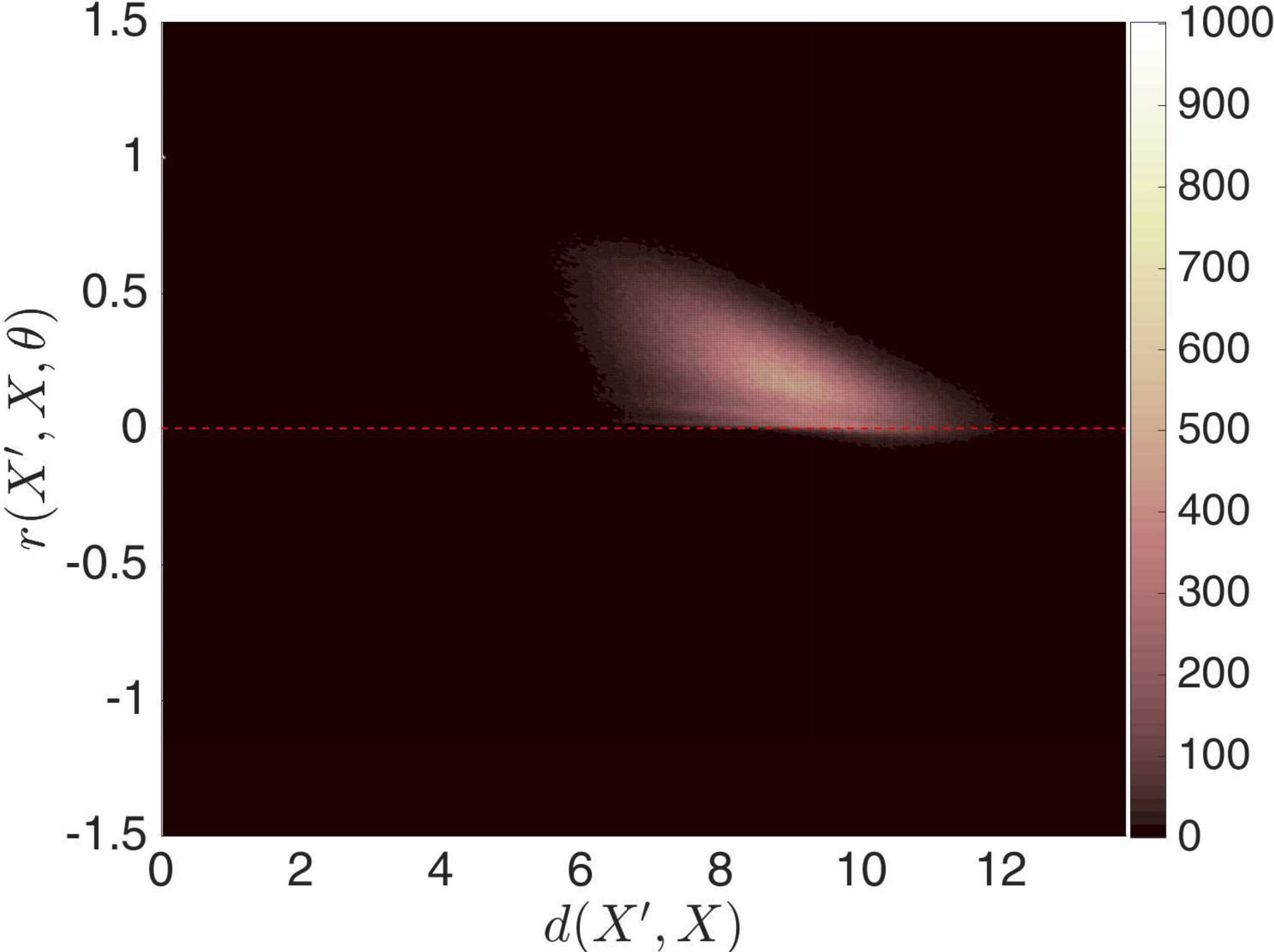}
		}
		
		&
		
		\subfloat[\label{fig:ColsRes6.12-d}]{
			\includegraphics[height=\height\textheight,width=\width\textwidth]{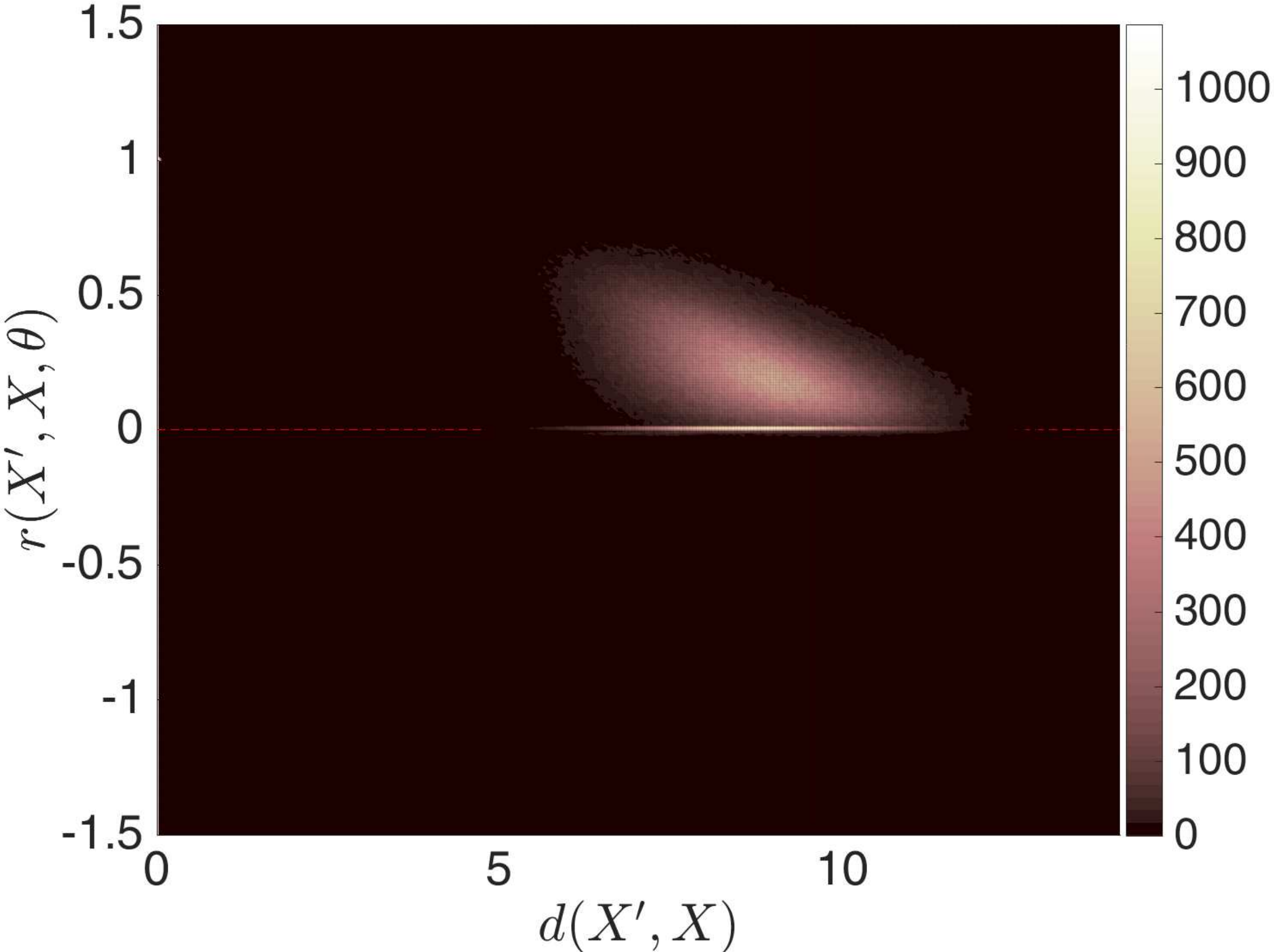}
		}
		
		\\

		\subfloat[\label{fig:ColsRes6.12-e}]{
			\includegraphics[height=\height\textheight,width=\width\textwidth]{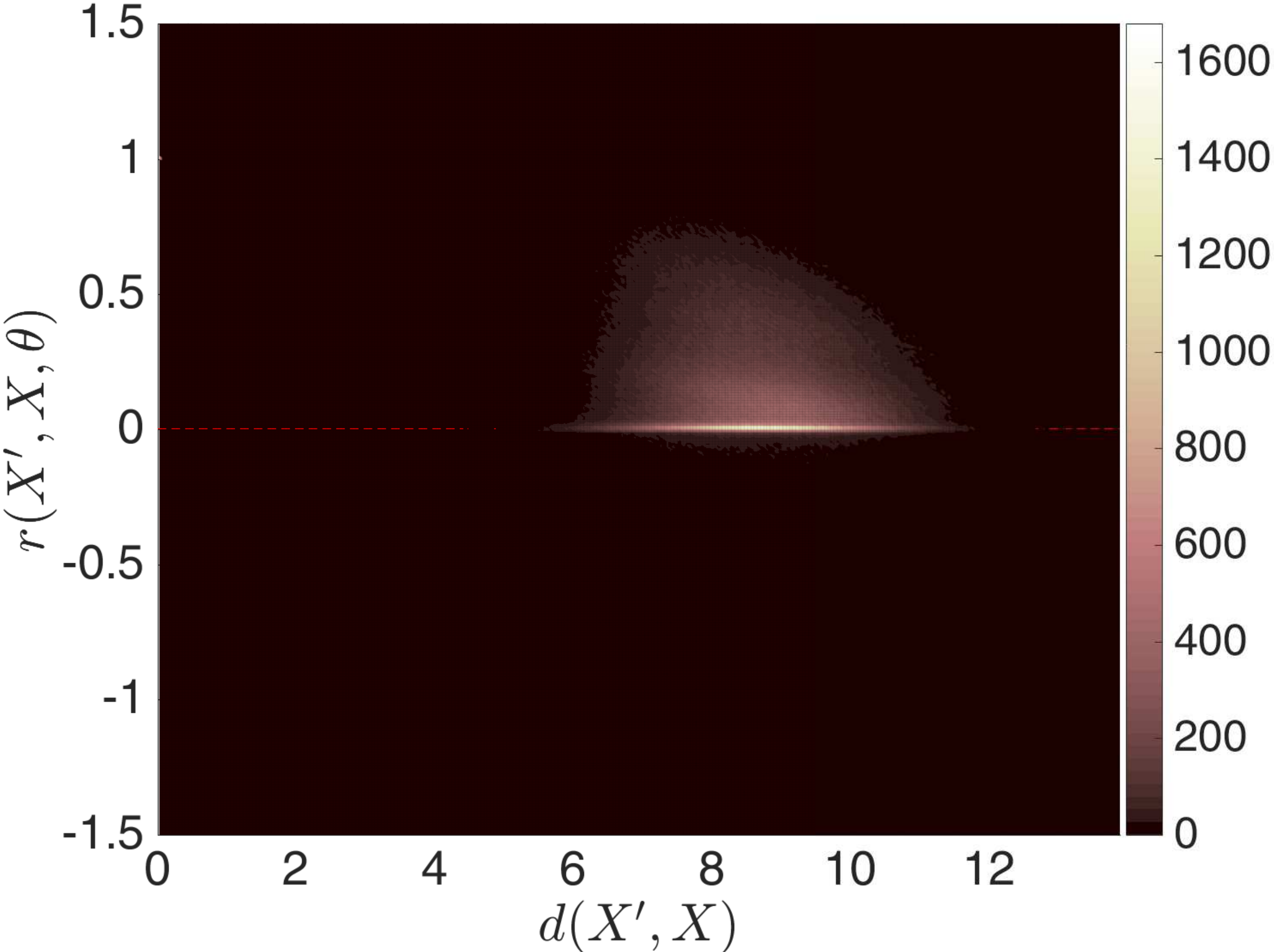}
		}
		
		&
		
		\subfloat[\label{fig:ColsRes6.12-f}]{
			\includegraphics[height=\height\textheight,width=\width\textwidth]{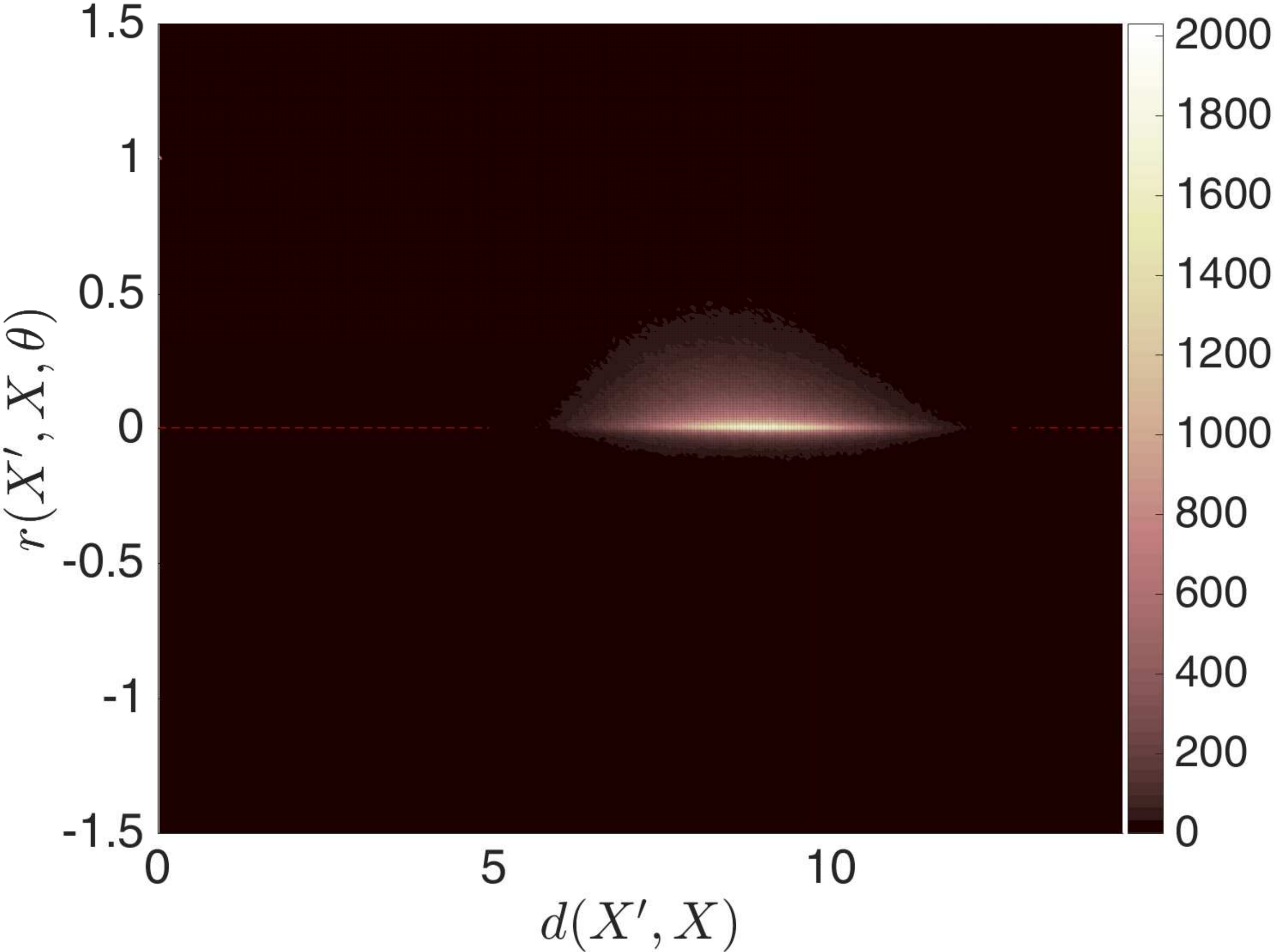}
		}
		
		&
		
		\subfloat[\label{fig:ColsRes6.12-g}]{
			\includegraphics[height=\height\textheight,width=\width\textwidth]{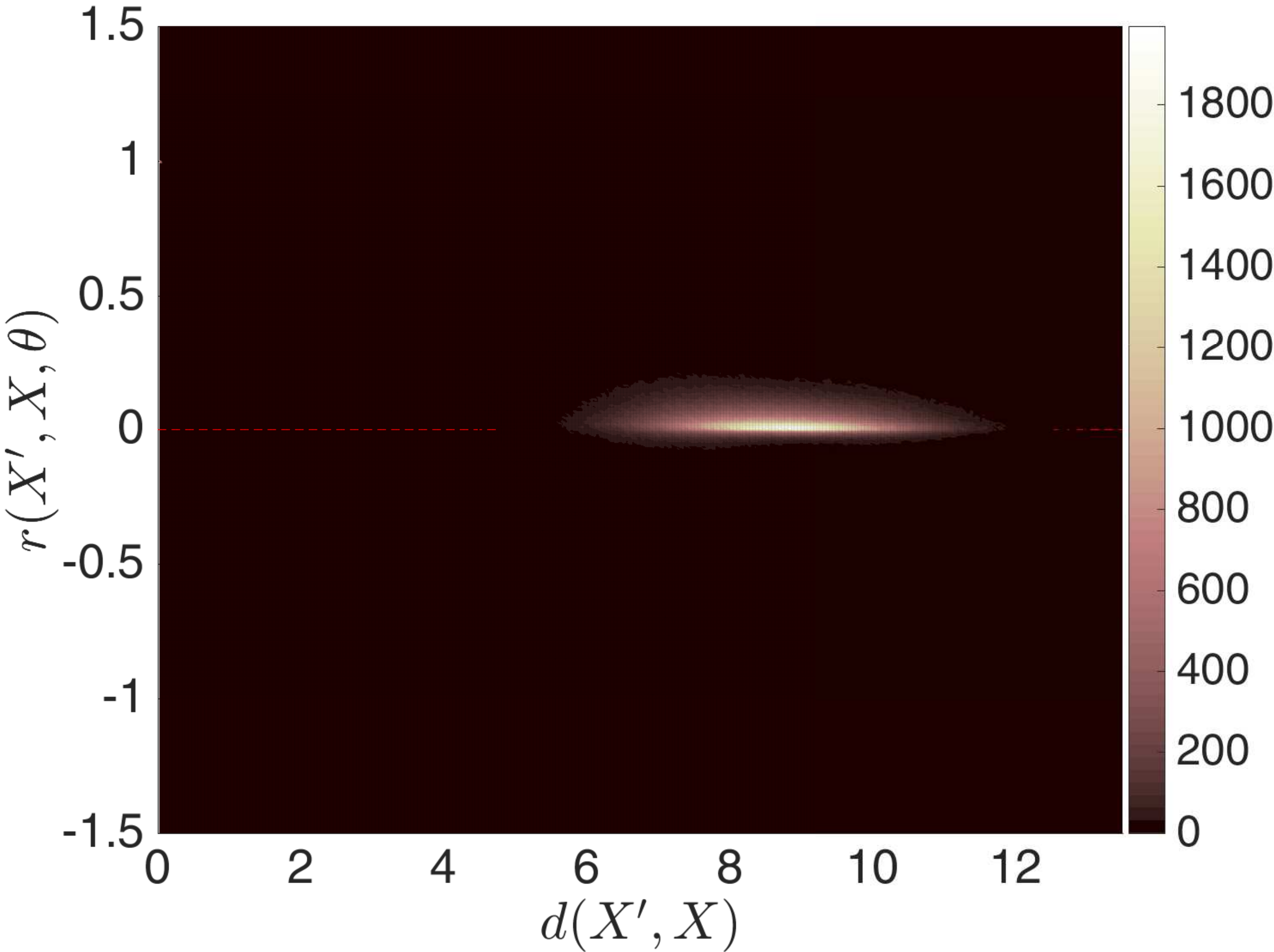}
		}
		
		&
		
		\subfloat[\label{fig:ColsRes6.12-h}]{
			\includegraphics[height=\height\textheight,width=\width\textwidth]{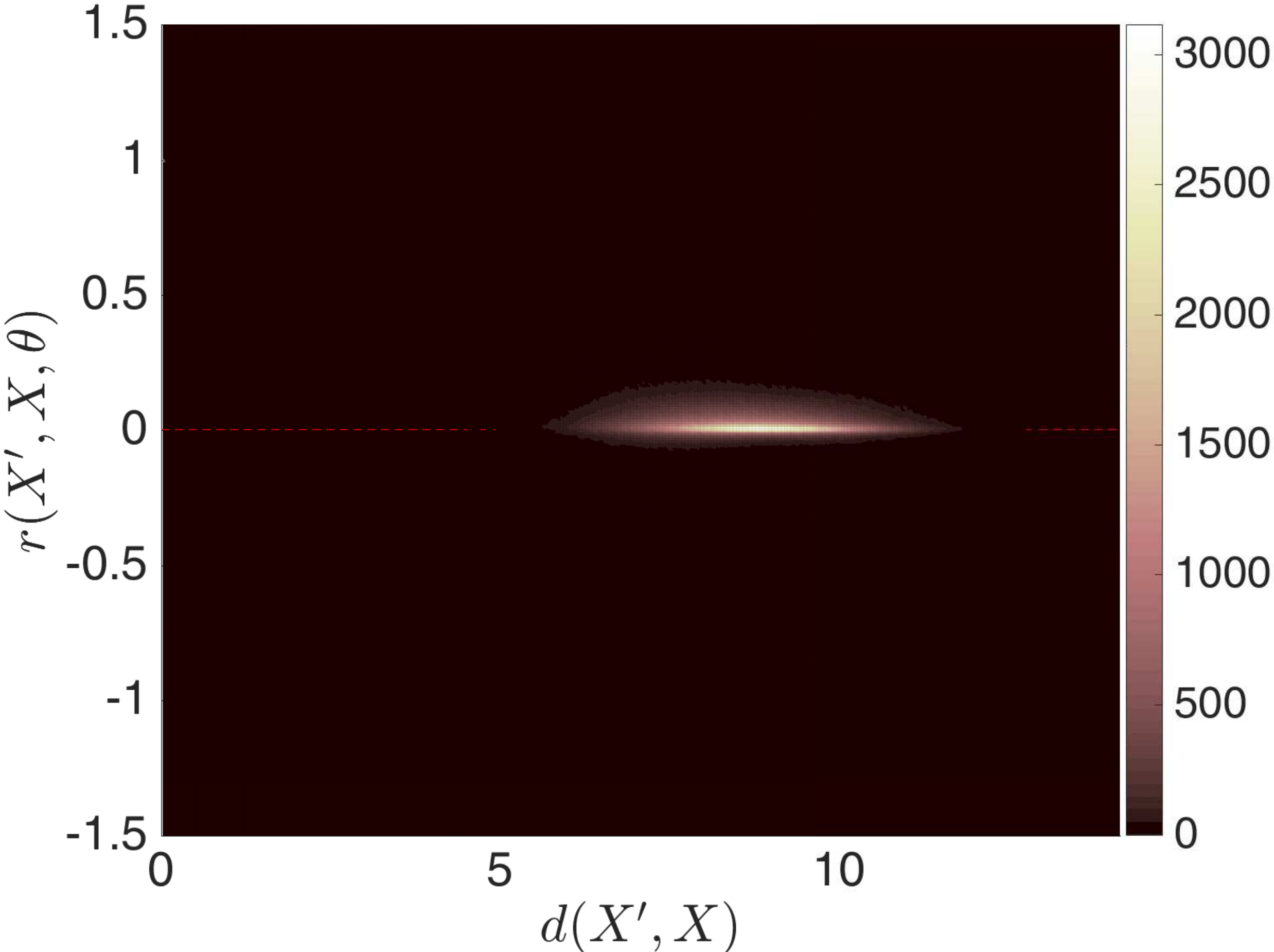}
		}
		
		\\
		
		\subfloat[\label{fig:ColsRes6.12-i}]{
			\includegraphics[height=\height\textheight,width=\width\textwidth]{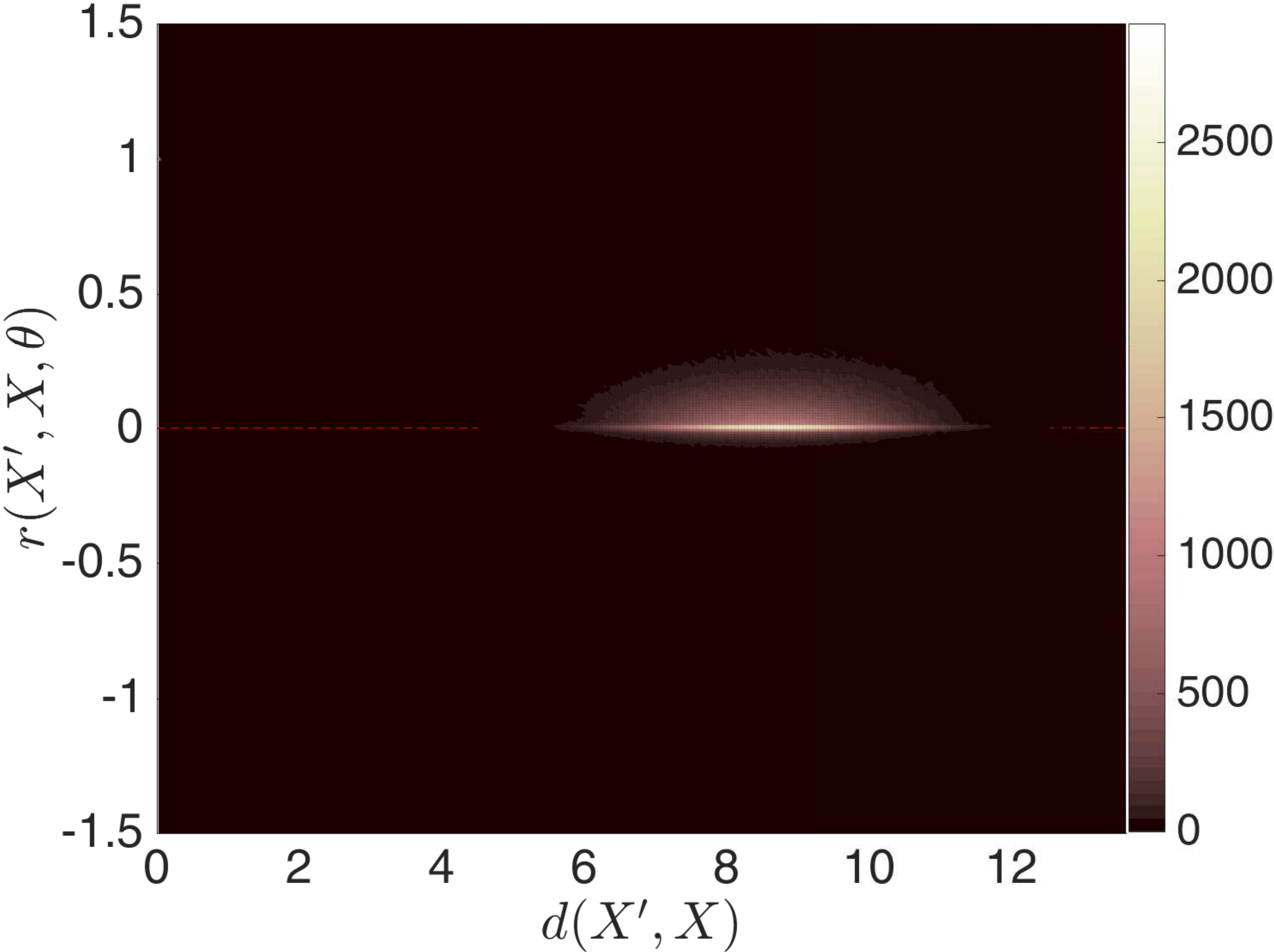}
		}
		
		&
		
		\subfloat[\label{fig:ColsRes6.12-j}]{
			\includegraphics[height=\height\textheight,width=\width\textwidth]{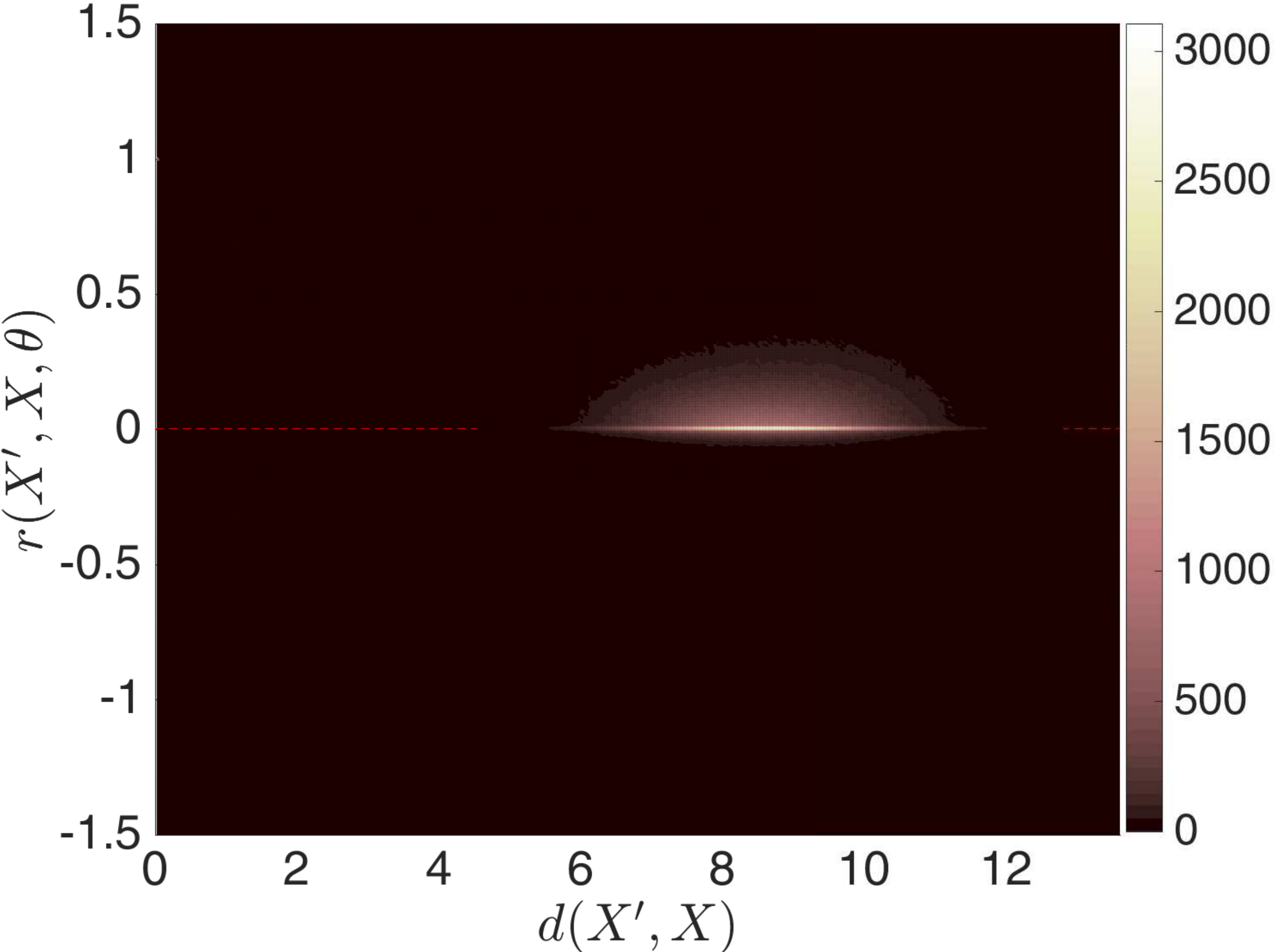}
		}
		
		&
		
		\subfloat[\label{fig:ColsRes6.12-k}]{
			\includegraphics[height=\height\textheight,width=\width\textwidth]{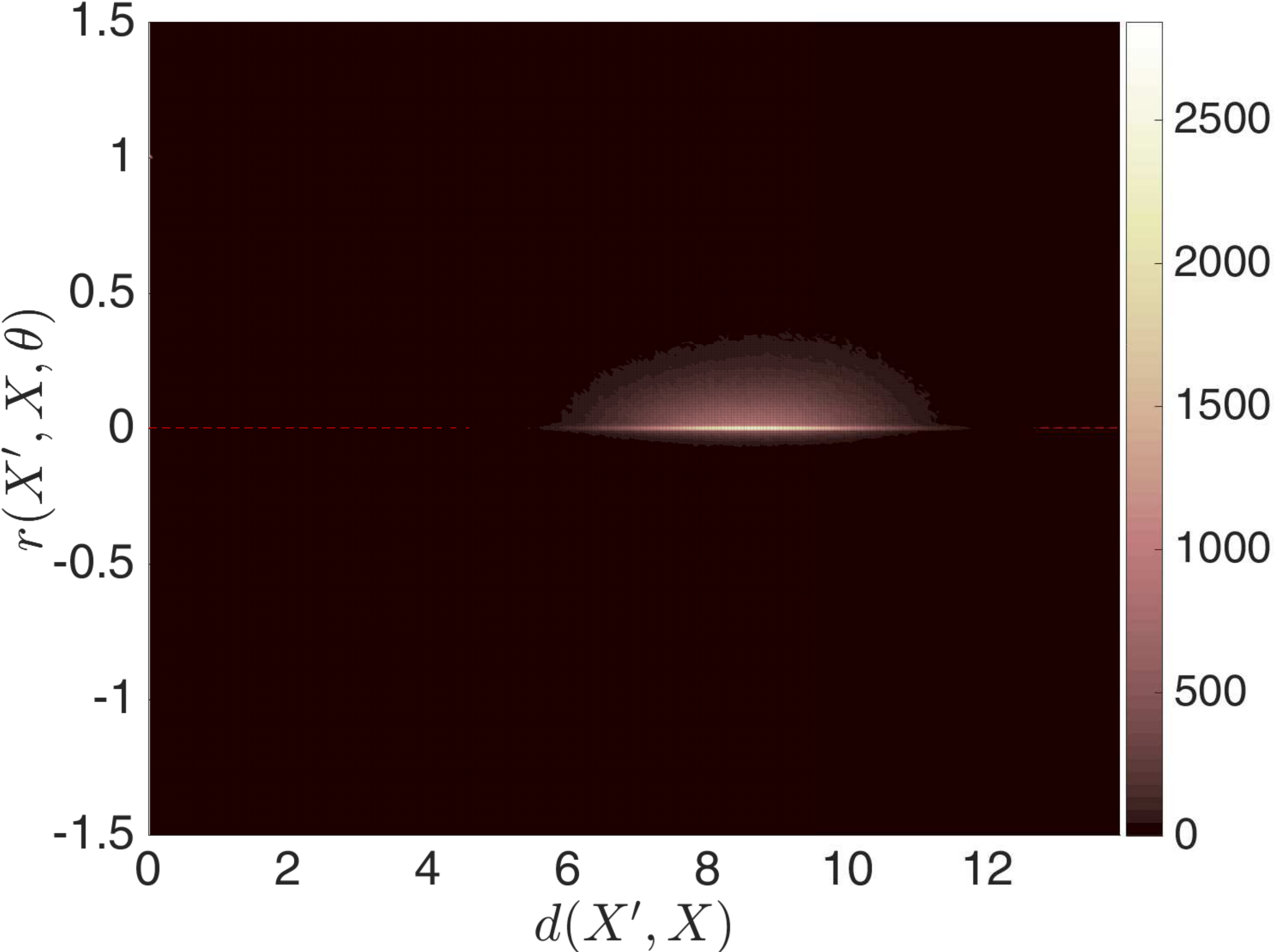}
		}
		
		&
		
		\subfloat[\label{fig:ColsRes6.12-l}]{
			\includegraphics[height=\height\textheight,width=\width\textwidth]{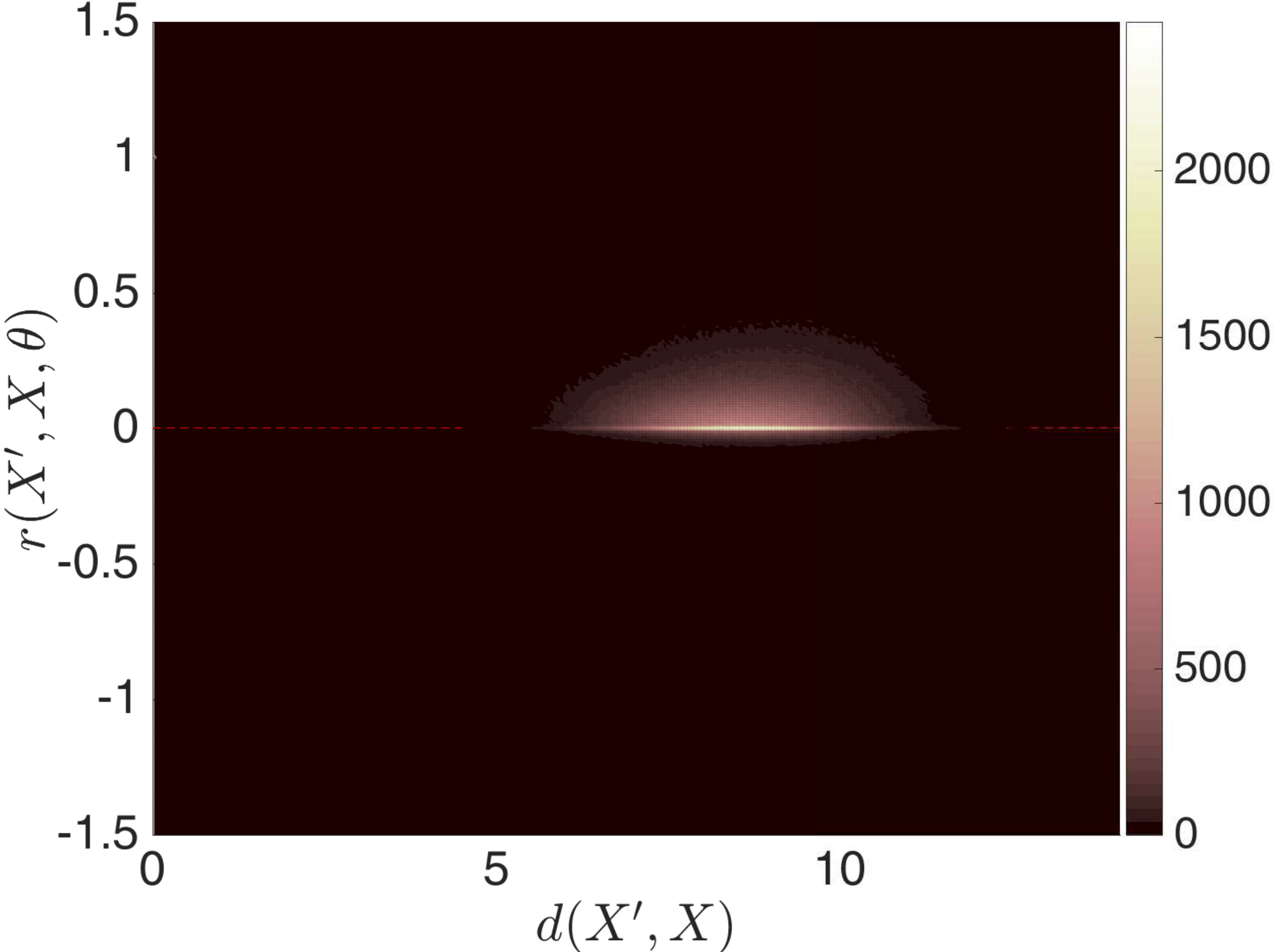}
		}
		
	\end{tabular}
	
	\protect
	\caption[Histograms of the relative \gs $r_{\theta}(X', X)$ and the Euclidean distance $d(X', X)$, for the experiment in Figure \ref{fig:ColsRes6.1}.]{Histograms of the relative \gs $r_{\theta}(X', X)$ and the Euclidean distance $d(X', X)$ at different optimization times $t$, for the experiment in Figure \ref{fig:ColsRes6.1}.
		At each time $t$ we calculate a \emph{relative} side-influence $r_{\theta}(X_i, X_j)$ and a Euclidean distance $d(X_i, X_j)$ for $10^6$ point pairs and depict a histogram of obtained $\{ r_{\theta}(X_i, X_j) \}$ and $\{ d(X_i, X_j) \}$.
		The optimization time is (a) $t = 0$, (b) $t = 100$, (c) $t = 3200$, (d) $t = 3400$, (e) $t = 4000$, (f) $t = 6000$, (g) $t = 10000$, (h) $t = 29000$, (i) $t = 100000$, (j) $t = 150000$, (k) $t = 200000$ and (l) $t = 300000$. As observed, after $t = 10000$ the relative \gs between far away regions is much smaller than 1, implying that there is a insignificant side-influence over height $f_{\theta}(X)$ at point $X$ from other points that are far away from $X$.
	}
	\label{fig:ColsRes6.12}
\end{figure}

\paragraph{Global Evaluation}

We apply PSO-LDE with $\alpha = \frac{1}{4}$ on a BD model for the inference of \emph{Columns} distribution defined in Eq.~(\ref{eq:ColumnsDef}), where at different optimization iterations we plot output pairs of $g_{\theta}(X, X')$ and $d(X, X')$. The plots are constructed similarly to Figure \ref{fig:NNCovariance-b}. Specifically, we sample 500 points $D^{\usuff} = \{ X^{\usuff}_{i} \}$ and 500 points $D^{\dsuff} = \{ X^{\dsuff}_{i} \}$ from $\probs{\usuff}$ and $\probs{\dsuff}$ respectively. For each sample from $D = D^{\usuff} \cup D^{\dsuff}$ we calculate the gradient $\nabla_{\theta} 
f_{\theta}(X)$. Further we compute Euclidean distance and the \gs between every two points within $D$, producing $\frac{1000 \cdot 1001}{2}$ pairs of distance and similarity values. These values are plotted in Figure \ref{fig:ColsRes6.1}.

Also, we compute a \emph{relative} side-influence $r_{\theta}(X, X')$, defined in Eq.~(\ref{eq:RelKernel}), for each pair of points in $D$. In Figure \ref{fig:ColsRes6.12} we construct a histogram of $10^6$ obtained pairs $\{ r_{\theta}(X_i, X_j), d(X_i, X_j)  \}$.

As seen from the above figures, during first iterations the \gs obtains a form where its values are monotonically decreasing with bigger Euclidean distance between the points. During next optimization iterations the self similarity $g_{\theta}(X, X)$ is growing by several orders of magnitudes. At the same time the side-similarity $g_{\theta}(X', X)$ for $X' \neq X$ is growing significantly slower and mostly stays centered around zero. In overall values of $g_{\theta}(X, X)$ are much higher than values of $g_{\theta}(X', X)$ for $X' \neq X$, implying that the model kernel has mostly a local influence/impact. 
Likewise, from Figure \ref{fig:ColsRes6.12} it is also clear that $r_{\theta}(X, X')$ for faraway points $X$ and $X'$ is near-zero during the most part of the optimization process (i.e. after 10000 iterations in this experiment). Thus, the corresponding bandwidth of $r_{\theta}$ can be bounded similarly to Eq.~(\ref{eq:RelKernelBounds}).

\begin{figure}[!tbp]
	\centering
	
	\newcommand{\width}[0] {0.9}
	\newcommand{\height}[0] {0.13}

	\begin{tabular}{c}
		\subfloat[\label{fig:ColsRes6.2-a}]{\includegraphics[width=0.8\textwidth]{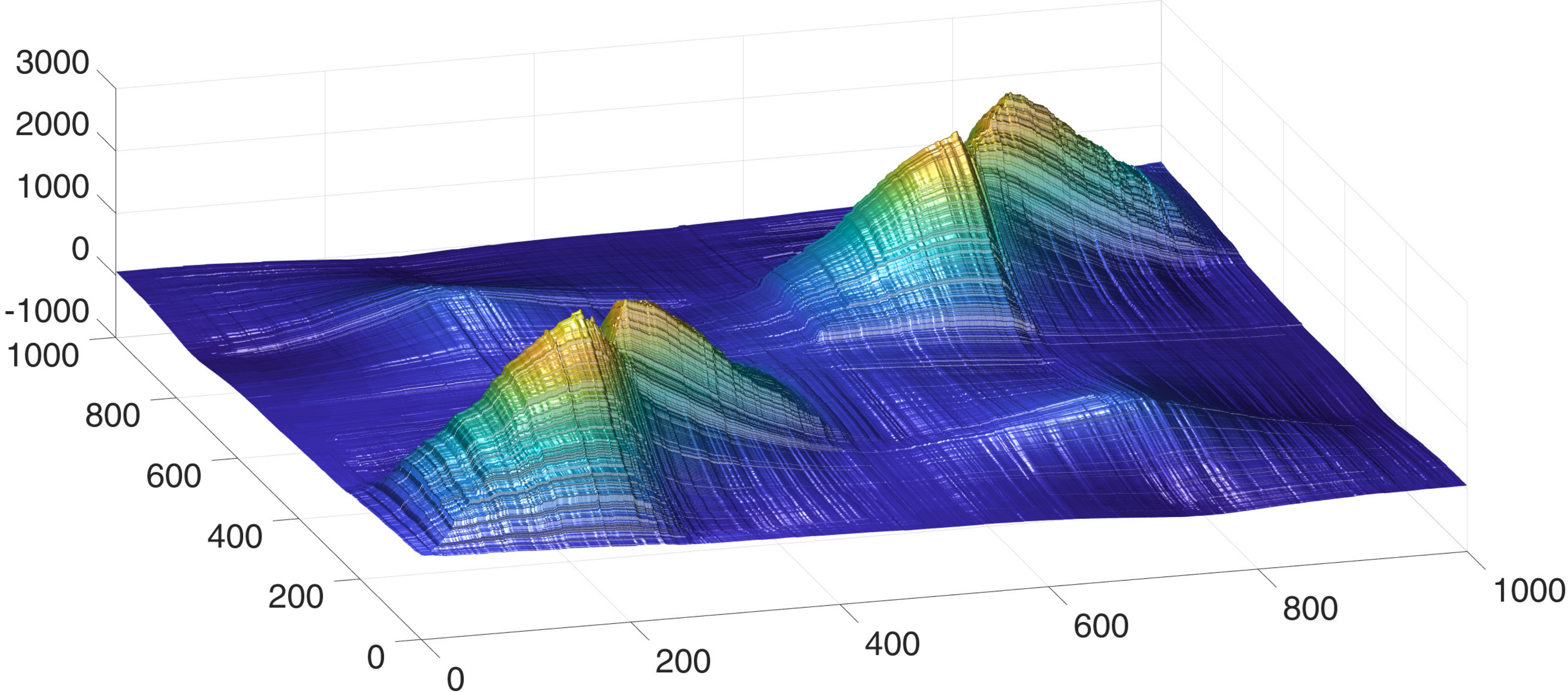}}
		\\
		\subfloat[\label{fig:ColsRes6.2-b}]{\includegraphics[height=\height\textheight,width=\width\textwidth]{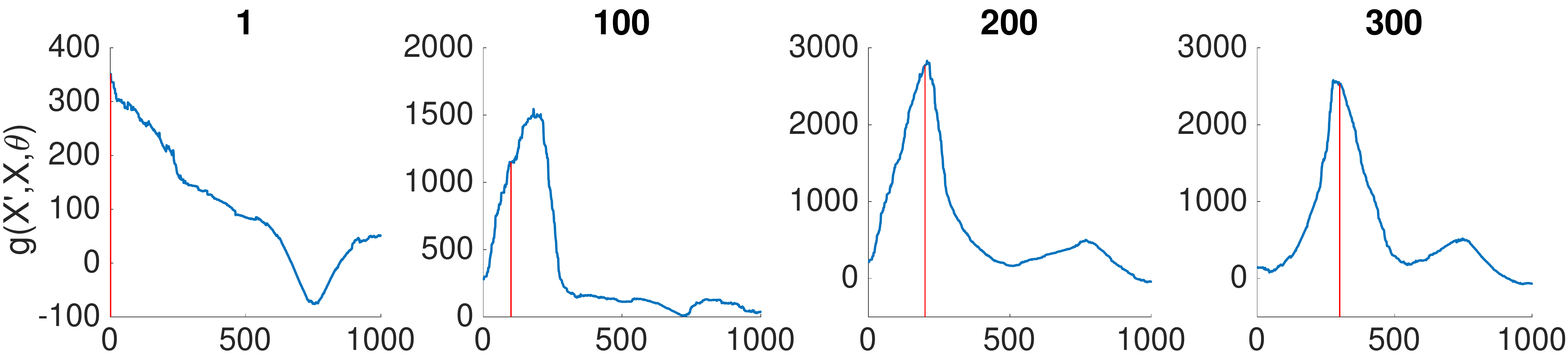}}
		\\
		\subfloat[\label{fig:ColsRes6.2-c}]{\includegraphics[height=\height\textheight,width=\width\textwidth]{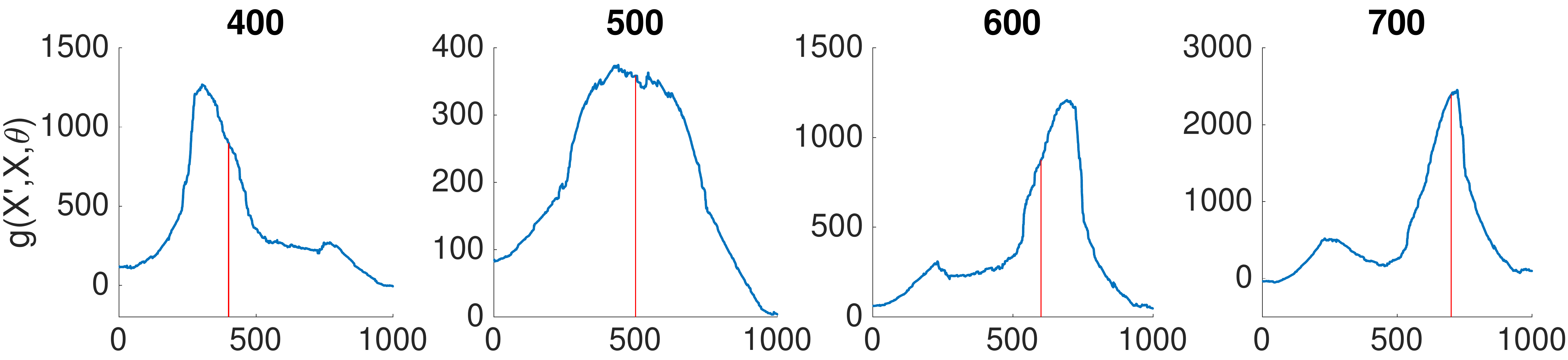}}
		\\
		\subfloat[\label{fig:ColsRes6.2-d}]{\includegraphics[height=0.12\textheight,width=0.7\textwidth]{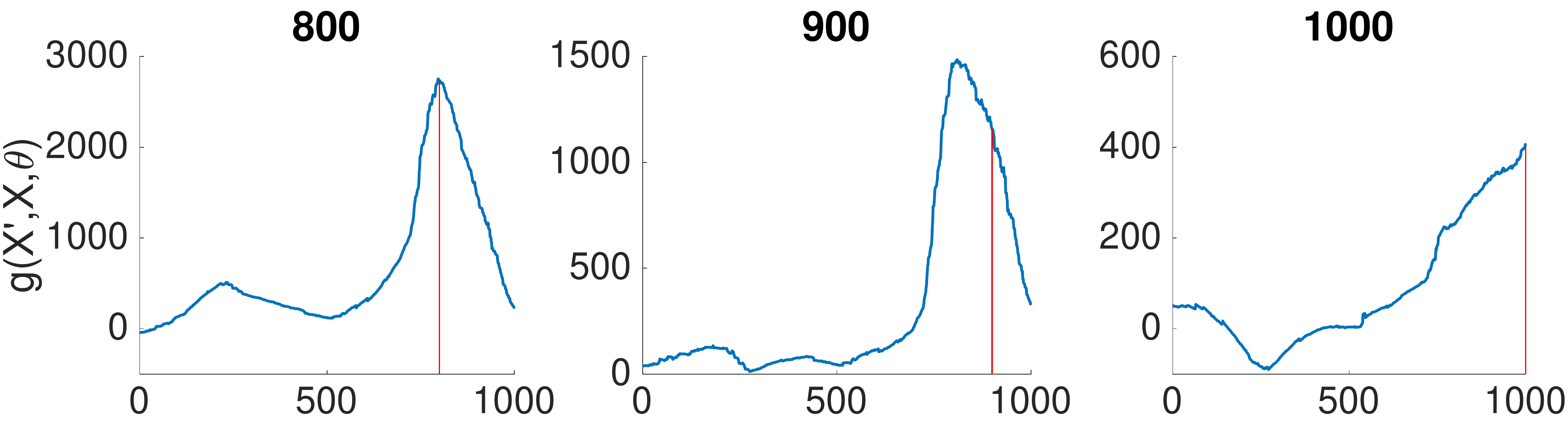}}
		
	\end{tabular}
	
	\protect
	\caption[Local relation between values of \emph{gradient similarity} $g_{\theta}(X', X)$ and values of Euclidean distance $d(X', X)$ within BD architecture.]{
		Local relation between values of \emph{gradient similarity} $g_{\theta}(X', X)$ and values of Euclidean distance $d(X', X)$ within BD architecture. The applied PSO method and model architecture are same as in Figure \ref{fig:ColsRes6.1}.
		After convergence, we calculate gradient $\nabla_{\theta} 
		f_{\theta}(X)$ along path within input space, $[-1, \ldots, -1] \longrightarrow[1, \ldots, 1]$, which is uniformly discretized via 1000 middle points. Afterwards, (a) the \emph{Gramian} matrix $G$ is constructed, with $G_{ij} = g_{\theta}(X_i, X_j)$ where $X_i$ and $X_j$ are $i$-th and $j$-th points along the path. Note that the index difference $\left|i - j\right|$ also represents Euclidean distance between points $X_i$ and $X_j$.
		Further, $i$-th row of $G$ contains similarities $g_{\theta}(X_i, \cdot)$ between point $i$ and rest of points. (a)-(c) 1-th, 100-th, \ldots, 900-th and 1000-th rows of $G$ are shown. Red line indicates $i$-th entry of $i$-th row where self similarity $g_{\theta}(X_i, X_i)$ is plotted. See more details in the main text.
	}
	\label{fig:ColsRes6.2}
\end{figure}

\paragraph{Local Evaluation}

Additionally, we performed a local evaluation of the above relation between \emph{gradient similarity} and Euclidean distance. Particularly, after  convergence we consider a path within a 20D input space, that starts at $S = [-1, \ldots, -1]$ and ends at $E = [1, \ldots, 1]$. We evenly discretized this path into 1000 middle points with which we form an ordered point set $D = \{S, \ldots, E\}$, with $\left| D \right| = 1000$. Afterwards, we calculate gradients $\nabla_{\theta} 
f_{\theta}(\cdot)$ at each point in $D$ and construct the \emph{Gramian} matrix $G$, with $G_{ij} = g_{\theta}(X_i, X_j)$. Note that the index of each point expresses also its location within the chosen point path, and the index difference $\left|i - j\right|$ also represents Euclidean distance between points $X_i$ and $X_j$. In Figure \ref{fig:ColsRes6.2-a} this matrix $G$ is depicted, and here it can be observed that $G$'s diagonal is very prominent. This again reasserts that the \emph{gradient similarity} kernel has some local support induced by the implicit bandwidth.

Moreover, each row $r_i$ inside $G$ represents $g_{\theta}(X_i, \cdot)$ - side similarity between the point $X_i$ and the rest of points in $D$ (i.e. the chosen path). Note that the $i$-th entry of $r_i$ represents self-similarity $g_{\theta}(X_i, X_i)$, while other entries represent the side-similarity from path points around $X_i$. Further, indexes of these other entries are related to the distance between the points and $X_i$, via the index difference $\left|i - j\right|$. Hence, first $i - 1$ entries of $r_i$ represent first $i - 1$ points within the chosen path $D$ before  the $i$-th point, whereas the last $1000 - i$ entries of $r_i$ represent points at the end of the path.
In Figures \ref{fig:ColsRes6.2-b}-\ref{fig:ColsRes6.2-d} 11 different rows of $G$ are depicted, where each row demonstrates \emph{gradient similarity} around some path point $X_i$ as a function of the second point index (and thus the distance between two points).
Here we can see that $g_{\theta}(X_i, X_j)$ typically has a peak at $X_j = X_i$, and further smoothly decreases as we walk away from point $X_i$ in any of the two path directions (towards $S$ or $E$). Thus, here we see that \emph{gradient similarity} has a bell-like behavior, returning high similarity for the same point and diminishing as the distance between the points grows. Yet, these "bells" are not centered, with some rows (e.g. $400$-th in Figure \ref{fig:ColsRes6.2-c}) having peaks outside of the $i$-th entry. Nevertheless, in context of PSO such local behavior allows us to conclude that when any training point $X$ is pushed by PSO loss, the force impact on the model surface is local, centered around the pushed $X$.

\begin{figure}[!tbp]
	\centering
	
	\newcommand{\width}[0] {0.9}
	\newcommand{\height}[0] {0.13}

	\begin{tabular}{c}
		\subfloat[\label{fig:ColsRes6.3-a}]{\includegraphics[width=0.8\textwidth]{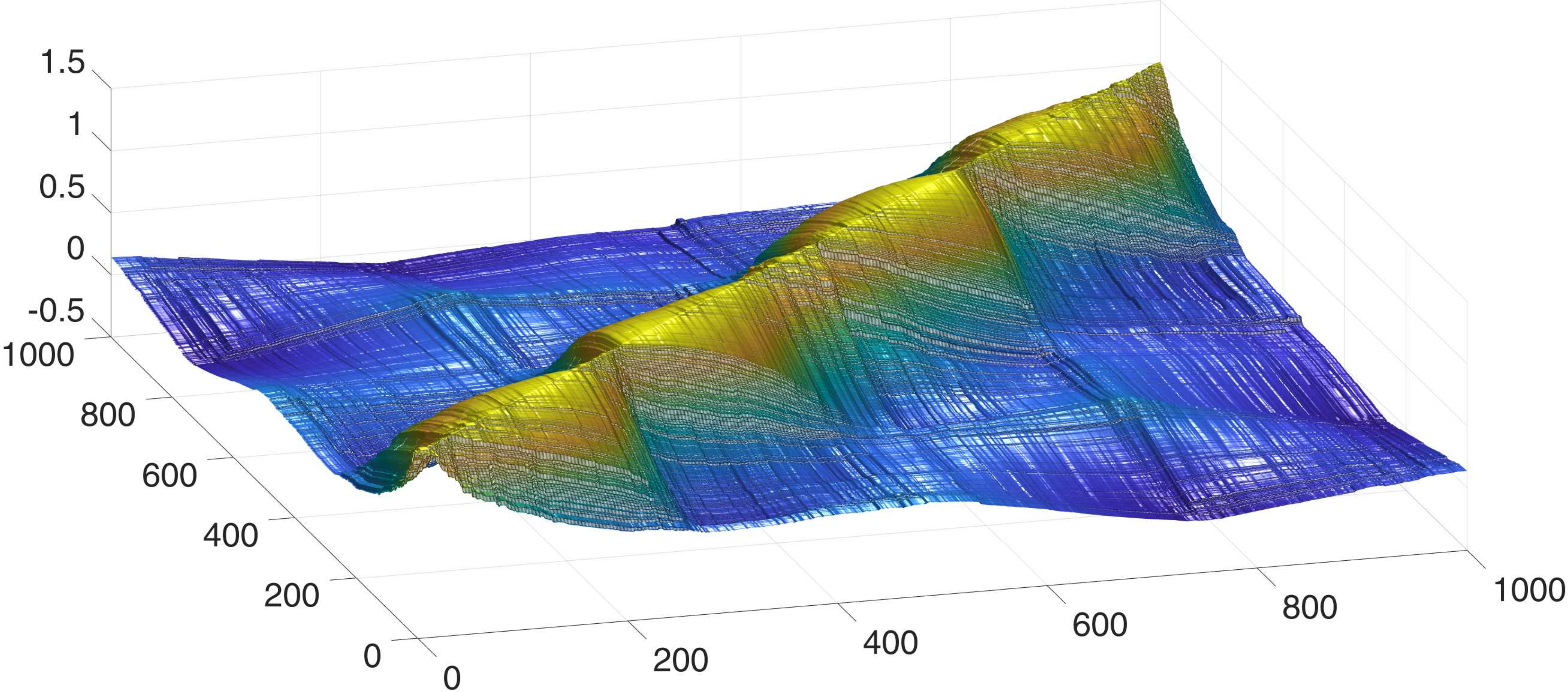}}
		\\
		\subfloat[\label{fig:ColsRes6.3-b}]{\includegraphics[height=\height\textheight,width=\width\textwidth]{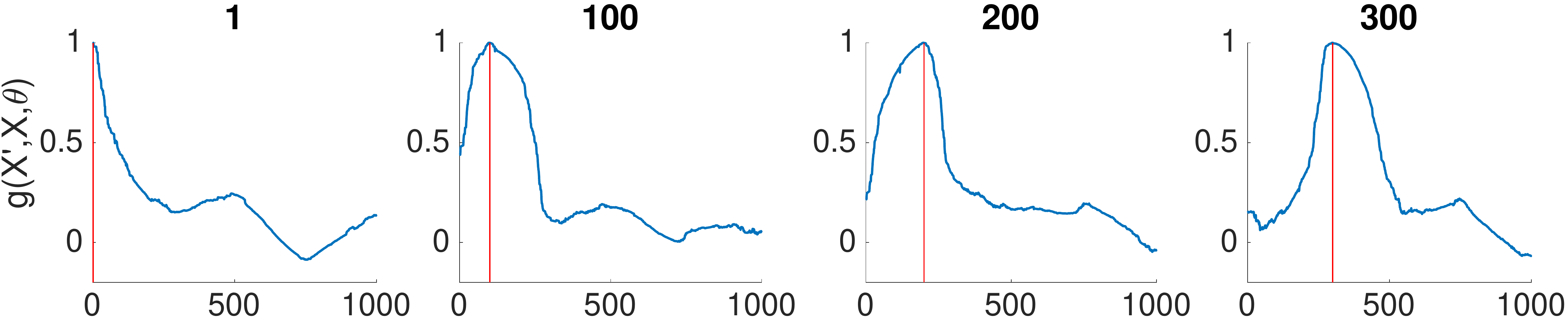}}
		\\
		\subfloat[\label{fig:ColsRes6.3-c}]{\includegraphics[height=\height\textheight,width=\width\textwidth]{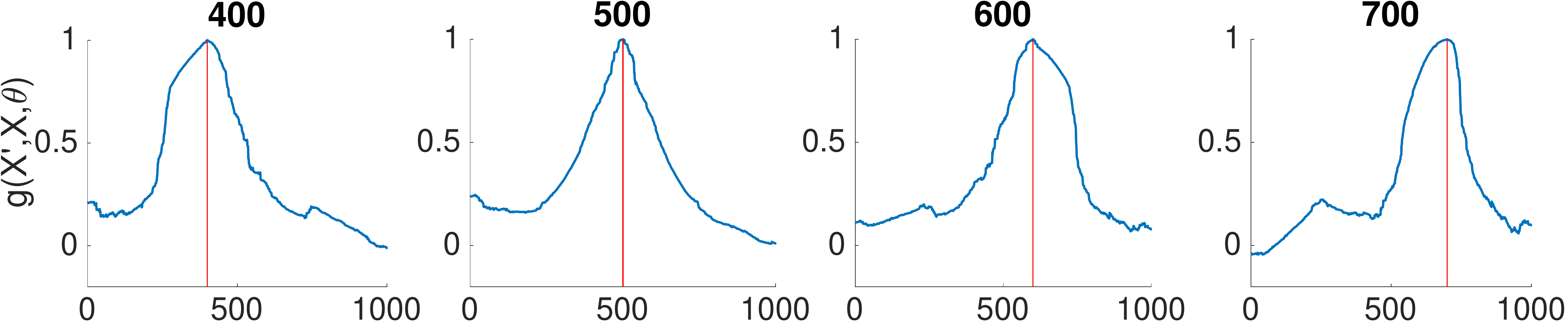}}
		\\
		\subfloat[\label{fig:ColsRes6.3-d}]{\includegraphics[height=0.14\textheight,width=0.72\textwidth]{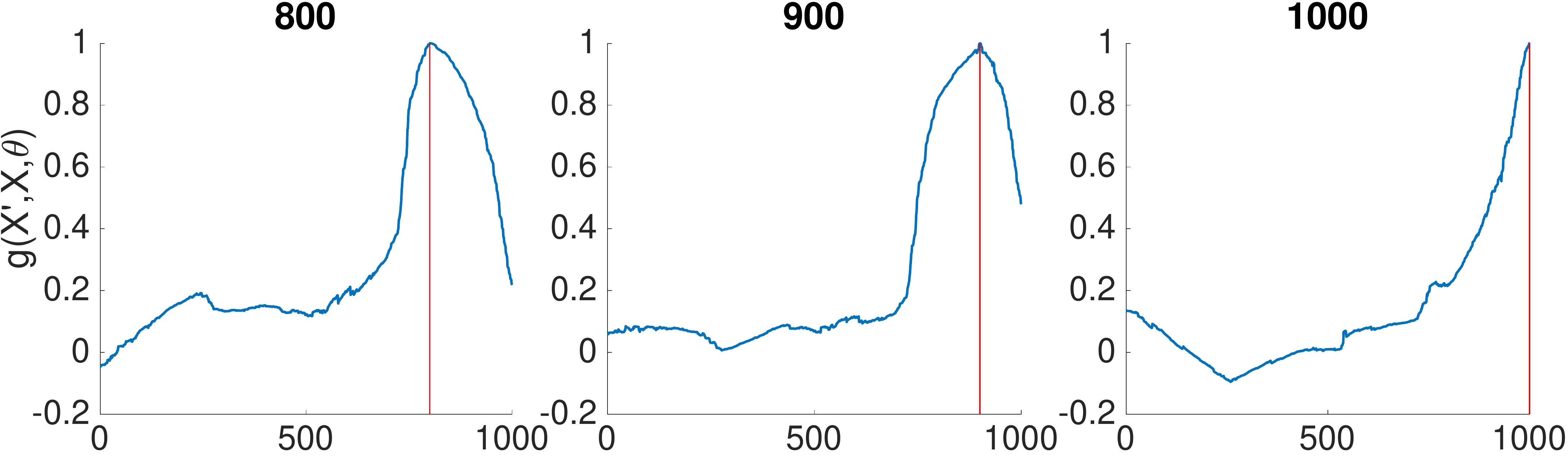}}
		
	\end{tabular}
	
	\protect
	\caption[Normalized gradient results within BD architecture.]{Normalized gradient results within BD architecture: same results from Figure \ref{fig:ColsRes6.2}, with \emph{Gramian} matrix being normalized to $\bar{G}: \bar{G}_{ij} = \bar{g}_{\theta}(X_i, X_j) =  \frac{\nabla_{\theta} f_{\theta}(X_i)^T \cdot \nabla_{\theta} f_{\theta}(X_j)}{\norm{\nabla_{\theta} f_{\theta}(X_i)} \cdot \norm{\nabla_{\theta} f_{\theta}(X_j)}}$. As observed, normalized \gs $\bar{g}_{\theta}(X_i, X_j)$ is more symmetrical along both directions of a chosen path $D$ compared with the regular \gs $g_{\theta}(X_i, X_j)$ in Figure \ref{fig:ColsRes6.2}.
	}
	\label{fig:ColsRes6.3}
\end{figure}

Further, in Figure \ref{fig:ColsRes6.3} we also plot the normalized \gs $\bar{g}_{\theta}(X_i, X_j) \triangleq  \frac{\nabla_{\theta} f_{\theta}(X_i)^T \cdot \nabla_{\theta} f_{\theta}(X_j)}{\norm{\nabla_{\theta} f_{\theta}(X_i)} \cdot \norm{\nabla_{\theta} f_{\theta}(X_j)}} = \cos \left[ \angle \left[ \nabla_{\theta} f_{\theta}(X_i), \nabla_{\theta} f(X_j) \right] \right]$ for the same setting as in Figure \ref{fig:ColsRes6.2}. Here, we can see that $\bar{g}_{\theta}(X_i, X_j)$, which is a cosine of the angle between $\theta$ gradients at points $X_i$ and $X_j$, has a more symmetrical and centered behavior compared with $g_{\theta}(X_i, X_j)$. That is, $\bar{g}_{\theta}(X_i, \cdot)$ has a peak when the second argument is equal to $X_i$, and it gradually decreases as the distance between the second argument and $X_i$ increases. Moreover, the asymmetry that we observed in case of $g_{\theta}(X_i, X_j)$ ($400$-th and $600$-th rows in Figure \ref{fig:ColsRes6.2-c}) is actually caused by the difference in a gradient norm at different points $X_i$ and $X_j$. Specifically, in Figure \ref{fig:ColsRes6.2-c} we can see that peak of $400$-th row is pushed towards the beginning of the path $D$, where $X_{400}$ has neighbors with higher norm $\norm{\nabla_{\theta} f_{\theta}(X_j)}$. In $\bar{g}_{\theta}(X_i, X_j)$ each gradient is normalized to have a unit norm, which eliminates the above asymmetry as observed in Figure \ref{fig:ColsRes6.3-c}.
Hence, nearby points with a large gradient norm may affect the \gs at a specific point and make it less symmetric/centered.

Note that in case the applied optimizer is GD, during the actual optimization the NN surface is pushed according to the $g_{\theta}(X_i, X_j)$ and not $\bar{g}_{\theta}(X_i, X_j)$. Therefore, various local asymmetries inside $g_{\theta}(X_i, X_j)$ may affect the optimization optimality.	However, Adam optimizer \citep{Kingma14arxiv}, used in most of our experiments, with its adaptive moment estimation and normalization per each weight implicitly transforms the actual "pushing" kernel of the optimization. We speculate that in this case the actual information kernel is more similar to the normalized gradient similarity. We shall leave a detailed investigation of the model kernel under Adam optimization update rule for future work.

Overall, our experiments show that NN weights undergo some \emph{uncorrelation} process during the first few thousands of iterations, after which the \emph{gradient similarity} obtains properties \textbf{approximately} similar to a local-support kernel function. Specifically, an opposite relation is formed between the \emph{gradient similarity} and Euclidean distance, where a higher distance is associated with a smaller similarity. This \emph{uncorrelation} can also be viewed as gradients (w.r.t. $\theta$) at different training points are becoming more and more linearly independent along the optimization, which as a result increases angles between gradient vectors. Hence, these gradients point to different directions inside the parameter space $\RR^{\left| \theta \right|}$, decreasing the side-influence between the training points. Furthermore, this \emph{uncorrelation} process was observed almost in each experiment, yet the radius of local support for the kernel $g_{\theta}(X', X)$ (that is, how fast \gs is decreasing w.r.t. $d(X, X')$) is changing depending on the applied NN architecture (e.g. FC vs BD) and on the specific inferred density $\probs{\usuff}$.

\section{$LSQR$ Divergence}
\label{sec:LSQRDivSec}

Here we will prove that $LSQR$ evaluation metric, considered in this paper, is an actual statistical divergence. First, the log-pdf squared error (LSQR) divergence is defined as:
\begin{equation}
LSQR(\PP, \QQ) = \int \probi{}{X} \cdot \left[ \log \probi{}{X} - \log \QQ(X) \right]^2 dX
,
\label{eq:LSQRDiv}
\end{equation}
where $\PP$ is the pdf over a compact support $\Omega \subset \RR^n$; $\QQ$ is normalized or unnormalized model whose support is also $\Omega$.

According to the definition of statistical divergences, LSQR must satisfy $\forall \PP, \QQ: LSQR(\PP, \QQ) \geq 0$ and $LSQR(\PP, \QQ) = 0 \Leftrightarrow \PP = \QQ$. Obviously, these two conditions are satisfied by Eq.~(\ref{eq:LSQRDiv}). Hence, $LSQR(\PP, \QQ)$ is the statistical divergence.

Importantly, we emphasize that $LSQR(\PP, \QQ)$ measures a discrepancy between pdf $\PP$ and model $\QQ$ where the latter is allowed to be unnormalized. Therefore, it can be used to evaluate PSO-based methods since they are only approximately normalized. However, such evaluation is only possible when $\PP$ is known analytically.

Further, in this paper $LSQR$ is measured between the target density, defined as $\PP^{\usuff}$ in the paper, and the pdf estimator $\bar{\PP}_{\theta}$ produced by some method in the following way:
\begin{equation}
LSQR(\PP^{\usuff}, \bar{\PP}_{\theta}) = \frac{1}{N} \sum_{i = 1}^{N} \left[ \log \probi{\usuff}{X^{\usuff}_{i}} - \log \bar{\PP}_{\theta}(X^{\usuff}_{i}) \right]^2
,
\label{eq:LSQRDivEst}
\end{equation}
where
$\{X^{\usuff}_{i}\}_{i = 1}^{N}$ are testing points, sampled from $\probs{\usuff}$, that were not involved in the estimation process of $\bar{\PP}_{\theta}$.

\section{Matrix $A$ from definition of \emph{Transformed Columns} Distribution}
\label{sec:App8}

Matrix $A$ was randomly generated under the constraint of having a determinant 1, to keep the volume of sampled points the same. Its generated entries are:
\begin{equation}
A
=
\scalebox{.25}{
	\begin{blockarray}{cccccccccccccccccccc}
	\begin{block}{(cccccccccccccccccccc)}
	0.190704135 & -0.103706818 & 0.287080085 & -0.224115607 & -0.0296220322 & -0.200017067 & 0.107420472 & 0.145939799 & -0.21151047 & 0.290260037 & -0.109926743 & -0.138214861 & 0.0739138407 & -0.173910764 & -0.158279581 & -0.138856972 & -0.512741096 & -0.0894244111 & -0.465075432 & 0.138203328 &  \\ 0.254157669 & 0.00972120677 & -0.425381258 & -0.165311223 & -0.0732519109 & 0.316785766 & -0.0651314216 & -0.153534853 & -0.294111694 & -0.29775682 & -0.285308807 & 0.12228138 & -0.11072477 & -0.0955035066 & 0.00942833152 & -0.252106498 & -0.40791915 & 0.14810246 & 0.219848116 & 0.0170452831 &  \\ 0.213837698 & 0.435577026 & -0.0250319656 & 0.297552176 & 0.14030724 & 0.17703815 & 0.179253182 & -0.0710653564 & 0.0507340489 & 0.235684257 & 0.33391508 & -0.40609493 & -0.197930666 & -0.322136609 & -0.146204612 & -0.164195869 & -0.0672063429 & 0.138970011 & 0.121959537 & -0.150499483 &  \\ 0.229971825 & 0.235126465 & 0.158420112 & -0.0223857003 & 0.28726862 & 0.133325519 & -0.352231148 & 0.403451114 & -0.0161522847 & -0.0193998935 & 0.0455239109 & 0.162348247 & -0.108553457 & -0.126312424 & 0.354695129 & -0.18792775 & 0.101384726 & -0.360466001 & 0.0797850581 & 0.338497968 &  \\ 0.168709235 & 0.0770914534 & -0.178165495 & -0.0661545928 & 0.323673824 & -0.216202087 & 0.475022027 & 0.129138946 & -0.0173273685 & -0.305472906 & -0.187205709 & -0.0359566427 & -0.216262061 & -0.0428909211 & -0.354871726 & 0.299579441 & 0.157803981 & -0.190573488 & 0.0280581785 & 0.281767192 &  \\ 0.238265575 & 0.14326338 & 0.323225051 & 0.101182056 & 0.222068216 & -0.42470829 & -0.132128709 & -0.203895612 & -0.38640015 & -0.193494524 & 0.15340899 & 0.0919709633 & -0.150512414 & 0.257608882 & 0.182536036 & 0.223159059 & -0.177186352 & 0.198776911 & 0.130192805 & -0.201342323 &  \\ 0.279579926 & -0.162797928 & -0.0586375005 & -0.211398563 & -0.178520507 & -0.0154862203 & -0.371463145 & 0.187233788 & -0.19506691 & 0.119455231 & -0.202696444 & -0.581504491 & -0.160227753 & 0.135365515 & -0.104123883 & 0.128700751 & 0.341221005 & 0.04776047 & 0.0956523695 & -0.105994479 &  \\ 0.220148935 & -0.2672238 & 0.259200965 & -0.348342982 & 0.155930129 & 0.0194560055 & 0.136887538 & -0.264686829 & 0.243602026 & 0.117285157 & -0.002458813 & 0.241121126 & -0.423370778 & 0.01867544 & -0.0604651676 & -0.397344315 & 0.273889032 & 0.122122216 & 0.00146222648 & -0.125711392 &  \\ 0.274834233 & 0.105359473 & 0.135585987 & -0.19681974 & -0.0573374634 & 0.272574082 & 0.0741237415 & 0.0122961299 & 0.331281502 & -0.33605766 & 0.271171768 & -0.205618232 & 0.213806318 & 0.535961439 & -0.103513592 & -0.0402908492 & -0.199964863 & -0.203164108 & -0.0657867077 & -0.0601941311 &  \\ 0.191457811 & -0.234983092 & 0.180777146 & 0.293340161 & -0.0365900702 & 0.214595475 & 0.236285794 & 0.256954426 & 0.152660465 & 0.0502990912 & -0.34518931 & 0.0711801608 & -0.128073093 & -0.0785011303 & 0.222291071 & 0.252921604 & -0.174804068 & -0.217786875 & 0.114727637 & -0.492946359 &  \\ 0.223304218 & -0.0257257131 & -0.413566391 & 0.172073687 & 0.326658323 & -0.251109479 & -0.188027609 & 0.334468985 & 0.161164411 & 0.150742708 & -0.0165671389 & 0.232902107 & 0.21926384 & 0.194760411 & -0.235966926 & -0.210659524 & -0.00475631985 & 0.172782694 & -0.148866241 & -0.300869538 &  \\ 0.243388344 & 0.243683991 & -0.121589184 & -0.187437781 & 0.0578409285 & -0.27767391 & 0.118032949 & -0.346625601 & 0.292226942 & 0.243065097 & -0.348108205 & -0.141543891 & 0.37738745 & -0.0330943339 & 0.390702777 & 0.02656679 & 0.0292203222 & -0.0520266353 & 0.165078206 & 0.0596683021 &  \\ 0.230141811 & 0.0247039796 & -0.160146528 & 0.276454478 & -0.576178718 & -0.329822403 & 0.25548108 & 0.14013065 & -0.0103114951 & -0.162178682 & 0.0992219866 & 0.012231234 & -0.189306474 & 0.0930990418 & 0.282716476 & -0.306277128 & 0.142988577 & -0.00844652753 & -0.179308874 & 0.106622196 &  \\ 0.232700446 & -0.203358735 & -0.0404096623 & 0.0651155788 & -0.0876307509 & -0.191362905 & -0.350703084 & -0.292134625 & 0.189041139 & -0.344463408 & 0.182873652 & 0.00843615229 & 0.103973847 & -0.52808934 & -0.153969081 & 0.0920142117 & 0.0274541623 & -0.30781733 & -0.126023284 & -0.157587751 &  \\ 0.199527977 & 0.204898606 & -0.0450516851 & -0.270830109 & -0.122669508 & 0.203103159 & 0.237732868 & 0.012741777 & -0.462819922 & 0.114896626 & 0.187819585 & 0.289174359 & 0.310962024 & -0.0974490092 & -0.00676534358 & 0.104883625 & 0.361388813 & -0.151993161 & -0.133748076 & -0.307757495 &  \\ 0.175957009 & 0.114449497 & 0.0205901599 & -0.309807805 & -0.235656127 & 0.00557129199 & -0.0424724202 & 0.307742364 & 0.323261613 & -0.00822297356 & 0.171003077 & 0.20675015 & -0.0593725006 & -0.232062167 & 0.0481299972 & 0.415888833 & -0.102561506 & 0.524594092 & 0.0346260903 & 0.106458345 &  \\ 0.223633992 & -0.408554226 & -0.309022911 & 0.131333656 & 0.253238692 & 0.245874156 & 0.0563174345 & -0.203397977 & -0.0592004594 & 0.179237363 & 0.275909961 & -0.0735710289 & -0.0681467794 & 0.0974804318 & 0.340180897 & 0.282145806 & -0.0050587548 & 0.0683927742 & -0.329991983 & 0.246896636 &  \\ 0.215583387 & 0.212554612 & 0.289443614 & 0.366636519 & -0.034606719 & 0.274097732 & -0.127438501 & -0.179674304 & 0.0238811214 & -0.126780371 & -0.419167616 & 0.0954798128 & 0.126317847 & 0.0330201935 & -0.166833728 & 0.039242297 & 0.231007699 & 0.29173484 & -0.381428263 & 0.182896273 &  \\ 0.210225414 & -0.42839288 & 0.239002756 & 0.174593297 & 0.0372987005 & -0.000156767089 & 0.183766314 & 0.132302659 & -0.124351715 & -0.0656208547 & 0.126841957 & -0.0486788409 & 0.471440044 & -0.112599645 & -0.0828678102 & -0.173846575 & 0.0871950404 & 0.210646003 & 0.459392924 & 0.242375288 &  \\ 0.220671036 & 0.0270987729 & -0.0325243953 & 0.176295746 & -0.299408603 & -0.00284534072 & -0.133474574 & -0.186385408 & 0.0205827025 & 0.45209226 & 0.0583508015 & 0.321753008 & -0.120986377 & 0.217766867 & -0.359227919 & 0.180578462 & -0.115873733 & -0.281505687 & 0.296303031 & 0.245563455 &  \\ 
	\end{block}
	\end{blockarray}
}
.
\label{eq:AMatDef}
\end{equation}

\end{document}